\newcolumntype{M}[1]{>{\centering\arraybackslash}m{#1}}
\newcolumntype{C}[1]{>{\centering}m{#1}}
\newcommand{\istar}{{i^\star}}
\newcommand{\USS}{\cP_{\text{USS}}}
\newcommand{\bc}{\boldsymbol{c}}
\newcommand{\bd}{\boldsymbol{Q}}
\newcommand{\SD}{\mathrm{SD}}
\newcommand{\WD}{\mathrm{WD}}
\newcommand{\PSD}{\cP_{\SD}}
\newcommand{\PWD}{\cP_{\WD}}
\newcommand{\Bt}{\cB_t}
\newcommand{\Yti}{Y_t^i}
\newcommand{\Ytj}{Y_t^j}
\newcommand{\Yt}{Y_t}
\newcommand{\Yi}{Y^i}
\newcommand{\Yj}{Y^j}
\newcommand{\Yis}{Y^{i^\star}}
\newcommand{\Ht}{\mathcal{H}_t}
\newcommand{\pij}{p_{ij}}
\newcommand{\pijt}{p_{ij}^{(t)}}
\newcommand{\pjis}{p_{ji^\star}}
\newcommand{\pijs}{p_{i^\star j}}
\newcommand{\pijts}{p_{i^\star j}}
\newcommand{\opij}{\tilde{p}_{ij}^{(t)}}
\newcommand{\opji}{\tilde{p}_{ji}^{(t)}}
\newcommand{\opijtst}{\tilde{p}_{ij}^{(t)}}
\newcommand{\opijsts}{\tilde{p}_{i^\star j}^{(t)}}
\newcommand{\hpij}{\hat{p}_{ij}(t)}
\newcommand{\hpji}{\hat{p}_{ji}(t)}
\newcommand{\hpijs}{\hat{p}_{i^\star j}}
\newcommand{\hpijst}{\hat{p}_{i^\star j}^{(t)}}
\newcommand{\hpjis}{\hat{p}_{ji^\star}}
\newcommand{\hpijx}{\hat{p}_{i^\star j,s}}
\newcommand{\Dij}{\mathcal{D}_{ij}(t)}
\newcommand{\Nij}{\mathcal{N}_{ij}(t)}
\newcommand{\Dijm}{\mathcal{D}_{ij}(t-1)}
\newcommand{\Nijm}{\mathcal{N}_{ij}(t-1)}
\newcommand{\Nijms}{\mathcal{N}_{i^\star j}(t-1)}
\newcommand\numberthis{\addtocounter{equation}{1}\tag{\theequation}}
\newcommand{\btheta}{\boldsymbol{\theta}}
\newcommand{\PCWD}{\cP_{\text{CWD}}}
\newcommand{\SA}{\mathrm{SA}}
\newcommand{\CSD}{\mathrm{CSD}}
\newcommand{\CWD}{\mathrm{CWD}}
\newcommand{\TSA}{\Theta_{\SA}}
\newcommand{\TCSD}{\Theta_{\CSD}}
\newcommand{\bq}{\boldsymbol{Q}}
\newcommand{\ist}{{i^\star_t}}
\newcommand{\Xt}{X_t}
\newcommand{\xt}{x_t}
\newcommand{\Yts}{Y_t^\ist}
\newcommand{\F}{\cF}
\newcommand{\pist}{p_{i^\star_t I_t}^{(t)}}
\newcommand{\pijst}{p_{i^\star_t j}^{(t)}}
\newcommand{\pjist}{p_{ji^\star_t}^{(t)}}
\newcommand{\tpijt}{\tilde{p}_{ij}^{(t)}}
\newcommand{\tpjit}{\tilde{p}_{ji}^{(t)}}
\newcommand{\tplit}{\tilde{p}_{li}^{(t)}}
\newcommand{\tpiht}{\tilde{p}_{ih}^{(t)}}
\newcommand{\pihts}{p_{i^\star_t h}^{(t)}}
\newcommand{\Ts}{\theta_{ij}^\star}
\newcommand{\Te}{\hat\theta_{ij}^t}
\newcommand{\Tsl}{\theta_{li}^\star}
\newcommand{\Tel}{\hat\theta_{li}^t}
\newcommand{\Tsh}{\theta_{ih}^\star}
\newcommand{\Teh}{\hat\theta_{ih}^t}
\newcommand{\Vt}{V_{ij}^t}
\newcommand{\oVt}{\overline{V}_{ij}^t}
\newcommand{\oVijt}{\overline{V}_{ij}^t}
\newcommand{\Vinv}{(V_{ij}^t)^{-1}}
\newcommand{\Vtinv}{(V_{ij}^t)^{-1}}
\newcommand{\oVtinv}{(\overline{V}_{ij}^t)^{-1}}
\newcommand{\Vtl}{V_{li}^t}
\newcommand{\Vinvl}{(V_{li}^t)^{-1}}
\newcommand{\Vth}{V_{ih}^t}
\newcommand{\Vths}{V_{i^\star_t h}^t}
\newcommand{\Vinvh}{(V_{ih}^t)^{-1}}
\newcommand{\A}{\cA_{Q}}
\newcommand{\PCSB}{\mathcal{P}}
\newcommand{\PMP}{\cP_{\text{MP}}}
\newcommand{\PCoSB}{\mathcal{P}_{\text{CoSB}}}
\newcommand{\bmu}{\boldsymbol{\mu}}
\newcommand{\ba}{\boldsymbol{a}}
\newcommand{\C}{\mathcal{C}_T}
\newcommand{\Regret}{\mathfrak{R}}
\newcommand{\EE}[1]{\bE\left[#1\right]}
\newcommand{\Prob}[1]{\bP\left\{#1\right\}}
\newcommand{\R}{\bR}
\newcommand{\N}{\bN}
\newcommand{\one}[1]{\mathds{1}_{\left\{#1\right\}}}
\renewcommand{\phi}{\varphi}
\renewcommand{\epsilon}{\varepsilon}
\newcommand{\norm}[1]{\left\|#1\right\|}
\newcommand{\al}[1]{ \begin{align} #1  \end{align}}
\newcommand{\eq}[1]{ \begin{equation} #1  \end{equation}}
\newcommand{\als}[1]{ \begin{align*} #1  \end{align*}}
\newcommand{\eqs}[1]{ \begin{equation*} #1  \end{equation*}}
\newcommand{\el}{\end{flushleft}}
\newcommand{\bl}{\begin{flushleft}}
\newcommand{\argmin}{\arg\!\min}
\newcommand{\argmax}{\arg\!\max}
\newcommand{\floor}[1]{  \left\lfloor #1 \right\rfloor}
\newcommand{\ceil}[1]{  \left\lceil #1 \right\rceil}
\newcommand{\bE}{\mathbb{E}}
\newcommand{\bN}{\mathbb{N}}
\newcommand{\bP}{\mathbb{P}}
\newcommand{\bR}{\mathbb{R}}
\newcommand{\cA}{\mathcal{A}}
\newcommand{\cB}{\mathcal{B}}
\newcommand{\cF}{\mathcal{F}}
\newcommand{\cM}{\mathcal{M}}
\newcommand{\cP}{\mathcal{P}}
\newcommand{\cX}{\mathcal{X}}
\theoremstyle{plain}
\newtheorem{thm}{Theorem}
\newtheorem{lem}{Lemma}
\newtheorem{prop}{Proposition}
\newtheorem{cor}{Corollary}
\newtheorem{rem}{Remark}
\newtheorem{defi}{Definition}
\newtheorem{assu}{Assumption}
\newtheorem{fact}{Fact}
\newtheorem{theorem}{Theorem}
\newtheorem{corollary}{Corollary}
\newtheorem{definition}{Definition}
\begin{document}



\pagenumbering{gobble}

\title{Sequential Decision Problems with Weak Feedback}
\what{Thesis}
\author{Arun Verma}
\date{April 2021}

\rollnum{154190002} 
\iitbdegree{Doctor of Philosophy}
\thesis
\department{Industrial Engineering and Operations Research}
\setguide{ Prof. Manjesh K. Hanawal
	\\ \vspace{0pt plus 0.1fil}
	{\it and} \\
	\vspace{0pt plus 0.1fil}	
	Prof. N. Hemachandra
}
\maketitle

\begin{dedication}
	\it{\Large Dedicated to my beloved parents.}
\end{dedication}

\newpage\null\thispagestyle{empty}\newpage 
\setboolean{@twoside}{false}
\includepdf[pages=-, offset=75 -75]{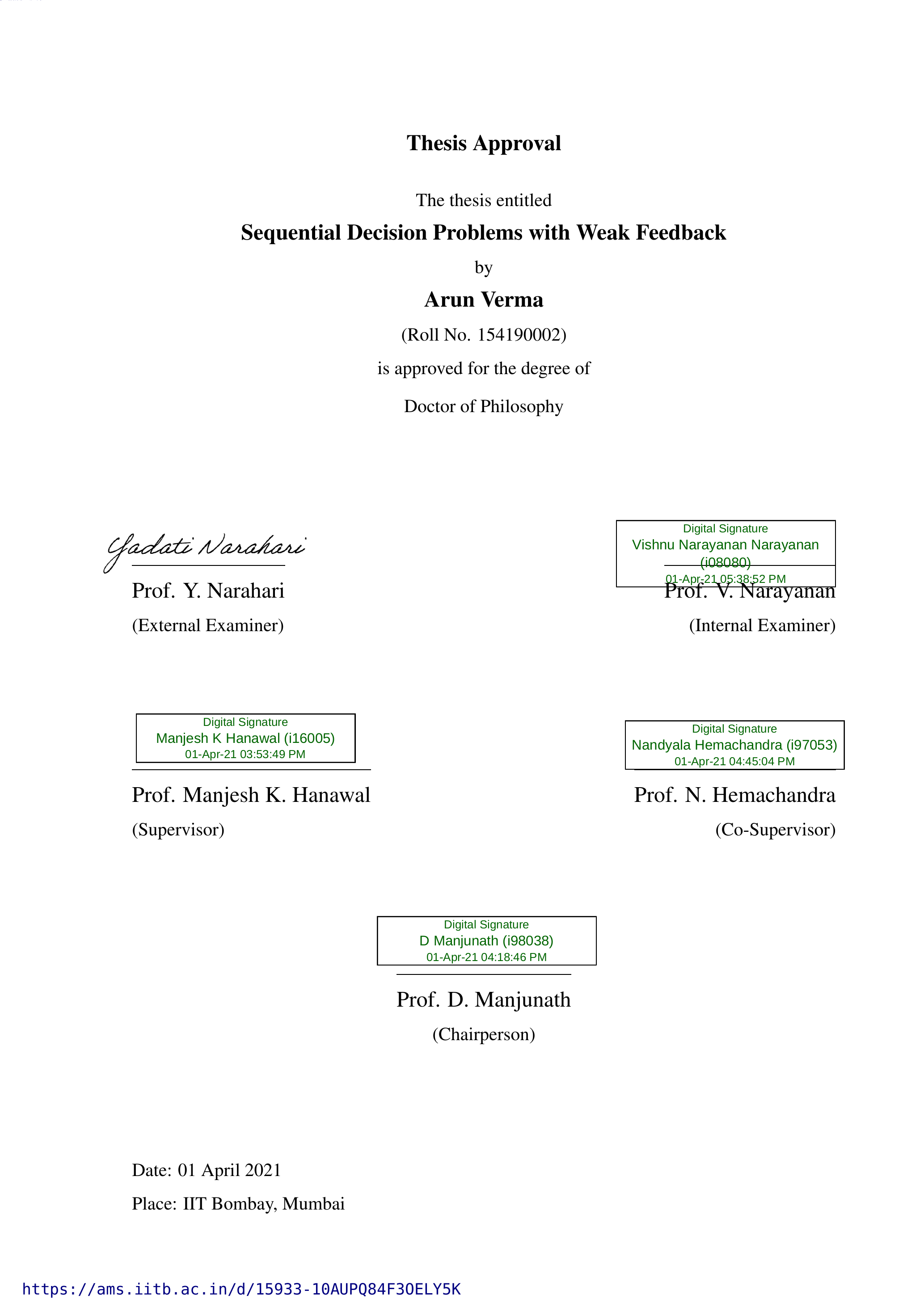}

\newpage\null\thispagestyle{empty}\newpage 

\clearpage
\thispagestyle{empty}
\begin{center}
	{\Large \bf Declaration}
\end{center}
\vspace{0.5in}

I declare that this written submission represents my ideas in my own words, and where others' ideas or words have been included, I have adequately cited and referenced the original sources. I also declare that I have adhered to all principles of academic honesty and integrity and have not misrepresented or fabricated, or falsified any idea/data/fact/source in my submission. I understand that any violation of the above will be cause for disciplinary action by the Institute and can also evoke penal action from the sources which have thus not been properly cited or from whom proper permission has not been taken when needed.

\vspace{1in}

~\hfill Arun Verma \\
Date: April 27, 2021	\hfill 154190002 

\vfill
%
%
%
\pagebreak

\newpage

\newpage\null\thispagestyle{empty}\newpage 





\chapter*{Acknowledgment}
\label{cha:acknowledgment}

\vspace{-5.45mm}

A lot of people have contributed in getting this thesis to its present form. First and foremost, I would like to express my gratitude to my supervisor Prof. Manjesh K. Hanawal and co-supervisor Prof. N. Hemachandra, for their continuous guidance and support throughout my thesis work.

I would also like to thank my Research Program Committee (RPC) members, Prof. Shivaram Kalyanakrishnan and Prof. Vishnu Narayanan, for their critical assessment and constructive suggestions during the Annual Progress Seminars (APS) that helped me to significantly improve the quality of this thesis. I also thank the faculty members at the IEOR department for their direct and indirect support during my stay at IIT Bombay. I am grateful for the continuous and prompt help provided by the staff at the IEOR office -- Abasaheb Molavane, Siddhartha Salve, Pramod Pawar, and Amlesh Kumar.

I would like to acknowledge Csaba Sepesv\'ari, Odalric-Ambrym Maillard, Arun Rajkumar, Raman Sankaran, Venkatesh Saligrama, and Rahul Vaze for collaborating on various projects that I worked on during my Ph.D. study. I want to extend my special thanks to my labmates Sandhya Tripathi, Puja Sahu, Indrajit Saha, and Prashant Trivedi, who helped me in proofreading my research papers, cross-checking some of my proofs, and understanding many complex topics.

I would like to show my appreciation to Google, Microsoft, LRN Foundation, COMSNETS Association, NeurIPS, Asian Universities Alliance,  ACM-India/IARCS, LinkedIn, and CoDS-COMAD for their generous travel grants, which allowed me to attend many conferences like NeurIPS, AIStats, and INFOCOM. I want to thank Conduent India for providing me the opportunity to do a three months internship with them. I am also thankful to Indo French Centre (IFCPAR/CEFIPRA) for awarding me Raman–Charpak Fellowship. It enabled my six months visit to INRIA’s Scool (previously SequeL) team which is a group of eminent researchers actively working in my research area.

Finally, I would like to express my deepest gratitude to my beloved parents for their unconditional love and support.

\hfill {\bf Arun Verma}

\newpage\null\thispagestyle{empty}\newpage  

\pagenumbering{roman}
\begin{abstract}

Many variants of sequential decision problems that are considered in the literature depend upon the type of feedback and the amount of information they reveal about the associated rewards. Most of the prior work studied the cases where feedback from actions reveals rewards associated with the actions. However, in many areas like crowd-sourcing, medical diagnosis, and adaptive resource allocation,  feedback from actions may be weak, i.e., may not reveal any information about rewards at all. Without any information about rewards, it is not possible to learn which action is optimal. Clearly, learning an optimal action is only feasible in such cases if the problem structure is such that an optimal action can be identified without explicitly knowing the rewards. Our goal in this thesis is to study the class of problems where optimal action can be inferred without explicitly knowing the rewards. Specifically, we study Unsupervised Sequential Selection (USS), where rewards/losses for selected actions are never revealed, but the problem structure is amenable to identify the optimal actions. We also introduce a novel setup named Censored Semi-Bandits (CSB), where the reward observed from an action depends on the amount of resources allocated to it.

The major part of this thesis focuses on the USS problem. In the USS problem, the loss associated with action cannot be inferred from observed feedback. Such cases arise in many real-life applications. For example, in a medical diagnosis, the patients' true state may not be known; hence, the test's effectiveness cannot be known. In crowd-sourcing systems, crowd-sourced workers' expertise level is unknown; therefore, the quality of their work cannot be known. In such problems, the prediction from test/worker is observed, but one cannot ascertain their reliability due to the absence of ground truth. We show that by comparing the feedback obtained from different actions, one can find the optimal action for a class of USS problems when the `weak dominance' property is satisfied by the problem. For this problem, we develop upper confidence bound and Thompson Sampling based algorithms with optimal performance guarantees.

In this thesis, we introduce a new setup called the censored semi-bandits (CSB), where feedback observed from an action depends on the amount of resource allocated. The feedback gets `censored' if enough resource is not allocated. In the CSB setup, a learner allocates resources among different activities (actions) in every round and receives censored loss from each action as feedback. The goal is to learn a policy for resource allocation that minimizes the cumulative loss. The loss at each time step depends on two unknown parameters, one specific to the action but independent of the allocated resources, and the other depends on the amount of resource allocated. More specifically, the loss equals zero if the action's resource allocation exceeds a constant (but unknown) threshold that can be dependent on the action. The CSB setup finds applications in many resource allocation problems like police patrolling, traffic regulations and enforcement, poaching control, advertisement budget allocation, stochastic network utility maximization, among many others.

The last part of this thesis focuses on distributed learning in multi-player multi-armed bandits for identifying the best subset of actions. The setup is such that the reward is only observed for those actions which are played by only one player. These problems find applications in wireless ad hoc networks and cognitive radios to find the best communication channels.

Our contribution in this thesis is to address the above described sequential decision problems by exploiting specific structures these problems exhibit. For each of these setups with weak feedback, we develop provably optimal algorithms. Finally, we validate their empirical performance on different problem instances derived from synthetic and real datasets.

~\\ \noindent
\textbf{Keywords:} Stochastic Multi-Armed Bandits, Unsupervised Online Learning, Censored Feedback, Upper Confidence Bound, Thompson Sampling, Contextual Bandits, Generalized Linear Models, Multi-Play Multi-Armed Bandits, Combinatorial Semi-Bandits, Stochastic Network Utility Maximization, Pure Exploration, Multi-Player Multi-Armed Bandits

\vfill
\end{abstract}

\chapter*{Publications}
\label{cha:publications}

\noindent
\textbf{Peer-reviewed Publications} 
\begin{enumerate}
	\item {Arun Verma}, Manjesh K Hanawal, Csaba Szepesv\'ari, and Venkatesh Saligrama.  Online Algorithm for Unsupervised Sequential Selection with Contextual Information. In Advances in Neural Information Processing Systems (pp. 778–788). 2020.
	
	\item {Arun Verma}, Manjesh K Hanawal, and N. Hemachandra. Thompson sampling for unsupervised sequential selection. In Asian Conference on Machine Learning (pp. 545-560). 2020.
	
	\item {Arun Verma} and Manjesh K Hanawal. Stochastic Network Utility Maximization with Unknown Utilities: Multi-Armed Bandits Approach. In IEEE INFOCOM 2020-IEEE Conference on Computer Communications (pp. 189-198). 2020.
	
	\item {Arun Verma}, Manjesh K Hanawal, and N. Hemachandra. Unsupervised Online Feature Selection for Cost-Sensitive Medical Diagnosis. In 2020 International Conference on COMmunication Systems \& NETworkS (COMSNETS) (pp. 1-6). 2020.
	
	\item {Arun Verma}, Manjesh K Hanawal, Arun Rajkumar, and Raman Sankaran. Censored Semi-Bandits: A Framework for Resource Allocation with Censored Feedback. In Advances in Neural Information Processing Systems (pp. 14526-14536). 2019.
	
	\item {Arun Verma}, Manjesh K Hanawal, and Rahul Vaze. Distributed algorithms for efficient learning and coordination in ad hoc networks. In 2019 International Symposium on Modeling and Optimization in Mobile, Ad Hoc, and Wireless Networks (WiOPT) (pp. 1-8). 2019.
	
	\item {Arun Verma}, Manjesh K Hanawal, Csaba Szepesv\'ari, and Venkatesh Saligrama. Online Algorithm for Unsupervised Sensor Selection. In The 22nd International Conference on Artificial Intelligence and Statistics (pp. 3168-3176). 2019.
	
	\item {Arun Verma} and Manjesh K Hanawal. Unsupervised cost sensitive predictions with side information. In Proceedings of the ACM India Joint International Conference on Data Science and Management of Data (pp. 364-367). 2018.
\end{enumerate}

\vspace{4.5mm}
\noindent
\textbf{Submitted and Working Papers} 
\begin{enumerate}	
	\item {Arun Verma}, Manjesh K Hanawal, Arun Rajkumar, and Raman Sankaran. Censored Semi-Bandits for Resource Allocation. Under review at Mathematics of Operations Research.
	
	\item {Arun Verma}, Manjesh K Hanawal, and Rahul Vaze. Channels Selection in Cognitive Radio using Collaborative Multi-Player Bandits. To be communicated.
\end{enumerate}

\tableofcontents
\listoftables
\listoffigures



%
%



	\pagebreak
	\pagenumbering{arabic}

	\chapter{Introduction}
	\label{cha:introduction}

Many real-life systems involve actions being applied sequentially, with each action resulting in a certain reward. The goal is to apply an action from a set of available actions that results in maximum reward (or minimum cost) in each round. However, which action is the best in a given round, may not be known. In such systems, the `goodness' of action has to be inferred from the results of past actions. For example, in the medical diagnosis of an ailment, a doctor prescribes a treatment from a set of available treatments, and by observing their effectiveness, narrows down on the most appropriate treatment \citep{BIOMETRIKA33_thompson1933likelihood}. Such a decision-making process is commonly known as online learning in literature, where the goal is to learn the best action as quickly as possible.

Online machine learning is a branch of machine learning where actions are made sequentially using past feedback, and associated rewards of actions are observed. As opposed to batch learning where all the data are available beforehand, in online machine learning, an instance gets revealed in each round, and after selecting an action, feedback is observed. The feedback could reveal how good the selected action is or may not provide any information at all. The goal is to identify the best action to play in each round. In the batch learning methods, the entire data are available in one go, and the best predictor is obtained using all the data and used for prediction on any new instance. In online learning methods, learning is incremental: as more and more feedback is available, the action to apply in the next instance is dynamically updated using all the past feedback.

Several variants of batch learning methods have been studied in the past: supervised, unsupervised, and active learning. In supervised learning methods, true labels for all data points are known, whereas, in unsupervised methods, they are not known \citep{friedman2001elements, shalev2014understanding}. In the active learning (a special case of semi-supervised learning where the algorithm has a small labeled set of training instances with a larger unlabeled set), the labels are not available by default but can be acquired if necessary \citep{settles1648active}. For example, obtaining labels could be expensive, in which case labels for the most 'important' data points can be acquired on payment or with lots of human effort. All three variants in the batch setting are well-studied in literature.

Many variants of online machine learning are considered in the literature, depending on the type of feedback and the amount of information they reveal about the rewards. The multi-armed bandits and the expert setting \citep{ML02_auer2002finite, bubeck2012regret} are well-studied problems where feedback provides direct information about the rewards. In the multi-armed bandit setting, feedback observed from an action reveals only the reward associated with that action. However, in the expert setting, the feedback observed from an action reveals reward associated with the action played as well as all the other actions. The settings that span in between these two extreme cases are also studied, namely, bandits with side-information \citep{mannor2011bandits,alon2013bandits,alon2015online,NIPS15_wu2015online}.

The online learning methods studied in the literature work under the assumption that feedback from an action provides information about the reward of that action, though the amount of information revealed could vary from one action to another. It is also possible that the feedback from actions is indirectly tied to the rewards. This setting is referred as partial monitoring setting \citep{MOR06_cesa2006regret,ALT12_bartok2012partial,MOR14_bartok2014partial}. It includes all the previously described online learning setups as special cases. A summary of the various online learning setups discussed is shown in below Figure \ref{fig:oml}.
\begin{figure}[!ht]
	\centering
	\includegraphics[width=\linewidth]{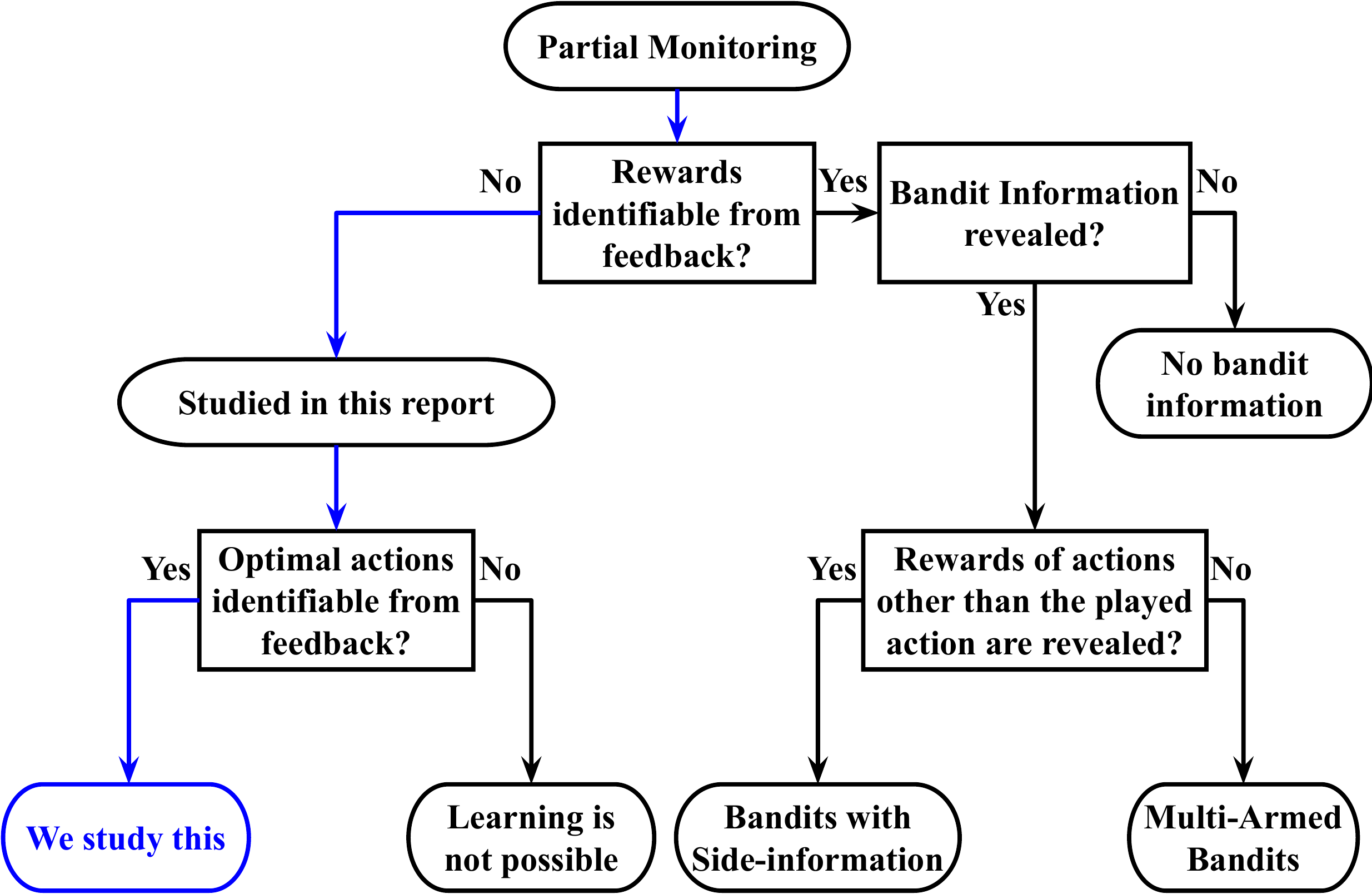}
	\caption{Classification of various online learning settings.}
	\label{fig:oml}
\end{figure}

The finite partial monitoring setting \citep{MOR14_bartok2014partial} is characterized by a pair of $K \times O$ matrices, the reward matrix $R$, and the feedback matrix $F$. In each round $t$, the learner selects an action $I_t \in \{1,2,\ldots, K\}$ and simultaneously, the environment selects an outcome $J_t \in \{1,2,\ldots, O\}$. Then, the learner observes entry $F_{I_t, J_t}$ as feedback and receives instantaneous reward $R_{I_t, J_t}$, which is not revealed to him. The feedback can be thought of as masked information about the outcome of $J_t$. Sometimes, $F_{I_t, J_t}$  may uniquely determine the outcome; otherwise, the feedback may give only partial or no information about the outcome. The learner's performance is measured in terms of regret, i.e., the total reward  $\sum_{t=1 }^T R_{I_t, J_t}$ of the learner is compared with the total reward of best-fixed action in hindsight. Generally, the regret grows with the number of rounds $T$. The learner must have sub-linear regret, which means that the learner's average per-round reward approaches the average per-round reward of the best action in hindsight.

The partial monitoring setting is the most general online machine learning setting, but most of the earlier work on partial monitoring is restricted to cases where feedback from the actions allows the learner to fully identify the rewards of the actions. However, in many areas like crowd-sourcing, medical diagnosis, security management, resource allocation, and many others, feedback from actions may not even be sufficient to fully identify their rewards. In this thesis, the algorithms that identify optimal actions without requiring explicit reward values are developed. Broadly, the aim is to identify the scenarios where the reward values cannot be identified and then develop algorithms to learn optimal action by exploiting the problem structure. We categorized such problem as \underline{s}equential \underline{d}ecision problems with \underline{w}eak \underline{f}eedback (SDWF).

Our main objective in this thesis is to identify scenarios when it is possible to act optimally in SDWF. Clearly, without any information about rewards, one cannot learn which action is good in general. So, the specific problem structure has to be exploited to learn good actions. We propose to identify a set of necessary and sufficient conditions that lead to the learnability of optimal actions. We then identify classes of problems that satisfy these conditions and prepare a common framework for such classes to study partial monitoring problems with weak feedback. For the class of SDWF problems that can be learned, we develop learning algorithms. The goal is to develop provably optimal algorithms that are also easy to implement in practice. The performance of the developed algorithms is evaluated on synthetic and real datasets drawn from various application fields and compared with natural benchmarks.

In this thesis, the following classes of SDWF problems are studied: Uncensored Sequential Selection, Censored Semi-Bandits, and Collaborative Learning in Multi-Player Multi-Armed Bandits. In subsequent, these classes of SDWF problems are briefly presented.

\subsection*{Unsupervised Sequential Selection}
The problems involving the SDWF arise naturally in areas like security, medical diagnosis, and crowd-sourcing, where we may not benefit from knowing the ground-truth associated with the task. For example, in medical diagnosis, patients may not reveal the outcome of treatment due to privacy reasons. Hence, the treatment's effectiveness may not be known. In crowd-sourcing systems, crowd-sourced workers may not reveal their expertise levels, and hence their effectiveness is not known. We named this class of problems as `Unsupervised Sequential Selection' (USS).

The main difficulty in the study of the USS problem is the lack of reward information. Moreover, the techniques developed for the partial monitoring problem, where rewards are identifiable from feedback, are not applicable to the USS setup. However, one would still like to act optimally.  We develop algorithms that learn to select the optimal action when rewards are not fully identifiable from the feedback. Specifically, we exploit the problem structure to learn the optimal action rather than rely on the availability of the reward information.

\subsection*{Censored Semi-Bandits}
Sequential allocation problems with a censored feedback structure have received significant interest in recent times \citep{NIPS16_abernethy2016threshold}. Censoring occurs naturally in several applications such as police patrolling of opportunistic crimes, supplier selection, budget allocation, and several others. In all these applications, an allocation to a set of options (allocation of price to items, allocation of the patrol to locations, etc.) is made by a learner, and then the response to the allocation (purchase, crime, etc.) is observed. If a positive response is observed (crime occurs, a purchase is made), the learner receives information about the goodness of allocation. However, if a negative response (censored) is observed (no crime, no purchase, etc.), the learner cannot decide if it is because of a good allocation or because of an inherent low propensity of a response. The goal then is to make repeated allocations and learn the best possible allocation strategy. We model these problems as stochastic semi-bandits and refer to this class of problems as `Censored Semi-Bandits' (CSB). 

In the CSB problem, the amount of reward information revealed to the learner via feedback depends on the resource allocation. Hence the reward is only partially observable from feedback. We develop algorithms that first estimate the threshold so that we fully observe the reward from feedback and then find the best resource allocation.

\subsection*{Distributed Learning in Multi-Player Multi-Armed Bandits}
We consider a communication network consisting of multiple users and multiple channels where each user wishes to use one of the available channels. A user can transmit on any of the channels. To keep the model more general, users cannot communicate directly between them, and any information about the other users can be obtained only by collisions. In particular, if the user transmits on a channel, it gets a sample of the gain of the channel only if no other user transmits on that channel. Otherwise, it only gets to know that at least one other user is transmitted on that channel. We assume that the expected gain of each of the channels is distinct. Even though users cannot communicate and have to make decisions in a distributed fashion, we consider a non-strategic setting where users have a common goal to maximize the total gain in the network, i.e., to achieve an optimal allocation, which is realized when all the users occupy non-overlapping channels in the top channels. Here the top channels refer to the set of channels with the highest expected gain. As the goal is to achieve optimal network allocation,  we assume that each user is satisfied if she gets one of the top channels when no other better channel is free. Our aim in this work is to find a distributed strategy that minimizes the time by which the users reach an optimal allocation while keeping the regret and number of collisions as small as possible. In this thesis, this setup is modeled as `Multi-Player Multi-Armed Bandits' (MP-MAB).

The reward information is the same as feedback in the MP-MAB problem unless two players select the same arm. We propose distributed strategies that assign arms to players and then learn in a distributed fashion by using all players for collecting information about the arms with a minimal number of collisions.

\subsection*{Organization of Thesis}
In \cref{cha:uss}, we introduce Unsupervised Sensor Selection problem. We provide an optimal algorithm for solving the USS problem under certain conditions. We study the contextual version of the USS problem in \cref{cha:contextual_uss}, where the optimal action could be task-dependent. In \cref{cha:csb}, we proposed a novel framework for resource allocation problems using a variant of semi-bandits named censored semi-bandits. We use censored semi-bandits for solving the stochastic network utility maximization problem. We showcase the application of the CSB setup for network utility maximization. In \cref{cha:mp-mab}, we set up the channels selection problem in the communication network as a stochastic multi-player multi-armed bandits. We developed distributed strategies without centralized communication that achieve constant regret with a high probability. Since we studied the different sequential decision problems in this thesis, the related literature is given in the respective chapters.

	\chapter{Unsupervised Sequential Selection}
	\label{cha:uss}

In this chapter, we study {\em \underline{U}nsupervised \underline{S}equential \underline{S}election} (USS) problem. The USS problem is a variant of the stochastic multi-armed bandits problem, where the loss of an arm can not be inferred from the observed feedback. In the USS setup, arms are associated with fixed costs and are ordered, forming a cascade. The learner selects an arm in each round and observes the feedback from arms up to the selected arm. The learner's goal is to find the arm that minimizes the expected total loss. The total loss is the sum of the cost incurred for selecting the arm and the stochastic loss associated with the selected arm. The problem is challenging because, without knowing the mean loss, one cannot compute the total loss for the selected arm. Clearly, learning is feasible only if the optimal arm can be inferred from the problem structure. It is observed that this happens if the problem satisfies the `Weak Dominance' (WD) property.  We set up the USS problem as a stochastic partial monitoring problem and develop algorithms with sub-linear regret under the WD property. We argue that our algorithms are optimal and evaluate its performance on problem instances generated from synthetic and real-world datasets.

\section{Unsupervised Sequential Selection (USS)}
\label{sec:uss_introduction}

In many applications, one has to trade-off between accuracy and cost. For example, for detecting some event, it is not only the accuracy of a test/sensor that matters, but the associated cost is important as well. Also, one may have to predict labels of instances for which ground-truth cannot be obtained. In such scenarios, feedback about the correctness of predictions remains unknown. Such problems arise naturally in medical diagnosis, crowd-sourcing, security system \citep{AISTATS17_hanawal2017unsupervised}, and online features selection in unsupervised setting. In the medical diagnosis problem, either the true state of the patients may not be known or the patients may not reveal the outcome of treatment due to privacy concerns; hence, the test's effectiveness cannot be known. Whereas in the crowd-sourcing systems, the expertise level of self-listed-agents (workers) is unknown; therefore, the quality of their work cannot be known. In these prediction problems, we can observe prediction from test/worker, but we cannot ascertain their reliability due to the absence of ground truth.

In many of the real-world situations like those found in medical diagnosis, airport security, and manufacturing, a set of tests or classifiers is used to monitor patients, people, and products. Tests have cost with the more informative ones resulting in higher monetary costs and higher latency. Thus, they are often organized as a cascade \citep{AISTATS12_chen2012classifier, AISTATS13_trapeznikov2013supervised}, so that a new input is first probed by an inexpensive test then more expensive one. We refer to such cascaded systems as {\it Unsupervised Sequential Selection} (USS) problem\footnote{Note that the unsupervised sequential selection problem is referred to as the unsupervised sensor selection problem in the prior work \citep{AISTATS17_hanawal2017unsupervised}.}, where an arm represents a test/ worker. A learner's goal in the USS problem is to select the most cost-effective arm so that the overall system maintains high accuracy at low average costs.

Clearly, without the knowledge of the ground-truth, one cannot find the optimal arm as their losses cannot be computed. In the USS setup, the structure of the problem needs to be exploited, and it is shown that under certain conditions, namely strong dominance ($\SD$) and weak dominance ($\WD$), learning is possible. The $\SD$ property requires the loss of an arm to stochastically dominate the losses of other arms with lower costs in the cascade. Specifically, it assumes that if an arm's prediction is correct, then all the arms that follow this arm in the cascade also have correct predictions.

Under the $\SD$ property, \citet{AISTATS17_hanawal2017unsupervised} established that USS problem is equivalent to a multi-armed bandit with side observations and exploit the equivalence to give an algorithm with sub-linear regret. The $\SD$ property is quite strong and posits that disagreement probability of the predictions of two arms is equal to the difference in error rates. This property implies that we can measure losses by measuring disagreement probabilities leading to a direct multi-armed bandit (MAB) reduction and analysis.

The $\WD$ property relaxes strict stochastic ordering on predictions and allows errors on some instances from better arms. It is argued that the set of instances satisfying the $\WD$ property is maximally learnable, and any further relaxation of this property renders the problems unlearnable. The reduction techniques used under $\SD$ property does not apply/extend to $\WD$ property. For this case, a heuristic algorithm without any performance guarantee is given in \cite{AISTATS17_hanawal2017unsupervised}.  Our work bridges this gap. Specifically, we propose algorithms that use the notion of weak dominance \citep{AISTATS17_hanawal2017unsupervised} that helps to find optimal arm using observed disagreements between arms. We then validate the performance of these algorithms on several problem instances derived from synthetic and real datasets. Our contributions of this chapter is summarized as follows:
\begin{itemize}
	\item \cite{AISTATS17_hanawal2017unsupervised} assume that arms are ordered, i.e., their accuracy improve with their index, and used this fact in their algorithms. We relax this assumption in \cref{ssec:uss_scd} where the arms can have an arbitrary order. For this setup, we show that the same $\WD$ property determines the learnability. Hence, we demonstrate that the requirement of arms being ordered by error rate in the USS problem is unnecessary and as long as the $\WD$ property is satisfied the USS problem is learnable.
		
	\item We develop an UCB based algorithm named \ref{alg:USS_WD} that has sub-linear regret under $\WD$ property. We characterize regret in terms of how `well' the problem instances satisfy the $\WD$ property and then provide a bound that holds uniformly for all $\WD$ instances.

	\item We give problem independent bounds on the regret of \ref{alg:USS_WD}. We show that it is of order $T^{2/3}$ under $\WD$ property and improves to $T^{1/2}$ under $\SD$ property. We establish that the bounds are optimal using results from partial monitoring in \cref{sec:uss_ucb}.

	\item We develop a Thompson Sampling based algorithm named \ref{alg:TS_USS} for the USS problem. \ref{alg:TS_USS} uses a one-sided test to find the optimal arm, whereas the \ref{alg:USS_WD} uses a two-sided test to identify the optimal arm. The one-sided test leads to a simpler algorithm. 
	
	\item In \cref{sec:uss_ts}, we characterize the regret of \ref{alg:TS_USS} in terms of how well the problem instance satisfies the $\WD$ property and show that it has sub-linear regret under $\WD$ property. We also give problem independent regret bound and establish that the regret bounds are near-optimal.
	
	\item We demonstrate empirical performance of our algorithms on both synthetic and real datasets in \cref{sec:uss_experiments}. The experimental results show that regret of \ref{alg:USS_WD} and \ref{alg:TS_USS} is always lower than the heuristic algorithm in \cite{AISTATS17_hanawal2017unsupervised}. Further, our experimental results show that regret of \ref{alg:TS_USS} is always lower than \ref{alg:USS_WD}.
\end{itemize}

\subsection{Related Work} 
Several works consider the problem similar to the USS setup in either batch, or online settings \citep{AISTATS13_trapeznikov2013supervised,ICML14_seldin2014prediction}. 
However, they all require that the label of each data point is available or the reward is obtained for each action. 
\cite{NIPS13_zolghadr2013online} considers that the labels are available on payment. \cite{AI02_greiner2002learning} and \cite{ICML09_poczos2009learning} consider costs associated with tests. However, they assume that loss/reward associated with the players' action is revealed. In contrast, in our setting, the labels are not revealed at any point and are thus completely unsupervised, and the cost in our setup is related to sensing cost and not that of acquiring a label. 

\citet{UAI14_platanios2014estimating, ICML16_platanios2016estimating, NIPS17_platanios2017estimating} consider the problem of estimating accuracies of the multiple binary classifiers with unlabeled data. Most of these works make strong assumptions such as independence given the labels, knowledge of the true distribution of the labels.  
\citet{UAI14_platanios2014estimating} proposed logistic regression based methods using the classifiers' agreement rates over unlabeled data, \cite{ICML16_platanios2016estimating} extend this work to use graphical models, and \citet{NIPS17_platanios2017estimating} proposes method using probabilistic logic. Further, \citet{NIPS17_platanios2017estimating} also use weighted majority vote for label prediction. All this is in the batch setting and differs from our online setup.

In the crowd-sourcing problems, various methods have been proposed to estimate unknown skill-level of crowd-workers from the noisy labels they provide (\citet{NIPS17_bonald2017minimax, ICLM18_kleindessner2018crowdsourcing}). These methods assume that all workers are having the same cost and aggregate the predictions on a  given dataset for estimating the accuracy of each worker. Unlike our problem setup, these methods are not online.

Our work is closely related to the stochastic partial monitoring setting  
\citep{MOR06_cesa2006regret,ALT12_bartok2012partial,MOR14_bartok2014partial,NIPS15_wu2015online}, where the feedback from actions is indirectly tied to the rewards.  In our setting, we exploit the problem structure to learn an optimal arm without explicitly knowing the loss associated with each action.

\section{Problem Setting}
\label{sec:uss_setup}

We consider a stochastic $K$-armed bandits problem. The set of arms is denoted by $[K]$ where $[K] \doteq \{1,2,\ldots, K\}$. In round $t$, the environment generates a task $x_t$ independently and corresponding $(K+1)$-dimensional binary vector $\left(\Yt, \{\Yti\}_{i \in [K]}\right)$. The variable $\Yt$ denotes the best binary feedback for round $t$, which is hidden from the learner. The vector $\left(\{\Yti\}_{i \in [K]}\right) \in \{0, 1\}^{K}$ represents observed feedback at time $t$, where $\Yti$ denote the feedback\footnote{In the USS setup, an arm $i$ could represent a classifier. After using the first $i$ classifiers, the final label can be a function of labels predicted by the first $i$ classifiers, $i \in[K]$.} observed after playing arm $i$. We denote the cost for using arm $i \in [K]$ as $c_i\geq 0$ that is known to learner and the same for all rounds. \cref{fig:USS_Cascade} depicts the USS setup.
\begin{figure}[!ht]
	\centering
	\includegraphics[width=0.8\linewidth]{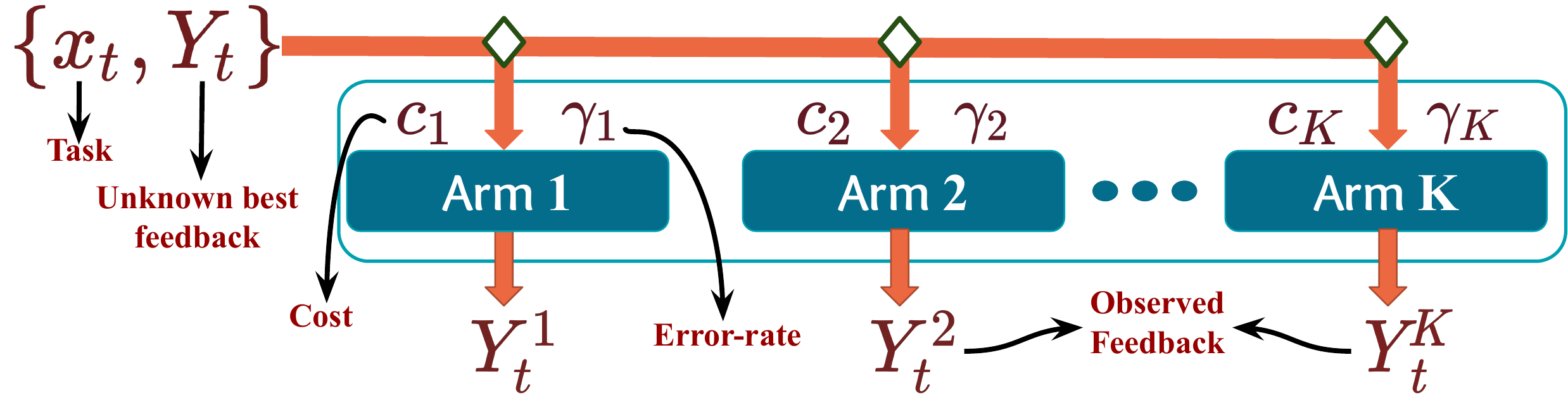}
	\caption{Cascaded Unsupervised Sequential Selection (USS) setup. In round $t$, $x_t$ represents the task, $Y_t$ is the hidden state of the instance, and $Y_t^1, Y_t^2 \ldots Y_t^K$ are feedback observed from the arms. $\gamma_i$ denotes error rate of the arm $i$ and $c_i$ denotes the cost of using the arm $i$.}
	\label{fig:USS_Cascade}
\end{figure}

In the USS setup, the arms are assumed to be ordered and form a cascade. When the learner selects an arm $i \in [K]$, the feedback from all arms till arm $i$ in the cascade is observed. The expected loss of playing the arm $i$ is denoted as $\gamma_i \doteq \EE{\one{\Yi \neq Y}} = \Prob{\Yi \neq Y}$, where $\one{A}$ denotes indicator of event $A$. The {\it expected total cost} incurred by playing arm $i$ is defined as $\gamma_i +\lambda_iC_i$, where $C_i \doteq c_1 + \ldots + c_i$ and $\lambda_i$ is a trade-off parameter that normalizes the loss and the incurred cost of playing arm $i$. The interaction between the environment and a learner is given in Algorithm \ref{alg:USS}.
\begin{algorithm}[!ht]
	\caption{Learning with USS instance $(\bd, \bc)$}
	\label{alg:USS}
	For each round $t$: 
	\begin{enumerate}
		\item \textbf{Environment} chooses a vector $(\Yt, \{\Yti\}_{i \in [K]})\sim \bd$.
		\item \textbf{Learner} selects an arm $I_t \in [K]$ to stop in cascade.
		\item \textbf{Feedback and Loss:} The learner observes feedback $(\Yt^1, \Yt^2, \ldots, \Yt^{I_t})$ and incurs a total loss $\one{Y^{I_t} \neq \Yt} + \lambda_{I_t}C_{I_t}$. 
	\end{enumerate}
\end{algorithm}

Since the best binary feedback are hidden from the learner, the expected loss of an arm cannot be inferred from the observed feedback. We thus have a version of the stochastic partial monitoring problem, and we refer to it as \underline{u}nsupervised \underline{s}equential \underline{s}election (USS) problem. Let $\bd$ be the unknown joint distribution of $(Y, Y^1, Y^2 \ldots, Y^K)$. Henceforth we identify an USS instance as $P \doteq (\bd,\bc)$ where $\bc \doteq (c_1, c_2, \ldots, c_K)$ is the known cost vector of arms. We denote the collection of all USS instances as $\USS$. For instance $P \in \USS$, the optimal arm is given by
\eq{
	\label{equ:optimalArm}
	\istar \in \max\left\{\argmin _{i \in [K]} \left( \gamma_i+\lambda_iC_i \right)\right\}
}
where the `max' operator selects the arm with the largest index among the minimizers. The choice of $\istar$ in \cref{equ:optimalArm} is risk-averse as we prefer the arm with the lowest error among the optimal arms. 

The learner's goal is to learn a policy that finds an arm such that the cumulative expected loss is minimized. Specifically, for $T$ rounds, we measure the performance of a policy that selects an arm $I_t$ in round $t$ in terms of regret given by
\eq{
	\label{eq:cum_regret}
	{\Regret_T} = \sum_{t=1}^T\left( \gamma_{I_t} +  \lambda_{I_t}C_{I_t} - \left(\gamma_\istar +  \lambda_\istar C_{\istar} \right) \right).
}

A good policy should have sub-linear regret, i.e.,
$\lim\limits_{T \rightarrow \infty}{\Regret_T}/T = 0$. The sub-linear regret implies that the learner collects almost as much reward in expectation in the long run as an oracle that knew the optimal arm from the first round. 
We say that a problem instance $P \in \USS$ is learnable if there exists a policy with sub-linear regret.

\section{Conditions for Learning Optimal Arm}
\label{sec:uss_learning}

The purpose of this section is to introduce the notions of strong and weak dominance from the work of \citet{AISTATS17_hanawal2017unsupervised}. While \citeauthor{AISTATS17_hanawal2017unsupervised} studied learning under strong dominance, we will focus on weak dominance in this chapter. We also modify the definition of weak dominance of \citeauthor{AISTATS17_hanawal2017unsupervised} to correct an oversight of them. Next, we define the strong and weak dominance property of the USS problem instance that makes the learning of the optimal arm possible.
\begin{defi}[Strong  Dominance $(\SD)$ \citep{AISTATS17_hanawal2017unsupervised}] 
	\label{def:CSD} 
	A problem instance is said to satisfy $\SD$ property if
	\eqs{
		Y^i = Y \mbox{ for some } i \in[K] \implies  Y^j = Y, ~~ \forall j > i.
	}
	We represent the set of all instances in $\USS$ that satisfy $\SD$ property by $\PSD$.
\end{defi}

The $\SD$ property implies that if the feedback of an arm is same as the true reward, then the feedback of all the arms in the subsequent stages of the cascade is also same as the true reward. \cite{AISTATS17_hanawal2017unsupervised} show that the set of all instances satisfying $\SD$ property is learnable by mapping such instances to stochastic multi-armed bandits problem with side information \citep{NIPS15_wu2015online}. A weaker version of the $\SD$ property is defined as follows:
\begin{defi}[Weak Dominance $(\WD)$ \citep{AISTATS17_hanawal2017unsupervised}] 
	\label{def:WD} 
	Let $i^\star$ denote the optimal arm. Then an instance $P \in \USS$ is said to satisfy {weak dominance property} if
	\eq{
		\label{equ:WDProp}
		\rho \doteq \min_{j > i^\star} \frac{C_j-C_{i^\star}}{\Prob{Y^j \neq Y^{i^\star}}}>1.	
	}
	We denote the set of all instances in $\USS$ that satisfy $\WD$ property by $\PWD$.
\end{defi}
The $\WD$ property implies that for all sub-optimal arms that come after the optimal arm in the cascade, the cost difference is more than their disagreement probability.
Let $\PWD = \left\{P \in \USS:P \mbox{ satisfies $\WD$ condition}\right\}$ denote the set of instances satisfying the $\WD$ property. \citet{AISTATS17_hanawal2017unsupervised} claimed that $\PWD$ is learnable. However, their definition allowed $\rho \ge 1$. As it turns out, permitting $\rho=1$ can prevent $\PWD$ from being learnable:

\begin{prop}\label{prop:twdbug}
	The set $\PWD'=\{ P \in \USS\,:\, \rho \ge 1 \}$ is not learnable.
\end{prop}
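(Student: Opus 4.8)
The plan is to exhibit two problem instances in $\PWD'$ that are statistically indistinguishable from the learner's feedback but have different optimal arms, so that no policy can have sub-linear regret on both. The natural candidates are instances sitting exactly at the boundary $\rho = 1$, where the cost difference $C_j - C_{i^\star}$ equals the disagreement probability $\Prob{Y^j \neq Y^{i^\star}}$. Since the excerpt's definition of learnability requires a single policy to achieve sub-linear regret on \emph{every} instance of the class, producing just one such pair suffices.

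\textbf{Construction.} First I would take $K = 2$ with cost vector $\bc = (0, c_2)$, so that $C_1 = 0$ and $C_2 = c_2$. The feedback the learner ever sees is $(Y^1_t, Y^2_t)$; it never sees $Y_t$. The key observation is that the \emph{observable} law is the joint distribution of $(Y^1, Y^2)$, whereas the identity of the optimal arm depends on the error rates $\gamma_1 = \Prob{Y^1 \neq Y}$ and $\gamma_2 = \Prob{Y^2 \neq Y}$, which involve the hidden $Y$. So I would fix a joint law for $(Y^1, Y^2)$ with $\Prob{Y^1 \neq Y^2} = c_2$ (this pins $\rho = 1$, hence membership in $\PWD'$ but not in $\PWD$), and then choose two different conditional laws for $Y$ given $(Y^1, Y^2)$ — call the resulting instances $P_a$ and $P_b$ — such that under $P_a$ the optimal arm is $1$ while under $P_b$ the optimal arm is $2$. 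Because $Y$ is hidden, $P_a$ and $P_b$ induce \emph{identical} distributions over every feedback sequence the learner can observe, regardless of the policy.

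\textbf{Feasibility of the two conditional laws.} This is the step that needs a small amount of care, and I expect it to be the main obstacle: I must check that one can genuinely flip which arm is optimal while holding the observable marginal fixed and keeping $\rho$ exactly $1$ in \emph{both} instances (so both lie in $\PWD'$). Writing $\gamma_i + \lambda_i C_i$ for the total cost, arm $1$ is optimal iff $\gamma_1 \le \gamma_2 + \lambda_2 c_2$ and arm $2$ is optimal (and preferred, by the risk-averse tie-break) iff $\gamma_2 + \lambda_2 c_2 \le \gamma_1$. Since $\gamma_1 - \gamma_2 = \Prob{Y^1 \neq Y, Y^2 = Y} - \Prob{Y^2 \neq Y, Y^1 = Y}$ can be made to take either sign by reallocating conditional probability of $Y$ on the two disagreement cells $\{Y^1 = 0, Y^2 = 1\}$ and $\{Y^1 = 1, Y^2 = 0\}$ — whose total observable mass is $c_2$ — one can push $\gamma_1 - \gamma_2$ anywhere in the interval $[-c_2, c_2]$ without touching the marginal of $(Y^1, Y^2)$. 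Choosing $\lambda_2 c_2 < c_2$ and then picking the two allocations so that $\gamma_1 - \gamma_2$ lands on opposite sides of $\lambda_2 c_2$ (strictly, to respect the tie-break) gives $P_a, P_b$ with distinct optimal arms. I would also verify the $\WD$-boundary condition $\rho = 1$ holds in each: it does automatically, since $\rho = (C_2 - C_1)/\Prob{Y^2 \neq Y^1} = c_2 / c_2 = 1$ depends only on the observable marginal and the costs, both of which are shared.

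\textbf{Conclusion.} With $P_a, P_b \in \PWD'$ indistinguishable under any policy but having optimal arms $1$ and $2$ respectively, a standard two-point argument finishes it: for any policy $\pi$ and any horizon $T$, the expected number of rounds $\pi$ plays arm $2$ is the same under $P_a$ and $P_b$ (the feedback laws coincide), so if that number is $o(T)$ then $\pi$ incurs linear regret on $P_b$, and if it is $\Omega(T)$ along a subsequence then $\pi$ incurs linear regret on $P_a$; either way $\limsup_T \Regret_T / T > 0$ on one of the instances. Hence no policy is sub-linear on all of $\PWD'$, so $\PWD'$ is not learnable. $\Box$
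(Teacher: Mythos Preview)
There is a genuine gap. Under the paper's standing convention $\lambda_i = 1$ (which is how the class $\PWD'$ and $\rho$ are interpreted throughout), your exact-indistinguishability construction cannot produce a $P_b$ on which arm~2 is \emph{strictly} optimal. Once you fix the observable marginal so that $\Prob{Y^1\neq Y^2}=C_2-C_1$, the paper's own \cref{lem:err_prob_contx} gives $\gamma_1-\gamma_2 = p_{12}-2\Prob{Y^1=Y,\,Y^2\neq Y}\le p_{12}=C_2-C_1$ for \emph{every} choice of the hidden conditional of $Y$. Hence $\gamma_1+C_1\le \gamma_2+C_2$ always, and arm~2 can be ``optimal'' only through the tie-break, with sub-optimality gap zero. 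Your two-point argument then collapses: a policy that always plays arm~1 has zero regret on $P_b$. Your patch of taking $\lambda_2<1$ decouples the optimality criterion from the $\rho$ definition, but once $\lambda$ is absorbed into the costs (as the paper does before stating all the WD results) the effective cost gap becomes $\lambda_2 c_2$ and $\rho=\lambda_2 c_2/c_2=\lambda_2<1$, so $P_a\notin\PWD'$.

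This is precisely why the paper's own proof does \emph{not} use perfectly indistinguishable instances. It takes two instances with \emph{different} observable disagreement probabilities, $\Prob{Y^1\neq Y^2}=1/4$ in $P$ and $1/4+\epsilon$ in $P'$, straddling the threshold $C_2-C_1=1/4$. On $P$ arm~1 is optimal with gap $1/4$; on $P'$ arm~2 is strictly optimal with gap $\epsilon$ (and $i^\star=2$ makes $\rho=+\infty$, so $P'\in\PWD'$). The argument then invokes the hypothesis-testing barrier: a sound rule must decide whether $p_{12}\le 1/4$, and for any policy that is sub-linear on $P$ one can pick $\epsilon$ small enough to force linear regret on $P'$. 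The construction and the soundness criterion are borrowed from Theorems~17 and~19 of \citet{AISTATS17_hanawal2017unsupervised}. Your idea can be salvaged by moving to this ``close but distinguishable'' pair rather than an identical-observable pair.
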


\begin{proof}
	Let $C_2-C_1=1/4$. Theorem~19 of  \citet{AISTATS17_hanawal2017unsupervised} constructs instances $P,P'\in \PWD'$ such that the optimal decision for $P$ is arm $1$, for $P'$ is arm $2$. The sub-optimality gap on instance $P$ is $1/4$, while on instance $P'$ is $\epsilon$, where $\epsilon\in [0,1]$ is a tunable parameter. At the same time $\Prob{Y^1\ne Y^2}=1/4$ in $P$ and $\Prob{Y^1\ne Y^2}=1/4+\epsilon$  in $P'$. Theorem~17 of  \citet{AISTATS17_hanawal2017unsupervised} implies that a sound algorithm must check $1/4=C_2-C_1 \ge \Prob{Y^1\ne Y^2}$. However, no finite amount of data is sufficient to decide this: In particular, one can show that if an algorithm on $P$ achieves sub-linear regret, then it must suffer linear regret on $P'$ for $\epsilon>0$ small enough. Hence, all algorithms will suffer linear regret on some instance in $\PWD'$.
\end{proof}

The following theorem is obtained directly from Theorem $14$ and Theorem $19$ in \cite{AISTATS17_hanawal2017unsupervised} after excluding the case $\rho=1$ in their proofs.  
\begin{thm}
	The set $\PWD$ is a maximal learnable set.
\end{thm}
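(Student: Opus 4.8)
The plan is to read ``maximal learnable set'' as the conjunction of two claims --- that $\PWD$ is learnable, and that no learnable set properly contains $\PWD$ --- and to discharge each by invoking the matching result of \citet{AISTATS17_hanawal2017unsupervised} with the degenerate boundary $\rho=1$ carved out. For the first claim I would cite Theorem~14 of \citet{AISTATS17_hanawal2017unsupervised}: it produces a policy whose regret on a weak-dominance instance is governed by how far $C_j - C_{i^\star}$ exceeds $\Prob{Y^j \ne Y^{i^\star}}$ for $j>i^\star$, i.e.\ by $\rho-1$; that bound is finite --- and the confidence-width comparison inside the proof conclusive --- exactly when $\rho>1$, which is the defining inequality of $\PWD$. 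So I would rerun their argument with the strict inequality throughout, flagging the one place where $\rho=1$ would stall it. The concrete UCB and Thompson-Sampling algorithms developed later in this chapter give an independent, self-contained witness that $\PWD$ is learnable.

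For the second claim, suppose $\mathcal{Q} \supsetneq \PWD$ were learnable, witnessed by a policy $\mathcal{A}$ with sub-linear regret on every instance in $\mathcal{Q}$, and pick $P \in \mathcal{Q}\setminus\PWD$, so that $\rho(P)\le 1$. The aim is to exhibit a companion $P' \in \PWD \subseteq \mathcal{Q}$ whose law of the observed feedback sequence under $\mathcal{A}$ is indistinguishable from (or arbitrarily close to) that of $P$, yet whose optimal arm differs from that of $P$ by a fixed positive gap; then $\mathcal{A}$ cannot drive its per-round regret to zero on both $P$ and $P'$, contradicting learnability of $\mathcal{Q}$. When $\rho(P)<1$ this is exactly what Theorem~19 of \citet{AISTATS17_hanawal2017unsupervised} supplies: for each small $\epsilon>0$ it builds an instance agreeing with $P$ on all observable feedback laws up to an $O(\epsilon)$ perturbation but on which some arm $j>i^\star$ is optimal with a fixed sub-optimality gap, and a short check shows the relocated instance obeys the strict $\WD$ inequality, so it lands in $\PWD$. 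The remaining case $\rho(P)=1$, not covered by their original proof, is precisely \cref{prop:twdbug}: its argument already constructs such an indistinguishable pair, one side having $\Prob{Y^1\ne Y^2}=C_2-C_1=1/4$ and the other a $\WD$ instance with optimal arm $2$.

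I expect the only real friction to be the $\rho=1$ bookkeeping and the cascade structure. For the former I would make explicit that the learnability bound genuinely blows up as $\rho\downarrow 1$, so insisting on strict inequality is not a cosmetic tightening but the exact threshold (necessary by \cref{prop:twdbug}), and that in the impossibility direction the confusing instance can always be taken inside $\PWD$ --- not merely inside $\USS$ --- which is what licenses the conclusion about supersets of $\PWD$ specifically. For the latter, indistinguishability must be argued at the level of the law of the revealed feedback trajectory rather than of the joint distribution $\bd$, since the set of arms whose feedback is observed is itself chosen by the policy; this is the standard change-of-measure lower-bound argument, and it is the same bookkeeping already carried out in the proof of \cref{prop:twdbug}.
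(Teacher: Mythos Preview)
Your proposal is correct and takes essentially the same approach as the paper. The paper's own proof is a one-line deferral --- ``obtained directly from Theorem~14 and Theorem~19 in \citet{AISTATS17_hanawal2017unsupervised} after excluding the case $\rho=1$ in their proofs'' --- and your write-up is a faithful unpacking of exactly that: Theorem~14 for learnability under the strict inequality $\rho>1$, Theorem~19 for the impossibility direction, with the $\rho=1$ boundary handled via \cref{prop:twdbug}. One small point to watch: \cref{prop:twdbug} as stated exhibits a \emph{specific} boundary instance rather than a companion for an arbitrary $P$ with $\rho(P)=1$, so when you invoke it in the maximality direction you are implicitly relying on the construction's adaptability to general $P$; this is standard, but worth saying explicitly if you expand the sketch.
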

Since the set of problems satisfying the $\WD$ property is maximally learnable, any relaxation of $\WD$ property makes the problem unlearnable. In the following equation, we use an alternative characterization of the $\WD$ property, given as
\eq{
	\label{def:Xi}
	\xi \doteq \min_{j>\istar}\left\{C_j - C_\istar - \Prob{Y^\istar \ne \Yj} \right\} > 0.
}

After multiplying LHS of \cref{equ:WDProp} by $\Prob{Y^j \neq Y^{i^\star}}$ and then subtracting both sides by $\Prob{Y^j \neq Y^{i^\star}}$, we get LHS of \cref{def:Xi}. Observe that $\rho>1$ if and only if $\xi>0$. The larger the value of $\xi$, `stronger' is the $\WD$ property, and easier to identify an optimal arm. We later characterize the regret upper bound of our algorithms in terms of $\xi$.

\subsection{Optimal Arm Selection}
Without loss of generality, we set $\lambda_i=1$ for all $i\in [K]$ as their value can be absorbed into the costs. Since $\istar = \max\big\{\arg\min\limits_{i \in [K]}\left(\gamma_i+ C_i \right)\big\}$, it must satisfy following equation:
\begin{subequations}
	\label{eq:cost_exp_err}
	\al{
		&\forall j<\istar \,:\, C_\istar - C_j \leq \gamma_j-\gamma_\istar \,, \label{eq:wd1}\\ 
		&\forall j>\istar \,:\, C_j - C_\istar > \gamma_\istar - \gamma_j \,. \label{eq:wd2}
	}
\end{subequations}

As the loss of an arm is not observed, the above equations can not lead to a sound arm selection criteria. We thus have to relate the unobservable quantities in terms of the quantities that can be observed. In our setup, we can compare the feedback of two arms, which can be used to estimate their disagreement probability. For notation convenience, we define $\pij \doteq \Prob{\Yi \ne \Yj}$. The value of $\pij$ can be estimated as it is observable. We use the following result from \cite{AISTATS17_hanawal2017unsupervised} that relates the differences in the unobserved error rates in terms of their observable disagreement probability.
\begin{prop}[Proposition 3 in \cite{AISTATS17_hanawal2017unsupervised}]
	\label{lem:err_prob_contx}
	For any two arms $i$ and $j$, $\gamma_{i} - \gamma_{j} = \pij - 2\Prob{\Yi = Y, \Yj \ne Y}$.
\end{prop}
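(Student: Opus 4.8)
The plan is to exploit that $Y$, $Y^i$, $Y^j$ are all $\{0,1\}$-valued and to reduce everything to the four-cell partition of the sample space according to whether each of $Y^i$ and $Y^j$ agrees with $Y$.

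First I would record the key binary-arithmetic observation: if $Y^i \neq Y$ and $Y^j \neq Y$, then since there is only one bit different from $Y$ we must have $Y^i = Y^j$. Consequently $\{Y^i \neq Y^j\} \subseteq \{Y^i \neq Y,\ Y^j = Y\} \cup \{Y^i = Y,\ Y^j \neq Y\}$, and conversely each of these two (disjoint) events is contained in $\{Y^i \neq Y^j\}$. This yields
\[
p_{ij} = \Prob{Y^i \neq Y,\ Y^j = Y} + \Prob{Y^i = Y,\ Y^j \neq Y}.
\]

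Second, I would expand $\gamma_i$ and $\gamma_j$ over the same partition. Splitting $\{Y^i \neq Y\}$ according to whether $Y^j = Y$ gives $\gamma_i = \Prob{Y^i \neq Y,\ Y^j = Y} + \Prob{Y^i \neq Y,\ Y^j \neq Y}$, and by symmetry $\gamma_j = \Prob{Y^i = Y,\ Y^j \neq Y} + \Prob{Y^i \neq Y,\ Y^j \neq Y}$. Subtracting cancels the common term, leaving
\[
\gamma_i - \gamma_j = \Prob{Y^i \neq Y,\ Y^j = Y} - \Prob{Y^i = Y,\ Y^j \neq Y}.
\]

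Finally I would substitute the first display into $p_{ij} - 2\Prob{Y^i = Y,\ Y^j \neq Y}$: the expression becomes $\Prob{Y^i \neq Y,\ Y^j = Y} - \Prob{Y^i = Y,\ Y^j \neq Y}$, which is exactly $\gamma_i - \gamma_j$ by the second display. There is essentially no hard step here; the only subtle point — and the only place the binary hypothesis is used — is the observation above, which is what kills the would-be fourth term $\Prob{Y^i \neq Y^j,\ Y^i \neq Y,\ Y^j \neq Y}$ in the decomposition of $p_{ij}$. Since the result is a one-line bookkeeping identity, I would present it directly rather than as a sketch.
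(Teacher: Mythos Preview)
Your proof is correct and uses essentially the same idea as the paper. The paper does not prove this non-contextual statement directly (it is cited from \cite{AISTATS17_hanawal2017unsupervised}), but it proves the contextual analogue (Lemma~\ref{lem:ContxErr_prob_contx}) by splitting $\gamma_i-\gamma_j$ according to whether $Y^i=Y^j$, cancelling the equal-feedback terms, and then adding/subtracting $\Prob{Y^i=Y,\,Y^i\neq Y^j}$ to reassemble $p_{ij}$; your four-cell partition by $(Y^i\stackrel{?}{=}Y,\,Y^j\stackrel{?}{=}Y)$ reaches the same identity a bit more directly, with the identical binary observation doing the work in both cases.
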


\noindent
Now, using \cref{lem:err_prob_contx}, we can replace  \cref{eq:wd1} by
\eq{
	\label{eq:selectDisProbLow}
	\forall j<\istar \,:\, C_{\istar} - C_j \leq  \pjis,
}
which only has observable quantities. For $j>\istar$, we can replace \cref{eq:wd2} by using the $\WD$ property as follows:
\eq{
	\label{eq:selectDisProbHigh}
	\forall j>\istar \,:\, C_j - C_{\istar} >  \pijs.
}

\noindent
Using \cref{eq:selectDisProbLow} and \cref{eq:selectDisProbHigh}, we define the selection criteria based on the following sets:
\begin{align}
&\mathcal{B}^l= \Big\{i: \forall j<i, C_i - C_j \leq \Prob{\Yi \ne \Yj}\Big\} \cup \{1\} \label{set:Bl}, \\
&\mathcal{B}^h = \Big\{i: \forall j>i, C_j - C_i > \Prob{\Yi \ne \Yj} \Big\} \cup \{K\} \label{set:Bh}.
\end{align}

\noindent
Our next result gives the optimal arm for a problem instance.
\begin{restatable}{lem}{SetB}
	\label{lem:B}
	Let $P \in \PWD$. Let $\mathcal{B} \doteq \mathcal{B}^l \cap \mathcal{B}^h$. Then $\mathcal{B}$ contains the optimal arm.  
\end{restatable}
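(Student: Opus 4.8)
The plan is to show that the optimal arm $\istar$ lies in both $\mathcal{B}^l$ and $\mathcal{B}^h$; since $\mathcal{B} = \mathcal{B}^l \cap \mathcal{B}^h$, this gives $\istar \in \mathcal{B}$ and hence $\mathcal{B}$ is nonempty and contains the optimal arm. First I would recall that $\istar$ is by definition a (risk-averse) minimizer of $\gamma_i + C_i$ over $[K]$, which yields the two chains of inequalities \cref{eq:wd1} and \cref{eq:wd2}: for $j < \istar$, $C_\istar - C_j \le \gamma_j - \gamma_\istar$, and for $j > \istar$, $C_j - C_\istar > \gamma_\istar - \gamma_j$. These involve the unobservable error rates $\gamma_i$, so the work is to replace them by disagreement probabilities $p_{ij} = \Prob{\Yi \ne \Yj}$.

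For membership in $\mathcal{B}^l$: fix $j < \istar$. By \cref{lem:err_prob_contx}, $\gamma_j - \gamma_\istar = p_{j\istar} - 2\Prob{\Yj = Y, \Yis \ne Y} \le p_{j\istar}$ since the subtracted probability is nonnegative. Combining with \cref{eq:wd1} gives $C_\istar - C_j \le \gamma_j - \gamma_\istar \le p_{j\istar} = \Prob{\Yis \ne \Yj}$, which is exactly the defining condition of $\mathcal{B}^l$ (and the case $j$ with $\istar = 1$ is covered by the $\{1\}$ in the union). For membership in $\mathcal{B}^h$: here I would invoke the $\WD$ hypothesis $P \in \PWD$ directly. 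By \cref{def:Xi}, for every $j > \istar$ we have $C_j - C_\istar - \Prob{\Yis \ne \Yj} \ge \xi > 0$, i.e. $C_j - C_\istar > \Prob{\Yis \ne \Yj}$, which is precisely the defining condition of $\mathcal{B}^h$ (the $\{K\}$ in the union handles $\istar = K$). Hence $\istar \in \mathcal{B}^l \cap \mathcal{B}^h = \mathcal{B}$.

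The main subtlety — and the reason the statement needs $P \in \PWD$ rather than merely the definition of $\istar$ — is the upper set $\mathcal{B}^h$: the raw optimality inequality \cref{eq:wd2} only gives $C_j - C_\istar > \gamma_\istar - \gamma_j$, and since $\gamma_\istar - \gamma_j$ can be negative this does not by itself control the disagreement probability $p_{\istar j}$; it is exactly the $\WD$ property that supplies the needed bound $C_j - C_\istar > p_{\istar j}$. By contrast the lower set $\mathcal{B}^l$ goes through unconditionally via \cref{lem:err_prob_contx}. One should also note carefully why the $\max$ (risk-averse) choice of $\istar$ among minimizers is consistent with the strict inequality in \cref{eq:wd2}/\cref{def:Xi}: any arm $j > \istar$ that tied would have been selected instead, so ties to the right are excluded, making $C_j - C_\istar > \gamma_\istar - \gamma_j$ strict; this is the only place the tie-breaking rule enters, and it is what makes $\istar \in \mathcal{B}^h$ rather than just $\istar$ being "almost" in it. I would close by remarking that $\mathcal{B}$ need not be a singleton, but every element's status is not claimed here — only that $\istar$ is among them.
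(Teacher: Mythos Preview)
Your argument is correct for the stated lemma: you verify directly that $\istar$ satisfies the defining inequality of $\mathcal{B}^l$ via \cref{lem:err_prob_contx} and \cref{eq:wd1}, and of $\mathcal{B}^h$ via the $\WD$ property, hence $\istar\in\mathcal{B}$. This is the clean way to establish exactly what is claimed.

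The paper's proof, however, runs the \emph{complementary} direction and thereby obtains a stronger conclusion: it shows that any $j>\istar$ fails the $\mathcal{B}^l$-condition at the witness $\istar$ (because $C_j-C_{\istar}>\pijs$ by $\WD$), and any $j<\istar$ fails the $\mathcal{B}^h$-condition at the witness $\istar$ (because $C_{\istar}-C_j\le \gamma_j-\gamma_{\istar}\le \pjis$). Combined with your membership argument this gives $\mathcal{B}=\{\istar\}$, a singleton. Your approach buys brevity and is exactly tailored to the lemma's wording; the paper's exclusion argument buys uniqueness, which is what justifies using $\mathcal{B}$ (or its empirical analogue) as a \emph{selection rule} later in the algorithm.

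This also means your closing remark is mistaken: under $\WD$, $\mathcal{B}$ \emph{is} necessarily the singleton $\{\istar\}$, by the exclusion argument above. So rather than hedging that ``$\mathcal{B}$ need not be a singleton,'' you could note that the same ingredients you already used (the bound $\gamma_j-\gamma_{\istar}\le \pjis$ and the $\WD$ inequality) immediately give the reverse inclusion $\mathcal{B}\subseteq\{\istar\}$.
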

\noindent
The detailed proof of \cref{lem:B} and all other missing proofs of this chapter appear in \cref{sec:uss_appendix}.

\begin{rem}
	The $\WD$ property holds trivially for the problem instances that satisfy $\SD$ property as the difference of mean losses is the same as the disagreement probability between two arms due to $\Prob{\Yti = \Yt, \Ytj \ne \Yt} = 0$ for $j>i$. Also, by definition, the $\WD$ property holds for all problem instances where the last arm of the cascade is an optimal arm.
\end{rem}

\subsection{Unknown Ordering of Arms}
\label{ssec:uss_scd}
The error rates of arms are unknown in the USS setup and cannot be estimated due to unavailability of ground-truth. Thus, it may happen that we do not know whether error rate of arms in the cascade is decreasing or not. In this section, we remove the requirement that arms are arranged in the decreasing order of their error rates and allow them to be arranged in an arbitrary order that is unknown. We denote the set of USS instances with unknown ordering of arms by their error-rates as $\USS^\prime$. The rest of the setup remains same. We show that even with this relaxation, $\WD$ property defined earlier, continues to characterize the learnability of the USS problem. We begin with the following observation.
\begin{lem}
	\label{lem:ErrorRateOrder}
	Let $i^\star$ be an optimal arm. Then, error rate of any arm $j<i^\star$ is higher than that of $i^\star$.
\end{lem}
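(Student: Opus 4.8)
The plan is to read the claim straight off the definition of the optimal arm together with the monotonicity of the cumulative costs along the cascade; the relaxation to an arbitrary (unknown) ordering of the arms by error rate is irrelevant here, since the definition of $i^\star$ in \cref{equ:optimalArm} never referred to that ordering.

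First I would recall that, after setting $\lambda_i=1$, $i^\star$ is a minimizer of $i\mapsto \gamma_i+C_i$ over $[K]$ — the risk-averse tie-break selects the largest-index minimizer, but for this lemma it is enough that $i^\star$ attains the minimum. Hence $\gamma_{i^\star}+C_{i^\star}\le \gamma_j+C_j$ for every $j\in[K]$, which rearranges to $\gamma_j-\gamma_{i^\star}\ge C_{i^\star}-C_j$; for $j<i^\star$ this is exactly \cref{eq:wd1}. Then I would invoke the cascade structure $C_i=c_1+\cdots+c_i$: with strictly positive per-arm costs the cumulative costs are strictly increasing, so $C_{i^\star}-C_j=c_{j+1}+\cdots+c_{i^\star}>0$ whenever $j<i^\star$. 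Combining the two displays gives $\gamma_j-\gamma_{i^\star}\ge C_{i^\star}-C_j>0$, i.e. $\gamma_j>\gamma_{i^\star}$, which is the assertion. Equivalently one can argue by contradiction: if $\gamma_j\le\gamma_{i^\star}$ for some $j<i^\star$, then $\gamma_j+C_j< \gamma_{i^\star}+C_{i^\star}$ because $C_j<C_{i^\star}$, contradicting the optimality of $i^\star$.

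I do not expect any genuine obstacle — this is really an observation. The only two points needing care are (i) checking that the optimality inequality $\gamma_{i^\star}+C_{i^\star}\le\gamma_j+C_j$ (equivalently \cref{eq:wd1}) follows purely from $i^\star$ minimizing the total cost and does \emph{not} covertly use the arms being sorted by error rate, so that it remains valid in the unknown-ordering setting of this subsection; and (ii) keeping the final inequality strict, which is exactly the step where strict monotonicity $C_j<C_{i^\star}$ of the cascade costs is used (if some per-arm costs were allowed to vanish one would only get $\gamma_j\ge\gamma_{i^\star}$).
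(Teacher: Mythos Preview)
Your proposal is correct and is essentially identical to the paper's proof: both use optimality of $i^\star$ to get $\gamma_j-\gamma_{i^\star}\ge C_{i^\star}-C_j$ and then the monotonicity of cumulative costs for $j<i^\star$. The only difference is that the paper, working with the assumption $c_i\ge 0$, writes the conclusion as the non-strict $\gamma_j\ge\gamma_{i^\star}$, which is precisely the caveat you flagged in point (ii).
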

\begin{proof}	
	We have $\gamma_j -\gamma_{i^\star} \geq C_{i^\star}- C_i$ for all $j \in [K]$. For $j < i^\star$, $C_{i^\star}- C_j \geq 0$ as costs are increasing with arms. Hence  $\gamma_j\geq\gamma_{i^\star}$.
\end{proof}

\noindent
The following corollary directly follows from \cref{lem:err_prob_contx}.

\begin{cor}
	For any $i,j \in [K]$, $\max\{0, \gamma_{j}-\gamma_{i}\}\leq \Prob{Y^i\neq Y^j}$.
\end{cor}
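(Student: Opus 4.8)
The statement is an immediate consequence of Proposition~\ref{lem:err_prob_contx}. The plan is to first observe that the quantity $\Prob{Y^i\neq Y^j}$ is symmetric in its two arguments, i.e. $p_{ij}=p_{ji}$, so Proposition~\ref{lem:err_prob_contx} may be applied with the roles of $i$ and $j$ interchanged. Doing so yields the identity
\eq{
	\gamma_{j}-\gamma_{i} = \Prob{Y^i\neq Y^j} - 2\,\Prob{Y^j = Y,\ Y^i \ne Y}.
}
The second term on the right-hand side is a probability, hence nonnegative, so dropping it gives $\gamma_{j}-\gamma_{i} \le \Prob{Y^i\neq Y^j}$. Since $\Prob{Y^i\neq Y^j}\ge 0$ trivially, taking the larger of $0$ and $\gamma_j-\gamma_i$ still keeps the bound, which is exactly the claim.

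The only point worth spelling out is the symmetry $p_{ij}=p_{ji}$: this is true because $\{Y^i\neq Y^j\}$ and $\{Y^j\neq Y^i\}$ denote the same event, so no work is needed beyond an appeal to the definition $p_{ij}\doteq\Prob{Y^i\ne Y^j}$. Everything else is a one-line rearrangement of an already-proved identity, so I do not anticipate any obstacle here; the corollary is essentially a bookkeeping consequence of Proposition~\ref{lem:err_prob_contx}.

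I would also remark, for later use, that running the same argument without swapping $i$ and $j$ gives $\gamma_i-\gamma_j\le \Prob{Y^i\ne Y^j}$, so in fact $|\gamma_i-\gamma_j|\le \Prob{Y^i\ne Y^j}$: the observable disagreement probability between two arms upper-bounds the (unobservable) gap in their error rates in either direction. This two-sided version is the content that makes the set $\mathcal{B}^l\cap\mathcal{B}^h$ well-behaved, and the stated corollary is just the one-sided half of it that is needed when the ordering of arms by error rate is not assumed.
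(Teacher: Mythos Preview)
Your proof is correct and matches the paper's approach: the paper simply states that the corollary ``directly follows from \cref{lem:err_prob_contx},'' and your argument---swap $i,j$ in Proposition~\ref{lem:err_prob_contx}, drop the nonnegative probability term, and combine with $\Prob{Y^i\neq Y^j}\ge 0$---is exactly the intended one-line derivation.
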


The following two propositions provide the conditions on arm costs that allows comparison of their total costs based on disagreement probabilities.

\begin{restatable}{prop}{PropCostRangeHigh}
	\label{prop:CostRange1}
	Let $i<j$. Assume 
	\begin{equation}
	\label{eqn:CostRange1}
	C_j -C_i \notin \left (\max\{0, \gamma_i-\gamma_j\}, \Prob{Y^i \neq Y^j}\right].
	\end{equation}
	Then, $C_j-C_i >  \max \{0, \gamma_i-\gamma_j \}$ iff $C_j-C_i > \Prob{Y^j \neq Y^i}$.
\end{restatable}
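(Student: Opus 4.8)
The plan is to prove both directions of the ``iff'' by exploiting the two-sided bound on $\Prob{Y^i \neq Y^j}$ that follows from \cref{lem:err_prob_contx} (equivalently, from the corollary just above the statement): for any $i,j$ we have $\max\{0,\gamma_i-\gamma_j\} \le \Prob{Y^i \neq Y^j}$. The extra hypothesis \cref{eqn:CostRange1} says precisely that the quantity $C_j - C_i$ avoids the half-open interval $(\max\{0,\gamma_i-\gamma_j\},\,\Prob{Y^i\neq Y^j}]$, so it cannot sit strictly between the lower bound $\max\{0,\gamma_i-\gamma_j\}$ and the upper bound $\Prob{Y^i\neq Y^j}$.

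First I would prove the ``$\Leftarrow$'' direction, which needs no hypothesis at all: if $C_j - C_i > \Prob{Y^i \neq Y^j}$, then since $\Prob{Y^i\neq Y^j}\ge\max\{0,\gamma_i-\gamma_j\}$, chaining the two inequalities immediately gives $C_j - C_i > \max\{0,\gamma_i-\gamma_j\}$. For the ``$\Rightarrow$'' direction, suppose $C_j - C_i > \max\{0,\gamma_i-\gamma_j\}$. I want to conclude $C_j - C_i > \Prob{Y^i\neq Y^j}$. Assume for contradiction that $C_j - C_i \le \Prob{Y^i\neq Y^j}$. Combined with $C_j - C_i > \max\{0,\gamma_i-\gamma_j\}$, this places $C_j - C_i$ inside the interval $\bigl(\max\{0,\gamma_i-\gamma_j\},\,\Prob{Y^i\neq Y^j}\bigr]$, directly contradicting \cref{eqn:CostRange1}. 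Hence $C_j - C_i > \Prob{Y^i\neq Y^j}$, as desired.

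There is essentially no obstacle here — the statement is a short logical manipulation once one has the key inequality $\max\{0,\gamma_i-\gamma_j\} \le \Prob{Y^i\neq Y^j}$ in hand, which is exactly the content of the corollary preceding the proposition. The only thing to be slightly careful about is the open/closed endpoints of the excluded interval: the hypothesis excludes the value $\Prob{Y^i\neq Y^j}$ itself (the interval is closed on the right), which is what lets the contrapositive of the $\Rightarrow$ direction go through cleanly when $C_j-C_i$ equals $\Prob{Y^i\neq Y^j}$; and it does not exclude $\max\{0,\gamma_i-\gamma_j\}$ itself (open on the left), which is consistent with the strict inequality ``$C_j-C_i > \max\{0,\gamma_i-\gamma_j\}$'' appearing in the claim. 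So the interval's endpoint conventions match the inequality directions exactly, and the proof is just the two implications above.
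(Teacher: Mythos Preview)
Your proposal is correct and matches the paper's own proof essentially line for line: the paper also derives the $\Rightarrow$ direction directly from the excluded-interval hypothesis and the $\Leftarrow$ direction from the inequality $\Prob{Y^i\neq Y^j}\ge\max\{0,\gamma_i-\gamma_j\}$. Your write-up is slightly more explicit (spelling out the contradiction and the endpoint conventions), but the underlying argument is identical.
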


\begin{restatable}{prop}{PropCostRangeLow}
	\label{prop:CostRange2}
	Let $i>j$. Assume 
	\begin{equation}
	\label{eqn:CostRange2}
	C_i -C_j \notin \left (\max\{0, \gamma_j-\gamma_i\}, \Prob{Y^i \neq Y^j}\right ].
	\end{equation}
	Then, $C_i-C_j \leq  \max\{0, \gamma_j-\gamma_i\} $ iff $C_j-C_i \leq \Prob{Y^i \neq Y^j}$.
\end{restatable}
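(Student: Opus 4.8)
The plan is to derive Proposition~\ref{prop:CostRange2} from the corollary that follows Proposition~\ref{lem:err_prob_contx}, namely that $\max\{0,\gamma_j-\gamma_i\}\le \Prob{Y^i\ne Y^j}$ for every pair of arms. Combined with the standing hypothesis that $C_i-C_j$ avoids the half-open interval $\bigl(\max\{0,\gamma_j-\gamma_i\},\,\Prob{Y^i\ne Y^j}\bigr]$, this pins the two inequalities $C_i-C_j\le\max\{0,\gamma_j-\gamma_i\}$ (unobservable) and $C_i-C_j\le\Prob{Y^i\ne Y^j}$ (observable) to the same truth value. Concretely I would argue the two implications separately.

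For the ``only if'' direction, assume $C_i-C_j\le\max\{0,\gamma_j-\gamma_i\}$. The corollary gives $\max\{0,\gamma_j-\gamma_i\}\le\Prob{Y^i\ne Y^j}$, so chaining the two inequalities yields $C_i-C_j\le\Prob{Y^i\ne Y^j}$ immediately; note this direction does not even use the interval hypothesis.

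For the ``if'' direction, assume $C_i-C_j\le\Prob{Y^i\ne Y^j}$ and suppose for contradiction that $C_i-C_j>\max\{0,\gamma_j-\gamma_i\}$. Then $C_i-C_j$ lies in $\bigl(\max\{0,\gamma_j-\gamma_i\},\,\Prob{Y^i\ne Y^j}\bigr]$, contradicting \cref{eqn:CostRange2}. Hence $C_i-C_j\le\max\{0,\gamma_j-\gamma_i\}$, which completes the equivalence. (The proof of Proposition~\ref{prop:CostRange1} is the mirror image: there one wants the strict inequality $C_j-C_i>\max\{0,\gamma_i-\gamma_j\}$, and the same two steps --- a chain through the corollary in one direction, an interval-membership contradiction in the other --- apply with $>$ replacing $\le$.)

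There is no real obstacle here: once the corollary is in hand the argument is a short logical manipulation, and the only thing to be careful about is the orientation of the inequalities and the half-open nature of the excluded interval (the right endpoint $\Prob{Y^i\ne Y^j}$ belongs to the interval, which is precisely why $C_i-C_j\le\Prob{Y^i\ne Y^j}$ together with $C_i-C_j>\max\{0,\gamma_j-\gamma_i\}$ yields a contradiction rather than a boundary case). The conceptual content lives entirely in Proposition~\ref{lem:err_prob_contx}; Propositions~\ref{prop:CostRange1}--\ref{prop:CostRange2} simply repackage it into a test on observable disagreement probabilities that the learner can actually run.
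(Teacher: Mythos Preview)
Your proof is correct and follows essentially the same route as the paper's: one direction chains through the inequality $\max\{0,\gamma_j-\gamma_i\}\le\Prob{Y^i\ne Y^j}$, and the other uses the excluded-interval hypothesis to rule out the remaining possibility. The paper's version is terser (it states the second direction in one line without framing it as a contradiction), but the logical content is identical.
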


From \cref{lem:ErrorRateOrder}, for any $j < i^*$ we have $\max\{0, \gamma_j -\gamma_{i^\star}\}=\gamma_j -\gamma_{i^\star}$. \cref{prop:CostRange1} and \cref{prop:CostRange2} then suggests that the value of $\Prob{Y^i \neq Y^j}$ are sufficient to select the optimal arm if the arms costs satisfy (\cref{eqn:CostRange1}) for all $j > i^\star$ and (\cref{eqn:CostRange2}) for all $j < i^\star$. Since the values of $\Prob{Y^i \neq Y^j}$ can be estimated for all $i,j \in [K],$ we can establish the following result.

\begin{restatable}{prop}{PropWDUnorder}
	\label{prop:WD2} 
	Let  $i^\star$ be an optimal arm. Any problem instance $P \in \USS^\prime$  is learnable if
	\begin{equation*}
	\forall\; j > i^\star \;\; C_j -C_{i^\star} \notin \left (\max\{0, \gamma_{i^\star}-\gamma_j\}, \Prob{Y^i \neq Y^j}\right ].
	\end{equation*}
\end{restatable}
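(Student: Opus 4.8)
The plan is to reduce the unordered problem to the observable characterization of the optimal arm already used in the ordered $\WD$ setup (the sets $\mathcal{B}^l,\mathcal{B}^h$ of \cref{set:Bl,set:Bh} and \cref{lem:B}), and then to reuse the UCB/Thompson-sampling analysis of \cref{sec:uss_ucb,sec:uss_ts}. The only genuinely new content is showing that, under the stated non-degeneracy condition on the costs, the (unobservable) optimality inequalities \cref{eq:wd1,eq:wd2} for $i^\star$ can be rewritten purely in terms of the disagreement probabilities $p_{i^\star j}=\Prob{Y^{i^\star}\ne Y^j}$, which are estimable from data.

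First I would treat the arms $j<i^\star$. Optimality \cref{eq:wd1} gives $C_{i^\star}-C_j\le\gamma_j-\gamma_{i^\star}$, and \cref{lem:ErrorRateOrder} gives $\gamma_j\ge\gamma_{i^\star}$, so $C_{i^\star}-C_j\le\max\{0,\gamma_j-\gamma_{i^\star}\}$; hence $C_{i^\star}-C_j$ sits at or below the left endpoint of the interval $\big(\max\{0,\gamma_j-\gamma_{i^\star}\},p_{i^\star j}\big]$ and the hypothesis of \cref{prop:CostRange2} holds automatically. (This is exactly why the proposition needs to impose a condition only for $j>i^\star$.) \cref{prop:CostRange2} then converts the optimality inequality into $C_{i^\star}-C_j\le p_{i^\star j}$, i.e.\ the defining inequality of $\mathcal{B}^l$.

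Next, for $j>i^\star$, \cref{eq:wd2} gives $C_j-C_{i^\star}>\gamma_{i^\star}-\gamma_j$, which (since costs are nondecreasing) we want to read as $C_j-C_{i^\star}>\max\{0,\gamma_{i^\star}-\gamma_j\}$; the assumed condition $C_j-C_{i^\star}\notin\big(\max\{0,\gamma_{i^\star}-\gamma_j\},p_{i^\star j}\big]$ is precisely the hypothesis of \cref{prop:CostRange1}, which then yields $C_j-C_{i^\star}>p_{i^\star j}$, the defining inequality of $\mathcal{B}^h$. Combining the two, $i^\star\in\mathcal{B}=\mathcal{B}^l\cap\mathcal{B}^h$ with all constraints expressed through estimable quantities. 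I would then run the natural empirical algorithm: maintain confidence intervals around the empirical disagreement probabilities $\hat p_{ij}(t)$, form the empirical analogues $\hat{\mathcal{B}}^l_t,\hat{\mathcal{B}}^h_t$, and select from them exactly as in the ordered case; the regret analysis of \cref{sec:uss_ucb} (and \cref{sec:uss_ts}) then applies essentially verbatim, with the gap $\xi$ of \cref{def:Xi} again controlling the rate, giving sub-linear regret and hence learnability of $P$.

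The main obstacle I anticipate is the careful bookkeeping in the two applications of \cref{prop:CostRange1,prop:CostRange2}: one must verify the interval hypotheses exactly where they are needed, keep track of the $\max\{0,\cdot\}$ terms (which behave differently below and above $i^\star$, since \cref{lem:ErrorRateOrder} only pins down the sign of $\gamma_j-\gamma_{i^\star}$ for $j<i^\star$), and handle degenerate cases -- e.g.\ equality in \cref{eq:wd1}, or $C_j=C_{i^\star}$ for some $j>i^\star$ -- which may require the mild additional assumption of strictly positive costs. A secondary point is to argue, as in \cref{lem:B}, that membership in $\mathcal{B}$ (together with the tie-breaking convention) pins down an optimal arm, so that the selection rule -- and hence the imported regret bound -- carries over unchanged.
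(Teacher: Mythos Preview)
Your proposal is correct and follows essentially the same route as the paper: use \cref{lem:ErrorRateOrder} together with \cref{eq:wd1} to show that the interval hypothesis of \cref{prop:CostRange2} is automatically met for $j<i^\star$, invoke the stated assumption and \cref{prop:CostRange1} for $j>i^\star$, and conclude that the disagreement probabilities can stand in for the error-rate differences when testing optimality. The paper's own proof stops at exactly this point (``we can use $\Prob{Y^i\neq Y^j}$ as a proxy\ldots''), so your additional step of explicitly locating $i^\star$ in $\mathcal{B}^l\cap\mathcal{B}^h$ and importing the \ref{alg:USS_WD}/\ref{alg:TS_USS} analysis is a faithful (and slightly more detailed) completion of the same argument.
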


Notice that  for $j > i^\star$, $C_j- C_i^\star\geq 0$ and $C_j-C_{i^\star}\geq \gamma_{i^\star}-\gamma_j $. Hence, the learnability condition reduces to $\forall\; j >i^\star, C_j-C_{i^\star} > \Pr\{Y^{i^\star}\neq Y^j\}$, i.e., same as the WD condition. Hence, we have the following result.

\begin{thm}
	The set $\{P \in \USS^\prime: \rho(\theta)>1\} $ is learnable.
\end{thm}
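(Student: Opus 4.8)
The plan is to obtain the theorem as a direct corollary of \cref{prop:WD2}: I would show that the weak dominance inequality $\rho(\theta)>1$ forces the hypothesis of \cref{prop:WD2} to hold. First recall, from the discussion around \cref{def:Xi}, that $\rho>1$ is equivalent to $\xi>0$, i.e. to the family of per-index inequalities
\eq{
	C_j - C_\istar > \Prob{\Yis \ne \Yj} \qquad \text{for every } j>\istar .
}
Note also that in $\USS^\prime$ the cost ordering is untouched, so $C_j=c_1+\cdots+c_j$ is non-decreasing and $C_j-C_\istar\ge 0$ for $j>\istar$; and since $\istar$ is optimal, \cref{eq:wd2} gives $C_j-C_\istar>\gamma_\istar-\gamma_j$, so $C_j-C_\istar > \max\{0,\gamma_\istar-\gamma_j\}$ whenever $j>\istar$.

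The next step is the one-line implication: the interval $\big(\max\{0,\gamma_\istar-\gamma_j\},\,\Prob{\Yis\ne\Yj}\big]$ appearing in \cref{prop:WD2} has right endpoint $\Prob{\Yis\ne\Yj}$, and the weak dominance inequality says $C_j-C_\istar$ strictly exceeds this endpoint; hence $C_j-C_\istar$ lies outside that interval for every $j>\istar$. That is precisely the premise of \cref{prop:WD2}, which then supplies a policy with sub-linear regret, so $\{P\in\USS^\prime:\rho(\theta)>1\}$ is learnable. (The preceding remark in the text records the converse reduction — that, given $C_j-C_\istar\ge 0$ and $C_j-C_\istar\ge\gamma_\istar-\gamma_j$, the premise of \cref{prop:WD2} is in turn no stronger than the $\WD$ condition — which is why $\WD$ remains the right learnability criterion after dropping the ordering, but that direction is not needed for the present statement.)

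I expect essentially no remaining obstacle: the real work has already been front-loaded into \cref{prop:CostRange1,prop:CostRange2} and \cref{lem:B}, which together show that the comparisons $C_j-C_\istar$ versus $\max\{0,\gamma_\istar-\gamma_j\}$ that characterise $\istar$ via \cref{eq:cost_exp_err} can be carried out using only the observable disagreement probabilities $\Prob{\Yi\ne\Yj}$, and hence learned. The two points I would be careful to check in the write-up are (i) the equivalence $\rho>1\Leftrightarrow\xi>0$ and that it unpacks as the per-index inequality above rather than a single aggregate bound, and (ii) that \cref{lem:ErrorRateOrder} together with cost monotonicity are genuinely available in $\USS^\prime$ — where only the error-rate ordering, not the cost ordering, has been relaxed — so that the sign facts used above are legitimate.
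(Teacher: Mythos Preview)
Your proposal is correct and follows essentially the same approach as the paper: verify the hypothesis of \cref{prop:WD2} by combining cost monotonicity and \cref{eq:wd2} (giving $C_j-C_{\istar}>\max\{0,\gamma_{\istar}-\gamma_j\}$) with the $\WD$ inequality (forcing $C_j-C_{\istar}$ past the right endpoint of the excluded interval), and then invoke \cref{prop:WD2}. Your write-up is in fact more explicit than the paper's, which leaves the appeal to \cref{prop:WD2} implicit in the surrounding text; note also that \cref{lem:ErrorRateOrder} is not actually needed for this direction, so your check (ii) can be trimmed to cost monotonicity alone.
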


\begin{proof}
	Using equation \eqref{eq:wd2} and the fact that the costs of using an arm is increasing in the cascade, we have $C_j - C_{i^\star} > \max\{0, \gamma_{i^\star}- \gamma_{j}\}$. The max function is used because arbitrary ordering of arms by error-rates can lead to $\gamma_{i^\star}- \gamma_{j}$ being negative whereas $C_j - C_{i^\star}$ is a non-negative quantity.  Therefore, 
	\begin{align}
		\forall j > i^\star,\; C_j - C_{i^\star}  > \Prob{\Yis \ne \Yj}
	\end{align}
	It implies that $\WD$ property holds for problem instance even with the arbitrary ordering of arms by their error-rates. This relationship ensures learnability of such USS problems.
\end{proof}

\section{Upper Confidence Bound based Algorithm for USS problem}
\label{sec:uss_ucb}

In bandit problems, the upper confidence bound (UCB) \citep{ML02_auer2002finite, COLT11_garivier2011kl} is highly effective for dealing with the trade-off between exploration and exploitation. Using UCB idea, we develop an algorithm, named \ref{alg:USS_WD}, that utilizes the sets \eqref{set:Bl} and \eqref{set:Bh} and looks for an index that belongs to both.  Since the value of disagreement probabilities, $\left(\pij \doteq \Prob{\Yi \ne \Yj}\right)$ are unknown (but fixed and observable), they are replaced by their optimistic empirical estimates at round $t$, denoted by $\hpij + \Psi_{ij}(t)$, where $\hpij$ is empirical estimate of $\pij$ and $\Psi_{ij}(t)$ is the confidence term associated with $\hpij$ as in UCB algorithm. The new sets for selection criteria are defined as follows:
\begin{subequations}
	\label{set:eoptimal}
	\begin{align}
		&\mathcal{\hat{B}}_t^l= \{i: \forall j<i, C_{i}-C_{j} \leq \hpji + \Psi_{ji}(t)\} \cup \{1\}, \label{set:eBl} \\
		&\mathcal{\hat{B}}_t^h=\{i: \forall j>i;C_{j} -C_{i} > \hpij + \Psi_{ij}(t)\} \cup \{K\}. \label{set:eBh}
	\end{align}
\end{subequations}
From the definition, it is easy to verify that $\hpij = \hpji$ and $\Psi_{ij}(t) = \Psi_{ji}(t)$ for any $(i,j)$ pair. Therefore, it is enough for algorithm to only keep track of $\hpij$ and $\Psi_{ji}(t)$ for $i<j$. 
\begin{rem}
	\label{rem:DecisionSet}
	It might be tempting to use lower confidence, i.e., $\hpij - \Psi_{ij}(t)$ term instead of the upper confidence term in \eqref{set:eBh}.  However, such a change can make the algorithm converge to a sub-optimal arm. A detailed discussion is given in \cref{sec:uss_appendix}.
\end{rem}

\subsection{Algorithm: \ref{alg:USS_WD}}
The pseudo code of Algorithm is given in \ref{alg:USS_WD} and it works as follows. It takes $\alpha$ as an input that trades-off between exploration and exploitation. In the first round, it selects arm $K$ and  initializes the value of number of comparisons and counter of disagreements for each pair $(i,j), i<j$, denoted $\mathcal{N}_{ij}(1)$ and $\mathcal{D}_{ij}(1)$, respectively. In each subsequent round, the algorithm computes estimate for the disagreement probability $(\hpij)$  and the associated confidence $(\Psi_{ij}(t))$. Then the values of $\hpij$ and $\Psi_{ij}(t)$ are used for computing sets $\mathcal{B}_t^l$ and $\mathcal{B}_t^h$ which are then used to select the arm. Specifically, the algorithm selects a arm $I_t$ that satisfies the conditions given in \eqref{set:eBl} and \eqref{set:eBh}. 
\begin{center}
	\begin{algorithm}[!ht]
		\renewcommand{\thealgorithm}{USS-UCB}
		\floatname{algorithm}{}
		\caption{UCB based Algorithm for \textbf{USS} under \textbf{WD} property}
		\label{alg:USS_WD}
		\begin{algorithmic}[1]
			\Statex \hspace{-0.44cm}\textbf{Input:} $\alpha>0.5$
			\State Select arm $I_1 = K$ and observe $Y^1_1,\dots,Y^{I_1}_1$
			\State Set $\mathcal{D}_{ij}(1) \leftarrow \one{\Yi_1 \ne \Yj_1}, \;\mathcal{N}_{ij}(1) \leftarrow 1\;\; \forall i< j \le I_1$
			\For{$t=2,3,...$}
				\State $\hpij \leftarrow \frac{\Dijm}{\Nijm}\;\; \forall i< j \le K$
				\State $\Psi_{ij}(t) \leftarrow \sqrt{\frac{\alpha\log f(t)}{\Nijm}}\;\; \forall i< j \le K$
				\State Compute $\mathcal{\hat{B}}_t^l$ and $\mathcal{\hat{B}}_t^h$ as given in \eqref{set:eBl} and \eqref{set:eBh}
				\State $\mathcal{\hat{B}}_t := \mathcal{\hat{B}}_t^l \cap \mathcal{\hat{B}}_t^h$
				\State $I_t\leftarrow \min \big\{\mathcal{\hat{B}}_t\cup \{K\}\big\}$
				\State Select arm $I_t$ and observe $Y^1_t,\dots,Y^{I_t}_t$
				\State $\Dij \leftarrow \Dijm + \one{\Yti \ne \Ytj}\;\; \forall i< j \le I_t$\label{alg:USS_WD_D} 
				\State $\Nij \leftarrow \Nijm + 1\;\; \forall i< j \le I_t$\label{alg:USS_WD_N}
			\EndFor
		\end{algorithmic}
	\end{algorithm}
\end{center}

Since initial estimates for $\pij$ are not good enough, $\hat{ \mathcal{B}}_t$ can be empty. In such a case, the algorithm selects the arm $K$. After selection of arm $I_t$, $Y^j_t, j\in [I_t] $ are observed which are then used to update the $\Dij$ and $\Nij$ in the algorithm.

\subsection{Regret Analysis}
The following notations and definition are useful in subsequent proofs. For the optimal arm $i^\star$ and each $j \in [K]$, let 
\begin{align}
	\Delta_j := C_j + \gamma_j - (C_{i^\star} + \gamma_{i^\star}) \label{def_delta},
\end{align}
\begin{subnumcases}
	{\kappa_j := }
	\pijs - (\gamma_j - \gamma_{i^\star}),\;\; \text{ if } j<i^\star  \label{def_kappa_l}\\
	\pijs - (\gamma_{i^\star} - \gamma_j), \;\; \text{ if } j>i^\star \label{def_kappa_h} 
\end{subnumcases}
\begin{subnumcases}
	{\xi_j := }
	\Delta_j + \kappa_j, \;\; \text{ if } j<i^\star  \label{def_xi_l}\\
	\Delta_j - \kappa_j, \;\; \text{ if } j>i^\star \label{def_xi_h} 
\end{subnumcases}

Notice that the values of $\kappa_j$ and $\xi_j$ for all $j\in [K]$ are positive under the $\WD$ property. Their relations are depicted in Figure \ref{fig:preference}.
\begin{figure}[!ht]
	\centering
	\includegraphics[width=0.7\linewidth]{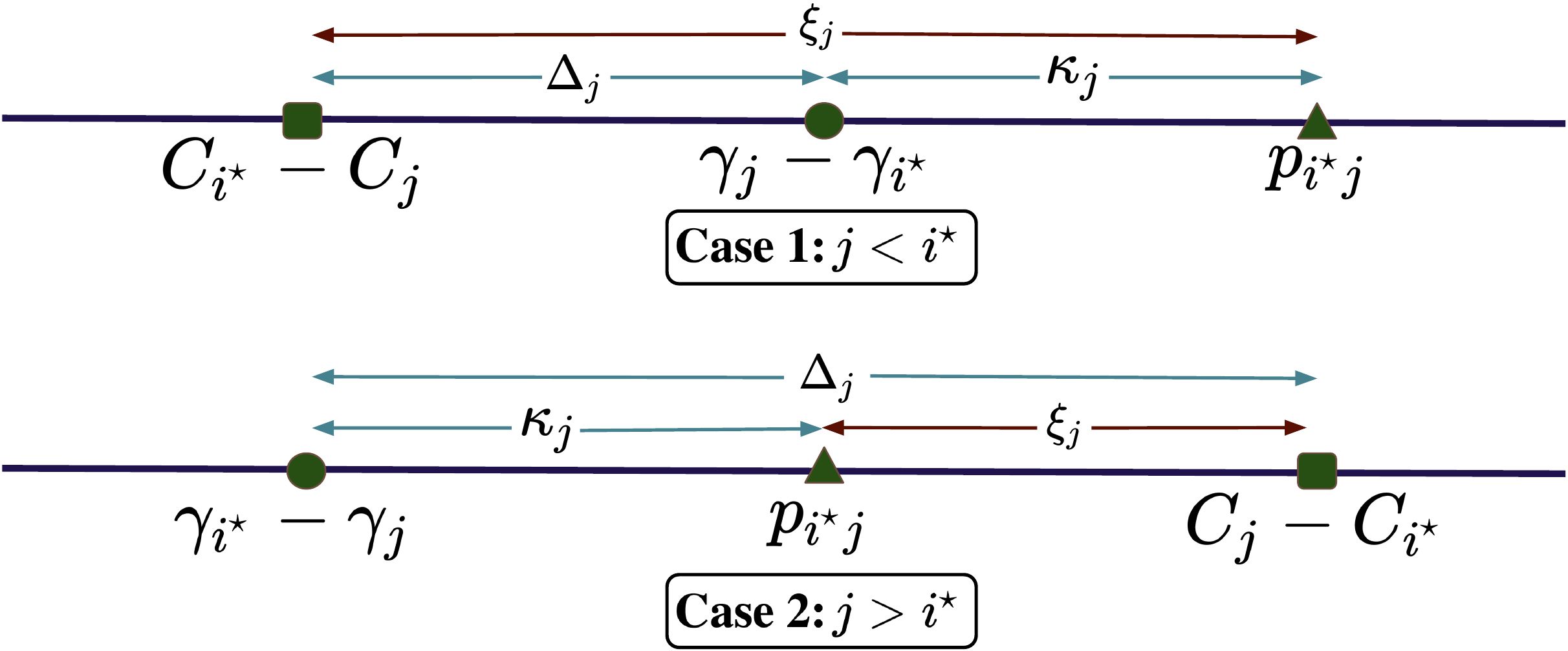}
	\caption{Relationship among $\Delta_j,\; \kappa_j$ and $\xi_j$ for a problem instance that satisfies WD property.}
	\label{fig:preference}
\end{figure}

Let $N_j(T)$ denote the number of times arm $j$ is selected until round $T$. The following proposition gives the mean number of times a sub-optimal arm is selected. 

\begin{restatable}{prop}{MeanPulls}
	\label{prop:meanpulls}
	Let  $f(t)$ be a  positive valued increasing function such that $C=\lim\limits_{T\rightarrow\infty}\sum\limits_{t=1}^T\dfrac{1}{f(t)^{2\alpha}}< \infty$ in \textnormal{\ref{alg:USS_WD}}. For any $P \in \PWD$, the mean number of times a arm $j \neq  i^\star$ is selected, is bounded as follows: 
	\begin{itemize}
		\item for any $j < i^\star$
		\begin{equation*}
			\EE{N_j(T)}  \le \dfrac{C}{2\xi_j^2},
		\end{equation*} 
		\item and for any $j>i^\star$
		\vspace{-1mm}
		\begin{equation*}
			\EE{N_j(T)}\le 1 + \frac{1}{\xi_j^2}\left(\alpha\log f(T) + \sqrt{\frac{\pi\alpha\log f(T)}{2}} + \frac{1}{2}\right).
		\end{equation*} 
	\end{itemize}
\end{restatable}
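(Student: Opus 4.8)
The plan is to bound $\EE{N_j(T)}=\sum_{t}\Prob{I_t=j}$ by showing that $\{I_t=j\}$ forces a one-sided deviation of the empirical disagreement rate of the pair $\{j,\istar\}$ from $\pijs\doteq\Prob{\Yj\ne\Yis}$, and then choosing the accounting so that the $\WD$ gap drives the sum. I will use the following facts, all immediate from the definitions of $\Delta_j,\kappa_j,\xi_j$: for $j<\istar$, $\xi_j=\pijs-(C_\istar-C_j)$, while for $j>\istar$, $\xi_j=(C_j-C_\istar)-\pijs$; on $\PWD$ each $\xi_j>0$; $\hat p_{ij}(t)=\hat p_{ji}(t)$ and $\Psi_{ij}(t)=\Psi_{ji}(t)$; by \cref{lem:err_prob_contx} and \cref{lem:ErrorRateOrder}, $C_\istar-C_j\le\gamma_j-\gamma_\istar\le\pijs$ for $j<\istar$ (consistent with the displayed identity); and $\mathcal N_{j\istar}(t-1)\ge1$ for every $t\ge2$, since $I_1=K\ge\istar$.

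\textbf{Case $j<\istar$.} As $j<K$, $I_t=j$ forces $j\in\hBht$; instantiating the defining inequality of $\hBht$ at the index $\istar>j$ gives $C_\istar-C_j>\hat p_{j\istar}(t)+\Psi_{j\istar}(t)$, i.e.\ $\pijs-\hat p_{j\istar}(t)>\xi_j+\Psi_{j\istar}(t)$ after substituting $C_\istar-C_j=\pijs-\xi_j$. I then union-bound over the value $n=\mathcal N_{j\istar}(t-1)\in\{1,\dots,t-1\}$ and apply Hoeffding's inequality to the $n$ i.i.d.\ comparisons of $\{j,\istar\}$; using $(\xi_j+\sqrt{\alpha\log f(t)/n})^2\ge\xi_j^2+\alpha\log f(t)/n$ and $e^{2\xi_j^2}-1\ge2\xi_j^2$, this yields $\Prob{I_t=j}\le f(t)^{-2\alpha}\sum_{n\ge1}e^{-2n\xi_j^2}\le f(t)^{-2\alpha}/(2\xi_j^2)$. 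Summing over $t$ and using $\sum_t f(t)^{-2\alpha}=C<\infty$ gives $\EE{N_j(T)}\le C/(2\xi_j^2)$.

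\textbf{Case $j>\istar$.} Here a round-by-round bound on $\Prob{I_t=j}$ is too weak (the confidence window must be wide enough to reach $\xi_j$ for $\Theta(\log T)$ values of $n$), so I account by the number of comparisons of $\{\istar,j\}$ instead. If $j<K$, then $I_t=j$ forces $j\in\hBlt$; instantiating its defining inequality at $\istar<j$ gives $C_j-C_\istar\le\hat p_{\istar j}(t)+\Psi_{\istar j}(t)$, i.e.\ $\hat p_{\istar j}(t)-\pijs\ge\xi_j-\Psi_{\istar j}(t)$ using $C_j-C_\istar=\pijs+\xi_j$. The key observation is that every round with $I_t=j$ increments $\mathcal N_{\istar j}$ (since $\istar<j=I_t$), so the values $n=\mathcal N_{\istar j}(t-1)$ over such rounds are pairwise distinct; hence, writing $\hat p^{(n)}_{\istar j}$ for the empirical disagreement rate over the first $n$ comparisons of $\{\istar,j\}$ and using $\Psi_{\istar j}(t)\le\sqrt{\alpha\log f(T)/n}$ (monotonicity of $f$, $t\le T$),
\begin{equation*}
N_j(T)\le\sum_{n\ge1}\one{\hat p^{(n)}_{\istar j}-\pijs\ge\xi_j-\sqrt{\alpha\log f(T)/n}}.
\end{equation*}
Setting $n^\star=\alpha\log f(T)/\xi_j^2$: the at most $n^\star+1$ terms with $n\le n^\star$ are each bounded by $1$; for $n>n^\star$ Hoeffding gives $\exp(-2(\xi_j\sqrt n-\sqrt{\alpha\log f(T)})^2)$, and bounding $\sum_{n>n^\star}$ by $\int_{n^\star}^\infty\exp(-2(\xi_j\sqrt x-\sqrt{\alpha\log f(T)})^2)\,dx$ and substituting $y=\xi_j\sqrt x-\sqrt{\alpha\log f(T)}$ evaluates this to $\tfrac{1}{2\xi_j^2}+\tfrac{1}{\xi_j^2}\sqrt{\pi\alpha\log f(T)/2}$ (using $\int_0^\infty ye^{-2y^2}dy=\tfrac14$ and $\int_0^\infty e^{-2y^2}dy=\tfrac12\sqrt{\pi/2}$). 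Adding these contributions reproduces the stated bound.

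\textbf{Main obstacle.} The crux is making the two regimes fit: the $j>\istar$ bound is impossible to get from $\sum_t\Prob{I_t=j}$ alone, so one must argue that the comparison counts $\mathcal N_{\istar j}(t-1)$ at the rounds where $j$ is played are all distinct and then run the peeling over $n$ rather than over $t$; getting the exact constants $\alpha\log f(T)+\sqrt{\pi\alpha\log f(T)/2}+\tfrac12$ then needs the Hoeffding-plus-integral computation above rather than a crude cut at $n^\star$. A secondary point to dispatch is the edge case $j=K$, where $I_t=K$ may arise from $\hBt=\emptyset$ without $K\in\hBlt$; this is handled by instead using $\istar\notin\hBt$ and splitting on whether $\istar$ fails the $\hBlt$- or the $\hBht$-test — the $\hBht$-failure produces a gap $\xi_h$ for some $h>\istar$ exactly as above, and the $\hBlt$-failure is controlled because at the $n$-th comparison of the offending pair the round index is at least $n+1$, making the resulting $f(\cdot)^{-2\alpha}$ factors summable.
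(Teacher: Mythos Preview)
Your proposal is correct and mirrors the paper's proof: both reduce $\{I_t=j\}$ to a one-sided Hoeffding deviation of $\hat p_{i^\star j}$ from $p_{i^\star j}$ by $\xi_j\pm\Psi_{i^\star j}(t)$, then for $j<i^\star$ union-bound over the counter value and sum the resulting $f(t)^{-2\alpha}$ factors, while for $j>i^\star$ exploit that $\mathcal N_{i^\star j}$ takes distinct values across rounds with $I_t=j$ and apply the $n^\star$-split plus Gaussian integral (which the paper packages as a preparatory lemma). Your edge-case remark on $j=K$ flags a point the paper's write-up does not explicitly address.
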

Notice that the mean number of times a arm $j < i^\star$ is selected, is finite. The regret bounds follows by noting that $\EE{\Regret_T}=\sum_{j < i^\star}\EE{N_j(T)}\Delta_j + \sum_{j > i^\star}\EE{N_j(T)}$ $\Delta_j$. Formally, we have the following regret bound. 
\begin{thm}
	\label{thm:regret}
	Let $f(t)$ be set as in \cref{prop:meanpulls}. Then, for any $P \in \PWD$, the expected regret of \textnormal{\ref{alg:USS_WD}} in $T$ rounds is bounded as below:
	\begin{align*}
		\EE{\Regret_T} \le \sum\limits_{j < i^\star} \dfrac{\Delta_j C}{2\xi_j^2} + \sum\limits_{j > i^\star} \Delta_j \Bigg[1 \;+ \frac{1}{\xi_j^2}\Bigg(\alpha\log f(T)+ \sqrt{\frac{\pi\alpha\log f(T)}{2}} + \frac{1}{2}\Bigg) \Bigg].
	\end{align*}
\end{thm}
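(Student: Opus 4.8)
The plan is to obtain the bound as a direct bookkeeping consequence of Proposition~\ref{prop:meanpulls}, which I may assume. Since $\lambda_i\equiv 1$, the summand in the definition \eqref{eq:cum_regret} of $\Regret_T$ is exactly $\Delta_{I_t}$ with $\Delta_j$ as in \eqref{def_delta}. Collecting the rounds according to the arm played and using $\Delta_{i^\star}=0$,
\[
\Regret_T=\sum_{t=1}^{T}\Delta_{I_t}=\sum_{j=1}^{K}\Delta_j\,N_j(T)=\sum_{j<i^\star}\Delta_j\,N_j(T)+\sum_{j>i^\star}\Delta_j\,N_j(T),
\]
and every $\Delta_j\ge 0$ because $i^\star$ minimises $\gamma_i+C_i$ (a $j<i^\star$ that is itself a minimiser has $\Delta_j=0$ and contributes nothing). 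Taking expectations leaves me with $\EE{N_j(T)}$ for $j\neq i^\star$.

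The second step is purely mechanical: substitute the two estimates from Proposition~\ref{prop:meanpulls}, namely $\EE{N_j(T)}\le C/(2\xi_j^2)$ for $j<i^\star$ and $\EE{N_j(T)}\le 1+\xi_j^{-2}\big(\alpha\log f(T)+\sqrt{\pi\alpha\log f(T)/2}+1/2\big)$ for $j>i^\star$, with $f$ chosen (e.g.\ $f(t)=t$, $\alpha>1/2$) so that $C=\sum_{t\ge 1}f(t)^{-2\alpha}<\infty$. Multiplying each bound by $\Delta_j$ and summing reproduces the claimed inequality verbatim, with no slack to absorb.

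So the theorem itself carries essentially no difficulty once Proposition~\ref{prop:meanpulls} is in hand; the substance lives in that proposition, and if I had to establish it the main obstacle would be the asymmetric treatment of $j<i^\star$ versus $j>i^\star$ forced by how the confidence radii enter \eqref{set:eBl}--\eqref{set:eBh}. For $j>i^\star$ to be played it must lie in $\hBlt$, hence pass the test against $k=i^\star$, i.e.\ $C_j-C_{i^\star}\le \hat p_{i^\star j}+\Psi_{i^\star j}(t)$; since truly $C_j-C_{i^\star}=p_{i^\star j}+\xi_j$, this can hold only while $\mathcal{N}_{i^\star j}(t-1)$ is of order $\alpha\log f(t)/\xi_j^2$ (outside a rare, summable overshoot of $\hat p_{i^\star j}$), and because $\mathcal{N}_{i^\star j}$ advances whenever $j$ is played, a peeling/Hoeffding argument yields the $O(\log T)$ bound with the stated constants. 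For $j<i^\star$ the requirement is instead that $j$ lie in $\hBht$, i.e.\ $C_{i^\star}-C_j>\hat p_{ji^\star}+\Psi_{ji^\star}(t)$; since the truth is $C_{i^\star}-C_j=p_{ji^\star}-\xi_j\le p_{ji^\star}$, this demands that the \emph{upper} confidence bound $\hat p_{ji^\star}+\Psi_{ji^\star}(t)$ undershoot $p_{ji^\star}$ by at least $\xi_j$, an event of conditional probability at most $f(t)^{-2\alpha}e^{-2\mathcal{N}_{ji^\star}(t-1)\xi_j^2}$ by Hoeffding, hence summable in $t$; retaining the $\xi_j$ factor when summing produces the finite constant $C/(2\xi_j^2)$. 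Getting the event-charging bookkeeping precise — which pair's counter advances on which round, and over which rounds a bad event is billed — is the only delicate point; the decomposition and the substitution above are routine.
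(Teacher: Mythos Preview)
Your proposal is correct and follows essentially the same route as the paper's own proof: decompose $\Regret_T=\sum_j \Delta_j N_j(T)$, split according to $j<i^\star$ and $j>i^\star$, take expectations, and substitute the two bounds from Proposition~\ref{prop:meanpulls}. Your remark that $\Delta_j\ge 0$ (needed to preserve the inequality) and your sketch of how Proposition~\ref{prop:meanpulls} itself would be argued are welcome additions, but the theorem's proof per se is exactly the bookkeeping you describe.
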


\begin{proof}
	Let $I_t$ is the arm selected by algorithm at round $t$. Then expected regret $\EE{\Regret_T}$ of \ref{alg:USS_WD} for $T$ rounds is:
	\begin{align*} 
		\EE{\Regret_T} &= \EE{\sum\limits_{t=1}^T r_t\hspace{-0.4mm}} = \EE{\sum\limits_{t=1}^T\left(C_{I_j} + \gamma_{I_t} - \left(C_{i^\star} + \gamma_{i^\star}\right)\right)}	\\
	 	&= \EE{\sum\limits_{j\ne i^\star} \sum\limits_{t=1}^T \one{I_t=j} \Delta_j}=  \sum\limits_{j\ne i^\star} \sum\limits_{t=1}^T \Delta_j\Prob{I_t = j} \\
		&=\sum\limits_{j < i^\star}\sum\limits_{t=1}^T \Prob{I_t = j}  \Delta_j + \sum\limits_{j> i^\star} \sum\limits_{t=1}^T \Prob{I_t = j}  \Delta_j.
	\end{align*}
	Since $\sum_{t=1}^T \Prob{I_t = j} = \sum_{t=1}^T \EE{\one{I_t = j}} = \EE{N_j{T}}$, using \cref{prop:meanpulls},
	\begin{equation*}
		\EE{\Regret_T} \le \sum\limits_{j < i^\star} \dfrac{\Delta_j C}{2\xi_j^2} + \sum\limits_{j > i^\star} \Delta_j \Bigg(1 \;+ \frac{1}{\xi_j^2} \Bigg(\alpha\log f(T) + \sqrt{\frac{\pi\alpha\log f(T)}{2}} + \frac{1}{2}\Bigg)\Bigg). \tag*{\qedhere}
	\end{equation*}
\end{proof}

\begin{cor}
	\label{cor:fix_para}
	Let $\alpha=1$ and $f(t) = t$ in \cref{thm:regret}. Then, expected regret of \textnormal{\ref{alg:USS_WD}} for any $P \in \PWD$ in $T$ rounds is of $O\left(\sum\limits_{j>i^\star}\frac{\Delta_j\log T}{\xi^2}\right)$.
\end{cor}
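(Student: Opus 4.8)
The plan is to substitute the specified parameter choices $\alpha=1$ and $f(t)=t$ into the bound of \cref{thm:regret} and simplify. First I would check the hypothesis of \cref{prop:meanpulls}: with $\alpha=1$ and $f(t)=t$, the series $\sum_{t=1}^\infty 1/f(t)^{2\alpha} = \sum_{t=1}^\infty 1/t^2 = \pi^2/6 < \infty$, so the constant $C$ is finite (indeed $C = \pi^2/6$), and \cref{thm:regret} applies. Plugging in, the first sum becomes $\sum_{j<i^\star}\frac{\Delta_j C}{2\xi_j^2} = \frac{\pi^2}{12}\sum_{j<i^\star}\frac{\Delta_j}{\xi_j^2}$, which is a problem-dependent constant independent of $T$, hence $O(1)$.

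Next I would handle the second sum: $\sum_{j>i^\star}\Delta_j\left(1 + \frac{1}{\xi_j^2}\left(\log T + \sqrt{\frac{\pi\log T}{2}} + \frac12\right)\right)$. For each fixed $j>i^\star$, as $T\to\infty$ the dominant term inside the parenthesis is $\frac{\log T}{\xi_j^2}$, since $\sqrt{\log T} = o(\log T)$ and the constants $1$ and $\frac12$ are lower order. Thus each summand is $O\!\left(\frac{\Delta_j \log T}{\xi_j^2}\right)$, and since there are at most $K-1$ such terms (a constant), the whole sum is $O\!\left(\sum_{j>i^\star}\frac{\Delta_j\log T}{\xi_j^2}\right)$. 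Here I would note the slight abuse of notation in the corollary statement — the $\xi^2$ in the displayed $O(\cdot)$ should be read as $\xi_j^2$ inside the sum (or one could factor out $\min_j \xi_j^2$ and write $\xi = \min_{j>i^\star}\xi_j$, matching the quantity $\xi$ from \cref{def:Xi}).

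Combining the two pieces, $\EE{\Regret_T} = O(1) + O\!\left(\sum_{j>i^\star}\frac{\Delta_j\log T}{\xi_j^2}\right) = O\!\left(\sum_{j>i^\star}\frac{\Delta_j\log T}{\xi_j^2}\right)$, which is the claimed bound. There is no real obstacle here — this is a routine asymptotic simplification of an already-established finite-$T$ bound; the only things worth stating carefully are the finiteness of $C$ for the chosen parameters and the absorption of the $\sqrt{\log T}$ and constant terms into the $\log T$ term. I would present it as a short two-line derivation.
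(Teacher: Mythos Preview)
Your proposal is correct and follows essentially the same approach as the paper: substitute $\alpha=1$, $f(t)=t$ into \cref{thm:regret}, observe that the $j<i^\star$ sum is a constant independent of $T$, and that the $j>i^\star$ terms are dominated by $\Delta_j\log T/\xi_j^2$. The paper's proof is terser but identical in content, and it explicitly sets $\xi=\min_{j>i^\star}\xi_j$ to justify the $\xi^2$ in the statement, exactly as you surmised.
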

\begin{proof}
	The regret contribution of arms whose index is smaller than the optimal arm is constant. With this fact and $\xi = \min\limits_{j>i^\star} \xi_j$, the proof follows from \cref{thm:regret} with $\alpha=1$ and $f(t)=t$.
\end{proof}

\begin{rem}
	The function $f(t)$ and $\alpha$ control the exploration in \ref{alg:USS_WD}. The larger the value, the more will be the exploration. The constraint given in \cref{prop:meanpulls} ensures that there is no linear term in the regret given in \cref{thm:regret} due to inefficient exploration. 
\end{rem}

\begin{cor}
	\label{cor:sd}
	Let technical conditions stated in \cref{cor:fix_para} hold. Then expected regret of \textnormal{\ref{alg:USS_WD}} for any $P \in \PSD$  in $T$ rounds is of $O\left(\sum\limits_{j>i^\star}\frac{\log T}{\xi}\right)$.
\end{cor}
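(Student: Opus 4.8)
The plan is to return to the per-arm bound in \cref{thm:regret}, instantiated with the constants $\alpha=1$ and $f(t)=t$ of \cref{cor:fix_para}, and to exploit the fact that under strong dominance the quantity $\kappa_j$ of \cref{def_kappa_h} collapses to $0$ for every arm $j>i^\star$; this forces $\xi_j$ to equal the suboptimality gap $\Delta_j$, which removes one power of $\xi_j$ from the $1/\xi_j^{2}$ factor and is exactly the source of the improvement over \cref{cor:fix_para}.

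The crux is the identity $\xi_j=\Delta_j$ for $j>i^\star$ on any $P\in\PSD$. Fix such a $P$ and $j>i^\star$. By the definition of $\SD$, the event $\{Y^{i^\star}=Y\}$ is contained in $\{Y^j=Y\}$, so $\{Y^{i^\star}=Y,\,Y^j\ne Y\}=\emptyset$ and $\Prob{Y^{i^\star}=Y,\,Y^j\ne Y}=0$. Substituting this into \cref{lem:err_prob_contx} gives $\gamma_{i^\star}-\gamma_j=\Prob{Y^{i^\star}\ne Y^j}$, hence $\kappa_j=\Prob{Y^{i^\star}\ne Y^j}-(\gamma_{i^\star}-\gamma_j)=0$ by \cref{def_kappa_h}, and therefore $\xi_j=\Delta_j-\kappa_j=\Delta_j$ by \cref{def_xi_h}. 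Since $i^\star$ is the largest minimizer of $\gamma_i+C_i$, we also have $\Delta_j>0$ for $j>i^\star$; thus $\xi_j=\Delta_j>0$, i.e. \cref{eq:wd2} holds with $\rho>1$, so $P\in\PWD$ and \cref{thm:regret} is applicable.

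Plugging $\xi_j=\Delta_j$ into the bound of \cref{thm:regret} with $\alpha=1,\ f(t)=t$: the sum over $j<i^\star$ is a finite constant, independent of $T$, hence $O(1)$; the sum over $j>i^\star$ equals
\[
\sum_{j>i^\star}\Delta_j\left(1+\frac{1}{\Delta_j^{2}}\left(\log T+\sqrt{\frac{\pi\log T}{2}}+\frac12\right)\right)=O\!\left(\sum_{j>i^\star}\frac{\log T}{\Delta_j}\right),
\]
because the additive $\sum_{j>i^\star}\Delta_j$ is again $T$-independent and finite. Finally, with $\xi=\min_{j>i^\star}\xi_j=\min_{j>i^\star}\Delta_j$ (consistent with \cref{def:Xi}), each $1/\Delta_j\le 1/\xi$, so the right-hand side is $O\!\left(\sum_{j>i^\star}\frac{\log T}{\xi}\right)$, which is the claimed bound.

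I do not expect a real obstacle here. The only step with genuine content is the observation $\kappa_j=0$ under $\SD$, which is immediate from \cref{lem:err_prob_contx} once one notices that strong dominance makes $\{Y^{i^\star}=Y,\,Y^j\ne Y\}$ impossible; everything else is $O(\cdot)$ bookkeeping. The one point to double-check is that the asymptotics correctly absorb three $T$-independent quantities at once --- the constant first sum, the additive $\sum_{j>i^\star}\Delta_j$, and the number $|\{j:j>i^\star\}|$ of summands --- but none of this is delicate.
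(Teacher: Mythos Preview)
Your proposal is correct and follows essentially the same approach as the paper: the paper's proof is the one-liner ``since $|\gamma_j-\gamma_{i^\star}|=p_{i^\star j}$ for $P\in\PSD$, $\kappa_j=0$ for all $j\in[K]$, hence $\xi_j=\Delta_j$; the rest follows from \cref{cor:fix_para},'' and you have simply unpacked this, using \cref{lem:err_prob_contx} to justify $\kappa_j=0$ and then reading off the cancellation $\Delta_j/\xi_j^{2}=1/\Delta_j$ directly from \cref{thm:regret}. Your extra verification that $P\in\PWD$ (so that \cref{thm:regret} applies) is a nice touch the paper leaves implicit.
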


\begin{proof}
	Since $|\gamma_j - \gamma_{i^\star}| = \pijs$ for $P \in \PSD$, $\kappa_j = 0, \forall j \in [K]\Rightarrow \xi_j = \Delta_j$. Rest follows from Corollary \ref{cor:fix_para}. 
\end{proof}

\noindent
We next present problem independent bounds on the expected regret of \ref{alg:USS_WD}.
\begin{restatable}{thm}{probIndBound}
	\label{thm:prob_independent_bound}
	Let $f(t)$ be set as in \cref{prop:meanpulls}. The expected regret of \textnormal{\ref{alg:USS_WD}} in $T$ rounds 
	\begin{itemize}
		\item for any instance in $\PWD$ is bounded as
		\begin{align*}
			\EE{\Regret_T} \le 3\left(3\alpha K\log f(T)\right)^{1/3}T^{2/3}.
		\end{align*}
		\item for any instance in $\PSD$ is bounded as
		\begin{align*}
			\EE{\Regret_T} \le 4\left(\alpha KT\log f(T)\right)^{1/2}.
		\end{align*}
	\end{itemize}
\end{restatable}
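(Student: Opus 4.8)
The plan is to obtain both bounds from the exact regret decomposition used in the proof of \cref{thm:regret}, namely $\EE{\Regret_T} = \sum_{j<\istar}\EE{N_j(T)}\Delta_j + \sum_{j>\istar}\EE{N_j(T)}\Delta_j$, together with the per-arm pull bounds of \cref{prop:meanpulls}; throughout I will use that the per-round regret is bounded, $\Delta_j \le 1$. The arms $j < \istar$ are handled first: \cref{prop:meanpulls} gives the $T$-free bound $\EE{N_j(T)} \le C/(2\xi_j^2)$, and since $\xi_j = \Delta_j + \kappa_j \ge \Delta_j$ when $j<\istar$, combining with the trivial bound $\EE{N_j(T)}\le T$ shows each such term is at most $\min\{T\xi_j,\,C/(2\xi_j)\}\le\sqrt{CT/2}$, so $\sum_{j<\istar}\EE{N_j(T)}\Delta_j = O(K\sqrt T)$, which is of lower order than either target. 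Everything then reduces to bounding $S:=\sum_{j>\istar}\EE{N_j(T)}\Delta_j$.

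For the $\WD$ bound, set $g(T) := \alpha\log f(T) + \sqrt{\pi\alpha\log f(T)/2} + \tfrac12 \le 3\alpha\log f(T)$, so that \cref{prop:meanpulls} gives $\EE{N_j(T)} \le 1 + g(T)/\xi_j^2$ for $j>\istar$. I would then run the standard threshold split: fix $\delta>0$ and let $J_1 = \{j>\istar : \xi_j \ge \delta\}$, $J_2 = \{j>\istar : \xi_j < \delta\}$. Each arm in $J_1$ contributes at most $\Delta_j(1 + g(T)/\delta^2) \le 1 + 3\alpha\log f(T)/\delta^2$, hence $\sum_{j\in J_1}\EE{N_j(T)}\Delta_j \le K\big(1 + 3\alpha\log f(T)/\delta^2\big)$. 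For $J_2$ the pull bound is vacuous, so one instead uses $\sum_{j\ne\istar}\EE{N_j(T)} \le T$ and aims to bound the contribution of these arms by $\delta T$. Choosing $\delta = \big(3\alpha K\log f(T)/T\big)^{1/3}$ balances the two terms and, after tracking constants, yields $S \le 3\big(3\alpha K\log f(T)\big)^{1/3}T^{2/3}$; the $O(K\sqrt T)$ contribution of the arms $j<\istar$ is absorbed into this.

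For the $\SD$ refinement, recall (as in \cref{cor:sd}) that $\kappa_j = 0$ under $\SD$, so $\xi_j = \Delta_j$ for every arm. Now \cref{prop:meanpulls} reads $\EE{N_j(T)} \le 1 + g(T)/\Delta_j^2$, which is exactly the classical UCB form, and the same threshold split --- this time directly on $\Delta_j$ --- gives $\EE{N_j(T)}\Delta_j \le \Delta_j + g(T)/\delta$ for $\Delta_j\ge\delta$ (summing to $O(K\log f(T)/\delta)$) and $\le \delta T$ for $\Delta_j<\delta$. Balancing with $\delta = \sqrt{\alpha K\log f(T)/T}$ gives $S = O(\sqrt{\alpha KT\log f(T)})$, and keeping constants yields the stated $4(\alpha KT\log f(T))^{1/2}$.

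The step I expect to be the real obstacle is the control of the small-gap set $J_2$ in the $\WD$ case. Under $\SD$ the per-pull cost of a near-optimal arm shrinks with its gap ($\Delta_j = \xi_j$), but under $\WD$ the cost paid each time arm $j$ is mistakenly pulled need not be small even when $\xi_j$ is tiny --- indeed $\Delta_j = \xi_j + 2\Prob{\Yis = Y,\ \Yj \ne Y}$, whose second term is decoupled from $\xi_j$. So the crude bound $\sum_{j\in J_2}\EE{N_j(T)}\Delta_j \le \sum_{j\in J_2}\EE{N_j(T)} \le T$ only produces a linear term, and the delicate point is to recover the $O(\delta T)$ bound for this set; this is exactly what yields the $T^{2/3}$ rate rather than $T^{1/2}$, and is the one place the $\WD$ analysis departs from a textbook UCB argument. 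The remaining pieces --- simplifying the logarithmic factor, the $j<\istar$ terms, and optimizing $\delta$ --- are routine.
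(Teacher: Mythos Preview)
Your plan is the paper's in outline, and you have correctly located the crux; two points of divergence matter.

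\textbf{The obstacle for $j>\istar$ under $\WD$.} The resolution is not to extract a $\delta T$ bound from $J_2\cap\{j>\istar\}$ --- that bound is unavailable, exactly for the reason you give. The paper instead arranges for this set to be \emph{empty}. Write $\Delta_j\le\xi_j+\beta$ for $j>\istar$, where $\beta$ is a uniform bound on $\kappa_j$ (so $\beta=0$ under $\SD$, $\beta\le 2$ under $\WD$), and take the threshold $\xi'$ strictly below $\xi=\min_{j>\istar}\xi_j$. Then every arm $j>\istar$ lies in $J_1$, and the pull bound of \cref{prop:meanpulls} gives
\[
\EE{N_j(T)}\Delta_j \;\le\; \frac{2\alpha\log f(T)}{\xi_j^{2}}(\xi_j+\beta)\;\le\;2\alpha\log f(T)\Bigl(\frac{1}{\xi'}+\frac{\beta}{{\xi'}^{2}}\Bigr).
\]
Only arms $j<\istar$ with $\xi_j<\xi'$ populate the small-gap set, and for those $\Delta_j\le\xi_j<\xi'$, so their total is at most $T\xi'$. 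Collecting,
\[
\EE{\Regret_T}\;\le\;T\xi' + 4\alpha K\log f(T)\Bigl(\frac{1}{\xi'}+\frac{\beta}{{\xi'}^{2}}\Bigr).
\]
Under $\SD$ the $\beta$-term vanishes and optimizing $T\xi'+c/\xi'$ gives $T^{1/2}$; under $\WD$ the dominant piece is $c\beta/{\xi'}^2$, and optimizing $T\xi'+c/{\xi'}^{2}$ yields $T^{2/3}$. So the $T^{2/3}$ rate is produced by the extra $1/{\xi'}^{2}$ scaling forced by the additive $\beta$ in $\Delta_j$, not by any delicate control of a nonempty $J_2$. (The paper then plugs in the unconstrained optimizer without rechecking $\xi'<\xi$; this is harmless for large $T$, since the optimizer tends to zero while $\xi>0$ is fixed for each instance.)

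\textbf{The arms $j<\istar$.} Bounding each such arm separately by $\min\{T\xi_j,\,C/(2\xi_j)\}\le\sqrt{CT/2}$ and summing costs a factor $\sqrt{K}$: you obtain $K\sqrt{CT/2}$, whereas the paper folds these arms into the \emph{same} threshold split (using the global constraint $\sum_{j}\EE{N_j(T)}\le T$ rather than the per-arm bound $\EE{N_j(T)}\le T$) and gets $O(\sqrt{KCT})$. This matters for the constant in the $\SD$ bound once $K$ is comparable to $\log f(T)$. Also note that $\Delta_j\le 1$ is not generally available here (the cumulative costs $C_j$ are unbounded), so the uniform bound you invoke on $J_1$ should be $\Delta_j\le\xi_j+\beta$ rather than $\Delta_j\le 1$.
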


\begin{cor}
	Let technical conditions stated in \cref{cor:fix_para} hold. The expected regret of \textnormal{\ref{alg:USS_WD}} on $\PSD$ is $\tilde{O}(T^{1/2})$ and on $\PWD$ it is $\tilde{O}(T^{2/3})$, where $\tilde{O}$ hides logarithmic terms.
\end{cor}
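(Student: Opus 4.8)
The plan is to obtain this corollary as a one‑line specialization of the problem‑independent bound in \cref{thm:prob_independent_bound}. I would simply fix the free parameters of \ref{alg:USS_WD} exactly as in \cref{cor:fix_para}, i.e.\ take $\alpha = 1$ and $f(t) = t$, and then read off the exponents of $T$ in the two displayed bounds.

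The first (and essentially only) thing to verify is that this choice is admissible for every earlier result being invoked. The function $f(t) = t$ is positive and strictly increasing, and with $\alpha = 1$ the series in the hypothesis of \cref{prop:meanpulls} is $\sum_{t=1}^{\infty} f(t)^{-2\alpha} = \sum_{t=1}^{\infty} t^{-2} = \pi^2/6 < \infty$, so the constant $C$ appearing there is finite; moreover $\alpha = 1 > 1/2$, which is the requirement on the input of \ref{alg:USS_WD}. Hence both parts of \cref{thm:prob_independent_bound} are in force with these values, and $\log f(T) = \log T$.

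Substituting $\alpha = 1$ and $f(T) = T$ into \cref{thm:prob_independent_bound} gives, for every $P \in \PWD$,
\[
\EE{\Regret_T} \;\le\; 3\,(3K\log T)^{1/3}\,T^{2/3} \;=\; O\!\big((K\log T)^{1/3}T^{2/3}\big) \;=\; \tilde{O}(T^{2/3}),
\]
and, for every $P \in \PSD$,
\[
\EE{\Regret_T} \;\le\; 4\,(KT\log T)^{1/2} \;=\; O\!\big((K\log T)^{1/2}T^{1/2}\big) \;=\; \tilde{O}(T^{1/2}),
\]
where in both cases $K$ is a fixed problem parameter and only a polylogarithmic factor in $T$ is absorbed into $\tilde{O}(\cdot)$. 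This is exactly the claimed statement.

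At the level of the corollary there is no real obstacle; the content sits entirely in \cref{thm:prob_independent_bound}. The genuine work — which one would carry out if that theorem were not already available — is the usual worst‑case balancing: start from the per‑arm counts of \cref{prop:meanpulls}, observe that the arms $j < i^\star$ contribute only a problem‑dependent constant (their expected pull count is finite), and for the arms $j > i^\star$ split on the magnitude of $\xi_j$, bounding $\EE{N_j(T)}$ either by the trivial $T$ or by $\xi_j^{-2}\log f(T)$ and optimizing the split point; this yields the $T^{2/3}$ rate under $\WD$, while under $\SD$ the identity $\kappa_j = 0$, hence $\xi_j = \Delta_j$ (cf.\ \cref{cor:sd}), removes the second scale and sharpens the rate to $T^{1/2}$.
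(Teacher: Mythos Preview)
Your proposal is correct and matches the paper's approach: the corollary is an immediate specialization of \cref{thm:prob_independent_bound} under the choices $\alpha=1$, $f(t)=t$ from \cref{cor:fix_para}, and the paper does not give a separate proof beyond stating it. Your verification that these choices satisfy the hypotheses of \cref{prop:meanpulls} is a nice extra, and your closing sketch of how one would argue from scratch is accurate but superfluous for the corollary itself.
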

The proof of Theorem \ref{thm:prob_independent_bound} can be found in \cref{sec:uss_appendix}. We note that the above uniform bounds do not contradict Theorem $19$ in \cite{AISTATS17_hanawal2017unsupervised} which claimed non-existence of uniform bounds. The $\PWD$ condition considered in \cite{AISTATS17_hanawal2017unsupervised} incorrectly includes the class of instances satisfying $\rho=1$ which renders $\PWD$ not learnable, whereas in our definition of  $\PWD$ these instances are excluded and $\PWD$ is learnable.

\paragraph{Discussion on optimality of \ref{alg:USS_WD}:} Any partial monitoring problem can be classified as an `easy', `hard' or `hopeless' problem if it has  expected regret bounds of the order $\Theta(T^{1/2}), \Theta(T^{2/3})$ or $\Theta(T)$, respectively, and there exists no other class in between \cite{MOR14_bartok2014partial}. The class $\PSD$ is {regret equivalent} to a stochastic multi-armed bandit with side observations \cite{AISTATS17_hanawal2017unsupervised},  for which regret scales as $\Theta(T^{1/2})$, hence $\PSD$ resides in the easy class and our bound on it is optimal.  Since $\PWD \supsetneq \PSD$, $\PWD$ is not easy, and also $\PWD$ is learnable, it cannot be hopeless. Therefore, the class $\PWD$ is hard. We thus conclude that the regret bound of \ref{alg:USS_WD} is near-optimal in $T$. However, optimality concerning other leading constants (in terms of $K$) is to be explored further.
\nocite{AISTATS19_verma2019online}

\section{Thompson Sampling based Algorithm for USS problem}
\label{sec:uss_ts}

Upper Confidence Bound (UCB) based methods are useful for dealing with the trade-off between exploration and exploitation in bandit problems \citep{ML02_auer2002finite, COLT11_garivier2011kl}. UCB has been widely used for solving various sequential decision-making problems. On the other hand, Thompson Sampling (TS) \citep{COLT12_agrawal2012analysis, ALT12_kaufmann2012thompson, AISTATS13_agrawal2013further} is an online algorithm based on Bayesian updates. TS selects an arm to play according to its probability of being the best arm, and it is shown that TS is empirically superior then UCB based algorithms for various MAB problems \citep{NIPS11_chapelle2011empirical}. TS also achieves lower bound for MAB when rewards of arms have Bernoulli distribution, as shown by \cite{ALT12_kaufmann2012thompson}.

In this section, we propose a TS based algorithm for the USS problem and show that it is a near-optimal algorithm. Using \cref{eq:selectDisProbLow} and \cref{eq:selectDisProbHigh}, our next result gives the optimal arm in  the USS problem that only uses one-sided test (\cref{eq:selectDisProbHigh}) for the arm's selection.
\begin{restatable}{lem}{SetBx}
	\label{lem:Bx}
	Let $P \in \PWD$ and $\cB = \left\{i: \forall j>i, C_j - C_i > \pij \right\}\cup \{K\}$. Then the arm $I_t =\min(\cB)$ is the optimal arm for the problem instance $P$.
\end{restatable}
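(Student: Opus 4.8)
The plan is to recognize that the set $\cB$ in the statement is literally the set $\mathcal{B}^h$ from \cref{set:Bh}, and then to prove $\min(\cB)=i^\star$ by two matching inclusions. Note first that $\cB$ is nonempty since $K\in\cB$ by construction, so $\min(\cB)$ is well defined.

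\textbf{Step 1 ($\min(\cB)\le i^\star$).} I would show $i^\star\in\cB$. By hypothesis $P\in\PWD$, and combining the optimality inequality \cref{eq:wd2} with \cref{lem:err_prob_contx} and the $\WD$ property gives exactly \cref{eq:selectDisProbHigh}, namely $C_j-C_{i^\star}>p_{i^\star j}$ for all $j>i^\star$. Since $p_{i^\star j}=\Prob{Y^{i^\star}\ne Y^j}$, this is precisely the membership condition of $\cB$ evaluated at the index $i^\star$, so $i^\star\in\cB$ and hence $\min(\cB)\le i^\star$.

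\textbf{Step 2 ($\min(\cB)\ge i^\star$).} I would argue by contradiction that no arm $j<i^\star$ lies in $\cB$. If some such $j$ were in $\cB$, then instantiating its defining condition at the larger index $i^\star>j$ forces $C_{i^\star}-C_j>p_{j i^\star}$. But \cref{eq:selectDisProbLow} — the observable reformulation of the optimality inequality \cref{eq:wd1} obtained through \cref{lem:err_prob_contx} — states $C_{i^\star}-C_j\le p_{j i^\star}$ for every $j<i^\star$, a contradiction. Hence every element of $\cB$ has index at least $i^\star$. Combining Steps 1 and 2 yields $\min(\cB)=i^\star$, so $I_t=\min(\cB)$ is the optimal arm for $P$.

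There is no real obstacle here: the lemma is immediate once the optimality inequalities \cref{eq:wd1}--\cref{eq:wd2} are expressed through the observable disagreement probabilities, which the excerpt has already done. The only conceptual point to stress is why this one-sided test suffices, unlike the two-sided criterion $\mathcal{B}^l\cap\mathcal{B}^h$ in \cref{lem:B}: membership in $\mathcal{B}^h=\cB$ already rules out all arms with index below $i^\star$ (Step 2), whereas arms with index above $i^\star$ — which $\mathcal{B}^l$ was used to discard — do no harm, since we take $\min(\cB)$ and $i^\star$ is itself a member (Step 1). Dropping $\mathcal{B}^l$ therefore costs nothing, which is what makes the resulting Thompson Sampling algorithm simpler.
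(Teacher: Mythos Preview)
Your proof is correct and follows essentially the same approach as the paper: both show $i^\star\in\cB$ via the $\WD$ property (\cref{eq:selectDisProbHigh}) and exclude every $j<i^\star$ from $\cB$ by instantiating the membership condition at $i^\star$ and contradicting \cref{eq:selectDisProbLow}. Your presentation is slightly more explicit in separating the two inclusions, but the argument is the same.
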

\begin{proof}
	Let $\istar$ be an optimal arm for the problem instance $P$. Since $\pijs \doteq \Prob{\Yis \ne\Yj}$, we have $\forall j<\istar:\, C_{\istar} - C_j \le \Prob{\Yis \ne\Yj} \implies C_{\istar} - C_j \ngtr\Prob{\Yis \ne\Yj} \implies j \notin \cB, \forall j < \istar$.  If any sub-optimal arm $h \in \cB$ then the index of arm $h$ must be larger than the index of optimal arm $\istar$ in the cascade. Hence the element of the set $\cB$ in round $t$ is given as follows:
	\eqs{
		\cB = \{\istar, h_1, \ldots, h_t, K\},
	} 	
	where $\istar < h_1 < \cdots < h_t < K$.	By construction of set $\cB$, the minimum indexed arm in set $\cB$ is the optimal arm.
\end{proof}

\subsection{Algorithm: \ref{alg:TS_USS}}
We develop a Thompson Sampling based algorithm, named \ref{alg:TS_USS}, that uses \cref{lem:Bx} to select optimal arm. The algorithm works as follows: It sets the prior distribution of disagreement probability for each pair of arms as the Beta distribution, Beta$(1, 1)$, which is the same as Uniform distribution on $[0,1]$. The variable $S_{ij}$ represents the number of rounds when a disagreement is observed between arm $i$ and $j$. Whereas, the variable $F_{ij}$ represents the number of rounds when an agreement is observed. The variables $S_{ij}^{(t)}$ and $F_{ij}^{(t)}$ denote the values of $S_{ij}$ and $F_{ij}$ at the beginning of round $t$.
\begin{algorithm}[!ht]
	\renewcommand{\thealgorithm}{USS-TS}
	\floatname{algorithm}{}
	\caption{Thompson Sampling based Algorithm for Unsupervised Sequential Selection} 
	\label{alg:TS_USS}
	\begin{algorithmic}[1]
		\State Set $ \forall 1 \le i< j \le K: \mathcal{S}_{ij}^{(1)} \leftarrow 1, \mathcal{F}_{ij}^{(1)} \leftarrow 1$
		\For{$t=1,2,\ldots$}
			\State Set $i=1$ and $I_t=0$
			\While{$I_t = 0$}
				\State Play arm $i$
				\State $\forall j \in [i+1, K]:$ compute $\opijtst \leftarrow \mbox{Beta}(\mathcal{S}_{ij}^{(t)}, \mathcal{F}_{ij}^{(t)})$
				\State If $\forall j \in [i+1, K]: C_j - C_i > \opijtst$ or $i=K$ then set $I_t = i$ else  set $i=i+1$
			\EndWhile
		\State Select arm $I_t$ and observe $Y_t^1, Y_t^2, \dots, Y_t^{I_t}$
		\State $\forall 1\le i< j \le I_t:$ update $\mathcal{S}_{ij}^{(t+1)} \leftarrow\mathcal{S}_{ij}^{(t)}+ \one{\Yti \ne \Ytj}, \mathcal{F}_{ij}^{(t+1)} \leftarrow\mathcal{F}_{ij}^{(t)} + \one{\Yti = \Ytj}$
		\EndFor
	\end{algorithmic}
\end{algorithm}

In round $t$, the learner plays the arm $i=1$ and then observe its feedback. For each $(i,j)$ pair, a sample $\opijtst$ is independently drawn from Beta$(S_{ij}^{(t)}, F_{ij}^{(t)})$. Then algorithm checks whether the arm $i$ is the best arm using \cref{eq:selectDisProbHigh} with $\opijtst$ in place of $\pijt$. If the arm $i$ is not the best, then the algorithm plays the next arm, and the same process is repeated. If the arm $i$ is the best arm for the round $t$, then the algorithm stops at arm $I_t=i$ in the round $t$.

After selecting arm $I_t$, the feedback from arms $1, \ldots, I_t$ are observed, which is used to update the values of $S_{ij}^{(t+1)}$  and $F_{ij}^{(t+1)}$. The same process is repeated in the subsequent rounds.

\begin{rem}
	\ref{alg:TS_USS} is adapted for the USS problem from the Thompson Sampling algorithm for stochastic multi-armed bandits. However, the feedback structure and the way arms are selected in the USS setup differ from that in the stochastic multi-armed bandits.
\end{rem}

\subsection{Regret Analysis}
The following definitions and results are useful in subsequent proof arguments. 
\begin{defi}[Action Preference ($\succ_t$)]
	\label{def:arm_pref}
	\ref{alg:TS_USS} prefers the arm $i$ over arm $j$ in round $t$ if:
	\begin{subnumcases}	
	{i \succ_t j \doteq }
	\opji \ge C_i - C_j &\text{if } j<i \label{arm_pref_l} \\
	\opij < C_j - C_i&\text{if } j>i \label{arm_pref_h}
	\end{subnumcases}
\end{defi}

\begin{defi}[Transitivity Property]
	\label{def:trans_prop}
	If $i \succ_t j$ and $j \succ_t k$ then $i \succ_t k$.
\end{defi}

\begin{defi}
	Let $\Ht$ denote the $\sigma$-algebra generated by the history of selected arms and observations at the beginning of the time $t$ and given as follows:
	\begin{equation*}
		\Ht \doteq \left\{I_s, \left\{ Y_s^i \right\}_{i \le I_s}, s = 1, \ldots, t-1\right\},
	\end{equation*}
	where $I_s$ denotes the arm selected and set $\left\{ Y_s^i \right\}_{i \le I_s}$ denotes the observations from arm $1$ to $I_s$ in the round $s$. Define $\mathcal{H}_1 \doteq \{\}.$
\end{defi}

\begin{fact}[Beta-Binomial equality, Fact 1 in \cite{COLT12_agrawal2012analysis}]
	\label{fact:beta_binomial}
	Let $F_{\alpha,\beta}^{beta}(y)$ be the cumulative distribution function (cdf) of the beta distribution with integer parameters $\alpha$ and $\beta$. Let $F_{n,p}^B(\cdot)$ be the cdf of the binomial distribution with parameters $n$ and $p$. Then,
	\begin{equation*}
		F_{\alpha,\beta}^{beta}(y) = 1 - F_{\alpha+\beta-1,y}^B(\alpha-1).
	\end{equation*}
\end{fact}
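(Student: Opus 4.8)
The plan is to avoid manipulating the regularized incomplete beta integral directly and instead interpret both sides as probabilities attached to a single collection of i.i.d.\ uniform variables. Recall that, for integers $\alpha,\beta \ge 1$, $F_{\alpha,\beta}^{beta}(y) = \frac{1}{B(\alpha,\beta)}\int_0^y t^{\alpha-1}(1-t)^{\beta-1}\,dt$ and $F_{n,p}^{B}(k) = \sum_{i=0}^{k}\binom{n}{i}p^i(1-p)^{n-i}$. Set $n \doteq \alpha+\beta-1$ and let $U_1,\dots,U_n$ be independent $\mathrm{Uniform}[0,1]$ random variables with order statistics $U_{(1)} \le \cdots \le U_{(n)}$.

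First I would record the elementary identity $U_{(\alpha)} \sim \mathrm{Beta}(\alpha,\beta)$. Indeed, the density of $U_{(\alpha)}$ at a point $t$ is computed by choosing which of the $n$ indices realizes the value $t$ ($n$ choices) and which $\alpha-1$ of the remaining indices carry values below $t$ ($\binom{n-1}{\alpha-1}$ choices, the other $n-\alpha = \beta-1$ lying above), which gives $\frac{n!}{(\alpha-1)!(\beta-1)!}\,t^{\alpha-1}(1-t)^{\beta-1}$; this is exactly the $\mathrm{Beta}(\alpha,\beta)$ density since $1/B(\alpha,\beta) = n!/((\alpha-1)!(\beta-1)!)$. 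This is the only place the integrality of $\alpha,\beta$ is used, and it gives $F_{\alpha,\beta}^{beta}(y) = \Prob{U_{(\alpha)} \le y}$.

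The crux is the tautological equality of events $\{U_{(\alpha)} \le y\} = \{\#\{i : U_i \le y\} \ge \alpha\}$: the $\alpha$-th smallest of the $U_i$ is at most $y$ exactly when at least $\alpha$ of them are at most $y$. Since the $U_i$ are independent with $\Prob{U_i \le y} = y$, the count $Z \doteq \#\{i : U_i \le y\}$ is $\mathrm{Binomial}(n,y)$, and therefore
\[
F_{\alpha,\beta}^{beta}(y) = \Prob{Z \ge \alpha} = 1 - \Prob{Z \le \alpha-1} = 1 - F_{n,y}^{B}(\alpha-1) = 1 - F_{\alpha+\beta-1,\,y}^{B}(\alpha-1),
\]
which is the asserted equality. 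Continuity of the $U_i$ makes the $\le$ versus $<$ distinction immaterial, so the endpoints $y \in \{0,1\}$ require no separate treatment.

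I do not expect any genuine obstacle here: the statement is classical and the order-statistic argument above is fully self-contained. The only alternative route worth mentioning is a purely analytic one — differentiate $1 - F_{n,y}^{B}(\alpha-1)$ with respect to $y$ term by term, observe that the resulting sum telescopes to $\frac{1}{B(\alpha,\beta)}y^{\alpha-1}(1-y)^{\beta-1}$, and match the boundary value at $y=0$ — whose only real cost is the bookkeeping in the telescoping, which the probabilistic argument neatly circumvents.
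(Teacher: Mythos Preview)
Your argument is correct and is the standard order-statistics proof of the Beta--Binomial identity. Note, however, that the paper does not actually prove this statement: it is recorded as a cited fact (Fact~1 in \cite{COLT12_agrawal2012analysis}) and invoked without proof, so there is no ``paper's own proof'' to compare against. Your proposal supplies a clean self-contained derivation where the paper simply appeals to the literature.
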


\begin{lem}[Lemma 2 in \cite{AISTATS13_agrawal2013further}]
	\label{lem:betaExpBound}
	Let $n \ge 0$ and $\hat{\mu}_n$ be the empirical average of $n$ samples from Bernoulli($\mu$). Let $x< \mu$ and $q_n(x) \doteq 1 -  F_{n\hat{\mu}_n + 1, n(1-\hat{\mu}_n) + 1}^{beta}(x)$ be the probability that the posterior sample from the Beta distribution with its parameter $n\hat{\mu}_n + 1, n(1-\hat{\mu}_n) + 1$ exceeds $x$. Then,
	\begin{equation*}
		\EE{\frac{1}{q_n(x)}-1} \le
		\begin{cases}
			\frac{3}{\Delta(x)} & \mbox{if } n < 8/\Delta(x) \\
			\Theta\left(\exp^{-\frac{n\Delta(x)^2}{2}} + \frac{\exp^{-{nd(x,\mu)}}}{(n+1)\Delta(x)^2} + \frac{1}{\exp^{\frac{n\Delta(x)^2}{4}} - 1} \right) & \mbox{if } n \ge 8/\Delta(x),
		\end{cases}
	\end{equation*}
	where $\Delta(x) \doteq \mu - x$ and $d(x, \mu) \doteq x\log\left(\frac{x}{\mu}\right) + (1-x)\log\left(\frac{1-x}{1-\mu}\right)$.
\end{lem}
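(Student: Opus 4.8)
The plan is to eliminate the Beta distribution in favour of a binomial tail via the Beta--Binomial equality (Fact~\ref{fact:beta_binomial}), and then to control the expectation of the reciprocal of that tail by splitting on the value of the empirical count $n\hat\mu_n$: on its typical range the posterior sample clears $x$ with overwhelming probability so the reciprocal is close to $1$, while the rare range where the count falls near or below $xn$ is handled by balancing an exponentially small probability against a (possibly exponentially large) worst-case bound on the reciprocal.

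Concretely, set $S\doteq n\hat\mu_n$, so $S\sim\mathrm{Bin}(n,\mu)$ and the posterior is $\mathrm{Beta}(S+1,\,n-S+1)$. Applying Fact~\ref{fact:beta_binomial} with $\alpha=S+1$, $\beta=n-S+1$ (so $\alpha+\beta-1=n+1$ and $\alpha-1=S$),
\begin{equation*}
  q_n(x)\;=\;1-F^{beta}_{S+1,\,n-S+1}(x)\;=\;F^{B}_{n+1,\,x}(S),
\end{equation*}
i.e.\ $q_n(x)=\Prob{Z\le S}$ for $Z\sim\mathrm{Bin}(n+1,x)$ independent of $S$. Hence
\begin{equation*}
  \EE{\frac{1}{q_n(x)}-1}\;=\;\sum_{s=0}^{n}\Prob{S=s}\,\frac{1-F^{B}_{n+1,x}(s)}{F^{B}_{n+1,x}(s)},
\end{equation*}
and it remains to bound $g(s)\doteq\bigl(1-F^{B}_{n+1,x}(s)\bigr)/F^{B}_{n+1,x}(s)$ and sum against the $\mathrm{Bin}(n,\mu)$ weights; write $\Delta\doteq\Delta(x)=\mu-x>0$.

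Two regimes drive the estimate. First, once $s$ exceeds the mean $(n+1)x$ of $Z$ by a margin of order $n\Delta$ --- in particular for $s$ near the mode $\mu n$ of $S$, which requires $n\gtrsim 1/\Delta$ so that $\mu n$ comfortably clears $(n+1)x$, and this is exactly where the threshold $8/\Delta(x)$ enters --- one has $F^{B}_{n+1,x}(s)\ge 1/2$ (the binomial median lies within $1$ of its mean) while $1-F^{B}_{n+1,x}(s)=\Prob{Z\ge s+1}$ is an upper tail probability deep in the tail, so Chernoff gives $g(s)=O\bigl(\exp(-\Omega(n\Delta^2))\bigr)$; summing this over the bulk of $S$ yields the exponentially small pieces $\exp(-n\Delta^2/2)$ and $1/(\exp(n\Delta^2/4)-1)$. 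Second, for $s$ at or below $(n+1)x$, $F^{B}_{n+1,x}(s)$ is a lower tail and $g(s)$ may be exponentially large, but then $\Prob{S=s}$ is itself exponentially small because the $\mu$-coin's empirical mean has dropped to $x$; here one uses the Chernoff/KL bound $\Prob{S\le (n+1)x}\le\exp(-n\,d(x,\mu))$ and controls the per-term product by propagating the consecutive pmf ratios $f^{B}_{n+1,x}(\ell+1)/f^{B}_{n+1,x}(\ell)=\tfrac{(n+1-\ell)x}{(\ell+1)(1-x)}$, whose telescoping produces the polynomial factor, to obtain the $\exp(-n\,d(x,\mu))/((n+1)\Delta^2)$ term. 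When $n<8/\Delta(x)$ every exponent above is $O(1)$, the two estimates collapse, and a single crude bound $g(s)=O(1/\Delta)$ holds uniformly over $s$, giving the $3/\Delta(x)$ bound after tracking the constant.

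The main obstacle is the second regime: extracting a clean, constant-explicit bound on the reciprocal of a binomial \emph{lower} tail and summing it against the binomial weights so that the exponentially small probability exactly dominates the large reciprocal with the stated exponents and polynomial factor --- this is precisely the delicate book-keeping performed in \cite{AISTATS13_agrawal2013further}. Some additional care is needed for the degenerate cases ($\mu$ close to $0$ or $1$, or $x$ close to $\mu$) and for the integer rounding of $\mu n$ and $(n+1)x$, both handled by monotonicity of $F^{B}$ in its parameters and by absorbing $O(1)$ index shifts into the constants.
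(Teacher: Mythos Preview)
The paper does not prove this lemma at all: it is quoted verbatim as Lemma~2 of \cite{AISTATS13_agrawal2013further} and used as a black box in the analysis of \ref{alg:TS_USS}. Your sketch is consistent with the argument in that reference --- Beta--Binomial conversion, splitting on the empirical count $S=n\hat\mu_n$ relative to $(n+1)x$, Chernoff on the bulk, and the delicate balancing in the lower-tail regime --- and you correctly identify that the second regime is where the real work lies and explicitly defer it to \cite{AISTATS13_agrawal2013further}. So there is nothing to compare against in the present paper; as a standalone proof your proposal is an accurate high-level outline but not self-contained, since the step you flag as ``precisely the delicate book-keeping performed in \cite{AISTATS13_agrawal2013further}'' is the entire substance of the bound.
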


Recall that $\pijs$ is the disagreement probability between arm $i^\star$ and $j$ and $\opijsts$ is the sample of $\pijs$ using Beta distribution with the $t$ samples. 
Next, we bound the probability by which \ref{alg:TS_USS} selects the sub-optimal arm whose index is smaller than the optimal arm.
\begin{defi}
	\label{def:q}
	For any $j < i^\star$, define $q_{j,t}$ as the probability
	\begin{equation*}
		q_{j,t} \doteq \Prob{\opijsts \ge \pijs - \xi_j|\Ht}.
	\end{equation*}
\end{defi}

\begin{restatable}{lem}{relationLowerOptimal}
	\label{lem:relationLowerOptimal}
	Let $P \in \PWD$ and satisfies the transitivity property. If $j < i^\star$ then the probability by which \ref{alg:TS_USS} selects any sub-optimal arm $j$ over the optimal arm is given by
	\begin{equation*}
		\Prob{I_t =j, j< \istar|\Ht} \le \frac{(1-q_{j,t})}{q_{j,t}}\Prob{I_t \ge i^\star|\Ht}.
	\end{equation*}
\end{restatable}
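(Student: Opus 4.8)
The plan is to condition on the history $\Ht$ and compare, event by event, the situation where \ref{alg:TS_USS} selects the sub-optimal arm $j < i^\star$ with the situation where it selects some arm with index at least $i^\star$. The key observation is that \ref{alg:TS_USS} scans arms in increasing order starting from $1$, and stops at the first arm $i$ that ``passes'' the one-sided test $C_h - C_i > \opiht$ for all $h > i$. So if $I_t = j < i^\star$, then in particular arm $j$ passed its test against arm $i^\star$, i.e. $\opijsts < C_{i^\star} - C_j$; equivalently, $j \succ_t i^\star$ in the notation of \cref{def:arm_pref}\eqref{arm_pref_l}. Conversely, for the algorithm to reach (and select) an arm with index $\ge i^\star$, a necessary condition is that arm $j$ did \emph{not} stop the scan, and more usefully, that the test of $i^\star$ against $j$ came out the other way, i.e. $\opijsts \ge C_{i^\star} - C_j$, which under the $\WD$ property (so $C_{i^\star} - C_j \le \pijs$ for $j < i^\star$, using \cref{eq:selectDisProbLow}) is implied by the cleaner event $\{\opijsts \ge \pijs - \xi_j\}$ appearing in \cref{def:q}.

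Concretely, I would fix $\Ht$ and recall that conditioned on $\Ht$ the sample $\opijsts$ is drawn from a fixed Beta distribution (its parameters $\mathcal{S}_{i^\star j}^{(t)}, \mathcal{F}_{i^\star j}^{(t)}$ are $\Ht$-measurable), and this same sample governs whether arm $j$ passes its test against $i^\star$. The standard Thompson-sampling trick (as in \cite{COLT12_agrawal2012analysis, AISTATS13_agrawal2013further}) is: let $E_j$ be the ``good'' event $\{\opijsts \ge \pijs - \xi_j\}$, which has conditional probability $q_{j,t}$ by \cref{def:q}. On $E_j$, since $\pijs - \xi_j \ge \pijs - (\text{gap}) $ pushes the sample above the threshold $C_{i^\star}-C_j$ (here I need $\xi_j$ chosen exactly so that $\pijs - \xi_j \ge C_{i^\star} - C_j$, which follows from the definition of $\xi_j$ in \eqref{def_xi_l} together with \cref{lem:err_prob_contx} and \cref{eq:selectDisProbLow}), the test of $i^\star$ against $j$ fails for arm $j$, so the scan passes arm $j$ and by transitivity and the optimality structure of \cref{lem:Bx} the eventually selected arm has index $\ge i^\star$. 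Thus $\{I_t = j\} \cap E_j = \varnothing$ essentially, whereas $\{I_t \ge i^\star\} \supseteq (\text{stuff}) \cap E_j$. The event $\{I_t = j\}$ depends on $\opijsts$ only through the complementary event $E_j^c$, while the event $\{I_t \ge i^\star\}$ contains an independent-in-$\opijsts$ ``remainder'' event $\cG$ intersected with $E_j$; crucially $\{I_t = j\} \subseteq E_j^c \cap \cG$ and $\{I_t \ge i^\star\} \supseteq E_j \cap \cG$, where $\cG$ is determined by the other samples $\{\opiht^{(t)}\}_{(i,h)\ne(i^\star,j)}$, independent of $\opijsts$ given $\Ht$. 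Then
\[
\Prob{I_t = j \mid \Ht} \le \Prob{E_j^c \cap \cG \mid \Ht} = \Prob{E_j^c\mid\Ht}\,\Prob{\cG\mid\Ht} = \frac{1-q_{j,t}}{q_{j,t}}\,\Prob{E_j\mid\Ht}\,\Prob{\cG\mid\Ht} \le \frac{1-q_{j,t}}{q_{j,t}}\,\Prob{I_t\ge i^\star\mid\Ht},
\]
which is exactly the claim.

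The main obstacle — and the step I would be most careful about — is pinning down the ``remainder'' event $\cG$ so that it is genuinely independent of $\opijsts$ given $\Ht$ while simultaneously satisfying both inclusions $\{I_t=j\}\subseteq E_j^c\cap\cG$ and $E_j\cap\cG\subseteq\{I_t\ge i^\star\}$. This requires using that the scan reaches arm $j$ (an event determined by samples $\opiht$ with $i<j$, not involving $\opijsts$) and that, once past $j$, the test for $i^\star$ against the still-smaller indices is not what stops us — here is exactly where the transitivity property of \cref{def:trans_prop} and the $\WD$ structure from \cref{lem:Bx} are needed, to guarantee that passing arm $j$ (via $E_j$) forces the final stopping index to be at least $i^\star$. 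The rest (Beta-sample measurability, the $\frac{1-q}{q}$ bookkeeping) is routine once this combinatorial skeleton is set up correctly.
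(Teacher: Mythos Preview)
Your approach---factor out the Beta sample via $E_j=\{\opijsts\ge\pijs-\xi_j\}$ and an independent remainder event $\cG$, then apply the $(1-q)/q$ bookkeeping---is exactly the paper's. The paper's remainder is simply $\cG=\{\,j\succ_t k\ \text{for all}\ k>\istar\,\}$, which depends only on the samples $\tilde{p}_{jk}^{(t)}$ with $k>\istar$; no ``scan reaches $j$'' component is needed, and the inclusion $\{I_t=j\}\subseteq E_j^c\cap\cG$ is immediate since $I_t=j$ forces $j\succ_t k$ for every $k>j$ (note also $\pijs-\xi_j=C_{\istar}-C_j$ exactly, not just $\ge$). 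For the lower bound the paper does \emph{not} track the scan---your statement that passing arm $j$ via $E_j$ ``forces the final stopping index to be at least $\istar$'' is false as written, since the scan could still halt at some $l$ with $j<l<\istar$---but instead lower-bounds through $\Prob{I_t=\istar\mid\Ht}$: on $E_j\cap\cG$ one has $\istar\succ_t j$ and $j\succ_t k$ for each $k>\istar$, so transitivity gives $\istar\succ_t k$ for all $k>\istar$, i.e.\ $\istar$ passes its own test; combined with \cref{lem:Bx} this yields $\Prob{I_t\ge\istar\mid\Ht}\ge q_{j,t}\,\Prob{\cG\mid\Ht}$, and dividing gives the claim.
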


\begin{restatable}{lem}{probLow}
	\label{lem:probLow}
	Let $P \in \PWD$ and satisfies the transitivity property. If $s$ be the number of times the sub-optimal arm $j$ is selected by \ref{alg:TS_USS} then, for any $j < i^\star$,
	\begin{equation*}
		\sum_{t=1}^T\Prob{I_t=j, j< \istar} \le \frac{24}{\xi_j^2} + \sum_{s \ge 8/\xi_j} \Theta\left(\exp^{-{s\xi_j^2}/{2}} + \frac{\exp^{-{sd( \pijs - \xi_j,\pijs)}}}{(s+1)\xi_j^2} + \frac{1}{\exp^{{s\xi_j^2}/{4}} - 1} \right).
	\end{equation*}
\end{restatable}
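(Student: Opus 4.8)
The plan is to bound $\sum_{t=1}^T \Prob{I_t = j,\, j < \istar}$ by first invoking \cref{lem:relationLowerOptimal}, which gives $\Prob{I_t = j,\, j<\istar \mid \Ht} \le \tfrac{1-q_{j,t}}{q_{j,t}}\Prob{I_t \ge \istar \mid \Ht}$. Since $\Prob{I_t \ge \istar \mid \Ht} \le 1$, summing over $t$ and taking expectations reduces the problem to controlling $\sum_{t=1}^T \EE{\tfrac{1-q_{j,t}}{q_{j,t}}}$. The key observation is that $q_{j,t}$ is exactly the kind of posterior-tail probability analyzed in \cref{lem:betaExpBound}: it is the probability that a Beta sample with parameters built from $s$ Bernoulli($\pijs$) observations exceeds the threshold $x = \pijs - \xi_j < \pijs$, so $\Delta(x) = \pijs - (\pijs - \xi_j) = \xi_j$.

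The next step is a reindexing from rounds $t$ to the number of observations $s$ of the pair $(i^\star, j)$. Because arm $j$ sits below $\istar$ in the cascade, every time $I_t \ge j$ we observe an additional comparison between arms $i^\star$ and $j$; more to the point, in rounds where the event $\{I_t = j\}$ occurs we have just made one more observation, so the value of $q_{j,t}$ depends on $\Ht$ only through the count $s$ of such observations. A standard argument (as in \cite{AISTATS13_agrawal2013further, COLT12_agrawal2012analysis}) lets us write $\sum_{t=1}^T \EE{\tfrac{1-q_{j,t}}{q_{j,t}}} \le \sum_{s=0}^{T-1} \EE{\tfrac{1}{q_{j,s}} - 1}$, where $q_{j,s}$ now denotes the posterior-exceedance probability after exactly $s$ observations. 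Then I would split the sum at $s = 8/\xi_j$ and apply the two cases of \cref{lem:betaExpBound} with $\mu = \pijs$, $x = \pijs - \xi_j$, $\Delta(x) = \xi_j$. For $s < 8/\xi_j$ each term is at most $3/\xi_j$, and there are at most $8/\xi_j$ such terms, giving the $24/\xi_j^2$ contribution; for $s \ge 8/\xi_j$ the terms are exactly the $\Theta(\cdot)$ expression in the statement with $d(\pijs - \xi_j, \pijs)$ playing the role of $d(x,\mu)$, and summing over $s$ yields the stated bound.

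The main obstacle — and the step that needs the most care — is justifying the passage from a sum over rounds $t$ to a sum over observation counts $s$, i.e. that the "extra" rounds in which $I_t = j$ but $q_{j,t}$ takes a value already counted are handled correctly, and that $q_{j,t}$ is measurable with respect to the right filtration so that conditioning on $\Ht$ is legitimate. This requires arguing that each distinct value of $s$ contributes at most once to the telescoped sum, which follows because $q_{j,t}$ is determined by the posterior parameters, hence by $s$, and the count $s$ is non-decreasing in $t$ (and strictly increases whenever we would otherwise double-count). Once this reindexing is in place, the rest is a direct substitution into \cref{lem:betaExpBound} and bookkeeping, so I would spend the bulk of the write-up making the reindexing rigorous and then invoke the lemma essentially verbatim.
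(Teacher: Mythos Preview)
There is a genuine gap in your reindexing step, and it stems from bounding $\Prob{I_t \ge \istar \mid \Ht}$ by $1$ too early. Your claim that ``every time $I_t \ge j$ we observe an additional comparison between arms $i^\star$ and $j$'' is false: since $j < \istar$, selecting $I_t = j$ reveals $Y_t^1,\ldots,Y_t^j$ only, \emph{not} $Y_t^{\istar}$. The posterior parameters $S_{\istar j}, F_{\istar j}$ (and hence $q_{j,t}$) update only in rounds with $I_t \ge \istar$. Consequently, in rounds where $I_t = j$ (the very rounds you are trying to count), $q_{j,t}$ does \emph{not} change, and the same value of $(1-q_{j,t})/q_{j,t}$ can repeat across arbitrarily many consecutive rounds. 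The inequality $\sum_{t=1}^T \EE{\tfrac{1-q_{j,t}}{q_{j,t}}} \le \sum_{s=0}^{T-1}\EE{\tfrac{1}{q_{j,s}}-1}$ therefore does not follow: each observation count $s$ can contribute many times to the left-hand side, not once.

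The paper's proof avoids this by \emph{retaining} the factor $\Prob{I_t \ge \istar \mid \Ht}$ from \cref{lem:relationLowerOptimal} and converting it to the indicator $\one{I_t \ge \istar}$ via the tower property, arriving at $\sum_{t=1}^T \EE{\tfrac{1-q_{j,t}}{q_{j,t}}\one{I_t \ge \istar}}$. This indicator is precisely what makes the reindexing work: if $s_m$ is the round of the $m$-th observation of arm $\istar$, then $q_{j,t}$ is constant on $\{s_m+1,\ldots,s_{m+1}\}$ and $\sum_{t=s_m+1}^{s_{m+1}}\one{I_t \ge \istar} = 1$ (only the final round $s_{m+1}$ contributes). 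This collapses the time sum to $\sum_{m=0}^{T-1}\EE{\tfrac{1}{q_{j,s_m+1}}-1}$, after which your application of \cref{lem:betaExpBound} with $\mu=\pijs$, $x=\pijs-\xi_j$, $\Delta(x)=\xi_j$ goes through exactly as you describe. So the missing idea is not the reindexing per se, but that the indicator $\one{I_t \ge \istar}$ --- not the event $\{I_t = j\}$ --- is the counter that ticks once per posterior update.
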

\begin{proof}\textbf{(sketch)}
	Using \cref{lem:relationLowerOptimal} and property of conditional expectations, we can have $\sum_{t=1}^T \Prob{I_t=j, j< \istar} = \sum_{t=1}^T \EE{ \Prob{jI_t=j, j< \istar|\Ht}}$. By using some simple algebraic manipulations on quantity $\sum_{t=1}^T  \EE{ \Prob{I_t=j, j< \istar|\Ht}}$ with \cref{lem:betaExpBound}, we can get the above stated upper bound.
\end{proof}

Our next result is useful to bound the probability by which \ref{alg:TS_USS} prefers the sub-optimal arms whose index is larger than the optimal arm.

\begin{restatable}{lem}{probHighPartTwo}
	\label{lem:probHighPart2}
	Let $\hpijst$ be the empirical estimate of $\pijs$ and $j>\istar$. Then, for any $x_j > \pijs$ and $y_j > x_j$,
	\begin{equation*}
		\sum_{t=1}^T \Prob{\hpijst \le x_j, \opijsts > y_j} \le \frac{\ln T}{d(x_j, y_j)} + 1.
	\end{equation*}
\end{restatable}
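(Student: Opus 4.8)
The plan is to bound the sum $\sum_{t=1}^T \Prob{\hpijst \le x_j,\ \opijsts > y_j}$ by splitting over the number of times $s$ that the pair $(i^\star,j)$ has been compared (equivalently, the number of effective samples underlying $\hpijst$ and the Beta posterior). Since $\opijsts$ is drawn from $\mathrm{Beta}(\mathcal{S}_{i^\star j}^{(t)}, \mathcal{F}_{i^\star j}^{(t)})$ with $\mathcal{S}_{i^\star j}^{(t)} + \mathcal{F}_{i^\star j}^{(t)} = s+2$ when $s$ comparisons have occurred, I would first rewrite the event of interest conditionally on the empirical count: for a fixed number of samples $s$, the estimate $\hpijst$ is fixed (given the observed disagreements) and the posterior sample is independent of the past given $\mathcal{H}_t$. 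The key point is that each value of $s$ can be the ``current'' sample count for at most a bounded number of rounds before a new comparison occurs; more precisely, if at round $t$ we have $s$ samples and the event $\{\hpijst \le x_j\}$ holds, then a fresh draw $\opijsts > y_j$ triggers (through the selection rule) an additional comparison, so the count increments. This lets me convert the sum over $t$ into a sum over $s = 1, 2, \dots$, with each term weighted by the probability that a single Beta draw exceeds $y_j$ given that the empirical mean is at most $x_j < y_j$.

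The next step is to control, for each $s$, the quantity $\Prob{\opijsts > y_j \mid \hpijst \le x_j,\ \text{$s$ samples}}$. Using Fact~\ref{fact:beta_binomial} (Beta--Binomial equality), the probability that a $\mathrm{Beta}(\alpha,\beta)$ sample with $\alpha + \beta = s+2$, $\alpha - 1 = s\hat\mu_s \le s x_j$, exceeds $y_j$ equals $F^B_{s+1,\, y_j}(s\hat\mu_s)$, the probability that a $\mathrm{Binomial}(s+1, y_j)$ is at most $s\hat\mu_s \le s x_j$. Since $x_j < y_j$, this is a lower-tail large-deviation event for the binomial, and a Chernoff bound gives a decay of order $\exp(-s\, d(x_j, y_j))$, where $d(x_j,y_j) = x_j\log(x_j/y_j) + (1-x_j)\log((1-x_j)/(1-y_j))$ is the binary KL divergence appearing in Lemma~\ref{lem:betaExpBound}. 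Summing the geometric-type series $\sum_{s\ge 1} \exp(-s\, d(x_j,y_j))$ and comparing it (via the standard integral/series bound for $\sum 1/t$ versus $\ln T$, truncated at the number of rounds) against $\ln T$ yields the claimed bound $\frac{\ln T}{d(x_j,y_j)} + 1$; the additive $1$ absorbs the $s=0$ (or the boundary) term and the rounding in the truncation.

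I expect the main obstacle to be the bookkeeping that converts $\sum_{t=1}^T$ into a clean $\sum_s$: one has to argue carefully that, on the event $\{\hpijst \le x_j,\ \opijsts > y_j\}$, arm $i^\star$ (or some arm of index $\le j$) is \emph{not} selected to stop before reaching the comparison with $j$ — or more directly, that this event forces the comparison counter $\mathcal{N}_{i^\star j}$ to increase — so that no sample count $s$ is revisited too often. This is the place where the structure of \ref{alg:TS_USS} (the inner while-loop that plays arms $1, 2, \dots$ in order and only stops when all pairwise tests pass) has to be invoked; a conditioning argument on $\mathcal{H}_t$ together with the tower property handles the dependence of $\hpijst$ on the (random) sample count. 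Once that reduction is in place, the large-deviation estimate and the harmonic-series comparison are routine.
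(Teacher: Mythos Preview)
Your approach differs substantially from the paper's, and it contains a genuine gap.

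The paper does \emph{not} convert the sum over $t$ into a sum over sample counts $s$. It fixes the threshold $L_j(T):=\ln T/d(x_j,y_j)$ and splits
\[
\sum_{t=1}^T \Prob{\hpijst \le x_j,\ \opijsts > y_j,\ N_j(t)\le L_j(T)}
\;+\;
\sum_{t=1}^T \Prob{\hpijst \le x_j,\ \opijsts > y_j,\ N_j(t)> L_j(T)},
\]
where $N_j(t)$ is the number of observed $(i^\star,j)$ comparisons before round $t$. The first piece contributes at most $L_j(T)$. For the second piece one conditions on $\mathcal{H}_t$, uses that the posterior $\mathrm{Beta}(\hpijst N_j(t)+1,(1-\hpijst)N_j(t)+1)$ is stochastically dominated by $\mathrm{Beta}(x_jN_j(t)+1,(1-x_j)N_j(t))$ when $\hpijst\le x_j$, applies the Beta--Binomial identity (Fact~\ref{fact:beta_binomial}), and then a Chernoff bound to get $\Prob{\opijsts>y_j\mid\mathcal{H}_t}\le e^{-N_j(t)d(x_j,y_j)}\le e^{-L_j(T)d(x_j,y_j)}=1/T$. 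Summing over $t$ gives at most $1$. The form $\frac{\ln T}{d(x_j,y_j)}+1$ thus comes directly from the two pieces of the threshold split; no geometric series is involved.

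Your plan has two problems. First, the bookkeeping step you correctly flag as the ``main obstacle'' cannot be resolved the way you propose: the event $\{\opijsts>y_j\}$ does \emph{not} force $I_t\ge j$, and hence does not force the counter $\mathcal{N}_{i^\star j}$ to increment. In the only place this lemma is used (Lemma~\ref{lem:probHigh}) one takes $y_j<\pijs+\xi_j=C_j-C_{i^\star}$, so $\opijsts>y_j$ is perfectly compatible with $\opijsts<C_j-C_{i^\star}$, i.e.\ with the stopping test at arm $i^\star$ \emph{passing}; the inner while-loop of \ref{alg:TS_USS} may then stop at $i^\star$ (or earlier) and arm $j$ is never observed in that round. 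So the same sample count $s$ can persist across many rounds while your event keeps occurring, and the $\sum_t\to\sum_s$ reduction breaks down. Second, even granting the reduction, your final step is incoherent: $\sum_{s\ge 1}e^{-s\,d(x_j,y_j)}$ is $O\!\big(1/d(x_j,y_j)\big)$, not $\Theta\!\big(\ln T/d(x_j,y_j)\big)$. There is no ``comparison with $\sum 1/t$'' that manufactures a $\ln T$ factor out of a geometric series; you are conflating two unrelated summation tricks. (If the reduction were valid you would in fact obtain a stronger $O(1/d)$ bound, but that is moot since the reduction is not valid here.)
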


\begin{restatable}{lem}{probHighPartOne}
	\label{lem:probHighPart1}
	For any $x_j > \pijs$,
	\begin{equation*}
		\sum_{t=1}^T \Prob{\hpijst > x_j} \le \frac{1}{d(x_j, \pijs)}.
	\end{equation*}
\end{restatable}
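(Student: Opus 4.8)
The plan is to view $\hpijst$ as the empirical average of the i.i.d.\ pairwise disagreement indicators $\one{\Ysis\neq\Ysj}$ gathered through round $t-1$, and then combine a Chernoff-type deviation estimate with a geometric summation, exactly as in the standard analysis of Thompson Sampling. Since each task and feedback vector is drawn independently from the fixed distribution $\bd$, every time the pair $(\istar,j)$ is observed — which happens precisely in the rounds with $I_t\ge j$, because $j>\istar$ and the learner sees $Y_t^1,\dots,Y_t^{I_t}$ — the corresponding indicator is a fresh $\Bern(\pijs)$ draw, and this draw is independent of $\Ht$ since the decision whether to observe the pair at round $s$ is $\mathcal{H}_s$-measurable. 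Writing $n_t$ for the number of times the pair $(\istar,j)$ has been observed before round $t$ and $\hat p_{\istar j,n}$ for the empirical mean of the first $n$ such draws, we thus have $\hpijst=\hat p_{\istar j,\,n_t}$ with $n_t$ non-decreasing in $t$, and $\hat p_{\istar j,n}$ genuinely the average of $n$ independent $\Bern(\pijs)$ variables (an optional-stopping argument handles the data-dependence of $n$).

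First I would record the one-sample deviation bound: for $x_j>\pijs$, the Chernoff--Hoeffding inequality in relative-entropy form gives $\Prob{\hat p_{\istar j,n}>x_j}\le\exp\!\bigl(-n\,d(x_j,\pijs)\bigr)$, where $d(\cdot,\cdot)$ is the binary KL divergence (the same $d$ appearing in \cref{lem:betaExpBound}); equivalently one may obtain this by rewriting the empirical frequency as a binomial tail via \cref{fact:beta_binomial}. Next I would reduce the round-indexed sum to a sum over observation counts: conditioning on the value of $n_t$ and using that $\hpijst$ changes only when a new observation of the pair arrives, the event $\{\hpijst>x_j\}$ over the rounds sharing a common value $n_t=n$ is determined by the single event $\{\hat p_{\istar j,n}>x_j\}$, so $\sum_{t=1}^{T}\Prob{\hpijst>x_j}$ is controlled by $\sum_{n\ge1}\Prob{\hat p_{\istar j,n}>x_j}$. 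Summing the resulting geometric series, $\sum_{n\ge1}\exp\!\bigl(-n\,d(x_j,\pijs)\bigr)=\dfrac{1}{\exp(d(x_j,\pijs))-1}\le\dfrac{1}{d(x_j,\pijs)}$, using $e^{u}-1\ge u$ for $u>0$, which gives the claimed bound.

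The delicate point, and the step I expect to require the most care, is the passage from $\sum_{t=1}^{T}$ to $\sum_{n\ge1}$: since the observation count $n_t$ is itself random, one must argue that no spurious multiplicative factor counting "stalled'' rounds appears — this is handled by pairing each contributing round with the round at which the observation count first attains the corresponding value, or, as in the companion \cref{lem:probHighPart2}, by working directly with the joint event that also forces a new observation of the pair. Once that bookkeeping is pinned down, the rest is just the elementary KL-Chernoff bound and the inequality $e^{u}-1\ge u$, so the clean constant $\tfrac{1}{d(x_j,\pijs)}$ with no additive slack follows.
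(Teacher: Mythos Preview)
Your proposal is correct and follows essentially the same approach as the paper: reduce the time-indexed sum to a sum over observation counts of the pair $(\istar,j)$, apply the Chernoff--Hoeffding bound $\Prob{\hat p_{\istar j,n}>x_j}\le e^{-n\,d(x_j,\pijs)}$, and sum the resulting geometric series. Your handling of the geometric tail (summing from $n\ge 1$ and using $e^u-1\ge u$) is in fact slightly cleaner than the paper's, and you correctly flag the bookkeeping step --- passing from $\sum_t$ to $\sum_n$ when the observation count can stall --- which the paper simply asserts.
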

\begin{proof}\textbf{(sketch)}
	This result is easily proved by using Chernoff-Hoeffding bound. See details in \cref{sec:uss_appendix}.
\end{proof}

\begin{restatable}{lem}{probHigh}
	\label{lem:probHigh}
	Let $P \in \PWD$. For any $\epsilon > 0$ and $j > i^\star$,
	\begin{equation*}
		\sum_{t=1}^T\Prob{j \succ_t i^\star, j> \istar} \le (1+\epsilon)\frac{\ln T}{d(\pijs, \pijs + \xi_j)} + O\left(\frac{1}{\epsilon^2}\right).
	\end{equation*}
\end{restatable}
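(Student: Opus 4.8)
The goal is to bound $\sum_{t=1}^T \Prob{j \succ_t i^\star, j > \istar}$, where by Definition~\ref{def:arm_pref} the event $j \succ_t i^\star$ for $j > \istar$ means $\opijsts < C_j - C_{\istar}$. By the $\WD$ characterization \cref{def:Xi}--\cref{def_xi_h}, we have $C_j - C_{\istar} = \pijs + \xi_j$ (using $\kappa_j = \pijs - (\gamma_{\istar} - \gamma_j)$ and $\xi_j = \Delta_j - \kappa_j$, one checks $C_j - C_{\istar} - \pijs = \xi_j$). So I need to bound the number of rounds where the posterior sample $\opijsts$ falls below the "inflated" threshold $\pijs + \xi_j$. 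The natural decomposition is: pick $\epsilon > 0$ and split according to whether the empirical estimate $\hpijst$ is close to or far from $\pijs$. Concretely, write $x_j \doteq \pijs + \frac{\epsilon}{1+\epsilon}\xi_j$ and $y_j \doteq \pijs + \xi_j$, so that $\pijs < x_j < y_j$, and decompose
\[
\Prob{\opijsts < y_j} \le \Prob{\opijsts < y_j,\, \hpijst \le x_j} + \Prob{\hpijst > x_j}.
\]
Wait — the event is $\opijsts < y_j$, which is not the complement handled by \cref{lem:probHighPart2}; I should instead reason about the \emph{failure} to reject arm $\istar$. Let me reorganize: the relevant bad event is that the sample is \emph{small}, i.e. $\opijsts < C_j - C_{\istar} = y_j$.

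**Main steps.** First I would summon \cref{lem:probHighPart1} to control $\sum_t \Prob{\hpijst > x_j} \le 1/d(x_j,\pijs)$, which is an $O(1)$ term (constant in $T$), and with the choice $x_j = \pijs + \Theta(\epsilon)\xi_j$ this is $O(1/\epsilon^2)$ by Pinsker-type lower bounds on the KL divergence $d(x_j,\pijs) \gtrsim (x_j - \pijs)^2 \asymp \epsilon^2 \xi_j^2$. Second, on the complementary event $\hpijst \le x_j$, I need to bound $\sum_t \Prob{\opijsts < y_j,\, \hpijst \le x_j}$. This is where \cref{lem:probHighPart2} enters: it bounds $\sum_t \Prob{\hpijst \le x_j,\, \opijsts > y_j} \le \frac{\ln T}{d(x_j,y_j)} + 1$ — but note that this controls the posterior sample being \emph{large}, whereas I want it \emph{small}. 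The resolution is that the Beta posterior concentrates: when $\hpijst \le x_j$ (so the empirical mean is below $x_j < y_j$), the sample $\opijsts$ exceeds $y_j$ only with small probability, but it is \emph{also} the case that conditioning on a small empirical mean makes the sample typically small, and the "number of times the sample is far below where the empirical mean predicts" is controlled by the same Beta-binomial machinery. So I would mirror the argument of \cref{lem:probHighPart2} but in the lower tail — or, more cleanly, observe that for arm $j$ to beat $\istar$ we actually need the sample small, and over the rounds where $j$ is played the empirical estimate $\hpijst$ itself converges to $\pijs < y_j$, and the number of rounds the posterior is pulled below $y_j$ once $\hpijst$ is close to $\pijs$ is again $O(\ln T / d(\pijs + \text{gap}, y_j))$-type. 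Combining, $\sum_t \Prob{j \succ_t i^\star, j > \istar} \le (1+\epsilon)\frac{\ln T}{d(\pijs,\pijs+\xi_j)} + O(1/\epsilon^2)$, where the $(1+\epsilon)$ factor arises from $d(x_j, y_j) \to d(\pijs, y_j)$ as $\epsilon \to 0$ together with a continuity/ratio estimate $d(\pijs,y_j)/d(x_j,y_j) \le 1 + \epsilon$ for suitably chosen $x_j$.

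**The main obstacle.** The delicate point is bookkeeping the two regimes so the leading $\ln T$ coefficient comes out as $(1+\epsilon)/d(\pijs, \pijs+\xi_j)$ rather than $1/d(x_j,\pijs+\xi_j)$ — i.e. getting the denominator to be the KL divergence anchored at the \emph{true} disagreement probability $\pijs$, not at the shifted $x_j$. This requires choosing $x_j$ as a function of $\epsilon$ that simultaneously (i) keeps $x_j$ strictly between $\pijs$ and $y_j$, (ii) makes $\Prob{\hpijst > x_j}$ summable with a bound $O(1/\epsilon^2)$, and (iii) satisfies $\frac{d(\pijs, y_j)}{d(x_j, y_j)} \le 1 + \epsilon$. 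Verifying (iii) is a routine but fiddly calculus estimate on the binary KL function — continuity of $d(\cdot, y_j)$ near $\pijs$ — and is the kind of step I would defer to \cref{sec:uss_appendix}. A secondary subtlety is handling the very first few pulls of arm $j$ before any meaningful estimate exists; that contributes the additive "$+1$" from \cref{lem:probHighPart2} and is absorbed into $O(1/\epsilon^2)$. Everything else is an application of \cref{lem:probHighPart1}, \cref{lem:probHighPart2}, and the tower property of conditional expectation over $\Ht$.
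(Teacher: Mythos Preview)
Your overall decomposition strategy (split on whether $\hpijst$ exceeds a threshold $x_j$, then apply \cref{lem:probHighPart1} and \cref{lem:probHighPart2}) matches the paper's. However, you have misread the direction of the preference event, and this creates all the downstream confusion you flag.

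By Definition~\ref{def:arm_pref}, $j \succ_t i^\star$ with $j > i^\star$ puts $j$ in the role of the definition's ``$i$'' and $i^\star$ in the role of ``$j$''; since the definition's ``$j$'' ($=i^\star$) is \emph{less} than the definition's ``$i$'' ($=j$), case~\eqref{arm_pref_l} applies, giving $\opijsts \ge C_j - C_{i^\star} = \pijs + \xi_j$. So the bad event is that the posterior sample is \emph{large}, not small. With this correction, \cref{lem:probHighPart2} applies directly --- there is no lower-tail complication to work around, and your paragraph about ``mirror the argument \ldots\ in the lower tail'' is unnecessary.

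The paper then proceeds exactly as you outline, but with one refinement in the choice of $x_j,y_j$: rather than fixing $y_j = \pijs+\xi_j$ and $x_j$ linearly between, the paper picks $\pijs < x_j < y_j < \pijs+\xi_j$ via the KL conditions $d(x_j,\pijs+\xi_j) = d(\pijs,\pijs+\xi_j)/(1+\epsilon)$ and $d(x_j,y_j) = d(x_j,\pijs+\xi_j)/(1+\epsilon)$. This makes the leading term $(1+\epsilon)^2 \ln T / d(\pijs,\pijs+\xi_j)$ immediately, and the $O(1/\epsilon^2)$ term follows from a lemma (\cref{fact:diffKL}) that lower-bounds $x_j - \pijs$ in terms of $\epsilon$, combined with Pinsker. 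Your ``fiddly calculus estimate'' on KL continuity is exactly this step, and it is handled by \cref{fact:diffKL} rather than ad hoc bounds.
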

\begin{proof}\textbf{(sketch)}
	Let $\pijs < x_j < y_j < \pijs + \xi_j$ where $j>\istar$. Then, it can be easily shown that $\sum_{t=1}^T\Prob{j \succ_t i^\star, j > \istar} \le  \sum_{t=1}^T \Prob{\hpijst \le x_j, \opijsts > y_j} + \sum_{t=1}^T \Prob{\hpijst > x_j}.$ The upper bound on first term of right hand side quantity is given by \cref{lem:probHighPart2} and the upper bound of the second term of right hand side quantity is given by \cref{lem:probHighPart1}.  Then, for $\epsilon \in (0,1)$ with suitable values of $x_j$ and $y_j$, we can get the above stated upper bound.
\end{proof}

Let $\Delta_j = C_j + \gamma_j - (C_{i^\star} + \gamma_{i^\star})$ be the sub-optimality gap for arm $j$. Now we state the problem dependent regret upper bound of \ref{alg:TS_USS}.
\begin{restatable}[Problem Dependent Bound]{thm}{depRegretBound}
	\label{thm:depRegretBound}
	Let $P \in \PWD$ and satisfies the transitivity property. If $\epsilon > 0$ then, the expected regret of \ref{alg:TS_USS} in $T$ rounds is bounded by
	\begin{equation*}
		{\Regret_T} \le \sum_{j > i^\star} \frac{(1+\epsilon)\ln T}{d(\pijs, \pijs + \xi_j)}\Delta_j + O\left(\frac{K-i^\star}{\epsilon^2}\right),
	\end{equation*}
\end{restatable}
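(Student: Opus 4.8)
# Proof Proposal for Theorem (Problem Dependent Bound for USS-TS)

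The plan is to decompose the expected regret of \ref{alg:TS_USS} over the sub-optimal arms, separating those with index below the optimal arm $i^\star$ from those above it, and then bound the number of times each is selected using the lemmas already established. Starting from the identity
\[
\Regret_T = \sum_{j \neq i^\star} \Delta_j \, \EE{N_j(T)} = \sum_{j < i^\star} \Delta_j \sum_{t=1}^T \Prob{I_t = j} + \sum_{j > i^\star} \Delta_j \sum_{t=1}^T \Prob{I_t = j},
\]
I would handle the two sums separately. For the first sum (arms below $i^\star$), \cref{lem:probLow} already gives $\sum_{t=1}^T \Prob{I_t = j, j < i^\star} \le \frac{24}{\xi_j^2} + (\text{a convergent series in } s)$; the series of $\Theta$-terms sums to a constant independent of $T$, so this whole contribution is $O\!\left(\frac{K - i^\star}{\xi_j^2}\right)$ at worst, or more precisely absorbed into the $O\!\left(\frac{K-i^\star}{\epsilon^2}\right)$ term once we note these are $T$-independent constants. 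So the first sum contributes only to the additive lower-order term.

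For the second sum (arms above $i^\star$), the key is to relate the event $\{I_t = j\}$ for $j > i^\star$ to the preference event $\{j \succ_t i^\star\}$. Since \ref{alg:TS_USS} scans arms in increasing order and stops at the first arm $i$ for which $C_{j'} - C_i > \opijtst$ for all $j' > i$, selecting arm $j > i^\star$ requires in particular that arm $i^\star$ was \emph{not} selected, which (using the transitivity property and the structure of the stopping rule, as in the proof of \cref{lem:Bx}) implies that arm $j$ was preferred over $i^\star$ at round $t$, i.e. $\Prob{I_t = j, j > i^\star} \le \Prob{j \succ_t i^\star, j > i^\star}$. Summing over $t$ and invoking \cref{lem:probHigh} with the same $\epsilon > 0$ gives
\[
\sum_{t=1}^T \Prob{I_t = j, j > i^\star} \le (1+\epsilon)\frac{\ln T}{d(\pijs, \pijs + \xi_j)} + O\!\left(\frac{1}{\epsilon^2}\right).
\]
Multiplying by $\Delta_j$ and summing over the at most $K - i^\star$ arms with $j > i^\star$ yields the claimed bound, with the $O(1/\epsilon^2)$ terms (and the constant contributions from the $j < i^\star$ arms) collected into $O\!\left(\frac{K - i^\star}{\epsilon^2}\right)$.

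The main obstacle I anticipate is the reduction step $\Prob{I_t = j, j > i^\star} \le \Prob{j \succ_t i^\star}$: one must verify carefully that the sequential scan-and-stop mechanism of \ref{alg:TS_USS}, combined with \cref{def:trans_prop}, genuinely forces the preference $j \succ_t i^\star$ whenever the algorithm walks past $i^\star$ and halts at $j$. This requires arguing that if the algorithm does not stop at $i^\star$, then there is some $j' > i^\star$ with $\opijsts[j'] \ge C_{j'} - C_{i^\star}$ (i.e. $j' \succ_t i^\star$ in the sense of \eqref{arm_pref_h}), and then propagating this through transitivity to the actually-selected arm $j$; the bookkeeping of which inequalities are strict versus non-strict (mirroring the $\rho > 1$ versus $\rho = 1$ subtlety discussed earlier) is where care is needed. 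Everything else is routine summation and an appeal to the already-proven lemmas.
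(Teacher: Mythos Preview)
Your proposal is correct and follows essentially the same route as the paper: the same decomposition into $j<i^\star$ and $j>i^\star$, the appeal to \cref{lem:probLow} for the former (yielding a $T$-independent constant absorbed into the additive term), and the reduction $\{I_t=j,\,j>i^\star\}\subseteq\{j\succ_t i^\star\}$ via the transitivity chain followed by \cref{lem:probHigh} for the latter. The recursive transitivity argument you anticipate as the main obstacle is exactly what the paper spells out (building a chain $k_1\succ_t i^\star$, $k_2\succ_t k_1$, \ldots\ until reaching or passing the selected arm, then closing the chain using that $I_t$ is preferred over all later arms).
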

\begin{proof}\textbf{(sketch)}
	Let $M_j(T)$ is the number of times arm $j$ is selected by \ref{alg:TS_USS}. Then, the regret of \ref{alg:TS_USS} is given by ${\Regret_T} = \sum_{j \in [K]}\EE{M_j(T)}\Delta_j = \sum_{j \in [K]}\sum_{t=1}^{T}\EE{\one{I_t = j}}\Delta_j = \sum_{j \in [K]}\sum_{t=1}^{T}\Prob{I_t = j}\Delta_j$. We divide the regret into two parts and it can be re-written as ${\Regret_T} \le \sum_{j < i^\star} \sum_{t=1}^{T} \Prob{I_t=j, j < i^\star} \Delta_j + \sum_{j > i^\star}\sum_{t=1}^{T}\Prob{I_t=j, j > i^\star} \Delta_j$. The first part of the regret is upper bounded by using  \cref{lem:probLow}. For the second part, when arm $I_t > \istar$  is selected, then there exists at least one arm $k > \istar$, which must be preferred over $\istar$. Using transitivity property and a recursive argument, we can show that the selected arm is preferred over the optimal arm.  Hence, $\sum_{j > i^\star}\sum_{t=1}^{T}\Prob{I_t=j, j > i^\star} \Delta_j$ can be upper bounded by $\sum_{j > i^\star} \sum_{t=1}^{T}$ $\Prob{j \succ_t \istar, j > i^\star} \Delta_j$. We can upper bound $\sum_{j > i^\star} \sum_{t=1}^{T}\Prob{j \succ_t \istar, j > i^\star} \Delta_j$ by using \cref{lem:probHigh} to get the above stated regret upper bound for \ref{alg:TS_USS}.
\end{proof}

\noindent
Next we present problem independent bounds on the regret of \ref{alg:TS_USS}.
\begin{restatable}[Problem Independent Bound]{thm}{indepRegretBound}
	\label{thm:indepRegretBound}
	Let $P \in \PWD$ and satisfies the transitivity property. Then the expected regret of \ref{alg:TS_USS} in $T$ rounds 
	\begin{itemize}
		\item for any instance in $\PSD$ is bounded as
		\begin{align*}
			{\Regret_T} \le O\left( \sqrt{KT\ln T}\right).
		\end{align*}
		\item 	for any instance in $\PWD$ is bounded as
		\begin{align*}
			{\Regret_T} \le O\left( \left(K\ln T\right)^{1/3}T^{2/3}\right).
		\end{align*}
	\end{itemize}
\end{restatable}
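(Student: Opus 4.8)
The plan is to deduce both uniform bounds from the problem-dependent bound of Theorem~\ref{thm:depRegretBound}, by trading the logarithmic-in-$T$ estimate on the number of times each suboptimal arm is selected against the trivial linear-in-$T$ estimate. First I would set $\epsilon$ to an absolute constant (say $\epsilon = 1$) in Theorem~\ref{thm:depRegretBound}, so that its additive term becomes $O(K - i^\star) = O(K)$, and apply Pinsker's inequality $d(p,q) \ge 2(p-q)^2$ to obtain $d(\pijs, \pijs + \xi_j) \ge 2\xi_j^2$. Together with Lemma~\ref{lem:probHigh} this yields $\EE{M_j(T)} \le \min\!\big(T,\ c\,\xi_j^{-2}\ln T\big)$ for every $j > i^\star$ and an absolute constant $c$ (the first term since exactly one arm is committed to in each round, the second from the chain of inequalities in the proof of Theorem~\ref{thm:depRegretBound}). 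I would also add back the contribution of the arms $j < i^\star$, which by Lemma~\ref{lem:probLow} is $O\big(\sum_{j<i^\star}\Delta_j/\xi_j^2\big)$ and, using $\xi_j \ge \Delta_j$ for $j < i^\star$, is a constant independent of $T$ that is eventually dominated; the $O(K)$ additive term is handled the same way.

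For the $\PSD$ case I would use that $\kappa_j = 0$ under strong dominance (as in the proof of Corollary~\ref{cor:sd}), whence $\xi_j = \Delta_j$ and the regret charged to arm $j > i^\star$ is at most $\min\!\big(\Delta_j T,\ c\,\Delta_j^{-1}\ln T\big)$. Introducing the threshold $\Delta^\star = \sqrt{cK\ln T / T}$ and splitting the arms into $\{\Delta_j \le \Delta^\star\}$ and $\{\Delta_j > \Delta^\star\}$, the first group contributes at most $\Delta^\star \sum_j \EE{M_j(T)} = \Delta^\star T$ (since $\sum_j M_j(T) = T$), and the second at most $\sum_{j : \Delta_j > \Delta^\star} c\,\Delta_j^{-1}\ln T \le cK\ln T/\Delta^\star$; adding the two and substituting $\Delta^\star$ gives $\Regret_T \le 2\sqrt{cKT\ln T} + O(K) = O(\sqrt{KT\ln T})$.

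For the $\PWD$ case the obstruction is that weak dominance decouples $\xi_j$ from $\Delta_j$ --- $\xi_j$ may be arbitrarily smaller than $\Delta_j$ --- which is precisely why the best achievable rate degrades from $T^{1/2}$ to $T^{2/3}$, so the threshold argument above no longer balances. Instead I would first sum the per-arm logarithmic estimates, giving $\Regret_T \le \min\!\big(\Delta_{\max}T,\ cK\Delta_{\max}\xi^{-2}\ln T\big) + O(K)$ with $\xi = \min_{j>i^\star}\xi_j > 0$ and $\Delta_{\max} = \max_j \Delta_j$, and then apply the elementary inequality $\min(a,b) \le a^{2/3}b^{1/3}$ with $a = \Delta_{\max}T$ and $b = cK\Delta_{\max}\xi^{-2}\ln T$. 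This produces $\Regret_T \le \Delta_{\max}\,\xi^{-2/3}\,(cK\ln T)^{1/3}T^{2/3} + O(K) = O\big((K\ln T)^{1/3}T^{2/3}\big)$, the instance factors $\Delta_{\max}$ and $\xi^{-2/3}$ being absorbed into the $O(\cdot)$.

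I expect the main difficulty to lie in the $\PWD$ step: one has to sum the per-arm bounds \emph{before} applying the $\min$ split in order to come out with a $K^{1/3}$ rather than a $K$ dependence, and to check that relaxing $d(\pijs,\pijs+\xi_j)$ to $2\xi_j^2$ via Pinsker is lossless in the exponents --- both points turning on the $\xi_j$--$\Delta_j$ decoupling characteristic of weak dominance. By contrast the $\PSD$ threshold optimization, and the bookkeeping for the arms $j < i^\star$ and the additive $O(K/\epsilon^2)$ term, are routine.
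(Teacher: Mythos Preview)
Your $\PSD$ argument is fine and matches the paper's. The $\PWD$ argument, however, has a genuine gap: your final expression contains the factor $\xi^{-2/3}$ with $\xi=\min_{j>i^\star}\xi_j$, and you ``absorb'' it into the $O(\cdot)$. But $\xi$ is an \emph{instance} parameter that can be arbitrarily close to $0$ across $\PWD$, so hiding it in the constant does not give a problem-independent bound; the same objection applies to your $O\!\big(\sum_{j<i^\star}1/\xi_j\big)$ term from the arms $j<i^\star$. The inequality $\min(a,b)\le a^{2/3}b^{1/3}$ cannot help here, because with your choices of $a$ and $b$ the instance quantity $\xi$ survives on the right-hand side.

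The missing observation is that under weak dominance $\Delta_j$ and $\xi_j$ are not fully decoupled: from the definitions one has $\Delta_j=\xi_j+\kappa_j$ for $j>i^\star$ (and $\Delta_j=\xi_j-\kappa_j\le\xi_j$ for $j<i^\star$), with $\kappa_j=p_{i^\star j}-(\gamma_{i^\star}-\gamma_j)\in[0,\beta]$ where $\beta\le 2$ is a \emph{universal} constant. Using $\Delta_j\le\xi_j+\beta$ the paper rewrites the regret as $\sum_j\EE{M_j(T)}(\xi_j+\beta)$, runs the threshold split on the $\xi_j$'s (not the $\Delta_j$'s) with a \emph{free} cutoff $\xi'$, and obtains $\Regret_T\le T\xi'+O\!\big(K\ln T\,(1/\xi'+\beta/\xi'^2)\big)$. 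Optimizing over $\xi'$ then yields $(K\ln T)^{1/3}T^{2/3}$ without any residual instance dependence; in the $\PSD$ case $\beta=0$ and the same optimization recovers $\sqrt{KT\ln T}$. Your plan would go through once you replace the $\min(a,b)$ device with this $\xi'$-threshold argument, which hinges on the additive bound $\Delta_j\le\xi_j+\beta$.
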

\begin{proof}\textbf{(sketch)}
	To get the above problem independent regret upper bound, we maximize the problem-dependent regret of \ref{alg:TS_USS} with respect to the value of $\xi_j$. 
\end{proof}

\begin{cor}
	Let $P \in \PWD$ and satisfies the transitivity property. Then the expected regret of \ref{alg:TS_USS} on $\PSD$ is $\tilde{O}(T^{1/2})$ and on $\PWD$ it is $\tilde{O}(T^{2/3})$, where $\tilde{O}$ hides $K$ and the logarithmic terms that are having $T$ in them.
\end{cor}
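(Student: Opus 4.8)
The plan is to obtain this corollary as an immediate consequence of Theorem~\ref{thm:indepRegretBound}, which already establishes $\Regret_T \le O(\sqrt{KT\ln T})$ for any instance in $\PSD$ and $\Regret_T \le O((K\ln T)^{1/3}T^{2/3})$ for any instance in $\PWD$, under exactly the hypotheses assumed here ($P\in\PWD$ together with the transitivity property). The only work left is bookkeeping: rewrite each bound so that the dependence on $K$ and on $\ln T$ is made explicit, and then absorb those factors into the $\tilde O(\cdot)$ notation, whose stated meaning is precisely that it suppresses $K$ and any term logarithmic in $T$.

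Concretely, first I would recall that $\PSD \subseteq \PWD$ (this is the Remark following \cref{lem:B}, since under $\SD$ the disagreement probability equals the difference of error rates), so the $\PSD$ bound of Theorem~\ref{thm:indepRegretBound} is available under the corollary's hypotheses. For the $\PSD$ case I would factor $\sqrt{KT\ln T} = \sqrt{K}\,\sqrt{\ln T}\,T^{1/2}$; here $\sqrt{K}$ is a $T$-independent constant and $\sqrt{\ln T}$ is logarithmic in $T$, so both are hidden by $\tilde O$, leaving $\Regret_T \le \tilde O(T^{1/2})$. For the $\PWD$ case I would factor $(K\ln T)^{1/3}T^{2/3} = K^{1/3}(\ln T)^{1/3}T^{2/3}$; again $K^{1/3}$ and $(\ln T)^{1/3}$ are suppressed by $\tilde O$, giving $\Regret_T \le \tilde O(T^{2/3})$.

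There is no substantive obstacle: the corollary is a pure restatement of Theorem~\ref{thm:indepRegretBound} in asymptotic-in-$T$ form. The only point worth a sentence of care is confirming that the hypotheses match and that the $\PSD$ claim is not vacuous, i.e.\ that $\PSD$ is nonempty and contained in $\PWD$, which we have already noted.
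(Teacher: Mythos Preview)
Your proposal is correct and matches the paper's treatment: the corollary is stated without a separate proof, as it is an immediate restatement of Theorem~\ref{thm:indepRegretBound} once the $K$ and $\ln T$ factors are absorbed into $\tilde O$. Your factorizations $\sqrt{KT\ln T}=\sqrt{K}\sqrt{\ln T}\,T^{1/2}$ and $(K\ln T)^{1/3}T^{2/3}=K^{1/3}(\ln T)^{1/3}T^{2/3}$, together with the observation $\PSD\subseteq\PWD$, are exactly the intended justification.
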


\noindent
\paragraph{Discussion on optimality of \ref{alg:TS_USS}:} As discussed in \cref{sec:uss_ucb}, the stochastic partial monitoring problems can be classified as an `easy,' `hard,' or `hopeless' problem with expected regret bounds of the order $\Theta(T^{1/2}), \Theta(T^{2/3})$, or $\Theta(T)$, respectively. And there exists no other class of problems in between \citep{MOR14_bartok2014partial}. After matching the regret bounds, we conclude that the regret bound of \ref{alg:TS_USS} is also near-optimal in $T$ up to a logarithmic term.
\nocite{ACML20_verma2020thompson,COMSNETS20_verma2020unsupervised}

\section{Experiments}
\label{sec:uss_experiments}

In this section, we evaluate the performance of proposed algorithms on different problem instances derived from synthetic and two `real' datasets: PIMA Indians Diabetes \citep{UCI16_pima2016kaggale} and Heart Disease (Cleveland) \citep{HEART98_robert1988va, UCI17_Dua2017}. In our experiments, each arm is represented by a classifier that is arranged in order of their decreasing misclassification error, i.e., error-rate for each dataset. The cost of using a classifier is assigned based on its error-rate -- smaller the error-rate higher the cost. The case where arms' error-rate need not to decrease in the cascade is also considered.

\subsection{Datasets}
\paragraph{Synthetic Dataset:} 
We generate synthetic Bernoulli Symmetric Channel (BSC) dataset \citep{AISTATS17_hanawal2017unsupervised} as follows: The true binary feedback $Y_t$ is generated from i.i.d. Bernoulli random variable with mean $0.7$. The problem instance used in the experiment has three arms. We fix feedback as true binary feedback for the first arm with probability $0.6$, second arm with probability $0.7$, and third arm with probability $0.8$. To ensure strong dominance, we impose the condition during data generation. When the feedback of arm $1$ matches the true binary feedback, we introduce error up to 10\%  to the feedback of arm $2$ and $3$. We use five problem instances of the BSC dataset by varying the cumulative cost of playing the arms as given in \cref{table:bsc}.  

\begin{table}[!ht]
	\centering
	\setlength\tabcolsep{10pt}
	\setlength\extrarowheight{2pt}
	\begin{tabular}{ |c|c|c|c|c|} 
		\hline
		\multirow{1}{*}{\bf Values/ \newline Arms}&Arm $1$ &Arm $2$&Arm $3$& \multirow{1}{*}{\parbox{1.5cm}{~\\ $\WD$ Property}}\\ 
		\cline{1-4}
		Error-rate $(\gamma_i)$ & 0.3937 & 0.2899 & 0.1358 &  \multicolumn{1}{c|}{~}  \\
		\hline
		Instance 1 Costs& \textcolor{red}{\textbf{0.05}} & 0.285          & 0.45        &  \multicolumn{1}{c|}{\checkmark}  \\ 
		\hline
		Instance 2 Costs& 0.05         & \textcolor{red}{\textbf{0.1}} & 0.53      & \multicolumn{1}{c|}{\checkmark} \\ 
		\hline
		Instance 3 Costs& \textcolor{red}{\textbf{0.05  }} & 0.3           & 0.45      &    \multicolumn{1}{c|}{\checkmark}  \\ 
		\hline
		Instance 4 Costs& 0.05         & 0.25         & \textcolor{red}{\textbf{0.29}} &  \multicolumn{1}{c|}{\checkmark}\\ 
		\hline
		Instance 5 Costs& 0.1        & \textcolor{red}{\textbf{0.2}} & 0.41       &  \multicolumn{1}{c|}{\ding{53}} \\
		\hline
	\end{tabular}
	\caption{$\WD$ propoerty doesn't hold for Instance 5. Optimal arm's cost is in \textcolor{red}{\bf red bold} font.}
	\label{table:bsc}
\end{table}

\paragraph{Real Datasets:} An arm $i$ represents a classifier whose prediction is treated as the feedback of the arm $i$. The disagreement label for $(i,j)$ pair is computed using the labels of classifier (Clf.) $i$ and $j$. In Heart Disease dataset, each sample has $12$ features. We split the features into three subsets and train a logistic classifier on each subset. We associate 1st classifier with the first $6$ features as input, including cholesterol readings, blood sugar, and rest-ECG. The 2nd classifier, in addition to the $6$ features, utilizes the thalach, exang and oldpeak features, and the 3rd classifier uses all the features. In PIMA Indians Diabetes dataset, each sample has $8$ features related to the conditions of the patient. We split the features into three subsets and train a logistic classifier on each subset. We associate 1st classifier with the first $6$ features as input. These features include patient profile. The 2nd classifier, in addition to the $6$ features, utilizes the feature on the glucose tolerance test, and the 3rd classifier uses all the previous features and the feature that gives values of insulin test. The PIMA Indians Diabetes dataset has $768$ samples, whereas the Heart Disease dataset has only $297$ samples. As $10000$ rounds are used in our experiments, we select a sample from the original dataset in a round-robin fashion and give it as input to the algorithm. The details about the different costs used in five problem instances of the real datasets are given in \cref{table:real_datasets}.

\begin{table}[!ht]
	\centering
	\setlength\tabcolsep{8pt}
	\setlength\extrarowheight{2pt}
	\begin{tabular}{|c|c|c|c|c|c|c|c|}
		\hline
		\multirow{2}{*}{\parbox{3.22cm}{\bf Values/ Classifiers (Arms)}} &\multicolumn{3}{|c|}{\bf PIMA Indians Diabetes}&\multicolumn{3}{|c|}{\bf Heart Disease} &\multirow{3}{*}{\parbox{1.25cm}{$\WD$ Property}}\\ 
		\cline{2-7} 
		&Clf. 1 &Clf. 2&Clf. 3&Clf. 1 &Clf. 2&Clf. 3&\\ \cline{1-7}
		Error-rate ($\gamma_{i}$) & 0.3098 &0.233&0.2278&0.2929 &0.2025& 0.1483 &\\ 
		\hline
		Instance  1 Costs& \textcolor{red}{\textbf{0.05}}& 0.28& 0.45&\textcolor{red}{\textbf{0.02}}& 0.32& 0.45&\multicolumn{1}{|c|}{\checkmark}\\ 
		\hline
		Instance  2 Costs& 0.2& \textcolor{red}{\textbf{0.25}}& 0.269&0.2& \textcolor{red}{\textbf{0.25}}& 0.395&\multicolumn{1}{|c|}{\checkmark}\\ 
		\hline
		Instance  3 Costs& \textcolor{red}{\textbf{0.05}}& 0.309& 0.45&\textcolor{red}{\textbf{0.02}}& 0.34& 0.45&\multicolumn{1}{|c|}{\checkmark}\\ 
		\hline
		Instance  4 Costs& 0.2& 0.25& \textcolor{red}{\textbf{0.255}}&0.2& 0.25& \textcolor{red}{\textbf{0.3}}&\multicolumn{1}{|c|}{\checkmark}\\ 
		\hline
		Instance  5 Costs& \textcolor{red}{\textbf{0.05}}& 0.146& 0.3&0.2& \textcolor{red}{\textbf{0.25}}& 0.40&\multicolumn{1}{|c|}{\ding{53}}\\ 
		\hline
	\end{tabular}
	\caption{Costs of different problem instances which are derived from real datasets. $\WD$ property doesn't hold for Instance 5 and cost of optimal arm is in  \textcolor{red}{\bf red bold} font.}
	\label{table:real_datasets}
\end{table}

\paragraph{Verifying $\WD$ property:} The error-rate associated with each arm is known to us as given in  \cref{table:bsc} and \cref{table:real_datasets} (but note that the error-rates are unknown to the algorithm); hence we can find an optimal arm for a given problem instance. After knowing optimal arm, $\WD$ property is verified by using the disagreement probability estimates after $10000$ rounds.

\subsection{Experiment Results}
We fix the time horizon to $10000$ in all experiments and repeat each experiment $100$ times. The average regret is presented with a $95$\% confidence interval. The vertical line on each plot shows the confidence interval.

\noindent
\textbf{Comparison between different Algorithms:} 
In our first experiment, we compare the performance of \ref{alg:USS_WD} (with value of $\alpha=0.5$) and \ref{alg:TS_USS} with Algorithm 2 of \cite{AISTATS17_hanawal2017unsupervised} with value of $\alpha=1.5$ (as used in the paper) on Heart Disease and PIMA Indians Diabetes datasets. As expected, \ref{alg:TS_USS} outperforms other algorithms with large margins as shown in \cref{fig:dCompare} (PIMA Indians Diabetes dataset) and \cref{fig:hCompare} (Heart Disease dataset). 

\begin{figure}[!ht]
	\centering
	\begin{subfigure}[b]{0.40\linewidth}	\includegraphics[width=\linewidth]{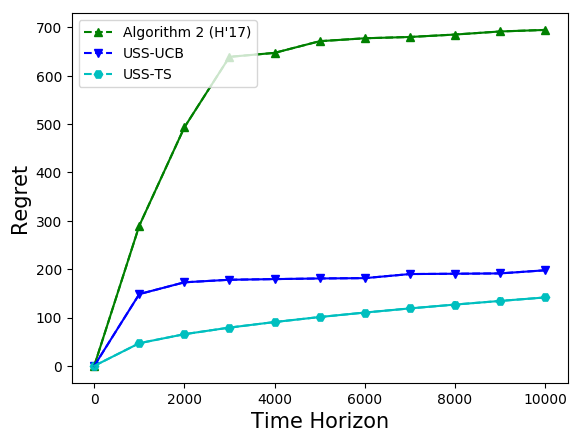}
		\caption{PIMA Indians Diabetes.}
		\label{fig:dCompare}
	\end{subfigure}\qquad
	\begin{subfigure}[b]{0.40\textwidth}
		\includegraphics[width=\linewidth]{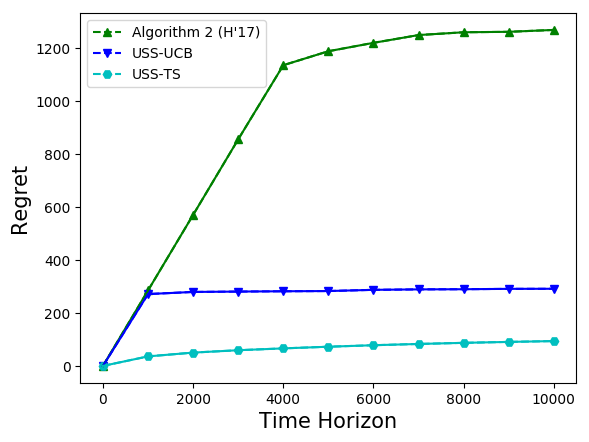}
		\caption{Heart Disease Dataset.}
		\label{fig:hCompare}
	\end{subfigure}
	\caption{Comparing regret of \ref{alg:USS_WD} and \ref{alg:TS_USS} with Algorithm 2 \citep{AISTATS17_hanawal2017unsupervised} for real datasets (\cref{fig:dCompare} and \cref{fig:hCompare}).}
	\label{fig:allExp}
\end{figure}

\noindent
\textbf{Regret versus Time Horizon:} The {\it regret} of \ref{alg:TS_USS} versus {\it Time Horizon} plots for the different problem instances derived from BSC Dataset and two real datasets are shown in \cref{fig:ts_bsc} and \cref{fig:regret_USS_TS} respectively. These plots verify that any instance that satisfies $\WD$ property has sub-linear regret. Note that the regret of \ref{alg:TS_USS} is linear for the Instance $5$ as it does not satisfy $\WD$ property.

\begin{figure}[!ht]
	\centering
	\begin{subfigure}[b]{0.40\textwidth}
		\includegraphics[width=\linewidth]{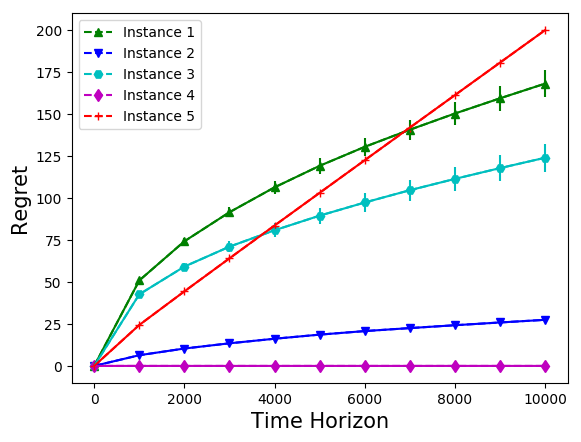}
		\caption{PIMA Indians Diabetes.}
		\label{fig:ts_pima}
	\end{subfigure}\qquad
	\begin{subfigure}[b]{0.40\textwidth}
		\includegraphics[width=\linewidth]{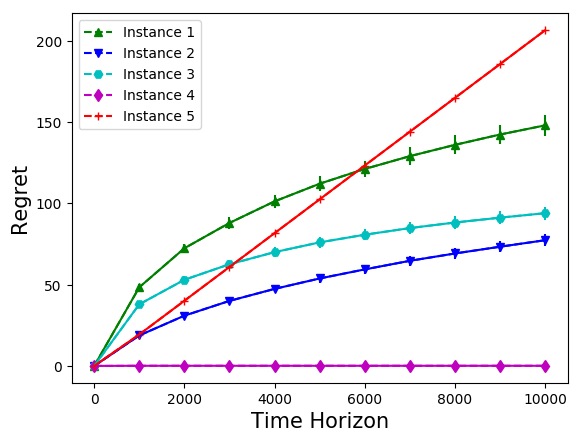}
		\caption{Heart Disease Dataset.}
		\label{fig:ts_hear}
	\end{subfigure}
	\caption{Regret of \ref{alg:TS_USS} for different problem instances derived from real datasets.}
	\label{fig:regret_USS_TS}
\end{figure}

\noindent
\textbf{Learnability v/s $\WD$ Property:} We experiment with different problem instances of the BSC dataset to know the relationship between regret of \ref{alg:TS_USS} and $\WD$ property. We fixed an optimal arm and vary the cumulative cost of using arms in such a way that we pass from the case where $\WD$ property does not hold ($\xi \le 0$ or $C_j - C_{i^\star} \in (\gamma_{i^\star} - \gamma_j, p_{i^\star j}]$  for any $j>i^\star$ where $\xi := \min_{j>i^\star} \xi_j$) to the situation where $\WD$ property holds $(\xi > 0$). 
When $\WD$ property does not hold for any problem instance, \ref{alg:TS_USS} treats a sub-optimal arm as the optimal arm. In such problem instances, as $C_j - C_{i^\star}$ increases, the regret will also increase due to selection of sub-optimal arm by \ref{alg:TS_USS} until $\WD$ property does not satisfy for that problem instance. When $\WD$ property does not satisfy for a problem instance then $C_j - C_{i^\star} \in (\gamma_{i^\star} - \gamma_j, p_{i^\star j}]$ holds in such cases, hence, it is easy to verify that $\xi$ can not be smaller than $-\max(p_{i^\star j}- (\gamma_{i^\star} - \gamma_j))$. 

\begin{figure}[!ht]
	\centering
	\begin{subfigure}[b]{0.40\linewidth}	\includegraphics[width=\linewidth]{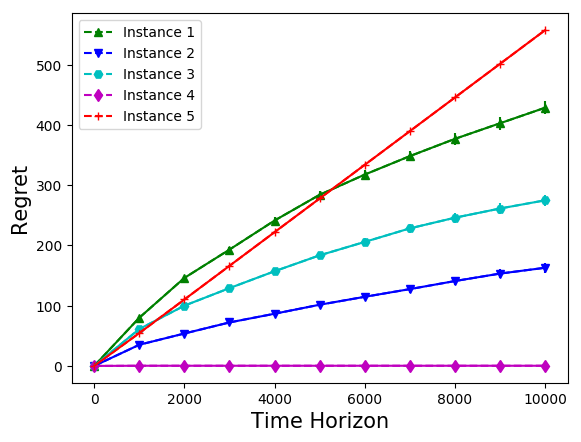}
		\caption{Synthetic BSC Dataset.}
		\label{fig:ts_bsc}
	\end{subfigure}\qquad
	\begin{subfigure}[b]{0.40\textwidth}
		\includegraphics[width=\linewidth]{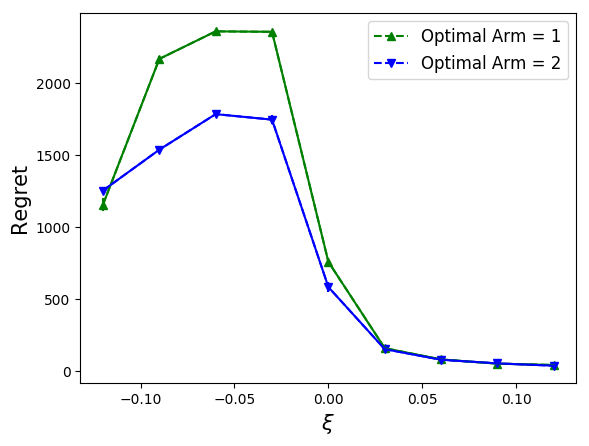}
		\caption{Synthetic BSC Dataset.}
		\label{fig:xi}
	\end{subfigure}
	\caption{Regret of \ref{alg:TS_USS} for different problem instances derived from synthetic BSC datasets. Regret behavior of \ref{alg:TS_USS} versus WD property for BSC Dataset is shown in \cref{fig:xi}.}
	\label{fig:BSC}
\end{figure}

We consider the problem instances with the minimum possible value of $\xi$ for which problem instance satisfies $\WD$ property. Then we increase the value of $\xi$ by increasing the cumulative cost of the arm. The regret versus $\xi$ plots for BSC Dataset is shown in \cref{fig:xi}. It can be observed that there is a transition at $\xi = 0$. Through our experiments, we show that the stronger the $\WD$ property (large value of $\xi$) for the problem instance, easier it is to identify the optimal arm and, hence the less regret is incurred by \ref{alg:TS_USS}.

\section{Appendix}
\label{sec:uss_appendix}

\subsection*{Missing proofs from \cref{sec:uss_learning}}

\SetB*
\begin{proof}
	Let $i^\star$ be an optimal sensor.
	Define 
	\begin{align}
	 	&\mathcal{B}^l :=\big\{i: i \in [2, K];\, \forall j<i \,\,\ni\,\, C_i - C_j \le \mathbb{P}\{ Y^i \ne Y^j \} \big\}  \cup \; \{1\} \label{equ:Bl} \\
	 	&\mathcal{B}^l_{-1} := \mathcal{B}^l\setminus\{1\} \\
		&\mathcal{B}^h := \big\{i: i \in [1, K-1];\, \forall j>i \,\,\ni\,\, C_j - C_i > \mathbb{P}\{ Y^i \ne Y^j \} \big\} \cup \; \{K\}\label{equ:Bh} \\
		&\mathcal{B}^h_{-K} := \mathcal{B}^h\setminus\{K\} \\
		&\mathcal{B} := \mathcal{B}^l \cap \mathcal{B}^h \label{equ:setBlh}
	\end{align}
	Consider following three cases:
	\begin{enumerate}[I.]
		\item $1 < i^\star < K$
		\item $i^\star = 1$
		\item $i^\star = K$
	\end{enumerate}
	\textbf{Case I:} $1 < i^\star < K$ \\As $i^\star$ is an optimal sensor therefore $\forall j>i^\star:\, C_j - C_{i^\star} > \mathbb{P}\{ Y^{i^\star} \ne Y^j \} \Rightarrow C_j - C_{i^\star} \nleq \mathbb{P}\{ Y^{i^\star}\ne Y^j \}\Rightarrow \forall j>i^\star \notin \mathcal{B}^l_{-1}$. If  any sensor $l \in \mathcal{B}^l_{-1}$ then $l\le i^\star$ i.e.,
	\begin{align}
		&\mathcal{B}^l_{-1} = \{l_1, l_2, \ldots, l_m, i^\star\} ~~~~\mbox{ where }1 < l_1 < \cdots < l_m < i^\star	\label{equ:setBl} \\
		&\mathcal{B}^l = \mathcal{B}^l_{-1} \cup \{1\} = \{1, l_1, l_2, \ldots,l_m, i^\star\}\label{equ:setBl1}
	\end{align} 
	Similarly, $\forall j<i^\star:\, C_{i^\star} - C_j \le \mathbb{P}\{ Y^{i^\star} \ne Y^j \} \Rightarrow C_{i^\star} - C_j \ngtr \mathbb{P}\{ Y^{i^\star}\ne Y^j \}\Rightarrow \forall j<i^\star \notin \mathcal{B}^h_{-K}$.  If any sensor $h \in \mathcal{B}^h_{-K}$ then $h \ge i^\star$ i.e.,
	\begin{equation}
		\label{equ:setBh}
		\hspace{-1mm}\mathcal{B}^h_{-K} = \{i^\star, h_1, \ldots, h_n\} \mbox{ where } i^\star < h_1 < \cdots < h_n < K	
	\end{equation} 
	\begin{equation}
		\label{equ:setBhk}
		\mathcal{B}^h = \mathcal{B}^h_{-K} \cup \{K\} = \{i^\star, h_1, h_2, \ldots, h_n, K\}	
	\end{equation} 

	From \eqref{equ:setBlh}, \eqref{equ:setBl1} and \eqref{equ:setBhk}, we get
	\begin{align}
		\mathcal{B} &= \mathcal{B}^l \cap \mathcal{B}^h \nonumber \\
		&= \{1, l_1, l_2, \ldots, i^\star\} \cap \{i^\star, h_1, h_2, \ldots, h_k, K\} \nonumber \\
		\Rightarrow \mathcal{B} &=\{i^\star\} \label{equ:setBi}
	\end{align}
	\textbf{Case II:} $i^\star = 1$ \\ 
	Using \eqref{equ:setBl}, we get $\mathcal{B}^l_{-1} = \phi$, hence $\mathcal{B}^l =\{1\}$. Similarly, using \eqref{equ:setBhk}, we have $\mathcal{B}^h = \{1, h_1, h_2, \ldots, h_n, K\}$ that implies
	\begin{align}
		\mathcal{B} = \{1\} 
		\Rightarrow \mathcal{B} = \{i^\star\} \label{equ:setBii} 
	\end{align}
	\textbf{Case III:} $i^\star = K$\\ 
	Using \eqref{equ:setBh}, we get $\mathcal{B}^h_{-K} = \phi$, hence $\mathcal{B}^h =\{K\}$. Similarly, using \eqref{equ:setBl1}, we have $\mathcal{B}^l = \{1, l_1, l_2, \ldots,l_m, K\}$ that implies
	\begin{align}
		\mathcal{B} = \{K\}  \Rightarrow \mathcal{B} = \{i^\star\} \label{equ:setBiii}
	\end{align}
%
	\eqref{equ:setBi},\eqref{equ:setBii} and \eqref{equ:setBiii} $\Rightarrow \mathcal{B}$ is a singleton set and contains the optimal sensor.
\end{proof}

%

The following definition is convenient for the proof arguments.
\begin{defi}[Action Preference ($\succ_t$)]
	\label{def:preference}
	The sensor $i$ is optimistically preferred over sensor $j$ in round $t$ if:
	\begin{subnumcases}	
	{\hspace{-1cm}i \succ_t j := }
	C_i - C_j \le \hpji +  \Psi_{ji}(t)\; \text{if } j<i \label{def_prefer_l} \\
	C_j - C_i > \hpij +  \Psi_{ij}(t)\; \text{if } j>i \label{def_prefer_h} 
	\end{subnumcases}
\end{defi}

\subsection*{Discussion of Remark \ref{rem:DecisionSet}}
The algorithm can converge to a sub-optimal sensor when we replace the term   $\hpijs + \Psi_{i^\star j}(t)$ in \eqref{set:eBh} by $\hpijs - \Psi_{i^\star j}(t)$. To verify this claim, assume algorithm selects sub-optimal sensor $j$ in around $t$ and $j<i^\star$ then, 
\begin{align*}
C_{i^\star} - C_j & > \hpijs - \Psi_{i^\star j}(t)
\intertext{Since sensor $i^\star$ is not used then there is no update in $\hat{p}_{i^\star j}(t+1)$ but by definition $\Psi_{i^\star j}(t+1) > \Psi_{i^\star j}(t)$ therefore,}
C_{i^\star} - C_j& > \hat{p}_{i^\star j}(t+1) - \Psi_{i^\star j}(t+1).
\end{align*}
Hence sub-optimal sensor will always be preferred over the optimal sensor in the subsequent rounds. This can be avoided by using UC term in \eqref{set:eBh} because,
\begin{align}
&\hat{p}_{i^\star j}(t+1) + \Psi_{i^\star j}(t+1) > \hpijs + \Psi_{i^\star j}(t) \nonumber\\
\intertext{The sub-optimal sensor $j$ will not be preferred after sufficient $n$ rounds,}
\Rightarrow\; & C_{i^\star} - C_j < \hat{p}_{i^\star j}(t+n) + \Psi_{i^\star j}(t+n).
\end{align}
As using LC term can make the decisions stuck to sub-optimal sensor, UC term is used in \eqref{set:eBh}.

\subsection*{Missing proofs from \cref{sec:uss_ucb}}

We first recall the standard Hoeffding's inequality \citep[Theorem 2]{JASA63_hoeffding1963probability} that we use in the proof.
\begin{thm}
	\label{def:che_hoeffiding}
	Let $X_1, \ldots,X_n$  be independent random variables with common range $[0,1]$,  $\mu = \EE{X_i}$, and $\hat{\mu}_n = \frac{1}{n}\sum_{t=1}^n X_t$. Then for all $\epsilon \ge 0$,
	\begin{subequations}
		\begin{align}
		&\Prob{\hat{\mu}_n - \mu \le -\epsilon} \le e^{-2n\epsilon^2} \label{def:ch_hf1} \\
		&\Prob{\hat{\mu}_n - \mu \ge \epsilon} \le e^{-2n\epsilon^2} \label{def:ch_hf2} 
		\end{align}
	\end{subequations}
\end{thm}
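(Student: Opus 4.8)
The plan is to use the standard Chernoff--Cram\'er exponential-moment method, with the one key ingredient being Hoeffding's lemma on the moment generating function of a bounded, centered random variable. I would first establish the upper tail \eqref{def:ch_hf2} and then obtain the lower tail \eqref{def:ch_hf1} from it by a symmetry argument, so the bulk of the work is a single one-sided estimate.

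First I would center and rescale: put $Z_t := X_t - \mu$, so that $\EE{Z_t} = 0$ and each $Z_t$ takes values in the interval $[-\mu,\,1-\mu]$, which has length $1$. For any $s > 0$, Markov's inequality applied to $e^{s\sum_{t=1}^n Z_t}$ together with independence gives
\begin{equation*}
	\Prob{\hat{\mu}_n - \mu \ge \epsilon} = \Prob{\sum_{t=1}^n Z_t \ge n\epsilon} \le e^{-sn\epsilon}\prod_{t=1}^n \EE{e^{sZ_t}}.
\end{equation*}
The heart of the matter is to control $\EE{e^{sZ_t}}$, which is exactly Hoeffding's lemma: a mean-zero random variable supported on an interval of length $\ell$ satisfies $\EE{e^{sZ}} \le e^{s^2\ell^2/8}$. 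I would prove this by convexity of $u \mapsto e^{su}$ on $[a,b]$ with $b-a=\ell$: bound $e^{sz}$ by the chord through $(a,e^{sa})$ and $(b,e^{sb})$, take expectations (the term linear in $Z$ vanishes since $\EE{Z}=0$), and rewrite the result as $e^{\psi(u)}$ with $u := s\ell$, where $\psi(0)=\psi'(0)=0$ and $\psi''(u)\le 1/4$; a second-order Taylor expansion then yields $\psi(u)\le u^2/8$.

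Plugging $\ell=1$ into the lemma gives $\Prob{\hat{\mu}_n - \mu \ge \epsilon} \le e^{-sn\epsilon + ns^2/8}$, and minimizing the exponent over $s>0$ (the minimizer is $s=4\epsilon$) produces the claimed bound $e^{-2n\epsilon^2}$, proving \eqref{def:ch_hf2}. For \eqref{def:ch_hf1} I would apply \eqref{def:ch_hf2} to the variables $X_t' := 1 - X_t$, which also lie in $[0,1]$ and have mean $1-\mu$; the event $\{\hat{\mu}_n - \mu \le -\epsilon\}$ coincides with $\{\hat{\mu}_n' - (1-\mu) \ge \epsilon\}$, so the same bound transfers. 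The only genuinely nontrivial step is Hoeffding's lemma --- in particular the estimate $\psi''\le 1/4$, which amounts to the fact that a distribution supported on an interval of length one has variance at most $1/4$ --- while the rest is the exponential Markov inequality and a one-line optimization.
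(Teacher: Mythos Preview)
Your proof is correct and is the standard Chernoff--Cram\'er argument for Hoeffding's inequality. The paper, however, does not prove this statement at all: it simply recalls the inequality and cites it as \cite[Theorem~2]{JASA63_hoeffding1963probability}, using it as a black-box tool in the subsequent analysis of \ref{alg:USS_WD}. So you have supplied a self-contained proof where the paper is content to quote the classical result; your approach is exactly the textbook one (exponential Markov plus Hoeffding's lemma plus optimization over $s$), and there is nothing to compare on the paper's side.
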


We need the following lemmas to prove the Proposition \ref{prop:meanpulls}.
\begin{lem}
	\label{lem:erf_integral}
	\begin{align}
		\operatorname{erf}(x) = \int_0^x \mathrm{e}^{-t^2}dt = \int \mathrm{e}^{-x^2}dx \label{equ:erf_integral}
	\end{align}
\end{lem}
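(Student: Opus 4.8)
The statement is, in essence, the \emph{definition} of the (unnormalised) error function used in this chapter, so my plan is mostly to record why each equality holds and then to extract the one quantitative fact that actually feeds the later bounds. The first equality, $\operatorname{erf}(x) = \int_0^x e^{-t^2}\,dt$, is taken as the definition of $\operatorname{erf}$ here; note this differs from the textbook convention by the normalising constant $2/\sqrt{\pi}$, and I would flag that explicitly. The second equality, $\int_0^x e^{-t^2}\,dt = \int e^{-x^2}\,dx$, just says that $x \mapsto \operatorname{erf}(x)$ is an antiderivative of $x \mapsto e^{-x^2}$: by the fundamental theorem of calculus $\frac{d}{dx}\operatorname{erf}(x) = e^{-x^2}$, which is exactly the claim, the constant of integration being pinned down by $\operatorname{erf}(0)=0$.

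The only part with genuine content is the value of the integral over the whole half-line, $\int_0^\infty e^{-t^2}\,dt = \tfrac{\sqrt{\pi}}{2}$, i.e. $\operatorname{erf}(\infty)=\tfrac{\sqrt\pi}{2}$, since this is what later produces the $\sqrt{\pi\alpha\log f(T)/2}$ contributions in \cref{prop:meanpulls}. To get it I would use the classical polar-coordinate argument: set $I = \int_0^\infty e^{-t^2}\,dt$, so that
\begin{align*}
I^2 = \Big(\int_0^\infty e^{-u^2}\,du\Big)\Big(\int_0^\infty e^{-v^2}\,dv\Big) = \int_0^\infty\!\!\int_0^\infty e^{-(u^2+v^2)}\,du\,dv,
\end{align*}
and then change variables to $(u,v)=(r\cos\theta, r\sin\theta)$ over the first quadrant:
\begin{align*}
I^2 = \int_0^{\pi/2}\!\!\int_0^\infty e^{-r^2}\,r\,dr\,d\theta = \frac{\pi}{2}\Big[-\tfrac12 e^{-r^2}\Big]_0^\infty = \frac{\pi}{4},
\end{align*}
whence $I = \sqrt{\pi}/2$, taking the positive root because the integrand is strictly positive.

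I do not expect a real obstacle: the statement is bookkeeping, and even the Gaussian-integral evaluation is entirely standard. The only point that needs care is keeping the non-standard normalisation of $\operatorname{erf}$ consistent with how it is invoked downstream — in particular remembering that $\operatorname{erf}(\infty)=\sqrt{\pi}/2$ rather than $1$ when this lemma is combined with the Hoeffding-type tail estimates (\cref{def:che_hoeffiding}) to bound the relevant sums in the proof of \cref{prop:meanpulls}.
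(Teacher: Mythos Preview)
Your core argument---that the second equality is just the fundamental theorem of calculus applied to $x\mapsto\int_0^x e^{-t^2}\,dt$---is exactly what the paper does; it phrases the same step via Leibniz's rule, which collapses to FTC here since the integrand has no $x$-dependence. The polar-coordinate evaluation of $\int_0^\infty e^{-t^2}\,dt$ you append is correct but is not part of the paper's proof of this lemma.

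One correction to your downstream-consistency remark: the paper does \emph{not} carry the unnormalised convention forward. In the very next lemma (\cref{lem:integral}) it re-defines $\operatorname{erf}(x)=\tfrac{2}{\sqrt{\pi}}\int e^{-x^2}\,dx$, citing the present lemma only for the antiderivative identity, and \cref{lem:integral_m} then uses $\operatorname{erf}(\infty)=1$. So the content of this lemma is really just ``$\int_0^x e^{-t^2}\,dt$ is an antiderivative of $e^{-x^2}$''; if you were to propagate $\operatorname{erf}(\infty)=\sqrt{\pi}/2$ into the bounds feeding \cref{prop:meanpulls} as you suggest, you would be off by exactly that factor relative to the paper.
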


\begin{proof}
	Leibniz's rule for $0< g(x)\le h(x)<\infty$,
	\begin{align*}
		\frac{d}{dx}\int_{g(x)}^{h(x)} &f(x,t) dt = f(x,h(x))\frac{dh(x)}{dx} - f(x,g(x))\frac{dg(x)}{dx} + \int_{g(x)}^{h(x)} \frac{\partial f(x,t)}{\partial x} dt
	\end{align*}
	Leibniz's integral rule without any common variable,
	\begin{align*}
		\frac{d}{dx}\int_{g(x)}^{h(x)} f(t) dt = f(h(x))\frac{dh(x)}{dx} - f(g(x))\frac{dg(x)}{dx}
	\end{align*}
	Using Leibniz's rule in \eqref{equ:erf_integral}, we get
	\begin{align*}
		&\frac{d}{dx}\int_0^x \mathrm{e}^{-t^2}dt = \mathrm{e}^{-x^2}\frac{dx}{dx} - 1\frac{d0}{dx}  = \mathrm{e}^{-x^2} \\
		\Rightarrow\; &{d}\int_0^x \mathrm{e}^{-t^2}dt  = \mathrm{e}^{-x^2} dx
	\end{align*}
	Integrating both side,
	\begin{align*}
		&\int{d}\int_0^x \mathrm{e}^{-t^2}dt = \int \mathrm{e}^{-x^2}dx\\
		\Rightarrow & \int_0^x \mathrm{e}^{-t^2}dt = \int \mathrm{e}^{-x^2}dx \tag*{\qedhere}
	\end{align*}
\end{proof}

\begin{lem}
	\label{lem:integral}
	Let $a,b,c,d \in \mathbb{R}^+$ and $t_c = b\sqrt{at} \pm \sqrt{acd}$. Then
	\begin{align*}
		\int \mathrm{e}^{-t_c^2}dt &= \mp \dfrac{\sqrt{{\pi}cd}\operatorname{erf}(t_c)}{\sqrt{a}b^2}  -\dfrac{\mathrm{e}^{-t_c^2}}{ab^2} + C \\
		&\mbox{where } \operatorname{erf}(x) = \dfrac{2}{\sqrt{\pi}}\int\mathrm{e}^{-x^2}dx ~~~~~ \operatorname{(using \; Lemma\; \ref{lem:erf_integral})}
	\end{align*}
\end{lem}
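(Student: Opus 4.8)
\emph{Proof idea.} The plan is to establish the formula as an antiderivative by a single change of variables, substituting $u = t_c$ for $t$. The one algebraic fact that drives everything comes directly from the definition $t_c = b\sqrt{at} \pm \sqrt{acd}$: transposing gives $t_c \mp \sqrt{acd} = b\sqrt{at}$, and squaring gives $t = (t_c \mp \sqrt{acd})^2/(ab^2)$. Since $a,b,c,d>0$, the map $t \mapsto t_c$ is strictly increasing for $t>0$, so this is a legitimate (monotone) change of variable, and differentiating the last relation yields $dt = \tfrac{2}{ab^2}\,(t_c \mp \sqrt{acd})\,dt_c$.

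Substituting into $\int \mathrm{e}^{-t_c^2}\,dt$ converts it to $\tfrac{2}{ab^2}\int (t_c \mp \sqrt{acd})\,\mathrm{e}^{-t_c^2}\,dt_c$, which I would split into the two elementary antiderivatives $\int t_c\,\mathrm{e}^{-t_c^2}\,dt_c = -\tfrac12 \mathrm{e}^{-t_c^2}$ and $\int \mathrm{e}^{-t_c^2}\,dt_c = \tfrac{\sqrt{\pi}}{2}\operatorname{erf}(t_c)$ in the $\operatorname{erf}$ normalization used in the lemma statement. Collecting the pieces and using $\sqrt{acd}/(ab^2) = \sqrt{cd}/(\sqrt{a}\,b^2)$ produces exactly $\mp \tfrac{\sqrt{\pi cd}\,\operatorname{erf}(t_c)}{\sqrt{a}\,b^2} - \tfrac{\mathrm{e}^{-t_c^2}}{ab^2} + C$, with the $\mp$ sign inherited from the $\pm$ in $t_c$.

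An equivalent and arguably cleaner route, which I would include as the actual verification, is to differentiate the claimed right-hand side with respect to $t$ directly: using $\tfrac{dt_c}{dt} = \tfrac{b\sqrt{a}}{2\sqrt{t}}$ and $\operatorname{erf}'(x) = \tfrac{2}{\sqrt{\pi}}\mathrm{e}^{-x^2}$, the derivatives of the two terms combine to $\tfrac{\mathrm{e}^{-t_c^2}}{b\sqrt{a}\,\sqrt{t}}\,(t_c \mp \sqrt{acd})$, and inserting the identity $t_c \mp \sqrt{acd} = b\sqrt{at}$ collapses this to $\mathrm{e}^{-t_c^2}$, as required. I do not expect a genuine obstacle here; the only things needing care are bookkeeping of the $\pm/\mp$ signs through the substitution and consistent use of the $\operatorname{erf}$ convention --- note that Lemma \ref{lem:erf_integral} writes $\operatorname{erf}$ without the $2/\sqrt{\pi}$ factor that appears in the restatement inside Lemma \ref{lem:integral}, so one should pin down the convention of the latter before starting.
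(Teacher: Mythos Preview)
Your proposal is correct and follows essentially the same route as the paper: the substitution $x=t_c$, the inversion $t=(t_c\mp\sqrt{acd})^2/(ab^2)$, and the split into $\int x\,\mathrm{e}^{-x^2}\,dx$ and $\int \mathrm{e}^{-x^2}\,dx$ are exactly what the paper does. Your added verification by differentiating the right-hand side is a nice sanity check (not in the paper), and your flag about the mismatched $\operatorname{erf}$ conventions between Lemma~\ref{lem:erf_integral} and the restatement here is a genuine inconsistency in the paper's text.
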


\begin{proof}
	Let $x = t_c \Rightarrow x = b\sqrt{at} \pm \sqrt{acd}$. Then,
	\begin{align*}
	t = \frac{(x \mp \sqrt{acd})^2}{ab^2}
	\end{align*}
	Now differentiate $x$ w.r.t. $t$,
	\begin{align*}
		\frac{dx}{dt} = \frac{b\sqrt{a}}{2\sqrt{t}} \Rightarrow dt = \frac{2\sqrt{t}}{b\sqrt{a}}dx  = \frac{2(x \mp \sqrt{acd})}{ab^2}dx
	\end{align*}
	By changing the variable from $t$ to $x$ in given integral,
	\begin{align*}
		\int \mathrm{e}^{-t_c^2}dt &= \int \mathrm{e}^{-x^2}\frac{2(x \mp \sqrt{acd})}{ab^2}dx \\
		\Rightarrow \int \mathrm{e}^{-t_c^2}dt&= \frac{2}{ab^2}\int x\mathrm{e}^{-x^2}dx \mp \frac{2\sqrt{cd}}{\sqrt{a}b^2}\int \mathrm{e}^{-x^2}dx \numberthis \label{equ:split_integral}
		\intertext{As $\int \mathrm{e}^{-cx^2}dx = \sqrt{\frac{\pi}{4c}} \operatorname{erf}(\sqrt{c}x) + C$ and  $\int x\mathrm{e}^{-cx^2}dx = -\frac{\mathrm{e}^{-cx^2}}{2c} + C$, then \eqref{equ:split_integral}  with $c=1$ is,}
		&= -\frac{\mathrm{e}^{-cx^2}}{ab^2} \mp \frac{\sqrt{\pi cd}\operatorname{erf}(x)}{\sqrt{a}b^2} + C\\
		\Rightarrow \int \mathrm{e}^{-t_c^2}dt &=  \mp \frac{\sqrt{\pi cd}\operatorname{erf}(t_c)}{\sqrt{a}b^2} -\frac{\mathrm{e}^{-t_c^2}}{ab^2} + C \tag*{\qedhere}
	\end{align*}
\end{proof}

\begin{lem}
	\label{lem:integral_m}
	Let $a,b,c,d \in \mathbb{R}^+$ and $t_0 = cdb^{-2}$. Then
	\begin{align*}
		\int^\infty_{t_0} \mathrm{e}^{-(b\sqrt{at} - \sqrt{acd})^2} dt = \dfrac{\sqrt{{\pi}cd}}{\sqrt{a}b^2} + \dfrac{1}{ab^2} 
	\end{align*}
\end{lem}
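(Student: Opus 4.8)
The plan is to reduce this to \cref{lem:integral}, which already supplies an antiderivative of $\mathrm{e}^{-t_c^2}$ for $t_c = b\sqrt{at}\pm\sqrt{acd}$, and then merely evaluate it at the two limits of integration. First I would take the lower-sign branch, i.e. $t_c = b\sqrt{at}-\sqrt{acd}$, for which the leading $\mp$ in \cref{lem:integral} resolves to $+$, giving the antiderivative $\frac{\sqrt{\pi cd}}{\sqrt{a}\,b^2}\operatorname{erf}(t_c) - \frac{\mathrm{e}^{-t_c^2}}{ab^2}$. Everything then comes down to computing the two boundary terms.

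The key observation is that the lower limit $t_0 = cd/b^2$ is exactly the zero of $t_c$: substituting gives $b\sqrt{a t_0} = b\sqrt{acd/b^2} = \sqrt{acd}$, so $t_c = 0$ at $t=t_0$, and hence the antiderivative there equals $-1/(ab^2)$ (using $\operatorname{erf}(0)=0$ and $\mathrm{e}^{0}=1$). For the upper limit I would note that $a,b>0$ forces $t_c = b\sqrt{at}-\sqrt{acd}\to\infty$ as $t\to\infty$, so $\operatorname{erf}(t_c)\to 1$ and $\mathrm{e}^{-t_c^2}\to 0$, leaving $\sqrt{\pi cd}/(\sqrt{a}\,b^2)$. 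Subtracting the two boundary values gives $\frac{\sqrt{\pi cd}}{\sqrt{a}\,b^2} - \bigl(-\frac{1}{ab^2}\bigr) = \frac{\sqrt{\pi cd}}{\sqrt{a}\,b^2} + \frac{1}{ab^2}$, which is the claimed identity.

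I do not expect a genuine obstacle: the substantive content is already packaged in \cref{lem:integral} (whose proof, via the change of variables $x=t_c$, does the real work), and what remains is bookkeeping. The one place to be careful is the sign — one must check that the $\mp$ in \cref{lem:integral} is resolved to $+$ for the $b\sqrt{at}-\sqrt{acd}$ branch, since a sign slip there would flip the $\operatorname{erf}$ term and destroy the identity. A secondary minor point is justifying the $t\to\infty$ limit (that $b\sqrt{at}-\sqrt{acd}$ diverges and the error-function term converges to $1$), which is immediate from $a,b>0$.
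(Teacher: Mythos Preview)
Your proposal is correct and follows essentially the same approach as the paper: apply \cref{lem:integral} with the minus-sign branch $t_c = b\sqrt{at}-\sqrt{acd}$, then evaluate the resulting antiderivative at the endpoints using $t_c=0$ at $t=t_0$ (so $\operatorname{erf}(0)=0$, $\mathrm{e}^{0}=1$) and $t_c\to\infty$ as $t\to\infty$ (so $\operatorname{erf}\to 1$, $\mathrm{e}^{-t_c^2}\to 0$). Your explicit attention to resolving the $\mp$ sign and to the behavior at infinity is exactly the bookkeeping the paper carries out more tersely.
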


\begin{proof}
	Using Lemma \ref{lem:integral} with $t_c = b\sqrt{at} - \sqrt{acd}$. 
	\begin{align*}
	\int_{t_0}^{\infty} \mathrm{e}^{-t_c^2}dt &= \left.\left(\dfrac{\sqrt{{\pi}cd}\operatorname{erf}(t_c)}{\sqrt{a}b^2} -\dfrac{\mathrm{e}^{-t_c^2}}{ab^2}\right)\right\vert_{t_0}^\infty \\ 
	\intertext{Since $t_0 = cdb^{-2},\;t_c = 0$ for $t = t_0,\; \operatorname{erf}(0) = 0$ and $\operatorname{erf}(\infty) = 1$, we get }
	\Rightarrow\int_{t_0}^{\infty} \mathrm{e}^{-t_c^2}dt &= \dfrac{\sqrt{{\pi}cd}}{\sqrt{a}b^2} + \dfrac{1}{ab^2} = \frac{1}{b^2}\left(\sqrt{\dfrac{{\pi}cd}{a}} + \dfrac{1}{a}\right) \tag*{\qedhere}
	\end{align*}
\end{proof}

\begin{lem}
	\label{lem:sum_prob_bound_m}
	Let $b, c, d \in \R^+$, $\{X_t\}_{t\ge1}$ be a sequence of independent random variables, $\hat \mu_t = \frac{1}{t}\sum_{s=1}^t X_s$, and $\mu = \EE{X_t}$ where $X_t \in [0,1] ,\;\forall t$. Then
	\begin{align*}
		\sum_{t=1}^n \Prob{\hat{\mu}_t - \mu \ge b  -  \sqrt{\frac{cd}{t}}} \le   1  + \dfrac{1}{b^2} \left( cd + \sqrt{\frac{{\pi}cd}{2}} + \dfrac{1}{2}  \right)
	\end{align*}
\end{lem}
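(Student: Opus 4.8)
The plan is to bound the sum by splitting it at the point where the "confidence-style" term $b - \sqrt{cd/t}$ becomes nonnegative, namely $t_0 = cd/b^2$, and then to handle each piece separately. For $t \le t_0$ the deviation target $b - \sqrt{cd/t}$ is nonpositive, so the event $\{\hat\mu_t - \mu \ge b - \sqrt{cd/t}\}$ is trivially possible (probability at most $1$), but there are only at most $t_0 = cd/b^2$ such indices; in fact I would only need the crude bound that this contributes at most $\lceil cd/b^2\rceil \le cd/b^2 + 1$ to the sum. Actually, to match the stated constant, I would be slightly more careful: include the single index $t=1$ (or the range up to $\lceil t_0\rceil$) explicitly as the "$+1$" in the final bound, and push everything else into the tail estimate.

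For the tail $t > t_0$, the target deviation $\epsilon_t := b - \sqrt{cd/t}$ is positive, so Hoeffding's inequality (\cref{def:che_hoeffiding}, specifically \eqref{def:ch_hf2}) applies and gives
\begin{align*}
	\Prob{\hat\mu_t - \mu \ge b - \sqrt{cd/t}} \le \exp\!\left(-2t\left(b - \sqrt{cd/t}\right)^2\right) = \exp\!\left(-\left(\sqrt{2t}\,b - \sqrt{2cd}\right)^2\right).
\end{align*}
This is exactly of the form $\mathrm{e}^{-(b'\sqrt{a't} - \sqrt{a'c'd'})^2}$ with $a' = 2$, $b' = b$, and $c'd' = cd$, so the integral computed in \cref{lem:integral_m} is directly applicable. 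I would then bound the sum over $t > t_0$ by the corresponding integral $\int_{t_0}^\infty \mathrm{e}^{-(\sqrt{2t}\,b - \sqrt{2cd})^2}\,dt$, using monotonicity of $t \mapsto \exp(-(\sqrt{2t}\,b - \sqrt{2cd})^2)$ on $[t_0,\infty)$ (it is decreasing there since the exponent's magnitude is increasing once $\sqrt{2t}\,b \ge \sqrt{2cd}$). By \cref{lem:integral_m} with $a' = 2$, this integral equals $\frac{1}{b^2}\left(\sqrt{\pi cd/2} + \tfrac{1}{2}\right)$.

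Combining the two pieces: the indices $t \le t_0$ contribute at most a term that I will account for as the leading $1 + cd/b^2$ (the "$1$" absorbing a boundary index and the $cd/b^2$ counting the at-most-$t_0$ trivial terms), and the tail contributes at most $\frac{1}{b^2}\left(\sqrt{\pi cd/2} + \tfrac12\right)$, giving the claimed bound $1 + \frac{1}{b^2}\left(cd + \sqrt{\pi cd/2} + \tfrac12\right)$. The main obstacle is the bookkeeping at the split point $t_0$: one must be careful that the sum-to-integral comparison is valid (the summand must be dominated by the integral over the right interval, which requires the monotonicity to hold from $t_0$ onward, not before), and that the count of "trivial" indices below $t_0$ together with the off-by-one from sum-versus-integral comparison matches the stated constants exactly rather than merely up to a small additive slack. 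Everything else is a direct application of Hoeffding plus the closed-form integral already established in \cref{lem:integral_m}.
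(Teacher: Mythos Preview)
Your proposal is correct and follows essentially the same approach as the paper: split the sum at $t_0 = cd/b^2$, bound the at most $\lceil t_0\rceil \le 1 + cd/b^2$ initial terms trivially by $1$, apply Hoeffding to the tail, and then invoke \cref{lem:integral_m} with $a=2$ to evaluate the resulting integral $\int_{t_0}^\infty \mathrm{e}^{-(b\sqrt{2t}-\sqrt{2cd})^2}\,dt = \frac{1}{b^2}\bigl(\sqrt{\pi cd/2} + \tfrac12\bigr)$. Your discussion of the bookkeeping at the split point is in fact more careful than the paper's, which simply writes $\lceil t_0\rceil + \int_{t_0}^\infty(\cdots)\,dt$ without further comment.
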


\begin{proof}
		Assume $t_0=\ceil{cdb^{-2}}$  and $t_0 << n$. We divide sum of the interest into two parts as:
		\begin{align*} 
			\sum_{t=1}^n \Prob{\hat{\mu}_t - \mu \ge b  -  \sqrt{\frac{cd}{t}}} = \sum_{t=1}^{t_0} \Prob{\hat{\mu}_t - \mu \ge b - \sqrt{\frac{cd}{t}}\hspace{-0.035cm}}  +  \sum_{t={t_0}}^n \Prob{\hat{\mu}_t - \mu \ge b - \sqrt{\frac{cd}{t}}\hspace{-0.035cm}} 
		\end{align*}
		As $\Prob{\text{any event}} \le 1$ and for $t>t_0$, $b - \sqrt{\frac{cd}{t}} > 0$. Using Hoeffding's inequality \eqref{def:ch_hf2}, we get
		\begin{align*} 
			\sum_{t=1}^n \Prob{\hat{\mu}_t - \mu \ge b  -  \sqrt{\frac{cd}{t}}} &\le \ceil{t_0} + \sum_{t=\ceil{t_0}}^n \mathrm{e}^{-2\left(b\sqrt{t} -\sqrt{cd}\right)^2} \\ 
			&\le 1 + \dfrac{cd}{b^2} + \int^\infty_{t_0}  \mathrm{e}^{-\left(b\sqrt{2t} -\sqrt{2cd}\right)^2}dt
			\intertext{Now using Lemma \ref{lem:integral_m} with $a= 2$,}
			\sum_{t=1}^n \Prob{\hat{\mu}_t - \mu \ge b  -  \sqrt{\frac{cd}{t}}}&\le 1 + \dfrac{1}{b^2} \left( cd + \sqrt{\frac{{\pi}cd}{2}} + \dfrac{1}{2} \right) 
			\tag*{\qedhere}
	\end{align*}
\end{proof}

\MeanPulls*
\begin{proof}
	Assume $N_T(j)$ be the number of times sensor $j$ is selected till $T$ rounds and $I_t$ be the sensor selected by algorithm at round $t$. Then mean number of pulls for any arm $j$ is: 
	\begin{equation*}
		\EE{N_T(j)} = \EE{\sum\limits_{t=1}^T \one{I_t=j}} = \sum\limits_{t=1}^T\Prob{I_t = j}
	\end{equation*}
	
We prove the proposition by considering the case $j<i^\star$ and $j>i^\star$ separately.

\begin{itemize}
	\item {\bf Case I:} $j < i^\star$\\
	If sensor $j$ is preferred over $i^\star$ at round $t$ then,
	\begin{align*}
	&C_{i^\star} - C_ j > \hpjis  +  \Psi_{ji^\star}(t)&& \left(\mbox{from \ref{def_prefer_h}}\right)
	\intertext{It is easy to verify that $C_{i^\star} - C_ j = \pijs - \xi_j$. 
	By definition, $\hpjis = \hpijs$ and $\Psi_{ji^\star}(t) = \Psi_{i^\star j}(t)$,}
	&\hpijs +  \Psi_{i^\star j}(t) < \pijs - \xi_j && \left(\mbox{from \ref{def_delta}, \ref{def_kappa_l} and \ref{def_xi_l}}\right)
	\end{align*}
	If algorithm selects sensor $j$ in round $t$ then it is preferred over an optimal sensor in that round, i.e.,
	\begin{align*}
	\Prob{I_t = j} &= \Prob{I_t = j, j \succ_t i^\star} \le \Prob{j \succ_t i^\star} = \Prob{\hpijs + \sqrt{\frac{\alpha\log{f(t)}}{\Nijms}} < \pijs - \xi_j} 
	\end{align*}
	As $\Nijms$ is a random variable,  Hoeffding's inequality (\ref{def:ch_hf1}) cannot be directly used here. Let	$\hpijx$ denote the value of $\hpijs$ when $\Nijms =s$. Then, we get  \\
	\begin{align*}
	\Prob{I_t = j} &\le \sum_{s=1}^t \Prob{\hpijx+ \sqrt{\frac{\alpha \log f(t)}{s}} \leq \pijs - \xi_j} \\ 
	&= \sum_{s=1}^t \Prob{\hpijx - \pijs \leq - \left(\xi_j + \sqrt{\frac{\alpha \log f(t)}{s}}\right) } 
	\end{align*}

	Now using Hoeffding's inequality (\ref{def:ch_hf1}),
	\begin{align*}
	\Rightarrow\Prob{I_t = j} &\le \sum_{s=1}^t \mathrm{e}^{-2s\left(\xi_j + \sqrt{\frac{\alpha \log f(t)}{s}}\right)^2} \le \sum_{s=1}^t \left(\mathrm{e}^{-2\xi_j^2s}\;\mathrm{e}^{- 2\alpha \log f(t)}\right) \\ 
	& \le\int_{0}^t \dfrac{\mathrm{e}^{-2\xi_j^2s}}{f(t)^{2\alpha}}ds \le\int_{0}^\infty \dfrac{\mathrm{e}^{-2\xi_j^2s}}{f(t)^{2\alpha}}ds \\
	&\le \left.\left(\frac{\mathrm{e}^{-2\xi_j^2s}}{-2f(t)^{2\alpha}\xi_j^2}\right)\right\vert_{0}^\infty = \dfrac{1}{2\xi_j^2f(t)^{2\alpha}} 
	\end{align*}
	The mean number of time a sub-optimal sensor selected in $T$ rounds is:
	\begin{align*}
		\EE{N_T(j)} = \sum\limits_{t=1}^T\Prob{I_t=j}& \leq \sum\limits_{t=1}^T\dfrac{1}{2\xi_j^2f(t)^{2\alpha}} \leq \dfrac{1}{2\xi_j^2} \sum\limits_{t=1}^\infty\dfrac{1}{f(t)^{2\alpha}} = \dfrac{C}{2\xi_j^2}.
	\end{align*}

	\item {\bf Case II:} $j> i^\star$ \\
	If sensor $j$ is preferred over $i^\star$ at round $t$ then,
	\begin{align*}
	&C_j - C_{i^\star} \le \hpijs  +  \Psi_{i^\star j}(t) && \left(\mbox{\hfill from \ref{def_prefer_l}}\right) \\
	\Rightarrow\;\;&\hpijs +  \Psi_{i^\star j}(t) \ge \pijs + \xi_j && \left(\mbox{from \ref{def_delta}, \ref{def_kappa_h} and \ref{def_xi_h}}\right)
	\end{align*}
	 The mean number of time a sub-optimal sensor selected in $T$ rounds is given by:
	\begin{align*}
		\EE{N_T(j)} &= \sum\limits_{t=1}^T \Prob{I_t =j} = \sum\limits_{t=1}^T \Prob{j \succ_t i^\star, I_t = j} \le \sum\limits_{t=1}^T \Prob{j \succ_t i^\star}\\
		&=  \sum\limits_{t=1}^T\Prob{\hpijs + \sqrt{\frac{\alpha\log{f(t)}}{\Nijms}} \ge \pijs + \xi_j,\; I_t=j } \\
		&\le \sum\limits_{t=1}^T\Prob{\hpijs +  \sqrt{\frac{\alpha\log{f(T)}}{\Nijms}} \ge \pijs + \xi_j,\; I_t=j} \\
		\intertext{As $\Prob{A\cap B} \le \min\left\{\Prob{A}, \Prob{B}\right\} \Rightarrow \Prob{A\cap B} \le \Prob{A}$ or $\Prob{A\cap B} \le \Prob{B}$, we get}
		&\le \sum\limits_{s=1}^T\Prob{\hpijx +  \sqrt{\frac{\alpha\log{f(T)}}{s}} \ge \pijs + \xi_j}\\
		&= \sum\limits_{s=1}^T\Prob{\hpijx  - \pijs \ge \xi_j - \sqrt{\frac{\alpha\log{f(T)}}{s}}}.
		\intertext{Using Lemma \ref{lem:sum_prob_bound_m}  with $b = \xi_j$, $c=\alpha$, $d=\log{f(T)}$, we get}
		\EE{N_T(j)} &\le 1 + \frac{1}{\xi_j^2}\left(\alpha\log f(T) + \sqrt{\frac{\pi\alpha\log f(T)}{2}} + \frac{1}{2}\right).	\tag*{\qedhere}
	\end{align*}
\end{itemize}
\end{proof}


\probIndBound*
\begin{proof}
	Let $N_j(T)$ is the number of times sensor $j$ selected in $T$ rounds. Then expected cumulative regret of \ref{alg:USS_WD} for any instance $\theta \in \TSA$ is:
	\begin{align*}
		\EE{\Regret_T} &= \sum\limits_{j\neq i^\star} \EE{N_j(T)}\Delta_j \\
		&= \sum\limits_{j<i^\star}\EE{N_j(T)}\Delta_j + \sum\limits_{j>i^\star}\EE{N_j(T)}\Delta_j 
	\end{align*}
	Now, using the fact that for $j<i^\star$, $\Delta_j = \xi_j - \kappa_j $ and for $j>i^\star$, $\Delta_j = \xi_j + \kappa_j $, we get
	\begin{align*}
		\EE{\Regret_T} &= \sum\limits_{j<i^\star}\EE{N_j(T)}(\xi_j - \kappa_j) + \sum\limits_{j>i^\star}\EE{N_j(T)}(\xi_j + \kappa_j) \\
		& \leq \sum\limits_{j<i^\star}\EE{N_j(T)}\xi_j + \sum\limits_{j>i^\star}\EE{N_j(T)}(\xi_j + \kappa_j) \\
		\Rightarrow \EE{\Regret_T} & \leq \underbrace{\sum\limits_{j<i^\star}\EE{N_j(T)}\xi_j}_{\EE{\Regret^1_T}} + \underbrace{\sum\limits_{j>i^\star}\EE{N_j(T)}(\xi_j + \beta)}_{\EE{\Regret^{2}_T}} \numberthis \label{equ:regret_def},
	\end{align*} 
	where 
	\begin{equation}
		\forall j,\; \kappa_j \le \beta \;\;\begin{cases}
			=0  \quad \mbox{if }   P \in  \PSD \\
			\leq 1  \quad \mbox{if }   P \in \PWD \mbox{ and sensors are ordered by their error-rate} \\
			\leq 2 \quad \mbox{if } P \in \PWD \mbox{ and sensors are arbitrarily ordered by their error-rates} 
	  	\end{cases}
	\end{equation}
	
	In the following, we set $\xi = \min\limits_{j>i^\star} \xi_j$.  Now we consider the case  $\xi \ge 1$ and $\xi < 1$ separately.
	
	\begin{itemize}
		\item {\bf Case I: }$\xi \ge 1 \Leftrightarrow \forall j > i^\star,\; \xi_j \ge 1$ \\		
		Using Proposition \ref{prop:meanpulls} to upper bound $\EE{\Regret^2_T}$, we get
		\begin{align*}
		\EE{\Regret^2_T}  &\le \sum\limits_{j > i^\star}\Bigg(1 \;+ \frac{1}{\xi_j^2}\Bigg(\alpha\log f(T) \;+ \sqrt{\frac{\pi\alpha\log f(T)}{2}} + \frac{1}{2} \Bigg)\Bigg)(\xi_j + \beta) \\
		&\le K\Bigg((\xi_j + 2)+ \Bigg(\alpha\log f(T) + \sqrt{\frac{\pi\alpha\log f(T)}{2}} + \frac{1}{2} \Bigg)\left(\frac{1}{\xi_j} + \frac{\beta}{\xi_j^2}\right)\Bigg) \\
		\Rightarrow \EE{\Regret^2_T}  &\le K(\xi_j + 2)+ K\Bigg(\alpha\log f(T) + \sqrt{\frac{\pi\alpha\log f(T)}{2}} + \frac{1}{2} \Bigg)\left(\frac{1}{\xi} + \frac{\beta}{\xi^2}\right) \numberthis \label{equ:upper_regret}
		\end{align*} 
		For $\xi \ge 1$, $\left(\frac{1}{\xi} + \frac{\beta}{\xi^2}\right) \le \beta + 1$. Further, by definition $\forall j>i^\star,\;\; \xi_j = C_j - C_{i^\star} - \Prob{\Yis \ne \Yj}$, one can easily verify that $\max\limits_{j > i^\star} \xi_j \le C_K - C_1$ as $\Prob{\Yis \ne \Yj} \ge 0$. Assume $C_K - C_1 \le C_1^K$, then \eqref{equ:upper_regret} can be written as:
		\begin{align*}
			\EE{\Regret^2_T} &\le K(C_1^K + 2)+ (\beta + 1)K\Bigg(\alpha\log f(T) + \sqrt{\frac{\pi\alpha\log f(T)}{2}} + \frac{1}{2} \Bigg)	\\			
			&\le \frac{(2C_1^K + \beta + 5)K}{2} + (\beta + 1)K\Bigg(\alpha\log f(T) + \sqrt{\frac{\pi\alpha\log f(T)}{2}}\Bigg) \numberthis \label{equ:upperRegret1}
		\end{align*}
		
		For any $\xi^\prime$, $\EE{\Regret_T^1}$ can written as:
		\begin{align*}
			\EE{\Regret_T^1}  &= \sum\limits_{\substack{\xi^\prime > \xi_j\\ j < i^\star}} \EE{N_j(T)}\xi_j + \sum\limits_{\substack{\xi^\prime < \xi_j\\ j < i^\star}} \EE{N_j(T)}\xi_j  \\
			& \le \sum\limits_{j < i^\star} \EE{N_j(T)}\xi^\prime + \sum\limits_{\substack{\xi^\prime < \xi_j\\ j < i^\star}} \frac{C}{2\xi_j^2}\xi_j ~~~~~\mbox{(using Proposition \ref{prop:meanpulls})} \\
			\Rightarrow \EE{\Regret_T^1} &\le T\xi^\prime + \frac{CK}{2\xi^\prime} ~~~~~\left(\mbox{since } \sum\limits_{j < i^\star} \EE{N_j(T)} \le T\right) \numberthis \label{equ:lowerRegret}
		\end{align*}
		
		From definition, $\forall j < i^\star,\; \xi_j = \Prob{\Yis \ne \Yj} - (C_{i^\star} - C_j)$. As sensors are ordered by increasing cost, one can verify that $\xi_j < 1$ for $P \in \PWD$. With this fact, by combining \eqref{equ:upperRegret1} and \eqref{equ:lowerRegret}, \eqref{equ:regret_def} can be written as:
		\begin{align*}
			\EE{\Regret_T} \le T\xi^\prime + \frac{CK}{2\xi^\prime} + \frac{(2C_1^K + \beta + 5)K}{2} + 3K\Bigg(\alpha\log f(T) + \sqrt{\frac{\pi\alpha\log f(T)}{2}}\Bigg) 
			\intertext{Choose $\xi^\prime = \sqrt{\frac{CK}{T}}$ which maximize the upper bound and we get,}
			\EE{\Regret_T} \le \sqrt{2CKT} + \frac{(2C_1^K + \beta + 5)K}{2} + 3K\Bigg(\alpha\log f(T) + \sqrt{\frac{\pi\alpha\log f(T)}{2}}\Bigg) \numberthis \label{equ:regret_xiM1}
		\end{align*}

		\item {\bf Case II: }$\xi < 1$  \\
		Assume $T \ge T_0$ for $j>i^\star$ such that
		\begin{align}
			1+\frac{1}{\xi_j^2} \left( \alpha\log f(T) + \sqrt{\frac{\alpha \pi \log f(T)}{2}} +\frac{1}{2} \right) \leq \frac{2 \alpha \log f(T)}{\xi_j^2} \label{equ:min_T}
		\end{align}
		For $\alpha =1$ and $T_0  = 56$ \eqref{equ:min_T} holds for all $T \ge T_0$. Let $0<\xi^\prime<\xi$. Then  $\EE{\Regret^2_T}$ can be written as:
		\begin{align*}
			\EE{\Regret_T} & \leq \sum\limits_{\substack{\xi^\prime > \xi_j\\ j < i^\star}} \EE{N_j(T)}\xi_j + \sum\limits_{\substack{\xi^\prime < \xi_j\\ j < i^\star}} \EE{N_j(T)}\xi_j + \sum\limits_{\substack{\xi^\prime > \xi_j\\ j > i^\star}} \EE{N_j(T)}(\xi_j+\beta) \\
			&\qquad \qquad + \sum\limits_{\substack{\xi^\prime < \xi_j\\ j > i^\star}} \EE{N_j(T)}(\xi_j+\beta) 
		\end{align*}
		Since $\sum\limits_{\substack{\xi^\prime > \xi_j\\ j < i^\star}} \EE{N_j(T)} \le T$ and for every $j>i^\star, \; \xi_j > \xi^\prime$. Using Proposition \ref{prop:meanpulls} and \eqref{equ:min_T}, we get
		\begin{align*}
			\EE{\Regret_T} &  \le T\xi^\prime + \sum\limits_{\xi^\prime < \xi_j} \frac{C}{2\xi_j^2}\xi_j + \sum\limits_{\xi^\prime < \xi_j} \frac{2 \alpha \log f(T)}{\xi_j^2}(\xi_j+\beta)\\
			& \le  T\xi^\prime + \frac{CK}{2\xi^\prime} + 2\alpha K\log f(T)\left(\frac{1}{\xi^\prime} + \frac{\beta}{{\xi^\prime}^2}\right)
			\intertext{As $C = \lim\limits_{T \rightarrow \infty}\sum\limits_{t=1}^T\frac{1}{t^{2\alpha}}$, one can verify that for $T_0 = 56$ and $\alpha=1$, $C < 2\alpha\log f(T)$ holds. Using this fact,}
			&\le  T\xi^\prime +  4\alpha K\log f(T)\left(\frac{1}{\xi^\prime} + \frac{\beta}{{\xi^\prime}^2}\right) \\
			\EE{\Regret_T} &\le T\xi^\prime +  4\alpha K\log f(T)\left(\frac{1}{\xi^\prime} + \frac{\beta}{{\xi^\prime}^2}\right)	\numberthis \label{equ:regret_xiL1} \\
		\end{align*}		
	\end{itemize}

	We first consider $\PWD$ class of problems. For $\xi^\prime < 1$ and $\beta \le 2$, we have $\left(\frac{1}{\xi^\prime} + \frac{\beta}{{\xi^\prime}^2}\right) \le \frac{\beta + 1}{{\xi^\prime}^2}\le \frac{3}{{\xi^\prime}^2}$. Then
	\begin{align*}
		\EE{\Regret_T} &\le T\xi^\prime + \frac{12\alpha K\log f(T)}{{\xi^\prime}^2}
		\intertext{Choose $\xi^\prime = \left( \frac{24\alpha K\log f(T)}{T}\right)^{1/3}$ which maximize above upper bound and we get,}
		\Rightarrow \EE{\Regret_T} &\le \left(24\alpha K\log f(T)\right)^{1/3}T^{2/3} + \frac{\left(24\alpha K\log f(T)\right)^{1/3}}{2}T^{2/3} \\
		&\le 2\left(3\alpha K\log f(T)\right)^{1/3}T^{2/3} + \left(3\alpha K\log f(T)\right)^{1/3}T^{2/3} \\
		\Rightarrow \EE{\Regret_T} &\le 3\left(3\alpha K\log f(T)\right)^{1/3}T^{2/3} \numberthis \label{equ:regret_xi_wd}
	\end{align*}		
	As $C<2\alpha\log f(T)$ and $K<<T$, it is clear that upper bound in \eqref{equ:regret_xi_wd} is worse than \eqref{equ:regret_xiL1}. Hence it completes our proof for the case when any problem instance belongs to $\PWD$.
	
	Now we consider any problem instance $P \in \PSD$. For any $P \in \PSD \Rightarrow \forall j \in [K],\; \kappa_j = 0 \Rightarrow \beta = 0$. Hence \eqref{equ:regret_xiL1} can be written as
	\begin{align*}
		\EE{\Regret_T} &\le T\xi^\prime +  \frac{4\alpha K\log f(T)}{\xi^\prime}
		\intertext{Choose $\xi^\prime = \left( \frac{4\alpha K\log f(T)}{T}\right)^{1/2}$ which maximize above upper bound and we get,}
		\Rightarrow \EE{\Regret_T} &\le 2\left(4\alpha KT\log f(T)\right)^{1/2} = 4\left(\alpha KT\log f(T)\right)^{1/2} \numberthis \label{equ:regret_xi_sd}
	\end{align*}
	As $C<2\alpha\log f(T)$ and $K<<T$ then upper bound of expected regret in \eqref{equ:regret_xiL1} is $3\left(\alpha KT\log f(T)\right)^{1/2}$ which is better than \eqref{equ:regret_xi_sd}. It complete proof for second part of Theorem \ref{thm:prob_independent_bound}.
\end{proof}

\PropCostRangeHigh*
\begin{proof}
	Assume that $C_j-C_i >  \max \{0, \gamma_i-\gamma_j \}$. Since $C_j -C_i \notin \left(\max\{0, \gamma_i-\gamma_j\}, \Prob{Y^i \neq Y^j}\right]$, we get $C_j-C_i > \Prob{Y^j \neq Y^i}$.
	
	The other direction follows by noting that  $ \Prob{Y^j \neq Y^i}\geq  \max \{0, \gamma_i-\gamma_j \}$.
\end{proof}

\PropCostRangeLow*
\begin{proof}
	Assume that $C_i-C_j \leq  \max\{0, \gamma_j-\gamma_i\} $. Since $\max\{0, \gamma_j-\gamma_i\} \leq \Prob{Y^i \neq Y^j}$, we get $C_j-C_i \leq \Prob{Y^i \neq Y^j}$.
	
	The condition $C_j-C_i \leq \Prob{Y^i \neq Y^j}$ along with $	C_i -C_j \notin \left (\max\{0, \gamma_j-\gamma_i\}, \Prob{Y^i \neq Y^j}\right ]$ implies the other direction, i.e., $C_i-C_j \leq  \max\{0, \gamma_j-\gamma_i\} $. 
\end{proof}

\PropWDUnorder*
\begin{proof}
	From Proposition \ref{prop:CostRange1} and \ref{prop:CostRange2}, if the optimal sensor satisfies
	for $j> i^\star$
	\[C_j -C_{i^\star} \notin \left (\max\{0, \gamma_i-\gamma_j\}, \Prob{Y^{i^\star} \neq Y^j}\right] \]
	and for $j< i^\star$
	\[C_{i^*} -C_j \notin \left (\max\{0, \gamma_j-\gamma_i\}, \Prob{Y^{i^\star}\neq Y^j}\right ],\]
	
	Then, for $j > i^\star, C_j -C_{i^\star}> \gamma_{i^\star} - \gamma_j$	iff 	$C_j -C_{i^\star} >\Prob{Y^{i^\star} \neq Y^j}$ \\
	and for 
	$j < i^\star, C_{i^\star}-C_j\leq  \gamma_{j} - \gamma_{i^\star}$	iff 	$C_j -C_{i^\star} \leq \Prob{Y^{i^\star} \neq Y^j}$. Hence we can use $\Prob{Y^i \neq Y^j}$ as a proxy for $\gamma_i -\gamma_j$ to make decision about the optimal arm.
	
	Now notice that for $j < i^\star$, $C_{i^\star}-C_j \leq \gamma_{j} -\gamma_{i^\star} \le \max\{0, \gamma_{j} -\gamma_{i^\star}\}$ (from Lemma \ref{lem:ErrorRateOrder}). Hence for $j < i^\star$ the condition 	\[C_{i^*} -C_j \notin \left (\max\{0, \gamma_j-\gamma_i\}, \Prob{Y^{i^\star}\neq Y^j}\right ]\] is satisfied. Then, the condition 	\[C_j -C_{i^\star} \notin \left (\max\{0, \gamma_i-\gamma_j\}, \Prob{Y^{i^\star} \neq Y^j}\right] \] for $j > i^\star$ is sufficient for learnability.
\end{proof}

\subsection*{Missing proofs from \cref{sec:uss_ts}}
We use the following results in our proofs.

\begin{fact}[Chernoff bound for Bernoulli distributed random variables]
	\label{fact:chernoff}
	Let $X_1, \ldots, X_n$ be i.i.d. Bernoulli distributed random variables. Let $\hat{\mu}_n=\frac{1}{n}\sum_{i=1}^{n}X_i$ and $\mu = \EE{X_i}$. Then, for any $\epsilon \in (0, 1 - \mu)$,
	\begin{equation*}
	\Prob{\hat{\mu}_n \ge \mu + \epsilon} \le \exp\left( -d(\mu + \epsilon, \mu)n\right),
	\end{equation*}
	and, for any $\epsilon \in (0, \mu)$,
	\begin{equation*}
	\Prob{\hat{\mu}_n \le \mu - \epsilon} \le \exp\left( -d(\mu - \epsilon, \mu)n\right),
	\end{equation*}
	where $d(x, \mu)= x\log\left(\frac{x}{\mu}\right) + (1-x)\log\left(\frac{1-x}{1-\mu}\right)$.
\end{fact}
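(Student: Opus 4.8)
The plan is to use the standard Cram\'er--Chernoff (exponential Markov) method and then recognize the optimized exponent as the binary relative entropy $d(\cdot,\cdot)$. First I would prove the upper tail. Fix $\epsilon \in (0, 1-\mu)$ and write $x \doteq \mu + \epsilon \in (\mu,1)$. For any $\lambda > 0$, Markov's inequality applied to $\exp(\lambda \sum_i X_i)$ together with independence gives
\eqs{
	\Prob{\hat\mu_n \ge x} = \Prob{e^{\lambda \sum_i X_i} \ge e^{\lambda n x}} \le e^{-\lambda n x}\left(\EE{e^{\lambda X_1}}\right)^n = \left(e^{-\lambda x}(1 - \mu + \mu e^{\lambda})\right)^n,
}
using that $X_1 \sim \Bern(\mu)$ so $\EE{e^{\lambda X_1}} = 1 - \mu + \mu e^{\lambda}$. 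Thus it suffices to minimize $g(\lambda) \doteq -\lambda x + \log(1 - \mu + \mu e^{\lambda})$ over $\lambda > 0$.

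The second step is the optimization. Setting $g'(\lambda) = 0$ yields $\mu e^{\lambda}/(1 - \mu + \mu e^{\lambda}) = x$, whose solution is $\lambda^\star = \log\frac{x(1-\mu)}{\mu(1-x)}$, which is strictly positive because $x > \mu$. Substituting back, one computes $1 - \mu + \mu e^{\lambda^\star} = (1-\mu)/(1-x)$, so that
\eqs{
	g(\lambda^\star) = -x\log\frac{x(1-\mu)}{\mu(1-x)} + \log\frac{1-\mu}{1-x} = -x\log\frac{x}{\mu} - (1-x)\log\frac{1-x}{1-\mu} = -d(x,\mu).
}
Since $g$ is convex in $\lambda$ this is the global minimum, giving $\Prob{\hat\mu_n \ge \mu + \epsilon} \le e^{-n\,d(\mu+\epsilon,\mu)}$.

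For the lower tail I would apply the already-established upper-tail bound to the i.i.d. variables $X_i' \doteq 1 - X_i \sim \Bern(1-\mu)$, whose empirical mean is $1 - \hat\mu_n$. For $\epsilon \in (0,\mu)$ this gives $\Prob{\hat\mu_n \le \mu - \epsilon} = \Prob{1 - \hat\mu_n \ge (1-\mu) + \epsilon} \le \exp(-n\,d(1-\mu+\epsilon,\,1-\mu))$, and the symmetry identity $d(a,b) = d(1-a,1-b)$ — immediate from the definition of $d$ — converts the exponent into $d(\mu-\epsilon,\mu)$, completing the proof. There is no real obstacle here; the only mildly delicate point is the algebra verifying that the optimal $\lambda^\star$ produces exactly $-d(\mu+\epsilon,\mu)$, which is routine.
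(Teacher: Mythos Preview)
Your proof is correct and is the standard Cram\'er--Chernoff argument. The paper does not actually prove this fact itself; it simply cites Section~10.1 of Lattimore and Szepesv\'ari's \emph{Bandit Algorithms}, where essentially the same exponential-Markov optimization you carry out is presented.
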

\noindent
See Section 10.1 of Chapter 10 of book `Bandit Algorithms' \citep{BOOK_lattimorebandit} for proof.

\begin{fact}[Pinsker's Inequality for Bernoulli distributed random variables]
	\label{fact:pinsker}
	For $p,q \in (0,1)$, the KL divergence between two Bernoulli distributions is bounded as:
	\begin{equation*}
	d(p,q) \ge 2(p-q)^2.
	\end{equation*}
\end{fact}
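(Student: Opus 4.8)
The plan is to fix $q \in (0,1)$ and regard $g(p) \doteq d(p,q) - 2(p-q)^2$ as a function of $p$ on the open interval $(0,1)$; the goal is to show $g(p) \ge 0$ everywhere, which is exactly the claimed inequality. First I would record the obvious base point: $d(q,q) = 0$, so $g(q) = 0$. Next I would differentiate in $p$ (recall $d(p,q) = p\log(p/q) + (1-p)\log\bigl((1-p)/(1-q)\bigr)$), obtaining
\[
g'(p) \;=\; \log\frac{p}{q} \;-\; \log\frac{1-p}{1-q} \;-\; 4(p-q),
\]
which vanishes at $p = q$. Hence $q$ is a stationary point of $g$.

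The heart of the argument is the second derivative. Differentiating once more gives
\[
g''(p) \;=\; \frac{1}{p} + \frac{1}{1-p} - 4 \;=\; \frac{1}{p(1-p)} - 4 \;\ge\; 0,
\]
where the last inequality uses $p(1-p) \le \tfrac14$ for all $p \in (0,1)$ (AM--GM, or maximizing the quadratic). Thus $g$ is convex on $(0,1)$, and a differentiable convex function attains its global minimum at any stationary point; since $g'(q) = 0$ and $g(q) = 0$, we conclude $g(p) \ge 0$ for all $p \in (0,1)$, i.e. $d(p,q) \ge 2(p-q)^2$.

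I do not expect a genuine obstacle here --- the argument is elementary calculus --- but two small points deserve a line of care: (i) $d(\cdot,q)$ is smooth on the open interval $(0,1)$, so the convexity criterion applies without boundary issues, and the inequality is in any case automatic in the limit $p \to 0$ or $p \to 1$, where $d(p,q) \to \infty$ while $2(p-q)^2$ stays bounded; and (ii) one must keep the direction of the divergence straight, since $d(p,q)$ is not symmetric, though the bound is symmetric in $p,q$ by construction. As an alternative one could simply cite the general Pinsker inequality $\mathrm{KL}(P\|Q) \ge 2\,\|P-Q\|_{\mathrm{TV}}^2$ and note that the total variation distance between $\mathrm{Ber}(p)$ and $\mathrm{Ber}(q)$ is $|p-q|$; the self-contained computation above is preferable when one wishes to avoid invoking the general statement.
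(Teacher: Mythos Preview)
Your proof is correct. The paper, however, does not supply a proof of this fact at all: it is stated as a standalone \texttt{fact} environment (Pinsker's inequality specialized to Bernoulli laws) and used as a black box in later regret calculations, with no derivation or even a citation attached. Your convexity argument via $g''(p) = \tfrac{1}{p(1-p)} - 4 \ge 0$ is the standard self-contained route and would be a fine addition where the paper simply asserts the inequality; the alternative you mention---invoking the general Pinsker bound $\mathrm{KL}(P\|Q) \ge 2\|P-Q\|_{\mathrm{TV}}^2$ together with $\|{\mathrm{Ber}}(p)-{\mathrm{Ber}}(q)\|_{\mathrm{TV}} = |p-q|$---is presumably what the authors had in mind given the name they attach to the fact.
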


\begin{fact}
	\label{fact:expUB}
	Let $x > 0$ and $D > 0$. Then, for any $a \in (0,1)$,
	\begin{equation*}
	\frac{1}{\exp^{Dx} - 1} \le 
	\begin{cases}
	\frac{\exp^{-Dx}}{1-a} & \left(x \ge \ln\left(1/a\right)/D \right)\\
	\frac{1}{Dx} & \left(x < \ln\left(1/a\right)/D \right). 
	\end{cases}
	\end{equation*}
	Further, we have,
	\begin{equation*}
	\sum_{x=1}^{n} \frac{1}{\exp^{Dx} - 1} \le \Theta\left( \frac{1}{D^2} + \frac{1}{D}\right).
	\end{equation*}
\end{fact}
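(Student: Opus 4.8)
The plan is to prove the two pointwise inequalities by elementary manipulations of $1/(e^{Dx}-1)$ and then deduce the bound on the sum by splitting the range of summation at the threshold $x_0 = \ln(1/a)/D$, using the exponentially decaying bound on the tail $x \ge x_0$ and the $1/(Dx)$ bound on the head $x < x_0$.

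\emph{The pointwise bounds.} For the case $x < \ln(1/a)/D$, I would invoke the convexity estimate $e^{y} \ge 1 + y$ with $y = Dx$: this gives $e^{Dx} - 1 \ge Dx > 0$, hence $\tfrac{1}{e^{Dx}-1} \le \tfrac{1}{Dx}$; note that this in fact holds for \emph{every} $x > 0$, which is convenient below. For the case $x \ge \ln(1/a)/D$, I would rewrite $\tfrac{1}{e^{Dx}-1} = \tfrac{e^{-Dx}}{1 - e^{-Dx}}$ and observe that the hypothesis $x \ge \ln(1/a)/D$ is precisely $e^{-Dx} \le a$; therefore $1 - e^{-Dx} \ge 1 - a$ and $\tfrac{1}{e^{Dx}-1} \le \tfrac{e^{-Dx}}{1-a}$.

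\emph{The sum.} Fix any $a \in (0,1)$ (for concreteness $a = 1/2$) and put $x_0 = \ln(1/a)/D$. I would split $\sum_{x=1}^{n} \tfrac{1}{e^{Dx}-1}$ into the head $\sum_{1 \le x < x_0}$ and the tail $\sum_{x_0 \le x \le n}$ (the head is empty when $x_0 \le 1$). For the head, the second pointwise bound gives $\sum_{1 \le x < x_0}\tfrac{1}{e^{Dx}-1} \le \tfrac{1}{D}\sum_{1 \le x < x_0}\tfrac1x \le \tfrac{1}{D}\bigl(1 + \ln x_0\bigr)$, which is of order $\tfrac1D \ln\tfrac1D$ since $x_0 = \ln(1/a)/D$. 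For the tail, the first pointwise bound together with a geometric series gives $\sum_{x \ge x_0}\tfrac{1}{e^{Dx}-1} \le \tfrac{1}{1-a}\sum_{x \ge \lceil x_0\rceil} e^{-Dx} \le \tfrac{1}{1-a}\cdot\tfrac{e^{-D\lceil x_0\rceil}}{1-e^{-D}} \le \tfrac{a}{(1-a)\bigl(1-e^{-D}\bigr)}$, which is of order $1/D$ because $1 - e^{-D} \ge \tfrac12\min\{D,1\}$. Adding the two pieces, and using $\ln(1/D) \le 1/D$ for $D \in (0,1]$ (with all quantities staying bounded when $D \ge 1$), gives $\sum_{x=1}^{n}\tfrac{1}{e^{Dx}-1} = O\!\left(\tfrac{1}{D^2} + \tfrac1D\right)$, which is the asserted bound (the $\Theta(\cdot)$ in the statement should be read as ``the order of magnitude of the upper bound''). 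As an alternative to the head/tail split, one may instead bound $\sum_{x \ge 2}\tfrac{1}{e^{Dx}-1}$ by monotonicity via $\int_{1}^{\infty}\tfrac{dt}{e^{Dt}-1} = -\tfrac1D\ln\bigl(1 - e^{-D}\bigr) \le \tfrac1D\bigl(\ln(1/D) + D\bigr)$ and handle the $x = 1$ term separately by $\tfrac{1}{e^{D}-1} \le \tfrac1D$; this reaches the same conclusion somewhat more quickly.

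\emph{Main obstacle.} There is no deep difficulty here; the only care needed is in tracking the two regimes of $D$. For small $D$ the head of the sum is what produces the leading $1/D^2$ term (through $\tfrac1D\ln\tfrac1D$ together with $\ln(1/D) \le 1/D$), whereas for large $D$ the threshold $x_0$ falls below $1$, the head vanishes, and the whole sum is $O(1)$; one must also be slightly careful with the non-integrality of $x_0$ (rounding up to $\lceil x_0\rceil$ in the geometric tail) and with the empty-head edge case.
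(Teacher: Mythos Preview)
Your proposal is correct and follows essentially the same approach as the paper: both prove the pointwise bounds via $e^y \ge 1+y$ and the rewriting $\tfrac{1}{e^{Dx}-1} = \tfrac{e^{-Dx}}{1-e^{-Dx}}$, then split the sum at $x_0 = \ln(1/a)/D$. The only minor difference is that the paper bounds the head more crudely by $\tfrac{1}{Dx} \le \tfrac{1}{D}$ for $x \ge 1$ times the number of terms, giving $\tfrac{\ln(1/a)}{D^2}$ directly, whereas you route through the harmonic sum and then invoke $\ln(1/D) \le 1/D$; the paper also bounds the tail by $\tfrac{1}{1-a}\int_0^\infty e^{-Dx}\,dx = \tfrac{1}{(1-a)D}$ rather than a geometric series, but these are cosmetic variations on the same argument.
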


\begin{proof}
	Using $\exp^y \ge y + 1$ (by Taylor Series expansion), we have $\frac{1}{\exp^{Dx} - 1} \le \frac{1}{Dx}$ as $\exp^{Dx} - 1 \ge Dx$. We can re-write, $\frac{1}{\exp^{Dx} - 1} = \frac{\exp^{-Dx}}{1-\exp^{-Dx}}$. Since $\exp^{-Dx}$ is strictly decreasing function for all $Dx>0$, it is easy to check that $\exp^{-Dx} \le a$ holds for any $x \ge \ln\left(1/a\right)/D$ and $a \in (0,1)$. Hence, $\frac{\exp^{-Dx}}{1-\exp^{-Dx}} \le \frac{\exp^{-Dx}}{1-a}$ for all $x \ge \ln\left(1/a\right)/D$.
	
	~\\
	Now we will prove the second part,
	\begin{align*}
	\sum_{x=1}^{n} \frac{1}{\exp^{Dx} - 1} &\le \frac{\ln(1/a)}{D^2} + \sum_{x\ge \ln\left(1/a\right)/D}^{n} \frac{\exp^{-Dx}}{1-a}  \\
	&\le \frac{\ln(1/a)}{D^2} + \frac{1}{(1-a)} \int_{x = 0}^{\infty} \exp^{-Dx} dx \\
	&= \frac{\ln(1/a)}{D^2} + \frac{1}{(1-a)}  \left.\left(\frac{\exp^{-Dx}}{-D}\right)\right|_{x = 0}^{\infty} \\
	&= \frac{\ln(1/a)}{D^2} + \frac{1}{(1-a)}  \left( 0 - \frac{\exp^{0}}{-D}\right) \\
	&= \frac{\ln(1/a)}{D^2} + \frac{1}{(1-a)D}\\
	\implies \sum_{x=1}^{n} \frac{1}{\exp^{Dx} - 1} &\le \Theta\left( \frac{1}{D^2} + \frac{1}{D}\right). \tag*{\qedhere}
	\end{align*}
\end{proof}

\begin{fact}
	\label{fact:diffKL}
	Let $\epsilon \in (0,1)$ and $0< x< y < z < 1$. If $d(y,z) = d(x,z)/(1+\epsilon)$ then
	\begin{equation*}
	y-x \ge \frac{\epsilon}{1 + \epsilon} \cdot \frac{d(x,z)}{\ln\left( \frac{z(1-x)}{x(1-z)}\right)}.
	\end{equation*}
\end{fact}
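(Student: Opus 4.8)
The plan is to treat the binary KL divergence as a function of its first argument and apply the mean value theorem. Define $g(u) \doteq d(u,z) = u\log(u/z) + (1-u)\log\frac{1-u}{1-z}$ for $u \in (0,1)$. A direct differentiation gives
\begin{equation*}
g'(u) = \log\frac{u}{z} - \log\frac{1-u}{1-z} = \log\frac{u(1-z)}{z(1-u)},
\end{equation*}
which is negative for $u < z$ and strictly increasing in $u$. In particular $g$ is convex, vanishes at $u=z$, and is strictly decreasing on $(0,z)$.

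First I would rewrite the hypothesis. Since $d(y,z) = d(x,z)/(1+\epsilon)$, we have
\begin{equation*}
g(x) - g(y) = d(x,z) - \frac{d(x,z)}{1+\epsilon} = \frac{\epsilon}{1+\epsilon}\, d(x,z).
\end{equation*}
Next, apply the mean value theorem to $g$ on $[x,y]$ (valid since $x<y$ and both lie in $(0,1)$): there is $\zeta \in (x,y)$ with $g(x)-g(y) = g'(\zeta)(x-y)$. Because $\zeta < y < z$, we have $g'(\zeta) < 0$, so $g(x)-g(y) = \big(-g'(\zeta)\big)(y-x)$ with $-g'(\zeta) > 0$, and therefore
\begin{equation*}
y - x = \frac{g(x)-g(y)}{-g'(\zeta)} = \frac{\epsilon}{1+\epsilon} \cdot \frac{d(x,z)}{-g'(\zeta)}.
\end{equation*}

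It remains to upper bound the denominator $-g'(\zeta) = \log\frac{z(1-\zeta)}{\zeta(1-z)}$. The map $u \mapsto \frac{z(1-u)}{\zeta(1-z)}$ — more precisely $u \mapsto \frac{z(1-u)}{u(1-z)}$ — is strictly decreasing on $(0,1)$ (numerator decreasing, denominator increasing), and $\zeta > x$, so $\frac{z(1-\zeta)}{\zeta(1-z)} \le \frac{z(1-x)}{x(1-z)}$, hence $-g'(\zeta) \le \log\frac{z(1-x)}{x(1-z)}$. Substituting this bound yields
\begin{equation*}
y - x \ge \frac{\epsilon}{1+\epsilon} \cdot \frac{d(x,z)}{\ln\!\left(\frac{z(1-x)}{x(1-z)}\right)},
\end{equation*}
as claimed. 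I do not anticipate a genuine obstacle here; the only points requiring care are the sign bookkeeping (confirming $g' < 0$ throughout $(x,y)$, which holds since $y < z$) and the direction of monotonicity used in the final estimate, both of which are routine.
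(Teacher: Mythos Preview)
Your argument is correct. The mean-value-theorem step is sound: with $g(u)=d(u,z)$ one has $g'(u)=\log\frac{u(1-z)}{z(1-u)}$, which is negative and strictly increasing on $(0,z)$, so for $\zeta\in(x,y)$ one obtains $y-x=\frac{\epsilon}{1+\epsilon}\,d(x,z)/(-g'(\zeta))$ and then $-g'(\zeta)<-g'(x)=\ln\frac{z(1-x)}{x(1-z)}$ by monotonicity. This is exactly the stated bound.

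The paper takes a different, purely algebraic route. It rewrites $d(p,q)=-p\,l(p,q)+\ln\frac{1-p}{1-q}$ with $l(p,q)=\ln\frac{q(1-p)}{p(1-q)}$, manipulates $y\,l(y,z)-x\,l(x,z)$, and arrives at the exact identity
\[
(y-x)\,l(x,z)=\frac{\epsilon}{1+\epsilon}\,d(x,z)+d(y,x),
\]
from which the result follows by dropping the nonnegative term $d(y,x)$. In your language $l(x,z)=-g'(x)$, so the paper is effectively using the tangent-line inequality for the convex function $g$ at the left endpoint, with the Bregman remainder $d(y,x)$ made explicit. Your MVT argument is shorter and more direct; the paper's identity has the advantage of quantifying the slack exactly as $d(y,x)/l(x,z)$, which could in principle be reused, but is not needed for the stated fact.
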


\begin{proof}
	By definition
	\begin{align*}
	d(p,q) &= p\ln\frac{p}{q} + (1-p)\ln\left(\frac{1-p}{1-q}\right)\\
	&=\ln\left(\left(\frac{p}{q}\right)^p \left(\frac{1-p}{1-q}\right)^{1-p} \right) \\
	&= \ln\left(\left(\frac{q(1-p)}{p(1-q)}\right)^{-p} \right) + \ln\left(\frac{1-p}{1-q}\right)\\
	\implies d(p,q) &= -p\ln\left(\frac{q(1-p)}{p(1-q)}\right) + \ln\left(\frac{1-p}{1-q}\right).
	\end{align*}
	Set $l(p,q) = \ln\left(\frac{q(1-p)}{p(1-q)}\right)$. Note that $l(p,\cdot)$ is a strictly decreasing function of $p$ and positive for all $p < q$. We can re-arrange above equation as
	\begin{equation*}
	p\cdot l(p,q) = -d(p,q) + \ln\left(\frac{1-p}{1-q}\right).
	\end{equation*}
	Using above equation, we have
	\begin{align*}
	y\cdot l(y,z) - x\cdot l(x,z) &= -d(y,z) +  \ln\left(\frac{1-y}{1-z}\right) + d(x,z) -  \ln\left(\frac{1-x}{1-z}\right).
	\intertext{Using $d(y,z) = d(x,z)/(1+\epsilon)$,}
	y\cdot l(y,z) - x\cdot l(x,z) &= \frac{\epsilon}{1+\epsilon}d(x,z) +  \ln\left(\frac{1-y}{1-x}\right).
	\intertext{After adding $y(l(x,z) - l(y,z))$ both side, we have }
	(y - x) l(x,z) &= \frac{\epsilon}{1+\epsilon}d(x,z) +  \ln\left(\frac{1-y}{1-x}\right) + y(l(x,z) - l(y,z)).
	\intertext{Using $l(x,z) = \ln\left(\frac{z(1-x)}{x(1-z)}\right)$ and $l(y,z)= \ln\left(\frac{z(1-y)}{y(1-z)}\right)$}
	&= \frac{\epsilon}{1+\epsilon}d(x,z) +  \ln\left(\frac{1-y}{1-x}\right) + y \ln\left(\frac{y(1-x)}{x(1-y)}\right) \\
	&= \frac{\epsilon}{1+\epsilon}d(x,z) +  \ln\left(\left( \frac{y(1-x)}{x(1-y)} \right)^y\cdot \frac{1-y}{1-x}\right) \\
	&=\frac{\epsilon}{1+\epsilon}d(x,z) +  \ln\left(\left(\frac{y}{x}\right)^y\left( \frac{1-y}{1-x}\right)^{1-y}\right)\\
	&=\frac{\epsilon}{1+\epsilon}d(x,z) + d(y,x)
	\intertext{As $d(p,q) \ge 0$ and dividing both side by $l(x,z)$,}
	\implies y- x  &\ge \frac{\epsilon}{1+\epsilon}\cdot \frac{d(x,z)}{l(x,z)}.
	\end{align*}
	Substituting value of $l(x,z)$ in the above equation, we get
	\begin{equation*}
	y -x \ge \frac{\epsilon}{1 + \epsilon} \cdot \frac{d(x,z)}{\ln\left( \frac{z(1-x)}{x(1-z)}\right)}. \tag*{\qedhere}
	\end{equation*}	
\end{proof}

\relationLowerOptimal*
\begin{proof}
	If the sub-optimal arm $j$ is selected then arm $j$ is preferred over the arms whose indexed is larger than $j$ (\cref{lem:Bx}). Hence we have
	\begin{align*}
	\hspace{-10mm}\Prob{I_t =j, j< \istar|\Ht} = &~ \Prob{j \succ_t k, \forall k>j, j < i^\star|\Ht} \le \Prob{j \succ_t k, \forall k \ge \istar, j < i^\star|\Ht}.
	\intertext{Since the feedback from an arm is independent of the feedback of other arms,}
	=&~ \Prob{j \succ_t i^\star , j < i^\star|\Ht} \Prob{j \succ_t k, \forall k > \istar, j < i^\star|\Ht}.
	\intertext{If arm $j$ is preferred over the arm $i^\star$ then $\opijsts < C_{i^\star} - C_j$. As $C_{i^\star} - C_j = \pijts - \xi_j$ for $j<i^\star$,}
	=&~ \Prob{ \opijsts < \pijts - \xi_j|\Ht}  \Prob{j \succ_t k, \forall k > \istar, j < i^\star|\Ht} \\
	=&~ \left(1 - \Prob{ \opijsts \ge \pijts - \xi_j|\Ht}\right)  \Prob{j \succ_t k, \forall k > \istar, j < i^\star|\Ht} \\
	\implies \Prob{I_t=j, j< \istar|\Ht}  \le &~  (1 - q_{j,t})  \Prob{j \succ_t k, \forall k > \istar, j < i^\star|\Ht}. \mbox{\hspace{4mm} (\cref{def:q})} \numberthis \label{eq:lowerOptimal}
	\end{align*}
	Similarly, the probability of selecting an arm whose index is larger than the optimal arm can be lower bounded as follows:
	\begin{align*}
	\Prob{I_t \ge i^\star|\Ht} &\ge \Prob{I_t = i^\star|\Ht} \ge \Prob{I_t = i^\star, \istar \succ_t j, j < \istar|\Ht}\\
	&= \Prob{i^\star \succ_t k, \forall k > \istar,  \istar \succ_t j, j < \istar|\Ht} \mbox{\hspace{4mm} (\cref{lem:Bx})} \\
	&\ge \Prob{\istar \succ_t j, j \succ_t k, \forall k > \istar, j < \istar|\Ht} \mbox{\hspace{5mm} (\cref{def:trans_prop})} \\
	&=  \Prob{\istar \succ_t j, j < \istar|\Ht}  \Prob{j \succ_t k, \forall k > \istar, j < \istar|\Ht}.
	\intertext{If arm $\istar$ is preferred over the arm $j$ then $\opijsts \ge C_{i^\star} - C_j$. As $C_{i^\star} - C_j = \pijts - \xi_j$ for $j<i^\star$,}
	&= \Prob{ \opijsts \ge \pijts - \xi_j|\Ht} \Prob{j \succ_t k, \forall k > \istar, j < i^\star}\\
	\implies \Prob{I_t \ge \istar|\Ht} & \ge q_{j,t} \Prob{j \succ_t k, \forall k > \istar, j < i^\star}. \mbox{\hspace{17mm} (\cref{def:q})} \numberthis \label{eq:optimalLower}
	\end{align*}
	Combining the \cref{eq:lowerOptimal} and \cref{eq:optimalLower}, we get
	\begin{equation*}
	\Prob{I_t=j, j< \istar|\Ht} \le \frac{(1-q_{j,t})}{q_{j,t}} \Prob{I_t \ge \istar|\Ht}. \tag*{\qedhere}
	\end{equation*}
\end{proof}

\probLow*
\begin{proof}
	Applying \cref{lem:relationLowerOptimal} and properties of conditional expectations, we have
	\begin{align*}
	\sum_{t=1}^T \Prob{I_t=j, j < \istar} &= \sum_{t=1}^T  \EE{ \Prob{I_t=j, j < \istar|\Ht}}.
	\intertext{As $q_{j,t}$ is fixed given $\Ht$,}
	\implies \sum_{t=1}^T  \Prob{I_t=j, j < \istar}  &\le \sum_{t=1}^T  \EE{\frac{(1-q_{j,t})}{q_{j,t}}\Prob{I_t \ge i^\star|\Ht}}\\
	&\le \sum_{t=1}^T  \EE{\EE{\frac{(1-q_{j,t})}{q_{j,t}}\one{I_t \ge i^\star}|\Ht}}.
	\intertext{Using law of iterated expectations,}
	\implies \sum_{t=1}^T  \Prob{I_t=j, j < \istar}  &\le  \sum_{t=1}^T  \EE{\frac{(1-q_{j,t})}{q_{j,t}}\one{I_t \ge i^\star}}.  \numberthis \label{eq:lowProbSum}
	\end{align*}
	Let $s_m$ denote the time step at which the output of arm $i^\star$ is observed for the $m^{th}$ time for $m\ge1$, and let $s_0 = 0$. For $j<i^\star$, whenever the output from arm $i^\star$ is observed then the output from arm $j$ is also observed due to the cascade structure. Note that $q_{j,t} = \Prob{\opijsts > \pijs - \xi_j|\Ht}$ changes only when the distribution of $\opijsts$ changes, that is, only on the time step when the feedback from arms $i^\star$ and $j$ are observed. It only happens when selected arm $I_t\ge \istar$. Hence, $q_{j,t}$ is the same at all time steps $t \in \{s_m +1, \ldots, s_{m+1}\}$ for every $m$. Using this fact, we can decompose the right hand side term in \cref{eq:lowProbSum} as follows,
	\begin{align*}
	\sum_{t=1}^T  \EE{\frac{(1-q_{j,t})}{q_{j,t}}\one{I_t \ge i^\star}} &= \sum_{m=0}^{T-1}  \EE{\frac{(1-q_{j,s_m+1})}{q_{j,s_m+1}} \sum_{t=s_m + 1}^{s_{m+1}}\one{I_t \ge i^\star}} \\
	&\le  \sum_{m=0}^{T-1}  \EE{\frac{(1-q_{j,s_m+1})}{q_{j,s_m+1}}} \\
	&=  \sum_{k=0}^{T-1} \EE{\frac{1}{q_{j,s_m+1}} - 1}.
	\end{align*}
	Using above bound in \cref{eq:lowProbSum}, we get
	\begin{equation*}
	\sum_{t=1}^T  \Prob{I_t=j, j < \istar}  \le  \sum_{m=0}^{T-1} \EE{\frac{1}{q_{j,s_m+1}}-1}.
	\end{equation*}
	Substituting the bound from \cref{lem:betaExpBound} with $\mu = \pijs, x = \pijs - \xi_j, \Delta(x) = \xi_j,$ and $q_n(x)= q_{j,s_m}$, we obtain the following bound,
	\begin{equation*}
	\sum_{t=1}^T  \Prob{I_t=j, j < \istar}  \le \frac{24}{\xi_j^2} + \sum_{s \ge 8/\xi_j} \Theta\left(\exp^{-{s\xi_j^2}/{2}} + \frac{\exp^{-{sd( \pijs - \xi_j,\pijs)}}}{(s+1)\xi_j^2} + \frac{1}{\exp^{{s\xi_j^2}/{4}} - 1} \right). \tag*{\qedhere}
	\end{equation*}
	
\end{proof}

\probHighPartTwo*
\begin{proof}
	Define $L_j(T) = \frac{\ln T}{d(x_j, y_j)}$. Let $N_j(t)$ be the number of times the output from arm $j$ is observed in $t$ rounds. Then, the given probability term can be decomposed into two parts:
	\begin{align*}
	\sum_{t=1}^T \Prob{\hpijst \le x_j, \opijsts > y_j} &= \sum_{t=1}^T \Prob{\hpijst \le x_j, \opijsts > y_j, N_j(t) \le L_j(T)} + \\
	&\qquad\qquad  \sum_{t=1}^T \Prob{\hpijst \le x_j, \opijsts > y_j, N_j(t) > L_j(T)} \\
	\le  L_j(T)& + \sum_{t=1}^T \Prob{\hpijst \le x_j, \opijsts > y_j, N_j(t) > L_j(T)}. \label{eq:decomUB} \numberthis 
	\end{align*}
	The first term of the above decomposition is bounded trivially by $L_j(T)$. To bound the second term, we demonstrate that if $N_j(t)$ is large enough and event $\hpijst \le x_j$ is satisfied, then the probability that the event $\opijtst > y_j$ happens, is small. Then,
	\begin{align*}
	\sum_{t=1}^T &\Prob{\hpijst \le x_j, \opijsts > y_j, N_j(t) > L_j(T)} \\
	&\qquad= \sum_{t=1}^T \EE{\one{\hpijst \le x_j, \opijsts > y_j, N_j(t) > L_j(T)}} \\
	&\qquad = \EE{ \sum_{t=1}^T \EE{\one{\hpijst \le x_j, \opijsts > y_j, N_j(t) > L_j(T)}| \Ht}}.
	\intertext{Since $N_j(t)$ and $\hpijst$ are determined by the history $\Ht$,}
	&\qquad = \EE{ \sum_{t=1}^T \one{\hpijst \le x_j, N_j(t) > L_j(T)} \Prob{\opijsts > y_j| \Ht} }. \numberthis \label{eq:secondTerm}
	\end{align*}
	Now, by definition, $\mathcal{S}_{i^\star j}(t) = \hpijst N_j(t)$, and therefore, $\opijsts$ is a Beta$(\hpijst N_j(t) + 1, (1-\hpijst)N_j(t) + 1)$ distributed random variable. A Beta$(\alpha, \beta)$ random variable is stochastically dominated by Beta$(\alpha^\prime, \beta^\prime)$ if $\alpha^\prime \ge \alpha, \beta^\prime \le \beta$. Therefore, if $\hpijst \le x_j$, the distribution of $\opijsts$ is stochastically dominated by Beta$(x_j N_j(t) + 1, (1-x_j)N_j(t))$. Therefore, given a history $\Ht$ such that $\hpijst \le x_j$ and $N_j(t) > L_j(T)$, we have
	\begin{equation*}
	\Prob{\opijsts > y_j| \Ht} = 1 - F_{x_j N_j(t) + 1, (1-x_j)N_j(t)}^{beta}(y_j).
	\end{equation*}
	Now, using Beta-Binomial equality (\cref{fact:beta_binomial}), we obtain that for any fixed $N_j(t) > L_j(T)$,
	\begin{align*}
	1 - F_{x_j N_j(t) + 1, (1-x_j)N_j(t)}^{beta}(y_j) &= F_{N_j(t), y_j}^{B}(x_j N_j(t)) &\mbox{(using \cref{fact:beta_binomial})}
	\end{align*}
	Here $F_{N_j(t), y_j}^{B}(x_j N_j(t))$ is the cdf of Binomial distribution with parameter $y_j$ and $N_j(T)$ observations. Let $\mathcal{S}_t^\prime$ be the number of successes observed in $N_j(T)$ observations. Then,
	\begin{align*}
	1 - F_{x_j N_j(t) + 1, (1-x_j)N_j(t)}^{beta}(y_j) &= \Prob{\mathcal{S}_t^\prime \le x_j N_j(t)} \\
	&= \Prob{\frac{\mathcal{S}_t^\prime}{N_j(t)} \le x_j}\\
	&= \Prob{\hat{y}_j \le x_j} \hspace{17mm} \mbox{(using $\hat{y}_j = \mathcal{S}_t^\prime/N_j(t)$)}\\
	&\le \exp^{-N_j(t)d(x_j, y_j)} \hspace{10mm} \mbox{(using Chernoff-Hoeffding bound)}\\
	&\le \exp^{-L_j(t)d(x_j, y_j)}, \hspace{10mm} \mbox{(as $N_j(t) > L_j(T)$)}
	\end{align*}
	which is smaller than $1/T$ because $L_j(T) = \frac{\log(T)}{d(x_j, y_j)}$. Substituting, we get that for a history $\Ht$ such that $\hpijst \le x_j$ and $N_j(t) > L_j(T)$,
	\begin{align*}
	\Prob{\opijsts > y_j| \Ht} \le \frac{1}{T}.
	\end{align*}
	For other history $\Ht$, the indicator term $\one{\hpijst \le x_j, N_j(t) > L_j(T)}$ in \cref{eq:secondTerm} will be 0 as either event $\hpijst \le x_j$ or event $N_j(t) > L_j(T)$ is violated. Summing over $t$, this bounds the right hand side term in \cref{eq:secondTerm} as follows:
	\begin{align*}
	\sum_{t=1}^T \Prob{\hpijst \le x_j, \opijsts > y_j, N_j(t) > L_j(T)} &\le \EE{ \sum_{t=1}^T \frac{\one{\hpijst \le x_j, N_j(t) > L_j(T)}}{T} } \\
	&\le \EE{ \sum_{t=1}^T \frac{1}{T} } \\
	&= 1.
	\end{align*}
	Replacing the second term in \cref{eq:decomUB} by its upper bound and $L_j(T)$ with its value,
	\begin{equation*}
	\sum_{t=1}^T \Prob{\hpijst \le x_j, \opijsts > y_j} \le \frac{\ln T}{d(x_j, y_j)} + 1. \tag*{\qedhere}
	\end{equation*}
\end{proof}

\probHighPartOne*
\begin{proof}
	Let $s_m$ denote the time step at which the outputs of arm $i^\star$ and $j$ is observed for the $m^{th}$ time for $m\ge1$, and let $s_0 = 0$. Note that probability $\Prob{\hpijst > x_j}$ changes when the outputs from both arm $i^\star$ and $j$ are observed. Hence, we have
	\begin{align*}
	\sum_{t=1}^T \Prob{\hpijst >x_j} &\le \sum_{m=0}^{T-1} \Prob{\hpijs(s_{m+1}) >x_j}\\
	&= \sum_{m=0}^{T-1} \Prob{\hpijs(s_{m+1}) - \pijs > x_j-\pijs}\\
	&\le \sum_{m=0}^{T-1} \exp^{-kd(\pijs + x_j-\pijs, \pijs)}  \hspace{10mm} \text{(using \cref{fact:chernoff}})\\
	&=\sum_{m=0}^{T-1} \exp^{-kd(x_j, \pijs)}.
	\end{align*}
	Using $\sum_{s\ge 0}\exp^{-sa} \le 1/a$, we get
	\begin{equation*}
	\sum_{t=1}^T \Prob{\hpijst >x_j} \le \frac{1}{d(x_j, \pijs)}. \tag*{\qedhere}
	\end{equation*}
\end{proof}

\probHigh*
\begin{proof}
	Let $\pijs < x_j < y_j < \pijs + \xi_j$ for any $j > i^\star$. Than, 
	\begin{align*}
	\sum_{t=1}^T\Prob{j \succ_t i^\star, j > \istar} &= \sum_{t=1}^T \Prob{\opijsts > \pijs + \xi_j} \\
	&\le \sum_{t=1}^T \Prob{\opijsts > y_j}\\
	& \le  \sum_{t=1}^T \Prob{\hpijst \le x_j, \opijsts > y_j} + \sum_{t=1}^T \Prob{\hpijst > x_j}.
	\end{align*}
	Using \cref{lem:probHighPart1} and \cref{lem:probHighPart2}, we have
	\begin{equation*}
	\sum_{t=1}^T\Prob{j \succ_t i^\star, j > \istar} \le  \frac{\ln T}{d(x_j, y_j)} + 1 + \frac{1}{d(x_j, \pijs)}.
	\end{equation*}
	For $\epsilon \in (0,1)$, we set $x_j \in (\pijs, \pijs + \xi_j)$ such that $d(x_j, \pijs + \xi_j) = d(\pijs, \pijs + \xi_j)/(1+\epsilon)$, and set $y_j \in (x_j, \pijs + \xi_j)$ such that $d(x_j, y_j) = d(x_j, \pijs + \xi_j)/(1+\epsilon) = d(\pijs, \pijs + \xi_j)/(1+\epsilon)^2$. Then this gives
	\begin{equation*}
	\frac{\ln(T)}{d(x_j,y_j)} = (1+\epsilon)^2\frac{\ln(T)}{d(\pijs, \pijs + \xi_j)}.
	\end{equation*}
	Using \cref{fact:diffKL}, if $\epsilon \in (0,1)$, $x_j \in (\pijs, \pijs + \xi_j)$, and $d(x_j, \pijs + \xi_j) = d(\pijs, \pijs + \xi_j)/(1+\epsilon)$ then
	\begin{equation*}
	x_j - \pijs \ge \frac{\epsilon}{1 + \epsilon}. \frac{d(\pijs, \pijs + \xi_j)}{\ln\left( \frac{(\pijs + \xi_j)(1-\pijs)}{\pijs(1-\pijs - \xi_j)}\right)}.
	\end{equation*}
	Using Pinsker's Inequality (\cref{fact:pinsker}), $1/d(x_j, \pijs) \le 1/2(x_j - \pijs)^2 = O({1}/{\epsilon^2})$ where big-Oh is hiding functions of the $\pijs$ and $\xi_j$,
	\begin{align*}
	\sum_{t=1}^T \Prob{j \succ_t \istar, j > i^\star} &\le (1+\epsilon)^2\frac{\ln(T)}{d(\pijs, \pijs + \xi_j)} + O\left(\frac{1}{\epsilon^2}\right)\\
	&\le (1+3\epsilon)\frac{\ln(T)}{d(\pijs, \pijs + \xi_j)} + O\left(\frac{1}{\epsilon^2}\right)\\
	&\le (1+\epsilon^\prime)\frac{\ln(T)}{d(\pijs, \pijs + \xi_j)} + O\left(\frac{1}{{\epsilon^\prime}^2}\right),
	\end{align*}
	where $\epsilon^\prime = 3\epsilon$ and the big-Oh above hides $\pijs$ and $\xi_j$ in addition to the absolute constants. Replacing $\epsilon$ by $\epsilon^\prime$ completes the proof.
\end{proof}

\depRegretBound*
\begin{proof}
	Let $M_j(T)$ is the number of times arm $j$ is selected by \ref{alg:TS_USS}. Than, the regret is
	\begin{align*}
	{\Regret_T} &= \sum_{j \in [K]}\EE{M_j(T)}\Delta_j = \sum_{j \in [K]}\EE{\sum_{t=1}^{T}\one{I_t = j}}\Delta_j \\ 
	&= \sum_{j \in [K]}\sum_{t=1}^{T}\EE{\one{I_t = j}}\Delta_j = \sum_{j \in [K]}\sum_{t=1}^{T}\Prob{I_t = j}\Delta_j \\
	&=  \sum_{j \in [K]}\sum_{t=1}^{T}\Prob{I_t=j, j \ne \istar}\Delta_j \\
	\implies {\Regret_T}&=  \sum_{j < i^\star} \sum_{t=1}^{T}\Prob{I_t=j, j < i^\star} \Delta_j + \sum_{j > i^\star}\sum_{t=1}^{T}\Prob{I_t=j, j > i^\star} \Delta_j \numberthis \label{eq:regretSum}
	\end{align*}
	First, we bound the first of term of summation. From \cref{lem:probLow}, we have
	\begin{equation*}
	\sum_{t=1}^{T}\Prob{I_t =j, j < i^\star} \le \frac{24}{\xi_j^2} + \sum_{s \ge 8/\xi_j} \Theta\left(\exp^{-{s\xi_j^2}/{2}} + \frac{\exp^{-{sd( \pijs - \xi_j,\pijs)}}}{(s+1)\xi_j^2} + \frac{1}{\exp^{{s\xi_j^2}/{4}} - 1} \right).
	\end{equation*}
	Using $\sum_{s\ge 0}\exp^{-sa} \le 1/a$, $d( \pijs - \xi_j,\pijs) \le 2\xi_j^2$ (\cref{fact:pinsker}), and \cref{fact:expUB}, we have
	\begin{align*}
	\sum_{t=1}^T \Prob{I_t =j,  j < i^\star} &\le \frac{24}{\xi_j^2} + \Theta\left(\frac{1}{\xi_j^2} + \frac{1}{\xi_j^4} + \left(\frac{1}{\xi_j^4} + \frac{1}{\xi_j^2} \right) \right) 
	\le O(1). \numberthis \label{equ:lowerRegretUSS_TS}
	\end{align*}
	If arm $I_t > \istar$ is selected then there exists at least one arm $k_1 > \istar$ which must be preferred over $\istar$. If the index of arm $k_1$ is smaller than the selected arm, then there must be an arm $k_2>k_1$, which must be preferred over $k_1$. By transitivity property, arm $k_2$ is also preferred over $\istar$. If the index of arm $k_2$ is still smaller of the selected arm, we can repeat the same argument. Eventually, we can find an arm $k^\prime$ whose index is larger than the selected arm, and it is preferred over arm $k_i, \ldots, k_1, \istar$. Note that the selected arm must be preferred over $k^\prime$; hence the selected arm is also preferred over $\istar$. We can write it as follows:
	\begin{align*}
	\sum_{t=1}^{T}\Prob{I_t=j, j > i^\star} \Delta_j =&~\sum_{t=1}^{T}\Prob{I_t=j, j > i^\star, k^\prime \succ_t k, k \succ_t \istar, k^\prime > j, k > \istar} \Delta_j\\
	=&~\sum_{t=1}^{T}\Prob{I_t=j, j > i^\star, k^\prime \succ_t \istar, k^\prime > j} \Delta_j \mbox{\hspace{2mm} (\cref{def:trans_prop})}\\
	=&~\sum_{t=1}^{T}\Prob{j \succ_t k, \forall k >j, j > i^\star, k^\prime \succ_t \istar, k^\prime > j} \Delta_j \mbox{\hspace{1mm}(\cref{lem:Bx})}\\
	=&~\sum_{t=1}^{T}\Prob{j \succ_t k, \forall k >j, j > i^\star, j \succ_t \istar} \Delta_j \mbox{\hspace{2mm} (\cref{def:trans_prop})}\\
	\implies \sum_{t=1}^{T}\Prob{I_t=j, j > i^\star} \Delta_j \le&~\sum_{t=1}^{T}\Prob{j \succ_t \istar, j > \istar} \Delta_j. \numberthis \label{equ:selectToPrefer}
	\end{align*}
	Using \cref{lem:probHigh} to upper bound $\sum_{t=1}^{T}$ $ \Prob{j \succ_t \istar, j > i^\star} \Delta_j$ and with \cref{equ:lowerRegretUSS_TS}, we get
	\begin{align*}
	{\Regret_T} &\le O (1)+ \sum_{j > i^\star}\left( (1+\epsilon)\frac{\ln(T)}{d(\pijs, \pijs + \xi_j)} + O\left(\frac{1}{\epsilon^2}\right) \right) \Delta_j\\
	\implies {\Regret_T} &\le  \sum_{j > i^\star}\ \frac{(1+\epsilon)\ln(T)}{d(\pijs, \pijs + \xi_j)}\Delta_j + O\left(\frac{K-i^\star}{\epsilon^2}\right). \tag*{\qedhere}
	\end{align*}
\end{proof}

\indepRegretBound*
\begin{proof}
	Let $M_j(T)$ is the number of times arm $j$ preferred over the optimal arm in $T$ rounds. From \cref{lem:probLow}, for any $j<i^\star$, we have
	\begin{align*}
	\EE{M_j(T)} &= \sum_{t=1}^{T}\Prob{I_t = j, j < i^\star} \\
	&\le \frac{24}{\xi_j^2} + \sum_{s \ge 8/\xi_j} \Theta\left(\exp^{-{s\xi_j^2}/{2}} + \frac{\exp^{-{sd( \pijs - \xi_j,\pijs)}}}{(s+1)\xi_j^2} + \frac{1}{\exp^{{s\xi_j^2}/{4}} - 1} \right).
	\end{align*}
	It is east to show that $\frac{\exp^{-{sd( \pijs - \xi_j,\pijs)}}}{(s+1)\xi_j^2} \le \frac{1}{(s+1)\xi_j^2}$ and ${\exp^{{s\xi_j^2}/{4}} - 1}  \ge {s\xi_j^2}/4 $ (as $\exp^y \ge y+1$), 
	\begin{align*}
	\EE{M_j(T)} \le \frac{24}{\xi_j^2} + \sum_{s \ge 8/\xi_j} \Theta\left(\frac{1}{\xi_j^2} + \frac{1}{(s+1)\xi_j^2} + \frac{4}{s\xi_j^2} \right).
	\end{align*}
	By using $\sum_{s\ge 0}\exp^{-sa} \le 1/a$ and $\sum_{s=1}^T (1/s) = \log T$, 
	\begin{align*}
	\EE{M_j(T)} &\le \frac{24}{\xi_j^2} + \Theta\left(\frac{1}{\xi_j^2} + \frac{\ln T}{\xi_j^2} \right) 
	\implies \EE{M_j(T)} \le O\left(\frac{\ln T}{\xi_j^2}\right). \numberthis \label{eq:lowIndRegret}
	\end{align*}
	
	\noindent
	For any $j>i^\star$, using \cref{lem:probHighPart2} and \cref{lem:probHighPart1} with \cref{equ:selectToPrefer}, we have
	\begin{equation*}
	\EE{M_j(T)} = \sum_{t=1}^T \Prob{I_t = j, j > i^\star} \le \sum_{t=1}^T \Prob{j \succ_t \istar, j > i^\star} \le  \frac{\ln T}{d(x_j, y_j)} + 1 + \frac{1}{d(x_j, \pijs)}.
	\end{equation*}
	By setting $x_j = \pijs + \frac{\xi_j}{3}$ and $y_j = \pijs + \frac{2\xi_j}{3}$, we have $d(x_j, y_j) \ge \frac{2\xi_j^2}{9}$ and $d(x_j, \pijs) \ge \frac{2\xi_j^2}{9}$ (using \cref{fact:pinsker}).
	\begin{align*}	
	\EE{M_j(T)} &\le \frac{9\ln T}{2\xi_j^2} + 1 + \frac{9}{2\xi_j^2}\\
	\implies \EE{M_j(T)} &\le O\left(\frac{\ln T}{\xi_j^2}\right). \numberthis	\label{eq:highIndRegret}
	\end{align*}
	
	\noindent
	The regret of \ref{alg:TS_USS} is given by
	\begin{align*}
	{\Regret_T} = \sum\limits_{j\neq i^\star} \EE{M_j(T)}\Delta_j = \sum\limits_{j<i^\star}\EE{M_j(T)}\Delta_j + \sum\limits_{j>i^\star}\EE{M_j(T)}\Delta_j 
	\end{align*}
	Recall $\Delta_j = C_j + \gamma_j - (C_{i^\star} + \gamma_{i^\star})$ and for any two arms $i$ and $j$, $0 \le \pij - (\gamma_j - \gamma_{i^\star} ) \le \beta$. By using Eq. \eqref{def_xi_l} for $j<i^\star$, we have $\Delta_j = \xi_j - (\pijs - (\gamma_{i^\star} - \gamma_j)) \implies \Delta_j \le \xi_j$, and using Eq. \eqref{def_xi_h} for $j>i^\star$, we have $\Delta_j = \xi_j + (\pijs - (\gamma_{i^\star} - \gamma_j)) \implies \Delta_j \le \xi_j + \beta$. Replacing $\Delta_j$, 
	\begin{equation*}
	\Rightarrow {\Regret_T} \leq \sum\limits_{j<i^\star}\EE{M_j(T)}\xi_j + \sum\limits_{j>i^\star}\EE{M_j(T)}(\xi_j + \beta). 
	\end{equation*} 
	Let $0<\xi^\prime<1$. Then  ${\Regret_T}$ can be written as:
	\begin{align*}
	{\Regret_T} & \leq \sum\limits_{\substack{\xi^\prime > \xi_j\\ j < i^\star}} \EE{M_j(T)}\xi_j + \sum\limits_{\substack{\xi^\prime < \xi_j\\ j < i^\star}} \EE{M_j(T)}\xi_j \\
	&\qquad + \sum\limits_{\substack{\xi^\prime > \xi_j\\ j > i^\star}} \EE{M_j(T)}(\xi_j + \beta) + \sum\limits_{\substack{\xi^\prime < \xi_j\\ j > i^\star}} \EE{M_j(T)}(\xi_j + \beta).
	\end{align*}
	Using $\sum\limits_{\xi^\prime > \xi_j} \EE{M_j(T)} \le T$ for any $j$ such that $\xi^\prime > \xi_j$, 
	\begin{align*}
	{\Regret_T} &  \le T\xi^\prime + \sum\limits_{\substack{\xi^\prime < \xi_j\\ j < i^\star}} \EE{M_j(T)}\xi_j + \sum\limits_{\substack{\xi^\prime < \xi_j\\ j > i^\star}} \EE{M_j(T)}(\xi_j + \beta).
	\end{align*}
	Substituting the value of ${\Regret_T}$ from \cref{eq:lowIndRegret} and \cref{eq:highIndRegret},
	\begin{align*}
	{\Regret_T} &  \le T\xi^\prime + \sum\limits_{\substack{\xi^\prime < \xi_j\\ j < i^\star}} O\left(\frac{\xi_j\ln T }{\xi_j^2}\right) + \sum\limits_{\substack{\xi^\prime < \xi_j\\ j > i^\star}} O\left(\frac{(\xi_j + \beta)\ln T }{\xi_j^2}\right)\\
	&\le T\xi^\prime + \sum\limits_{\substack{\xi^\prime < \xi_j\\ j < i^\star}} O\left(\frac{\ln T }{\xi_j}\right) + \sum\limits_{\substack{\xi^\prime < \xi_j\\ j > i^\star}} O\left(\frac{\ln T}{\xi_j} + \frac{\beta\ln T}{\xi_j^2}\right)\\
	&  \le T\xi^\prime + O\left(\frac{K\ln T }{\xi^\prime}\right) + O\left(\frac{K\ln T }{\xi^\prime} + \frac{\beta K\ln T }{{\xi^\prime}^2}\right)\\
	&  = T\xi^\prime + O\left(K\ln T\left(\frac{1}{\xi^\prime} + \frac{\beta}{{\xi^\prime}^2}\right)\right)
	\intertext{Let there exist a variable $\alpha$ such that $O\left(K\ln T\left(\frac{1}{\xi^\prime} + \frac{\beta}{{\xi^\prime}^2}\right)\right) \le \alpha K\ln T\left(\frac{1}{\xi^\prime} + \frac{\beta}{{\xi^\prime}^2}\right)$,}
	\implies {\Regret_T} & \le = T\xi^\prime + \alpha K\ln T\left(\frac{1}{\xi^\prime} + \frac{\beta}{{\xi^\prime}^2}\right). \numberthis \label{eq:indRegret}
	\end{align*}		
	Consider $\PWD$ class of problems. As $\xi^\prime < 1$ and $\beta \le 2$ (as arms in the cascade may not be ordered by their error-rates, it is possible that $\gamma_{i} < \gamma_{j}$), we have $\left(\frac{1}{\xi^\prime} + \frac{\beta}{{\xi^\prime}^2}\right) \le \frac{\beta + 1}{{\xi^\prime}^2}\le \frac{3}{{\xi^\prime}^2}$,
	\begin{align*}
	{\Regret_T} & \le = T\xi^\prime +\frac{3\alpha K\ln T}{{\xi^\prime}^2}.
	\intertext{Choose $\xi^\prime = \left( \frac{6\alpha K\ln T}{T}\right)^{1/3}$ which maximize above upper bound and we get,}
	{\Regret_T} &\le \left(6\alpha K\ln T\right)^{1/3}T^{2/3} + \frac{\left(6\alpha K\ln T\right)^{1/3}}{2}T^{2/3} \\
	\implies {\Regret_T} &\le 2\left(6\alpha K\ln T\right)^{1/3}T^{2/3} = O\left( \left(K\ln T\right)^{1/3}T^{2/3}\right)
	\end{align*}		
	It completes our proof for the case when any problem instance belongs to $\PWD$.
	
	~\\
	\noindent
	Now we consider any problem instance $\theta \in \PSD$. For any $\theta \in \PSD \Rightarrow \forall j \in [K],\; \pij = \gamma_{i} - \gamma_j \implies \beta = 0$ (Setting $\Prob{\Yi = Y, \Yj \ne Y} = 0$ for $j>i$ in Proposition 3 of \cite{AISTATS17_hanawal2017unsupervised}). We can rewrite \cref{eq:indRegret} as
	\begin{align*}
	{\Regret_T} &\le T\xi^\prime +\frac{\alpha K\ln T}{{\xi^\prime}}.
	\intertext{Choose $\xi^\prime = \left( \frac{\alpha K\ln T}{T}\right)^{1/2}$ which maximize above upper bound and we get,}
	\implies {\Regret_T} &\le 2\left(\alpha KT\ln T\right)^{1/2} = O\left( \sqrt{KT\ln T}\right) 
	\end{align*}
	This complete proof for second part of Theorem \ref{thm:indepRegretBound}.
\end{proof}

	\chapter{Contextual Unsupervised Sequential Selection}
	\label{cha:contextual_uss}

In this chapter, we study {\em Contextual \underline{U}nsupervised \underline{S}equential \underline{S}election} (USS), a variant of the stochastic contextual bandits problem where the loss incurred by the arm cannot be inferred from the observed feedback. In our setup, arms are associated with fixed costs and are ordered, forming a cascade. In each round, a context is presented, and the learner selects the arms sequentially till some depth.  The total cost incurred by stopping at an arm is the sum of fixed costs of arms selected and the stochastic loss associated with the arm.  The learner's goal is to learn a decision rule that maps contexts to arms with the goal of minimizing the total expected loss. The problem is challenging as we are faced with an unsupervised setting as the total loss cannot be estimated. Clearly, learning is feasible only if the optimal arm can be inferred (explicitly or implicitly) from the problem structure. We observe that learning is still possible when the problem instance satisfies the so-called `Contextual Weak Dominance' $(\CWD)$ property. Under $\CWD$, we propose an algorithm for the contextual USS problem and demonstrate that it has sub-linear regret. Experiments on synthetic and real datasets validate our algorithm.

\section{Contextual USS}
\label{sec:contxuss_introduction}

Industrial systems, such as those found in medical, airport security, and manufacturing, utilize a suite of tests or classifiers for monitoring patients, people, and products. Tests have costs with the more intrusive and informative ones resulting in higher monetary costs and higher latency. For this reason, they are often organized as a classifier cascade \citep{chen:2012, AISTATS13_trapeznikov2013supervised, NIPS15_wang2015efficient}, so that new input is first probed by an inexpensive test and then a more expensive one. The goal of a cascaded system is to resolve easy to handle examples early so that the overall system maintains high accuracy at low average costs.

Over time, due to environmental changes or test calibrations, sequential testing protocols (STP) may no longer be accurate, resulting in higher costs. While one can leverage off-line methods such as supervised training of cascades ~\citep{NIPS15_wang2015efficient}, they require new annotated data collection. In many scenarios, new data cannot be collected in-situ, and system shutdown is not an option. In the absence of annotated data, we face a dilemma. While we can observe test outcomes, we cannot ascertain their reliability due to the absence of ground truth, necessitating {\it unsupervised sequential selection (USS)} methods, where an arm represents a test/classifier. Recent works~\citep{AISTATS17_hanawal2017unsupervised,AISTATS19_verma2019online, ACML20_verma2020thompson} propose methods for solving the USS problem; however, they focus exclusively on the non-contextual setting, which in essence requires inputs (people, objects, or products) to be homogeneous, and as such, these methods are unrealistic since contexts (high vs. low risk) can guide the arm selection.

In this context, we propose the contextual USS. In our setup, inputs arrive sequentially, and the learner observes a continuous-valued context as input. While the learner knows the cost of each arm, he does not know the associated stochastic loss. Furthermore, the learner does not benefit from feedback obtained by his arm selection, in contrast to the conventional contextual bandit works \citep{AISTATS11_ContextualBandit_BeygelzimerSchapire}. Thus, while being agnostic to the true loss, the learner must sequentially choose the arm that leads to the smallest total loss, where the total loss is the sum of the cost of using an arm and the mean loss associated with the arm. As such, our proposed problem is a special case of the stochastic partial monitoring problem with contextual inputs \citep[Chapter 37]{LaSze19:book}. 
Most of the prior work on partial monitoring problem is restricted to cases where observed feedback can identify the losses for selected actions. However, in many areas like crowd-sourcing \citep{NIPS17_bonald2017minimax, ICLM18_kleindessner2018crowdsourcing}, resource allocation \citep{NeurIPS19_verma2019censored},  medical diagnosis \citep{COMSNETS20_verma2020unsupervised}, and many others, feedback from actions may not even be sufficient to identify the losses.

While we draw upon several concepts introduced in earlier work \citep{AISTATS17_hanawal2017unsupervised}, there are additional challenges in the contextual case due to the unsupervised nature of the problem. First, unlike vanilla-USS, the loss here is context-dependent. We propose notions of contextual weak dominance as a means to relate observed disagreements to differences in losses between any two arms. We then propose a parameterized Generalized Linear Model (GLM) to model the context-conditional disagreement probability between any two arms and validate the model empirically.

A fundamental technical challenge is in the estimation of disagreement probabilities uniformly across all contexts in the finite time while ensuring sufficient exploration between different arm selection protocols, required for honing in on the optimal selection strategy. In particular, since contexts are continuous-valued, and because we have no control over inputs, the contextual observations, in the finite time, may not persistently span the whole space, and estimates are often unreliable. To this end, we adapt techniques from parameterized contextual bandits \citep{AISTATS11_chu2011contextual,ICML17_li2017provably} for our unsupervised setting. We propose an algorithm based on the principle of optimism, namely, the larger indexed arm in cascade is chosen when uncertain. We show that our algorithm navigates the exploration-exploitation tradeoffs in different ways and lead to sub-linear cumulative regret. We then validate it on several problem instances derived from synthetic and real datasets.

\subsection{Related Work}
{\em Stochastic Contextual multi-armed Bandits (SCB):} In each round, the learner observes the context and decides which arm, among a finite number of arms, to apply \citep{AISTATS11_ContextualBandit_BeygelzimerSchapire}. By playing an arm, the learner observes a stochastic reward that depends on the context and the arm selected. The most commonly studied model assumes that each arm is parameterized, and the mean reward of an arm is the inner product of the context and an unknown parameter associated with the arm. Contextual bandits have been applied to problems ranging from online advertising \citep{WWW10_li2010contextual,AISTATS11_chu2011contextual} and recommendations \citep{NIPS08_langford2008epoch} to clinical trials \citep{JASA79_woodroofe1979one} and mobile health \citep{MB17_tewari2017ads}.
{\em Generalized linear models (GLM)} assume that the mean reward is a non-linear link function of the inner product between the context vector and the unknown parameter vector \citep{NIPS10_filippi2010parametric,ICML17_li2017provably}. GLMs are also useful models for the classification problems where rewards, in the context of online learning problems, could be binary \citep{ICML16_zhang2016online,NIPS17_jun2017scalable}. A more challenging non-parameterized version of the stochastic contextual bandits is studied in \citep{ICML14_TamingTheMonster_AgarwalSchapire}. 

Another framework that is closely related to SCB is {\em stochastic linear bandits (SLB)} \citep{JMLR02_auer2002using,COLT08_dani2008stochastic,MOR10_rusmevichientong2010linearly,NIPS11_abbasi2011improved}. In this setup, the environment is parameterized, and there could be uncountably many arms (within some bounded radius), also referred to as decision set. The arms are characterized into their feature vectors, and the mean reward for playing an arm is given as the inner product of the parameter (unknown) and the feature vector associated with the arm. In situations where the decision set is allowed to vary in each round and are finite, SLBs are equivalent to SCBs, where feature vectors correspond to context-arm pairs \citep{WWW10_li2010contextual,ICML17_li2017provably}. 
For our work, we leverage GLMs as models for disagreement probability between any two arms. While it is tempting to reduce contextual USS to SCBs, note that, unlike prior works, we do not observe loss for our action choices, and so conventional algorithms such as LinUCB and UCB-GLM \citep{WWW10_li2010contextual,ICML14_TamingTheMonster_AgarwalSchapire,ICML17_li2017provably} cannot be applied.

Most of the prior work \citep{AISTATS17_hanawal2017unsupervised, AISTATS19_verma2019online, ACML20_verma2020thompson} considered the problem of learning an optimal action but ignored the contextual information. In this work, we incorporated contextual information, which is readily available in many applications.  Exploiting the {\it real-valued} contextual information (features) for improving the arm selection strategy is non-trivial due to the unsupervised nature of the problem where the standard analysis of contextual bandits does not apply. We made necessary modeling assumptions to leverage GLMs to parameterize the disagreement probability between two arms and extended the existing definitions to address the new setup's learnability issues. However, the problem still requires new ideas and analysis methods to derive an efficient algorithm, which poses new technical challenges for analysis.

\section{Problem Setting}
\label{sec:contxuss_problem_setting}

We consider a stochastic contextual bandits problem with $K$ arms. The set of arms is denoted as $[K]$ where $[K] \doteq \{1,2,\ldots, K\}$. In each round $t$, the environment generates a vector $\left(\Xt, \Yt, \{\Yti\}_{i \in [K]}\right)$. The vector $X_t$ denotes the context in round $t$ and forms an independent and identically distributed (IID) sequence drawn from a bounded set $\cX \subset \R^d$ according to an unknown but fixed distribution $\nu$. The binary reward for context $X_t$ is denoted by $Y_t \in \{0,1\}$, which is hidden from the learner. The vector $\left(\{\Yti\}_{i \in [K]}\right) \in \{0, 1\}^{K}$ represents observed feedback at time $t$, where $\Yti$ denotes the feedback observed after playing arm $i$ with $X_t$ as input\footnote{In our setup, an arm $i$ could be a classifier that outputs label $Y^i$. The classifier's input could be a context and any combinations of feedback observed from classifiers coming before the arm $i$ in the cascade. For example, consider a case where each arm represents a crowd-sourced worker. After using the first $i$ crowd-sourced workers, the final label can be a function of predicted labels of the first $i$ crowd-sourced workers.}. We denote the cost for using arm $i$ as $c_i\geq 0$ that is known and the same for all contexts.

In contextual USS, the arms are assumed to be ordered and form a cascade. When the learner selects an arm $i \in [K]$, the feedback from all arms till arm $i$ in the cascade are observed. The expected loss of playing the arm $i$ for a given context $\xt$ is denoted as $\gamma_i(\xt) \doteq \EE{\one{\Yti \neq \Yt|X=\xt}} = \Prob{\Yti \neq \Yt|X=\xt}$, where $\one{A}$ denotes indicator of event $A$. For soundness, we assume that the probability density function of context distribution is strictly positive on $\mathcal{X}$ such that the conditional probabilities are well defined. The total expected loss incurred by playing arm $i$ for context $\xt$ is defined as $\gamma_i(\xt)+\lambda_iC_i$, where $C_i \doteq c_1 + \ldots + c_i$ and $\lambda_i$ is a trade-off parameter that normalizes the incurred cost and the loss of playing arm $i$. 
\cref{fig:ContxUSS_Cascade} depicts the USS setup.
\begin{figure}[!ht]
	\centering
	\includegraphics[width=0.8\linewidth]{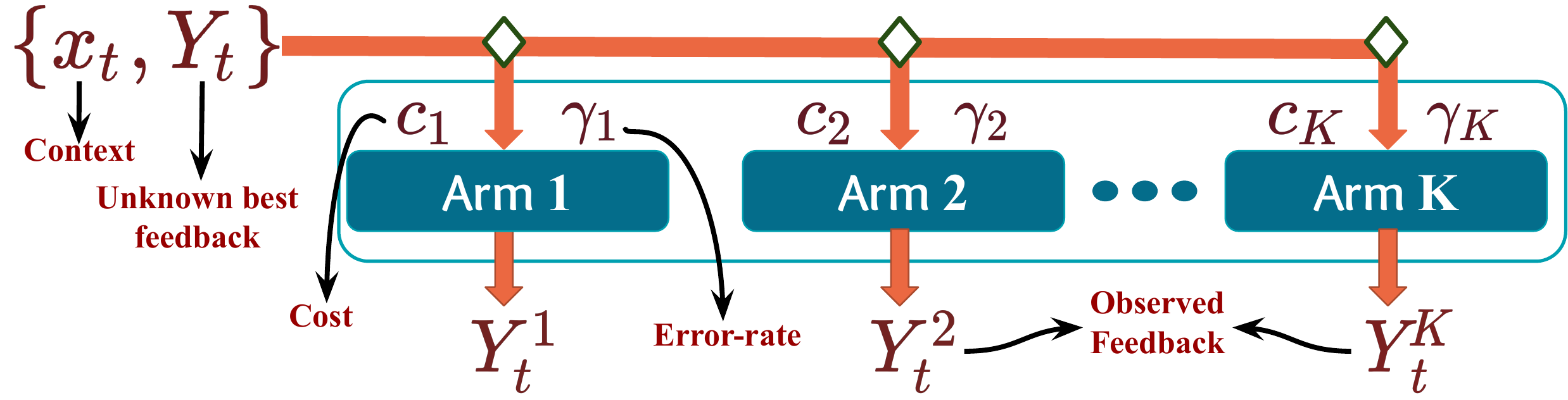}
	\caption{Cascaded Contextual Unsupervised Sequential Selection (USS) setup. In round $t$, $x_t$ is the the context generated by environment, $Y_t$ is the hidden state associated with the context, and $Y_t^1, Y_t^2 \ldots Y_t^K$ are feedback observed from the arms. $\gamma_i$ denotes error rate of the arm $i$ and $c_i$ denotes the cost of using the arm $i$.}
	\label{fig:ContxUSS_Cascade}
\end{figure}

Since the true rewards are hidden from the learner, the expected loss of an arm cannot be inferred from the observed feedback. We thus have a version of the stochastic partial monitoring problem \citep{MOR06_cesa2006regret,ALT12_bartok2012partial,MOR14_bartok2014partial}, and we refer to it as contextual \underline{u}nsupervised \underline{s}equential \underline{s}election (USS). Let $\bq$ be the unknown joint distribution of $(X, Y, Y^1,Y^2 \ldots, Y^K)$. Henceforth we identify a contextual USS instance as $P \doteq (\bq,\bc)$ where $\bc \doteq (c_1, c_2, \ldots, c_K)$ is the known cost vector of arms. We denote the collection of contextual USS instances as $\USS$. For instance $P \in \USS$, the optimal arm for a context $x_t$  is given as follows: 
\eq{
	\label{equ:optimalArmContx}
	\ist \in \max\left\{\argmin _{i \in [K]} \left( \gamma_i(\xt)+\lambda_iC_i \right)\right\},
}
where the choice of $\ist$ is risk-averse as we prefer the arm with the lowest error among the optimal arms. 
The interaction between the environment and a learner is given in Algorithm \ref{alg:ContxUSS}.
\begin{algorithm}[H]
	\caption{Learning on contextual USS instance $(\bq, \bc)$}
	\label{alg:ContxUSS}
	For each round $t$: 
	\begin{enumerate}
		\item \textbf{Environment} chooses a vector $(X_t, \Yt, \{\Yti\}_{i \in [K]})\sim \bq$.
		\item \textbf{Learner} observes a context $X_t=\xt$ and selects an arm $I_t \in [K]$ to stop in cascade.
		\item \textbf{Feedback and Loss:} The learner observes feedback $(\Yt^1, \Yt^2, \ldots, \Yt^{I_t})$ and incurs a total loss $\one{\Yti \neq \Yt|X=\xt} + \lambda_{I_t}C_{I_t}$. 
	\end{enumerate}
\end{algorithm}

The learner's goal is to find an arm for each context such that the cumulative expected loss is minimized. Specifically, for $T$ contexts, we measure the performance of a policy that selects an arm $I_t$ for a context $x_t$ in terms of regret given by
\eq{
	\label{equ:cum_regret}
	\Regret_T = \sum_{t=1}^T\left( \gamma_{I_t}(\xt) + \lambda_{I_t}C_{I_t} - \left(\gamma_\ist(\xt) + \lambda_\ist C_{i^\star_t} \right) \right).
}

We seek policies that yield sub-linear regret, i.e., $\Regret_T/T \rightarrow 0$ as $T \rightarrow \infty$. It implies that the learner collects almost as much reward in the long run as an oracle collects that knew the optimal arm for every context. We say that a problem instance $P \in \USS$ is learnable if there exists a policy such that $\lim\limits_{T \rightarrow \infty}\Regret_T/T = 0$.

In the sequel, we discuss the selection criteria for optimal arm for a given context and the conditions under which instances of $\USS$ are learnable.

\subsection{Contextual Weak Dominance}			
\label{ssec:contxuss_learning}

Next, we introduce the contextual weak dominance property of a problem instance. 
\begin{defi}[Contextual Weak Dominance $(\CWD)$] 
	\label{def:CWD} 
	Let $\ist$ denote optimal arm for context $\xt$. Then the context $\xt$ is said to satisfy weak dominance $(\WD)$ property if
	\eq{
		\label{equ:ContxWDProp}
		\forall j>\ist: C_j - C_\ist > \Prob{\Yts \ne \Ytj |X = \xt}.	
	}
	A problem instance $P \in \USS$  is said to satisfy the $\CWD$ property if all contexts of $P$ satisfy $\WD$ property. We denote the set of all instances in $\USS$ that satisfies $\CWD$ property by $\PCWD$.
\end{defi}

In the following, we use an alternative characterization of the $\CWD$ property, given as
\eq{
    \label{def:ContxXi}
    \xi(x_t) \doteq \min_{j>\ist}\left\{C_j - C_\ist - \Prob{\Yts \ne \Ytj|X = \xt} \right\} > 0.
}
We define $\xi \doteq \inf_{x \in \cX} \xi(x)$ and assume that $\xi>0$. The larger the value of $\xi$, `stronger' is the $\CWD$ property, and easier it is to identify an optimal arm for given contexts. We later characterize the regret upper bounds of proposed algorithms in terms of $\xi$. We also discuss the case when a fraction of contexts satisfies $\WD$ property in \cref{sec:contxuss_appendix}.

\subsection{Selection Criteria for Optimal Arm}
Without loss of generality, we set $\lambda_i=1$ for all $i\in [K]$ as their value can be absorbed into the costs. Since $\ist = \max\big\{\arg\min\limits_{i \in [K]}\left(\gamma_i(x_t)+ C_i \right)\big\}$, it must satisfy following equation:
\begin{subequations}
	\label{eq:ContxCost_exp_err}
	\al{
		&\forall j<\ist \,:\, C_\ist - C_j \leq \gamma_j(\xt)-\gamma_\ist(\xt) \,, \label{eq:cwd1}\\ 
		&\forall j>\ist \,:\, C_j - C_\ist > \gamma_\ist(\xt) - \gamma_j(\xt) \,. \label{eq:cwd2}
	}
\end{subequations}

As the loss of an arm is not observed, the above equations can not lead to a sound arm selection criteria. We thus have to relate the unobservable quantities in terms of the quantities that can be observed. In our setup, we can compare the feedback of two arms, which can be used to estimate the disagreement probabilities between them. For notation convenience, we define $\pijt \doteq \Prob{\Yti \ne \Ytj|X=\xt}$ for $i<j$. The value of $\pijt$ can be estimated as it is observable.
Our next result bounds unobserved error rates differences in terms of their observable disagreement probabilities for a given context.
\begin{restatable}{lem}{ContxErrProbContx}
	\label{lem:ContxErr_prob_contx}
	For any  $i$, $j$, and $x_t\in \mathcal{X}$, $\gamma_{i}(\xt) - \gamma_{j}(\xt) = \pijt - 2\Prob{\Yti = \Yt, \Ytj \ne \Yt| X = \xt}$. 
\end{restatable}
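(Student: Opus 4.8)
The plan is to condition on the event $\{X = x_t\}$ throughout and reduce the identity to the non-contextual version of the same statement, namely Proposition~3 of \cite{AISTATS17_hanawal2017unsupervised} (which appears in the excerpt as \cref{lem:err_prob_contx}). Concretely, fix a context $x_t \in \mathcal{X}$ with strictly positive density, so that all the conditional probabilities $\Prob{\cdot \mid X = x_t}$ are well defined. I would then work entirely with the conditional law of $(Y, Y^i, Y^j)$ given $X = x_t$, which is just a joint distribution of three binary random variables, and apply the non-contextual identity to it.

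The key steps, in order, are as follows. First I would write, by definition, $\gamma_i(x_t) - \gamma_j(x_t) = \Prob{Y^i \neq Y \mid X = x_t} - \Prob{Y^j \neq Y \mid X = x_t}$. Next I would decompose each disagreement event according to whether $Y^j$ (resp.\ $Y^i$) agrees with $Y$: using $\{Y^i \neq Y\} = \{Y^i \neq Y, Y^j = Y\} \cup \{Y^i \neq Y, Y^j \neq Y\}$ and similarly for $\{Y^j \neq Y\}$, the common term $\Prob{Y^i \neq Y, Y^j \neq Y \mid X = x_t}$ cancels, leaving
\[
\gamma_i(x_t) - \gamma_j(x_t) = \Prob{Y^i \neq Y, Y^j = Y \mid X = x_t} - \Prob{Y^i = Y, Y^j \neq Y \mid X = x_t}.
\]
Then I would add and subtract $\Prob{Y^i = Y, Y^j \neq Y \mid X = x_t}$ to rewrite the right-hand side as
\[
\big(\Prob{Y^i \neq Y, Y^j = Y \mid X = x_t} + \Prob{Y^i = Y, Y^j \neq Y \mid X = x_t}\big) - 2\,\Prob{Y^i = Y, Y^j \neq Y \mid X = x_t}.
\]
Finally I would observe that the parenthesized sum is exactly $\Prob{Y^i \neq Y^j \mid X = x_t} = p_{ij}^{(t)}$, since the event $\{Y^i \neq Y^j\}$ (for binary variables, relative to a third binary variable $Y$) is the disjoint union of $\{Y^i \neq Y = Y^j\}$ and $\{Y^i = Y \neq Y^j\}$. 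This yields $\gamma_i(x_t) - \gamma_j(x_t) = p_{ij}^{(t)} - 2\,\Prob{Y^i = Y, Y^j \neq Y \mid X = x_t}$, as claimed. Alternatively, and more briefly, I would simply note that the conditional distribution given $X = x_t$ is itself a valid instance of the three-variable setup, so \cref{lem:err_prob_contx} applies verbatim with every probability replaced by its $x_t$-conditional counterpart.

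There is essentially no real obstacle here: the statement is a purely combinatorial identity about three binary random variables, and the only mild subtlety is the measure-theoretic well-definedness of the conditional probabilities, which is already guaranteed by the standing assumption that the context density is strictly positive on $\mathcal{X}$. The ``hard part,'' such as it is, is just bookkeeping the event decompositions so that the right terms cancel; I would present it in the compressed form (invoke \cref{lem:err_prob_contx} conditionally) to keep the proof to a couple of lines.
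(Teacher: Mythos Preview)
Your proposal is correct and follows essentially the same approach as the paper: both start from the definitions, decompose each error event into two disjoint pieces (your split by whether $Y^j$ agrees with $Y$ is, for binary variables, the same as the paper's split by whether $Y^i = Y^j$), cancel the common piece, and then add--subtract the term $\Prob{Y^i = Y,\, Y^j \neq Y \mid X = x_t}$ to assemble $p_{ij}^{(t)}$. Your alternative of simply invoking \cref{lem:err_prob_contx} under the conditional law given $X = x_t$ is a perfectly clean shortcut.
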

\noindent
The detailed proof of \cref{lem:ContxErr_prob_contx} and all other missing proofs appear in \cref{sec:contxuss_appendix}.

Now, using \cref{lem:ContxErr_prob_contx}, we can replace  \cref{eq:cwd1} by
\eq{
	\label{eq:selectDisProbLowContx}
	\forall j<\ist \,:\, C_{\ist} - C_j \leq  \pjist,
}
which only has observable quantities. 
For $j>\ist$, using the $\CWD$ property, we replace \cref{eq:cwd2} by
\eq{
	\label{eq:selectDisProbHighContx}
	\forall j>\ist \,:\, C_j - C_{\ist} >  \pijst.
}

Using \cref{eq:selectDisProbLowContx} and \cref{eq:selectDisProbHighContx}, our next result gives the optimal arm for a given context $x_t$.
\begin{restatable}{lem}{SetBContx}
	\label{lem:BContx}
	Let $P \in \PCWD$ and $\Bt = \left\{i: \forall j>i, C_j - C_i > \pijt \right\}\cup \{K\}$. Then the arm $I_t =\min(\Bt)$ is the optimal arm for a context $x_t$.
\end{restatable}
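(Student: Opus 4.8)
The plan is to mirror the proof of \cref{lem:Bx} from the non-contextual chapter, since \cref{lem:BContx} is its direct contextual analogue — the only change is that every probability and every optimality statement is now conditioned on the fixed context $x_t$, and the definitions in \cref{eq:selectDisProbLowContx} and \cref{eq:selectDisProbHighContx} together with the $\CWD$ property (\cref{def:CWD}) are exactly what is needed to carry the argument through verbatim. So the strategy is: fix an arbitrary context $x_t$, let $\ist$ be the optimal arm for that context as defined in \cref{equ:optimalArmContx}, and show two things: (i) no arm with index strictly smaller than $\ist$ belongs to $\Bt$, and (ii) $\ist$ itself belongs to $\Bt$. Together with the observation that $\Bt$ always contains $K$ by construction, these two facts force $\min(\Bt) = \ist$.

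First I would establish (i). Because $\ist$ is optimal for $x_t$, \cref{eq:cwd1} gives $C_{\ist} - C_j \leq \gamma_j(\xt) - \gamma_{\ist}(\xt)$ for every $j < \ist$; applying \cref{lem:ContxErr_prob_contx} (which bounds $\gamma_j(\xt) - \gamma_{\ist}(\xt) \le \pjist = \Prob{\Yts \ne \Ytj | X = \xt}$, using that the subtracted term there is nonnegative) yields $C_{\ist} - C_j \le \pjist$, as recorded in \cref{eq:selectDisProbLowContx}. Hence for $j < \ist$ we have $C_{\ist} - C_j \le p_{j\ist}^{(t)}$, which is the negation of the defining inequality for membership of $j$ in $\Bt$ (the set requires $C_{\ist} - C_j > p_{j\ist}^{(t)}$ when testing arm $j$ against the larger-indexed arm $\ist$). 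Therefore $j \notin \Bt$ for all $j < \ist$, so every element of $\Bt$ has index at least $\ist$.

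Next I would establish (ii): $\ist \in \Bt$. This is precisely where the $\CWD$ property is used. By \cref{def:CWD}, since the instance $P \in \PCWD$, the context $x_t$ satisfies $\WD$, i.e. $C_j - C_{\ist} > \Prob{\Yts \ne \Ytj | X = \xt} = p_{\ist j}^{(t)}$ for all $j > \ist$ (this is also \cref{eq:selectDisProbHighContx}). That is exactly the condition for $\ist$ to lie in $\Bt = \{i : \forall j > i,\ C_j - C_i > \pijt\} \cup \{K\}$. Combining (i) and (ii), $\Bt$ is nonempty, every element has index $\ge \ist$, and $\ist$ is an element, so $\min(\Bt) = \ist$; writing $\Bt = \{\ist, h_1, \dots, h_m, K\}$ with $\ist < h_1 < \cdots < h_m < K$ (mirroring the display in the proof of \cref{lem:Bx}) makes this explicit. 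Thus $I_t = \min(\Bt) = \ist$ is the optimal arm for context $x_t$, and since $x_t$ was arbitrary the claim holds for every context.

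There is no serious obstacle here — the lemma is essentially a restatement of \cref{lem:Bx} with conditioning on $x_t$ inserted throughout, and the two supporting results (\cref{lem:ContxErr_prob_contx} and the $\CWD$ definition) are already in place. The only point requiring a little care is making sure the direction of the inequality in the definition of $\Bt$ is matched correctly to \cref{eq:cwd1}/\cref{eq:cwd2} — specifically that $C_{\ist} - C_j \le p_{j\ist}^{(t)}$ genuinely excludes $j < \ist$ from $\Bt$ rather than merely failing to include it — but this is immediate once one notes that membership of $i$ in $\Bt$ is tested only against indices $j > i$, so testing $j < \ist$ against $\ist$ is the relevant comparison, and the weak inequality $\le$ is the exact negation of the strict $>$ required for membership.
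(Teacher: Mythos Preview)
Your proposal is correct and follows essentially the same approach as the paper's proof: exclude all $j < \ist$ from $\Bt$ via \cref{eq:selectDisProbLowContx}, include $\ist$ in $\Bt$ via the $\CWD$ property (\cref{eq:selectDisProbHighContx}), and conclude $\min(\Bt)=\ist$. If anything, you are more explicit than the paper in spelling out step (ii) — the paper simply writes $\Bt = \{\ist, h_1,\ldots,h_n,K\}$ without separately justifying $\ist \in \Bt$, relying on the surrounding discussion of $\CWD$.
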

By construction, the optimal arm lies in set $\Bt$. Because of \cref{eq:selectDisProbHighContx}, any sub-optimal arm having smaller index than optimal arm do not satisfy \cref{eq:selectDisProbHighContx}, hence it can not be in set $\Bt$. Therefore, the smallest arm of set $\Bt$ is the optimal arm. 
\begin{restatable}{thm}{learnCWD}
	\label{thm:learnCWD}
	The set $\PCWD$ is maximal learnable. 
\end{restatable}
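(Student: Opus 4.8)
The claim to prove is Theorem~\ref{thm:learnCWD}: the set $\PCWD$ of contextual USS instances satisfying the $\CWD$ property is a \emph{maximal} learnable set. The plan is to split this into two parts, mirroring the non-contextual development in \cref{sec:uss_learning}: (i) \emph{learnability} of $\PCWD$, and (ii) \emph{maximality}, i.e., no strictly larger subset of $\USS$ is learnable.

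For the learnability direction, I would first invoke \cref{lem:BContx}, which already tells us that for $P\in\PCWD$ the optimal arm for context $x_t$ is the minimal element of the set $\Bt=\{i:\forall j>i,\ C_j-C_i>\pijt\}\cup\{K\}$, a set defined purely in terms of the observable disagreement probabilities $\pijt=\Prob{\Yti\ne\Ytj\mid X=x_t}$. Hence it suffices to exhibit \emph{any} policy that estimates the $\pijt$ accurately enough (uniformly over contexts, using the GLM structure discussed in \cref{ssec:contxuss_learning}) and applies the selection rule of \cref{lem:BContx} with confidence-adjusted estimates --- and then to cite the sub-linear regret bound that the paper establishes for its proposed optimistic algorithm later in the chapter. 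So this half reduces to: "a provably sub-linear-regret algorithm exists for every $P\in\PCWD$," which is exactly what the chapter's main algorithmic theorem delivers; I would state it as a forward reference. The margin $\xi=\inf_{x\in\cX}\xi(x)>0$ from \cref{def:ContxXi} is what makes the confidence-interval argument terminate in finite time, so it should be invoked explicitly.

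For the maximality direction, I would argue by contradiction in the style of the proof of \cref{prop:twdbug} and the cited Theorems~14, 17, 19 of \citet{AISTATS17_hanawal2017unsupervised}. Suppose $P\in\USS\setminus\PCWD$; then there is a positive-measure set of contexts $x$ for which $\WD$ fails, i.e., there is some $j>\ist(x)$ with $C_j-C_{\ist(x)}\le\Prob{Y^{\ist(x)}\ne Y^j\mid X=x}$, equivalently $\xi(x)\le 0$. For such a context, \cref{eq:selectDisProbHighContx} --- the only observable certificate that a smaller-indexed arm beats a larger one --- breaks down: by Proposition~\ref{lem:ContxErr_prob_contx} the observable quantity $\pijt$ equals $\gamma_i(x_t)-\gamma_j(x_t)+2\Prob{Y^i=Y,\ Y^j\ne Y\mid X=x_t}$, so the hidden term $\Prob{Y^i=Y,\ Y^j\ne Y\mid X=x_t}$ cannot be disentangled from the observations. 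I would make this rigorous by constructing, for a fixed cost vector and a fixed context distribution, two instances $P$ and $P'$ agreeing on all observable marginals $\pijt$ (hence indistinguishable by any policy, since the learner only ever sees the $Y^i$'s) but whose optimal arms differ on a positive-measure set of contexts with a constant sub-optimality gap --- the contextual lift of the two-instance construction behind Theorems~17 and~19 of \citet{AISTATS17_hanawal2017unsupervised}. Any policy then incurs linear regret on at least one of $P,P'$, so $\USS\setminus\PCWD$ contains an instance on which every policy fails; combined with learnability of $\PCWD$ this gives maximality.

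The main obstacle I anticipate is the maximality half, specifically the indistinguishability construction: in the non-contextual case the bad pair of instances is finite-dimensional and the argument is essentially the one in \citet{AISTATS17_hanawal2017unsupervised}, but here one must produce two joint distributions $\bq,\bq'$ over $(X,Y,Y^1,\dots,Y^K)$ that (a) induce the same law of $(X,Y^1,\dots,Y^K)$ --- so a learner, who never sees $Y$, cannot tell them apart --- while (b) reversing the sign of $C_j-C_{\ist}-(\gamma_{\ist}-\gamma_j)$ on a set of contexts of positive $\nu$-measure, and (c) keeping the sub-optimality gap bounded below by a constant there. One has to check that the boundary case $\xi(x)=0$ (the analogue of $\rho=1$ excluded in \cref{def:CWD}) is genuinely on the unlearnable side, which is what forces the strict inequality in the definition of $\PCWD$; this is the contextual counterpart of \cref{prop:twdbug} and should be handled by the same "no finite sample decides $C_j-C_{\ist}\ge\Prob{Y^{\ist}\ne Y^j\mid X=x}$" reasoning, tuning the gap on $P'$ by a parameter $\epsilon\downarrow 0$.
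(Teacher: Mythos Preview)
Your proposal is correct in its overall structure, but both halves take a different route from the paper.

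For \emph{learnability}, the paper does not forward-reference the algorithm's regret bound. Instead it introduces two auxiliary lemmas (contextual analogues of \cref{prop:CostRange1} and \cref{prop:CostRange2}) which show that, for a fixed context $x_t$, the unobservable optimality conditions \cref{eq:cwd1}--\cref{eq:cwd2} are \emph{equivalent} to the observable conditions \cref{eq:selectDisProbLowContx}--\cref{eq:selectDisProbHighContx} precisely when the cost gap $C_j-C_{\ist}$ avoids the interval $(\gamma_{\ist}(x_t)-\gamma_j(x_t),\,p_{\ist j}^{(t)}]$. For $j<\ist$ this avoidance is automatic from the optimality of $\ist$; for $j>\ist$ it is exactly the $\CWD$ requirement. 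So the paper argues learnability at the level of ``disagreement probabilities are sufficient statistics for the optimal decision,'' without invoking any specific algorithm or the GLM parameterisation. This is cleaner logically --- your forward reference makes \cref{thm:learnCWD} depend on the later algorithmic theorem, which is an awkward ordering --- but your route via \cref{lem:BContx} plus the regret bound would of course also work.

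For \emph{maximality}, the paper is far terser than your sketch: it simply remarks that if $\CWD$ fails then the observable $p_{\ist j}^{(t)}$ are not sufficient to pin down the optimal arm, leaving the details implicit. Your proposed two-instance construction --- matching the observable law of $(X,Y^1,\dots,Y^K)$ while flipping the optimal arm on a positive-$\nu$-measure set of contexts, tuning the gap by $\epsilon\downarrow 0$ as in \cref{prop:twdbug} --- is the honest way to make that assertion rigorous, and is indeed the contextual lift of Theorems~17 and~19 of \citet{AISTATS17_hanawal2017unsupervised}. So on this half you are more complete than the paper.
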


\noindent
The proof establishes that under the $\CWD$ property, there exists a `sound' arm selection policy that identifies the optimal arm for each context. The sound policy only uses conditional disagreement probabilities between pairs of arms that can be estimated from the feedback of arms.

\section{Parameterization of Pairwise Disagreement  Probability}
\label{sec:contxuss_glmModel}

Since the number of contexts could be much larger (can be infinite) than the learning horizon, in stochastic contextual bandits, a correlation structure is assumed between the reward (loss) and the contexts \citep{JMLR02_auer2002using, WWW10_li2010contextual, ICML17_li2017provably}. It is often realized via parameterization of the arms such that expected rewards (or losses) observed from an arm depend on the unknown parameter. In our setting, we cannot observe a loss for any arm. Hence parameterization of an expected loss of the arms is not useful.  However, we can obtain feedback of two arms for a given context and can compare them. For example, we can check whether two arms' feedback agrees or disagrees for a given context. Thus, we assume a correlation structure on the disagreement probability for a pair of arms across the contexts and parameterize it using generalized linear models. For $i<j$ and context $x_t$, the disagreement probability for $(i,j)$ pair of arms is given via a function $\mu$ as follows:
\eq{
	\label{equ:disProb}
	\Prob{\Yti \ne \Ytj|X=\xt}=\mu(\Phi_{ij}(\xt)^\top \Ts), 
} 
where $\xt \in \R^{d}$, $\Phi_{ij}: \R^{d} \rightarrow \R^{d^\prime}$ is a feature map for some $d^\prime \ge d$,\footnote{Let $\R^{d_{ij}}$ be the space where \cref{equ:disProb} holds for $(i,j)$ pair of arms, and $\Phi_{ij}$ is the feature map that lift $\xt$ from $\R^{d}$ space to $\R^{d_{ij}}$ space. For simplicity, we take $d^\prime = \max_{\forall i < j \le K} d_{ij}$.} and $\Ts \in\R^{d^\prime}$ is the unknown parameter for $(i,j)$ pair.

We assume the following assumptions on context distribution $\nu$ and function $\mu$, which is standard in the GLM bandit literature \citep{NIPS10_filippi2010parametric,ICML17_li2017provably}:
\begin{assu}[GLM]
	\begin{itemize}
		\setlength{\itemsep}{0pt}
		\setlength{\parskip}{3pt}
		\setlength{\itemindent}{-1.75em}
		\item For all $x \in \mathcal{X}$ and $(i,j)$ pairs, $\norm{\Phi_{ij}(x)}_{2} \le 1$.
		\item $\kappa \doteq \inf_{\norm{ x }_2 \le 1, \norm{ \theta - \Ts }_2 \le 1}$ $\dot{\mu}(\Phi_{ij}(x)^\top \theta) > 0$ for all $(i,j)$ pairs.
		\item There exists a constant $\lambda_\Sigma > 0$ such that $\lambda_{min}\left( \EE{\Phi_{ij}(X)\Phi_{ij}(X)^\top} \right) \ge \lambda_\Sigma$ for all $(i,j)$ pairs.
		\item The function $\mu : \R \rightarrow [0,1]$ is continuously differentiable and Lipschitz with constant $k_\mu$. 
	\end{itemize}	
\end{assu}

For our setting, the function $\mu$ is defined as $\mu(z) ={1}/{(1+\mathrm{e}^{-z})}$, which is the logistic function. The logistic function is widely used function for binary classification model and has $k_\mu \le 1/4$.

In contextual USS setup, we can compare the arms' feedback and check whether they agree or not for a given context. These binary observations (agree or disagree) can be treated as noisy samples of the disagreement probability. The noise in the binary observation obtained by comparing the feedback of $(i, j)$ pair of arms in round $t$, is given by 
\als{
	\epsilon_{ij}^{(t)}=
	\begin{cases}
		1 - \mu(\Phi_{ij}(\xt)^\top \Ts ), & \text{with probability } \mu(\Phi_{ij}(\xt)^\top \Ts ) \\
		 - \mu(\Phi_{ij}(\xt)^\top \Ts ), & \text{with probability } \left(1-\mu(\Phi_{ij}(\xt)^\top \Ts) \right)
	\end{cases}	
}
where $\epsilon_{ij}^{(t)}$ is $\F_{t}$-measurable with $\EE{\epsilon_{ij}^{(t)}|\F_t}=0$. Here $\F_t$ denotes sigma algebra generated by history $\left\{\left(X_s, I_s, \left\{Y_s^i\right\}_{i \in [I_s]}\right) \right\}_{s \in [t]}$ till time $t$. Since $\epsilon_{ij}^{(t)}$ is a zero-mean shifted Bernoulli random variable, $\epsilon_{ij}^{(t)}$ satisfies the following sub-Gaussian condition with parameter $\sigma \in (0,1)$:
\eqs{
	\EE{\exp(\lambda\epsilon_{ij}^{(t)})|\F_{t}} \le \exp\left(\frac{\lambda^2\sigma^2}{2}\right), \hspace{2mm} \forall \lambda \in \R.
}

Let $d_{ij}(t) \doteq \one{\Yti \neq \Ytj|X=\xt}$ be the disagreement indicator for a context $\xt$ and $S_{ij}^t$ be the set of indices of contexts for which disagreements are observed for $(i,j)$ pair of arms till round $t$. In round $t$, we estimate $\Ts$, denoted by $\Te$, using the following equation adapted from the maximum likelihood estimator (MLE) used for GLM bandits \citep{ICML17_li2017provably}:
\al{
	\sum_{s\in S_{ij}^t}&\left(d_{ij}(s) - \mu(\Phi_{ij}(x_s)^\top\theta)\right) \Phi_{ij}(x_s) = 0. \label{equ:glm_est}
}

In the next section, we develop an algorithm that exploits \cref{lem:BContx} for selecting the optimal arm to each context. It replaces the terms $\pijt$ in \cref{lem:BContx} by their optimistic estimates.

\section{Algorithm for Contextual USS: \ref{alg:CUSS_WD}}
\label{sec:contxuss_algorithm}

Our algorithm, named \ref{alg:CUSS_WD}, is based on the {\it optimism-in-the-face-of-uncertainty} (OFU) principle. \ref{alg:CUSS_WD} works as follows: it takes $\delta$ and $m$ as inputs, where $\delta$ is the confidence in the estimated parameters and used for computing confidence bound for $\Ts$ as given by \cref{lem:theta_est_glm}. The choice of $m$ ensures that with probability at least $(1-\delta)$, the sample correlation matrix $V_{ij}^t=\sum_{s \in S_{ij}^t}\Phi_{ij}(x_s)\Phi_{ij}(x_s)^\top$ for each $(i,j)$ pair where $i<j$, is invertible. A high probability upper bound on $m$ is computed using \cref{lem:min_eigen_lb}. The algorithm collects feedback from all arms by selecting the arm $K$ irrespective of the context received for first $m$ rounds. After $m$ rounds, the sample correlation matrix and the estimate of $\Ts$ are computed for each $(i,j)$ pair.

For $t>m$, the learner receives a context $x_t$ and plays the arm $i=1$ and then observe its feedback. For each $(i,j)$ pair and context $\xt$, the upper bound on disagreement probability $\tpijt$ is computed using $\Te$ and confidence bonus $\alpha_{ij}^t\norm{ \Phi_{ij}(x_t)}_{\Vinv}$. Here the notation $\norm{x}^2_A \doteq x^\top Ax$ denotes the weighted $l_2$-norm of vector $x \in \R^d$ with respect to a positive definite matrix $A \in \R^{d\times d}$. The confidence bonus has two terms. The first term $\alpha_{ij}^t$ is a slowly increasing function in $t$ whose value is specified in Lemma \ref{lem:theta_est_glm}, and the second term $\norm{ \Phi_{ij}(x_t)}_{\Vinv}$ decreases to zero as $t$ increases. 

\begin{algorithm}[H]
    \renewcommand{\thealgorithm}{USS-PD}
    \floatname{algorithm}{}
    \caption{Algorithm for Contextual USS using Pairwise Disagreement}
	\label{alg:CUSS_WD}
	\begin{algorithmic}[1]
		\State \textbf{Input:} Tuning parameters: $\delta \in (0,1)$ and $m>0$
		\State Select arm $K$ for first $m$ contexts
		\State $\forall i< j \le K:$ set ${V}_{ij}^{m} \leftarrow \sum_{t=1}^{m}\Phi_{ij}(x_{t}) {\Phi_{ij}(x_{t})}^\top$ and update $\hat\theta_{ij}^m$ by solving \cref{equ:glm_est} 
		\For{$t= m+1, m+2, \ldots$}
			\State Receive context $x_t$. Set $i=1$ and $I_t=0$
			\While {$I_t=0$}
				\State Play arm $i$ 
				\State $\forall j \in [i+1, K]:$ compute $\tpijt \leftarrow \mu\left(\Phi_{ij}(\xt)^\top \hat\theta_{ij}^{t-1} + \alpha_{ij}^{t-1}\norm{ \Phi_{ij}(\xt)}_{\left({V}_{ij}^{t-1}\right)^{-1}} \right)$
				\State If $\forall j \in [i+1, K]: C_j - C_i > \tpijt$ or $i=K$ then set $I_t = i$ else  set $i=i+1$
			\EndWhile
			\State Select arm $I_t$ and observe $Y_t^1, Y_t^2, \dots, Y_t^{I_t}$
			\State $\forall i< j \le I_t:$  update $\Vt \leftarrow {V}_{ij}^{t-1} + \Phi_{ij}(x_{t}) {\Phi_{ij}(x_{t})}^\top$ and $\Te$ by solving \cref{equ:glm_est} 
		\EndFor
	\end{algorithmic}
\end{algorithm}

After computing $\tpijt$, the algorithm checks whether the arm $i$ is the best arm using \cref{eq:selectDisProbHighContx} with $\tpijt$ in place of $\pijt$. If the arm $i$ is not the best, then the algorithm plays the next arm, and then the same process is repeated. If the arm $i$ is the best arm for context, then the algorithm stops at that arm with $I_t =i$ for that context. After selecting arm $I_t$, the feedback from arms $1, \ldots, I_t$ are observed. After that, the values of $V^t_{ij}$ are updated, and $\Te$ are re-estimated. The same process is repeated for subsequent contexts.

\begin{rem}
    GLM bandits are well studied but require reward or loss information. In the USS setup, loss of selected arm can not be observed; hence finding the optimal arm is challenging. Due to binary disagreement, \ref{alg:CUSS_WD} uses the MLE estimator for $\Ts$ as used in GLM bandits \citep{NIPS10_filippi2010parametric,ICML17_li2017provably}. However, the feedback structure and the way arms are selected in the USS setup differ from that in the GLM bandits. Further, our analysis needs carefully connecting the regret with the bad events that make \ref{alg:CUSS_WD} selects non-optimal arms.
\end{rem}

\begin{rem}
    We force the algorithm to explore until the correlation matrix $V_{ij}^t$ is invertible for all $(i,j)$ pairs. The invertibility can also be ensured by adding a regularization term \citep{NIPS11_abbasi2011improved,ICML16_zhang2016online,NIPS17_jun2017scalable} to avoid forced exploration. However, the analysis of \ref{alg:CUSS_WD} with regularization term still required to the non-regularized part of the sample correlation matrix becomes invertible. See \cref{asec:contxuss_lambda_algorithm} for the algorithm and its analysis. 
\end{rem}

\subsection{Regret Analysis of \ref{alg:CUSS_WD}}

The following definition is useful in our regret analysis. 
\begin{defi}[Arm Preference ($\succ_t$)]
	\label{def:contx_preference} \ref{alg:CUSS_WD} prefers an arm $i$ over $j$ for context $x_t$ if 
	\begin{subnumcases}	
		{i \succ_t j \doteq }
		C_i - C_j < \tpjit,  &\textnormal{if $j<i$} \label{def_contx_prefer_l} \\
		C_j - C_i > \tpijt,	&\textnormal{if $j>i$} \label{def_contx_prefer_h}.  
	\end{subnumcases}
\end{defi}

Our next result bounds the number of disagreement observations required from a pair of arms say $(i,j)$, such that the smallest eigenvalue of its sample correlation matrix $V_{ij}$ matrices is larger than a fixed value. This result uses the standard results from random matrix theory \citep{Book_vershynin_2012}.
\begin{restatable}{lem}{minEigenLB}
	\label{lem:min_eigen_lb}
	Let ${V}_{ij}^{t} = \sum_{s \in S_{ij}^t}\Phi_{ij}(x_{s}) {\Phi_{ij}(x_{s})}^\top$, $\Sigma_{ij} = \EE{\Phi_{ij}(X)\Phi_{ij}(X)^\top}$, $\Psi$ and $\delta \in (0,1)$ be two positive constants. Then, there exist positive universal constants $C_1$ and  $C_2$ such that the minimum eigenvalue of $\lambda_{min}({V}_{ij}^{t}) \ge \Psi$ with probability at least $1-2\delta/K^2$, iff	
	\eqs{
		|S_{ij}^t| \ge \left(\frac{C_1\sqrt{d^\prime}+C_2\sqrt{\log (K^2/2\delta)}}{\lambda_{min}(\Sigma_{ij})}\right)^2+ \frac{2\Psi}{\lambda_{min}(\Sigma_{ij})}.
	}
\end{restatable}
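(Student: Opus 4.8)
The plan is to reduce the claim to a non-asymptotic concentration bound for the empirical second-moment matrix of the i.i.d.\ feature vectors and then solve the resulting scalar inequality for the sample size. Write $n \doteq |S_{ij}^t|$ and observe that the summands $\Phi_{ij}(x_s)\Phi_{ij}(x_s)^\top$ in $\Vt$ are i.i.d.\ copies of $\Phi_{ij}(X)\Phi_{ij}(X)^\top$ with $X \sim \nu$: during the forced-exploration phase arm $K$ is played regardless of the context, so the contexts contributing to $\Vt$ (which is the regime in which the lemma is used, to fix $m$) form an i.i.d.\ sample independent of the selection rule. By Assumption~1 we have $\lambda_{\min}(\Sigma_{ij}) \ge \lambda_\Sigma > 0$, so $\Sigma_{ij}$ is invertible; put $z_s \doteq \Sigma_{ij}^{-1/2}\Phi_{ij}(x_s)$, which is isotropic, i.e.\ $\EE{z_s z_s^\top} = I_{d^\prime}$, and satisfies $\norm{z_s}_2 \le \lambda_{\min}(\Sigma_{ij})^{-1/2}$ since $\norm{\Phi_{ij}(x_s)}_2 \le 1$; in particular each $z_s$ is sub-Gaussian with $\psi_2$-norm of order $\lambda_{\min}(\Sigma_{ij})^{-1/2}$.

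Next I would invoke the non-asymptotic random-matrix bound for sample covariance matrices of isotropic sub-Gaussian vectors (Vershynin, as cited): there are universal constants $c, C > 0$ such that for every $u \ge 0$, with probability at least $1 - 2\exp(-cu^2)$,
\[
\norm{\textstyle\frac1n\sum_s z_s z_s^\top - I_{d^\prime}} \;\le\; \max\{\epsilon,\epsilon^2\}, \qquad \epsilon \doteq \frac{1}{\lambda_{\min}(\Sigma_{ij})}\cdot\frac{C\sqrt{d^\prime} + u}{\sqrt n},
\]
where the factor $\lambda_{\min}(\Sigma_{ij})^{-1}$ enters through the dependence of the universal constants on the $\psi_2$-norm $\lambda_{\min}(\Sigma_{ij})^{-1/2}$ of the $z_s$. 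Taking $u = \sqrt{c^{-1}\log(K^2/2\delta)}$ makes the failure probability exactly $2\delta/K^2$ and yields $\epsilon = a/\sqrt n$ with $a \doteq \big(C_1\sqrt{d^\prime} + C_2\sqrt{\log(K^2/2\delta)}\big)/\lambda_{\min}(\Sigma_{ij})$ for suitable universal $C_1, C_2$. Since $\Vt = \Sigma_{ij}^{1/2}\big(\sum_s z_s z_s^\top\big)\Sigma_{ij}^{1/2}$, on this event
\[
\lambda_{\min}(\Vt) \;\ge\; \lambda_{\min}(\Sigma_{ij})\,\lambda_{\min}\Big(\textstyle\sum_s z_s z_s^\top\Big) \;\ge\; \lambda_{\min}(\Sigma_{ij})\, n\,\big(1 - \max\{\epsilon,\epsilon^2\}\big).
\]

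Finally I would check that the stated bound on $|S_{ij}^t|$ forces $\lambda_{\min}(\Vt) \ge \Psi$. Set $b \doteq \Psi/\lambda_{\min}(\Sigma_{ij})$, so the hypothesis reads $n \ge a^2 + 2b$. Then $\sqrt n \ge a$, hence $\epsilon = a/\sqrt n \le 1$ and $\max\{\epsilon,\epsilon^2\} = \epsilon$, so $n\big(1-\epsilon\big) = \sqrt n(\sqrt n - a)$; moreover $\sqrt n - a \ge \sqrt{a^2+2b} - a = \frac{2b}{\sqrt{a^2+2b}+a} \ge \frac{b}{\sqrt{a^2+2b}} \ge \frac{b}{\sqrt n}$, whence $n(1-\epsilon) \ge b$ and $\lambda_{\min}(\Vt) \ge \lambda_{\min}(\Sigma_{ij})\,b = \Psi$, as required. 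The necessity direction follows from the matching upper estimate $\lambda_{\min}(\textstyle\frac1n\sum_s z_s z_s^\top) \le 1 + \max\{\epsilon,\epsilon^2\}$ in the same two-sided bound (read up to the choice of universal constants), so that falling below the threshold precludes the guarantee at confidence $1-2\delta/K^2$. The main obstacle is the middle step: invoking the random-matrix concentration with the correct bookkeeping of how the universal constants depend on the sub-Gaussian norm of the whitened vectors, so that $1/\lambda_{\min}(\Sigma_{ij})$ lands in the numerator of the threshold exactly as claimed, and then discharging the $\max\{\epsilon,\epsilon^2\}$ dichotomy cleanly via the completing-the-square estimate above.
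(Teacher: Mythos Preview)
Your proposal is correct and follows essentially the same route as the paper: the paper's own proof is a two-line remark that the lemma is adapted from Proposition~1 of \citet{ICML17_li2017provably}, which in turn invokes Vershynin's Theorem~5.39, and you have simply unpacked that adaptation explicitly---whitening by $\Sigma_{ij}^{-1/2}$, applying the sub-Gaussian covariance concentration, and solving the resulting scalar inequality $n(1-a/\sqrt{n})\ge b$ via the completing-the-square trick. Your level of detail exceeds what the paper provides; in particular the algebra showing $n\ge a^2+2b\Rightarrow \sqrt n(\sqrt n-a)\ge b$ is exactly the computation underlying Li et al.'s proposition, and your remark that the i.i.d.\ structure is guaranteed by the forced-exploration phase (arm $K$ is played unconditionally for the first $m$ rounds) correctly identifies why the random-matrix bound applies.
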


The next result is adapted to our setting from the confidence bounds for  maximum likelihood estimator used in GLM bandits \citep{ICML17_li2017provably}.
\begin{restatable}[Confidence Ellipsoid]{lem}{thetaEstGLM}
	\label{lem:theta_est_glm} 
	Let $m$ be such that $\lambda_{min}(V_{ij}^{m+1}) \ge 1$ for any pair $(i,j)$. Then the following event holds with probability at least $1-2\delta/K^2$ for \ref{alg:CUSS_WD}:
	\begin{align*}
	&\norm{ \Te - \Ts }_{\Vt} \le \alpha_{ij}^t, \;\forall t > m 
	\end{align*}
	where $\alpha_{ij}^t = \frac{2\sigma}{\kappa}\sqrt{\frac{d^\prime}{2}\log\left(1 + \frac{2t}{d^\prime} \right) + \log\left(\frac{K^2}{2\delta}\right)}$.
\end{restatable}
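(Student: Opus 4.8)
The statement to prove is Lemma~\ref{lem:theta_est_glm}, the confidence ellipsoid bound $\norm{\Te - \Ts}_{\Vt} \le \alpha_{ij}^t$ for all $t > m$, which must hold with probability at least $1 - 2\delta/K^2$ for each pair $(i,j)$.

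My plan is to adapt the standard confidence-ellipsoid argument for maximum likelihood estimation in generalized linear bandits, following the line of reasoning in Li, Lu, and Zhou (\citep{ICML17_li2017provably}), and carefully account for the differences in our setting: the estimator \eqref{equ:glm_est} only uses the rounds $s \in S_{ij}^t$ on which a disagreement is \emph{recorded for the pair} (i.e., feedback from both arms $i$ and $j$ is observed), and the noise $\epsilon_{ij}^{(s)}$ is a shifted Bernoulli, $\sigma$-sub-Gaussian with $\sigma \in (0,1)$, and $\F_s$-measurable with zero conditional mean. First I would write the MLE first-order optimality condition \eqref{equ:glm_est} both at $\Te$ and note that the population version $\sum_{s \in S_{ij}^t}\big(\mu(\Phi_{ij}(x_s)^\top \Ts) - \mu(\Phi_{ij}(x_s)^\top \Ts)\big)\Phi_{ij}(x_s) = 0$ is trivially zero; subtracting and using the observation model $d_{ij}(s) = \mu(\Phi_{ij}(x_s)^\top \Ts) + \epsilon_{ij}^{(s)}$ gives
\[
\sum_{s \in S_{ij}^t}\big(\mu(\Phi_{ij}(x_s)^\top \Te) - \mu(\Phi_{ij}(x_s)^\top \Ts)\big)\Phi_{ij}(x_s) = \sum_{s \in S_{ij}^t}\epsilon_{ij}^{(s)}\Phi_{ij}(x_s) =: Z_{ij}^t.
\]
Next, by the mean value theorem applied coordinatewise (strict monotonicity of $\mu$), the left side equals $G_{ij}^t(\Te - \Ts)$ where $G_{ij}^t = \sum_{s}\dot\mu(\Phi_{ij}(x_s)^\top \bar\theta_s)\Phi_{ij}(x_s)\Phi_{ij}(x_s)^\top$ for intermediate points $\bar\theta_s$ on the segment between $\Ts$ and $\Te$; using the GLM assumption $\dot\mu \ge \kappa$ on the relevant ball (which requires first establishing $\norm{\Te - \Ts}_2 \le 1$, handled as below), we get $G_{ij}^t \succeq \kappa V_{ij}^t$, hence $\norm{\Te - \Ts}_{V_{ij}^t} \le \tfrac{1}{\kappa}\norm{Z_{ij}^t}_{(V_{ij}^t)^{-1}}$.

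The main work is then bounding the self-normalized martingale term $\norm{Z_{ij}^t}_{(V_{ij}^t)^{-1}}$ uniformly over $t > m$. Here I would invoke the self-normalized vector-valued concentration inequality (Abbasi-Yadkori, Pál, Szepesvári, \citep{NIPS11_abbasi2011improved}, Theorem~1): since $\{\epsilon_{ij}^{(s)}\}$ is a $\sigma$-sub-Gaussian martingale difference sequence adapted to $\{\F_s\}$ and $\Phi_{ij}(x_s)$ is $\F_{s-1}$-measurable, for any $V \succ 0$ and any fixed pair $(i,j)$, with probability at least $1 - \delta'$, simultaneously for all $t$,
\[
\norm{Z_{ij}^t}_{(V + V_{ij}^t)^{-1}}^2 \le 2\sigma^2\log\!\left(\frac{\det(V + V_{ij}^t)^{1/2}\det(V)^{-1/2}}{\delta'}\right).
\]
Taking $V = I$ and using $\norm{\Phi_{ij}(x_s)}_2 \le 1$ so that $\det(I + V_{ij}^t) \le (1 + t/d')^{d'}$ by AM–GM on the eigenvalues (the trace is at most $t$), and then — crucially — using that on the event $\lambda_{\min}(V_{ij}^{m+1}) \ge 1$ we have $V_{ij}^t \succeq I$ for all $t > m$, so $(V_{ij}^t)^{-1} \preceq (I + V_{ij}^t)^{-1}\cdot 2$ in the sense needed (or more cleanly, $\norm{Z_{ij}^t}_{(V_{ij}^t)^{-1}} \le \sqrt{2}\,\norm{Z_{ij}^t}_{(I + V_{ij}^t)^{-1}}$ since $I + V_{ij}^t \preceq 2 V_{ij}^t$). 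Choosing $\delta' = 2\delta/K^2$ and combining the pieces yields $\norm{\Te - \Ts}_{V_{ij}^t} \le \tfrac{1}{\kappa}\sqrt{2}\cdot\sigma\sqrt{2\log\big((1+t/d')^{d'/2}\cdot K^2/(2\delta)\big)} = \tfrac{2\sigma}{\kappa}\sqrt{\tfrac{d'}{2}\log(1 + 2t/d') + \log(K^2/(2\delta))}$ after absorbing the factor $2$ into the $2t/d'$ inside the log (bounding $(1+t/d')^2 \le 1 + 2t/d' + (t/d')^2 \le \ldots$, or more simply noting $2\log(1+t/d') \le \log(1 + 2t/d')$ fails in general, so I would instead keep $\log(1+t/d')$ and note $\alpha_{ij}^t$ as stated uses $\log(1 + 2t/d')$ which dominates $\log(1+t/d')$, so the bound with the stated $\alpha_{ij}^t$ still holds — this is exactly the rounding done in \citep{ICML17_li2017provably}).

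The step I expect to be the main obstacle is the circularity in the $\dot\mu \ge \kappa$ argument: the lower bound $\dot\mu(\Phi_{ij}(x)^\top\theta) \ge \kappa$ from the GLM assumption only holds on $\{\norm{\theta - \Ts}_2 \le 1\}$, but to place the intermediate points $\bar\theta_s$ in this ball I first need $\norm{\Te - \Ts}_2 \le 1$, which is what I am trying to prove. The standard resolution — which I would spell out — is a bootstrapping/continuity argument: on the event $\lambda_{\min}(V_{ij}^t) \ge 1$ one has $\norm{\cdot}_2 \le \norm{\cdot}_{V_{ij}^t}$, and one shows via a local-inversion argument (or by a direct convexity argument on the negative log-likelihood, which is strictly convex once $V_{ij}^t \succ 0$, so the MLE is unique) that the solution of \eqref{equ:glm_est} lies in the ball $\{\norm{\theta - \Ts}_{V_{ij}^t} \le \tfrac{1}{\kappa}\norm{Z_{ij}^t}_{(V_{ij}^t)^{-1}}\}$; choosing $m$ large enough (as guaranteed by Lemma~\ref{lem:min_eigen_lb}) makes this radius at most $1$, closing the loop. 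I would cite Proposition~1 of \citep{ICML17_li2017provably} for this self-contained local analysis and adapt the constants. The remaining bookkeeping — the union bound is not needed across $t$ since the self-normalized inequality is already uniform in $t$, and the per-pair failure probability $2\delta/K^2$ is exactly as stated — is routine.
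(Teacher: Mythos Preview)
Your proposal is correct and follows essentially the same route as the paper's proof: reduce to $\norm{\Te-\Ts}_{\Vt}\le \tfrac{1}{\kappa}\norm{Z_{ij}^t}_{\Vtinv}$ via the MLE first-order condition (what the paper simply cites as Eq.~(26) and Lemma~8 of \citet{ICML17_li2017provably}), apply the self-normalized concentration of \citet{NIPS11_abbasi2011improved}, bound the determinant via the AM--GM trace argument (the paper's Lemma~\ref{lem:detVt}), and transfer from the regularized to the unregularized norm using $\lambda_{\min}(\Vt)\ge 1$. The only cosmetic difference is the choice of regularizer: the paper takes $\oVt=\tfrac{1}{2}I+\Vt$, which makes the ratio $\det(\oVt)/\det(\tfrac{1}{2}I)\le (1+2t/d')^{d'}$ match the stated $\alpha_{ij}^t$ exactly, whereas you take $V=I$ and obtain the slightly sharper $\log(1+t/d')$, then correctly note it is dominated by the stated bound; your explicit discussion of the $\dot\mu\ge\kappa$ bootstrapping is something the paper simply absorbs into the citation of \citet{ICML17_li2017provably}.
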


The regret analysis of GLM bandits hinges on bounding the instantaneous regret in each round, which is tied to the estimation error of the GLM parameters. Due to the unsupervised setting and cascade structure, this way of bounding regret does not work in our setup. Our analysis goes by bounding the number of pulls of the sub-optimal arms. However, unlike standard bandits, we have to distinguish whether the sub-optimal arm pulled by \ref{alg:CUSS_WD} is on the `left' or `right' of the optimal arm in the cascade. It requires our analysis to handle both the cases carefully. Since \ref{alg:CUSS_WD} uses a similar MLE estimator for parameter estimation as in GLM bandits, we only adapt their asymptotic normality results. Our next results give conditions when \ref{alg:CUSS_WD} prefers a sub-optimal arm for a context. 
\begin{restatable}{lem}{subOptimalLowerSelection}
	\label{lem:subOptimalLowerSelection}
	Let $P \in \PCWD$. Then \ref{alg:CUSS_WD} prefers any sub-optimal arm $l < i^\star_t$ for context $x_t$ with probability at most $\delta/2$.
\end{restatable}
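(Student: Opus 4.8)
The plan is to show that, on a high-probability event, \ref{alg:CUSS_WD} can \emph{never} prefer a sub-optimal arm $l<\ist$ over $\ist$, so that the preference event is contained in the complement of that event, whose probability is at most $\delta/2$.

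First I would unpack the preference condition. By \cref{def:contx_preference}, since $l<\ist$, the event $\{l \succ_t \ist\}$ is exactly $\{C_{\ist}-C_l > \tplits\}$, where $\tplits = \mu\big(\Phi_{l\ist}(\xt)^\top \Telt + \alpha_{l\ist}^t \norm{\Phi_{l\ist}(\xt)}_{\Vinvls}\big)$ is the optimistic index computed by the algorithm. On the other hand, since $P\in\PCWD$ and $l<\ist$, the optimality characterization \cref{eq:selectDisProbLowContx} gives $C_{\ist}-C_l \le \plits$, i.e.\ the \emph{true} conditional disagreement probability already exceeds the cost gap. Hence $\{l\succ_t\ist\}$ can occur only if $\tplits < \plits$, i.e.\ only if the optimistic index \emph{under}-estimates the truth.

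Next I would show this cannot happen on the confidence event. Let $\mathcal{G}$ be the event of \cref{lem:theta_est_glm} for the pair $(l,\ist)$, intersected with the event of \cref{lem:min_eigen_lb} that guarantees $\lambda_{\min}(\Vtls)\ge 1$ for $t>m$ (so that $\Telt$ is well-defined and the confidence ellipsoid is valid); by those two lemmas and a union bound over the $O(K)$ relevant pairs, $\Prob{\mathcal{G}}\ge 1-\delta/2$. On $\mathcal{G}$ we have $\norm{\Telt-\Tslt}_{\Vtls}\le \alpha_{l\ist}^t$, so by Cauchy--Schwarz $\big|\Phi_{l\ist}(\xt)^\top(\Telt-\Tslt)\big| \le \norm{\Phi_{l\ist}(\xt)}_{\Vinvls}\,\norm{\Telt-\Tslt}_{\Vtls} \le \alpha_{l\ist}^t \norm{\Phi_{l\ist}(\xt)}_{\Vinvls}$, hence $\Phi_{l\ist}(\xt)^\top\Telt + \alpha_{l\ist}^t\norm{\Phi_{l\ist}(\xt)}_{\Vinvls} \ge \Phi_{l\ist}(\xt)^\top\Tslt$. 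Since the logistic link $\mu$ is strictly increasing, applying $\mu$ to both sides gives $\tplits \ge \mu\big(\Phi_{l\ist}(\xt)^\top\Tslt\big) = \plits \ge C_{\ist}-C_l$, contradicting $\{l\succ_t\ist\}$. Therefore $\{l\succ_t\ist\}\subseteq \mathcal{G}^c$ and $\Prob{l\succ_t\ist}\le \Prob{\mathcal{G}^c}\le \delta/2$; the warm-up rounds $t\le m$ contribute nothing because the algorithm plays arm $K$ there and never ``prefers'' a sub-optimal arm.

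The routine parts are the Cauchy--Schwarz step and the bookkeeping of the union bound over pairs and of the $m$-round warm-up. The part that needs care is making the chain ``confidence ellipsoid $\Rightarrow$ optimistic index dominates the true probability'' fully rigorous: one must use monotonicity of $\mu$ in exactly the right direction, align the time index of $\alpha_{l\ist}^t$, $\Telt$ and $\Vtls$ with what \ref{alg:CUSS_WD} actually uses at round $t$ (versus $t-1$), and confirm that invertibility of $\Vtls$ — hence well-posedness of the MLE $\Telt$ — is in force for all $t>m$ via the choice of $m$ in \cref{lem:min_eigen_lb}. Once these dependencies are stitched together, the statement follows immediately; I expect that stitching to be the only real obstacle.
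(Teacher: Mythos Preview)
Your proposal is correct and follows essentially the same argument as the paper: both reduce the preference event $\{l\succ_t\ist\}$ to $\{\tplits<\plits\}$ via \cref{eq:selectDisProbLowContx}, then use Cauchy--Schwarz together with the confidence ellipsoid of \cref{lem:theta_est_glm} and monotonicity of $\mu$ to show the optimistic index dominates the true disagreement probability on the good event. The paper phrases the first step through the quantity $\xi_{li}(x_t)=p_{li}(x_t)-(C_i-C_l)\ge 0$ rather than directly invoking $C_{\ist}-C_l\le\plits$, but this is only a notational difference.
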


\begin{restatable}{lem}{subOptimalUpperSelection}
	\label{lem:subOptimalUpperSelection}
	Let $P \in \PCWD$. If \ref{alg:CUSS_WD} prefers a sub-optimal arm $h > i^\star_t$ for context $x_t$ then
	\begin{equation*}
		2k_\mu \alpha_{i^\star_t h}^t > \xi_{i^\star_t h}(x_t)\sqrt{\lambda_{min}(\Vths)}.
	\end{equation*} 
	where $\xi_{\ist h} = C_h - C_\ist - \pihts$ and $\alpha_{ij}^t$ is given by \cref{lem:theta_est_glm}.
\end{restatable}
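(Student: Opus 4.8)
The plan is to turn the preference event into a lower bound on how much the optimistic estimate $\tpihts$ overshoots the true disagreement probability $\pihts$, and then to upper bound that overshoot via the confidence ellipsoid of \cref{lem:theta_est_glm}.

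First I would unpack the hypothesis. By \cref{def:contx_preference}, since $\ist < h$, the event that \ref{alg:CUSS_WD} prefers $h$ over $\ist$ for context $x_t$ is exactly $C_h - C_\ist < \tpihts$. On the other hand, because $P \in \PCWD$, the alternative form \eqref{def:ContxXi} of the $\CWD$ property gives $C_h - C_\ist = \pihts + \xi_{\ist h}(x_t)$ with $\xi_{\ist h}(x_t) > 0$. Combining the two, the preference event forces
\[
\tpihts - \pihts > \xi_{\ist h}(x_t).
\]

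Second I would bound the left-hand side from above on the good event of \cref{lem:theta_est_glm}. Writing $\tpihts = \mu\big(\Phi_{\ist h}(x_t)^\top\Teht + \alpha_{\ist h}^t\norm{\Phi_{\ist h}(x_t)}_{\Vinvhs}\big)$ and $\pihts = \mu\big(\Phi_{\ist h}(x_t)^\top\Tsht\big)$, Lipschitzness of $\mu$ with constant $k_\mu$ and the triangle inequality yield
\[
\tpihts - \pihts \le k_\mu\Big(\big|\Phi_{\ist h}(x_t)^\top(\Teht - \Tsht)\big| + \alpha_{\ist h}^t\,\norm{\Phi_{\ist h}(x_t)}_{\Vinvhs}\Big).
\]
The first term is handled by Cauchy--Schwarz in the $\Vths$-weighted inner product, $\big|\Phi_{\ist h}(x_t)^\top(\Teht - \Tsht)\big| \le \norm{\Phi_{\ist h}(x_t)}_{\Vinvhs}\norm{\Teht - \Tsht}_{\Vths}$, and then by the confidence ellipsoid \cref{lem:theta_est_glm}, $\norm{\Teht - \Tsht}_{\Vths} \le \alpha_{\ist h}^t$. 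Hence $\tpihts - \pihts \le 2k_\mu\alpha_{\ist h}^t\norm{\Phi_{\ist h}(x_t)}_{\Vinvhs}$. Finally, the GLM assumption $\norm{\Phi_{ij}(x)}_2 \le 1$ gives $\norm{\Phi_{\ist h}(x_t)}_{\Vinvhs}^2 = \Phi_{\ist h}(x_t)^\top\Vinvhs\Phi_{\ist h}(x_t) \le 1/\lambda_{min}(\Vths)$. Chaining this with the lower bound from the first step yields $\xi_{\ist h}(x_t) < 2k_\mu\alpha_{\ist h}^t/\sqrt{\lambda_{min}(\Vths)}$, which rearranges to the claim.

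The obstacle here is bookkeeping rather than conceptual. The statement is implicitly restricted to the high-probability event of \cref{lem:theta_est_glm} (its complement has probability at most $2\delta/K^2$ per pair and only adds lower-order terms to the regret, so this restriction is harmless downstream); one must also ensure $\Vths$ is invertible so that $\lambda_{min}(\Vths)>0$ and the $\Vinvhs$-norm is well defined, which is guaranteed by the forced exploration over the first $m$ rounds together with \cref{lem:min_eigen_lb} and the standing requirement $\lambda_{min}(V_{ij}^{m+1}) \ge 1$. A minor related point is reconciling the $t$ versus $t-1$ superscripts appearing on $\alpha$ and $V$ inside the algorithm's update rules with those in the statement; this is routine given the monotonicity of $t \mapsto \alpha_{ij}^t$.
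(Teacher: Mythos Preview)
Your proof is correct and follows essentially the same route as the paper: translate the preference event into $\tpihts - \pihts > \xi_{\ist h}(x_t)$ via the $\CWD$ identity, then bound the overshoot using Lipschitzness of $\mu$, Cauchy--Schwarz in the $\Vths$-weighted norm, the confidence ellipsoid from \cref{lem:theta_est_glm}, and finally $\norm{\Phi_{\ist h}(x_t)}_{\Vinvhs} \le 1/\sqrt{\lambda_{min}(\Vths)}$. The paper's version is presented through an indicator-function chain but uses exactly the same ingredients in the same order; your remarks about the high-probability event and invertibility of $\Vths$ are sound and slightly more explicit than the paper's.
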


Let $m \doteq C\lambda_\Sigma^{-2}\left(d^\prime+\log(k^2/2\delta)\right) + 2\lambda_\Sigma^{-1}$, where $C>0$ is the universal constant and $R_{max} \doteq \max_{i\in [K], x\in \mathcal{X}}$ $ \left[C_i +  \gamma_{i}(x) - \left( C_{i^\star} + \gamma_{i^\star}(x)\right)\right]$, where $i^\star$ is the optimal arm for context $x$.
Now we state the regret upper bound of \ref{alg:CUSS_WD}.
\begin{restatable}[Regret Upper Bound]{thm}{cumRegGLM}
	\label{thm:cum_reg_glm}
	Let $P \in \PCWD$, $\delta \in (0,1)$, Assumption 1 holds, and $\xi_h = \min\limits_{t \ge 1} \xi_{i^\star_t h}(x_t)$. Then with probability at least $1-2\delta$, the regret of \ref{alg:CUSS_WD} for $T > m$ contexts is
	\begin{align*}
		\Regret_T &\le R_{max}\Bigg[m + \sum_{h=2}^{K} \Bigg(\hspace{-1mm} \Bigg(\frac{C_1\sqrt{d^\prime} + C_2\sqrt{\log \left(\frac{K^2}{2\delta}\right)}}{\lambda_{\Sigma}}\Bigg)^2  \hspace{-2mm} + \frac{16}{\lambda_{\Sigma}}\\
		&\qquad\left(\frac{k_\mu\sigma}{\xi_{h}\kappa}\right)^2 \left(\frac{d^\prime}{2}\log\left(1 + \frac{2T}{d^\prime} \right) + \log\left(\frac{K^2}{2\delta}\right) \right)\Bigg)\Bigg].
	\end{align*} 
\end{restatable}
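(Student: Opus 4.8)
The plan is to decompose the regret into a burn-in phase and a steady-state phase, and in the steady-state phase bound the number of times each sub-optimal arm is selected by controlling the bad events identified in Lemmas \ref{lem:subOptimalLowerSelection} and \ref{lem:subOptimalUpperSelection}. First I would note that on the first $m$ rounds the algorithm deterministically plays arm $K$, contributing at most $R_{max}\,m$ to the regret. For $t>m$ I would condition on the high-probability event (probability at least $1-2\delta$, obtained by a union bound over the $O(K^2)$ pairs using Lemma \ref{lem:theta_est_glm} and Lemma \ref{lem:min_eigen_lb}, each at confidence $2\delta/K^2$) on which (i) all confidence ellipsoids hold, i.e. $\norm{\Te-\Ts}_{\Vt}\le\alpha_{ij}^t$ for all $t>m$, and (ii) $\lambda_{min}(V_{ij}^{m+1})\ge 1$. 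On this event, for any context whose optimal arm is $\ist$, Lemma \ref{lem:subOptimalLowerSelection} rules out (up to the $\delta$-slack already absorbed) the algorithm preferring an arm to the left of $\ist$, so the only way a sub-optimal arm is pulled is that some arm $h>\ist$ is preferred; by \ref{lem:BContx} and the structure of the while-loop this forces $I_t\le h$ and the regret on that round is at most $R_{max}$.

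Next I would fix an arm $h\in\{2,\dots,K\}$ and bound the number of rounds in which $h$ can be \emph{preferred} over whatever arm plays the role of $i^\star_t$. By Lemma \ref{lem:subOptimalUpperSelection}, whenever $h$ is preferred over $\ist$ for context $x_t$ we have
\begin{equation*}
2k_\mu\,\alpha_{\ist h}^t > \xi_{\ist h}(x_t)\sqrt{\lambda_{min}(V_{\ist h}^t)} \ge \xi_h \sqrt{\lambda_{min}(V_{\ist h}^t)},
\end{equation*}
where $\xi_h=\min_{t\ge1}\xi_{i^\star_t h}(x_t)$. Since every such round also produces an observation of the pair $(\ist,h)$ (the cascade observes arms $1,\dots,I_t$ and $I_t\le h$ so, after a small bookkeeping argument, the pair $(\ist,h)$ counter increments), the matrix $V_{\ist h}^t$ grows. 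Rearranging gives $\lambda_{min}(V_{\ist h}^t) < \bigl(2k_\mu\alpha_{\ist h}^t/\xi_h\bigr)^2$, and plugging in the explicit form $\alpha_{ij}^t = \frac{2\sigma}{\kappa}\sqrt{\frac{d'}{2}\log(1+\frac{2t}{d'})+\log(K^2/2\delta)}$ from Lemma \ref{lem:theta_est_glm} bounds $\lambda_{min}$ by $16\bigl(\tfrac{k_\mu\sigma}{\xi_h\kappa}\bigr)^2\bigl(\tfrac{d'}{2}\log(1+\tfrac{2T}{d'})+\log(\tfrac{K^2}{2\delta})\bigr)$. Then I would invoke Lemma \ref{lem:min_eigen_lb} in the ``iff'' direction: the number of disagreement observations of the pair $(\ist,h)$ needed to push $\lambda_{min}(V_{\ist h}^t)$ above any threshold $\Psi$ is at most $\bigl(\tfrac{C_1\sqrt{d'}+C_2\sqrt{\log(K^2/2\delta)}}{\lambda_\Sigma}\bigr)^2+\tfrac{2\Psi}{\lambda_\Sigma}$ (using $\lambda_{min}(\Sigma_{ij})\ge\lambda_\Sigma$). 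Taking $\Psi$ equal to the bound on $\lambda_{min}$ above shows arm $h$ can be preferred only for a number of rounds bounded by exactly the bracketed quantity in the theorem; summing over $h=2,\dots,K$ and multiplying by $R_{max}$, and adding the $R_{max}\,m$ burn-in term, yields the stated bound. Finally I would record that with the choice $m\doteq C\lambda_\Sigma^{-2}(d'+\log(K^2/2\delta))+2\lambda_\Sigma^{-1}$, Lemma \ref{lem:min_eigen_lb} guarantees $\lambda_{min}(V_{ij}^{m+1})\ge1$ for all pairs with the required confidence, so the conditioning event in Lemma \ref{lem:theta_est_glm} is valid.

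The main obstacle I anticipate is the careful accounting of \emph{which} pair's correlation matrix actually receives an update on a ``bad'' round and translating ``arm $h$ is preferred over $\ist$'' into monotone growth of $V_{\ist h}^t$: because $\ist$ varies with the context, one must either argue uniformly over $\ist<h$ or, more cleanly, observe that whenever the while-loop advances past position $i$ toward $h$ it is because $h$ (or some arm between) violated the stopping test, and the cascade then observes all of arms $1,\dots,I_t$, so every pair $(i,h')$ with $i<h'\le I_t$ is updated; one then needs that on every round counted against $h$, the specific pair used in Lemma \ref{lem:subOptimalUpperSelection} is among those updated. A secondary subtlety is that $\lambda_{min}(V_{\ist h}^t)$ counts only disagreement observations (the set $S_{ij}^t$), not all observations of the pair, so the lower-bound direction of Lemma \ref{lem:min_eigen_lb} must be applied to $|S_{\ist h}^t|$ and then related back to the number of rounds — this is handled since each counted round contributes at most one to the round-count regardless, so the round-count is itself bounded by $|S_{\ist h}^t|$ plus the agreement rounds, and the latter are absorbed into the same per-pair budget. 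Once these bookkeeping points are pinned down, the remaining steps are routine substitution of the explicit constants.
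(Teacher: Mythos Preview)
Your proposal follows essentially the same route as the paper: split into the first $m$ forced-exploration rounds and the rest, condition on the high-probability event from Lemmas \ref{lem:theta_est_glm} and \ref{lem:min_eigen_lb}, use Lemma \ref{lem:subOptimalLowerSelection} to dismiss arms to the left of $i^\star_t$, use Lemma \ref{lem:subOptimalUpperSelection} for arms to the right, and then invoke Lemma \ref{lem:min_eigen_lb} with $\Psi = (2k_\mu\alpha_T/\xi_h)^2$ to bound the number of times each $h$ can cause trouble.

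The one place you differ from (and are a bit looser than) the paper is precisely the bookkeeping you flag as the main obstacle. You write that when $h\succ_t i^\star_t$ ``the cascade observes arms $1,\dots,I_t$ and $I_t\le h$ so the pair $(i^\star_t,h)$ counter increments,'' but if $I_t<h$ the pair $(i^\star_t,h)$ is \emph{not} updated, so the inequality from Lemma \ref{lem:subOptimalUpperSelection} need not ever be violated for that pair. The paper sidesteps this by switching back from preferences to selections: it bounds $\sum_t \one{I_t = h,\, i^\star_t < h}$ directly and uses the structural observation that whenever $I_t=h$, \emph{every} pair $(i,j)$ with $i<j\le h$ receives an update (in particular $(i,h)$ for all $i<h$), so after $n_h^T$ such selections the eigenvalue threshold is exceeded uniformly over all $i<h$. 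This is exactly the ``argue uniformly over $i^\star_t<h$'' option you mention in your obstacle paragraph but do not carry out; the paper takes that branch rather than tracking a single moving pair $(i^\star_t,h)$. Your secondary concern about $S_{ij}^t$ counting only disagreement rounds is a misreading of the (admittedly awkward) phrasing: in the algorithm $V_{ij}^t$ is incremented on every round the pair $(i,j)$ is observed, regardless of whether $d_{ij}(t)=1$, and Lemma \ref{lem:min_eigen_lb} is stated for that matrix.
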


\begin{restatable}{cor}{orderCumRegGLM}
	\label{cor:cum_reg_glm}
	Let technical conditions stated in \cref{thm:cum_reg_glm} hold. Then with probability at least $1-2\delta$
	\begin{equation*}
        \Regret_T \le O\left({Kd^\prime\log(T)}/{\xi^2}\right).	    
	\end{equation*}
\end{restatable}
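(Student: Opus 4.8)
\textbf{Proof plan for Corollary \ref{cor:cum_reg_glm}.}

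The plan is to start from the explicit regret bound of Theorem \ref{thm:cum_reg_glm} and simply track the dependence of each term on $T$, $K$, $d'$, and $\xi$, treating all problem-dependent constants ($\lambda_\Sigma$, $\kappa$, $\sigma$, $k_\mu$, $R_{max}$, $C_1$, $C_2$, $\delta$) as absolute constants absorbed into the big-$O$. First I would recall that, by definition, $m \doteq C\lambda_\Sigma^{-2}(d' + \log(K^2/2\delta)) + 2\lambda_\Sigma^{-1}$, so $m = O(d' + \log K)$, which is dominated by the per-arm contributions and in particular is $O(Kd'\log T / \xi^2)$ since $\xi \le 1$ and $\log T \ge 1$ for $T$ large. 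Next, inside the sum over $h = 2, \dots, K$, the first term $\big(\tfrac{C_1\sqrt{d'} + C_2\sqrt{\log(K^2/2\delta)}}{\lambda_\Sigma}\big)^2 + \tfrac{16}{\lambda_\Sigma}$ is $O(d' + \log K) = O(d')$ per arm (again absorbing $\log K$ into $d'$ or noting it is lower order), contributing $O(Kd')$ overall. The dominant term is the second one: for each $h$,
\begin{equation*}
\frac{16}{\lambda_\Sigma}\left(\frac{k_\mu \sigma}{\xi_h \kappa}\right)^2\left(\frac{d'}{2}\log\left(1 + \frac{2T}{d'}\right) + \log\left(\frac{K^2}{2\delta}\right)\right) = O\!\left(\frac{d'\log T}{\xi_h^2}\right),
\end{equation*}
where I use $\log(1 + 2T/d') = O(\log T)$ and $\log(K^2/2\delta) = O(\log K)$ is lower order than $d'\log T$.

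Then I would lower-bound each $\xi_h$ by $\xi$. Recall $\xi = \inf_{x \in \mathcal{X}} \xi(x)$ where $\xi(x) = \min_{j > i^\star_x}\{C_j - C_{i^\star_x} - \Prob{Y^{i^\star_x} \ne Y^j \mid X = x}\}$, and $\xi_h = \min_{t \ge 1}\xi_{i^\star_t h}(x_t)$; since each $\xi_{i^\star_t h}(x_t) \ge \xi(x_t) \ge \xi$, we get $\xi_h \ge \xi$ for every $h$, hence $1/\xi_h^2 \le 1/\xi^2$. Summing the dominant term over the $K-1$ suboptimal arms yields $O(Kd'\log T / \xi^2)$, and multiplying the whole bracket by $R_{max} = O(1)$ preserves this. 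Collecting the three contributions ($m$, the first per-arm term, and the dominant per-arm term) and noting the third dominates, we conclude $\Regret_T \le O(Kd'\log T / \xi^2)$ with probability at least $1 - 2\delta$, as claimed.

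The only mild subtlety — not really an obstacle — is making sure the logarithmic and $\log K$ terms are genuinely lower-order so that they can be swept into the stated bound; this is immediate for $T$ large enough and for $\xi \le 1$, which holds by the $\CWD$ assumption's alternative characterization (if $\xi$ could exceed $1$ the bound would only improve). Everything else is a direct substitution into Theorem \ref{thm:cum_reg_glm}, so the corollary follows with essentially no new work beyond bookkeeping.
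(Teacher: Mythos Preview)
Your proposal is correct and matches the paper's (implicit) argument: the corollary is obtained by direct simplification of the bound in \cref{thm:cum_reg_glm}, treating $R_{\max},\lambda_\Sigma,\kappa,\sigma,k_\mu,C_1,C_2,\delta$ as constants and replacing each $\xi_h$ by the uniform lower bound $\xi$. The only cosmetic difference is that the paper, in the discussion immediately following the corollary, takes $\xi \doteq \min_{h\ge 2}\xi_h$ directly, whereas you route through the earlier definition $\xi=\inf_{x\in\cX}\xi(x)$ and argue $\xi_h\ge\xi$; both choices give the same $O(Kd'\log T/\xi^2)$ conclusion.
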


The regret of \ref{alg:CUSS_WD} for instance $P \in \PCWD$ is logarithmic in $T$ and grows linearly with $d^\prime$ and $K$. The regret is inversely dependent on the value of $\xi \doteq \min\limits_{h\ge 2} \xi_h$ (measure how well $\CWD$ holds), which implies the problem instance with smaller $\xi$ has more regret and vice-versa. The value of $\xi$ is analogous to the minimum sub-optimality gap in the standard Multi-Armed Bandits setting. With a large context set, $\xi$ can be small, and its inverse relation in the regret captures the difficulty of the USS problem.    
\nocite{CODS18_verma2018unsupervised,NeurIPS20_verma2020online}

\section{Experiment}
\label{sec:contxuss_experiments}

We evaluate the performance of \ref{alg:CUSS_WD} on different problem instances derived from synthetic and real datasets. In our experiments, the data samples are treated as contexts. The labels of contexts are known but are never revealed to the algorithm. We use the labels to train classifiers offline that act as arms. Arm $i$ represents a logistic classifier with trained parameter $\theta_i$. A context (data sample) $x$ is assigned label $1$ from the $i$-th classifier with probability $\mu(x^\top\theta_i)$ and label $0$ with probability $1-\mu(x^\top\theta_i)$. The disagreement labels for $(i,j)$ pair is computed using the labels of classifier $i$ and $j$. To satisfy \cref{equ:disProb}, we use the polynomial kernel of degree two for mapping context into higher-dimensional space. Unlike other kernels, the polynomial kernel uses a well-defined feature map to lift the contexts into fixed, higher-dimensional space. 
The details of the used problem instances are as follows.

\noindent
\textbf{Synthetic Dataset:} We consider $3$-dimensional synthetic dataset with $5000$ data samples. Each sample is represented by $x=(x_1, x_2, x_3)$, where the value of $x_j$ is drawn uniformly at random from $(-1, 1)$. A sample $x$ is labeled $0$ if the value of $(x_1 + x_1x_2 + x_3^2)$ is negative otherwise it is labeled $1$. We train five logistic classifiers on this synthetic dataset by varying the regularization parameter.  We then assign a positive cost to each classifier and order them by their increasing cost. We vary the cost of using classifiers to get different problem instances (see details in \cref{asec:contxuss_moreExperiments}). 

\noindent
\textbf{Real Datasets:} We applied our algorithm on PIMA Indian Diabetes \citep{UCI16_pima2016kaggale} dataset. Each sample has $8$ features related to the conditions of the patient. We split the features into three subsets and train a logistic classifier on each subset. We associate 1st classifier with the first $6$ features as input. These features include patient history/profile. The 2nd classifier, in addition to the $6$ features, utilizes the feature on the glucose tolerance test, and the 3rd classifier uses all the previous features and the feature that gives values of insulin test. Due to space constraints, the experiment results on Heart Disease dataset \citep{HEART98_robert1988va, UCI17_Dua2017} are given in \cref{asec:contxuss_moreExperiments}.

\subsection{Experiments Results} 
We compare the performance of \ref{alg:CUSS_WD} on four problem instances derived from the synthetic dataset. The instances vary based on the cost of arms. All contexts in Instance $1$ do not satisfy $\WD$ property; hence it suffers linear regret as shown in \cref{fig:syn}. For the remaining instances, we set costs such that the value of $\xi$ increasing from Instance $2$ to $4$. As expected, the regret decreases from Instance $2$ to $4$, as seen in \cref{fig:syn}. We also compare \ref{alg:CUSS_WD} against an algorithm where the learner receives true labels as feedback. In particular, the learner knows whether the classifier's output is correct or not and can estimate their error rates. We implement this `supervised' setting by replacing disagreement probability in \cref{eq:selectDisProbHighContx} with estimated error rates. As expected, the regret with supervision has lower than the \ref{alg:CUSS_WD} regret (unsupervised) in \cref{fig:supComp}. It is qualitatively interesting because these plots demonstrate that, in typical cases, our unsupervised algorithm can eventually learn to perform as good as an algorithm with knowledge of true labels. 

\begin{figure}[!ht]
	\captionsetup[subfigure]{justification=centering}
	\begin{subfigure}[b]{0.30\linewidth}
		\centering
		\includegraphics[scale=0.30]{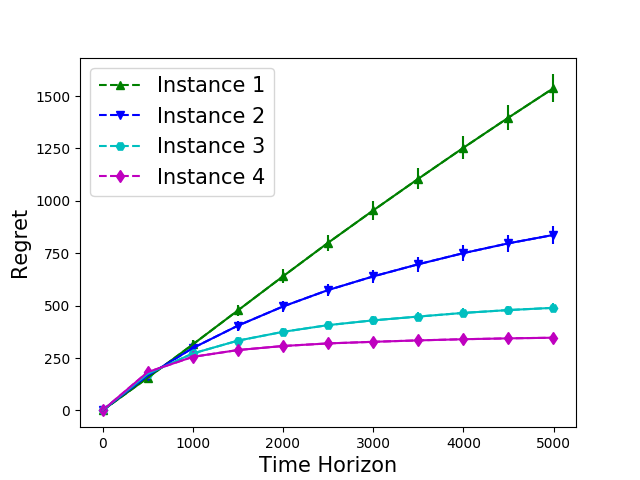}
		\caption{\small Synthetic dataset: Regret for different instance}
		\label{fig:syn}
	\end{subfigure}
	\quad
	\begin{subfigure}[b]{0.30\linewidth}
		\centering
		\includegraphics[scale=0.30]{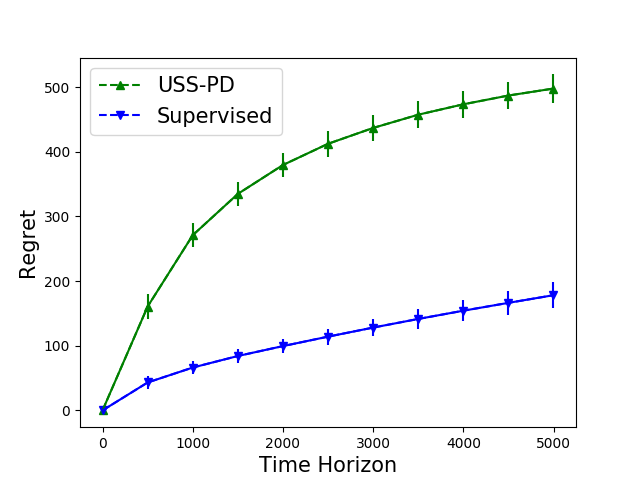}
		\caption{\small Supervised Setting (PI 3 of Synthetic Dataset)}
		\label{fig:supComp}
	\end{subfigure}
	\quad
	\begin{subfigure}[b]{0.275\linewidth}
		\centering
		\includegraphics[scale=0.275]{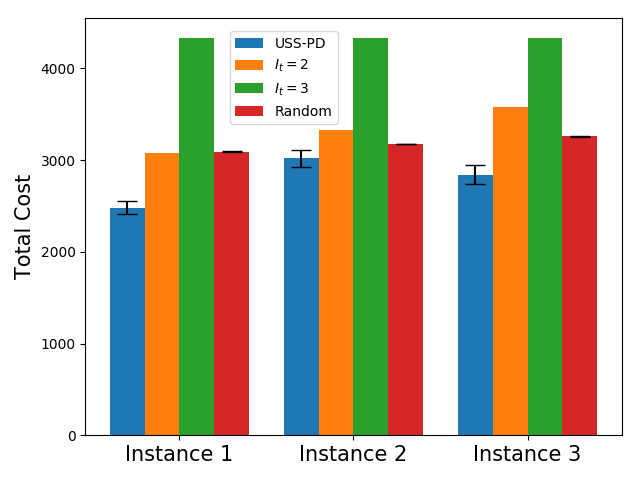}
		\caption{\small Total cost for PIMA Indian Diabetes dataset}
		\label{fig:diabetes}
	\end{subfigure}	
	\caption{\small Performance of \ref{alg:CUSS_WD} on different problem instances (PIs) derived from synthetic and real datasets.}
	\label{fig:ussExp}
\end{figure}

We derive three problem instances from PIMA Indian Diabetes dataset by varying the costs of using classifies. Since all contexts of these problem instances do not satisfy $\WD$ property (see details in \cref{asec:contxuss_moreExperiments}), we used cumulative total expected cost as a performance measure, where the cumulative total expected cost is given by $\sum_{t=1}^T(\gamma_{I_t}(x_t) + C_{I_t})$. We compare the performance of \ref{alg:CUSS_WD} with three baseline policies -- the first baseline policy uses the third classifier irrespective of contexts, and it is denoted as policy `$I_t=3$' (plays arm $3$ in each round). The second baseline policy uses the second classifier for all contexts, and it is denoted as policy `$I_t=2$'. The third baseline policy is `Random,' which selects an arm uniformly at random in each round. In all three problem instances, we observe that \ref{alg:CUSS_WD} performs better than the baselines, as shown in \cref{fig:diabetes}.

We repeat each of the above experiments $100$ times, and then the average regret is presented with a $95$\% confidence interval. The vertical line on each plot shows the confidence interval.

\section{Appendix}
\label{sec:contxuss_appendix}

\subsection*{Missing proofs from \cref{sec:contxuss_problem_setting}}

\ContxErrProbContx*
\begin{proof} 
	Using definition of $\gamma_i(\xt) \doteq \Prob{\Yti \neq \Yt|X=\xt}$, we get	
	\begin{align*}
		\gamma_i(\xt) - \gamma_j(\xt) &= \Prob{\Yti \neq \Yt|X=\xt} - \Prob{\Ytj \neq \Yt|X=\xt}. 
	\end{align*}
	As the observed feedback is binary, if $\Yti = \Ytj$ and $\Yti \ne \Yt$ then $\Ytj \ne \Yt$,
	\begin{align*}
		\gamma_i(\xt) - \gamma_j(\xt)&= \cancel{\Prob{\Yti \neq\Yt, \Yti = \Ytj|X=\xt}} + \Prob{\Yti \neq\Yt, \Yti \ne \Ytj|X=\xt} \\
		&- \cancel{\Prob{\Ytj \neq\Yt, \Yti = \Ytj|X=\xt}} - \Prob{\Ytj \neq\Yt, \Yti \ne \Ytj|X=\xt}.
	\end{align*}
	Adding and subtracting $\Prob{\Yti =\Yt, \Yti \ne \Ytj|X=\xt}$,
	\begin{align*}
		\gamma_i(\xt) - \gamma_j(\xt)&= \Prob{\Yti \neq\Yt, \Yti \ne \Ytj|X=\xt} + \Prob{\Yti =\Yt, \Yti \ne \Ytj|X=\xt} \\
		& - \Prob{\Ytj \neq\Yt, \Yti \ne \Ytj|X=\xt} - \Prob{\Yti =\Yt, \Yti \ne \Ytj|X=\xt}. 
	\end{align*}
	If $\Yti \ne \Ytj$ and $\Ytj \ne\Yt$ then $\Yti =\Yt$,
	\begin{align*}
		\gamma_i(\xt) - \gamma_j(\xt)&= \Prob{\Yti \ne \Ytj|X=\xt} - \Prob{\Yti =\Yt, \Yti \ne \Ytj|X=\xt}\\
		&\qquad - \Prob{\Yti =\Yt, \Yti \ne \Ytj|X=\xt} \\
		&= \Prob{\Yti \neq \Ytj|X=\xt} -2\Prob{\Yti =\Yt, \Ytj \ne \Yt|X=\xt}. \\
		\implies \gamma_i(x_t^i) -\gamma_j(x_t^j) & = \pijt -2\Prob{\Yti =\Yt, \Ytj \ne \Yt|X=\xt}. \tag*{\qedhere}
	\end{align*}
\end{proof}

\SetBContx*
\begin{proof}
	Let $i_t^\star$ be an optimal arm for a context $x_t$. As $\pijt =\mathbb{P}\{\Yti \ne\Ytj |X=\xt \}$ and $i_t^\star$ is an optimal arm, we have $\forall j<i_t^\star:\, C_{i_t^\star} - C_j \le \mathbb{P}\{\Yt^{i_t^\star}\ne\Yt^j |X=\xt \} \implies C_{i_t^\star} - C_j \ngtr \mathbb{P}\{\Yt^{i_t^\star}\ne\Yt^j |X=\xt \} \implies \forall j<i_t^\star \notin \Bt$.  If any sub-optimal arm $h \in \Bt$ then $h > i_t^\star$ i.e.,
	\eqs{
		\Bt = \{i_t^\star, h_1, \ldots, h_n, K\},
	} 	
	where $i_t^\star < h_1 < \cdots < h_n < K$.	By construction of set $\Bt$, the minimum indexed arm in set $\Bt$ is only the optimal arm.
\end{proof}

\noindent
We need the following results to proof of \cref{thm:learnCWD}.
\begin{lem}
	\label{lem:xCostRange1}
	Let $i<j$ and $x_t \in \cX$ be any context. Assume 
	\begin{equation}
		\label{eqn:xCostRange1}
		C_j -C_i \notin \left (\gamma_i(x_t)-\gamma_j(x_t), \Prob{\Yti \ne \Ytj|X=\xt}\right].
	\end{equation}
	Then, $C_j-C_i >  \gamma_i(x_t)-\gamma_j(x_t) $ iff $C_j-C_i > \Prob{\Yti \ne \Ytj|X=\xt}$.
\end{lem}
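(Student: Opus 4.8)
The claim is a two-way equivalence under the hypothesis that $C_j - C_i$ avoids the half-open interval $\left(\gamma_i(x_t) - \gamma_j(x_t),\, \Prob{\Yti \ne \Ytj|X=\xt}\right]$. The plan is to prove each implication separately, mirroring the non-contextual argument in \cref{prop:CostRange1} (here we no longer need the $\max\{0,\cdot\}$ because for $i<j$ we are not asserting any sign on $\gamma_i(x_t) - \gamma_j(x_t)$; it simply sits as one endpoint of the excluded interval, possibly negative). The key fact to keep in hand throughout is the elementary inequality
\[
\Prob{\Yti \ne \Ytj|X=\xt} \;\ge\; \gamma_i(x_t) - \gamma_j(x_t),
\]
which follows directly from \cref{lem:ContxErr_prob_contx}: since $\gamma_i(x_t) - \gamma_j(x_t) = \pijt - 2\Prob{\Yti = \Yt, \Ytj \ne \Yt|X=\xt}$ and the subtracted probability is nonnegative, we get $\gamma_i(x_t) - \gamma_j(x_t) \le \pijt = \Prob{\Yti \ne \Ytj|X=\xt}$. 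So the excluded interval is genuinely an interval (its left endpoint does not exceed its right endpoint), and the two endpoints are the only two ``thresholds'' $C_j - C_i$ could cross.

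First I would prove the forward direction: assume $C_j - C_i > \gamma_i(x_t) - \gamma_j(x_t)$. Then $C_j - C_i$ is strictly greater than the left endpoint of the excluded interval; by the hypothesis \cref{eqn:xCostRange1} it cannot lie in the interval itself, so it must exceed the right endpoint as well, i.e. $C_j - C_i > \Prob{\Yti \ne \Ytj|X=\xt}$. (One must be slightly careful: being $>$ the left endpoint and $\notin (\text{left}, \text{right}]$ forces $> \text{right}$, since the only way to be above the left endpoint without being in $(\text{left},\text{right}]$ is to be strictly above $\text{right}$.) For the reverse direction, assume $C_j - C_i > \Prob{\Yti \ne \Ytj|X=\xt}$. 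Then using the elementary inequality above, $C_j - C_i > \Prob{\Yti \ne \Ytj|X=\xt} \ge \gamma_i(x_t) - \gamma_j(x_t)$, which gives $C_j - C_i > \gamma_i(x_t) - \gamma_j(x_t)$ immediately — this direction does not even need the exclusion hypothesis.

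I do not anticipate a real obstacle here; the statement is a short interval-chasing argument once \cref{lem:ContxErr_prob_contx} is invoked. The only point requiring a moment of care is the boundary behavior: the excluded set is half-open $(\cdot,\cdot]$, so in the forward direction the conclusion is the strict inequality $C_j - C_i > \Prob{\Yti \ne \Ytj|X=\xt}$ rather than $\ge$, and I should make sure the logic ``$x > a$ and $x \notin (a,b]$ implies $x > b$'' is spelled out correctly (it would fail if the interval were closed on the left). With that settled the proof is complete.
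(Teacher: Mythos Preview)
Your proposal is correct and follows essentially the same approach as the paper: the forward direction uses the exclusion hypothesis \cref{eqn:xCostRange1} to push $C_j-C_i$ past the right endpoint, and the reverse direction invokes the inequality $\Prob{\Yti \ne \Ytj|X=\xt} \ge \gamma_i(x_t)-\gamma_j(x_t)$ (via \cref{lem:ContxErr_prob_contx}) without needing the exclusion hypothesis at all. Your write-up is actually a bit more careful than the paper's about the half-open boundary and why the interval is nondegenerate, which is a nice touch.
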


\begin{proof}
	Assume that $C_j-C_i >  \gamma_i(x_t)-\gamma_j(x_t)$. As $C_j -C_i \notin \Big(\gamma_i(x_t)-\gamma_j(x_t),$ $\Prob{\Yti \ne \Ytj|X=\xt}\Big]$, we get $C_j-C_i > \Prob{\Yti \ne \Ytj|X=\xt}$.
	The proof of other direction follows by noting that  $ \Prob{\Yti \ne \Ytj|X=\xt}\geq  \gamma_i(x_t)-\gamma_j(x_t)$.
\end{proof}

\begin{lem}
	\label{lem:xCostRange2}
	Let $i>j$ and $x_t \in  \cX$ be any context. Assume 
	\begin{equation}
		\label{eqn:xCostRange2}
		C_i -C_j \notin \left (\gamma_j(x_t)-\gamma_i(x_t), \Prob{\Yti \ne \Ytj|X=\xt}\right ].
	\end{equation}
	Then, $C_i-C_j \leq \gamma_j(x_t)-\gamma_i(x_t) $ iff $C_j-C_i \leq \Prob{\Yti \ne \Ytj|X=\xt}$.
\end{lem}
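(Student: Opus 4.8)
The plan is to mirror the proof of the non-contextual analogue, Proposition~\ref{prop:CostRange2}, substituting the context-conditioned error rates $\gamma_i(x_t),\gamma_j(x_t)$ and the conditional disagreement probability $\Prob{\Yti \ne \Ytj|X=\xt}$ for the marginal quantities, and using Lemma~\ref{lem:ContxErr_prob_contx} in place of the marginal identity of Proposition~\ref{lem:err_prob_contx}. The single ingredient needed is that the conditional disagreement probability dominates the signed gap of conditional error rates: applying Lemma~\ref{lem:ContxErr_prob_contx} with the roles of $i$ and $j$ interchanged gives $\gamma_j(x_t)-\gamma_i(x_t) = \Prob{\Yti \ne \Ytj|X=\xt} - 2\Prob{\Ytj = \Yt,\ \Yti \ne \Yt|X=\xt}$, and since the subtracted probability is nonnegative, $\gamma_j(x_t)-\gamma_i(x_t) \le \Prob{\Yti \ne \Ytj|X=\xt}$.

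For the ``only if'' direction, suppose $C_i-C_j \le \gamma_j(x_t)-\gamma_i(x_t)$. Combining with the dominance inequality just noted yields $C_i-C_j \le \Prob{\Yti \ne \Ytj|X=\xt}$; since $i>j$ and the cumulative costs $C_1 \le C_2 \le \cdots \le C_K$ are nondecreasing, $C_j-C_i \le 0 \le C_i-C_j$, so $C_j-C_i \le \Prob{\Yti \ne \Ytj|X=\xt}$ follows, and in particular the informative inequality $C_i-C_j \le \Prob{\Yti \ne \Ytj|X=\xt}$ holds.

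For the ``if'' direction, the inequality $C_i-C_j \le \Prob{\Yti \ne \Ytj|X=\xt}$ places $C_i-C_j$ in $(-\infty,\ \Prob{\Yti \ne \Ytj|X=\xt}]$, while hypothesis~\eqref{eqn:xCostRange2} excludes the half-open sub-interval $(\gamma_j(x_t)-\gamma_i(x_t),\ \Prob{\Yti \ne \Ytj|X=\xt}]$; what remains is $(-\infty,\ \gamma_j(x_t)-\gamma_i(x_t)]$, i.e. $C_i-C_j \le \gamma_j(x_t)-\gamma_i(x_t)$, as required. I do not expect any real obstacle: the argument is a short interval-exclusion deduction. The only points demanding care are (i) correctly orienting the inequality $\gamma_j(x_t)-\gamma_i(x_t) \le \Prob{\Yti \ne \Ytj|X=\xt}$ extracted from Lemma~\ref{lem:ContxErr_prob_contx}, and (ii) invoking monotonicity of the cumulative costs to pass between $C_i-C_j$ and $C_j-C_i$ so that the conclusion is stated in the form used later in the proof of Theorem~\ref{thm:learnCWD} (together with Lemma~\ref{lem:xCostRange1}), where $\Prob{\Yti \ne \Ytj|X=\xt}$ serves as an observable proxy for the unobservable $\gamma_i(x_t)-\gamma_j(x_t)$.
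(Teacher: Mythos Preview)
Your proposal is correct and follows essentially the same argument as the paper: establish $\gamma_j(x_t)-\gamma_i(x_t)\le \Prob{\Yti\ne\Ytj\mid X=x_t}$ (the paper asserts this directly, you derive it from Lemma~\ref{lem:ContxErr_prob_contx}), chain inequalities for the forward direction, and use the interval exclusion for the reverse. Like the paper's own proof, you effectively work with $C_i-C_j\le \Prob{\Yti\ne\Ytj\mid X=x_t}$ rather than the literal $C_j-C_i$ appearing in the statement; your remark on cost monotonicity makes this passage explicit, which the paper omits.
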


\begin{proof}
	Let $C_i-C_j \leq  \gamma_j(x_t)-\gamma_i(x_t) $. As $\gamma_j(x_t)-\gamma_i(x_t) \leq \Prob{\Yti \ne \Ytj|X=\xt}$, we get $C_i-C_j \leq \Prob{\Yti \ne \Ytj|X=\xt}$.
	
	The condition $C_i-C_j \leq \Prob{\Yti \ne \Ytj|X=\xt}$ along with $C_i -C_j \notin \Big(\gamma_j(x_t)-\gamma_i(x_t),$ $\Prob{\Yti \ne \Ytj|X=\xt}\Big]$ implies the other direction, i.e., $C_i-C_j \leq  \gamma_j(x_t)-\gamma_i(x_t) $. 
\end{proof}

\begin{lem}
	\label{lem:xWD2} 
	Let  $\ist$ be an optimal arm for a context $x_t$. Any problem instance $P \in \USS$ is learnable if for every context in $P$ following holds:
	\begin{equation*}
		\forall j>\ist,\; C_j -C_\ist >\Prob{\Yt^\ist \ne \Ytj|X=\xt}.
	\end{equation*}
\end{lem}
The proof of \cref{lem:xWD2} follows from \cref{lem:xCostRange1} and \cref{lem:xCostRange2}. 
Now we give proof for Theorem \ref{thm:learnCWD}.

\learnCWD*
\begin{proof}
	Let $\ist$ be an optimal arm for a context $x_t$. It is enough to prove that any problem instance $P \in \USS$  is learnable if
	\begin{align*}
		\forall j>\ist,\; C_j -C_\ist >\Prob{\Yt^\ist \ne \Ytj|X=\xt}. \hspace{4mm}\text{(definition of $\CWD$ property)}
	\end{align*} 
	From \cref{lem:xCostRange1} and \cref{lem:xCostRange2}, if the optimal arm satisfies following conditions,
	\begin{align*}
		&\forall j>\ist, C_j -C_\ist \notin \left (\gamma_\ist(x_t)-\gamma_j(x_t), \Prob{\Yt^\ist \ne \Ytj|X=\xt}\right] \text{ and}\\
		&\forall j<\ist, C_{i^\star} -C_j \notin \left (\gamma_j(x_t)-\gamma_\ist(x_t), \Prob{\Yt^\ist \ne \Ytj|X=\xt}\right ],
	\end{align*}
	
	then, for $j >\ist, C_j -C_\ist>\gamma_\ist(x_t) - \gamma_j(x)$	iff 	$C_j -C_\ist >\Prob{\Yt^\ist \ne \Ytj|X=\xt}$ and for  $j <\ist, C_{i^\star}-C_j\leq  \gamma_{j}(x) -\gamma_\ist(x_t)$	iff 	$C_j -C_\ist \leq \Prob{\Yt^\ist \ne \Ytj|X=\xt}$. Hence we can use $\Prob{\Yti \ne \Ytj|X=\xt}$ as a proxy for $\gamma_\ist(x) -\gamma_j(x)$ to make decision about the optimal arm.
	Now notice that for $j <\ist$, $C_{i^\star}-C_j \leq \gamma_{j}(x) -\gamma_\ist(x_t)$. Hence, 
	\begin{align*}
		& \forall j<\ist, C_{i^\star} -C_j \notin \left (\gamma_j(x_t)-\gamma_\ist(x_t), \Prob{\Yt^\ist \ne \Ytj|X=\xt}\right] \text{ and} \\
		& \forall j>\ist, C_j -C_\ist \notin \left (\gamma_\ist(x_t)-\gamma_j(x_t), \Prob{\Yt^\ist \ne \Ytj|X=\xt}\right] \numberthis \label{equ:CostNotin}
	\end{align*}
	are sufficient for learnability. Note that \cref{equ:CostNotin} is equivalent to 
	\begin{equation}
		\label{equ:CWD_Thm1}
		\forall j>\ist,\; C_j -C_\ist >\Prob{\Yt^\ist \ne \Ytj|X=\xt}. 
	\end{equation}
	Note that if \cref{equ:CWD_Thm1} does not hold, then knowing $\Prob{\Yt^\ist \ne \Ytj|X=\xt}$ is not sufficient for finding the optimal arm.
\end{proof}

\subsection*{Regret  decomposition when contexts satisfy Weak Dominance property with some known probability}
Without knowing the disagreement probability, it is impossible to check whether a context satisfies $\WD$ property or not. Hence we consider a case where a context can satisfy $\WD$ property with some fixed probability. For such cases, we can decompose the regret into two parts: regret due to the contexts that satisfy $\WD$ property and regret due to the contexts that do not satisfy $\WD$ property. Note that the regret can be linear due to the contexts that do not satisfy the $\WD$ condition. 
Our next result gives the upper bound on the regret where the contexts satisfy $\WD$ property with a known fixed probability.
\begin{lem}
	Let $\rho$ be the probability of context that it does not satisfy the $\WD$ property and $R_{max}$ be the maximum regret incurred for any context. If $\Regret_T$ is the regret incurred when all contexts satisfy $\WD$ property then, the regret incurred when contexts satisfy $\WD$ with probability $(1-\rho)$ is given by
	\begin{equation*}
	\Regret_T^\prime \le (1-\rho)\Regret_T + \rho R_{max}T.
	\end{equation*}
\end{lem}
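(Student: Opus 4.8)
The plan is to split the cumulative regret round by round according to whether the context drawn in that round satisfies the $\WD$ property, and to bound the two resulting sums using, respectively, the $\WD$ regret guarantee already established in \cref{thm:cum_reg_glm} and the crude per-round bound $R_{max}$.

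First I would write $\Regret_T^\prime = \sum_{t=1}^{T}\EE{r_t}$, where $r_t \doteq C_{I_t} + \gamma_{I_t}(x_t) - \big(C_{\ist} + \gamma_{\ist}(x_t)\big)$ is the instantaneous regret and $\ist$ the optimal arm for $x_t$. Let $W_t$ be the event that $x_t$ satisfies the $\WD$ property; since the contexts form an i.i.d.\ sequence, $\Prob{W_t^c} = \rho$ for every $t$. Conditioning on $W_t$,
\[
\EE{r_t} = (1-\rho)\,\EE{r_t \mid W_t} + \rho\,\EE{r_t \mid W_t^c}.
\]
On $W_t^c$ the instantaneous regret never exceeds $R_{max}$ by the very definition of $R_{max}$, hence $\EE{r_t \mid W_t^c} \le R_{max}$. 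Summing over $t=1,\dots,T$ gives
\[
\Regret_T^\prime \;\le\; (1-\rho)\sum_{t=1}^{T}\EE{r_t \mid W_t} \;+\; \rho\,R_{max}\,T .
\]
It then remains to show $\sum_{t=1}^{T}\EE{r_t \mid W_t} \le \Regret_T$, the regret of \ref{alg:CUSS_WD} on an instance in which every context satisfies $\WD$.

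The crux — and the step I expect to be the main obstacle — is justifying this last inequality. The point is that the presence of non-$\WD$ contexts does \emph{not} corrupt the learning process: the GLM relation \cref{equ:disProb} holds for \emph{all} contexts, so every round, $\WD$ or not, produces a legitimate zero-mean-noise observation that feeds the estimators $\hat\theta_{ij}^{t}$ and the design matrices $V_{ij}^{t}$. Consequently \cref{lem:min_eigen_lb} and \cref{lem:theta_est_glm} apply unchanged, and on a $\WD$ round the arm-selection analysis (\cref{lem:subOptimalLowerSelection} and \cref{lem:subOptimalUpperSelection}) controls $\EE{r_t \mid W_t}$ exactly as in the proof of \cref{thm:cum_reg_glm}; moreover the additional observations coming from the non-$\WD$ rounds can only shrink the confidence sets faster, so the $\WD$-round contributions can only be smaller than in the all-$\WD$ run. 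To make this fully rigorous I would avoid a round-by-round comparison of the two runs (whose algorithm states differ) and instead reuse the proof of \cref{thm:cum_reg_glm} directly: that proof bounds the number of sub-optimal selections through the thresholds on $|S_{ij}^{t}|$ coming from \cref{lem:min_eigen_lb} and \cref{lem:theta_est_glm}, and those thresholds depend only on the accumulated counts of observations, which are no smaller here than in the all-$\WD$ setting. Plugging the resulting bound into the last display yields $\Regret_T^\prime \le (1-\rho)\Regret_T + \rho R_{max} T$.
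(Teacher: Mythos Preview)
Your approach is essentially the same as the paper's: decompose the per-round regret according to the event $W_t$ that $x_t$ satisfies $\WD$, bound the non-$\WD$ contribution by $\rho R_{max} T$, and bound the $\WD$ contribution by $(1-\rho)\Regret_T$. The paper's argument is in fact considerably shorter than yours: it simply writes $\sum_t \Prob{x_t \text{ satisfies } \WD}\, r_t = (1-\rho)\sum_t r_t$ and then asserts that $\sum_t r_t = \Regret_T$ ``by assuming that all contexts are satisfying $\WD$ property,'' without revisiting the GLM lemmas at all.

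Your second half---arguing via \cref{lem:min_eigen_lb}, \cref{lem:theta_est_glm}, \cref{lem:subOptimalLowerSelection}, and \cref{lem:subOptimalUpperSelection} that non-$\WD$ rounds still yield valid observations and can only accelerate the shrinking of the confidence sets---is extra justification that the paper omits. It is a legitimate concern (the algorithm's trajectory on a mixed instance is not literally the same as on an all-$\WD$ instance), and your monotonicity argument is the natural way to close that gap. So your proposal is correct and follows the paper's route, but supplies a rigor step the paper leaves implicit.
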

\begin{proof}
	Let $\rho$ be the probability of context that it does not satisfy the $\WD$ property and $r_t(I_t, i^\star_t)$ be the regret incurred for selecting sub-optimal arm $I_t$ for the context $x_t$. Then the regret can be decomposed into two parts as follows:
	\begin{align*}
		\Regret_T^\prime &= \EE{\sum_{t=1}^T \left[\one{x_t \mbox{ satisfies } \WD} r_t(I_t, i^\star_t) + \one{x_t \mbox{ does not satisfy } \WD} r_t(I_t, i^\star_t) \right]}\\
		&= \EE{\sum_{t=1}^T \one{x_t \mbox{ satisfies } \WD} r_t(I_t, i^\star_t)} + \EE{\sum_{t=1}^T \one{x_t \mbox{ does not satisfy } \WD} r_t(I_t, i^\star_t)}\\
		&= \sum_{t=1}^T \Prob{x_t \mbox{ satisfies } \WD} r_t(I_t, i^\star_t) + \sum_{t=1}^T\Prob{x_t \mbox{ does not satisfy } \WD} r_t(I_t, i^\star_t) \numberthis \label{eqn:probRegretUB}.
	\end{align*}
	
	First, we will bound the regret due to the contexts that do not satisfy $\WD$ property (second term of \cref{eqn:probRegretUB}). Note that the context that does not satisfy $\WD$ property, the learner can not make the correct decision hence always incurs regret. Since the maximum regret is upper bounded by $R_{max}$, we have	
	\begin{align*}
		\sum_{t=1}^T\Prob{x_t \mbox{ does not satisfy } \WD} r_t(I_t, i^\star_t) &\le \sum_{t=1}^T\Prob{x_t \mbox{ does not satisfy } \WD} R_{max}
		\intertext{Since $\rho$ is the probability of context that it does not satisfy the $\WD$ property, we get }
		&= \sum_{t=1}^T \rho R_{max}\\
		\implies \sum_{t=1}^T\Prob{x_t \mbox{ does not satisfy } \WD} r_t(I_t, i^\star_t) &\le \rho R_{max}T. \numberthis \label{eqn:probRegretNoCWD}
	\end{align*}
	
	Now we will bound the regret due to the contexts which satisfy $\WD$ property (first term in \cref{eqn:probRegretUB}). Since any context satisfies $\WD$ with $1-\rho$ probability, we have
	\begin{align}
	\label{eqn:probRegretCWD}
	\sum_{t=1}^T \Prob{x_t \mbox{ satisfies } \WD} r_t(I_t, i^\star_t) = \sum_{t=1}^T (1-\rho)  r_t(I_t, i^\star_t) = (1-\rho)  \sum_{t=1}^T r_t(I_t, i^\star_t).
	\end{align}
	
	By assuming that all contexts are satisfying $\WD$ property, we have regret $\Regret_T = \sum_{t=1}^T  r_t(I_t, i^\star_t)$. Using it with \cref{eqn:probRegretNoCWD} in \cref{eqn:probRegretCWD}, we get
	\begin{equation*}
		\Regret_T^\prime \le (1-\rho)\Regret_T + \rho R_{max}T. \tag*{\qedhere}
	\end{equation*}
\end{proof}

\subsection*{Missing proofs from \cref{sec:contxuss_algorithm}}
\minEigenLB*
\begin{proof}
	The result is adapted from \cite[Proposition 1]{ICML17_li2017provably}, which uses the standard random matrix theory result from \cite[Theorem 5.39]{Book_vershynin_2012}. We need to carefully construct the sample complexity bound for our case as the observations are only observed for a pair of arms.
\end{proof}

\noindent
The following result is needed to prove \cref{lem:theta_est_glm}.
\begin{restatable}{lem}{detVt}
	\label{lem:detVt} 
	Let $\oVt = \lambda \mathrm{I}_{d^\prime} + a\sum_{s\in S_{ij}^t}\Phi_{ij}(x_s)\Phi_{ij}(x_s)^\top$  for any $(i,j)$ pair of arms, and $n_{ij}^t = |S_{ij}^t|$. Then 
	\begin{equation*}
		det(\oVt) \le  \left(\lambda + an_{ij}^t/d^\prime\right)^{d^\prime}.
	\end{equation*}
\end{restatable}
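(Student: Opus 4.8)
\textbf{Proof proposal for \cref{lem:detVt}.}

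The plan is to bound the determinant via the trace using the AM--GM inequality for eigenvalues. First I would observe that $\oVt = \lambda \mathrm{I}_{d^\prime} + a\sum_{s\in S_{ij}^t}\Phi_{ij}(x_s)\Phi_{ij}(x_s)^\top$ is a symmetric positive definite matrix, so it has $d^\prime$ positive real eigenvalues $\mu_1, \ldots, \mu_{d^\prime}$. Its determinant equals $\prod_{k=1}^{d^\prime}\mu_k$ and its trace equals $\sum_{k=1}^{d^\prime}\mu_k$. By the AM--GM inequality applied to the eigenvalues,
\begin{equation*}
	\det(\oVt) = \prod_{k=1}^{d^\prime}\mu_k \le \left(\frac{1}{d^\prime}\sum_{k=1}^{d^\prime}\mu_k\right)^{d^\prime} = \left(\frac{\operatorname{trace}(\oVt)}{d^\prime}\right)^{d^\prime}.
\end{equation*}

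Next I would compute the trace explicitly. By linearity of the trace, $\operatorname{trace}(\oVt) = \lambda\,\operatorname{trace}(\mathrm{I}_{d^\prime}) + a\sum_{s\in S_{ij}^t}\operatorname{trace}\left(\Phi_{ij}(x_s)\Phi_{ij}(x_s)^\top\right)$. The first term is $\lambda d^\prime$. For the summand, $\operatorname{trace}\left(\Phi_{ij}(x_s)\Phi_{ij}(x_s)^\top\right) = \norm{\Phi_{ij}(x_s)}_2^2 \le 1$ by the GLM assumption that $\norm{\Phi_{ij}(x)}_2 \le 1$ for all $x$. Hence $\operatorname{trace}(\oVt) \le \lambda d^\prime + a\,|S_{ij}^t| = \lambda d^\prime + a n_{ij}^t$. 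Substituting into the AM--GM bound gives
\begin{equation*}
	\det(\oVt) \le \left(\frac{\lambda d^\prime + a n_{ij}^t}{d^\prime}\right)^{d^\prime} = \left(\lambda + \frac{a n_{ij}^t}{d^\prime}\right)^{d^\prime},
\end{equation*}
which is the claimed inequality.

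There is no real obstacle here; the only point to be careful about is invoking the correct normalization assumption ($\norm{\Phi_{ij}(x)}_2 \le 1$ from Assumption 1) to control each rank-one term's trace, and noting that AM--GM applies because all eigenvalues are strictly positive (so the geometric mean is well defined and the inequality is valid). This lemma is then used downstream, together with \cref{lem:min_eigen_lb}, to instantiate the self-normalized concentration bound underlying the confidence ellipsoid in \cref{lem:theta_est_glm}.
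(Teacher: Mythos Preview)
Your proof is correct and follows essentially the same approach as the paper: apply AM--GM to the eigenvalues to get $\det(\oVt)\le(\operatorname{trace}(\oVt)/d^\prime)^{d^\prime}$, expand the trace by linearity, and bound each $\operatorname{trace}(\Phi_{ij}(x_s)\Phi_{ij}(x_s)^\top)=\norm{\Phi_{ij}(x_s)}_2^2\le 1$ using Assumption~1. The paper additionally notes that the argument is adapted from Lemma~10 of \cite{NIPS11_abbasi2011improved}, but the steps are identical to yours.
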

\begin{proof} 
	The proof is adapted from Lemma 10 of \cite{NIPS11_abbasi2011improved}. By using inequality of arithmetic and geometric means, we have $det(\oVt) \le (trace(\oVt)/{d^\prime})^{d^\prime}$. As the trace of matrix is a linear mapping i.e. $trace(A+B) = trace(A) + trace(B)$, hence, we get
	\begin{align*}
		trace(\oVt) &= trace(\lambda I_{d^\prime}) + a\sum_{s\in S_{ij}^t}trace\left(\Phi_{ij}(x_s)\Phi_{ij}(x_s)^\top\right) \\
		&= \lambda{d^\prime} + a\sum_{s\in S_{ij}^t}\norm{\Phi_{ij}(x)}_2^2 \le \lambda{d^\prime} + an_{ij}^t. \hspace{2mm}\left(\text{as $\norm{\Phi_{ij}(x)}_2\le 1$ and $n_{ij}^t = |S_{ij}^t|$}\right)
	\end{align*}
	Using upper bound of $trace(\oVt)$ for bounding $det(\oVt)$, we get
	\begin{equation*}
		det(\oVt) \le (trace(\oVt)/{d^\prime})^{d^\prime} \le \left((\lambda{d^\prime} + an_{ij}^t)/{d^\prime}\right)^{d^\prime}  \le \left(\lambda + an_{ij}^t/{d^\prime}\right)^{d^\prime}. \tag*{\qedhere}
	\end{equation*}
\end{proof}

\thetaEstGLM*
\begin{proof}
	Let $\oVt = \lambda\mathrm{I}_{d^\prime} + \Vt$. If \cref{equ:glm_est} is used for estimation of unknown parameter $\Ts$ then by using Eq. (26) and Lemma 8 of \cite{ICML17_li2017provably} with $\lambda_{min}(V_{ij}^{m+1}) \ge 1$, we have
	\begin{align*}
		\norm{\Te - \Ts}_{\Vt} \le \frac{1}{\kappa}\norm{\sum_{s\in S_{ij}^t} \epsilon_s \Phi_{ij}(x_s)}_{(\Vt)^{-1}}
		\le \frac{(1-\lambda)^{\frac{-1}{2}}}{\kappa}\norm{\sum_{s\in S_{ij}^t} \epsilon_s \Phi_{ij}(x_s)}_{(\oVt)^{-1}}.
	\end{align*}
	Using upper bound of $\norm{\sum_{s\in S_{ij}^t}\epsilon_s\Phi_{ij}(x_s)}_{(\oVt)^{-1}}$ as given in Theorem 1 of \cite{NIPS11_abbasi2011improved} where $\epsilon_s$ is $\sigma-$subGaussian random variable, the following inequality holds with at least probability $1-2\delta/K^2$	
	\begin{align*}
		&\le   \frac{(1-\lambda)^{\frac{-1}{2}}}{\kappa}\sqrt{2\sigma^2\log\left(\frac{det(\oVt)^{1/2}det(\lambda\mathrm{I}_{d^\prime})^{-1/2}}{2\delta/K^2}\right)}\\ &=\frac{\sigma(1-\lambda)^{\frac{-1}{2}}}{\kappa}\sqrt{2\log\left(\frac{det(\oVt)}{det(\lambda\mathrm{I}_{d^\prime})}\right)^{\frac{1}{2}} + 2\log\left(\frac{K^2}{2\delta}\right)}.
	\end{align*}
	Upper bounding $det(\oVt)$ with $a = 1$, $\lambda=1/2$, and $n_{ij}^t \le t$ by using \cref{lem:detVt}, we get
	\begin{align*}
		\implies \norm{\Te - \Ts}_{\Vt} &\le\frac{2\sigma}{\kappa}\sqrt{\frac{d^\prime}{2}\log\left(1 + \frac{2n_{ij}^t}{d^\prime} \right) + \log\left(\frac{K^2}{2\delta}\right)} \\
		&\le \frac{2\sigma}{\kappa}\sqrt{\frac{d^\prime}{2}\log\left(1 + \frac{2t}{d^\prime} \right) + \log\left(\frac{K^2}{2\delta}\right)}.  \tag*{\qedhere} 
	\end{align*}
\end{proof}

\subOptimalLowerSelection*
\begin{proof}
	If sub-optimal arm $l<i^\star_t$ is preferred by \ref{alg:CUSS_WD} then using Eq. \eqref{def_contx_prefer_h}, we get
	\begin{align*}
		\one{l \succ_t i^\star_t, i^\star_t= i} &= \one{C_i - C_l > \tplit, I_t = l, i^\star_t= i} \le \one{C_i - C_l > \tplit}. \hspace{5mm}\text{(as $A\cap B \cap C \subseteq A$)}
	\end{align*}
	
	Using $C_i - C_l = p_{li}(x_t) - \xi_{li}(x_t)$ for  $l< i$, we have
	\begin{align*}
		\implies \one{l \succ_t i^\star_t, i^\star_t= i} &= \one{p_{li}(x_t) - \xi_{li}(x_t) > \tplit} = \one{p_{li}(x_t) - \tplit >  \xi_{li}(x_t)}. 
	\end{align*}
	
	Using definition of $p_{li}(x_t)$ and $\tplit$,
	\begin{align*}
		\implies \one{l \succ_t i^\star_t, i^\star_t = i} = \one{\mu(\Phi_{li}(x_t)^\top \Tsl) - \mu\left(\Phi_{li}(x_t)^\top \Tel + \alpha_{li}^t\norm{ \Phi_{li}(x_t)}_{\Vinvl}\right) > \xi_{li}(x_t)}. 
	\end{align*}
	
	Since $\mu(\cdot)$ is an increasing function and using $\alpha_{li}^t$ as defined in \cref{lem:theta_est_glm}, $\mu\left(\Phi_{li}(x_t)^\top \Tel + \alpha_{li}^t\norm{ \Phi_{li}(x_t)}_{\Vinvl}\right)$ is the upper bound on $\mu(\Phi_{li}(x_t)^\top \Tsl)$ for all $(l,i)$ pairs with probability at least $1-\delta/2$. We show it as follows:
	\begin{align*}
		\Phi_{li}(x_t)^\top \Tsl &= \Phi_{li}(x_t)^\top \Tel + \Phi_{li}(x_t)^\top (\Tsl - \Tel)\\
		&= \Phi_{li}(x_t)^\top \Tel + \norm{\Phi_{li}(x_t)}_{\Vinvl}\norm{\Tsl - \Tel}_{\Vtl}\\
		\implies \Phi_{li}(x_t)^\top \Tsl &\le \Phi_{li}(x_t)^\top \Tel + \alpha_{li}^t\norm{ \Phi_{li}(x_t)}_{\Vinvl}. &\hspace{-1.5cm}\left(\text{using }\norm{\Tsl - \Tel}_{\Vtl} \le \alpha_{li}^t\right) \\
		\intertext{Since $\mu(\cdot)$ is an increasing function,}
		\implies \mu(\Phi_{li}(x_t)^\top \Tsl) &\le \mu\left(\Phi_{li}(x_t)^\top \Tel + \alpha_{li}^t\norm{ \Phi_{li}(x_t)}_{\Vinvl}\right). 
	\end{align*}

	Hence, any sub-optimal arm smaller than the optimal arm is selected by \ref{alg:CUSS_WD} with probability at most $\delta/2$. It completes the proof of the lemma.
\end{proof}

\subOptimalUpperSelection*
\begin{proof}
	If sub-optimal arm $h>i^\star_t$ is preferred by \ref{alg:CUSS_WD} then using Eq. \eqref{def_contx_prefer_l}, we get	
	\begin{align*}
		\one{h \succ_t i, i^\star_t= i} &= \one{C_h - C_i < \tpiht, h \succ_t i^\star_t, i^\star_t= i}  \le \one{C_h - C_i < \tpiht}. \hspace{5mm}\text{(as $A\cap B \cap C \subseteq A$)}
	\end{align*}
	
	Using $C_h - C_i = p_{ih}(x_t) + \xi_{ih}(x_t)$ for $h > i$, we get
	\begin{align*}
		\implies \one{h \succ_t i, i^\star_t= i} &= \one{p_{ih}(x_t) + \xi_{ih}(x_t) < \tpiht} = \one{\tpiht - p_{ih}(x_t) >  \xi_{ih}(x_t)}. 
	\end{align*}
	
	Using definition of $p_{ih}(x_t)$ and $\tpiht$, 
	\begin{align*}
		\implies \one{h \succ_t i, i^\star_t= i}&= \one{\mu\left(\Phi_{ih}(x_t)^\top \Tsh + \alpha_{ih}^t\norm{ \Phi_{ih}(x_t)}_{\Vinvh}\right) - \mu(\Phi_{ih}(x_t)^\top \Teh)  > \xi_{ih}(x_t)}. 
	\end{align*}
	
	As $\mu$ is Lipschitz, $|\mu(z_1) - \mu(z_2)| \le k_\mu|z_1 - z_2|$ where $k_\mu$ is Lipschitz constant, we have
	\begin{align*}
		&\le \one{k_\mu|\Phi_{ih}(x_t)^\top \Tsh + \alpha_{ih}^t\norm{ \Phi_{ih}(x_t)}_{\Vinvh} - \Phi_{ih}(x_t)^\top \Teh| > \xi_{ih}(x_t)} \\
		&\le \one{k_\mu|\Phi_{ih}(x_t)^\top \Tsh - \Phi_{ih}(x_t)^\top \Teh| + k_\mu\alpha_{ih}^t\norm{ \Phi_{ih}(x_t)}_{\Vinvh}> \xi_{ih}(x_t)} \\
		&= \one{k_\mu|\Phi_{ih}(x_t)^\top (\Tsh - \Teh)| + k_\mu\alpha_{ih}^t\norm{ \Phi_{ih}(x_t)}_{\Vinvh} > \xi_{ih}(x_t)}.
	\end{align*}
	
	Using Cauchy-Schwartz inequality and $\norm{ x}_A^2 = x^\top Ax$, we get
	\begin{align*}
		&\le \one{k_\mu\norm{\Phi_{ih}}_{\Vinvh} \norm{\Tsh - \Teh}_{\Vth} + k_\mu\alpha_{ih}^t\norm{ \Phi_{ih}(x_t)}_{\Vinvh} > \xi_{ih}(x_t)}.
	\end{align*}
	
	As $\norm{\Tsh- \Teh}_{\Vth} \le \alpha_{ih}^t$, we get
	\begin{align*}
		&\le \one{k_\mu\alpha_{ih}^t\norm{ \Phi_{ih}(x_t)}_{\Vinvh}  + k_\mu\alpha_{ih}^t\norm{ \Phi_{ih}(x_t)}_{\Vinvh} > \xi_{ih}(x_t)}\\
		&= \one{2k_\mu\alpha_{ih}^t\norm{ \Phi_{ih}(x_t)}_{\Vinvh} > \xi_{ih}(x_t)}. 
	\end{align*}

	As $\norm{\Phi_{ih}(x_t)}_{\Vinvh} \le \norm{\Phi_{ih}(x_t)}_2/\sqrt{\lambda_{min}(\Vth)}$ where $\lambda_{min}(\Vth)$ is the smallest eigenvalue of matrix $\Vth$ and $\norm{\Phi_{ih}(x_t)}_2 \le 1$, we get
	\begin{align*}
		\implies \one{h \succ_t i, i^\star_t= i}&\le \one{2k_\mu\alpha_{ih}^t > \xi_{ih}(x_t)\sqrt{\lambda_{min}(\Vth)}}.  \label{equ:IndHighSelCond} \numberthis
	\end{align*}
	The event on LHS is subset of event of RHS in \cref{equ:IndHighSelCond}. By changing $i$ to $i^\star_t$ completes the proof of the lemma.
\end{proof}

\cumRegGLM*
\begin{proof}
	The regret for $T$ rounds in the Contextual USS problem is given by
	\begin{align*}
		\Regret_T &= \sum_{t=1}^T\left(C_{I_t} + \gamma_{I_t}(x_t) - (C_{i^\star_t} +\gamma_{i^\star_t}(x_t))\right).
	\end{align*}
	
	As $R_{max}$ denote the maximum regret incurred for any context, we get
	\begin{align}
		\Regret_T &\le R_{max}\sum_{t=1}^T \one{I_t \ne i^\star_t}. \label{equ:RegretInd}
	\end{align}
	
	As $\one{I_t \ne i^\star_t}$ has two random quantities $I_t$ and $i^\star_t$, we can re-write it as follows:
	\begin{align*}
		\one{I_t \ne i^\star_t} &= \sum_{l < i}\one{I_t = l, i^\star_t= i} + \sum_{h^\prime>i}\one{I_t = h^\prime, i^\star_t=i}. 
	\end{align*}
	
	Note that if \ref{alg:CUSS_WD} selects $l < i^\star_t$ then $l$ must be preferred over $i^\star_t$ whereas if $h^\prime > i^\star_t$ is selected then there exists an arm $h > i^\star_t$ which is preferred over $i^\star_t$. Hence, we have
	\begin{align} 
		\label{equ:IndWrongChoice}
		\one{I_t \ne i^\star_t}  &= \sum_{l < i}\one{l \succ_t i^\star_t, i^\star_t= i} + \sum_{h^\prime>i}\one{I_t = h^\prime, h \succ_t \ist, i^\star_t=i}\nonumber \le \sum_{l < i}\one{l \succ_t i^\star_t, i^\star_t= i} + \sum_{h>i}\one{h \succ_t \ist, i^\star_t=i}.
	\end{align}
	
	Using above bound in \cref{equ:RegretInd}, we get
	\begin{align*}
		\Regret_T &\le R_{max}\sum_{t=1}^T \left[\sum_{l < i}\one{l \succ_t i^\star_t, i^\star_t= i} + \sum_{h>i}\one{h\succ_t i^\star_t, i^\star_t=i}\right].
	\end{align*}

	From \cref{lem:subOptimalLowerSelection}, $\one{l \succ_t i^\star_t, i^\star_t= i} = 0$ for any $l<i$ with probability at least $1-\delta/2$, then the regret becomes
	\begin{align*}
		\Regret_T &\le R_{max}\sum_{t=1}^T\sum_{h>i}\one{h \succ_t i^\star_t, i^\star_t=i} = R_{max} \sum_{h>i}\sum_{t=1}^T \one{h \succ_t i^\star_t, i^\star_t=i} \le R_{max} \sum_{h=2}^K\sum_{t=1}^T \one{h \succ_t i^\star_t, i^\star_t<h}.
	\end{align*}
	
	Note that $\alpha_{ih}^t$ is slowly increasing value with $t$ that implies $\alpha_{ih}^t\le \alpha_{ih}^T$ for all $t \le T$.
	Using \cref{lem:min_eigen_lb} with $\Psi=  \left(\frac{2k_\mu \alpha_{ih}^T}{\xi_{ih}}\right)^2$, $\Sigma_{ih} = \EE{\Phi_{ih}(X_s)\Phi_{ih}(X_s)^T}$ where $s \in S_{ih}^t$, after
	\begin{equation*}
		n_{ih}^T \doteq \left(\frac{C_1\sqrt{d^\prime}+C_2\sqrt{\log (K^2/2\delta)}}{\lambda_{min}(\Sigma_{ih})}\right)^2+ \frac{2}{\lambda_{min}(\Sigma_{ih})}\left(\frac{2k_\mu\alpha_{ih}^T}{\xi_{ih}}\right)^2
	\end{equation*}
	observations  for arm pair $(i,h)$ the $\lambda_{min}(\Vth) \ge \left(\frac{2k_\mu\alpha_{ih}^T}{\xi_{ih}}\right)^2$ with probability at least $1-2\delta/K^2$. Therefore, after having $n_{ih}^T$ observations, the sub-optimal arm $h(>i)$ will not be preferred over optimal arm $i$ with probability at least $1-2\delta/K^2$. Therefore, with probability at least $1-2\delta/K^2$, following equations also hold
	\begin{align*}
		&\one{I_t = h, i^\star_t= i, |S_{ih}^t| \ge n_{ih}^T} = 0
		\implies \sum_{t=1}^{T}\one{I_t = h, i^\star_t= i, h>i} \le n_{ih}^T.
	\end{align*}

	Due to the problem structure, whenever an arm $h$ is selected, disagreement labels for all arm pair $(i,j)$ where $i<j\le h$ are observed. Therefore, with probability at least $1-2\delta/K$ (by union bound), the maximum number of times an arm $h$ is selected when the optimal arm's index is smaller than $h$ is $n_h^T$ such that 
	\begin{align*}
    	n_{h}^T &= \left(\frac{C_1\sqrt{d^\prime}+C_2\sqrt{\log (K^2/2\delta)}}{\lambda_{\Sigma}}\right)^2+ \frac{2}{\lambda_{\Sigma}}\left(\frac{2k_\mu\alpha_T}{\xi_{h}}\right)^2 \\
    	&= \left(\frac{C_1\sqrt{d^\prime}+C_2\sqrt{\log (K^2/2\delta)}}{\lambda_{\Sigma}}\right)^2+ \frac{8}{\lambda_{\Sigma}}\left(\frac{k_\mu\alpha_T}{\xi_{h}}\right)^2 
	\end{align*}
	where $\xi_h = \min\limits_{i<h,t \ge 1} \xi_{ih}(x_t)$, $\lambda_{\Sigma} = \min\limits_{i < j\le K}\lambda_{min}\left(\EE{\Phi_{ij}(X_s)\Phi_{ij}(X_s)^\top}\right)$ and $\alpha_T \ge \max\limits_{i<h}\alpha_{ih}^T$. By using union bound, we get following bound with probability at least $1-\delta/2K$
	\begin{align*}
		\sum_{t=1}^{T}\one{I_t = h, i^\star_t < h} \le n_{h}^T. \numberthis \label{equ:hIndBnd}
	\end{align*}
	
	From \cref{equ:hIndBnd}, using $\sum_{t=1}^T \one{I_t = h, i^\star_t<h} \le n_{h}^T$ and value of $n_h^T$, we get following upper bound on regret that holds with probability at least $1-\delta$ by union bound
	\begin{align*}
		\Regret_T \le R_{max}\sum_{h=2}^{K} n_{h}^T =  R_{max}\sum_{h=2}^{K} \left( \left(\frac{C_1\sqrt{d^\prime}+C_2\sqrt{\log (K^2/2\delta)}}{\lambda_{\Sigma}}\right)^2+ \frac{8}{\lambda_{\Sigma}}\left(\frac{k_\mu\alpha_T}{\xi_{h}}\right)^2\right).
	\end{align*}
	
	Using $\alpha_{T}= \frac{2\sigma}{\kappa}\sqrt{\frac{d^\prime}{2}\log\left(1 + 2T/{d^\prime} \right) + \log\left({K^2}/{2\delta}\right)}$ from Lemma \ref{lem:theta_est_glm} that ensures parameter $\Ts$ bounds for all pairs $(i,j)$ holds with probability at least $1 - \delta/2K$ (by union bound) for $T>m$ where $m=C\lambda_\Sigma^{-2}\left(d+\log(k^2/2\delta)\right) + 2\lambda_\Sigma^{-1}$ such that $\lambda_{min}(V_{ij}^{m+1}) \ge 1$ for all pair $(i,j)$, we have
	\begin{align*}
		\Regret_T &\le R_{max}\left(m + \sum_{h=2}^{K} n_{h}^T\right)\\
		\implies \Regret_T  &\le R_{max}\Bigg[m + \sum_{h=2}^{K} \Bigg(\hspace{-1mm} \Bigg(\frac{C_1\sqrt{d^\prime}+C_2\sqrt{\log \left(\frac{K^2}{2\delta}\right)}}{\lambda_{\Sigma}}\Bigg)^2  \\
		&\qquad + \frac{16}{\lambda_{\Sigma}}\left(\frac{k_\mu\sigma}{\xi_{h}\kappa}\right)^2  \left(\frac{d^\prime}{2}\log\left(1 + \frac{2T}{d^\prime} \right) + \log\left(\frac{K^2}{2\delta}\right)\right)\Bigg)\Bigg]. \tag*{\qedhere}
	\end{align*}
\end{proof}

	\subsection*{Algorithm with $\lambda\mathrm{I}_	{d^\prime}$ Initialization}
	\label{asec:contxuss_lambda_algorithm}

\ref{alg:CUSS_WD} uses forced exploration by selecting arm $K$ until the correlation matrix $V_{ij}^t$ is not invertible for all $(i,j)$ pairs of arms. Further, the minimum eigenvalue of $V_{ij}^t$ for all $(i,j)$ pairs is needed to be larger than $1$ so that bound given in \cref{lem:theta_est_glm} holds. Alternatively, $V_{ij}^t$ can be initialized by adding a regularization term \citep{NIPS11_abbasi2011improved,ICML16_zhang2016online,NIPS17_jun2017scalable} to avoid forced exploration and then apply OFUL type analysis. We have given an algorithm named \ref{alg:CUSS_GLM2} which uses regularization term $\lambda\mathrm{I}_{d^\prime}$. However, its analysis still needed the minimum eigenvalue of the non-regularized part of the correlation matrix to become larger than some positive value (depends on $\lambda$ value), as shown in our next result.

\begin{algorithm}[!ht]
    \floatname{algorithm}{}
    \renewcommand{\thealgorithm}{USS-PD-$\lambda\mathrm{I}$}
	\caption{Algorithm for Contextual USS using Pairwise Disagreement with $\lambda\mathrm{I}$ Initialization}
	\label{alg:CUSS_GLM2}
	\begin{algorithmic}[1]
		\State \textbf{Input:} Tuning parameters: $\delta \in (0,1)$ and $\lambda>0$ 
		\State Select arm $K$ for first context $x_1$
		\State $\forall i< j \le K:$ set $\overline{V}_{ij}^{1} \leftarrow \lambda\mathrm{I}_{d^\prime} + \Phi_{ij}(x_{1}) {\Phi_{ij}(x_{1})}^\top$ and update $\hat\theta_{ij}^1$ by solving \cref{equ:glm_est} 
		\For{$t= 2, 3, \ldots$}
			\State Receive context $x_t$. Set $i=1$ and $I_t=0$
			\While{$I_t=0$}
				\State Play arm $i$ 
				\State $\forall j \in [i+1, K]:$ compute $\tpijt \leftarrow \mu\left(\Phi_{ij}(\xt)^\top \hat\theta_{ij}^{t-1} + \alpha_{ij}^{t-1}\norm{ \Phi_{ij}(\xt)}_{\left(\overline{V}_{ij}^{t-1}\right)^{-1}} \right)$
				\State If $\forall j \in [i+1, K]: C_j - C_i > \tpijt$ or $i=K$ then set $I_t = i$ else  set $i=i+1$
			\EndWhile
			\State Select arm $I_t$ and observe $Y_t^1, Y_t^2, \dots, Y_t^{I_t}$
			\State $\forall i< j \le I_t:$  update $\oVijt \leftarrow \overline{V}_{ij}^{t-1} + \Phi_{ij}(x_{t}) {\Phi_{ij}(x_{t})}^\top$ and $\Te$ by solving \cref{equ:glm_est} 
		\EndFor
	\end{algorithmic}
\end{algorithm}

\begin{restatable}{lem}{thetaEstGLM2}
	\label{lem:theta_est_glm2} 
	Let $\oVt = \lambda\mathrm{I}_{d^\prime} + \Vt$ for any $\lambda>0$ and $\norm{\theta_{ij}}_2 \le S$ for all $(i,j)$ pair. Then for any $t > \min\{s: \forall i<j \ni \lambda_{\min}(V_{ij}(s)) \ge 2\lambda\}$, the following event holds for \ref{alg:CUSS_GLM2} with probability at least $1-2\delta/K^2$,
	\begin{align*}
		&\norm{ \Te - \Ts }_{\oVt} \le \beta_{ij}^t,
	\end{align*}
	where $\beta_{ij}^t = \frac{2\sigma}{\kappa}\sqrt{\frac{d^\prime}{2}\log\left(1 + \frac{n_{ij}^t}{d^\prime\lambda} \right) + \log\left(\frac{K^2}{2\delta}\right)} + 2\lambda^{1/2}S $.
\end{restatable}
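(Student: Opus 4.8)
The plan is to mirror the proof of \cref{lem:theta_est_glm}, modified to handle (i) an arbitrary regularization level $\lambda>0$ in $\oVt=\lambda\mathrm{I}_{d^\prime}+\Vt$ and (ii) the fact that \ref{alg:CUSS_GLM2} still computes the \emph{unregularized} maximum likelihood estimate by solving \cref{equ:glm_est}, so the $\lambda\mathrm{I}_{d^\prime}$ term enters only through the norm in which the error is measured, not through the estimator. First I would fix a pair $(i,j)$ and a time $t$ past the stated threshold, so that $\lambda_{min}(\Vt)\ge 2\lambda$ and hence $\Vt$ is positive definite. Using $\norm{v}_{\oVt}^2=\norm{v}_{\Vt}^2+\lambda\norm{v}_2^2$ together with $\sqrt{a^2+b^2}\le a+b$ for $a,b\ge 0$, I would split the error as $\norm{\Te-\Ts}_{\oVt}\le\norm{\Te-\Ts}_{\Vt}+\sqrt{\lambda}\,\norm{\Te-\Ts}_2$. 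Since $\norm{\Ts}_2\le S$ and $\norm{\Te}_2\le S$, the second term is at most $2\sqrt{\lambda}S$, which is exactly the additive contribution to $\beta_{ij}^t$.

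For the first term I would reuse the argument underlying \cref{lem:theta_est_glm}. The first-order optimality condition for \cref{equ:glm_est}, namely $\sum_{s\in S_{ij}^t}\big(\mu(\Phi_{ij}(x_s)^\top\Te)-\mu(\Phi_{ij}(x_s)^\top\Ts)\big)\Phi_{ij}(x_s)=\sum_{s\in S_{ij}^t}\epsilon_s\Phi_{ij}(x_s)$, combined with a mean-value expansion of $\mu$ and the lower bound $\dot\mu\ge\kappa$ from Assumption~1, yields $\kappa\,\norm{\Te-\Ts}_{\Vt}\le\big\|\sum_{s\in S_{ij}^t}\epsilon_s\Phi_{ij}(x_s)\big\|_{\Vtinv}$; invertibility of $\Vt$ here is exactly what $\lambda_{min}(\Vt)\ge 2\lambda>0$ provides. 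I would then pass to the regularized matrix via $\Vt\succeq 2\lambda\mathrm{I}_{d^\prime}\Rightarrow\oVt=\Vt+\lambda\mathrm{I}_{d^\prime}\preceq\tfrac32\Vt$, hence $\Vtinv\preceq 2\,\oVtinv$, and apply the self-normalized concentration bound (Theorem~1 of \cite{NIPS11_abbasi2011improved}) for the $\sigma$-sub-Gaussian noise $\epsilon_s$, obtaining, with probability at least $1-2\delta/K^2$, $\big\|\sum_s\epsilon_s\Phi_{ij}(x_s)\big\|_{\oVtinv}\le\sigma\sqrt{2\log\big(\det(\oVt)^{1/2}\det(\lambda\mathrm{I}_{d^\prime})^{-1/2}K^2/(2\delta)\big)}$. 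Bounding $\det(\oVt)\le(\lambda+n_{ij}^t/d^\prime)^{d^\prime}$ by \cref{lem:detVt} with $a=1$ and $\det(\lambda\mathrm{I}_{d^\prime})=\lambda^{d^\prime}$ converts the logarithm into $\tfrac{d^\prime}{2}\log\!\big(1+n_{ij}^t/(d^\prime\lambda)\big)+\log(K^2/2\delta)$; multiplying by the factor $\le\sqrt{2}/\kappa$ from the previous step produces, up to the constant slack already present in \cref{lem:theta_est_glm}, the $\tfrac{2\sigma}{\kappa}\sqrt{\cdots}$ term, and adding the $2\sqrt{\lambda}S$ bias term gives $\beta_{ij}^t$.

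The main obstacle is the same localization issue that underlies \cref{lem:theta_est_glm}: the intermediate point in the mean-value expansion of $\mu$ must lie in the ball where Assumption~1 guarantees $\dot\mu\ge\kappa$ (within unit distance of $\Ts$), which requires the standard GLM-bandit bootstrap showing the unregularized MLE is already consistent once $\lambda_{min}(\Vt)$ is bounded below — the role played here by the threshold $\lambda_{min}(\Vt)\ge 2\lambda$ rather than the $\ge 1$ used before. The remaining work is bookkeeping: tracking the $\Vt$-versus-$\oVt$ operator comparisons (all of which hinge on that same threshold), keeping the constants explicit, and noting that the displayed event is stated for a single pair $(i,j)$ and holds with probability at least $1-2\delta/K^2$, so a union bound over the $\binom{K}{2}$ pairs (performed where the lemma is invoked) costs at most $\delta$ in total.
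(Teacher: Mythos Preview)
Your proposal is correct and follows essentially the same route as the paper's proof: split $\norm{\Te-\Ts}_{\oVt}$ into a $\Vt$-part and a $\lambda\mathrm{I}_{d^\prime}$-part, bound the latter by $2\sqrt{\lambda}S$, bound the former via the MLE first-order condition and $\dot\mu\ge\kappa$, pass from $\Vtinv$ to $\oVtinv$ using the eigenvalue condition $\lambda_{\min}(\Vt)\ge 2\lambda$, apply the self-normalized martingale bound of \cite{NIPS11_abbasi2011improved}, and finish with the determinant bound of \cref{lem:detVt}. Two cosmetic differences only: the paper phrases the matrix comparison as a ``Sherman--Morrison'' inequality $\norm{Z}_{\Vtinv}\le(1-\lambda/\lambda_{\min}(\Vt))^{-1/2}\norm{Z}_{\oVtinv}$ rather than your operator ordering $\Vtinv\preceq 2\,\oVtinv$ (both give the factor $\sqrt{2}$ at the threshold), and for the $2\sqrt{\lambda}S$ term the paper cites Eq.~(27) of Lemma~8 in \cite{ICML17_li2017provably} rather than asserting $\norm{\Te}_2\le S$ directly---your later remark on the localization/bootstrap issue is exactly what that citation covers.
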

\begin{proof}
	By using $\norm{Z}_{A+B} \le \norm{Z}_{A} + \norm{Z}_{B}$ , we have
	\begin{align*}
		\norm{\Te - \Ts}_{\oVt} &\le \norm{\Te - \Ts}_{\Vt} + \norm{\Te - \Ts}_{\lambda\mathrm{I}_{d^\prime}}. & \hspace{-2cm}(\mbox{as } \oVt = \lambda\mathrm{I}_{d^\prime} + \Vt) \\
		\intertext{When \cref{equ:glm_est} is used for estimation of unknown parameter $\Ts$ then by using Eq. (26) and Eq. (27) of Lemma 8 of \cite{ICML17_li2017provably}, we have}
		& \le \frac{1}{\kappa}\norm{\sum_{s\in S_{ij}^t} \epsilon_s \Phi_{ij}(x_s)}_{\Vtinv} + 2\lambda^{1/2}S. &\hspace{-5cm} (\mbox{as } \norm{\theta_{ij}}_2 \le S) 
	\end{align*}
	
	Sherman Morrison formula gives $\norm{Z}_{\Vtinv} \le \left(1-\frac{\lambda}{\lambda_{\min}(\Vt)}\right)^{-\frac{1}{2}}\norm{Z}_{\oVtinv}$. Using it, we have
	\begin{align*}
		\norm{\Te - \Ts}_{\oVt} \le \frac{\left(1-\frac{\lambda} {\lambda_{\min}(\Vt)}\right)^{-\frac{1}{2}}}{\kappa}\norm{\sum_{s\in S_{ij}^t} \epsilon_s \Phi_{ij}(x_s)}_{\oVtinv} + 2\lambda^{1/2}S.
	\end{align*}
	
	Using upper bound on $\norm{\sum_{s\in S_{ij}^t}\epsilon_s\Phi_{ij}(x_s)}_{(\overline\Vt)^{-1}}$ as given in Theorem 1 of \cite{NIPS11_abbasi2011improved}, where $\epsilon_s$ is $\sigma-$subGaussian random variable and holds with probability at least $1-2\delta/K^2$, we get
	\begin{align*}	
		\norm{\Te - \Ts}_{\oVt} &\le \frac{\left(1-\frac{\lambda}{\lambda_{\min}(\Vt)}\right)^{-\frac{1}{2}}}{\kappa}\sqrt{2\sigma^2\log\left(\frac{det(\oVt)^{1/2}det(\lambda\mathrm{I}_{d^\prime})^{-1/2}}{2\delta/K^2}\right)}  + 2\lambda^{1/2}S \\
		&=\frac{\sigma\left(1-\frac{\lambda}{\lambda_{\min}(\Vt)}\right)^{-\frac{1}{2}}}{\kappa}\sqrt{2\log\left(\frac{det(\oVt)}{det(\lambda\mathrm{I}_{d^\prime})}\right)^{\frac{1}{2}} + 2\log\left(\frac{K^2}{2\delta}\right)}  + 2\lambda^{1/2}S. 
	\end{align*}
	
	By using \cref{lem:detVt} to upper bound $det(\oVt)$, where $t>s$ with $a = 1$, and $n_{ij}^t \le t$, we get
	\begin{align}
		\label{equ:minEigenValLambda}
		\norm{\Te - \Ts}_{\oVt} \le \frac{\sigma\left(1-\frac{\lambda}{\lambda_{\min}(\Vt)}\right)^{-\frac{1}{2}}}{\kappa} \sqrt{{d^\prime}\log\left(1 + \frac{n_{ij}^t}{d^\prime\lambda} \right) + 2\log\left(\frac{K^2}{2\delta}\right)} + 2\lambda^{1/2}S.
	\end{align}
	
	As $t>s$ such that $\lambda_{\min}(V_{ij}(s)) \ge 2\lambda$, we have
	\[\norm{\Te - \Ts}_{\oVt} \le \frac{2\sigma}{\kappa} \sqrt{\frac{d^\prime}{2}\log\left(1 + \frac{n_{ij}^t}{d^\prime\lambda} \right) + \log\left(\frac{K^2}{2\delta}\right)} + 2\lambda^{1/2}S  = \beta_{ij}^t. \tag*{\qedhere} \]
\end{proof}

Note that if $\lambda_{\min}(V_{ij}(s)) < \lambda$ then $\left(1-\frac{\lambda}{\lambda_{\min}(\Vt)}\right)^{-\frac{1}{2}}$ is not well defined and the bound given in \cref{lem:theta_est_glm2} does not hold. Therefore, $\lambda_{\min}(V_{ij}(s))$ need to be at least greater than $\lambda$. Let $m^\prime \doteq C\lambda_\Sigma^{-2}\left(d+\log(k^2/2\delta)\right) + 4\lambda_\Sigma^{-1}\lambda$ where $C>0$ is the universal constant. Recall $R_{max} \doteq \max_{i\in [K], x\in \mathcal{X}}$ $ \left[C_i +  \gamma_{i}(x) - \left( C_{i^\star} + \gamma_{i^\star}(x)\right)\right]$, where $i^\star$ is the optimal arm for a context $x$. Now we state the regret bounds for \ref{alg:CUSS_GLM2}.
\begin{restatable}{thm}{cumRegGLM2}
	\label{thm:cum_reg_glm2}
	Let $P \in \PCWD$, $\lambda>0$, $\delta \in (0,1)$, Assumption 1 holds, and $\xi_h = \min\limits_{t \ge 1} \xi_{i^\star_t h}(x_t)$. Then with probability at least $1-2\delta$, the regret of \ref{alg:CUSS_GLM2} for $T > m^\prime$ contexts is upper bounded as
	\begin{align*}
		\Regret_T &\le R_{max}\Bigg(m^\prime + \sum_{h=2}^{K} \Bigg(\hspace{-1mm} \Bigg(\frac{C_1\sqrt{d^\prime} + C_2\sqrt{\log \left(\frac{K^2}{2\delta}\right)}}{\lambda_{\Sigma}}\Bigg)^2  \hspace{-2mm} + \frac{32\lambda}{\lambda_{\Sigma}}\left(\frac{k_\mu\sigma}{\xi_{h}\kappa}\right)^2 \\
		&\qquad 
		\Bigg(\sqrt{\frac{d^\prime}{2}\log\left(1 + \frac{T}{d^\prime\lambda} \right) + \log\left(\frac{K^2}{2\delta}\right)} + 2\lambda^{1/2}S\Bigg)^2\Bigg)\Bigg).
	\end{align*} 
\end{restatable}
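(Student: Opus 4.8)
The plan is to run the proof of \cref{thm:cum_reg_glm} essentially line for line, with \cref{lem:theta_est_glm2} taking the place of \cref{lem:theta_est_glm} and with the warm-up length tuned to the precondition $\lambda_{min}(V_{ij}^t)\ge 2\lambda$ that \cref{lem:theta_est_glm2} requires. First I would fix the high-probability good event. Over a warm-up window of $m^\prime=C\lambda_\Sigma^{-2}(d^\prime+\log(K^2/2\delta))+4\lambda_\Sigma^{-1}\lambda$ rounds, in which the regret is bounded crudely by $R_{max}m^\prime$, every pair $(i,j)$ accumulates enough observations that \cref{lem:min_eigen_lb}, invoked with $\Psi=2\lambda$ and $\lambda_{min}(\Sigma_{ij})\ge\lambda_\Sigma$ together with a union bound over the $K(K-1)/2$ pairs, guarantees $\lambda_{min}(V_{ij}^t)\ge 2\lambda$ for all pairs with probability at least $1-2\delta/K$. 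On that event \cref{lem:theta_est_glm2} yields $\norm{\Te-\Ts}_{\oVijt}\le\beta_{ij}^t$ simultaneously for all pairs and all $t>m^\prime$, the extra failure probability again being absorbed into the union bound.

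For $t>m^\prime$ I would reuse the decomposition of \cref{thm:cum_reg_glm}: $\Regret_T\le R_{max}\sum_{t}\one{I_t\ne\ist}$, and $\one{I_t\ne\ist}\le\sum_{l<\ist}\one{l\succ_t\ist}+\sum_{h>\ist}\one{h\succ_t\ist}$, the latter because, by \cref{lem:BContx} and the bottom-up cascade sweep of \ref{alg:CUSS_GLM2}, stopping at any arm other than $\ist$ forces some arm on one side of $\ist$ to be preferred over $\ist$. The left-hand sum vanishes on the good event: the proof of \cref{lem:subOptimalLowerSelection} carries over after replacing $\alpha_{li}^t$ by $\beta_{li}^t$ and the matrix $V_{li}^t$ by $\oVijt$ (the confidence ellipsoid forces the upper-confidence estimate $\tplit$ never to underestimate $p_{li}(x_t)$, while $C_i-C_l=p_{li}(x_t)-\xi_{li}(x_t)$ for $l<\ist$, so an arm below the optimal one never passes the selection test).

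For the right-hand sum, the adapted form of \cref{lem:subOptimalUpperSelection} states that $h\succ_t\ist$ forces $2k_\mu\beta_{\ist h}^t>\xi_{\ist h}(x_t)\sqrt{\lambda_{min}(\overline{V}_{\ist h}^t)}\ge\xi_{\ist h}(x_t)\sqrt{\lambda_{min}(V_{\ist h}^t)}$. Since $\beta_{ih}^t$ is nondecreasing in $t$, one more application of \cref{lem:min_eigen_lb}, now with $\Psi=(2k_\mu\beta_{ih}^T/\xi_{ih})^2$, shows this preference becomes impossible once the pair $(i,h)$ has gathered enough observations; counting observations through the cascade (a pull of $h$ feeds every pair $(i,j)$ with $i<j\le h$), lower-bounding $\lambda_{min}(\Sigma_{ih})\ge\lambda_\Sigma$, $\xi_{ih}(x_t)\ge\xi_h$ and $\beta_{ih}^T\le\beta_T$, and union-bounding over $h$ gives $\sum_{t}\one{I_t=h,\,\ist<h}\le n_h^T$ with $n_h^T=\big((C_1\sqrt{d^\prime}+C_2\sqrt{\log(K^2/2\delta)})/\lambda_\Sigma\big)^2+(8/\lambda_\Sigma)(k_\mu\beta_T/\xi_h)^2$. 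Substituting $\beta_T=\tfrac{2\sigma}{\kappa}\sqrt{\tfrac{d^\prime}{2}\log(1+T/(d^\prime\lambda))+\log(K^2/2\delta)}+2\lambda^{1/2}S$ from \cref{lem:theta_est_glm2} (using $n_{ij}^t\le t\le T$ inside \cref{lem:detVt}) into $\Regret_T\le R_{max}(m^\prime+\sum_{h=2}^{K}n_h^T)$, bounding $\beta_T^2$ by a constant multiple of $\tfrac{\sigma^2}{\kappa^2}(\sqrt{\cdots}+2\lambda^{1/2}S)^2$, and collecting the $O(\delta/K^2)$ failure events over the $K$ arms and $K(K-1)/2$ pairs into total confidence $1-2\delta$, yields the claimed regret bound.

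The only genuinely new difficulty relative to \ref{alg:CUSS_WD} is in the first step. The Sherman--Morrison factor $(1-\lambda/\lambda_{min}(V_{ij}^t))^{-1/2}$ that appears inside \cref{lem:theta_est_glm2} is finite, and at most $\sqrt{2}$, only after $\lambda_{min}(V_{ij}^t)\ge 2\lambda$, so the regularizer $\lambda\mathrm{I}_{d^\prime}$ buys invertibility of $\overline{V}_{ij}^t$ from round one but not the eigenvalue margin needed for a usable ellipsoid; one must still control the time for \emph{every} pair to cross that $2\lambda$ threshold, which is what produces the additive $m^\prime=\Theta(\lambda_\Sigma^{-2}(d^\prime+\log(K^2/\delta))+\lambda_\Sigma^{-1}\lambda)$ term. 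Since \ref{alg:CUSS_GLM2} hard-codes only one exploratory round, a short companion argument is also needed to show that the warm-up really takes place --- namely that while the confidence bonuses are still wide the cascade sweep does not stop below arm $K$, so all pairs are observed each round --- and all the per-pair $1-2\delta/K^2$ events (eigenvalue concentration via \cref{lem:min_eigen_lb}, validity of the ellipsoid via \cref{lem:theta_est_glm2}, and failure of the preference condition) must be threaded consistently through a single union bound. Everything after that is the same arm-counting and algebra as in \cref{thm:cum_reg_glm}, so I expect no further obstacles.
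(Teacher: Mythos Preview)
Your proposal is correct and follows essentially the same route as the paper, which simply states that the proof of \cref{thm:cum_reg_glm} is rerun with $m$ replaced by $m^\prime$ and $\alpha_{ij}^T$ replaced by $\beta_{ij}^T$ from \cref{lem:theta_est_glm2}. Your write-up is in fact more careful than the paper's: you explicitly flag that \ref{alg:CUSS_GLM2} only forces one exploratory pull of arm $K$, so one must argue that all pairs still accumulate $m^\prime$ observations before the confidence ellipsoid of \cref{lem:theta_est_glm2} becomes valid---a subtlety the paper's proof sketch does not address.
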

\begin{proof}
	The proof follows similar steps as \cref{thm:cum_reg_glm} by replacing $m$ by $m^\prime$ and $\alpha_{ij}^T$ by $\beta_{ij}^T$. Using $\beta_{ij}^T = \sqrt{\frac{d^\prime}{2}\log\left(1 + \frac{T}{d^\prime\lambda} \right) + \log\left(\frac{K^2}{2\delta}\right)} + 2\lambda^{1/2}S$ completes the proof. 
\end{proof}

	\subsection*{More Experiments}
	\label{asec:contxuss_moreExperiments}

Since the parameter of each arm (classifier) is known to us (but not to the algorithm), the optimal arm $i^\star_t$ can be computed for every context. Therefore, we can also calculate the fraction of contexts for which $\WD$ property holds to a given cost vector. To verify $\WD$ property for a given context $x_t$, we first compute disagreement probability for each $(i,j)$ pair of classifiers as\footnote{For computing disagreement probability, we assume that the feedback of any arm is independent of the feedback of other arms. Note that \ref{alg:CUSS_WD} does not need such an assumption.}
\[
\pijt = \mu(x_t^\top\theta_i)(1-\mu(x_t^\top\theta_j)) + \mu(x_t^\top\theta_j)(1-\mu(x_t^\top\theta_i)).
\]

When all $\pijt$ values and $i^\star_t$ are known, we can check whether a context $x_t$ satisfies $\WD$ property or not by using \cref{equ:ContxWDProp}. For all problem instances derived from the synthetic dataset, the cost vector and the fraction of contexts for which $\WD$ property holds are given in Table \ref{table:sysDatasetCases}. 

\begin{table}[H]
	\centering
	\setlength\tabcolsep{5pt}
	\setlength\extrarowheight{4pt}
	\begin{tabularx}{0.785\textwidth}{|p{2.5cm}|p{1cm}|p{1cm}|p{1cm}|p{1cm}|p{1cm}|p{2.5cm}|}
		\hline
		\textbf{PI/Classifiers} & Clf. 1        & Clf. 2        & Clf. 3  & Clf. 4 & Clf. 5     & $\WD$ fraction     \\ 
		\hline
		Costs for PI $1$ & 0.01 & 0.02  & 0.032  & 0.05 & 0.55 & 0.997\\ 
		\hline
		Costs for PI $2$ & 0.01 & 0.02  & 0.032  & 0.05 & 0.6 & 1.0\\ 
		\hline
		Costs for PI $3$ & 0.01 & 0.02  & 0.032  & 0.05 & 0.65 & 1.0\\
		\hline
		Costs for PI $4$ & 0.01 & 0.02  & 0.032  & 0.05 & 0.7 & 1.0\\
		\hline
	\end{tabularx}
	\vspace{2mm}
	\caption{Details of different problem instances (PIs) derived from synthetic datasets.}
	\label{table:sysDatasetCases}
\end{table}

\noindent
\textbf{Heart Disease dataset:} Each sample of the Heart Disease dataset has $12$ features. We split the features into three subsets and train a logistic classifier on each subset. We associate 1st classifier with the first $7$ features as input that include cholesterol readings, blood-sugar, and rest-ECG. The 2nd classifier, in addition to the $7$ features, utilizes the thalach, exang, and oldpeak features; and the 3rd classifier uses all the features. For performance evaluation, the different values of costs are used in three problem instances for both real datasets, as given in Table \ref{table:real_dataset}. The PIMA diabetes dataset has $768$ samples, whereas the Heart Disease dataset has only $297$ samples. As $5000$ contexts are used in our experiments, we select a sample in a round-robin fashion and give it as input to the algorithm.
\begin{table}[!ht]
	\centering
	\setlength\tabcolsep{5pt}
	\setlength\extrarowheight{4pt}
	\begin{tabularx}{0.99\textwidth}{|p{2.25cm}|p{1cm}|p{1cm}|p{1cm}|p{2.15cm}|p{1cm}|p{1cm}|p{1cm}|p{2.15cm}|}
		\hline
		\multirow{2}{*}{\parbox{2cm}{\bf Values/ \newline Classifiers}} &\multicolumn{4}{|c|}{\bf PIMA Indian Diabetes Dataset}&\multicolumn{4}{|c|}{\bf Heart Disease Dataset} \\ 
		\cline{2-9} 
		&Clf. 1 &Clf. 2&Clf. 3&WD Fraction &Clf. 1 &Clf. 2&Clf. 3&WD Fraction\\ 
		\hline
		Costs for PI $1$& 0.01 & 0.25 & 0.5 & 0.0692 &0.01 & 0.25 & 0.5 &0.1384\\ 
		\hline
		Costs for PI $2$& 0.01 & 0.3 & 0.5 & 0.1192 &0.01 & 0.3 & 0.5 &0.1454\\ 
		\hline
		Costs for PI $3$& 0.01 & 0.35 & 0.5 & 0.2204&0.01 & 0.35 & 0.5 &0.2426\\ 
		\hline
	\end{tabularx}
	\vspace{2mm}
	\caption{Details of different problem instances (PIs) derived from real datasets.}
	\label{table:real_dataset}
\end{table}

~\\ \noindent
\textbf{Experiments Results:} Through our experiments, we show that the stronger the $\CWD$ property (large value of $\xi$) for the problem instance, easier it is to identify the optimal arm and, hence, has lower regret, as shown in \cref{fig:xi_regret}. We also compare the performance of \ref{alg:CUSS_WD} with three baseline policies on problem instances derived from the Heart Disease dataset (same as the PIMA Indian Diabetes dataset). As expected, we observe that \ref{alg:CUSS_WD} outperforms the baseline policies, as shown in \cref{fig:heart}. Note that we used $\delta=0.05$ and $\sigma=0.1$ in all experiments.
\begin{figure}[!ht]
	\captionsetup[subfigure]{justification=centering}
	\begin{subfigure}[b]{0.495\linewidth}
		\includegraphics[width=\linewidth]{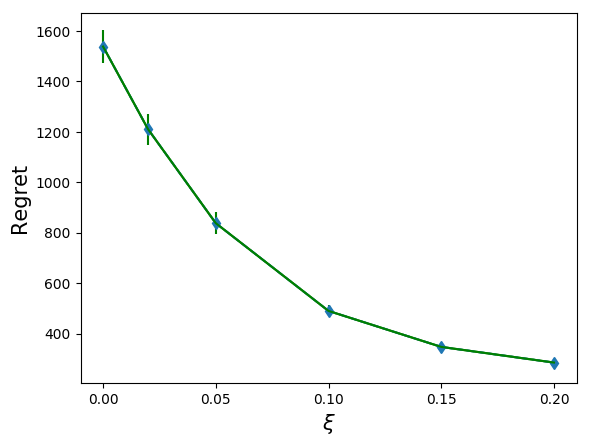}
		\caption{Regret v/s $\CWD$ property $(\xi)$.}
		\label{fig:xi_regret}
	\end{subfigure}
	\quad
	\begin{subfigure}[b]{0.495\linewidth}
		\includegraphics[width=\linewidth]{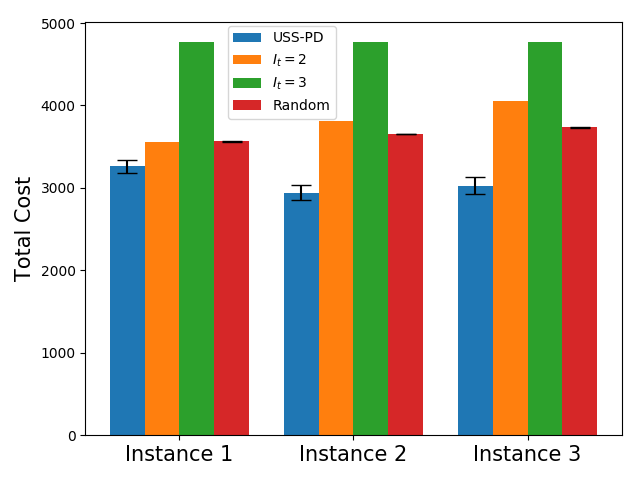}
		\caption{Total cost for PIMA Indian Diabetes dataset.}
		\label{fig:heart}
	\end{subfigure}
	\caption{\small Performance of \ref{alg:CUSS_WD}.}
	\label{fig:moreExp}
\end{figure}

\subsection*{Realizable Setting}

We consider the realizable case where all contexts satisfy \cref{equ:disProb} (by fixing $\theta_{ij}$ for each $(i,j)$ pair of arms) and $\WD$ property. Since $\WD$ holds, we can use \cref{lem:BContx} for finding the optimal arm. Note that the mean loss cannot be computed for this setting as we set parameters of disagreement probabilities instead of setting parameters for individual arms.  We use an upper bound on the regret to evaluate the performance of \ref{alg:CUSS_WD} on the Synthetic dataset, as shown in \cref{fig:realizable}. We repeat experiments $500$ times to get a tighter confidence interval. 
\begin{figure}[!ht]
	\captionsetup[subfigure]{justification=centering}
	\begin{subfigure}[b]{0.495\linewidth}
		\includegraphics[width=\linewidth]{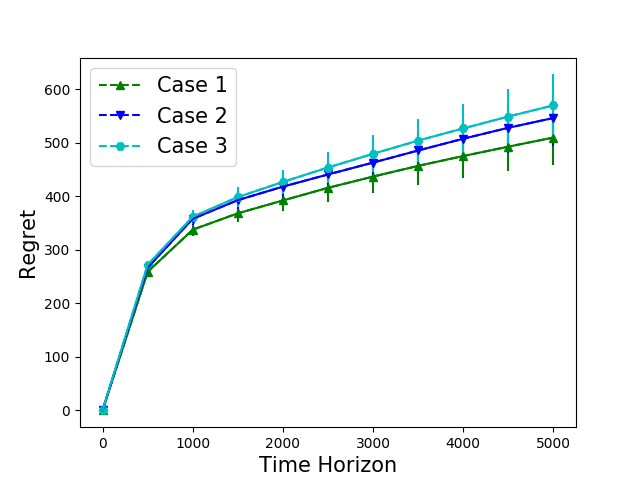}
		\caption{Synthetic dataset with $4$ classifiers where cost of using classifier $i$ in problem instance $j$ is $0.1 + (i-1)(0.09 + (j-1)0.01)$.}
		\label{fig:realizable4Cls}
	\end{subfigure}
	\quad
	\begin{subfigure}[b]{0.495\linewidth}
		\includegraphics[width=\linewidth]{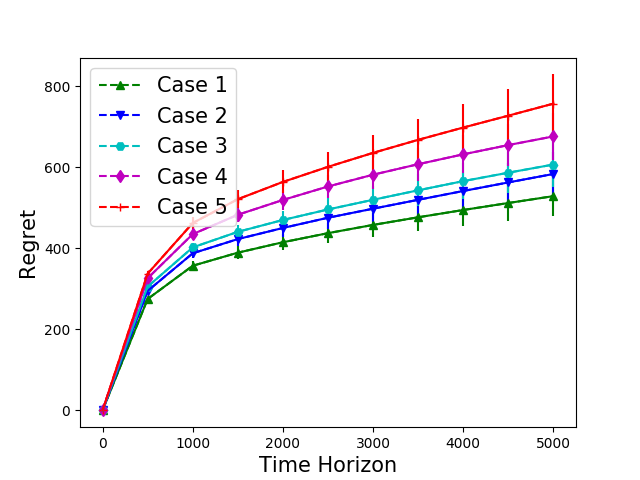}
		\caption{Synthetic dataset with $5$ classifiers where cost of using classifier $i$ for problem instance $j$ is $0.1 + (i-1)(0.06 + (j-1)0.01)$.}
		\label{fig:realizable5Cls}
	\end{subfigure}
	\caption{Performance of \ref{alg:CUSS_WD} for realizable setting where regret on y-axis is $\sum_{t=1}^T |C_{I_t} - C_{i^\star_t}| + \pist$, and it is an upper bound on the regret $\Regret_T$ defined in \cref{equ:cum_regret}. The value of $\xi$ largest for Case $1$, and it decreases for subsequent cases.}
	\label{fig:realizable}
\end{figure}

\paragraph{Regret used for Empirical Evaluation in Realizable Setting}~\\
Since the error-rate of arms is unknown, the regret defined in \cref{equ:cum_regret} can not be computed. Hence we define an alternative regret, which we call pseudo regret, as follows:
\begin{equation*}
	\Regret_T^s = \sum_{t=1}^T \left[C_{I_t} - C_{i^\star_t} + \pist \right].
\end{equation*}

\noindent
It is easy to verify that the actual regret $\Regret_T$ is upper bounded by above regret $\Regret_T^s$ as shown follows:
\begin{align*}
	\Regret_T &= \sum_{t=1}^T \left[C_{I_t} + \gamma_{I_t}(x_t) - \left(C_{i^\star_t} + \gamma_{i^\star_t}(x_t)\right) \right] = \sum_{t=1}^T \left[C_{I_t} - C_{i^\star_t} + \left(\gamma_{I_t}(x_t) -\gamma_{i^\star_t}(x_t)\right) \right]\\
	&\le \sum_{t=1}^T \left[C_{I_t} - C_{i^\star_t} + \pist \right] \hspace{5mm} \text{(Using \cref{lem:ContxErr_prob_contx})}\\
	\implies \Regret_T &\le \Regret_T^s.
\end{align*}

\subsection*{Contextual Strong  Dominance }
We next introduce contextual strong dominance property of the problem instance. 
\begin{defi}[Contextual Strong  Dominance $(\CSD)$ property] 
	\label{def:ContxSD} 
	A problem instance is said to satisfy $\CSD$ property if for all contexts following is true:
	\eqs{
		Y^i = Y \mbox{ for some } i \in[K] \implies  Y^j = Y, ~~\forall j \in [K]\setminus [i].
	}
	We represent the set of all instances satisfies $\CSD$ property by $\TCSD$.
\end{defi}
The $\CSD$ property implies that if the feedback of an arm is the same as the true reward of a given context then, the feedback of all the arms in the subsequent stages of the cascade is also the same as the true reward of a given context.

When any problem instance satisfies $\CSD$ property, the value of $\Prob{\Yti = \Yt, \Yti \ne \Ytj|X=x_t} = 0$ for $j>i$. Therefore, for any $(i,j)$ pair of arms and context $\xt$ the following is true:
\eqs{
	\forall j>i, \gamma_{i}(\xt) - \gamma_{j}(\xt) = \Prob{\Yti \ne \Ytj| X= \xt}.
} 
The above equation implies that $\CWD$ property holds trivially for the problem instances that satisfy $\CSD$ property as the difference of mean losses is the same as the probability of disagreement between two arms(fix arm $i = \ist$ for given context $\xt$).

\subsection*{Effect of adding more arms on Weak Dominance property}  
The performance of \ref{alg:CUSS_WD} can deteriorate as we increases as the number of arms. This is because the fraction of contexts that satisfy $\WD$ property can decrease with the increase in the number of arms. To see that, consider a contextual USS problem instance with three arms where arm $1$ has cost $0.1$, arm $2$ has cost $0.2$, and arm $3$ has cost $0.3$. Let there be two contexts $x_1$ and $x_2$ such that classifier $2$ is an optimal classifier for context $x_1$ and classifier $3$ for the context $x_2$, and both contexts satisfy $\WD$ property. When a new arm is added at the end of the classifiers cascade without changing the optimal arm for the contexts, let $p_{24}^{(1)}$ be the disagreement probability for classifier $2$ and $4$ for context $x_1$ and $p_{34}^{(2)}$ be the disagreement probability for classifier $3$ and $4$ for context $x_2$. It is easy to verify that if cost of using classifier $4$ is less than $\min\{0.2+p_{24}^{(1)}, 0.3+p_{34}^{(2)}\}$ then both contexts will not satisfy $\WD$ property.

	\chapter{Censored Semi-Bandits}
	\label{cha:csb}

In this chapter, we consider the problem of sequentially allocating resources in a \emph{censored semi-bandits} setup, where the learner allocates resources at each step to the arms and observes loss. The loss depends on two hidden parameters, one specific to the arm but independent of the resource allocation, and the other depends on the allocated resource. More specifically, the loss equals zero for an arm if the resource allocated to it exceeds a constant (but unknown) arm dependent threshold. The goal is to learn a resource allocation that minimizes the expected loss. The problem is challenging because the loss distribution and threshold value of each arm are unknown. We study this setting by establishing its `equivalence' to Multiple-Play Multi-Armed Bandits (MP-MAB) and Combinatorial Semi-Bandits. Exploiting these equivalences, we derive optimal algorithms for our problem setting using known algorithms for MP-MAB and Combinatorial Semi-Bandits. The experiments on synthetically generated data validate the performance guarantees of the proposed algorithms.

\section{Censored Semi-Bandits (CSB)}
\label{sec:csb_introduction}

In the classical multi-armed bandit setup, the assumption is that the learner always observes a loss/reward sample as feedback by playing arms (or actions). In many applications, the learner first needs to assign the resources to the arms, and depending on the allocated resource, the loss may or may not be observed from the selected arms. When the loss is not observed, we say that `feedback is censored' and refer to the case as `censored feedback.' Sequential allocation problems with censored feedback have received significant interest in recent times as censoring occurs naturally in several applications. Some of the examples are:

\noindent
{\bf Example 1:} (Policing and poaching control) In opportunistic crime/poaching control, the goal is to minimize total crimes in some regions using available manpower. For this, the police may spread its manpower (resource allocation) across the regions (arms) for patrolling \citep{AOR14_adler2014location, NSE10_curtin2010determining, AAMAS18_gholami2018adversary, AAMAS16_nguyen2016capture, IJCAI17_rosenfeld2017security}. A thief/poacher intending to commit a crime may abstain from committing a crime if the patrol is heavy, otherwise, continue with his plan. Thus, the censoring of feedback occurs when the thief/poacher intending to commit a crime abstains due to fear of getting caught.

\noindent
{\bf Example 2:} (Auctions) In the auction of multiple items (arms), a bidder with fixed budge decides the amount to bid (resource) for each item \citep{MS19_balseiro2019learning, NIPS17_baltaoglu2017online, Allerton11_gummadi2011optimal, ICML14_mohri2014learning, COLT16_weed2016online}.  The bidder gets to see the item's actual worth (feedback) only if she wins; otherwise, they do not see it (censored feedback). Here, the winning of an item depends on the bidding amount.

\noindent
{\bf Example 3:} (Network Utility Maximization) Power is a scarce resource in wireless networks. In multi-channel communication, nodes need to split the power across the channels to maximize their sum-rate \citep{ETT1997_ChargingAndRateControl, INFOCOM20_verma2020stochastic}. Unless a node transmits with enough power level on a channel, its transmission always fails and succeeds with a certain probability when transmitted power is above a certain threshold. Thus, the recipient gets to observe the channel quality only when enough power is given to the nodes; otherwise, it is censored.

Censoring of feedback also occurs in the problem of supplier selection  \citep{NIPS16_abernethy2016threshold}, budget allocation  \citep{UAI12_amin2012budget,ALT18_dagan18a,UAI14_lattimore2014optimal,NIPS15_lattimore2015linear}, and several others. In all these applications, unless enough resource is applied to an arm, the feedback from arms gets censored. The challenge in these problems is how to learn the quality of all the arms by appropriately allocating the available resource and then optimally allocating resources to minimize the total loss incurred.

Classical approaches to this problem are to learn from historical data \citep{AOR14_adler2014location, NSE10_curtin2010determining, IJCAI17_rosenfeld2017security, AAMAS16_zhang2016using}.  Game-theoretic approaches have also been considered \citep{AAMAS18_gholami2018adversary, AAMAS16_nguyen2016capture, IJCAI18_sinha2018stackelberg}, where the user (buyer, criminal, etc.) knows the history of allocations and responds strategically. While the classical approach of learning from historical data fails to capture the problem's sequential nature, the game-theoretic approach is agnostic to the user (buyer, criminal, etc.) behavioral modeling. In this work, we balance these two approaches by proposing a simple yet novel threshold-based user behavioral model, which we term as \emph{Censored Semi-Bandits} (CSB).
Under the CSB model, the loss incurred from each arm follows a generative structure. The learner has access to a fixed amount of resources in each round which can be allocated to the arms. Each arm has an associated threshold that decides whether the learner observes reward from that arm: if the arm receives resources below a threshold, the learner observes a loss from that arm; otherwise, no loss value is observed. The threshold captures behaviors of the arms. For example, in the crime control problem, the threat perception of a thief/poacher being caught in an area in the presence of patrolling determines the threshold level in that area.

In the first variation of our proposed behavioral models, we assume the threshold (user behavioral) is uniform across arms (set of options). We establish that this setup (with known threshold) is `equivalent' to Multiple-Play Multi-Armed Bandits (MP-MAB), where a fixed number of arms is played in each round. We also study the more general variation, where the threshold is arm dependent. We establish that this setup (with known threshold) is equivalent to Combinatorial Semi-Bandits, where a subset of arms to be played is decided by solving a combinatorial $0$-$1$ knapsack problem. Formally, we tackle the sequential nature of the resource allocation problem by establishing its equivalence to the MP-MAB and Combinatorial Semi-Bandits framework. By exploiting this equivalence for our proposed threshold-based behavioral model, we develop novel resource allocation algorithms by adapting existing algorithms and providing optimal regret guarantees. More precisely, we make the following contributions in this chapter:
\begin{itemize} 
    \item In \cref{sec:csb_known}, we improve the state-of-the-art horizon dependent algorithms \citep{NeurIPS19_verma2019censored} for estimating the thresholds and mean losses with arms. The new algorithms are simpler and have better empirical performance as these algorithms collect the loss information during the threshold estimation. 

    \item We develop a novel sequential resource allocation algorithm to the CSB problem with multiple thresholds (the number of thresholds can be smaller than the number of arms). We prove that the regret bound of the algorithm is sub-linear and depends on the number of unique thresholds. We also show empirically that the proposed algorithms have better regret performance.
    
    \item The algorithms in \cref{sec:csb_known} are horizon $(T)$ dependent and requires the minimum mean loss $(\epsilon)$ and an accuracy tolerance $(\delta)$ that decides the stopping criteria for the threshold estimation method as input.  In \cref{sec:csb_unknown}, we develop anytime algorithms that do not need $T$, $\epsilon$, and $\delta$ as input. The anytime algorithms use a linear search based method to estimate the thresholds, which is different from the binary search based method used in the horizon dependent algorithms.
    
    \item We extend the CSB setup to reward maximization setting by discussing the stochastic Network Utility Maximization problem (NUM). In the reward setting, anytime algorithms developed for the loss setting cannot be applied directly. We give algorithms that work with the known value of the time horizon. The details are given in \cref{sec:csb_num}.
\end{itemize}

\subsection{Related Work}
The problem of resource allocation in many areas has received significant interest in recent times. Several directions have been considered in resource allocation problems to tackle crime \citep{NSE10_curtin2010determining, AAMAS18_gholami2018adversary, AAMAS16_nguyen2016capture}, some of which learn from historical data while others are game-theoretic. \cite{NSE10_curtin2010determining} employ a static maximum coverage strategy for spatial police allocation while  \cite{AAMAS18_gholami2018adversary} and \cite{AAMAS16_nguyen2016capture} study game-theoretic and adversarial perpetrator strategies. We, on the other hand, restrict ourselves to a stochastic setting. The work in  \citep{AOR14_adler2014location, IJCAI17_rosenfeld2017security} look at traffic police resource deployment and consider the optimization aspects of the problem using real-time traffic, etc., which differs from the main focus of our work. \cite{AAMAS15_zhang2015keeping} investigates dynamic resource allocation in the context of police patrolling and poaching for opportunistic criminals. Here, they attempt to learn a model of criminals using a dynamic Bayesian network. Our approach proposes simpler and realistic modeling of perpetrators, where we exploit the underlying structure effectively and efficiently.

We pose our problem in the exploration-exploitation paradigm, which involves solving the MP-MAB and combinatorial 0-1 knapsack problem. It is different from the bandits with Knapsacks setting studied in \cite{JACM18_badanidiyuru2018bandits}, where resources get consumed in every round. The work of \cite{NIPS16_abernethy2016threshold}, \cite{Arxiv20_bengs2020multi}, and \cite{ICML18_jain2018firing} are similar to us in the sense that they are also threshold-based settings. However, the thresholding we employ naturally fits our problem and significantly differs from theirs. Specifically, their thresholding is either on a sample generated from an underlying distribution \citep{NIPS16_abernethy2016threshold, ICML18_jain2018firing} or chosen by the learner \cite{Arxiv20_bengs2020multi} in each round. In contrast, we work in a Bernoulli setting where the thresholding is based on the allocation. Resource allocation with semi-bandits feedback  \citep{ALT18_dagan18a, ALT20_fontaine2020adaptive, UAI14_lattimore2014optimal, NIPS15_lattimore2015linear} is also a related but less general setup where the reward is based only on allocation and a hidden threshold. Our setting requires an additional unknown parameter for each arm, a `mean loss,' which also affects the reward. 
When the learner observes no loss in the CSB setup, it is difficult to say whether it is an actual loss or a censored loss due to enough resource allocation. This dilemma leads to the learner's inability to infer loss from observed feedback when enough resources are allocated to arms. The extreme forms of such problems are studied in \citep{AISTATS19_verma2019online, ACML20_verma2020thompson, NeurIPS20_verma2020online}, where the learner can not infer the loss/ reward from the observed feedback.

Resource allocation problems in the combinatorial setting have been explored in \citep{JCSS12_cesa2012combinatorial, NIPS16_chen2016combinatorial, ICML13_chen2013combinatorial, NIPS15_combes2015combinatorial, NeurIPS20_perrault2020statistical,NIPS14_rajkumar2014online, ICML18_wang2018thompson}. Even though these are not related to our setting directly, we derive explicit connections to the sub-problem of our algorithms to the setup of \cite{ICML15_komiyama2015optimal} and \cite{NeurIPS20_perrault2020statistical}.

\section{Problem Setting}		
\label{sec:csb_problemSetting}

We consider a sequential learning problem where $K$ denotes the number of arms, and $Q$ denotes the amount of divisible resources. The loss at arm $i \in [K]$ where $[K] := \{1, 2, \ldots, K\}$, is Bernoulli distributed with mean $\mu_i \in [0, 1]$ and independent and identically distributed (IID), whose realization in the $t^{th}$ round is denoted by $X_{t, i}$. Each arm may be assigned a fraction of resources, which determines the feedback observed and the loss incurred from that arm. Formally, denoting the resources allocated to the arms by $\ba:=\{a_i: i\in [K]\ |\ a_i \in [0, Q]\}$, the loss incurred equals the realization of the arm $X_{t, i}$ if $a_i < \theta_i$, where $\theta_i \in [0, Q]$ is fixed but unknown threshold\footnote{One could consider a smooth function instead of a step function, but the analysis is more involved, and our results need not generalize straightforwardly.}. When $a_i \ge \theta_i$, which corresponds to the scenario when the allocated resources are more than the threshold, we do not observe $X_{t, i}$, and hence the loss equals $0$. \cref{fig:ThresholdFunction} depicts the relationship between allocated resources and mean loss of an arm. For each $i \in [K]$, $\theta_i$ denotes the threshold associated with arm $i$ and is such that a loss is incurred at arm $i$ only if $a_i< \theta_i$. An allocation vector $\ba$ is said to be feasible if $\sum_{i \in [K]} a_i \leq Q$ and set of all feasible allocations is denoted as $\A$. The goal is to find a feasible resource allocation that results in a maximum reduction in the total mean loss. 

\begin{figure}[!ht]
	\centering
	\includegraphics[width=0.4\linewidth]{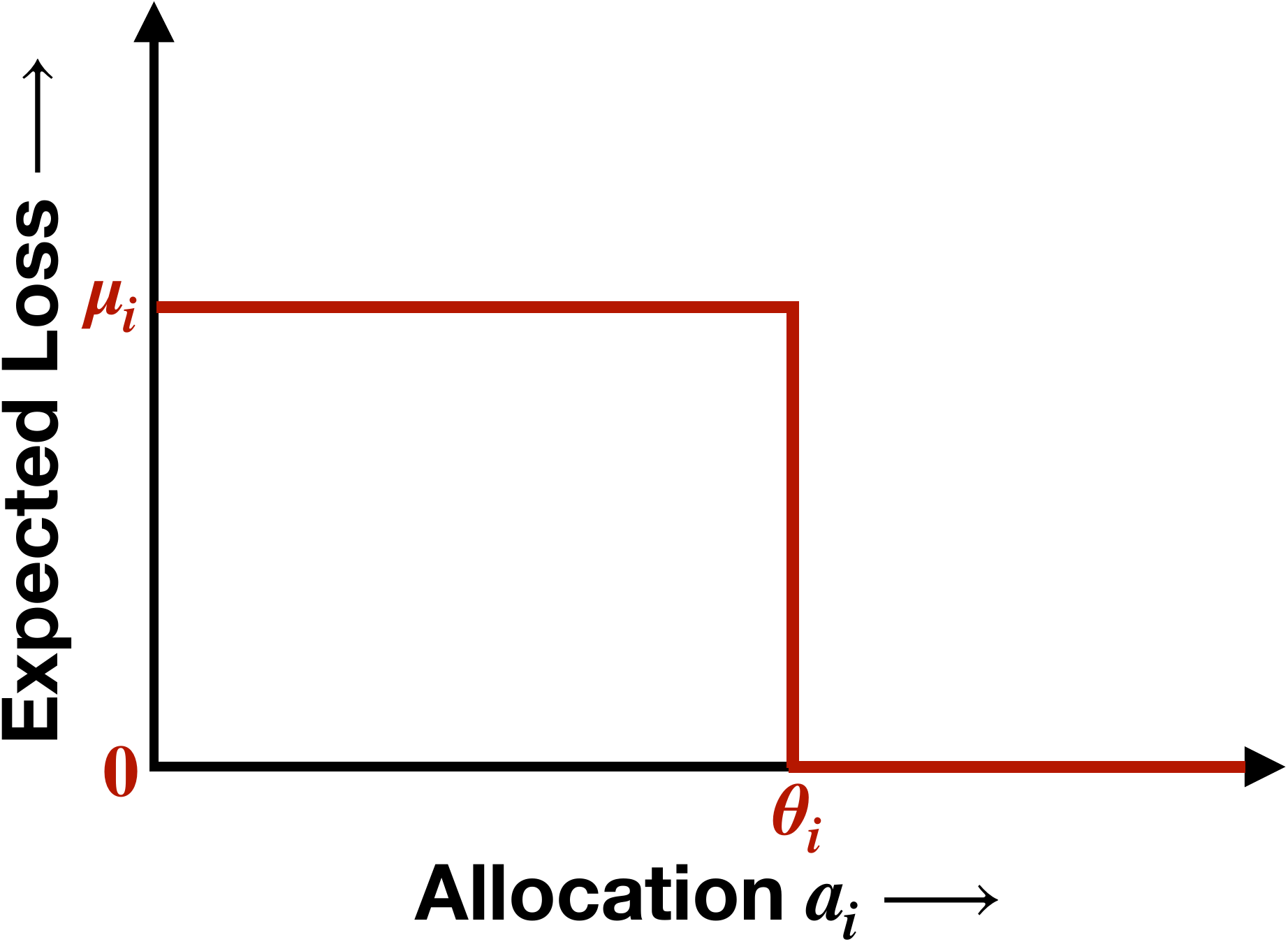}
	\caption{Relationship between allocated resources $(a_i)$ and mean loss $(\mu_i)$ of an arm $i$.}
	\label{fig:ThresholdFunction}
\end{figure}

In the CSB setup, the learner allocates resources to multiple arms. However, loss from the arms may not be observed depending on the amount of resources allocated to them. We thus have a version of the partial monitoring system \citep{MOR14_bartok2014partial,ICML12_bartok2012partial,MOR06_cesa2006regret} with semi-bandit feedback. The vectors $\btheta =\{\theta_i\}_{ i\in [K]}$ and $\bmu =\{\mu_j\}_{i \in [K]}$ are unknown and identify an instance of CSB problem, which we denote henceforth using $P=(\bmu, \btheta, Q) \in [0,1]^{K}\times \R_+^K \times \R_+$. The collection of all CSB instances is denoted as $\PCSB$. For simplicity of discussion, we assume that means are ordered as $\mu_1 \ge  \mu_2 \ge \ldots \ge \mu_K$ and for any integer $M$, refer to the first $M$ arms in the order as the top-$M$ arms. Of course, the algorithm is not aware of this order. For instance $P \in \PCSB$ with known $\bmu$, $\btheta$, and $Q$, the optimal allocation can be computed by solving the following $0$-$1$ knapsack problem:
\begin{equation*}
	\ba^\star  \in \argmin _{\ba \in \A} \sum_{i=1}^K \mu_i \one{a_i< \theta_i}.
\end{equation*}
Here, $\A = \{\ba \in [0, Q]^K | \sum_{i} a_i \le Q\}$ denotes the set of all feasible resource allocations.  Since $\bmu$ and $\btheta$ are unknown, we estimate them in an online fashion using the observations made in each round.
The interaction between the environment and a learner is given in \ref{alg:protocol}.
\begin{algorithm}[!ht]
	\renewcommand{\thealgorithm}{Algorithm 1}
	\floatname{algorithm}{}
	\caption{CSB Problem with instance $(\bmu, \btheta, Q)$}
	\label{alg:protocol}
	In round $t$: 
	\begin{enumerate}
		\setlength{\itemsep}{0pt}
		\item \textbf{Environment} generates a vector $\boldsymbol{X_t} = (X_{t,1}, X_{t,2},\ldots, X_{t,K}) \in \{0,1\}^K$, where $\EE{X_{t,i}}=\mu_i$ and the sequence $(X_{t,i})_{t\geq 1}$ is IID for all $i\in [K]$   
		\item \textbf{Learner} picks an resource allocation vector $\ba_t \in \A$
		\item \textbf{Feedback and Loss:} The learner observes a random feedback $\boldsymbol{Y_t}=\{Y_{t,i}: i\in [K]\}$, where $Y_{t,i}=X_{t,i}\one{a_{t,i}<\theta_i}$ and incurs loss $\sum_{i \in [K]}Y_{t,i}$
	\end{enumerate}
\end{algorithm}

We aim to design optimal strategies that accumulate minimum mean loss and measure its performance by comparing its mean cumulative loss with that of an Oracle that makes the optimal resource allocation in each round. Specifically, we define regret for $T$ rounds as
\begin{equation*}
\EE{\Regret_T} =  \sum_{t=1}^T\sum_{i=1}^K \mu_i \one{a_{t,i}< \theta_i} -   \sum_{t=1}^T\sum_{i=1}^K \mu_i \one{a^\star_i< \theta_i}.
\end{equation*}

Note that minimizing the mean cumulative loss of a policy is the same as minimizing the policy's regret. Our goal is to learn a policy that gives sub-linear expected regret, i.e., $\EE{\Regret_T}/T \rightarrow 0$ as $T \rightarrow \infty$. It implies that a leaner collects almost as much reward in the long run as an oracle who knows the best action from the beginning.

\subsection{Allocation Equivalent}
Next, we define when a pair of threshold vectors for the given loss vector and resources to be `equivalent.'
\begin{definition}[Allocation Equivalent]
	For fixed loss vector $\bmu$ and resources $Q$, two threshold vectors $\btheta$ and $\hat{\btheta}$ are {\em allocation equivalent} if the following holds:
	\begin{equation*}
	\min_{\ba \in \A} \sum_{i=1}^K\mu_i \one{a_i\ge \theta_i}  = 
	\min_{\ba \in \A} \sum_{i=1}^K\mu_i \one{a_i\ge \hat{\theta}_i}.
	\end{equation*}
\end{definition}
In simple words, we say that two threshold vectors $\btheta$ and $\hat{\btheta}$ are allocation equivalent if the minimum mean loss in instances $(\bmu,\btheta, Q)$ and $(\bmu, \hat{\btheta}, Q)$ are the same for fixed loss vector $\bmu$ and resource $Q$. Such equivalence allows us to estimate the threshold vector within some tolerance.

For ease of exposition and to bring out the algorithmic ideas clearly, in \cref{sec:csb_known}, we start with a setting where we assume that time horizon ($T$) is known and mean rewards are larger than some known $\epsilon>0$, i.e., $\mu_i\geq\epsilon$ for all $i\in[K]$. This setup will aid in connecting our problem with the Multi-player bandits. In \cref{sec:csb_unknown}, we relax these assumptions and develop anytime algorithms that do not need to know $\epsilon$. The algorithms in \cref{sec:csb_known}, are based on binary search methods, while that in \cref{sec:csb_unknown}, are based on linear search methods.

\section{Horizon-Dependent Algorithms with Known Lower Bound on Mean Losses}
\label{sec:csb_known}

This section introduces the algorithms for solving the CSB problem, where the time horizon ($T$) and the lower bound on the mean losses ($\epsilon$) are known. With this information, we can estimate the allocation equivalent using a binary search based method. Once allocation equivalent is known, the mean losses are estimated, and accordingly, resources are allocated among the arms. We first study a simple case where all arms have the same threshold and then study the more general case where all arms may not have the same threshold.

\subsection{Arms with Same Threshold}

We first focus on the simple case, where the threshold of all arms are the same, i.e., $\theta_i=\theta_s$ for all $i \in [K]$ to bring out the main ideas of the algorithm we develop. With abuse of notation, we continue to denote an instance of CSB with the same threshold as $(\bmu, \theta_s, Q)$, where $\theta_s \in (0, Q]$. Note that the threshold is the same, but the mean losses can be different across the arms.  Though $\theta_s$ can take any value in the interval $(0, Q]$, a threshold equivalent to $\theta_s$ can be confined to a finite set. The following lemma shows that a threshold equivalent lies in a set consisting of the $K$ elements. 
\begin{restatable}{lemma}{ThetaSet}
	\label{lem:thetaSet}
	Let $\theta_s \in (0,Q]$, $M=\min\{\lfloor Q/\theta_s \rfloor,$ $K\}$ and $\hat{\theta}_s=Q/M$. 
	Then $\theta_s$ and $\hat{\theta}_s$ are threshold equivalent. Further, $\hat{\theta}_s \in \Theta$ where $\Theta = \{ Q/K, Q/(K-1), \cdots, Q\}$. 
\end{restatable}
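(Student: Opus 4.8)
The plan is to argue directly from the definition of allocation equivalence together with the structure of the optimal knapsack allocation when all thresholds coincide. First I would observe that with a common threshold $\theta_s$, any feasible allocation $\ba \in \A$ can only "switch off" an arm $i$ (i.e.\ make $a_i \ge \theta_s$) by spending at least $\theta_s$ units of resource on it; since the total budget is $Q$, at most $\lfloor Q/\theta_s\rfloor$ arms can be switched off, and of course at most $K$ arms exist, so at most $M = \min\{\lfloor Q/\theta_s\rfloor, K\}$ arms can be switched off. Moreover, because it never helps to allocate strictly more than $\theta_s$ to any single arm (any excess is wasted and could be removed without changing which arms are off), an optimal allocation switches off exactly $M$ arms, each receiving exactly $\theta_s$, and this uses $M\theta_s \le Q$ resources, which is feasible. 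Since the means are ordered $\mu_1 \ge \cdots \ge \mu_K$, the greedy choice of switching off the top-$M$ arms is optimal, giving
\[
\min_{\ba \in \A} \sum_{i=1}^K \mu_i \one{a_i \ge \theta_s} \;=\; \sum_{i=1}^{M}\mu_i .
\]

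Next I would run the identical argument for $\hat\theta_s = Q/M$. The key numerical point to check is that $\lfloor Q/\hat\theta_s \rfloor = \lfloor Q/(Q/M)\rfloor = \lfloor M \rfloor = M$, and $M \le K$ by construction, so the corresponding quantity for $\hat\theta_s$ is $\min\{M, K\} = M$, exactly the same $M$ as before. Hence, by the same reasoning, an optimal allocation under threshold $\hat\theta_s$ switches off the top-$M$ arms (each receiving $\hat\theta_s = Q/M$, using the full budget $Q$), so
\[
\min_{\ba \in \A} \sum_{i=1}^K \mu_i \one{a_i \ge \hat\theta_s} \;=\; \sum_{i=1}^{M}\mu_i ,
\]
which matches the expression for $\theta_s$. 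By the definition of allocation equivalent, $\theta_s$ and $\hat\theta_s$ are therefore threshold equivalent. Finally, since $1 \le M \le K$ and $M$ is an integer, $\hat\theta_s = Q/M \in \{Q/K, Q/(K-1), \dots, Q/1\} = \Theta$, which gives the last claim.

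The main obstacle I anticipate is the careful justification of the "no arm needs more than $\theta_s$" and "exactly $M$ arms are switched off" claims — i.e.\ arguing rigorously that the greedy top-$M$ allocation is optimal rather than merely feasible. This needs a short exchange/monotonicity argument: if a candidate optimal allocation switches off a set $S$ with $|S| = m < M$, one can free up resources by trimming any over-allocation to exactly $\theta_s$ per arm, and then there is enough budget ($(M-m)\theta_s$ worth, since $m < M \le \lfloor Q/\theta_s\rfloor$) to switch off an additional arm, weakly decreasing the objective because all $\mu_i \ge 0$; and if $S$ does not consist of the $m$ largest means, swapping in a larger mean and swapping out a smaller one weakly decreases the objective as well. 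Iterating these two moves yields the top-$M$ allocation without increasing the loss, establishing optimality. Everything else is bookkeeping with floors and the resource constraint, which I would keep terse.
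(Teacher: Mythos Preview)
Your approach is correct and essentially the same as the paper's: both argue that under either threshold the optimal allocation switches off exactly the top-$M$ arms (the paper gets there by observing $\hat\theta_s \ge \theta_s$, you by directly computing $\lfloor Q/\hat\theta_s\rfloor = M$), so the two minimum losses coincide, and $M\in\{1,\dots,K\}$ forces $\hat\theta_s\in\Theta$. One cosmetic slip: with the loss indicator $\one{a_i<\theta_s}$ the minimum should read $\sum_{i=M+1}^K\mu_i$ rather than $\sum_{i=1}^M\mu_i$, but this does not affect the argument since you only need the two minima to agree.
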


Let $M= \min\{\lfloor Q/\theta_s\rfloor, K\}$. When arms are sorted in the decreasing order of mean losses, we refer to the first $M$ arms as the \emph{top-}$M$ arms and the remaining arms as \emph{bottom-}$(K-M)$ arms. The optimal allocation with the same threshold $\theta_s$ is to allocate $\theta_s$ amount of resource to each of the \emph{top-}$M$ arms and allocate the remaining resources to the other arms. The detailed proof of \cref{lem:thetaSet} and all other missing proofs appear in \cref{sec:csb_appendix}.

\cref{lem:thetaSet} shows that the candidates for the threshold equivalent $\hat{\theta}_s$ for any instance $(\bmu,\theta_s,Q)$ are finite. Once the threshold equivalent is known, the problem reduces to identifying the \emph{top-}$M$ arms and assigning resource $\hat{\theta}_s$ to each one of them to minimize the total mean loss. The latter part is equivalent to solving a Multiple-Play Multi-Armed Bandits problem, as discussed next.

After knowing the allocation equivalent, a learner's optimal policy is to allocate $\hat\theta_s$ fraction of resource among $M$ arms having the highest mean loss. As initially, mean losses are not known, empirical estimates of the losses can be used. When resource $\hat{\theta}_s$ is allocated to $M$ arm having the highest empirical losses, no loss is observed from them, but a loss of each of the remaining $K-M$ arms is observed (semi-bandits). In bandits literature, such problems where one can sample rewards (losses) from a subset of arms is known as the Stochastic Multiple-Play Multi-Armed Bandits (MP-MAB) problem. Thus once the learner identifies a threshold equivalent of $\theta$, the CSB problem is equivalent to solving an MP-MAB problem. We adapt the MP-TS algorithm \citep{ICML15_komiyama2015optimal} to our problem as it is shown to achieve optimal regret bound for Bernoulli distributions.

\subsubsection{Equivalence to Multiple-Play Multi-Armed Bandits}
The learner can play a subset of arms in each round known as superarm \citep{TAC1987_MultiPlayBandits_Anatharam} in the stochastic Multiple-Play Multi-Armed Bandits (MP-MAB) \cite{ICML15_komiyama2015optimal}. The size of each superarm is fixed (and known). The mean loss of a superarm is the sum of the means of its constituting arms. The learner plays a superarm in each round and then observes the loss from each arm played (semi-bandit feedback). The learner's goal is to play a superarm that has the smallest mean loss. A policy in MP-MAB selects a superarm in each round based on past information. The policy's performance is measured in terms of regret, defined as the difference between cumulative loss incurred by policy and that incurred by playing an optimal superarm in each round. Let $(\bmu, m) \in [0,1]^K \times \N_+$ denote an instance of MP-MAB where $\bmu$ denotes the mean loss vector, and $m \le K$ denotes the size of each superarm. Let $\PCSB_s \subset \PCSB$ denote the set of CSB instances with the same threshold for all arms. For any  $(\bmu,\theta_s, Q) \in \PCSB_s$ with $K$ arms and known threshold $\theta_s$, let $(\bmu, m)$ be an instance of MP-MAB with $K$ arms and each arm has the same Bernoulli distribution as the corresponding arm in the CSB instance with $m=K-M$, where $M=\min\{\floor{Q/\theta_s}, K\}$ as earlier. Let $\PMP$ denote the set of resulting MP-MAB problems and $f: \PCSB_s \rightarrow \PMP$ denote the above transformation.

Let $\pi$ be a policy on $\PMP$. We can use the policy $\pi$ for any $(\bmu,\theta_s, Q) \in \PCSB_s$ with known $\theta_s$ to select which set of arms to allocate resources. It is done as follows: In round $t$, let the information $(C_1, Y_1, C_2,Y_2, \ldots, C_{t-1}, Y_{t-1})$ collected from a CSB instance, where $C_r$ is the set of $K-M$ arms where no resource is allocated  in round $r$ and $Y_r$ is the samples observed from these arms. This information is given to policy $\pi$, which returns a set $C_t$ with $K-M$ elements in round $t$. Then all arms other than arms in $C_t$ are given resource $\theta_s$. Let this policy on $(\bmu,\theta_s, Q) \in \PCSB_s$ be denoted as $\pi^\prime$. Similarly, let $\beta^\prime$ be a policy on $\PCSB_s$ that can be adapted to yield a policy for $\PMP$ as follows: In round $t$, let the information $(M_1, Y_1, M_2,Y_2, \ldots, M_{t-1}, Y_{t-1})$ collected from an MP-MAB instance, where $M_r$ is the superarm played in round $r$ and $Y_r$ is the associated loss observed from each arms in $M_r$,  is given to the policy $\beta^\prime$ which returns a set $M_t$ of $K-M$ arms where no resources has to be applied. The superarm corresponding to $M_t$ is then played. Let this policy on $\PMP$ be denoted as $\beta$. Note that when $\theta_s$ is known, the mapping is invertible. Our next result gives regret equivalence between the MP-MAB problem and CSB problem with a known same threshold. 
\begin{restatable}{proposition}{RegretEquiST}
	\label{prop:RegretEquiST}
	Let  $f: \PCSB_s \rightarrow \PMP$ and $P=(\bmu,\theta_s, Q) \in \PCSB_s$ with known $\theta_s$. Then the regret of  policy $\pi^\prime$ on $P$ is same as the regret of policy $\pi$ on $f(P)$. Similarly, let $P^\prime=(\bmu,m) \in \PMP$, then the regret of a policy $\beta$ on $P^\prime$ is same as the regret of policy $\beta^\prime$ on $f^{-1}(P^\prime)$. Thus the set $\PCSB$ with a known $\theta_s$ is 'regret equivalent' to $\PMP$, i.e., $\Regret(\PCSB_s)=\Regret(\PMP)$. 
\end{restatable}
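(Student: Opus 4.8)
The plan is to establish the two stated regret identities separately and then combine them into the claimed equivalence $\Regret(\PCSB_s)=\Regret(\PMP)$. The key observation that makes everything work is that, once the (same) threshold $\theta_s$ is known, \cref{lem:thetaSet} lets the learner replace $\theta_s$ by the allocation-equivalent $\hat\theta_s = Q/M$ with $M=\min\{\lfloor Q/\theta_s\rfloor, K\}$, so that an optimal allocation always consists of giving resource exactly $\hat\theta_s$ to some $M$ arms and nothing to the remaining $K-M$ arms. Thus every feasible allocation that the learner would rationally consider is indexed by the set $C$ of $K-M$ un-resourced arms, and the censored feedback from such an allocation is precisely $\{X_{t,i}: i\in C\}$ — i.e., i.i.d.\ Bernoulli$(\mu_i)$ samples from exactly the arms in $C$. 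This is literally the semi-bandit feedback of playing the superarm $C$ (of fixed size $K-M$) in the MP-MAB instance $f(P)=(\bmu, m)$ with $m=K-M$.

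First I would make the correspondence between histories and decisions precise. Given the policy $\pi$ on $\PMP$, define $\pi'$ on $P=(\bmu,\theta_s,Q)$ as in the statement: feed $\pi$ the history $(C_1,Y_1,\dots,C_{t-1},Y_{t-1})$ of un-resourced arm sets and their observed losses, let $\pi$ output a size-$(K-M)$ set $C_t$, and allocate $\hat\theta_s$ to each arm outside $C_t$. Because the feedback distributions match coordinatewise (Bernoulli$(\mu_i)$ for $i\in C_t$, nothing otherwise), the joint law of $(C_1,Y_1,C_2,Y_2,\dots)$ under $\pi'$ on the CSB instance equals the joint law of $(M_1,Y_1,M_2,Y_2,\dots)$ under $\pi$ on the MP-MAB instance (a straightforward induction on $t$, since at each step the next decision is a fixed measurable function of the common history and the next observation is drawn from the identical kernel). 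Next I would check that the per-round costs agree: the instantaneous mean loss in the CSB instance for allocation determined by $C_t$ is $\sum_{i\in C_t}\mu_i$, while in MP-MAB the instantaneous mean loss of superarm $C_t$ is also $\sum_{i\in C_t}\mu_i$; and the optimal values agree because, by \cref{lem:thetaSet}, $\ba^\star$ corresponds to the set $C^\star$ of the $K-M$ arms with smallest $\mu_i$ (the \emph{bottom-}$(K-M)$ arms), which is exactly the optimal superarm in MP-MAB. Summing over $t$ and taking expectations gives $\EE{\Regret_T(\pi',P)} = \EE{\Regret_T(\pi, f(P))}$ for every $T$, which is the first identity. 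The second identity — that $\beta$ on $P'=(\bmu,m)\in\PMP$ has the same regret as $\beta'$ on $f^{-1}(P')$ — is proved by the same argument run in the reverse direction, using that $f$ is invertible when $\theta_s$ is known (so $m=K-M$ determines $M$ and hence $\hat\theta_s=Q/M$, giving a bona fide CSB instance whose optimal allocation again matches the MP-MAB optimum). Finally, $\Regret(\PCSB_s)=\Regret(\PMP)$ follows: any minimax/optimal regret bound achievable on $\PMP$ transports via $f^{-1}$ to $\PCSB_s$ and vice versa via $f$, since the two maps are mutually inverse regret-preserving bijections between the problem classes.

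The main obstacle I anticipate is not conceptual but bookkeeping: one must verify carefully that the reduction respects the information structure — that $\pi$ (resp.\ $\beta$) genuinely only ever sees, and can only condition on, the un-resourced arms and their losses, with no side information leaking from the resourced arms (which in the CSB model produce the deterministic observation $0$ and so carry no information). One should also be careful about the edge case $M=K$ (i.e., $Q\ge K\theta_s$), where every arm can be given its threshold and the optimal loss is $0$; here $m=0$, the superarm is empty, no feedback is ever received, and both regrets are trivially $0$ — this case must be acknowledged so the bijection and the claim hold degenerately. A second minor subtlety is ensuring the allocation the learner actually uses is feasible: $(K-M)\cdot 0 + M\cdot\hat\theta_s = M\cdot (Q/M) = Q \le Q$, so feasibility holds with equality, and any leftover-resource issues are absent precisely because we pass to $\hat\theta_s$ rather than $\theta_s$. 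Once these points are dispatched, the coupling argument and the cost/optimal-value matching are routine, and the proposition follows.
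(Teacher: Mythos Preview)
Your proposal is correct and follows essentially the same route as the paper: identify the set $C_t$ of un-resourced arms with the superarm $M_t$ of size $K-M$, verify that per-round mean losses and optimal values coincide under this bijection, and conclude the two-way inequality $\Regret(\PCSB_s)=\Regret(\PMP)$. You are more careful than the paper (the explicit coupling/induction on histories, the $M=K$ edge case, and the feasibility check via $\hat\theta_s=Q/M$), but the argument is the same in substance.
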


The above proposition suggests that any algorithm which works well for the MP-MAB problem also works well for the CSB problem once the threshold is known. Hence one can use MP-MAB algorithms like MP-TS \citep{ICML15_komiyama2015optimal} and ESCB \citep{NIPS15_combes2015combinatorial} after knowing the threshold equivalent of $\theta_s$. MP-TS uses Thompson Sampling, whereas ESCB uses UCB (Upper Confidence Bound) and KL-UCB type indices. One can use any one of these algorithms. But we adapt MP-TS to our setting as it gives better empirical performance and is shown to achieve optimal regret bound for Bernoulli distributed rewards (losses). We next discuss the lower bound for CSB instances with the same threshold.

\paragraph{Lower bound.} As a consequence of the above equivalence and one-to-one correspondence, a lower bound on MP-MAB is also a lower bound on the CSB instance with the same threshold. Therefore, the following lower bound given for any strongly consistent algorithm \cite[Theorem 3.1]{TAC1987_MultiPlayBandits_Anatharam} is also a lower bound on the CSB problem with the same threshold:
\begin{equation}
\label{eqn:LowerBound}
\lim_{T\rightarrow \infty} {\frac{\mathbb{E}[\Regret_T]}{\log T} \ge \sum_{i=1}^{M}  \frac{\mu_{i}-\mu_{M+1}}{d(\mu_{M+1}, \mu_{i})} },
\end{equation}
where $d(p,q)$ is the KL divergence between two Bernoulli distributions with parameter $p$ and $q$. Also note that we are in loss setting.

\subsubsection{Algorithm: \ref{alg:CSB-ST}}
We develop an algorithm named \ref{alg:CSB-ST} for solving the Censored Semi-Bandits problem having the same threshold for all arms. It exploits the result in \cref{lem:thetaSet}, to learn an allocation equivalent of threshold and regret equivalence established in \cref{prop:RegretEquiST} to minimize the regret using an MP-MAB algorithm. \ref{alg:CSB-ST} works as follows: It takes $\delta$ and $\epsilon$ as input, where $\delta$ is the confidence on the correctness of estimated allocation equivalent and $\epsilon$ is such that $\mu_K \ge \epsilon > 0$. The value of $\delta$ can be a function of horizon ($T$), e.g., $\delta=1/T$. We set the prior distribution for each arm's mean loss as the Beta distribution $\beta(1, 1)$. For each arm $i \in [K], ~S_i$ represents the number of rounds when the loss is $1$, and $F_i$ represents the number of rounds when the loss is $0$ whenever the arm $i$ receives resource above its threshold.

We initialize $\Theta=\{Q/K, Q/(K-1), \dots, Q\}$ as given in Lemma \ref{lem:thetaSet}. The elements of $\Theta$ are in increasing order, and each of them is a candidate for allocation equivalent of  $\theta_s$. We use the set $\Theta$ to find the threshold estimate $\hat{\theta}_s$, which is threshold equivalent to the underlying threshold $\theta_s$ with high probability (at least $1-\delta$) by doing a binary search over it. The search starts by taking $\hat{\theta}_s$ to be the middle element in $\Theta$. The variables $l,~u,$ and $j$ are maintained to keep track of the estimation of allocation equivalent. The variable $l$ represents the lowest index of the possible candidate for allocation equivalent, $u$ represents the largest index of the possible candidate for allocation equivalent, and $j$ represents the element of the set $\Theta$, which will be used as a threshold in the next round. Let $S_i(t)$ and $F_i(t)$ denote the values of $S_i$ and $F_i$ in the starting of the round $t$. In round $t$, a sample $\hat\mu_{t,i}$ is drawn from $\beta(S_i(t), F_i(t))$ for each arm $i \in [K]$, which is independent of other arms. The values of $\hat\mu_{t,i}$ are ranked in the decreasing order, and the \emph{top-}$(Q/\hat\theta_s)$ (denoted as set $A_t$) arms are allocated $\hat\theta_s$ amount of resource, and their losses are observed.

Before knowing allocation equivalent, if a loss is observed at any of the arms in the set $A_t$, it implies that $\hat{\theta}_s$ is an underestimate of allocation equivalent. Hence $\hat{\theta}_s$ and all the candidates smaller than the value of $\hat{\theta}_s$ in $\Theta$ are eliminated, and the binary search is repeated in the remaining half of the elements again by starting with the middle element. The loss and no-loss counts are also updated as $S_i = S_i + X_{t,i}, F_i = F_i + 1-X_{t,i}+Z_i$ for all arms. The variable $Z_i, ~\forall i\in [K]$ keeps track of how many times no loss is observed for arm $i$ before a loss is observed when the arm $i$ has allocated $\hat{\theta}_s$ amount of resources. The variable $Z_i, i \in [K]$ is maintained for each arm because the arms in $A_t$ may not be the same in each round. It allows us to distinguish the zeros observed when the arm receives over and under resource allocation. Once a loss is observed for any arm in set $A_t$, the variable $Z_i$ is reset to zero for all arms.

If no loss is observed for all arms in the set $A_t$, $Z_i$ is incremented by $1$ for each arm $i \in A_t$ and variable $C$ is incremented by $1$. The variable $C$ keeps track of the number of consecutive rounds for which no loss is observed on all the arms that are allocated $\hat{\theta}_s$ amount of resource. It changes to $0$ either after observing a loss or if no loss is observed for consecutive $W_\delta$ rounds, where the value of $W_\delta$ ensures $\hat{\theta}_s$ is an allocation equivalent with the probability of at least $1-\delta$. If $C$ equals $W_\delta$, then with high probability, $\hat{\theta}_i$ is possibly an overestimate of allocation equivalent. Accordingly, all the candidates larger than the current value of $\hat{\theta}_s$ in $\Theta$ are eliminated, and the binary search is repeated, starting with the middle element in the remaining half. Note that the current value of $\hat{\theta}_s$ is not eliminated because it is possible that $\hat{\theta}_s$ may be only upper bound for threshold. The value of $C$ as well as $Z_i,\; \forall i \in [K]$ are reset to $0$. Resetting $Z_i$ values to zero once the number of zeros observed reaches $W_\delta$ ensures that they do not add to $F_i$ values when the resources are over-allocated. After this, the loss and no-loss counts are updated as $S_i = S_i + X_{t,i}, F_i = F_i + 1-X_{t,i}$ for each arm $i \in [K] \setminus A_t$.

\begin{algorithm}[!ht] 
	\renewcommand{\thealgorithm}{\bf CSB-SK} 
	\floatname{algorithm}{}
	\caption{Algorithm for CSB problem having Same Threshold with Known Horizon and $\epsilon$}
	\label{alg:CSB-ST}
	\begin{algorithmic}[1]
		\State \textbf{Input:} $\delta, \epsilon$
		\State Set $W_\delta = {\log(\log_2(K)/\delta)}/({\log(1/(1-\epsilon))})$ and $\forall i \in [K]: S_i=1, F_i=1, Z_i=0$ 
		\State Initialize $\Theta$ as given in Lemma \ref{lem:thetaSet}, $C=0, l=1, u = K, j = \floor{(l+u)/2}$ 
		\For{$t=1,2,\ldots,$}
			\State Set $\hat{\theta}_s = \Theta[j]$ and $\forall i \in [K]: \hat{\mu}_{t,i} \leftarrow \beta(S_i, F_i)$
			\State $A_t \leftarrow$ set of \emph{top-}$({Q}/{\hat{\theta}_s})$ arms with the largest values of $\hat{\mu}_{t,i}$
			\State $\forall i \in A_t:$ allocate $\hat{\theta}_s$ resource and observe $X_{t,i}$
			\If{$j \ne u$} 
				\If{$X_{t,a} = 1$ for any $a \in A_t$} 
					\State Set $l=j+1, ~j = \floor{(l+u)/2}, C=0$
					\State $\forall i \in [K]$: set $S_i = S_i+X_{t,i}, F_i = F_i+1-X_{t,i} + Z_i, Z_i =0$
				\Else
					\State Set $C = C + 1$ and $\forall i \in A_t: Z_i = Z_i + 1$
					\State If $C = W_\delta$ then set $u=j, j = \floor{(l+u)/2}$, $C=0, \forall i \in [K]: Z_i =0$
					\State $\forall i \in [K]\setminus A_t: S_i = S_i + X_{t,i}, F_i = F_i + 1 - X_{t,i}$
				\EndIf
				\Else
					\State $\forall i \in [K]\setminus A_t: S_i = S_i + X_{t,i}, F_i = F_i + 1 - X_{t,i}$
			\EndIf
		\EndFor
	\end{algorithmic}
\end{algorithm}

Since $\Theta$ has $K$ elements, the search for an allocation equivalent of $\theta_s$ terminates in a finite number of rounds with high probability. Once this happens, the algorithm allocates resources among \emph{top-}$(Q/\hat\theta_s)$ arms (from Lemma \ref{lem:thetaSet}) in the subsequent rounds and observes losses from remaining arms, i.e., the losses are observed for $K-M$ arms (multiple-play) in each round, where $M = Q/\hat\theta_s$. Observe that the \emph{top-}$(Q/\hat\theta_s)$ arms correspond to top arms with the highest estimated means, which are generated from an associated beta distribution. Hence after finding the allocation equivalent of $\theta_s$, our algorithm is the same as MP-TS. We leverage this observation to adapt the regret bounds of MP-TS to our loss setting.

Once $\hat\theta_s$ is known, the mean losses vector $\bmu$ needs to be estimated. The resources can be allocated such that no losses are observed for maximum $M$ arms. As our goal is to minimize the mean loss, we have to select $M$ arms with the highest mean loss and then allocate $\hat\theta_s$ to each of them. It is equivalent to find $K-M$ arms with the least mean loss, then allocate no resources to these arms and observe their losses. These losses are then used for updating the empirical estimate of the mean loss of arms.

\subsubsection{Analysis of \ref{alg:CSB-ST}}
\label{sssec:sameThetaRegretBounds}
Note that when $\hat{\theta}_s$ is an underestimate, and no loss is observed for consecutive $W_\delta$ rounds, then $\hat{\theta}_s$ will be reduced, which leads to a wrong estimate of $\hat{\theta}_s$. To avoid this, we set the value of $W_\delta$ such that the probability of happening of such an event is upper bounded by $\delta$. The next lemma gives a bound on the number of rounds needed to find threshold equivalent for threshold $\theta_s$ with high probability.

\begin{restatable}{lemma}{sameThresholdEstRounds}
	\label{lem:sameThresholdEstRounds}
	Let $(\bmu, \theta_s, Q)$ be an CSB instance with same threshold, where $\mu_1 \geq  \epsilon>0$. Then with probability at least $1-\delta$, the number of rounds needed by \ref{alg:CSB-ST} to find the  threshold equivalent of  $\theta_s$ is upper bounded by 
	\begin{equation*}
	T_{\theta_{s}^k}\le \frac{\log(\log_2(K)/\delta)}{\log\left({1}/{(1-\epsilon)}\right)}\log_2(K).
	\end{equation*}
\end{restatable}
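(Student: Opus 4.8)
\textbf{Proof proposal for Lemma~\ref{lem:sameThresholdEstRounds}.}

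The plan is to bound the total number of rounds consumed by the binary-search phase of \ref{alg:CSB-ST} before the current candidate $\hat\theta_s$ becomes a threshold equivalent of $\theta_s$. The key structural fact is that the search is a binary search over the ordered set $\Theta$, which has $K$ elements, so the number of distinct candidates $\hat\theta_s$ that are ever examined is at most $\lceil\log_2 K\rceil$. For each such candidate, I will show that the algorithm spends at most $W_\delta$ rounds on it before moving on (either it observes a loss at some arm in $A_t$, which immediately advances $l$, or it observes $W_\delta$ consecutive no-loss rounds, which advances $u$). Multiplying these two bounds gives the claimed $W_\delta\log_2(K) = \frac{\log(\log_2(K)/\delta)}{\log(1/(1-\epsilon))}\log_2(K)$, provided we can rule out the one ``bad'' way the search could go wrong, namely that while $\hat\theta_s$ is an \emph{underestimate} of the allocation equivalent, the algorithm nonetheless sees $W_\delta$ consecutive zeros on the over-allocated-looking arms and wrongly shrinks $u$.

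First I would set up the per-candidate bound. Fix a candidate value $\hat\theta_s = \Theta[j]$ with $j \ne u$. On each round using this candidate, the algorithm allocates $\hat\theta_s$ to the $M = Q/\hat\theta_s$ arms with the largest sampled means and observes $X_{t,i}$ for those arms. If any observed $X_{t,a}=1$, then $l$ is updated and we leave this candidate after that round. Otherwise the consecutive-zero counter $C$ increments, and once $C$ reaches $W_\delta$, $u$ is updated and we again leave the candidate. So at most $W_\delta$ rounds are charged to any single candidate in the search. Combined with the binary-search depth bound of $\lceil\log_2 K\rceil$ on the number of candidates, the total number of search rounds is at most $W_\delta\lceil\log_2 K\rceil$, which is the stated quantity with $W_\delta = \log(\log_2(K)/\delta)/\log(1/(1-\epsilon))$.

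The main obstacle — and the reason the bound holds only with probability $\ge 1-\delta$ rather than deterministically — is controlling the event that the search terminates at a value that is \emph{not} a threshold equivalent. This happens only if, at some candidate $\hat\theta_s$ that actually underestimates the allocation equivalent (so at least one arm in $A_t$ has allocation $\hat\theta_s < \theta_s$ and hence still incurs a genuine $\mathrm{Ber}(\mu_i)$ loss with $\mu_i \ge \epsilon$), the algorithm nevertheless observes $W_\delta$ consecutive rounds in which \emph{every} arm in $A_t$ returns $0$. Conditioned on being at such an underestimating candidate, in each round at least one arm in $A_t$ returns a $1$ with probability at least $\epsilon$, so the probability of $W_\delta$ consecutive all-zero rounds is at most $(1-\epsilon)^{W_\delta}$. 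By the choice $W_\delta = \log(\log_2(K)/\delta)/\log(1/(1-\epsilon))$ this is at most $\delta/\log_2 K$. There are at most $\lceil\log_2 K\rceil$ candidates examined, so a union bound over them gives total failure probability at most $\delta$. On the complementary event, every shrink of $u$ occurs only at a candidate that genuinely over-covers (or exactly covers) the threshold, the search behaves as a correct binary search, and it terminates within $W_\delta\lceil\log_2 K\rceil$ rounds at a value $\hat\theta_s$ that is a threshold equivalent of $\theta_s$ by Lemma~\ref{lem:thetaSet}. Finally I would simplify $\lceil\log_2 K\rceil \le \log_2 K$ (or absorb the ceiling) to match the stated form of the bound, completing the argument.
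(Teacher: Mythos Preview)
Your proposal is correct and follows essentially the same approach as the paper: bound the binary-search depth over $\Theta$ by $\log_2 K$, charge at most $W_\delta$ rounds per candidate, and use a union bound over the at most $\log_2 K$ underestimating candidates on the event that $W_\delta$ consecutive zeros are observed, each of which has probability at most $(1-\epsilon)^{W_\delta}=\delta/\log_2 K$. One trivial slip: you write $\lceil\log_2 K\rceil \le \log_2 K$, which is of course backwards; the paper simply drops the ceiling throughout, so just match that convention rather than claim an inequality.
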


For instance $(\bmu,\theta, Q)$ and any feasible allocation $\ba \in \A$, we define $\nabla_{\ba} = \sum_{i=1}^K\mu_i\big(\one{a_i < \theta_i} - \one{a_i^\star < \theta_i}\big)$, $\nabla_{\max} = \max\limits_{\ba \in \A } \nabla_{\ba}$, and $\nabla_{\min} = \min\limits_{\ba \in \A } \nabla_{\ba}$. We are now ready to state the regret bound.

\begin{restatable}{theorem}{regretSameThreshold}
	\label{thm:regretSameThreshold}
	Let $\mu_K\geq \epsilon>0$, $W_\delta = {\log(\log_2(K)/\delta)}/{\log(1/(1-\epsilon))}$, $\mu_{M} > \mu_{M+1},$ and $T>T_{\theta_{s}^k}$. Set $\delta=T^{-(\log T)^{-\alpha}}$ in \ref{alg:CSB-ST} such that $\alpha >0$. Then the expected regret of \ref{alg:CSB-ST} is upper bounded by
	\begin{equation*}
		\EE{\Regret_T} \le W_\delta\log_2{(K)}\nabla_{\max}   + O\left((\log T)^{{2}/{3}}\right) + \sum_{i \in [M]} \frac{(\mu_i-\mu_{M+1} )\log {T}}{d( \mu_{M+1},\mu_i)}.
	\end{equation*}
\end{restatable}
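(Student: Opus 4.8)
The plan is to decompose the expected regret into three contributions: (i) the rounds spent searching for a threshold equivalent $\hat\theta_s$, (ii) the rounds that would be ``bad'' due to the small (but nonzero) probability that the binary search returns an incorrect $\hat\theta_s$, and (iii) the rounds after $\hat\theta_s$ is correctly identified, during which the algorithm behaves exactly like MP-TS on the induced MP-MAB instance. For (i), I would invoke \cref{lem:sameThresholdEstRounds}: with probability at least $1-\delta$ the search terminates in at most $T_{\theta_s^k} \le W_\delta \log_2(K)$ rounds, and in each such round the instantaneous regret is at most $\nabla_{\max}$. This immediately yields the first term $W_\delta \log_2(K)\,\nabla_{\max}$ in the bound.

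For (iii), the key observation (already established in the text preceding the theorem) is that once the allocation equivalent $\hat\theta_s$ is found, the algorithm allocates $\hat\theta_s$ to the \emph{top-}$M$ arms by their Thompson-sampled means and observes losses from the remaining $K-M$ arms---precisely an execution of MP-TS on the instance $f(P) = (\bmu, K-M) \in \PMP$. By \cref{prop:RegretEquiST}, the regret of this phase equals the MP-MAB regret of MP-TS, and the known optimal bound for MP-TS under Bernoulli losses \citep{ICML15_komiyama2015optimal} gives, under the gap assumption $\mu_M > \mu_{M+1}$, a leading term $\sum_{i\in[M]} \frac{(\mu_i - \mu_{M+1})\log T}{d(\mu_{M+1},\mu_i)}$ plus an $O((\log T)^{2/3})$ lower-order term. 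I would need to be slightly careful here that the counters $S_i, F_i$ accumulated \emph{during} the search phase only help (they are valid Bernoulli samples for the bottom arms, and the $Z_i$ bookkeeping guarantees no spurious zeros are credited to top arms), so the post-search phase is stochastically no worse than a fresh MP-TS run; this is where the $Z_i$/reset logic in the algorithm does its work.

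For (ii), I would bound the contribution of the event that the search returns a \emph{wrong} $\hat\theta_s$. By the design of $W_\delta = \log(\log_2(K)/\delta)/\log(1/(1-\epsilon))$ together with $\mu_K \ge \epsilon$, each of the at most $\log_2 K$ binary-search steps errs (declaring an underestimate an ``overestimate'' because no loss was seen for $W_\delta$ consecutive rounds) with probability at most $\delta/\log_2 K$, so by a union bound the total failure probability is at most $\delta$. On this failure event the regret over the whole horizon is at most $\nabla_{\max} T$, contributing $\delta \nabla_{\max} T$ to the expectation. The choice $\delta = T^{-(\log T)^{-\alpha}}$ with $\alpha>0$ is engineered so that $\delta T = T^{\,1-(\log T)^{-\alpha}} = o(\log T)$ (indeed it is $o((\log T)^{2/3})$ and can be absorbed into that term), while simultaneously $W_\delta = \Theta((\log T)^{1+\alpha}/\log(1/(1-\epsilon)))$ grows only polylogarithmically so the first term stays $\tilde O(1)$ relative to the $\log T$ leading order. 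Combining the three pieces and collecting lower-order terms into the $O((\log T)^{2/3})$ bucket gives the stated inequality.

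\textbf{Main obstacle.} The delicate step is (iii): verifying that the samples gathered in the search phase do not bias the subsequent Thompson-sampling analysis, i.e., that after $\hat\theta_s$ is fixed the process is genuinely an instance of MP-TS (or dominated by one) so that the off-the-shelf regret bound of \citet{ICML15_komiyama2015optimal} applies verbatim. This requires checking that for every bottom arm the pair $(S_i, F_i)$ counts exactly the observed $\{0,1\}$ losses from rounds where that arm received no (or under-threshold) resource, and that the $Z_i$ variables, which are added to $F_i$ only upon observing a loss at a top arm and are reset whenever they reach $W_\delta$, never contaminate the top arms' posteriors with zeros incurred merely because resources exceeded the threshold. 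Handling the boundary case $\mu_M = \mu_{M+1}$ (excluded by hypothesis) and the reconciliation of the two lower-order terms ($\delta\nabla_{\max}T$ versus $(\log T)^{2/3}$) under the prescribed $\delta$ are the remaining routine checks.
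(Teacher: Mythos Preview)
Your decomposition into (i) the threshold-search phase, (ii) the failure event, and (iii) the post-search MP-TS phase is exactly what the paper does: it packages (i) and (iii) into the intermediate high-probability bound (\cref{thm:regretHighConf}, using \cref{lem:sameThresholdEstRounds} and \cref{prop:RegretEquiST} together with the MP-TS bound of \cite{ICML15_komiyama2015optimal}), and then obtains \cref{thm:regretSameThreshold} by plugging in $\delta = T^{-(\log T)^{-\alpha}}$ and ``unconditioning.'' Your extra care about the $Z_i$ bookkeeping is a fair point that the paper does not spell out, but your structural plan matches the paper's.

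Where your argument breaks is the arithmetic in (ii). With $\delta = T^{-(\log T)^{-\alpha}}$ one has $\log(1/\delta) = (\log T)\cdot(\log T)^{-\alpha} = (\log T)^{1-\alpha}$, so
\[
W_\delta \;=\; \frac{\log(\log_2 K) + (\log T)^{1-\alpha}}{\log\bigl(1/(1-\epsilon)\bigr)} \;=\; O\!\bigl((\log T)^{1-\alpha}\bigr),
\]
not $\Theta((\log T)^{1+\alpha})$ as you wrote; this is precisely the content of \cref{cor:OptimalBoundST}. More seriously, your claim that $\delta T = T^{1-(\log T)^{-\alpha}} = o(\log T)$ is false: $\log(\delta T) = \log T - (\log T)^{1-\alpha}$, and for every $\alpha>0$ this tends to $+\infty$ (indeed $\delta T = T^{1-o(1)}$), so the failure contribution $\delta\,\nabla_{\max}T$ cannot be absorbed into the $O((\log T)^{2/3})$ term by the computation you gave. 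The paper's proof simply writes ``unconditioning'' without detailing this step, so your instinct to make it explicit is good; but the specific asymptotics you assert for this choice of $\delta$ do not hold, and that part of your argument would need to be redone.
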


The first term in the regret bound of Theorem \ref{thm:regretSameThreshold} corresponds to the regret due to the estimation of allocation equivalent, and the remaining regret corresponds to the expected regret incurred after knowing the allocation equivalent.
Observe that the assumption $\mu_K\ge\epsilon>0$ is only required to guarantee that the estimation of  allocation equivalent terminates in a finite number of rounds. This assumption is not needed to get the bound on expected regret after knowing allocation equivalent. The assumption $\mu_{M} > \mu_{M+1}$ ensures that Kullback-Leibler divergence in the regret bound is well defined. This assumption is also equivalent to assuming that the set of \emph{top-}$M$ arms is unique.

\begin{corollary}
	\label{cor:OptimalBoundST}
	The regret of \ref{alg:CSB-ST} is asymptotically optimal.
\end{corollary}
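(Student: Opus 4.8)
The plan is to derive the corollary by directly comparing the regret upper bound of \ref{alg:CSB-ST} established in \cref{thm:regretSameThreshold} against the asymptotic lower bound \eqref{eqn:LowerBound} inherited from the MP-MAB problem via the regret equivalence in \cref{prop:RegretEquiST}. Concretely, I would first fix the tuning $\delta = T^{-(\log T)^{-\alpha}}$ with $\alpha>0$ as in the theorem statement, and record that with this choice $W_\delta = \log(\log_2(K)/\delta)/\log(1/(1-\epsilon))$ grows only polylogarithmically in $T$ --- more precisely $W_\delta = O\big((\log T)^{1-\alpha}\,\log\log T / \log(1/(1-\epsilon))\big)$ --- so that the threshold-estimation term $W_\delta \log_2(K)\,\nabla_{\max}$ is $o(\log T)$. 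Likewise the $O\big((\log T)^{2/3}\big)$ term is $o(\log T)$.

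Next I would divide the bound of \cref{thm:regretSameThreshold} by $\log T$ and take $T\to\infty$:
\begin{align*}
\limsup_{T\to\infty} \frac{\EE{\Regret_T}}{\log T}
&\le \limsup_{T\to\infty}\left( \frac{W_\delta\log_2(K)\nabla_{\max}}{\log T} + \frac{O\big((\log T)^{2/3}\big)}{\log T} + \sum_{i\in[M]} \frac{(\mu_i-\mu_{M+1})}{d(\mu_{M+1},\mu_i)}\right)\\
&= \sum_{i=1}^{M} \frac{\mu_i-\mu_{M+1}}{d(\mu_{M+1},\mu_i)},
\end{align*}
since the first two summands vanish in the limit. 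On the other hand, \cref{prop:RegretEquiST} shows that any strongly consistent policy for the CSB instance $(\bmu,\theta_s,Q)$ with known $\theta_s$ corresponds to a strongly consistent policy for the MP-MAB instance $f(P)$ with superarm size $K-M$; hence the Anantharam et al. lower bound \eqref{eqn:LowerBound} applies, giving $\liminf_{T\to\infty} \EE{\Regret_T}/\log T \ge \sum_{i=1}^{M} (\mu_i-\mu_{M+1})/d(\mu_{M+1},\mu_i)$. Combining the two displays, the limit exists and equals the lower bound, which is exactly the assertion that \ref{alg:CSB-ST} is asymptotically optimal. One should also note that the $\liminf$ lower bound presumes \ref{alg:CSB-ST} is itself strongly consistent; this follows because after the finite (with probability $\ge 1-\delta$) threshold-learning phase the algorithm behaves as MP-TS, which is strongly consistent, and the residual $\delta$-probability bad event contributes only $o(\log T)$ regret.

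The main obstacle is bookkeeping around the threshold-estimation phase rather than anything deep: one must be careful that the $W_\delta$-dependent term, with $\delta$ itself a (decreasing) function of $T$, does not secretly contribute a $\Theta(\log T)$ term --- this is why the schedule $\delta = T^{-(\log T)^{-\alpha}}$ is chosen so that $\log(1/\delta) = (\log T)^{1-\alpha} = o(\log T)$. A secondary point to handle cleanly is that the upper bound of \cref{thm:regretSameThreshold} is stated for $T > T_{\theta_s^k}$ and under $\mu_M > \mu_{M+1}$ (uniqueness of the top-$M$ set), which is exactly the regime in which the KL-divergence lower bound is also meaningful, so the two bounds are compared on the same instances. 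Once these are in place, the corollary is immediate from the matching leading constants.
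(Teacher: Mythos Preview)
Your proposal is correct and follows essentially the same route as the paper: the paper's proof simply observes that with $\delta=T^{-(\log T)^{-\alpha}}$ one has $W_\delta=O((\log T)^{1-\alpha})$, so the non-leading terms of \cref{thm:regretSameThreshold} are $o(\log T)$, and then compares with the lower bound \eqref{eqn:LowerBound}. One small remark: your stated growth $W_\delta=O\big((\log T)^{1-\alpha}\log\log T\big)$ is a slight overestimate---since $\log(1/\delta)=(\log T)^{1-\alpha}$ exactly, the paper obtains $W_\delta=O((\log T)^{1-\alpha})$ without the $\log\log T$ factor---but this does not affect the argument.
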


Setting $\delta=T^{-(\log T)^{-\alpha}}$ in \ref{alg:CSB-ST} for any  $\alpha>0$ leads to $W_\delta = O\left((\log T)^{1-\alpha}\right)$. Now the proof of Corollary \ref{cor:OptimalBoundST} follows by comparing the expected regret bound with the lower bound given in \cref{eqn:LowerBound}.

\subsection{Arms with Multiple Threshold}

We now consider a more general case, where the threshold may not be the same for all arms. We assume that the number of different thresholds are $n$. If $n=K$ then all thresholds are different. The first difficulty with this setup is finding an optimal allocation that needs not be just allocating resource to top $M$ arms. To see this, consider a  problem instance $(\bmu, \btheta, C)$ with $\bmu = (0.9,0.6,0.4)$, $\btheta = (0.6, 0.55, 0.45)$, and $Q=1$. The optimal allocation is $\ba^\star = (0, 0.55, 0.45)$ with no resource allocated to the top arm. Our next result gives the optimal allocation for an instance in $\PCSB$. Let $KP(\bmu,\btheta, Q)$ denote a $0$-$1$ knapsack problem with capacity $Q$ and $K$ items where item $i$ has weight $\theta_i$ and value $\mu_i$. 

\begin{restatable}{proposition}{diffThetaOptiSoln}
	\label{prop:diffThetaOptiSoln}
	Let $P=(\bmu,\btheta,Q) \in \PCSB$. Then the optimal allocation for $P$ is a solution of $KP(\bmu,\btheta, Q)$.
\end{restatable}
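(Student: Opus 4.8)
The plan is to show that minimising the expected loss $\sum_{i\in[K]}\mu_i\one{a_i<\theta_i}$ over $\ba\in\A$ is the very same optimisation as $KP(\bmu,\btheta,Q)$, the key observation being that the amount allocated to an arm affects its loss only through whether it meets the threshold $\theta_i$. Minimising $\sum_i\mu_i\one{a_i<\theta_i}$ is equivalent to maximising the ``saved loss'' $\sum_i\mu_i\one{a_i\ge\theta_i}$, since the two quantities sum to the constant $\sum_i\mu_i$. For a feasible allocation $\ba$ put $S(\ba)\doteq\{i\in[K]:a_i\ge\theta_i\}$. First I would show that $S(\ba)$ is always a feasible knapsack selection: since $a_i\ge\theta_i$ for $i\in S(\ba)$ and $\ba\in\A$,
\[
\sum_{i\in S(\ba)}\theta_i\ \le\ \sum_{i\in S(\ba)}a_i\ \le\ \sum_{i\in[K]}a_i\ \le\ Q ,
\]
and the knapsack value of $S(\ba)$ equals exactly the saved loss $\sum_{i\in S(\ba)}\mu_i$. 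Hence the maximal saved loss over $\A$ is at most the optimal value $V^\star$ of $KP(\bmu,\btheta,Q)$.

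For the reverse inequality I would take an optimal knapsack selection $S^\star$ (so $\sum_{i\in S^\star}\theta_i\le Q$ and $\sum_{i\in S^\star}\mu_i=V^\star$) and exhibit an allocation realising it: set $a_i=\theta_i$ for $i\in S^\star$ and $a_i=0$ otherwise, leaving any left-over resource $Q-\sum_{i\in S^\star}\theta_i\ge 0$ unused (or scattered among arms outside $S^\star$ in amounts strictly below their thresholds). This $\ba$ is feasible, and—taking without loss of generality $\theta_i>0$ for every $i$, the case $\theta_i=0$ being trivial since such an arm never incurs loss and lies in every optimal knapsack selection—one has $a_i\ge\theta_i\iff i\in S^\star$, so the saved loss of $\ba$ is $\sum_{i\in S^\star}\mu_i=V^\star$. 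Combining the two directions, the maximal saved loss over $\A$ equals $V^\star$; equivalently, the minimal expected loss equals $\sum_i\mu_i-V^\star$, which is precisely the optimal objective of $KP(\bmu,\btheta,Q)$.

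Finally I would translate this value identity into the statement about allocations. If $\ba^\star$ is any optimal allocation, then its saved loss equals $V^\star$, and by the first step $S(\ba^\star)$ is a feasible knapsack selection of value $V^\star$, hence an optimal solution of $KP(\bmu,\btheta,Q)$; conversely the construction of the second step turns an optimal knapsack solution into an optimal allocation. Thus solving $KP(\bmu,\btheta,Q)$ and allocating $\theta_i$ resource to each selected arm $i$ produces an optimal allocation for $P$.

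The argument is short; the only place needing care is keeping the correspondence between allocations and knapsack selections clean—namely handling the degenerate thresholds $\theta_i=0$ and observing that allocations which waste resource in fractional amounts below thresholds are still optimal (they have the same objective value), so ``the'' optimal allocation should be read up to such irrelevant slack. I expect this bookkeeping, rather than any substantive difficulty, to be the main thing to get right.
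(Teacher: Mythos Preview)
Your proposal is correct and follows essentially the same approach as the paper: both argue that minimising $\sum_i\mu_i\one{a_i<\theta_i}$ is equivalent to maximising $\sum_i\mu_i\one{a_i\ge\theta_i}$, which is precisely the $0$--$1$ knapsack $KP(\bmu,\btheta,Q)$ with values $\mu_i$ and weights $\theta_i$. The paper's proof is a two-line observation of this equivalence, whereas you flesh out both directions explicitly and handle the degenerate $\theta_i=0$ case; your version is more careful but not conceptually different.
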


Observe that assigning $\theta_i$ resource to arm $i$ decreases the total mean loss by an amount $\mu_i$. As the goal is to allocate resources such that the total mean loss is minimized, i.e., $\min_{\ba \in \A}$  $\sum_{i\in[K]}\mu_i\one{a_i < \theta_i}$. It is equivalent to solving a 0-1 knapsack with capacity $Q$ where item $i$ has weight $\theta_i$ and value $\mu_i$. The second difficulty of having different thresholds is that the estimation of each arm's threshold is needed to be done separately. Unfortunately, we do not have a result equivalent of Lemma \ref{lem:thetaSet} so that the search space can be restricted to a finite set. We need to search over the entire $(0, Q]$ interval for each arm.

For an instance $P:=(\bmu,\btheta, Q)$, recall that $\ba^\star=(a_1^\star, \ldots, a_K^\star)$ denotes the optimal allocation. Let $r = Q - \sum_{i: a_i^\star \ge \theta_i}\theta_i$, where $r$ is the residual resources after the optimal allocation. Define $\gamma:=r/K$. Any instance with $\gamma = 0$ becomes a `hopeless' problem instance as the only vector that is the allocation equivalent of $\btheta$ is $\btheta$ itself, i.e., $a_i^\star=\theta_i, \;\forall i \in [K]$, which needs $\theta_i$ values to be estimated accurately to achieve optimal allocation. However, for $\gamma>0$, one can find the allocation equivalent with small errors in $\theta_i$ values; hence it can be estimated in a finite time as shown next result.
\begin{restatable}{lemma}{diffTheteEst}
	\label{lem:diffTheteEst}
	Let $\gamma = r/K$ and  $\forall i \in  [K]: \hat\theta_i \in [\theta_i, \ceil{\theta_i/\gamma} \gamma]$. Then $\hat{\btheta}$ is allocation equivalent of $\btheta$. 
\end{restatable}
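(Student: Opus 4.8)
\textbf{Proof plan for Lemma~\ref{lem:diffTheteEst}.} The goal is to show that if each estimated threshold $\hat\theta_i$ lies in the interval $[\theta_i, \lceil\theta_i/\gamma\rceil\gamma]$, where $\gamma = r/K$ and $r = Q - \sum_{i: a_i^\star \ge \theta_i}\theta_i$ is the residual resource after optimal allocation, then $\hat{\btheta}$ is allocation equivalent to $\btheta$; i.e., $\min_{\ba\in\A}\sum_i \mu_i\one{a_i\ge\theta_i} = \min_{\ba\in\A}\sum_i \mu_i\one{a_i\ge\hat\theta_i}$. The plan is to exhibit, for the optimal allocation $\ba^\star$ of the instance $(\bmu,\btheta,Q)$, a feasible allocation $\hat\ba$ of the instance $(\bmu,\hat\btheta,Q)$ achieving the same mean loss, and conversely; since each estimate only \emph{raises} the threshold ($\hat\theta_i\ge\theta_i$), one direction is essentially immediate, and the content is in the other direction.

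\emph{Easy direction.} Any $\ba$ feasible for $(\bmu,\hat\btheta,Q)$ satisfies $\one{a_i\ge\hat\theta_i}\le\one{a_i\ge\theta_i}$ coordinatewise because $\hat\theta_i\ge\theta_i$. Hence $\sum_i\mu_i\one{a_i\ge\hat\theta_i}\le\sum_i\mu_i\one{a_i\ge\theta_i}$, and taking minima over the common feasible set $\A$ gives $\min_{\ba\in\A}\sum_i\mu_i\one{a_i\ge\hat\theta_i}\ge\min_{\ba\in\A}\sum_i\mu_i\one{a_i\ge\theta_i}$ only after I am careful about the direction — actually this inequality shows the $\hat\btheta$-problem has \emph{smaller or equal} reduction for each fixed $\ba$, so I will instead directly construct the witness allocation below rather than rely on a soft minimax argument.

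\emph{Main step: constructing the witness allocation.} Let $S^\star = \{i : a_i^\star\ge\theta_i\}$ be the set of arms that receive enough resource under $\ba^\star$, so that the optimal mean loss is $\sum_{i\notin S^\star}\mu_i$, and $r = Q - \sum_{i\in S^\star}\theta_i\ge 0$ is the leftover. Define $\hat\ba$ by setting $\hat a_i = \lceil\theta_i/\gamma\rceil\gamma$ for $i\in S^\star$ and distributing the remainder arbitrarily (e.g.\ to arms outside $S^\star$, or leaving them at $0$). I must check two things: (i) $\hat a_i\ge\hat\theta_i$ for each $i\in S^\star$, which holds because $\hat\theta_i\le\lceil\theta_i/\gamma\rceil\gamma = \hat a_i$ by hypothesis; and (ii) feasibility, $\sum_{i\in S^\star}\hat a_i\le Q$. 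For (ii), note $\lceil\theta_i/\gamma\rceil\gamma < \theta_i + \gamma$, so $\sum_{i\in S^\star}\hat a_i < \sum_{i\in S^\star}\theta_i + |S^\star|\gamma \le \sum_{i\in S^\star}\theta_i + K\gamma = \sum_{i\in S^\star}\theta_i + r = Q$. Thus $\hat\ba\in\A$ and it makes exactly the arms in $S^\star$ exceed their (hatted) thresholds, giving mean loss at most $\sum_{i\notin S^\star}\mu_i$, which equals the optimal value of the $\btheta$-instance. Combined with the easy observation that \emph{no} feasible allocation for the $\hat\btheta$-instance can do strictly better than the $\btheta$-optimum — because $\one{a_i\ge\hat\theta_i}\le\one{a_i\ge\theta_i}$ implies every $\hat\btheta$-feasible allocation, viewed as a $\btheta$-allocation, incurs mean loss no larger, hence at least the $\btheta$-optimum after noting the reduction is bounded by what $\btheta$ allows... — here I need to be precise: the $\hat\btheta$-problem's minimum loss is $\ge$ the $\btheta$-problem's minimum loss minus nothing, because covering under $\hat\btheta$ is harder. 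So the minimum for $\hat\btheta$ is $\ge$ minimum for $\btheta$; and the witness $\hat\ba$ shows it is $\le$; equality follows.

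\emph{Anticipated obstacle.} The main subtlety is bookkeeping around which quantity dominates which: since raising thresholds makes the loss-reduction problem strictly harder, I must argue the residual $r$ — computed from the \emph{true} optimal allocation — is exactly large enough to absorb the rounding slack $|S^\star|\gamma\le K\gamma = r$. The inequality $\lceil\theta_i/\gamma\rceil\gamma<\theta_i+\gamma$ is the crux; I will state it cleanly and confirm the strict-versus-weak inequalities line up so that the total stays $\le Q$. A secondary point to handle is the degenerate regime $\gamma=0$ (equivalently $r=0$), which the lemma's hypothesis implicitly excludes since then the interval $[\theta_i,\lceil\theta_i/0\rceil\cdot 0]$ is ill-defined; the surrounding text already flags this as the ``hopeless'' case, so I will simply note $\gamma>0$ is assumed. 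Everything else is routine set manipulation.
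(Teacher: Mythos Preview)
Your argument is correct and is in fact more self-contained than the paper's. The paper's proof observes that the residual $r$ cannot cover any arm left out of the optimal set (else optimality would be violated), notes that one may therefore spread $\gamma=r/K$ extra resource to every arm without changing the minimum loss, and then invokes Theorem~3.2 of \cite{DO13_hifi2013sensitivity} (a sensitivity result for the $0$--$1$ knapsack) to conclude that any weight perturbation keeping each $\hat\theta_i$ in $[\theta_i,\lceil\theta_i/\gamma\rceil\gamma]$ leaves the optimal knapsack solution unchanged. You instead argue the two inequalities directly: monotonicity ($\hat\theta_i\ge\theta_i$ makes covering harder, so the $\hat\btheta$-minimum loss is at least the $\btheta$-minimum), and a constructive witness (allocate $\lceil\theta_i/\gamma\rceil\gamma$ to each $i\in S^\star$, feasible because $\sum_{i\in S^\star}\lceil\theta_i/\gamma\rceil\gamma<\sum_{i\in S^\star}\theta_i+|S^\star|\gamma\le (Q-r)+K\gamma=Q$). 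Your route avoids the external citation and makes the role of the slack $r=K\gamma$ completely explicit; the paper's route, by going through the knapsack sensitivity theorem, additionally gives that the optimal \emph{solution set} $S^\star$ itself is preserved, which is slightly stronger than the equal-value statement the definition of allocation equivalence requires, but is not needed here. Your handling of the boundary issues (strict vs.\ weak inequality in $\lceil x\rceil<x+1$, and the exclusion of $\gamma=0$) is also correct.
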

The proof follows by an application of Theorem 3.2 in \cite{DO13_hifi2013sensitivity}, which gives conditions for two weight vectors $\btheta_1$ and $\btheta_2$ to have the same solution in $KP(\bmu,\btheta_1,Q)$ and $KP(\bmu,\btheta_2, Q)$ for fixed $\bmu$ and $Q$. 
The next definition describes when we can say that two thresholds are different.
\begin{definition}
	We say that two thresholds $\theta_i$ and $\theta_j$ are {\em different} if 
	$\ceil{{\theta_i}/{\gamma}} \ne \ceil{{\theta_j}/{\gamma}}$.
\end{definition}    

\cref{lem:diffTheteEst} and the above definition implies that two thresholds are different if they have different thresholds in the allocation equivalent vector $\hat{\btheta}$.

Once we estimate the allocation equivalent with accuracy such that the estimated $\hat{\btheta}$ is an allocation equivalent of $\btheta$, the problem is equivalent to solving the $KP(\bmu,\hat{\btheta}, Q)$ provided we learn $\bmu$. The learning $\bmu$ is equivalent to solving a Combinatorial Semi-Bandits \citep{ICML13_chen2013combinatorial,NIPS15_combes2015combinatorial,NeurIPS20_perrault2020statistical,ICML18_wang2018thompson} problem. Combinatorial Semi-Bandits is a generalization of MP-MAB, where one needs to identify a superarm (a subset of arms from a collection of subsets) such that the sum of reward/loss of the arms in the selected superarm is the highest/ lowest. The selected superarm's size in each round may not be the same in the Combinatorial Semi-Bandits problem. We could use an algorithm that works well for the Combinatorial Semi-Bandits, like SDCB \citep{NIPS16_chen2016combinatorial}, CTS \citep{ICML18_wang2018thompson}, and CTS-BETA \citep{NeurIPS20_perrault2020statistical} for solving the CSB problem with the known threshold vector. CTS and CTS-BETA use Thompson Sampling, whereas SDCB uses the UCB type index. 
Our following result gives regret equivalence between the Combinatorial Semi-Bandits and CSB problem with multiple thresholds.
\begin{restatable}{proposition}{MultiThetaEquivalence}
	\label{prop:MultiThetaEquivalence}
	The CSB problem with the known threshold vector $\btheta$ is regret equivalent to a Combinatorial Semi-Bandits where Oracle uses $KP(\bmu,\btheta, Q)$ to identify the optimal superarm.
\end{restatable}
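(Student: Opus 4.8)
The plan is to mirror the structure of the proof of Proposition~\ref{prop:RegretEquiST} for the single-threshold case, replacing the role of MP-MAB (play a fixed-size superarm) by Combinatorial Semi-Bandits where the feasible superarms are exactly the complements of the subsets of arms that receive resource above their thresholds in a feasible allocation. First I would set up the two-way transformation explicitly. Given a CSB instance $P=(\bmu,\btheta,Q)\in\PCSB$ with known $\btheta$, associate the Combinatorial Semi-Bandits instance whose $K$ base arms have the same Bernoulli loss distributions as in $P$, and whose collection of admissible ``over-allocated'' sets is $\mathcal{S}=\{\,S\subseteq[K]\,:\,\sum_{i\in S}\theta_i\le Q\,\}$; the semi-bandit feedback observed is precisely the losses of the arms in $[K]\setminus S$. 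The Oracle in this Combinatorial Semi-Bandits instance solves $KP(\bmu,\btheta,Q)$: it picks $S^\star\in\arg\min_{S\in\mathcal S}\sum_{i\notin S}\mu_i$, equivalently $S^\star\in\arg\max_{S\in\mathcal S}\sum_{i\in S}\mu_i$, which by Proposition~\ref{prop:diffThetaOptiSoln} corresponds exactly to the optimal allocation $\ba^\star$ of $P$ (allocate $\theta_i$ to each $i\in S^\star$, and distribute the residual $Q-\sum_{i\in S^\star}\theta_i$ arbitrarily among the remaining arms without affecting the loss).

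Next I would show the correspondence is one-to-one at the level of histories and policies, exactly as in Proposition~\ref{prop:RegretEquiST}. Given a policy $\pi$ for the Combinatorial Semi-Bandits instance, define $\pi'$ on $P$: in round $t$, feed $\pi$ the history $(S_1,Y_1,\dots,S_{t-1},Y_{t-1})$ of selected ``over-allocated'' sets and observed losses, obtain $S_t=\pi(\cdot)\in\mathcal S$, and allocate $\theta_i$ to each $i\in S_t$ (the residual resource to the rest). Since $\sum_{i\in S_t}\theta_i\le Q$ this is a feasible allocation in $\A$, and the feedback/loss observed under $\pi'$ in $P$ is identical in distribution to that observed under $\pi$ in the Combinatorial Semi-Bandits instance. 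Conversely, given a policy $\beta'$ on $P$, since $\btheta$ is known one can read off from any feasible allocation $\ba_t$ the set $S_t=\{i: a_{t,i}\ge\theta_i\}\in\mathcal S$; feeding $\beta'$ the history lets us define $\beta$ on the Combinatorial Semi-Bandits instance returning $S_t$. The map $\ba_t\mapsto S_t$ is onto $\mathcal S$ and, because $\btheta$ is known, the induced transformation between the two instances is invertible in the sense needed (the loss incurred in round $t$ depends on $\ba_t$ only through $S_t$). Therefore the per-round loss, hence the cumulative loss and the regret against the respective Oracles, coincide: $\Regret(\pi',P)=\Regret(\pi,f(P))$ and $\Regret(\beta,P')=\Regret(\beta',f^{-1}(P'))$, which is the claimed regret equivalence $\Regret(\PCSB\text{ with known }\btheta)=\Regret(\text{CSB-type Combinatorial Semi-Bandits})$.

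The main obstacle, and the point that needs care beyond the single-threshold case, is the handling of the residual resource $r=Q-\sum_{i\in S_t}\theta_i$: in MP-MAB every round plays a superarm of the same fixed size, but here different $S_t$ use different total resource, leaving different residuals, and one must check that (i) the residual can always be allocated to the non-selected arms without pushing any of them over its own threshold in a way that changes the observed feedback set --- this is where the definition of $\gamma=r/K$ and Lemma~\ref{lem:diffTheteEst} are invoked to argue the residual is small enough relative to the ``different-threshold'' granularity, and (ii) the loss is genuinely a function of $S_t$ alone, so that the knapsack feasibility constraint $\sum_{i\in S_t}\theta_i\le Q$ is the exact analogue of the ``superarm'' constraint. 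Once this is established, the rest is bookkeeping identical to Proposition~\ref{prop:RegretEquiST}. I would close by remarking that, combined with known regret guarantees for CTS-BETA~\citep{NeurIPS20_perrault2020statistical} on Combinatorial Semi-Bandits with a knapsack-type Oracle, this equivalence immediately yields a sub-linear regret algorithm for the multiple-threshold CSB problem.
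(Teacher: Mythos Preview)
Your overall strategy is correct and mirrors the paper's own proof: set up a one-to-one correspondence between policies on the CSB instance (with known $\btheta$) and policies on a Combinatorial Semi-Bandits instance whose feasible superarms are determined by the knapsack constraint, then check that the per-round losses and hence the regrets coincide. The paper does exactly this, with the cosmetic difference that it takes the superarm to be the set $M_t=\{i:a_{t,i}<\theta_i\}$ of \emph{under}-allocated arms (so that ``playing'' $M_t$ means observing the losses of those arms), whereas you work with the complementary set $S_t=\{i:a_{t,i}\ge\theta_i\}$; either bookkeeping is fine.

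The one place where your proposal goes astray is the ``main obstacle'' you flag about the residual resource. This is a non-issue, and your proposed resolution via $\gamma=r/K$ and Lemma~\ref{lem:diffTheteEst} is misplaced. The feasibility constraint is $\sum_i a_i\le Q$, an inequality, so nothing forces you to distribute the residual at all: given $S_t\in\mathcal S$, simply allocate $a_{t,i}=\theta_i$ for $i\in S_t$ and $a_{t,i}=0$ for $i\notin S_t$. This is feasible, and since $\theta_i>0$, every arm outside $S_t$ is strictly under-allocated and its loss is observed. The paper's construction does precisely this (``all arms other than arms in $C_t$ are given resource equal to their threshold''; the arms in $C_t$ receive nothing). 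Lemma~\ref{lem:diffTheteEst} and the parameter $\gamma$ concern a different question entirely---how coarsely one may estimate the unknown $\btheta$ and still retain the same optimal knapsack solution---and play no role in the equivalence once $\btheta$ is assumed known. Drop that paragraph and your argument is complete.
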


\subsubsection{Algorithm: \ref{alg:CSB-MT} }
We develop an algorithm named \ref{alg:CSB-MT} for solving the Censored Semi-Bandits problem with multiple thresholds. It exploits the result of \cref{lem:diffTheteEst} and the regret equivalence established in \cref{prop:MultiThetaEquivalence} to learn a good estimate of the threshold for each arm and minimizes the regret using the existing algorithm for Combinatorial Semi-Bandits. 
\ref{alg:CSB-MT} works as follows: It takes $n, \delta, \epsilon$ and $\gamma$ as inputs, where $n$ be the number of different thresholds\footnote{If the number of thresholds is unknown then the value of $n$ is set to $K$ in \ref{alg:CSB-MT}. It is equivalent to assuming that all thresholds are different.}, $\delta$ is the confidence on the correctness of estimated allocation equivalent, $\epsilon$ is such that $\mu_K \ge \epsilon > 0$, and $\gamma$ is the K$^{th}$ fraction of the leftover resources after having an optimal allocation of resources. We initialize each arm's prior distribution as the Beta distribution $\beta(1, 1)$. For each arm $i \in [K], S_i$ represents the number of rounds when the loss is $1$, and $F_i$ represents the number of rounds when the loss is $0$ whenever the arm $i$ receives resource above its threshold. The variable $Z_i$ keeps the count of consecutive $0$ for the arm $i$ when allocated the required resource. $Z_i$ changes to $0$ either after observing a loss or if no loss is observed for consecutively $W_\delta$ rounds where the value of $W_\delta$ ensures $\hat{\btheta}$ is an allocation equivalent with the probability of at least $1-\delta$.

The algorithm needs to find a threshold vector that is allocation equivalent of $\btheta$ with high probability. It is achieved by ensuring that $\hat\theta_i \in [\theta_i, \ceil{\theta_i/\gamma}\gamma]$ for each $i \in [K]$ (\cref{lem:diffTheteEst}). The algorithm maintains the variables $\theta_{l,i}, \theta_{u,i}$, $\theta_{g,i}$, and $\hat{\theta}_{i}$ for the estimation of allocation equivalent, where $\hat{\theta}_{i}$ is the estimated value of ${\theta}_{i}$; $\theta_{u,i}$ and $\theta_{l,i}$ is the upper and lower bound of the search region for the threshold of arm $i$ respectively; and $\theta_{g,i}$ indicates whether the current estimate of the threshold lies in the interval $[\theta_i, \ceil{\theta_i/\gamma}\gamma]$ for arm $i$.  
The algorithm also keeps track of set $\Theta_n$ and variable $n_i$, where $\Theta_n$ is the set of estimated thresholds and $n_i$ is the index of arm whose threshold will be searched in the set $\Theta_n$. The set $\Theta_n$ is initialized as empty set whereas the value of $n_i$ is set to $1$ if $n<K$ otherwise $0$. The value of $n_i=0$ ensures that when all thresholds are different, then the threshold is estimated separately for each arm.

Let $S_i(t)$ and $F_i(t)$ denote the value of $S_i$ and $F_i$ at the start of round $t$. In round $t$, for each $i \in [K]$ an independent sample for estimated loss $(\hat\mu_{t,i})$ is drawn from $\beta(S_i(t), F_i(t))$. If there exists any arm whose threshold is not good, then the allocation equivalent needs to be estimated. We say that the threshold estimate of arm $i$ is good by checking the condition $\theta_{u,i}  - \theta_{l,i} \le \gamma$. If the condition satisfies, then the estimated threshold of the arm is within the desired tolerance, and it is indicated by setting $\hat\theta_{g, i}=1$; otherwise, it remains $0$. 

\begin{algorithm}[!ht]
	\small 
	\renewcommand{\thealgorithm}{\bf CSB-MK}
	\floatname{algorithm}{}
	\caption{Algorithm for CSB problem having Multiple Threshold with Known Horizon and $\epsilon$}
	\label{alg:CSB-MT}
	\begin{algorithmic}[1]
		\State \textbf{Input:} $n, \delta, \epsilon, \gamma$
		\State Initialize: $\forall i \in [K]: S_i = 1, F_i = 1, Z_i =0, \theta_{l,i} = 0, \theta_{u,i} = Q, \theta_{g,i} = 0,  \hat\theta_{i} = Q/2$ 
		\State Set $\Theta_n = \emptyset,W_\delta = \log (K\log_2(\lceil 1 + Q/\gamma\rceil)/\delta)/\log(1/(1-\epsilon)),$ if $n<K$ then $n_i =1$ else $n_i=0$ 
		\For{$t=1,2, \ldots,$}
		\State $\forall i \in [K]: \hat{\mu}_{t,i} \leftarrow \text{Beta}(S_i, F_i)$
		\If{$\theta_{g,j} = 0$ for any $j \in [K]$}
		\If{$n < K$}
		\While{$\theta_{g,n_ i} = 1$}
			\State  Add $\hat\theta_{n_i}$ to $\Theta_n$ and set $n_i = n_i + 1$.  Sort $\Theta_n$ in increasing order
		\EndWhile
		\If{there exists no $j \in [|\Theta_n|]$ such that $\theta_{l,n_i} < \Theta_n[j] \le \theta_{u,n_i}$ or $\Theta_n = \emptyset$}
		\State  Set $\hat\theta_{n_i}=(\theta_{l,i}+\theta_{u,i})/2$ 
		\Else
		\State Set $l = \min\{k: \Theta_n[k] > \theta_{l,n_i}\}, u = \max\{k: \Theta_n[k] \le \theta_{u,n_i}\},$ and $j = \floor{(l+u)/2}$
		\State If $\Theta_n[j]=\theta_{u,n_i}$ then set $\hat\theta_{n_i} = \Theta_n[j]-\gamma$ else  $\hat\theta_{n_i} = \Theta_n[j]$
		\EndIf
		\EndIf
		
		\State $\forall i \in [K]\setminus \{n_i\}$: update $\hat\theta_{i}$ using Eq. \eqref{equ:updateTheta}. Allocate $\hat\theta_{i}$ resource to arm $i$ and observe $X_{t,i}$
		\For{$i = \{1,2,\ldots, K\}$}
		\If{$\theta_{g,i} = 0$ and $\hat\theta_{i}> \theta_{l,i}$}
		\State If $X_{t,i}=1$ then set $\theta_{l,i} = \hat\theta_{i}, S_i = S_i + 1, F_i = F_i + Z_i, Z_i =0$ else  $Z_i = Z_i + 1$
		\State If {$Z_i= W_\delta$} then set $\theta_{u,i} = \hat\theta_{i}, Z_i=0 $
		\State If $\theta_{u,i} - \theta_{l,i} \le \gamma$ then set $\theta_{g,i}=1$ and $\hat\theta_i = \theta_{u,i}$
		\ElsIf{$\hat\theta_{i} \le \theta_{l,i}$ or $\big\{ \theta_{g,i} = 1$ and $\hat\theta_{i}<\hat\theta_{u,i} \big\}$}
		\State Set $S_i = S_i+X_{t,i}$ and $F_i = F_i+1-X_{t,i}$ 
		\EndIf	
		\EndFor	
		
		\Else
		\State $A_t \leftarrow$ Oracle$\big( KP(\hat\bmu_{t}, \hat\btheta, C)\big)$ and $\forall i \in A_t:$ allocate $\hat\theta_{i}$ resource
		\State $\forall i \in [K]\setminus A_t:$ observe $X_{t,i}$, update $S_i = S_i+X_{t,i}$ and $F_i = F_i+1-X_{t,i}$
		\EndIf
		\EndFor
	\end{algorithmic}
\end{algorithm}

The threshold is estimated for each arm for finding a threshold equivalent vector. For this, the set $\Theta_n$ is updated by having all the estimated threshold from the arm having $\hat\theta_{g, i}=1$. The elements of the set $\Theta_n$ are sorted in increasing order, and the value of $n_i$ is incremented accordingly. By algorithm design, all the arms whose indices are smaller than the value of $n_i$ are having a good estimate of the threshold. For the arm whose index matches with the value of $n_i$, its threshold is first searched in the set $\Theta_n$ by doing a binary search over elements of the set $\Theta_n$.  If there is no element of the set $\Theta_n$ lies in between the values of lower and upper bound (element can be same as the value of upper bound) of the arm's threshold, then it implies that the threshold of the arm is not in the set $\Theta_n$. Hence, the threshold for arm is estimated using binary search in the interval $(\theta_{l,i}, \theta_{u,i}]$ by setting its value to $(\theta_{l,i}+\theta_{u,i})/2$ in the subsequent rounds.pose there exists an element of the set $\Theta_n$ in between the values of the lower and upper bound of the arm's threshold. In that case, the binary search is used to search the threshold in set $\Theta_n$ by finding the index of smallest $(l)$ and largest element $(u)$ in set $\Theta_n$ whose value is just larger than the lower bound and smaller than or equal to upper bound of the arm's threshold respectively. The element with index $\floor{(l+u)/2}$ is selected as threshold estimate. If the value of the selected threshold matches with the value of the upper bound of the arm's threshold, then it is decreased by $\gamma$ amount to ensure the estimate is indeed the good threshold value for the arm.

For all arms except the arm having index $n_i$, the resource allocation is updated after computing the following events:
\begin{align*}
B_i = \left\{ \frac{\theta_{l,i} + \theta_{u,i}}{2} \le Q - \sum_{\forall j < i:\theta_{g,j}=0} \left(\frac{\theta_{l,j} + \theta_{u,j}}{2} \right) \right\} \mbox{ and} \\
H_i = \left\{\theta_{u,i} \le Q - \sum_{\substack{j \in [K]:\theta_{g,j}=0 \\ \hat\theta_{j} \ne 0}} \hat\theta_{j} -  \sum_{\substack{k \in [K],\theta_{g,k}=1\\ \hat\mu_{t,k}/\theta_{u,k} > \hat\mu_{t,i}/\theta_{u,i} }}  \theta_{u,k} \right\}.
\end{align*}
The event $B_i$ is defined for all arm having a bad threshold estimate, i.e., $\theta_{g, i}=0$ and indicates whether the arm can get desired resources or not. The event $H_i$ is defined for all arms having good threshold estimates, i.e., $\theta_{g, i}=1$ and indicates if the arm can get the required resources or not. By construction, the event $B_i$ does not happen for arms having good threshold estimates, and the event $H_i$ does not happen for arms having a bad threshold estimate. The resources are first allocated among arms having bad threshold estimates to find the allocation equivalent as soon as possible. The leftover resource is allocated to arms with good threshold estimates to decrease the total loss. Among the arms having bad thresholds, the arm with the smallest index gets resources first, followed by the next smallest index. Whereas in the arms having good thresholds, the arms having the highest empirical loss to resource ratio, i.e., $\hat\mu_j/\hat\theta_{i}$ gets resource first, followed by second highest.  The $\hat\theta_{i}$ for arm $i$ is updated as follows:
\begin{align}
\label{equ:updateTheta}
\hat\theta_{i} = 
\begin{cases}
\hat\theta_{i}  							  &\mbox{if $H_i(t)$ happens,} \\
\frac{\theta_{l,i} + \theta_{u,i}}{2}  &\mbox{if $B_i$ happens,} \\
0 												   & \mbox{Otherwise.}
\end{cases}
\end{align}

In round $t$, $\hat\theta_{i}$ amount of resources is allocated to arm $i \in [K]$ and then loss $X_{t,i}$ is observed. If a loss is observed from the arm $i$ that is having a bad threshold estimate ($\theta_{g, i}=0$) and $\hat\theta_i > \theta_{l, i}$, then it implies that $\hat\theta_i$ is an underestimate of $\theta_i$ and the lower end of search region (lower bound of threshold) is increased to $\hat\theta_i$, i.e., $\theta_{l,i}=\hat\theta_i$. 
The success and failure counts are also updated as $S_i = S_i + 1, F_i = F_i + Z_i$, and $Z_i$ is reset to $0$. If no loss is observed, then $Z_i$ is incremented by $1$. If no loss is observed after allocating $\hat\theta_i$ resources for successive $W_\delta$ rounds for arm $i$ with a bad threshold estimate, then it implies that $\hat\theta_i$ is overestimated. 
So, the upper bound of threshold is set to $\hat\theta_{i}$, i.e, $\theta_{u,i}=\hat{\theta}_{t,i}$ and $Z_i$ is reset to $0$.  After updating the lower or upper bound, the condition $\theta_{u, i}  - \theta_{l, i}\le \gamma$ is checked for knowing the goodness of the estimated threshold. If the condition holds, then the arm's threshold estimate is within desired tolerance, which is indicated by setting $\theta_{g, i}$ to 1 and $\hat\theta_i=\theta_{u, i}$ for the subsequent rounds. 
For arms either having resources less than lower bound of threshold ($\hat\theta_{i} \le \theta_{l, i}$) or having good threshold estimate with $\hat\theta_i < \theta_{u, i}$, their success and failure counts are updated as $S_i = S_i + X_{t,i}, F_i = F_i + 1 - X_{t,i}$.

Once we have good threshold estimates for all arms, we could adapt to any algorithm that works well for Combinatorial Semi-Bandits. We adapt the CTS-BETA \citep{NeurIPS20_perrault2020statistical} to our setting due to its better empirical performance. Oracle uses $KL(\hat\bmu_t, \hat\btheta, C)$ to identify the arms in the round $t$ where the learner has to allocate the required resource (denoted as set $A_t$). Each arm $i \in A_t$ has allocated $\hat\theta_i$ amount of resources. A loss $X_{t, i}$ is observed from each arm $i \in [K]\setminus A_t$ and then $S_i = S_i + X_{t,i}, F_i = F_i + 1 - X_{t,i}$ are updated.

\subsubsection{Analysis of \ref{alg:CSB-MT}}
\label{sssec:differentThetaRegretBounds}
The value of $W_\delta$ in \ref{alg:CSB-MT} is set such that the probability of estimated threshold does not lie in $[\theta_i, \ceil{\theta_i/\gamma}\gamma]$ for all arms is upper bounded by $\delta$.  The following lemma gives the upper bound on the number of rounds required to find the allocation equivalent for threshold vector $\btheta$ with a probability of at least $1-\delta$.

\begin{restatable}{lemma}{MultiTheta}
	\label{lem:MultiTheta}
	Let $n$ be the number of different thresholds, $A_{\theta_n}$ be the set of first $n$ arms having different thresholds, and  $(\bmu,\btheta, Q)$ be an instance of CSB such that $\gamma>0$ and $\mu_1 \geq  \epsilon>0$. Then with probability at least $1-\delta$, the number of rounds needed by threshold estimation phase of \ref{alg:CSB-MT} to find the allocation equivalent for threshold vector $\btheta$ is upper bounded by 
	\begin{equation*}
	T_{\theta_n} \le  \frac{\log( K \log_2(\ceil{1 +{Q}/{\gamma}})/\delta)} {\log(1/(1-\epsilon))} \left[\sum_{i \in A_{\Theta_n}} {\log_2 (\ceil{1 +  {Q}/{\gamma}})} + K{\log_2 (n +  1)}\right].
	\end{equation*}
\end{restatable}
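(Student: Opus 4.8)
The plan is to bound, separately for each arm, the number of rounds in which that arm is the one currently being estimated (i.e. has index $n_i$) plus the rounds it spends contributing zero-runs while another arm is being refined, and then sum over arms. First I would isolate the high-probability event $\cE$ on which no zero-run of length $W_\delta$ ever occurs while the current estimate $\hat\theta_i$ is an \emph{underestimate} of $\theta_i$ — i.e., the event that the algorithm never wrongly lowers an upper bound. Since a zero-run of length $W_\delta = \log\!\big(K\log_2(\ceil{1+Q/\gamma})/\delta\big)/\log(1/(1-\epsilon))$ occurs under an underestimate with probability at most $(1-\mu_i)^{W_\delta}\le (1-\epsilon)^{W_\delta}$, and there are at most $K\log_2(\ceil{1+Q/\gamma})$ "estimation steps" across all arms (each arm's binary search over its $(0,Q]$ interval, refined to accuracy $\gamma$, takes $\log_2(\ceil{1+Q/\gamma})$ halvings, with at most $K$ arms searched), a union bound gives $\Prob{\cE^c}\le \delta$. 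This is the standard device already used for \cref{lem:sameThresholdEstRounds,thm:regretSameThreshold}.

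Next, working on $\cE$, I would count rounds. Each arm $i$ whose threshold is searched "from scratch" contributes at most $\log_2(\ceil{1+Q/\gamma})$ binary-search stages, and within each stage the algorithm waits for at most $W_\delta$ consecutive zeros before declaring the current value an overestimate (or observes a loss earlier and raises the lower bound). Hence arm $i$'s own refinement costs at most $W_\delta\log_2(\ceil{1+Q/\gamma})$ rounds. But in CSB-MK an arm need not be searched from scratch: once the set $\Theta_n$ of already-discovered distinct thresholds is populated, a new arm first binary-searches $\Theta_n$ (at most $\log_2(n+1)$ stages, since $|\Theta_n|\le n$), and only if no element of $\Theta_n$ falls in its current bracket does it fall back to the full $(0,Q]$ search. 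Therefore only the $n$ arms in $A_{\Theta_n}$ (the first arms exhibiting each of the $n$ distinct thresholds) ever pay the full $\log_2(\ceil{1+Q/\gamma})$-stage cost; every other arm pays only the $\log_2(n+1)$-stage cost of matching against $\Theta_n$. Multiplying the per-stage cost $W_\delta$ through and summing:
\[
T_{\theta_n} \le W_\delta\Bigg[\sum_{i\in A_{\Theta_n}} \log_2(\ceil{1+Q/\gamma}) + \sum_{i\in [K]\setminus A_{\Theta_n}}\log_2(n+1)\Bigg] \le W_\delta\Bigg[\sum_{i\in A_{\Theta_n}}\log_2(\ceil{1+Q/\gamma}) + K\log_2(n+1)\Bigg],
\]
which, after substituting the value of $W_\delta$, is exactly the claimed bound.

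The main obstacle — and the step I would spend the most care on — is verifying the bookkeeping around the interleaving of the per-arm searches and the maintenance of $\Theta_n$: one must check that the while-loop which pushes arms with $\theta_{g,n_i}=1$ into $\Theta_n$ and advances $n_i$ really does guarantee that, at the moment arm $n_i$ begins its search, $\Theta_n$ already contains every distinct threshold among arms $1,\dots,n_i-1$, so that an arm sharing a threshold with an earlier arm is resolved purely by the $\Theta_n$ binary search and never triggers a fresh $(0,Q]$ bracket that exceeds $\gamma$ width. A secondary subtlety is confirming that resetting $Z_i$ to $0$ after each raised lower bound (loss observed) does not break the zero-run accounting, and that the residual-resource events $B_i, H_i$ in \eqref{equ:updateTheta} always leave enough budget to actually allocate the trial value $\hat\theta_{n_i}$ to the arm currently under estimation — otherwise that arm would make no progress in that round. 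Both of these are design invariants of CSB-MK rather than deep facts, so once stated precisely the counting argument closes routinely.
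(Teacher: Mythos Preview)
Your proposal is correct and follows essentially the same approach as the paper: establish the high-probability event via a union bound over the at most $K\log_2(\ceil{1+Q/\gamma})$ underestimate steps to justify the choice of $W_\delta$, then on that event count at most $W_\delta\log_2(\ceil{1+Q/\gamma})$ rounds for each of the $n$ arms in $A_{\Theta_n}$ that discover a new threshold and at most $W_\delta\log_2(n+1)$ rounds for each remaining arm that only binary-searches the already-populated $\Theta_n$. The paper's argument is essentially identical (phrased as a ``worst case where only one threshold is estimated at a time''); your additional remarks about the $n_i$/$\Theta_n$ bookkeeping and the $B_i,H_i$ resource events are design invariants the paper takes for granted rather than points of divergence.
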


Let $\nabla_{\max}$ and $\nabla_{\min}$ be defined as in Section \ref{sssec:sameThetaRegretBounds}. We redefine $W_\delta = \log (K\log_2(\lceil 1 + Q/\gamma\rceil)/\delta)/\log(1/(1-\epsilon))$. Let $\nabla_{i, \min}$ be the minimum regret for superarms containing arm $i$ and $K^\prime$ be the maximum number of arms in any feasible resource allocation. We are now ready to state the regret bound of \ref{alg:CSB-MT}.
\begin{restatable}{theorem}{regretDiffThreshold}
	\label{thm:regretDiffThreshold}
	Let $(\bmu,\btheta, Q)\in \PCSB$ such that $\gamma>0$, $\mu_K\geq \epsilon$, and $T>T_{\theta_n}$. Set $\delta=$ $T^{-(\log T)^{-\alpha}}$ in \ref{alg:CSB-MT} such that $\alpha >0$. Then the expected regret of \ref{alg:CSB-MT} is upper bounded as 
	\begin{align*}
		&\EE{\Regret_T} \le W_\delta\left[\sum_{i \in A_{\Theta_n}} {\log_2 (\ceil{1 +  {Q}/{\gamma}})} + K{\log_2 (n +  1)}\right] \nabla_{\max} + O\left(\sum_{i \in [K]}\frac{\log^2(K^\prime)\log T}{\nabla_{i, \min}} \right).
	\end{align*}
\end{restatable}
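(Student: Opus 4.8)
The plan is to split the horizon into two phases --- the \emph{threshold estimation} phase, which lasts until \ref{alg:CSB-MT} has set $\theta_{g,i}=1$ for every arm, and the \emph{exploitation} phase that follows --- and to bound the regret accumulated in each separately, conditioned on the high-probability event that the estimation phase succeeds.

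For the first phase, \cref{lem:MultiTheta} already provides that, with probability at least $1-\delta$, the estimation phase terminates after at most $T_{\theta_n}$ rounds, where $T_{\theta_n}$ has exactly the form $W_\delta\big[\sum_{i\in A_{\Theta_n}}\log_2(\lceil 1+Q/\gamma\rceil)+K\log_2(n+1)\big]$ with $W_\delta=\log(K\log_2(\lceil 1+Q/\gamma\rceil)/\delta)/\log(1/(1-\epsilon))$. Each round of this phase costs at most $\nabla_{\max}$ in regret by the definition of $\nabla_{\max}$, so the first-phase contribution is at most $T_{\theta_n}\nabla_{\max}$, which is precisely the first term of the claimed bound. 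The probability-$\delta$ failure event --- in which the estimator freezes a too-small threshold because $W_\delta$ spurious zeros were observed while an arm was under-resourced --- costs at most $T\nabla_{\max}$; with the prescribed choice $\delta=T^{-(\log T)^{-\alpha}}$ one has $\log(1/\delta)=(\log T)^{1-\alpha}$ so $W_\delta=O((\log T)^{1-\alpha})$, and this term is of lower order, so the whole first-phase contribution is controlled and sublogarithmic.

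For the exploitation phase I would condition on the good event, on which \cref{lem:diffTheteEst} guarantees that the frozen estimate $\hat{\btheta}$ is allocation equivalent of $\btheta$; hence by \cref{prop:diffThetaOptiSoln} the optimal value of $KP(\bmu,\hat{\btheta},Q)$ equals that of $KP(\bmu,\btheta,Q)$, so comparing against the Oracle that solves $KP(\hat{\bmu}_t,\hat{\btheta},Q)$ is the same as comparing against the true optimal allocation $\ba^\star$. By \cref{prop:MultiThetaEquivalence} the remaining interaction is exactly a combinatorial semi-bandit with a fixed (knapsack) Oracle, on which \ref{alg:CSB-MT} runs the CTS-BETA strategy. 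Invoking the regret guarantee of CTS-BETA \citep{NeurIPS20_perrault2020statistical}, with $K^\prime$ taken to be the largest number of arms appearing in any feasible allocation (the maximal superarm size), yields a bound of order $\sum_{i\in[K]}\log^2(K^\prime)\log T/\nabla_{i,\min}$ for this phase. Summing the two phases gives the statement.

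The main obstacle is making the reduction in the previous paragraph airtight: \ref{alg:CSB-MT} does not hand a \emph{fresh} combinatorial semi-bandit instance to CTS-BETA --- it carries over the Beta posteriors $\beta(S_i,F_i)$ accumulated during estimation, and those counters were updated under a time-varying allocation using the auxiliary counts $Z_i$ that discard the zeros produced by over-allocation. I would need to verify that, on the good event, every increment to $(S_i,F_i)$ corresponds to a genuine i.i.d.\ $\mathrm{Ber}(\mu_i)$ observation of arm $i$ (so the $Z_i$-reset mechanism never lets an ``over-allocated'' zero leak into $F_i$). Then the posterior entering the exploitation phase is no worse than the one a clean run of CTS-BETA would hold at that point, i.e.\ it is a warm start that can only help, and the cited analysis transfers. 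A secondary technical point --- the per-arm binary-search step count over the reused grid $\Theta_n$, which determines $T_{\theta_n}$ --- is exactly what \cref{lem:MultiTheta} packages, so it can simply be cited.
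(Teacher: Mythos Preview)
Your proposal is correct and follows essentially the same route as the paper: split into the threshold-estimation phase (bounded by $T_{\theta_n}\nabla_{\max}$ via \cref{lem:MultiTheta}) and the post-estimation phase (reduced to combinatorial semi-bandits via \cref{prop:MultiThetaEquivalence} and bounded by the CTS-BETA guarantee of \citet{NeurIPS20_perrault2020statistical}), then uncondition using the choice $\delta=T^{-(\log T)^{-\alpha}}$. The paper packages the second step as a separate theorem that verifies the Lipschitz/monotonicity assumptions needed to invoke CTS-BETA, but otherwise the structure is identical; the warm-start concern you flag about the inherited Beta posteriors is not addressed in the paper either.
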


The first term of expected regret is due to the estimation of allocation equivalent. As it takes $T_{\theta_n}$ rounds to complete, the maximum regret due to the estimation of allocation equivalent is bounded by $T_{\theta_n}\nabla_{\max}$, where $\nabla_{\max}$ is the maximum regret that can be incurred in any round. The remaining terms correspond to the regret after knowing the allocation equivalent. The expected regret of \ref{alg:CSB-MT} is $O(K\log^2(K^\prime)\log T/\nabla_{\min})$, where $\nabla_{\min}$ is the minimum gap between the mean loss of optimal allocation and any non-optimal allocation. Since the regret scales as $\Omega(K\log T/ \nabla_{\min})$ for the combinatorial semi-bandits \citep{NeurIPS20_perrault2020statistical}, the regret of \ref{alg:CSB-MT} matches to the lower bound up to a logarithmic term.

\nocite{NeurIPS19_verma2019censored}

\section{Anytime Algorithms}
\label{sec:csb_unknown}

In this section, we propose algorithms for the CSB problem that do not need to know the time horizon and minimum mean loss. As in the previous section, we deal with cases of the same and different thresholds separately.

\subsection{Arms with Same Threshold}

First, we develop a Thompson-sampling based algorithm named \ref{alg:CSB-JS} for the CSB problem where all arms have the same threshold $\theta_s$. \ref{alg:CSB-JS} starts with equally distributing the resources among all the $K$ arms and continues to do the same in the following rounds until no loss is observed on any of the arms. Once the loss is observed from any of the arms, then it equally distributes the resources among top $K-1$ arms having the largest estimates of mean losses. The process is repeated till no loss is observed from arms that have been allocated resources. Along the way, the algorithms identify the allocation equivalent of $\theta_s$ and also learns the optimal allocation of resources. 

The pseudo-code of the algorithm is given in \ref{alg:CSB-JS}. It works as follows: For each $i\in [K]$, the variables $S_i$ and $F_i$ are used to keep track of the number of rounds in which the loss is observed or not observed, respectively. No loss is only observed from arm $i$ when it receives at least $\theta_s$ amount of resource. The prior loss distribution of each arm is set as the Beta distribution $\beta(1, 1)$ by initializing $S_i=1$ and $F_i=1$. For each arm $i \in [K]$, let $S_i(t)$ and $F_i(t)$ denote the values of $S_i$ and $F_i$ at the starting of round $t$. In every round $t$, a sample $\hat\mu_i$ is drawn for each arm $i \in [K]$ from $\beta(S_i(t), F_i(t))$ independent of everything else.  Then the top-$L$ arms having the largest empirical mean loss (denoted as set $A_t$) is selected to distribute the resources equally. The value of $L$ is initialized by $K$.

\begin{algorithm}[!ht] 
	\renewcommand{\thealgorithm}{\bf CSB-SU} 
	\floatname{algorithm}{}
	\caption{Algorithm for CSB with Same threshold with Unknown parameters}
	\label{alg:CSB-JS}
	\begin{algorithmic}[1]
		\State Set $L=K, S_i = 1, F_i = 1, Z_i=0 ~~\forall i \in [K]$ 
		\For{$t=1, 2, \ldots$}
		\State $\forall i \in [K]: \hat{\mu}_i(t) \leftarrow \beta(S_i, F_i)$
		\State $A_t \leftarrow$ set of $L$ arms with the largest values of $\hat{\mu}_{t,i}$
		\State $\forall i \in A_t$: allocate ${Q}/{L}$ resources and observe $X_{t,i}$
		\If{$X_{t,j}=1$ for any $j \in A_t$}
		\State  Set $L = L-1$. $\forall i \in A_t:$ update $S_i = S_i+$ $X_{t,i}, F_i = F_i+1-X_{t,i} + Z_i$. $\forall j \in [K]:$ set $Z_j=0$
		\Else
		\State $\forall i \in A_t:$ update $Z_i= Z_i+1$
		\EndIf
		\State $\forall i \in [K]\setminus A_t:$  update $S_i = S_i+X_{t,i}$, $F_i = F_i+1-X_{t,i}$
		\EndFor
	\end{algorithmic}
\end{algorithm}

If a loss is observed on any arms in the set $A_t$, then it implies that the current value of $\hat{\theta}_s$ is an underestimate of $\theta_s$. Hence $L$ is decreased by $1$ and then the success and failure counts are also updated as $S_i = S_i + X_{t,i}, F_i = F_i + 1-X_{t,i}+Z_i$ for each arm $i \in A_t$, and for the all $j \in [K]$, $Z_j$ is reset to $0$. The variables $(Z_i:\forall i\in [K])$ keep track of how many times no loss is observed for arms in the set $A_t$ before a loss is observed for any of arm in set $A_t$. Its value is reset to zero for all arms once a loss is observed for any arm in $A_t$. The variable $Z_i, i\in [k]$ is useful to distinguish between the loss due to randomness when resources are under-allocated and no loss due to over-allocation of resources. If no loss is observed for all arms in the set $A_t$, then $Z_i$ is incremented by $1$ for each arm $i \in A_t$. The values of $S_i$ and $F_i$ are updated for each arm where no resources are allocated.

Since there are only $K$ possible candidates for allocation equivalent, the allocation equivalent for $\theta_s$ is found in the finite number of rounds. Once allocation equivalent is known, the algorithm allocates resources equally among top-$M$ arms (\cref{lem:thetaSet}) in the subsequent rounds and observes loss samples for the remaining $K-M$ arms. The selected arms correspond to top-$M$ arms with the highest estimated mean losses. Hence after an allocation equivalent of $\theta_s, $ is reached, in each round, samples from the $K-M$ arms are observed, which corresponds to selecting the $K-M$ arms with the smallest means. \ref{alg:CSB-JS} is the same as MP-TS that plays $K-M$ arms in each round and aims to minimize the sum of mean losses incurred from $K-M$ arms. We exploit this observation to adapt the regret bounds of MP-TS.

\subsubsection{Analysis of \ref{alg:CSB-JS}}
\label{ssec:analysisCSB_JS}
Let $T_{\theta_s}$ denote number of rounds required to find an allocation equivalent of $\theta_s$. 
The first result gives the upper bounds on expected value of $T_{\theta_s}$.
\begin{restatable}{lemma}{sameThetaEstRounds}
	\label{lem:sameThetaEstRounds}
	Let $M$ be the number of arms in the optimal allocation. For CSB problem instance $(\bmu,\theta_s, Q)$, the expected number of rounds needed by \ref{alg:CSB-JS} to find an allocation equivalent for threshold $\theta_s$ is upper bounded as
	\begin{equation*}
	\EE{T_{\theta_s}} \le \sum_{L=M + 1}^{K} \frac{1}{1 - \Pi_{i \in [K]/[K-L]}(1-\mu_i)}.
	\end{equation*}	
\end{restatable}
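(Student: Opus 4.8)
\textbf{Proof plan for Lemma \ref{lem:sameThetaEstRounds}.}

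The plan is to track the ``phase'' of \ref{alg:CSB-JS}, where phase $L$ denotes the rounds during which the algorithm distributes resources equally among the top-$L$ arms (so $\hat\theta_s = Q/L$). The algorithm starts in phase $K$ and decrements $L$ by one exactly when a loss is observed on some arm in the currently-selected set $A_t$. Since $Q/L$ is an underestimate of the true allocation equivalent whenever $L > M$ (equivalently $Q/L < \theta_s$, using \cref{lem:thetaSet} which identifies $M = \min\{\lfloor Q/\theta_s\rfloor, K\}$ and $\hat\theta_s = Q/M$ as the allocation equivalent in $\Theta$), in every such phase the resources allocated to each of the top-$L$ arms are strictly below $\theta_s$, so a genuine Bernoulli loss is observed from each of them. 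The allocation equivalent is reached once $L$ first drops to $M$, at which point $Q/M \ge \theta_s$ and no further loss is observed on the allocated arms, so no more decrements occur. Hence $T_{\theta_s} = \sum_{L=M+1}^{K} N_L$, where $N_L$ is the number of rounds spent in phase $L$.

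Next I would bound $\EE{N_L}$ for each $L \in \{M+1,\dots,K\}$. In phase $L$, in each round the algorithm selects $A_t$ to be the $L$ arms with the largest sampled means and observes $X_{t,i}$ for each $i \in A_t$; the phase ends in that round if $X_{t,i}=1$ for at least one $i \in A_t$. The key observation is that the set $A_t$, though it varies with the random Beta samples, always has size $L$, and the arms \emph{not} in $A_t$ number $K-L$. Regardless of which $L$-subset is chosen, the probability that \emph{no} loss is observed on $A_t$ is $\prod_{i \in A_t}(1-\mu_i)$, which is maximized (i.e., the worst case for the phase-exit probability) when $A_t$ consists of the $L$ arms with the \emph{smallest} means $\mu_i$ — that is, the arms indexed $K-L+1,\dots,K$ in the decreasing ordering, which is exactly the index set $[K]\setminus[K-L]$. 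Therefore, in any round of phase $L$, conditioned on the history, the phase-exit probability is at least $1 - \prod_{i \in [K]\setminus[K-L]}(1-\mu_i) =: q_L > 0$. Since the Bernoulli draws are IID across rounds and independent of the sampling randomness, $N_L$ is stochastically dominated by a geometric random variable with success parameter $q_L$, giving $\EE{N_L} \le 1/q_L$. Summing over $L$ from $M+1$ to $K$ yields the claimed bound
\[
\EE{T_{\theta_s}} \le \sum_{L=M+1}^{K} \frac{1}{1 - \prod_{i \in [K]\setminus[K-L]}(1-\mu_i)}.
\]

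The main obstacle I anticipate is making rigorous the ``worst-case set'' argument, i.e., that the exit probability in phase $L$ is bounded below uniformly over all realizations of $A_t$ by the quantity $q_L$ computed from the $L$ smallest means. This requires a clean coupling/conditioning argument: conditioning on the Beta-sample-induced choice of $A_t$ (which is measurable with respect to $S_i,F_i$ histories and fresh sampling noise, hence independent of the current round's $X_{t,i}$), the conditional probability of exiting is $1 - \prod_{i\in A_t}(1-\mu_i) \ge 1 - \max_{|A|=L}\prod_{i\in A}(1-\mu_i) = q_L$, and the monotonicity of the product in the selected means pins down $[K]\setminus[K-L]$ as the maximizer. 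One also needs to confirm the elementary fact that $Q/L < \theta_s$ for all $L > M$, which is immediate from $M = \lfloor Q/\theta_s\rfloor$ (when this is $\le K$) since then $Q/\theta_s < M+1$, hence $Q/L \le Q/(M+1) < \theta_s$; the boundary case $M=K$ is vacuous as the sum is empty. The remaining steps — the geometric domination and the summation — are routine.
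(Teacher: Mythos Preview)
Your proposal is correct and follows essentially the same approach as the paper: decompose $T_{\theta_s}$ into the phase lengths $N_L$ for $L=K,K-1,\dots,M+1$, bound each $\EE{N_L}$ by the mean of a geometric random variable, and sum. The paper's proof computes the geometric mean $1/(1-\prod(1-\mu_i))$ for a fixed set of Bernoulli variables and then, in a single terse line (``By taking top $L$ arms in each round''), asserts the per-phase bound with the index set $[K]\setminus[K-L]$; your version makes explicit what the paper leaves implicit, namely the worst-case-over-$A_t$ argument (since $A_t$ is determined by Beta samples independent of the current $X_{t,\cdot}$, the conditional exit probability is $1-\prod_{i\in A_t}(1-\mu_i)\ge q_L$), which is exactly the rigorous justification the paper's bound needs.
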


\noindent
Let $\nabla_{\max}$ be defined as in Section \ref{sssec:sameThetaRegretBounds}. We are now ready the state the regret bounds.
\begin{restatable}{theorem}{regretJointSameThreshold}
	\label{thm:regretJointSameThreshold}
	Let $(\bmu,\theta_s, Q)$ be the CSB problem instance with same threshold, $\mu_{M} > \mu_{M+1}$ and $T>T_{\theta_s}$. Then the expected regret of \textnormal{\ref{alg:CSB-JS}} is upper bound as
	\begin{align*}
	\EE{\Regret_T} &\le \sum_{L=M + 1}^{K} \frac{\Delta_{\max}}{1 - \Pi_{i \in [K]/[K-L]}(1-\mu_i)} + O\left(\sum_{i =1}^{M} \frac{(\mu_i - \mu_{M+1})\log {T}}{d(\mu_{M+1},\mu_i)}\right).
	\end{align*}
\end{restatable}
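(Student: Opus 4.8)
The plan is to split the regret into two phases exactly as in the theorem statement: the \emph{threshold-estimation phase}, which lasts for $T_{\theta_s}$ rounds, and the \emph{exploitation phase}, which begins once an allocation equivalent of $\theta_s$ has been identified. For the first phase, every round incurs at most $\nabla_{\max}$ regret, so the contribution is at most $\nabla_{\max}\,\EE{T_{\theta_s}}$; substituting the bound on $\EE{T_{\theta_s}}$ from \cref{lem:sameThetaEstRounds} gives the first sum $\sum_{L=M+1}^{K}\Delta_{\max}/\big(1-\Pi_{i\in[K]/[K-L]}(1-\mu_i)\big)$ (with $\Delta_{\max}=\nabla_{\max}$ in the notation of the statement). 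This is the easy half and uses only the already-proved lemma plus the trivial per-round regret bound.

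For the second phase, the key observation — which I would state and justify carefully first — is that once \ref{alg:CSB-JS} has settled on $L=M$, it distributes resources equally among the top-$M$ arms by \emph{estimated} mean loss (the estimates being Thompson samples from the Beta posteriors), and hence observes losses exactly from the complementary $K-M$ arms. By \cref{lem:thetaSet} this is the optimal configuration, and the dynamics of the algorithm in this regime coincide round-for-round with MP-TS run on the $(K-M)$-play instance of $\PMP$ obtained via the map $f$ of \cref{prop:RegretEquiST} (loss setting, goal = play the $K-M$ smallest-mean arms). Then I would invoke \cref{prop:RegretEquiST} to transfer the known optimal regret bound for MP-TS on Bernoulli instances: the regret accumulated after round $T_{\theta_s}$ is $O\big(\sum_{i=1}^{M}(\mu_i-\mu_{M+1})\log T / d(\mu_{M+1},\mu_i)\big)$. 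The assumption $\mu_M>\mu_{M+1}$ guarantees the top-$M$ set is unique and the KL terms $d(\mu_{M+1},\mu_i)$ are strictly positive and well defined. Adding the two phase contributions yields the stated bound.

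The main obstacle I anticipate is making the ``coincides with MP-TS'' claim rigorous, because the algorithm does \emph{not} enter the exploitation phase at a fixed deterministic time: $T_{\theta_s}$ is random, and before convergence the posterior counts $S_i,F_i$ have been contaminated by the bookkeeping variables $Z_i$ and by rounds with $L>M$. I would argue that (i) the $Z_i$ mechanism exactly removes the spurious zeros coming from over-allocated arms, so the posteriors fed into the exploitation phase are statistically valid (each $S_i,F_i$ counts only genuine Bernoulli($\mu_i$) observations), and (ii) conditioning on the event $\{T_{\theta_s}<\infty\}$ and on the history up to that time, the continuation is a Thompson-sampling process on the $(K-M)$-play instance with a (possibly non-uniform but valid) prior, to which the MP-TS analysis of \citet{ICML15_komiyama2015optimal} applies up to an additive $O(1)$ term absorbed into the $O(\cdot)$. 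One should also check that no regret beyond $\nabla_{\max}$ per round can be incurred in phase one and that the expectation of phase-one regret is finite, which follows since $\EE{T_{\theta_s}}<\infty$ by \cref{lem:sameThetaEstRounds}. A minor additional point is handling the rounds where $L>M$ but some arm still receives enough resource by chance; these are all charged to the $\nabla_{\max}\,\EE{T_{\theta_s}}$ term and need no separate treatment.
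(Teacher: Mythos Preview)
Your proposal is correct and follows essentially the same approach as the paper: decompose the regret into a threshold-estimation phase bounded by $\nabla_{\max}\,\EE{T_{\theta_s}}$ via \cref{lem:sameThetaEstRounds}, and an exploitation phase bounded by the MP-TS regret via the equivalence of \cref{prop:RegretEquiST} (the paper packages this second step as \cref{thm:MPRegret}). Your discussion of the potential obstacle—the random stopping time $T_{\theta_s}$ and the validity of the Beta posteriors entering phase two—is in fact more careful than the paper's own proof, which simply asserts the decomposition and cites the two ingredients without addressing those subtleties.
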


 The proof of \cref{lem:sameThetaEstRounds} follows by deriving the number of rounds required to observe a sample of `$1$' from a set of independent Bernoulli random variables. Whereas for \cref{thm:regretJointSameThreshold}, the first term in the regret bound corresponds to the expected regret incurred due to the estimation of allocation equivalent. The second term in the regret bound corresponds to the expected regret due to the MB-MAB based regret minimization algorithm MP-TS \citep{ICML15_komiyama2015optimal}. The assumption $\mu_{M} > \mu_{M+1}$ ensures that Kullback-Leibler divergence in the regret bound is well defined. 

\begin{corollary}
	\label{cor:OptimalBoundJointST}
	The regret of \textnormal{\ref{alg:CSB-JS}} is asymptotically optimal.
\end{corollary}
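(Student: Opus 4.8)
The plan is to mirror the argument used for \cref{cor:OptimalBoundST} in the horizon-dependent case: decompose the regret of \ref{alg:CSB-JS} into a constant term (the cost of locating an allocation equivalent of $\theta_s$) plus the regret of MP-TS run on the residual MP-MAB instance, and then invoke the known asymptotic optimality of MP-TS for Bernoulli arms together with the lower bound in \cref{eqn:LowerBound}. First I would observe that, by \cref{lem:sameThetaEstRounds}, the number of rounds $T_{\theta_s}$ spent in the threshold-estimation phase satisfies $\EE{T_{\theta_s}} \le \sum_{L=M+1}^{K}\big(1 - \Pi_{i \in [K]/[K-L]}(1-\mu_i)\big)^{-1}$, a finite quantity depending only on the instance $(\bmu,\theta_s,Q)$ and not on $T$. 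Hence the regret accrued during this phase is at most $\EE{T_{\theta_s}}\,\nabla_{\max}$ with $\nabla_{\max}$ the maximum per-round regret, i.e.\ exactly the first term in the bound of \cref{thm:regretJointSameThreshold}; dividing by $\log T$ and letting $T\to\infty$ kills this term.

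Next I would argue that once an allocation equivalent $\hat\theta_s$ with $M = Q/\hat\theta_s$ has been fixed, \ref{alg:CSB-JS} merely splits the budget equally among the $M$ arms of currently largest posterior mean, observes losses on the complementary $K-M$ arms, and updates the Beta posteriors — which is precisely the MP-TS policy of \citet{ICML15_komiyama2015optimal} playing the $K-M$ arms of smallest mean in a $(K-M)$-play bandit, via the regret equivalence established in \cref{prop:RegretEquiST}. The finitely many rounds of the estimation phase perturb the handed-over statistics $(S_i,F_i)$ by only an instance-dependent constant offset, and MP-TS's asymptotic optimality for Bernoulli arms is robust to such a bounded prior perturbation; therefore the $O(\cdot)$ term in \cref{thm:regretJointSameThreshold} can be taken to be $\big(\sum_{i=1}^{M}\frac{(\mu_i-\mu_{M+1})\log T}{d(\mu_{M+1},\mu_i)}\big)(1+o(1))$, matching the MP-TS analysis with the correct leading constant.

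Combining the two parts yields
\[
\limsup_{T\to\infty}\frac{\EE{\Regret_T}}{\log T} \le \sum_{i=1}^{M}\frac{\mu_i-\mu_{M+1}}{d(\mu_{M+1},\mu_i)},
\]
which coincides with the lower bound \cref{eqn:LowerBound} valid for every strongly consistent policy on the equivalent MP-MAB instance, and hence, by the one-to-one correspondence of \cref{prop:RegretEquiST}, on the CSB instance as well. Thus the asymptotic constant attained by \ref{alg:CSB-JS} equals the best possible, and the corollary follows.

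The main obstacle is the transfer step: making precise that the constant-length, adaptively terminated estimation phase — with its $Z_i$ counters, shrinking active set $A_t$, and occasional over-allocation — leaves the post-estimation process genuinely indistinguishable, up to $o(\log T)$ regret, from a fresh run of MP-TS, so that the tight leading constant of \citet{ICML15_komiyama2015optimal} can be imported verbatim. One must verify that the handed-over counts differ from their "clean" MP-TS values only by a bounded offset (this is exactly what the $Z_i$-resetting logic of \ref{alg:CSB-JS} guarantees, since spurious zeros from over-allocation are never credited to the $F_i$ of the correctly-served arms), and that MP-TS's regret guarantee degrades by at most a constant under such an offset. Everything else reduces to a routine limit computation.
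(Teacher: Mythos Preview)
Your proposal is correct and follows essentially the same approach as the paper: the paper's proof is the single sentence ``The proof follows by comparing the asymptotic regret bound of \ref{alg:CSB-JS} with the lower bound of regret given in \cref{eqn:LowerBound},'' and you have simply unpacked this comparison in detail --- showing the first term of \cref{thm:regretJointSameThreshold} is $O(1)$ and hence vanishes after dividing by $\log T$, and that the second term matches \cref{eqn:LowerBound} via the MP-TS equivalence. Your discussion of the ``transfer step'' (that the bounded perturbation to $(S_i,F_i)$ from the estimation phase does not spoil the MP-TS leading constant) is a valid technical concern that the paper glosses over; the paper implicitly relies on \cref{thm:MPRegret}, which states the post-estimation regret with the exact leading constant, so your extra care is warranted but does not constitute a different route.
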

The proof follows by comparing the asymptotic regret bound of \ref{alg:CSB-JS} with the lower bound of regret given in \cref{eqn:LowerBound}.

\subsection{Arms with Different Threshold}

In this section, we develop an algorithm named \ref{alg:CSB-JD} for the CSB problem where the thresholds may not be the same. It exploits \cref{lem:diffTheteEst} to find allocation equivalent. \ref{alg:CSB-JD} works as follows: It takes $\gamma$ as input. We initialize each arm's prior distribution as the Beta distribution $\beta(1, 1)$. For each arm $i \in  [K]$, algorithm maintains a variable $L_i$ and set $Z_i$. The variable $L_i$ is the lower bound of the threshold for arm $i$, set $Z_i$ keeps count of the number of time no loss is observed from the arm $i$ for different resource allocations, and $Z_i[\hat\theta_i]$ represents the count of no losses for resource allocation $\hat\theta_i$ to arm $i$. The value of $L_i$ is initially set to $0$, and set $Z_i$ is initialized as an empty set. The set $Z_i[\cdot]$ plays a similar role as to the variable $Z_i$ in \ref{alg:CSB-JD}; however, it needs to store the counts for different resource allocations.

\begin{algorithm}[!ht] 
	\renewcommand{\thealgorithm}{\bf CSB-DU}
	\floatname{algorithm}{}
	\caption{Algorithm for CSB with Different threshold with Unknown parameters}
	\label{alg:CSB-JD}
	\begin{algorithmic}[1]
		\State \textbf{Input:} $\gamma$
		\State $\forall i \in [K]:$ set $S_i = 1, F_i = 1, L_i=0$, and $Z_i = \phi$
		\For{$t=1, 2, \ldots$}	
		\State $\forall i \in [K]: \hat\mu_{i} \leftarrow \beta(S_i, F_i)$.
		\If{$Q - \sum_{i \in [K]} (L_i + \gamma) \ge 0$}
		\State Compute $\hat\btheta$ using \eqref{equ:updateAllocation} and set $A_t \leftarrow [K]$
		\Else
		\State $A_{t} \leftarrow KP( \hat{\bmu}, \hat\btheta, Q)$ where $\hat\theta_{i}=L_i + \gamma$
		\EndIf
		
		\For{$i \in A_t$}
		\State Assign $\hat\theta_{i}$ resources to arm $i$ and observe $X_{t,i}$
		\If {$X_{t,i} = 1$} 
		\State If $L_i < \hat\theta_{i}$ then change $L_i = \hat\theta_{i}$ 
		\State Update $S_i \leftarrow S_i + 1$, $F_i \leftarrow F_i + \sum_{\hat\theta_i \le L_i} Z_i[\hat\theta_i]$, and $\forall \hat\theta_i \le L_i:$ set $Z_i[\hat\theta_i] = 0$
		\Else
		\State If {$\hat\theta_{i}$ is not in $Z_i$} then add $Z_i[\hat\theta_{i}] = 1$ to $Z_i$ otherwise update $Z_i[\hat\theta_{i}] = Z_i[\hat\theta_{i}] + 1$
		\EndIf
		\EndFor
		\State $\forall i \in [K]\setminus A_t:$ observe $X_{t,i}$ and update $S_i = S_i + X_{t,i}, F_i = F_i + 1 - X_{t,i}$
		\EndFor
	\end{algorithmic}
\end{algorithm}

Let $S_i(t)$ and $F_i(t)$ denote the value of $S_i$ and $F_i$ at beginning of the round $t$. In round $t$, for each $i \in [K]$, an independent sample $(\hat\mu_{t,i})$ is drawn from $\beta(S_i(t), F_i(t))$. Initially, the value of the lower bound of the threshold for each arm is set to $0$. At the start, the resources are equally distributed among the arms. In the subsequent rounds, it is incremented by an amount of $\gamma$ for arms on which a loss is observed while uniformly distributing leftover resources among other arms as follows:
\begin{align}
\label{equ:updateAllocation}
\theta_{i} = \begin{cases}
L_i + \gamma & \text{if } L_i \ne 0 \\
Q_l/L_0       & \text{otherwise}
\end{cases}
\end{align}
where $Q_l := Q - \sum_{i \in [K]: L_i \ne 0} (L_i + \gamma)$ are the leftover resources and $L_0 := |\{i: L_i =0\}|$ is the number of arms whose lower bound of threshold is still $0$. Allocating resources equally among arms leads to a better initial lower bound on thresholds.  This process is continued until all the arms can get the required resources.

If resources are not enough for all arms, then the set of arms is selected by solving $KP(\hat{\bmu}, \hat{\btheta}, Q)$ problem (denoted as set $A_t$). Each arm $i \in A_t$ has given resource $\theta_{i}=L_i + \gamma$ and a sample $X_{t,i}$ is observed. If a loss is observed, then it implies that the arm is under-allocated. Accordingly, the lower bound of the threshold for that arm is updated. The success and failure counts are also updated as $S_i = S_i + 1, F_i = F_i + \sum_{\hat\theta_i \le L_i} Z_i[\hat\theta_i]$, and values of set $Z_i[\hat\theta_i]$ with $\hat\theta_i \le L_i$ are changed to $0$. If no loss is observed for arms having required resources and $Z_i[\hat\theta_i]$ is not in set $Z_i$, then add $Z_i[\hat\theta_i]$ to set $Z_i$ with value $1$; otherwise, increment $Z_i[\hat\theta_i]$ by $1$. The success and failure counts are also updated for each arm $i \in [K]\setminus A_t$ as $S_i = S_i + X_{t,i}$ and $F_i = F_i + 1 - X_{t,i}$.

\subsubsection{Analysis of \ref{alg:CSB-JD}} 
Let $T_{\theta_d}$ denote the number of rounds required to find an allocation equivalent of $\btheta$. Our following result gives an upper bound on the expected value of $T_{\theta_d}$.
\begin{restatable}{lemma}{diffThetaEstRounds}
	\label{lem:diffThetaEstRounds}
	For CSB problem instance $(\bmu,\btheta, Q)$ with $\gamma > 0$, the expected number of rounds needed by \ref{alg:CSB-JD} to find an allocation equivalent vector for $\btheta$ is upper bounded as
	\begin{equation*}
	\EE{T_{\theta_d}} \le \sum_{i \in [K]: \mu_i \ne 0} \floor{\frac{\theta_i}{\gamma}} \left( \frac{1}{\mu_i} \right).
	\end{equation*}
\end{restatable}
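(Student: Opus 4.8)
The plan is to follow, arm by arm, how the lower bound $L_i$ that \ref{alg:CSB-JD} maintains climbs towards the true threshold $\theta_i$, and to bound the rounds each climb consumes. First I would fix the stopping condition. An arm with $\mu_i=0$ never produces a loss and its term $\mu_i\one{a_i\ge\theta_i}$ vanishes identically, so it is irrelevant to the minima defining allocation equivalence; hence it is enough to drive $\hat\theta_i=L_i+\gamma$ into $[\theta_i,\ceil{\theta_i/\gamma}\gamma]$ for every arm with $\mu_i\ne 0$, after which \cref{lem:diffTheteEst} (applied to the sub-instance on the positive-mean arms) gives that $\hat\btheta$ is an allocation equivalent of $\btheta$. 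Note $L_i$ is non-decreasing, rises by exactly $\gamma$ whenever a loss is observed at arm $i$ while $\hat\theta_i=L_i+\gamma<\theta_i$, and freezes forever once $L_i+\gamma\ge\theta_i$, because then $a_{t,i}=\hat\theta_i\ge\theta_i$ forces $Y_{t,i}=0$; its terminal value is $(\ceil{\theta_i/\gamma}-1)\gamma$, at which $\hat\theta_i=\ceil{\theta_i/\gamma}\gamma$ lands exactly on the right endpoint of the target interval. So $T_{\theta_d}$ is at most the first round by which every positive-mean arm has frozen.

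For a fixed arm $i$ with $\mu_i\ne 0$, reaching the terminal value from $L_i=0$ takes exactly $\ceil{\theta_i/\gamma}-1\le\floor{\theta_i/\gamma}$ increments. Each increment needs one loss at arm $i$ in a round where it is under-served; in any such round censoring is inactive, so $Y_{t,i}=X_{t,i}\sim\mathrm{Ber}(\mu_i)$ independently of the history, and the number of under-served rounds between consecutive increments of $L_i$ is geometric with mean $1/\mu_i$. By Wald's identity the expected number of rounds in which arm $i$ is under-served before it freezes is therefore at most $\floor{\theta_i/\gamma}\cdot(1/\mu_i)$.

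To assemble these into a bound on $T_{\theta_d}$, I would prove the structural claim that in every round $t\le T_{\theta_d}$ at least one not-yet-frozen arm with $\mu_i\ne0$ is allocated $\hat\theta_i<\theta_i$, so that round can be attributed to that arm as one of its under-served rounds. Granting this, $T_{\theta_d}$ is at most the total number of under-served rounds over the positive-mean arms (a round attributed to several arms is counted once, which only helps); taking expectations and using the preceding paragraph yields
\[
	\EE{T_{\theta_d}} \le \sum_{i\in[K]:\,\mu_i\ne 0} \floor{\frac{\theta_i}{\gamma}}\left(\frac{1}{\mu_i}\right).
\]

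The step I expect to be the main obstacle is this structural claim --- that the estimation phase never stalls. In the branch $A_t\leftarrow[K]$ it is immediate for any unfrozen arm with $L_i\ne 0$, which receives precisely $L_i+\gamma<\theta_i$; the delicate cases are unfrozen arms with $L_i=0$, which in that branch receive the equal split $Q_l/L_0$ of the residual and could be over-served, and the branch $A_t\leftarrow KP(\hat\bmu,\hat\btheta,Q)$, where the chosen superarm depends on the random Thompson samples $\hat\bmu_t$. To close these I would argue that the resources committed to already-frozen arms only grow over time, so the residual eventually shrinks enough that some unfrozen arm must be served strictly below its threshold, while as long as the residual does not exceed $Q$ the algorithm falls back to $A_t=[K]$ and serves every arm $L_i+\gamma$. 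The remaining pieces --- the geometric wait-time estimate and the summation --- are routine.
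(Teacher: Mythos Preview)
Your plan follows essentially the same route as the paper: per arm, the expected wait for a loss at an under-served arm is $1/\mu_i$ by a geometric argument, at most $\floor{\theta_i/\gamma}$ such increments are needed to push $\hat\theta_i$ into $[\theta_i,\ceil{\theta_i/\gamma}\gamma]$, and then one sums over the positive-mean arms. The paper collapses your structural claim into a single sentence, simply asserting that ``the worst case [is] where only one threshold is estimated at a time,'' and then summing; it does not attempt to justify that no round can be wasted. So the difference is one of rigor rather than strategy: you have correctly isolated the step the paper glosses over --- that every pre-termination round is chargeable to some unfrozen arm's under-served count --- and flagged the two subcases (equal split on $L_i=0$ arms, and the Thompson-driven knapsack selection) where this is not obvious. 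The paper offers no additional idea to close those cases; it treats the sequential-estimation scenario as a self-evident upper bound and leaves it at that.
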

The lower bound of the threshold for an arm having zero mean loss remains $0$. Therefore, when resources are not enough, \ref{alg:CSB-JD} allocate only $\gamma$ amount of resources to such arms. Let $\nabla_{\max}$, $\nabla_{i,\min}$, and $K^\prime$ be the same as in Section \ref{sssec:differentThetaRegretBounds}. We are now ready to state the regret bound. 
\begin{restatable}{theorem}{regretJointDiffThreshold}
	\label{thm:regretJointDiffThreshold}
	Let $(\bmu,\btheta, Q)$ be the CSB problem instance with $\gamma > 0$  and $T>T_{\theta_d}$. Then the expected regret of \textnormal{\ref{alg:CSB-JD}} is upper bound as
	\begin{align*}
		\EE{\Regret_T} \le &\sum_{i \in [K]: \mu_i \ne 0} \floor{\frac{\theta_i}{\gamma}} \left( \frac{1}{\mu_i} \right)\Delta_{\max} + O\left(\sum_{i \in [K]}\frac{\log^2(K^\prime)\log T}{\nabla_{i,\min}} \right).
	\end{align*}
\end{restatable}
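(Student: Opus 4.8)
The plan is to split the horizon at the (random) round $T_{\theta_d}$ at which \textnormal{\ref{alg:CSB-JD}} first fails the test $Q-\sum_{i\in[K]}(L_i+\gamma)\ge 0$, i.e.\ the round in which it leaves the threshold–estimation phase and starts allocating according to $KP(\hat{\bmu},\hat{\btheta},Q)$ with $\hat\theta_i=L_i+\gamma$. Writing $\Regret_T=\Regret_{T}^{\mathrm{est}}+\Regret_{T}^{\mathrm{opt}}$, where $\Regret_{T}^{\mathrm{est}}$ collects the instantaneous regrets of the rounds $t\le T_{\theta_d}$ and $\Regret_{T}^{\mathrm{opt}}$ those of the rounds $T_{\theta_d}<t\le T$, it suffices to bound the two terms separately and add them. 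Since $T_{\theta_d}$ is a stopping time, the decomposition (and the later use of \cref{lem:diffThetaEstRounds}) should be justified by conditioning on $T_{\theta_d}$ and the history up to it and then taking expectations.

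First I would bound $\EE{\Regret_{T}^{\mathrm{est}}}$. The instantaneous regret in any round is at most $\nabla_{\max}$ by definition, so $\Regret_{T}^{\mathrm{est}}\le \nabla_{\max}\,T_{\theta_d}$ deterministically, and taking expectations and invoking \cref{lem:diffThetaEstRounds} gives $\EE{\Regret_{T}^{\mathrm{est}}}\le \nabla_{\max}\sum_{i\in[K]:\mu_i\ne0}\floor{\theta_i/\gamma}(1/\mu_i)$, which is the first term of the claimed bound (up to the $\nabla_{\max}$ versus $\Delta_{\max}$ naming). Here one must also check, using the lower-bound update rule of \textnormal{\ref{alg:CSB-JD}} together with \cref{lem:diffTheteEst}, that on the event that the estimation phase has ended the vector $\hat{\btheta}$ with $\hat\theta_i=L_i+\gamma$ satisfies $\hat\theta_i\in[\theta_i,\ceil{\theta_i/\gamma}\gamma]$ and is therefore an allocation equivalent of $\btheta$; this is exactly what makes the switch to the knapsack oracle correct. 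For arms with $\mu_i=0$ the lower bound stays at $0$, and the $\gamma$ allocated to them is what the hypothesis $\gamma>0$ buys.

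Next I would bound $\EE{\Regret_{T}^{\mathrm{opt}}}$. After round $T_{\theta_d}$ the algorithm plays, in each round, the superarm returned by an oracle solving $KP(\hat{\bmu},\hat{\btheta},Q)$, observing genuine Bernoulli$(\mu_i)$ samples precisely from the arms that receive no resource (the below-threshold arms) and updating the $\mathrm{Beta}(S_i,F_i)$ posteriors with them — which is exactly the update of CTS-BETA on the Combinatorial Semi-Bandits instance whose feasible superarms are the $KP(\bmu,\hat{\btheta},Q)$-feasible sets. By \cref{prop:MultiThetaEquivalence} this instance has the same optimal allocation and the same gaps $\nabla_{i,\min}$ as the CSB instance with threshold $\hat{\btheta}$, and since $\hat{\btheta}$ is allocation equivalent to $\btheta$, the same optimal value as $P$. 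Applying the CTS-BETA regret bound of \citet{NeurIPS20_perrault2020statistical} over the at most $T$ rounds of this phase yields $\EE{\Regret_{T}^{\mathrm{opt}}}\le O\!\big(\sum_{i\in[K]}\log^2(K^\prime)\log T/\nabla_{i,\min}\big)$, the second term; summing the two bounds completes the proof.

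The main obstacle is the last step: the posteriors entering the optimization phase are not the fresh $\mathrm{Beta}(1,1)$ priors that a vanilla CTS-BETA run starts from — they have been warm-started by observations gathered during threshold estimation, including the delicate bookkeeping of the held-zero counters $Z_i[\cdot]$, which are released into $F_i$ only once a later loss certifies that the corresponding allocation was below threshold. I would argue that every count ever added to $S_i$ or $F_i$ is a bona fide i.i.d.\ Bernoulli$(\mu_i)$ sample (the $Z_i[\cdot]$ mechanism is designed precisely to discard the spurious zeros produced by over-allocation), so the post-switch process is stochastically a run of CTS-BETA with an extra independent batch of valid samples prepended; a coupling/monotonicity argument then shows this cannot increase the regret relative to the cold-start bound. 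Verifying this prepended-samples reduction carefully — and the grid fact $\hat\theta_i\in[\theta_i,\ceil{\theta_i/\gamma}\gamma]$ in the presence of a non-grid-aligned initial allocation $Q_l/L_0$ — is where the real work lies; everything else is assembly of the two cited results.
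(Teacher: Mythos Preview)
Your proposal is correct and follows essentially the same approach as the paper: split the regret at $T_{\theta_d}$, bound the first part by $\nabla_{\max}\EE{T_{\theta_d}}$ via \cref{lem:diffThetaEstRounds}, and bound the second part by invoking the combinatorial semi-bandits equivalence (\cref{prop:MultiThetaEquivalence}) together with the CTS-BETA regret bound of \citet{NeurIPS20_perrault2020statistical}. In fact you are more scrupulous than the paper, which does not address the warm-start issue for the Beta posteriors or the grid-alignment of $\hat\theta_i$ that you flag as the ``real work.''
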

The first term of expected regret is the regret incurred due to the estimation of allocation equivalent. The expected number of rounds needed to find the allocation equivalent is given by \cref{lem:diffThetaEstRounds}. The second term corresponds to the expected regret due to the combinatorial semi-bandits algorithm CTS-BETA \citep{NeurIPS20_perrault2020statistical}.

\paragraph{Anytime Algorithm for CSB problem with Multiple Thresholds.}
Anytime algorithms maintain the lower bound of thresholds and linearly increase resources after observing a loss for current resource allocation. These algorithms do not tell us whether the estimated threshold is good or bad. Hence it is not possible to maintain a set of good thresholds as done in \ref{alg:CSB-MT}. Suppose it is possible to maintain a list of possible candidates for thresholds. In that case, the resources are allocated accordingly among arms, which may not have a good threshold estimate. Since the anytime algorithms do not handle the over-estimation problem, there is no way to reduce the over-allocated resources to an arm as done by \ref{alg:CSB-MT} which waits for a certain number of rounds before reducing resources. We observe that anytime algorithms are not possible when considering the CSB problems in the reward setting. More discussion about this can be found in \cref{sec:csb_num}, where we discuss the application of CSB setup for the Network Utility Maximization.

\paragraph{Anytime Algorithms versus Horizon-Dependent Algorithms.}
The significant difference between horizon dependent algorithms and anytime algorithms is the way they estimate the threshold vector. After knowing the allocation equivalent for the threshold vector, the algorithms work similarly. The horizon dependent algorithms use binary search and wait for a fixed number of rounds with over-allocated resources. In contrast, resources are increased linearly after observing a loss by anytime algorithms. Anytime algorithms perform poorly for CSB problems with different thresholds as the search space for allocation equivalent can be very large, but perform better for CSB problems with the same threshold, where the search space is small. \cref{table:regretComparision} summarizes the number of rounds taken for threshold estimation by proposed algorithms.

\begin{table}[!ht]
	\centering
	\setlength\tabcolsep{1pt}
	\setlength\extrarowheight{5pt}
	\begin{tabular}{ |c|c|c|} 
		\hline
		\diagbox[width=0.23\textwidth]{\bf Cases}{\bf $\boldsymbol{{\text{Rounds}}}$}
		&\thead{With Known Parameters $(\epsilon, \delta) $ \\ (Rounds with High Probability) }&\thead{With Unknown Parameters \\ (Expected Rounds)} \\ 
		\hline
		Same Threshold & $\frac{\log(\log_2(K)/\delta)}{\log\left({1}/{(1-\epsilon)}\right)} \log_2(K)$ & $\sum\limits_{L=M + 1}^{K} \frac{1}{1 - \Pi_{i \in [K]/[K-L]}(1-\mu_i)}$ \\ 
		\hline
		Different Threshold & $KW_\delta {\log_2\left(\ceil{1 +  {Q}/{\gamma}}\right)}$ & $\sum\limits_{i \in [K]:\mu_i \ne 0}\floor{\frac{\theta_i}{\gamma}} \left( \frac{1}{\mu_i} \right)$ \\ 
		\hline
		$1<n <K$ & $W_\delta \left(\sum\limits_{i \in A_{\Theta_n}} {\log_2 \ceil{1 +  \frac{Q}{\gamma}}} + K{\log_2 (n +  1)}\right)$ & -- \\ 
		\hline
	\end{tabular}
	\caption{Comparing upper bounds on the (expected) number of rounds needed to find allocation equivalent for the proposed algorithms. The value of $W_\delta$ used in the table is ${\log( K \log_2(\ceil{1 +{Q}/{\gamma}})/\delta)}/ {\log(1/(1-\epsilon))}$}
	\label{table:regretComparision}
\end{table}

\nocite{Arxiv21_verma2021censored}

\section{Experiments}
\label{sec:csb_experiments}

We empirically evaluate the performance of proposed algorithms on four synthetically generated instances. In instances I and II, the threshold is the same for all arms. In contrast, the thresholds vary across arms in Instance III and IV. The details are as follows:

\noindent
\textbf{Identical Threshold:} Both instance I and II have $K = 50, Q=15$ and $\theta_s=0.5$.  The mean loss of arm $i\in [K]$ is $x - (i-1)/100$. We set $x=0.5$ for instance I and $x=0.7$ for instance II. 

\noindent
\textbf{Different Thresholds:} Both Instance III and IV has $K = 10, Q=3$, and $\gamma=10^{-2}$. For Instance III, the mean loss vector is $\bmu = [0.9, 0.8, 0.42, 0.6, 0.5, 0.2, 0.1, 0.3, 0.7, 0.98]$ and the corresponding threshold vector is $\btheta=[0.65, 0.55, 0.3, 0.46, 0.37, 0.2, 0.07, 0.25, 0.3, 0.8]$. Whereas, Instance IV has the mean loss vector  $\bmu = [0.9, 0.8, 0.42, 0.6, 0.5, 0.2, 0.1, 0.3, 0.7, 0.98]$ and the corresponding threshold vector $\btheta=[0.55, 0.55, 0.3, 0.55, 0.55, 0.55, 0.3, 0.3, 0.3, 0.55]$.

The losses of the arm $i \in [K]$ are Bernoulli distributed with mean $\mu_i$. We repeated the experiment 100 times and plotted the regret with a 95\% confidence interval (the vertical line on each curve shows the confidence interval). 

\subsection{Performance of  Algorithms}
In our first set of experiments, we empirically evaluate the performance of horizon dependent algorithms. First, we vary the amount of resource $Q$ for Instance II and observe the regret of \ref{alg:CSB-ST} as given in \cref{fig:QSameTheta}. We observe that when resources are small, the learner can allocate resources to a few arms but observes loss from more arms. On the other hand, when resources are more, the learner allocates resources to more arms but observes loss from fewer arms. Thus as resources increase, we move from semi-bandit feedback to bandit feedback. Therefore, regret increases with an increase in the amount of resources. Next, we only vary $\theta_s$ in Instance II, and the regret of \ref{alg:CSB-ST} for different value of same threshold $\theta_s$ is shown in \cref{fig:TSameTheta}. Similar trends are observed as the decrease in threshold leads to an increase in the number of arms that can be allocated resources and vice-versa. Therefore the amount of feedback decreases as the threshold decreases and leads to more regret. The empirical results also validate sub-linear regret bounds for the proposed algorithm. 

\begin{figure}[!ht]
	\centering
	\scriptsize
	\begin{subfigure}[b]{0.40\textwidth}	\includegraphics[width=\linewidth]{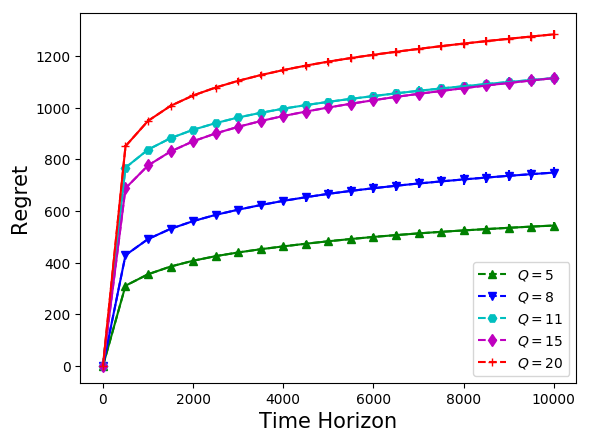}
		\caption{Varying resources in Instance II.}
		\label{fig:QSameTheta}
	\end{subfigure}\qquad
	\begin{subfigure}[b]{0.40\textwidth}
		\includegraphics[width=\linewidth]{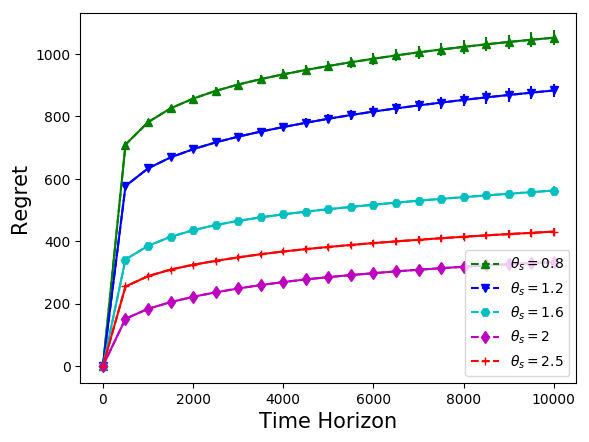}
		\caption{Varying value of same threshold.}
		\label{fig:TSameTheta}
	\end{subfigure}
	\caption{Regret of \ref{alg:CSB-ST} versus time horizon for the CSB problem with same threshold (Instance II).}
	\label{fig:knwon}
\end{figure}

Since the regret depends on the optimal allocation and the amount of resources (threshold), the regret can vary with different resources (threshold) for the same optimal resource allocation. We can observe this behavior of regret in \cref{fig:QSameTheta} and \cref{fig:TSameTheta}. Note that horizon dependent algorithms need to know the lower bound on $\mu_i$ value and find the allocation equivalent with the probability of at least $1- \delta$. We set the lower bound on mean loss as $\epsilon=0.1$ and confidence parameter $\delta=1/T$ in the experiment that involves horizon dependent algorithms.

In our next experiments, we change the available amount of resources in Instance III and IV. The regret of \ref{alg:CSB-MT} for the different amount of resources versus time horizon plots are shown in \cref{fig:multiTheta}. As expected, a similar behavior like \ref{alg:CSB-ST} is observed.

\begin{figure}[!ht]
	\centering
	\scriptsize
	\begin{subfigure}[b]{0.40\textwidth}	\includegraphics[width=\linewidth]{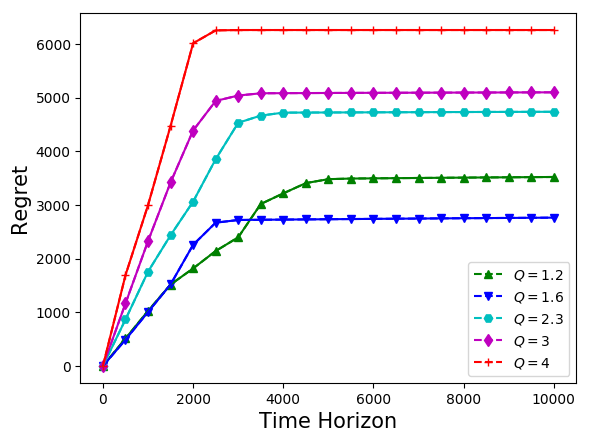}
		\caption{Varying resources in Instance III.}
		\label{fig:multiTheta1}
	\end{subfigure}\qquad
	\begin{subfigure}[b]{0.40\textwidth}
		\includegraphics[width=\linewidth]{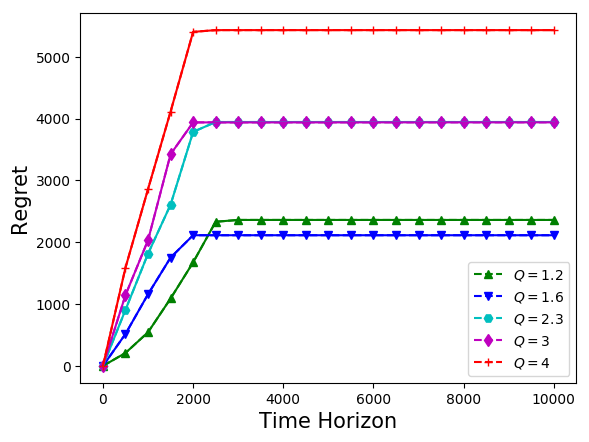}
		\caption{Varying resources in Instance IV.}
		\label{fig:multiTheta2}
	\end{subfigure}
	\caption{Regret of \ref{alg:CSB-MT} versus time horizon for the CSB problems with multiple thresholds.}
	\label{fig:multiTheta}
\end{figure}

We also run a similar set of experiments for anytime algorithms. The regret of anytime algorithms versus time horizon plots are shown in \cref{fig:SameTheta} and \cref{fig:differemtTheta}. As expected, we observe the same behavior as horizon dependent algorithms.
\begin{figure}[!ht]
	\centering
	\begin{subfigure}[b]{0.40\textwidth}
		\includegraphics[width=\linewidth]{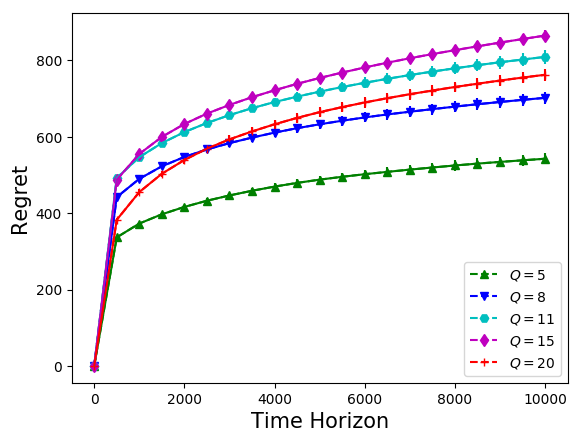}
		\caption{Varying resources in Instance II.}
		\label{fig:sameVaryQ}
	\end{subfigure}\qquad
	\begin{subfigure}[b]{0.40\textwidth}
		\includegraphics[width=\linewidth]{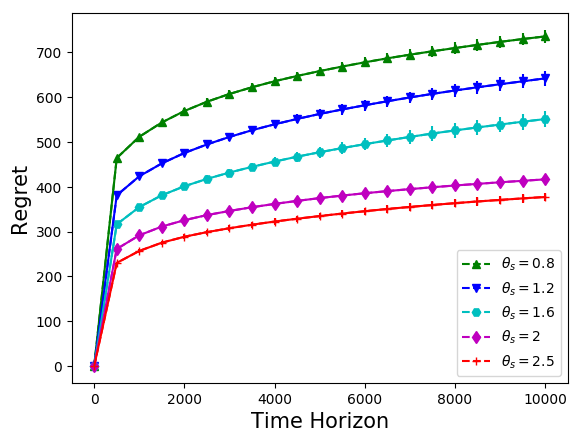}
		\caption{Varying value of same threshold.}
		\label{fig:sameVaryT}
	\end{subfigure}
	\caption{Regret of \ref{alg:CSB-JS} versus time horizon for the CSB problems with same thresholds.}
	\label{fig:SameTheta}
\end{figure}

\begin{figure}[!ht]
	\centering
	\scriptsize
	\begin{subfigure}[b]{0.40\textwidth}	\includegraphics[width=\linewidth]{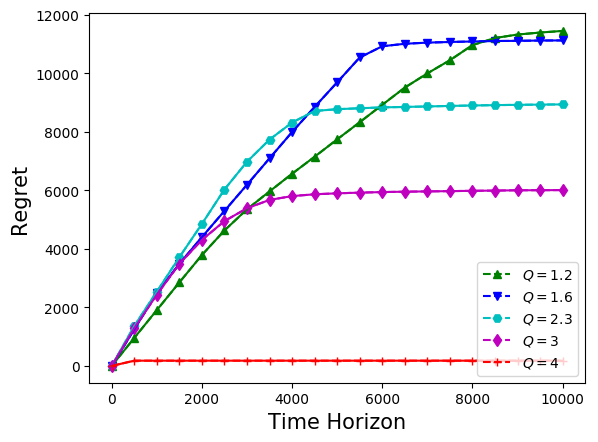}
		\caption{Varying resources in Instance III.}
		\label{fig:differemtTheta1}
	\end{subfigure}\qquad
	\begin{subfigure}[b]{0.40\textwidth}
		\includegraphics[width=\linewidth]{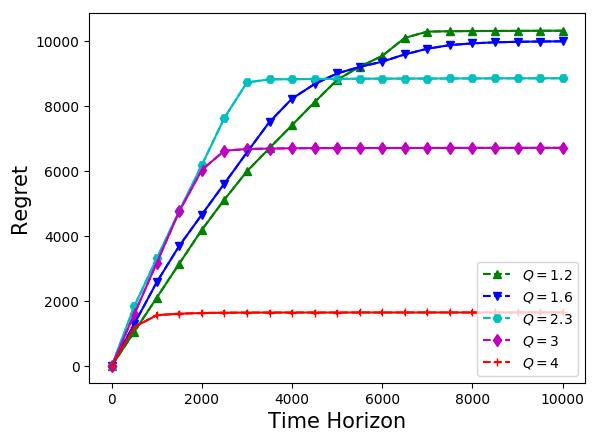}
		\caption{Varying resources in Instance IV.}
		\label{fig:differemtTheta2}
	\end{subfigure}
	\caption{Regret of \ref{alg:CSB-JD} versus time horizon for the CSB problems with different thresholds.}
	\label{fig:differemtTheta}
\end{figure}

\subsection{Comparison between Algorithms}
We compare \ref{alg:CSB-JS}, \ref{alg:CSB-ST}, and state-of-the-art CSB-ST algorithm \citep{NeurIPS19_verma2019censored} for the CSB problems with the same threshold. Our algorithms outperforms CSB-ST for instance I and II as shown in \cref{fig:sameComp1} and \cref{fig:sameComp2}, respectively. Even though \ref{alg:CSB-ST} and CSB-ST use binary search for threshold estimation as compared to linear search in \ref{alg:CSB-JS}, there waiting delay with the overestimate of threshold leads to more rounds spend for the threshold estimation in considered CSB problems as compare to \ref{alg:CSB-JS}. Therefore, \ref{alg:CSB-JS} has the smallest regret than the other two algorithms.
\begin{figure}[!ht]
	\centering
	\begin{subfigure}[b]{0.40\textwidth}
		\includegraphics[width=\linewidth]{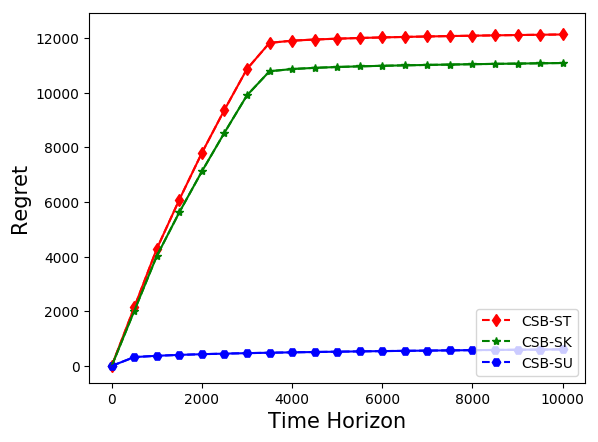}
		\caption{Comparison for Instance I}
		\label{fig:sameComp1}
	\end{subfigure}\qquad
	\begin{subfigure}[b]{0.40\textwidth}
		\includegraphics[width=\linewidth]{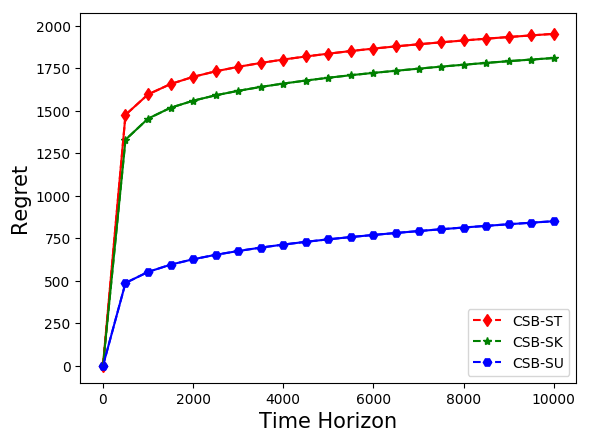}
		\caption{Comparison for Instance II}
		\label{fig:sameComp2}	
	\end{subfigure}
	\captionsetup{justification=centering}
	\caption{Comparing regret of \ref{alg:CSB-JS}, \ref{alg:CSB-ST}, and CSB-ST \citep{NeurIPS19_verma2019censored}.}
	\label{fig:CompSameTheta}
\end{figure}

We compare \ref{alg:CSB-MT}, CSB-DK (\ref{alg:CSB-MT} with $n=K$), \ref{alg:CSB-JD}, and state-of-the-art CSB-DT algorithm \citep{NeurIPS19_verma2019censored} for the CSB problems with different thresholds. \ref{alg:CSB-MT} and CSB-DT also uses binary search to estimate the threshold for each arm. These algorithms use the same threshold estimate for the fixed number of rounds, which depends upon the value of $\epsilon$ and $\delta$. The smaller the value of $\epsilon$, the more these algorithms wait for observing a loss and incur more regret as well. On the other hand, \ref{alg:CSB-JD} uses a linear search to estimate the threshold and does not need to know $\epsilon$ and $\delta$. As expected \ref{alg:CSB-MT} and CSB-DK outperform CSB-DT as shown in \cref{fig:diffComp2}. Whereas the performance of \ref{alg:CSB-MT} matches with CSB-DK in \cref{fig:diffComp1} as only two arms have the same threshold in Instance III. Since \ref{alg:CSB-JD} uses a linear search for threshold estimation, it needs more rounds to estimate allocation equivalent when the threshold has a larger search region than the algorithms that use binary search. Therefore, \ref{alg:CSB-JD} incurs more regret.

\begin{figure}[!ht]
	\centering
	\begin{subfigure}[b]{0.40\textwidth}
		\includegraphics[width=\linewidth]{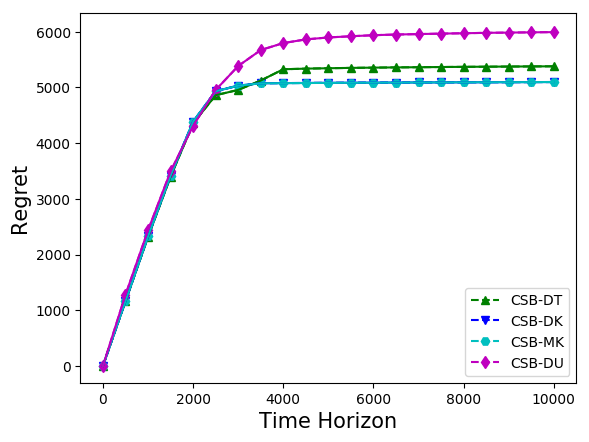}
		\caption{Comparison for Instance III}
		\label{fig:diffComp1}
	\end{subfigure}\qquad
	\begin{subfigure}[b]{0.40\textwidth}
		\includegraphics[width=\linewidth]{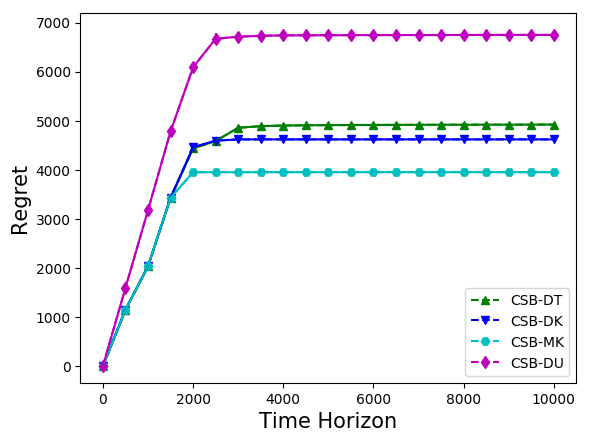}
		\caption{Comparison for Instance IV}
		\label{fig:diffComp2}	
	\end{subfigure}
	\captionsetup{justification=centering}
	\caption{\centering Comparing regret of \ref{alg:CSB-MT}, CSB-DK (assuming all thresholds are different which is equivalent to \ref{alg:CSB-MT} with $n=K$), \ref{alg:CSB-JD} and CSB-DT \citep{NeurIPS19_verma2019censored}.}
	\label{fig:diffTheta}
	
\end{figure}

\subsection*{Computation complexity of 0-1 Knapsack with fractional weight and value}
Even though $KP(\bmu,\btheta, Q)$ is an NP-Hard problem; it can be solved by a pseudo-polynomial time algorithm\footnote{The running time of pseudo-polynomial time algorithm is a polynomial in the numeric value of the input whereas the running time of polynomial-time algorithms is polynomial of the length of the input.} using dynamic programming with the time complexity of O$(KQ)$. But such an algorithm for $KP(\bmu,\btheta, Q)$ works when the value and weight of items are integers. In the case of $\mu_i$ and $\theta_i$ are fractions, they need to be converted in integers with the desired accuracy by multiplying by large value $S$. The time complexity of solving $KP(S\bmu, S\btheta, SQ)$ is O$(KSQ)$ as a new capacity of Knapsack is $SQ$. Therefore, the time complexity of solving $KP(S\bmu, S\btheta, SQ)$ in each of the $T$ rounds is O$(TKSQ)$. 
Since solving the $0$-$1$ Knapsack problem is computationally expensive, we can solve it after $N$ rounds as the empirical mean losses do not change drastically in consecutive rounds in practice (except initial rounds). We have used $S = 10^4$ and $N = 20$ in our experiments involving the different thresholds.

\section{CSB for Stochastic Network Utility Maximization}
\label{sec:csb_num}

In this section, we study the application of CSB for the Stochastic {\em Network Utility Maximization} problem. Network Utility Maximization (NUM) is an approach for resource allocation among multiple agents such that the total utility of all the agents (network utility) is maximized. In its simplest form, NUM solves the following optimization problem:
\begin{align*}
	\underset{\ba}{\text{maximize}} & \sum_{i=1}^{K}U_i(a_i)\\
	&\mbox{subject to}  \hspace{4mm}\sum_{i=1}^K a_i \leq Q
\end{align*}
where $U_i(\cdot)$ denotes the utility of agent $i$, variable $\ba=(a_1,a_2,\ldots, a_K) \in \R_+^K$ denote the allocated resource vector, and $Q \in \R_+$ is  amount of resource available. Utilities define the agents' satisfaction level, which depends on the amount of resources they are allocated. A resource could be bandwidth, power, or rates they receive. Since the seminal work of \cite{ETT1997_ChargingAndRateControl}, there has been a tremendous amount of work on NUM and its extensions. NUM is used to model various resource allocation problems and improve network protocols based on its analysis. We refer the readers to \cite{JSAC2006_TutorialOnNUM_PalomarChinag} and \cite{TAC2007_AlternateDistributed_PalomarChinag} for an informative tutorial and survey on this subject.

The nature of utility functions is vital in the analysis of the NUM problem and assumed to be known or can be constructed based on the agent behavior model and operator cost model. However, agent behavior models are often difficult to quantify. Therefore, we consider the NUM problem where the utilities of the agent are unknown and stochastic. The earlier NUM problems considered deterministic settings. Significant progress has been made to extend the NUM setup to consider the stochastic nature of the network and agent behavior \citep{ETT2008_StochasticNUM_YiChiang}. For both the static and stochastic networks, the works in the literature often assume that the utility functions are smooth concave functions and apply Karush-Kuhn-Tucker conditions to find the optimal allocation. However, if the utility functions are unknown, these methods are useful only once the utilities are learned. Many of the NUM variants with full knowledge of utilities aim to find an optimal policy that meets several constraints like stability, fairness, and resource \citep{INFOCOM2010_DelayBasedNUM_Neely,WiOpt2017_DRUM_EryilmazKoprulu,WiOpt2018_NUMHetrogeneous_SinhaModiano}. In this work, we only focus on resource constraint due to limited divisible resource (bandwidth, power, rate). \cref{fig:SNUM} depicts the Stochastic Network Utility Maximization problem.

\begin{figure}[!ht]
	\centering 
	\includegraphics[width=0.9\linewidth]{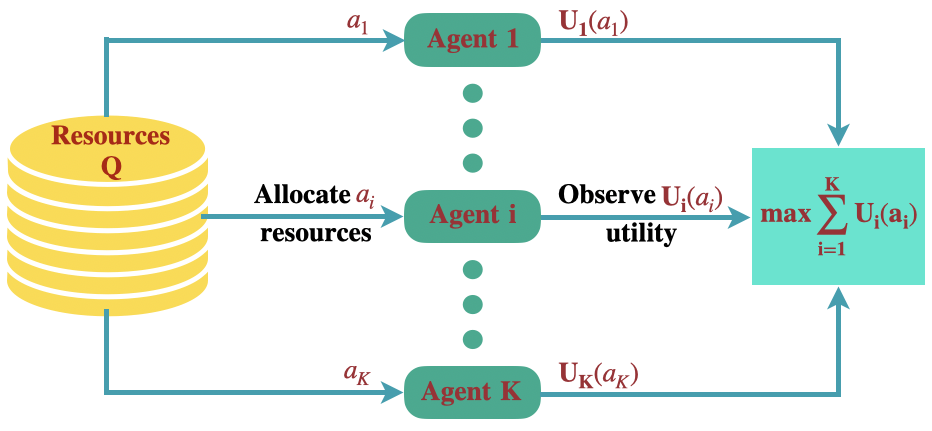} 
	\caption{Stochastic Network Utility Maximization problem where the resources are allocated among multiple agents such that the total average utility of all the agents (network utility) is maximized.}
	\label{fig:SNUM}
\end{figure}

Since learning an arbitrary utility function is not always feasible,  we assume the utilities belong to a class of `threshold' type functions. Specifically, we assume that each agent's utility is stochastic with some positive mean only when it is allocated a certain minimum resource.  We refer to the minimum resource required by an agent as its `threshold' and the mean utility it receives when it is allocated resource above the threshold as its `mean reward.' Thus the expected utility of each agent is defined by two parameters -- a threshold and a mean reward.  Such threshold type utilities correspond to hard resource requirements. For example, an agent can transmit and obtain a positive rate  (reward) only if its power or bandwidth allocation is above a certain amount. In each round, the operator allocates a resource to each agent and observes the utilities the agent obtains. The goal of the operator is to allocate resources such that the expected network utility is maximized. We pose this problem as a Censored Semi-Bandits problem in the reward maximization setting, where the operator corresponds to a learner, agents to arms, and utilities to rewards. The learner's goal is to learn a policy that minimizes the difference between the best achievable expected network utility with full knowledge of the agent utilities and that obtained by the learner under the same resource constraint with the estimated utilities of agents.

\subsection{CSB setup for Reward Maximization}
The CSB problems that are considered in \cref{sec:csb_problemSetting} works only in the loss setting. Now we extend the CSB setup to reward maximization setting, where the optimal allocation can be computed as follows:
\begin{equation*}
	\label{equ:networkUtility}
	\ba^\star  \in \argmax _{\ba \in \A} \sum_{i=1}^K\mu_i \one{a_i\ge \theta_i}.
\end{equation*}

The interaction between a learner and the environment that governs rewards for the arms is as follows: In the round $t$, the environment generates a reward vector $(X_{t,1}, X_{t,2},\ldots, X_{t,K}) \in \{0,1\}^K$, where $X_{t,i}$ denotes the true reward for arm $i$ in round $t$.  The sequence $(X_{t,i})_{t\geq 1}$ is generated IID with the common mean $\EE{X_{t,i}}=\mu_i$ for each $i \in [K]$. The learner selects a feasible allocation $\ba_t=\{a_{t,i}: i\in [K]\}$ and observes reward vector $X_t^\prime=\{X^\prime_{t,i}: i\in [K]\}$, where $X_{t,i}^\prime=X_{t,i}\one{a_{t,i}\ge \theta_i}$ and collects reward $r_t(\ba_t)=\sum_{i \in [K]}X_{t,i}^\prime$. A policy of the learner is to select a feasible allocation in each round based on the observed reward such that the cumulative reward is maximized. The performance of a policy that makes allocation $\{\ba_t\}_{t\geq1}$ in round $t$ is measured in terms of expected regret for $T$ rounds given by

\begin{equation*}
	\mathbb{E}[\Regret_T] = T\sum_{i=1}^K\mu_i \one{x^\star_i\ge \theta_i} - \EE{ \sum_{t=1}^T\sum_{i=1}^K Y_{t,i} \one{x_{t,i}\ge \theta_i}}.
\end{equation*}

A good policy must have sub-linear regret, i.e., $\EE{\Regret_T}/T \rightarrow 0$ as $T \rightarrow \infty$. Next, we define the notion of treating a pair of thresholds for the given reward vector and resource to be `equivalent.'
\begin{definition}[Allocation Equivalent]
	For fixed reward vector $\bmu$ and amount of resource $Q$, two threshold vectors $\btheta$ and $\hat{\btheta}$ are {\em allocation equivalent} if the following holds:
	\begin{equation*}
		\max_{\ba \in \A} \sum_{i=1}^K\mu_i \one{a_i\ge \theta_i}  = \max_{\ba \in \A} \sum_{i=1}^K\mu_i \one{a_i\ge \hat{\theta}_i}.
	\end{equation*}
\end{definition}

\subsection{Algorithms for Network Utility Maximization}
We first focus on the special case of the network utility maximization problem where $\theta_i=\theta_s$ for all $i \in [K]$. We develop an algorithm named Network Utility Maximization with the Same Threshold (\ref{alg:ONUM-ST}). This algorithm is adapted from \ref{alg:CSB-ST} for the reward maximization setup. There are two major differences: 1) the feedback (reward) is only observed when the resource allocation is more than a certain threshold, and 2) a sample for the mean reward estimate is drawn from the beta distribution. Similarly, we develop an algorithm named Network Utility Maximization with the Multiple Threshold (\ref{alg:ONUM-DT}), which is adapted from \ref{alg:CSB-MT} to the reward maximization setup.

\begin{algorithm}[!ht] 
	\renewcommand{\thealgorithm}{\bf NUM-SK}
	\floatname{algorithm}{}
	\caption{Algorithm for NUM problem having Same Threshold with Known Horizon and $\epsilon$}
	\label{alg:ONUM-ST}
	\begin{algorithmic}[1]
		\State \textbf{Input:} $\delta, \epsilon$
		\State Set $W_\delta = {\log(\log_2(K)/\delta)}/({\log(1/(1-\epsilon))})$ and $\forall i \in [K]: S_i=1, F_i=1, Z_i=0$ 
		\State Initialize $\Theta$ as given in Lemma \ref{lem:thetaSet}, $C=0, l=1, u = K, j = \floor{(l+u)/2}$ 
		\For{$t=1,2,\ldots,$}
		\State Set $\hat{\theta}_s = \Theta[j]$ and $\forall i \in [K]: \hat{\mu}_{t,i} \leftarrow \beta(S_i, F_i)$
		\State $A_t \leftarrow$ set of top-$({Q}/{\hat{\theta}_s})$ arms with the largest values of $\hat{\mu}_{t,i}$ 
		\State $\forall i \in A_t:$ allocate $\hat{\theta}_s$ resource and observe $X_{t,i}$
		\If{$j \ne u$} 
		\If{$X_{t,a} = 1$ for any $a \in A_t$} 
		\State Set $u=j, ~j = \floor{(l+u)/2}, C=0$
		\State $\forall i \in A_t$: set $S_i = S_i+X_{t,i}, F_i = F_i+1-X_{t,i} + Z_i, Z_i =0$
		\State $\forall i \in [K]\setminus A_t: F_i = F_i+ Z_i, Z_i =0$
		\Else
		\State Set $C = C + 1$ and $\forall i \in A_t: Z_i = Z_i + 1$
		\State If $C = W_\delta$ then set $l=j+1, j = \floor{(l+u)/2}$, $C=0, \forall i \in [K]: Z_i =0$
		\EndIf
		\Else
		\State $\forall i \in A_t: S_i = S_i + X_{t,i}, F_i = F_i + 1 - X_{t,i}$
		\EndIf
		\EndFor
	\end{algorithmic}
\end{algorithm}

Next, we will give the regret upper bounds for \ref{alg:ONUM-ST} and \ref{alg:ONUM-DT}. For simplicity of discussion, we assume that arms are indexed according to their decreasing mean rewards, i.e., $\mu_1 \geq \mu_2, \ldots, \geq \mu_K$, but the algorithms are not aware of this ordering. We refer to the first $M$ arms as \emph{top-}$M$ arms. For a instance $(\bmu,\btheta, C)$ and any feasible allocation $\ba \in \A$, we define the sub-optimality gap as $\Delta_a = \sum_{i=1}^K\mu_i\big(\one{a_i^\star \ge \theta_i} - \one{a_i \ge \theta_i}\big)$. The maximum and minimum regret incurred in a round is $\nabla_{\max} = \max\limits_{\ba \in \A } \nabla_{\ba}$ and $\nabla_{\min} = \min\limits_{\ba \in \A } \nabla_{\ba}$, respectively. Now we will give regret bound of \ref{alg:ONUM-ST}.

\begin{algorithm}[!ht]
	\small 
	\renewcommand{\thealgorithm}{\bf NUM-MK} 
	\floatname{algorithm}{}
	\caption{Algorithm for NUM problem having Multiple Threshold with Known Horizon and $\epsilon$}
	\label{alg:ONUM-DT}
	\begin{algorithmic}[1]
		\State \textbf{Input:} $n, \delta, \epsilon, \gamma$
		\State Initialize: $\forall i \in [K]: S_i = 1, F_i = 1, Z_i =0, \theta_{l,i} = 0, \theta_{u,i} = Q, \theta_{g,i} = 0,  \hat\theta_{i} = Q/2,$ 
		\State Set $\Theta_n = \emptyset,W_\delta = \log (K\log_2(\lceil 1 + Q/\gamma\rceil)/\delta)/\log(1/(1-\epsilon)),$ if $n<K$ then $n_i =1$ else $n_i=0$ 
		\For{$t=1,2, \ldots,$}
		\State $\forall i \in [K]: \hat{\mu}_{t,i} \leftarrow \text{Beta}(S_i, F_i)$
		\If{$\theta_{g,j} = 0$ for any $j \in [K]$}
		\If{$n < K$}
		\While{$\theta_{g,n_ i} = 1$} 
		\State Add $\hat\theta_{n_i}$ to $\Theta_n$ and set $n_i = n_i + 1$.  Sort $\Theta_n$ in increasing order
		\EndWhile
		\If{there exists no $j \in [|\Theta_n|]$ such that $\theta_{l,n_i} < \Theta_n[j] \le \theta_{u,n_i}$ or $\Theta_n = \emptyset$}
		\State  Set $\hat\theta_{n_i}=(\theta_{l,i}+\theta_{u,i})/2$ 
		\Else
		\State Set $l = \min\{k: \Theta_n[k] > \theta_{l,n_i}\}, u = \max\{k: \Theta_n[k] \le \theta_{u,n_i}\},$ and $j = \floor{(l+u)/2}$
		\State If $\Theta_n[j]=\theta_{u,n_i}$ then set $\hat\theta_{n_i} = \Theta_n[j]-\gamma$ else  $\hat\theta_{n_i} = \Theta_n[j]$
		\EndIf
		\EndIf
		
		\State $\forall i \in [K]\setminus \{n_i\}$: update $\hat\theta_{i}$ using Eq. \eqref{equ:updateTheta}. Allocate $\hat\theta_{i}$ resource to arm $i$ and observe $X_{t,i}$
		\For{$i = \{1,2,\ldots, K\}$}
		\If{$\theta_{g,i} = 0$ and $\hat\theta_{i}> \theta_{l,i}$}
		\State If $X_{t,i}=1$ then set $\theta_{u,i} = \hat\theta_{i}, S_i = S_i + 1, F_i = F_i + Z_i, Z_i =0$ else  $Z_i = Z_i + 1$
		\State If {$Z_i= W_\delta$} then set $\theta_{l,i} = \hat\theta_{i}, Z_i=0 $
		\State If $\theta_{u,i} - \theta_{l,i} \le \gamma$ then set $\theta_{g,i}=1$ and $\hat\theta_i = \theta_{u,i}$
		\ElsIf{$\hat\theta_{i} \ge \theta_{u,i}$ or $\big\{ \theta_{g,i} = 1$ and $\hat\theta_{i}\ge \hat\theta_{u,i} \big\}$}
		\State Set $S_i = S_i+X_{t,i}$ and $F_i = F_i+1-X_{t,i}$ 
		\EndIf	
		\EndFor	
		
		\Else
		\State $A_t \leftarrow$ Oracle$\big( KP(\hat\bmu_{t}, \hat\btheta, C)\big)$
		\State $\forall i \in A_t:$ allocate $\hat\theta_{i}$ resource and observe $X_{t,i}$, update $S_i = S_i+X_{t,i}$ and $F_i = F_i+1-X_{t,i}$
		\EndIf
		\EndFor
	\end{algorithmic}
\end{algorithm}

\begin{theorem}
	\label{thm:regretONUM_ST}
	Let $\mu_K\geq \epsilon>0$, $\mu_{M} > \mu_{M+1}$, $W_\delta = {\log(\log_2(K)/\delta)}/{\log(1/(1-\epsilon))}$, and $T>W_\delta\log_2{(K)}$. Then with probability at least $1-\delta$, the expected regret of \ref{alg:ONUM-ST} is upper bounded as
	\begin{align*}
		\EE{\Regret_T} &\le  W_\delta\log_2{(K)}\Delta_{\max} + O\left((\log T)^{{2}/{3}}\right) + \mbox{$\sum_{i \in [K]\setminus [M]}$} \frac{(\mu_M-\mu_i)\log {T}}{d( \mu_i,\mu_M)}.
	\end{align*}
\end{theorem}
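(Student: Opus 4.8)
## Proof Proposal for Theorem \ref{thm:regretONUM_ST}

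The plan is to mirror, almost verbatim, the regret decomposition used for \ref{alg:CSB-ST} in \cref{thm:regretSameThreshold}, adapting each piece to the reward setting. The regret naturally splits into two contributions: the regret accumulated while the algorithm is still estimating an allocation equivalent of $\theta_s$ (the ``binary search phase''), and the regret accumulated afterwards, once a correct allocation equivalent is fixed and the algorithm behaves exactly like MP-TS. I would first establish a high-probability bound on the length of the search phase. Following the structure of \cref{lem:sameThresholdEstRounds}, the key observations are: (i) by \cref{lem:thetaSet} the candidate set $\Theta$ has only $K$ elements, so a binary search needs at most $\log_2(K)$ ``decision points''; (ii) at each decision point, when the current $\hat\theta_s$ is an overestimate in the reward setting (i.e.\ we observe no reward because allocation exceeds the threshold), we must wait $W_\delta$ rounds before eliminating candidates — note that in \ref{alg:ONUM-ST} the roles of $l$ and $u$ updates are swapped relative to \ref{alg:CSB-ST} precisely because ``no feedback'' now signals over-allocation rather than under-allocation; (iii) the choice $W_\delta = \log(\log_2(K)/\delta)/\log(1/(1-\epsilon))$ together with $\mu_K \ge \epsilon$ guarantees, via a union bound over the at most $\log_2(K)$ decision points, that with probability at least $1-\delta$ no candidate is wrongly eliminated because of an unlucky run of zero-reward observations. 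This yields that the search phase lasts at most $T_{\theta_s^k} \le W_\delta \log_2(K)$ rounds with probability at least $1-\delta$, and on this event the algorithm terminates with a genuine allocation equivalent.

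Next I would bound the regret contributions. During the search phase, each round contributes at most $\nabla_{\max} = \Delta_{\max}$ to the regret (this is the maximal per-round sub-optimality gap), so this phase contributes at most $W_\delta \log_2(K)\,\Delta_{\max}$. For the post-search phase, I invoke the regret equivalence of \cref{prop:RegretEquiST}: once $\hat\theta_s$ is an allocation equivalent, \ref{alg:ONUM-ST} is exactly MP-TS playing the $K-M$ arms with the smallest estimated means (equivalently, picking the top-$M$ arms to allocate resources) where $M = \min\{\lfloor Q/\hat\theta_s\rfloor, K\}$. Since we are now in the \emph{reward} setting, the optimal superarm is the top-$M$ arms and the suboptimal arms are those in $[K]\setminus[M]$; the MP-TS asymptotic bound for Bernoulli arms (from \citet{ICML15_komiyama2015optimal}) gives a leading term $\sum_{i \in [K]\setminus[M]} \frac{(\mu_M - \mu_i)\log T}{d(\mu_i, \mu_M)}$ plus a lower-order $O((\log T)^{2/3})$ correction — exactly the last two terms in the statement. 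The assumption $\mu_M > \mu_{M+1}$ guarantees the set of top-$M$ arms is unique, so the KL terms $d(\mu_i,\mu_M)$ are well defined and strictly positive. Combining the two phases and using a union bound over the failure events of the search phase and the MP-TS analysis completes the proof.

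The main obstacle I anticipate is handling the bookkeeping variables $Z_i$ correctly in the reward setting. The subtlety is distinguishing a genuine zero reward (realization $X_{t,i} = 0$ with allocation below threshold, which \emph{should} count toward $F_i$) from a ``censored'' zero (allocation at or above threshold, so no true reward is revealed and the observation must \emph{not} be counted as a failure until we know whether $\hat\theta_s$ is an overestimate). In \ref{alg:ONUM-ST} the $Z_i$ counters buffer these ambiguous zeros, and they are flushed into $F_i$ only once a positive reward confirms the allocation was below threshold (lines 11–12), or discarded when the $W_\delta$-round wait expires. I would need to verify carefully that this buffering does not bias the empirical means $\hat\mu_{t,i}$ fed to the Thompson sampler — i.e.\ that after the search phase terminates, the $(S_i, F_i)$ pairs for the eventually-observed arms are distributed exactly as in a clean MP-TS run started from that point — so that the MP-TS regret guarantee transfers. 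This is essentially the same argument as in the loss-setting proof of \cref{thm:regretSameThreshold}, but the sign flips in the feedback structure require re-checking every update rule; once that is done, the rest is a routine combination of the two bounds.
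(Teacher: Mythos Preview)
Your approach is essentially the same as the paper's, which simply states that the proof follows the steps of \cref{thm:regretSameThreshold} with the MP-TS reward-setting bound applied directly. The decomposition into a search phase bounded by $W_\delta\log_2(K)\,\Delta_{\max}$ and a post-search MP-TS phase giving the $\sum_{i\in[K]\setminus[M]}(\mu_M-\mu_i)\log T/d(\mu_i,\mu_M)$ term is exactly right, as is noting that the $l$/$u$ updates are swapped relative to \ref{alg:CSB-ST}.

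There is, however, a direction error in your point (ii): you write ``when the current $\hat\theta_s$ is an overestimate in the reward setting (i.e.\ we observe no reward because allocation exceeds the threshold).'' This is backwards. In the reward setting, when $\hat\theta_s \ge \theta_s$ the allocation \emph{is} sufficient and we \emph{do} observe the Bernoulli reward $X_{t,i}$; the dangerous event is that these rewards all happen to be zero for $W_\delta$ consecutive rounds, wrongly triggering $l=j+1$ and eliminating the correct candidate. Conversely, when $\hat\theta_s < \theta_s$ the allocation is insufficient and we deterministically see zero, so a single observed $1$ immediately certifies $\hat\theta_s\ge\theta_s$ and triggers $u=j$. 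Thus the $W_\delta$-wait protects against an overestimate being misclassified as an underestimate --- the mirror image of the loss setting. Once this direction is corrected, your probability calculation in (iii) and the rest of the argument go through unchanged.
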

\noindent
As we are in the reward setting, the regret bound of MP-TS can be used as it is. The remaining proof follows similar steps as the proof of \cref{thm:regretSameThreshold}.

Let $\nabla_{i, \min}$ be the minimum regret for superarms containing arm $i$ and $K^\prime$ be the maximum number of arms in any feasible resource allocation. We redefine $W_\delta = \log (K\log_2(\lceil 1 + Q/\gamma\rceil)/\delta)/\log(1/(1-\epsilon))$. Now we are ready to state the regret bound of \ref{alg:ONUM-DT}.
\begin{theorem}
	\label{thm:regretUNUM_DT}
	Let $\gamma >0$, $\mu_K\geq \epsilon>0$, and $T>KW_\delta\log_2 \left(\lceil 1+Q/\gamma\rceil \right)$. Then with probability at least $1-\delta$, the expected regret of \ref{alg:ONUM-DT} is upper bounded by 
	\begin{align*}
		\EE{\Regret_T} &\le {KW_\delta\log_2 \left(\lceil 1+Q/\gamma\rceil \right)\Delta_{\max}}  +  O\left(\sum_{i \in [K]}\frac{\log^2(K^\prime)\log T}{\nabla_{i, \min}} \right). 
	\end{align*}
\end{theorem}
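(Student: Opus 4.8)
\textbf{Proof proposal for Theorem~\ref{thm:regretUNUM_DT}.}
The plan is to mirror the regret decomposition used for \ref{alg:CSB-MT} (Theorem~\ref{thm:regretDiffThreshold}), adapting each piece to the reward setting. The regret naturally splits into two phases: the threshold-estimation phase, during which \ref{alg:ONUM-DT} is busy driving each $\theta_{g,i}$ to $1$, and the exploitation phase, during which the algorithm calls the Oracle on $KP(\hat\bmu_t,\hat\btheta,C)$ and behaves exactly like a combinatorial semi-bandit learner (CTS-BETA of \cite{NeurIPS20_perrault2020statistical}). First I would establish the high-probability bound on the length of the estimation phase: setting $W_\delta = \log(K\log_2(\lceil 1+Q/\gamma\rceil)/\delta)/\log(1/(1-\epsilon))$ ensures, via a union bound over the at most $K\log_2(\lceil 1+Q/\gamma\rceil)$ binary-search steps, that with probability at least $1-\delta$ no step incorrectly declares an over-allocation; this is the reward-setting analogue of \cref{lem:MultiTheta} with $n=K$, so the phase lasts at most $KW_\delta\log_2(\lceil 1+Q/\gamma\rceil)$ rounds. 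Each such round contributes at most $\Delta_{\max}$ to the regret, giving the first term $KW_\delta\log_2(\lceil 1+Q/\gamma\rceil)\Delta_{\max}$.

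Next I would handle the exploitation phase. Once all $\theta_{g,i}=1$, \cref{lem:diffTheteEst} (which applies verbatim since its proof only uses the $0$-$1$ knapsack sensitivity result, independent of loss vs.\ reward) guarantees $\hat\btheta$ is allocation equivalent to $\btheta$, so the Oracle's knapsack solution on $(\hat\bmu_t,\hat\btheta,C)$ has the same optimal value as on the true instance. By the reward-maximization analogue of \cref{prop:MultiThetaEquivalence}, the residual problem is regret-equivalent to a combinatorial semi-bandit with at most $K^\prime$ arms per superarm, and the known gap-dependent bound for CTS-BETA yields $O\!\left(\sum_{i\in[K]}\frac{\log^2(K^\prime)\log T}{\nabla_{i,\min}}\right)$. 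Adding the two contributions and noting the whole argument is conditioned on the good event of probability $\ge 1-\delta$ produces the stated bound.

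The main obstacle I anticipate is verifying that the bookkeeping in \ref{alg:ONUM-DT} — in particular the roles of $\theta_{l,i}$ and $\theta_{u,i}$ being \emph{swapped} relative to \ref{alg:CSB-MT} (here observing $X_{t,i}=1$ sets the \emph{upper} bound, whereas $W_\delta$ consecutive zeros sets the \emph{lower} bound, since in the reward setting a $1$ means we allocated \emph{enough} resource) — still drives each interval $[\theta_{l,i},\theta_{u,i}]$ to width $\le\gamma$ in the claimed number of rounds and keeps $\hat\theta_i$ inside $[\theta_i,\lceil\theta_i/\gamma\rceil\gamma]$ so that \cref{lem:diffTheteEst} is applicable. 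One must also re-check that the $Z_i$-reset logic correctly separates the ``censored because over-allocated'' zeros from genuine Bernoulli zeros in this inverted convention, and that the events $B_i,H_i$ and update rule \eqref{equ:updateTheta} still guarantee feasibility of the allocation during estimation. Once these monotonicity and feasibility invariants are confirmed, the remaining steps are routine adaptations of the loss-setting proof.
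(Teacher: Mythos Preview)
Your proposal is correct and follows essentially the same approach as the paper: the paper's proof simply states that the CTS-BETA regret bound applies directly in the reward setting and that ``the remaining proof follows similar steps as the proof of \cref{thm:regretDiffThreshold}.'' Your write-up is in fact more detailed than the paper's, and your observation about the swapped roles of $\theta_{l,i}$ and $\theta_{u,i}$ (a $1$ now certifies an \emph{upper} bound on $\theta_i$, while $W_\delta$ consecutive zeros certify a \emph{lower} bound) is exactly the adaptation the paper implicitly relies on but does not spell out.
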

\noindent
Since we are in the reward setting, the regret bound of combinatorial bandits algorithm CTS-BETA can be used as it is. The remaining proof follows similar steps as the proof of \cref{thm:regretDiffThreshold}. 

\paragraph{Anytime Algorithms for reward maximization setting.}
For simplicity, consider the reward setting with a single threshold. When the allocated resource exceeds the arm's threshold, the learner may continue to observe sample values of $0$ due to the stochastic nature of reward generation. Thus, the learner needs to observe enough samples to be confident that the resource allocated is above the threshold. To decide how much is enough, the learner needs to know $T$ so that exploration and exploitation can be well balanced. However, note that this issue does not arise in the loss setting; if the learner continues to observe a sample $0$, there is no need to increase the allocation further, and the learner can continue the same resource allocation on the arm. The same argument applied if the learner has to start by allocating the higher amount of resources and keep decreasing it until it goes below the threshold.

\subsection{Experiments}
We evaluate the performance of \ref{alg:ONUM-ST} and \ref{alg:ONUM-DT} empirically on three synthetically generated instances. In instance $1$, the threshold is the same for all arms, whereas, in instances $2$ and $3$, thresholds vary across arms. We ran the algorithm for $T=10000$ rounds in all the simulations. All the experiments are repeated $100$ times, and the regret curves are shown with a $95\%$ confidence interval. The vertical line on each curve shows the confidence interval. The following empirical results validate sub-linear bounds for our algorithms. The details about the problem instances are as follows:

\noindent
\textbf{Instance $1$ (Identical Threshold):} It has $K = 50, Q=20,$ $\theta_s=0.7, \delta=0.1$ and $\epsilon=0.1$.  The mean reward of arm $i\in [K]$ is $0.25 + (i-1)/100$. 

\noindent
\textbf{Instance $2$ (Different Thresholds):} It has $K = 5,Q=2,$ $\delta=0.1,$ $\epsilon=0.1$ and $\gamma=10^{-3}$. The mean reward vector is $\bmu = [0.9,$ $0.89,0.87,0.6,0.3]$ and the corresponding  threshold vector is $\btheta=[0.7,0.7,0.7,0.6,0.35]$. 

\noindent
\textbf{Instance $3$ (Different Thresholds):} It has $K=10, Q=3,$ $\delta=0.1$, $\epsilon=0.1$ and $\gamma=10^{-3}$. The mean reward vector is $\bmu \hspace{-0.2mm}=\hspace{-0.2mm} [0.9, 0.8, 0.42, 0.6, 0.5, 0.2, 0.11, 0.7, 0.3, 0.98]$ and the corresponding threshold vector is $\btheta = [0.6, 0.55, 0.3,$ $ 0.46, 0.34, 0.2, 0.07, 0.3, 0.25, 0.8]$. 

We considered two different reward distributions of arms: 1) Bernoulli, where the rewards of arm $i$ are Bernoulli distributed with parameter $\mu_i$, and 2) Uniform, where the rewards of arm $i$ is uniformly distributed in the interval $[\mu_i - 0.1, \mu_i+0.1]$. For any continuous reward distribution with support in $(0,1]$, the value of $W_\delta$ is set to $1$ because the reward is observed with probability $1$ when the allocated resource is above its threshold on any arm. For the Bernoulli distribution, the value of $W_\delta$ is $38$ for instance $1$, $62$ for instance $2$ and $69$ for instance $3$. Hence, we observe less regret for uniformly distributed rewards than Bernoulli distributed rewards. This difference is more significant when the arms have different thresholds.

\paragraph{Experiments with the same threshold:} We perform two different experiments on problem instance $1$ using \ref{alg:ONUM-ST}. First, we varied the amount of resources $Q$ while keeping other parameters unchanged. With more resource, the learner can allocate resource to more arms. Hence learner can observe rewards from more arms in each round, which leads to faster learning and low cumulative regret, as shown in Fig. \eqref{fig:UniSameThetaC} for the uniformly distributed rewards. For the uniform distribution we use binarization trick \citep{COLT12_agrawal2012analysis} to apply \ref{alg:ONUM-ST}: when a real-valued reward $X_{t,i}\in (0,1]$ is observed, the algorithm is updated with a fake binary reward that is drawn from Bernoulli distribution with parameter $X_{t,i}$, i.e., $X_{t,i}^f \sim Ber(X_{t,i}) \in \{0,1\}$. The different amount of resource has different optimal allocation and sub-optimality gap. Hence with large $W_\delta$ value for Bernoulli distributed rewards, we may not observe similar behavior (less regret with more resource)  as shown in Fig. \eqref{fig:BerSameThetaC}. 

\begin{figure}[!ht]
	\centering
	\begin{subfigure}[b]{0.40\linewidth}
		\includegraphics[width=\linewidth]{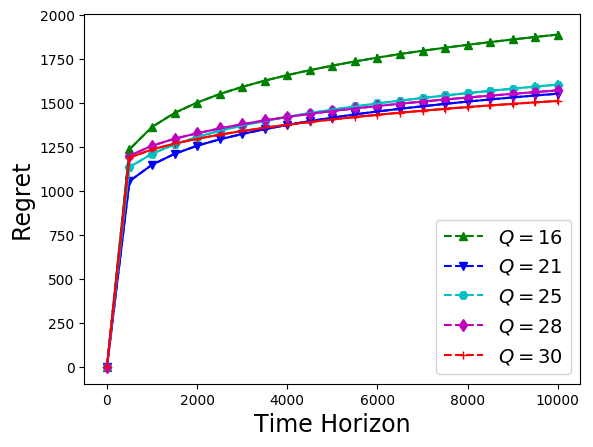}
		\caption{\footnotesize Bernoulli Distributed Reward}
		\label{fig:BerSameThetaC}
	\end{subfigure}\qquad
	\begin{subfigure}[b]{0.40\linewidth}
		\includegraphics[width=\linewidth]{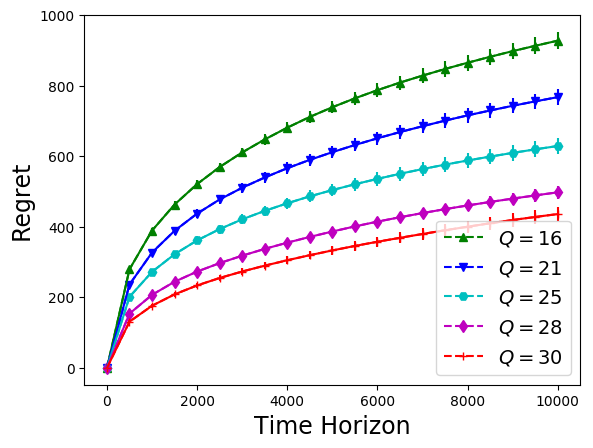}
		\caption{\footnotesize Uniform Distributed Reward}
		\label{fig:UniSameThetaC}	
	\end{subfigure}
	\caption{Comparing regret of \ref{alg:ONUM-ST} v/s amount of resource for problem instance $1$ with Bernoulli and uniformly distributed reward.}
	\label{fig:SameThetaC}
\end{figure}

Second, we varied the threshold $\theta_s$ while keeping other parameters unchanged. As a smaller threshold allows the allocation of resource to more arms, we observe that a smaller threshold leads to faster learning due to more feedback. These trends are shown in Fig. \eqref{fig:BerSameThetaT} and \eqref{fig:UniSameThetaT} for Bernoulli and uniformly distributed rewards, respectively.
\begin{figure}[!ht]
	\centering
	\begin{subfigure}[b]{0.40\linewidth}
		\includegraphics[width=\linewidth]{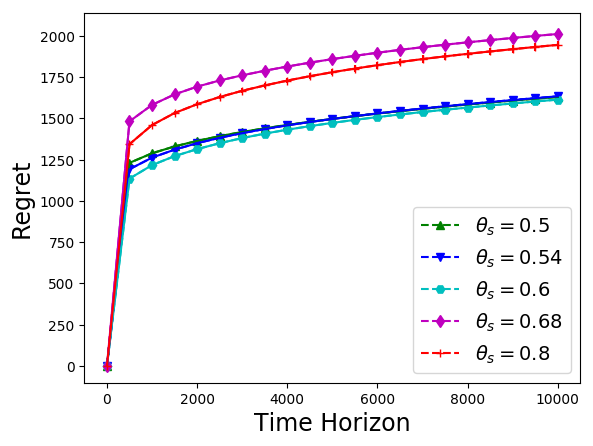}
		\caption{\footnotesize Bernoulli Distributed Reward}
		\label{fig:BerSameThetaT}
	\end{subfigure}\qquad
	\begin{subfigure}[b]{0.40\linewidth}
		\includegraphics[width=\linewidth]{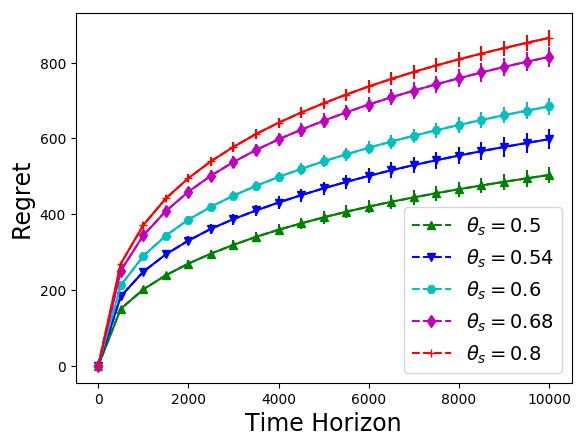}
		\caption{\footnotesize Uniform Distributed Reward}
		\label{fig:UniSameThetaT}
	\end{subfigure}
	\caption{Comparing regret of \ref{alg:ONUM-ST} v/s different values of same threshold for problem instance $1$ with Bernoulli and uniformly distributed reward.}
	\label{fig:SameThetaQ}
\end{figure}

\paragraph{Experiments with different thresholds:} 
We evaluate the performance of \ref{alg:ONUM-DT} on problem instances $2$ and $3$. We varied the amount of resources $Q$ while keeping other parameters unchanged. As the thresholds are different across arms, an increase in the resource may lead to a selection of a different set of arms leading to different sub-optimality gaps. Hence, it does not show the same behavior (less regret with more resource) as observed for the same threshold. 
\begin{figure}[!ht]
	\centering
	\begin{subfigure}[b]{0.40\linewidth}
		\includegraphics[width=\linewidth]{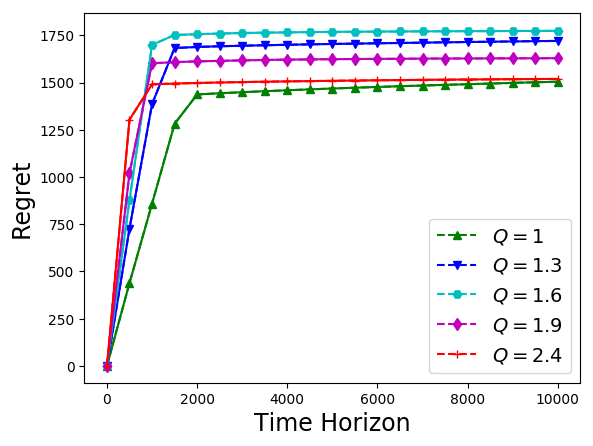}
		\caption{\footnotesize Bernoulli Distributed Reward}
		\label{fig:BerDiffTheta2}
	\end{subfigure}\qquad
	\begin{subfigure}[b]{0.40\linewidth}
		\includegraphics[width=\linewidth]{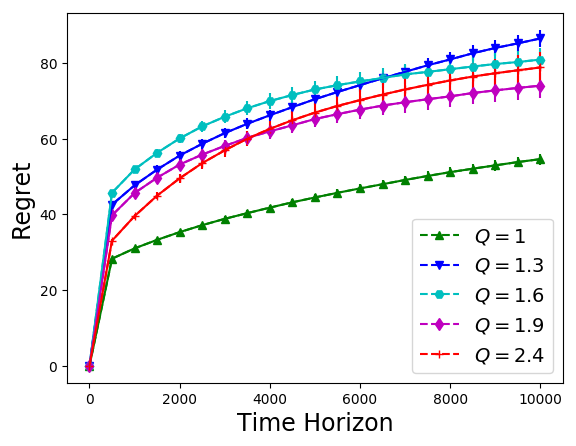}
		\caption{\footnotesize Uniform Distributed Reward}
		\label{fig:UniDiffTheta2}
	\end{subfigure}
	\caption{Comparing regret of \ref{alg:ONUM-DT} v/s amount of resource for problem instance $2$ with Bernoulli and uniformly distributed reward.}
	\label{fig:DiffTheta1}
\end{figure}

\begin{figure}[!ht]
	\centering
	\begin{subfigure}[b]{0.40\linewidth}
		\includegraphics[width=\linewidth]{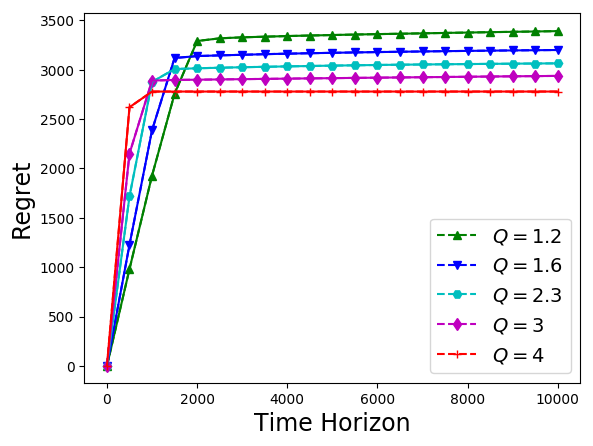}
		\caption{\footnotesize Bernoulli Distributed Reward}
		\label{fig:BerDiffTheta3}
	\end{subfigure}	\qquad
	\begin{subfigure}[b]{0.40\linewidth}
		\includegraphics[width=\linewidth]{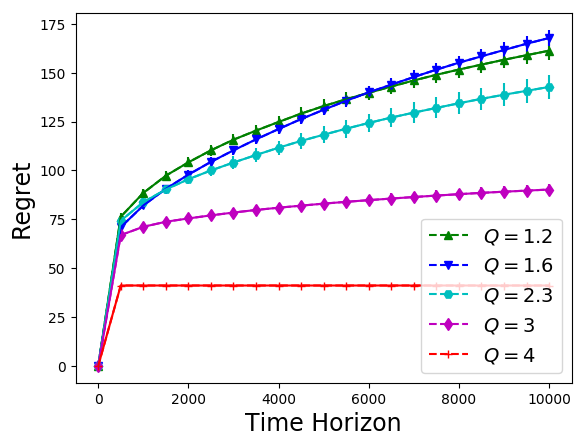}
		\caption{\footnotesize Uniform Distributed Reward}
		\label{fig:UniDiffTheta3}
	\end{subfigure}
	\caption{Comparing regret of \ref{alg:ONUM-DT} v/s amount of resource for problem instance $3$ with Bernoulli and uniformly distributed reward.}
	\label{fig:DiffTheta2}
\end{figure}

But we observe that the allocation equivalent is learned faster as the reward of more arms can be observed simultaneously with more resource. These observations are shown in Figs. \eqref{fig:BerDiffTheta2} and \eqref{fig:UniDiffTheta2} generated on instance $2$ for Bernoulli and uniformly distributed rewards on instance $2$, and same is repeated in Figs. \eqref{fig:BerDiffTheta3} and \eqref{fig:UniDiffTheta3} on instance $3$.

We run experiment $500$ times for uniformly distributed rewards (Figs. \eqref{fig:UniDiffTheta2} and \eqref{fig:UniDiffTheta3}) on instance $2$ and $3$ as confidence intervals overlapped for $100$ runs.

\nocite{INFOCOM20_verma2020stochastic}

\section{Appendix}
\label{sec:csb_appendix}

\subsection*{Missing proofs from Section \ref{sec:csb_known}}

\ThetaSet*
\proof{Proof.}
	The case $\floor{Q/\theta_s} \geq K$ is trivial. We consider the case $\floor{Q/\theta_s} < K$. By definition $M=\min\{\floor{Q/\theta_s},K\}$. We have $M \le Q/\theta_s$ and $\theta_s \le Q/M \doteq \hat{\theta}_s$. Hence $\hat{\theta}_s \ge \theta_s$. Therefore, $\hat{\theta}_s$ fraction of resource allocation for an arm has same reduction in the mean loss as $\theta_s$. Further, in both the instances $(\bmu, \theta_s, Q)$ and $(\bmu, \hat{\theta}_s, Q)$ the optimal allocations incur no loss from the {\emph top-}$M$ arms and the same amount of loss from the {\emph bottom-}$(K-M)$ arms. Hence the mean loss reduction for both the instances is same. This argument completes the proof of first part. As $M \in \{1, \ldots, K\}$ and $\hat{\theta}_s \le Q$, the possible value of $\hat{\theta}_s$ is only one of the elements in the set $\Theta = \{  Q/K, Q/(K-1), \cdots, Q\}$. 
\endproof

\sameThresholdEstRounds*
\proof{Proof.}
	When $\hat{\theta}_s < \theta_s$, it is possible that no loss is observed for $W_\delta$ consecutive rounds that leads to incorrect estimation of $\theta_s$. We want to set $W_\delta$ in such a way that the probability of occurring such event is upper bounded by $\delta$. This probability is bounded as follows:
	\begin{align*}
		&\Prob{\text{No loss is observed on $Q/\hat{\theta}_s$ arms for $W_\delta$ consecutive rounds at $\hat{\theta}_s<\theta_s$ (underestimate)}} \\
		&\qquad \le \prod_{i > K- Q/\hat{\theta}_s}^{K} (1 - \mu_i)^{W_\delta} \hspace{5mm}\text{\big(as $(1 - \mu_i)$ is the probability of not observing loss for arm $i$\big)}\\
		&\qquad \le \prod_{i > K-Q/\hat{\theta}_s}^{K} (1 - \epsilon)^{W_\delta} \hspace{8mm} \mbox{\big(as $\epsilon \ge \mu_i, ~\forall i \in [K]$\big)} \\
		&\qquad = (1 - \epsilon)^{\frac{QW_\delta}{\hat{\theta}_s}} \\	&\qquad  \le (1 - \epsilon)^{W_\delta}. \hspace{5mm}\text{\big(as $\hat{\theta}_s \le Q$\big)}
	\end{align*}
	Since we are using binary search and the set $\Theta$ has $K$ elements, the algorithm goes through at most $\log_2(K)$ underestimates of $\theta_s$. Let $I$ denote the set of indices of these underestimates in $\Theta$
	\begin{align*}
		&\Prob{\text{No loss is observed for consecutive $W_\delta$ rounds at any  underestimate of $\theta_s$}} \\
		&\qquad \le \sum_{i \in I}\Prob{\text{No loss is observed for consecutive $W_\delta$ rounds at the underestimate $\Theta(i)$}} \\
		&\qquad \le (1 - \epsilon)^{W_\delta}\log_2(K).
	\end{align*}
	As we are interesting in bounding the probability of making mistake by $\delta$, we get,
	\begin{align*}
	&(1 - \epsilon)^{W_\delta}\log_2(K) \le \delta \\
	\implies &(1 - \epsilon)^{W_\delta} \le \delta/\log_2(K).
	\end{align*}
	Taking log on both side of above equation, we get
	\begin{align*}
		&W_\delta\log(1 - \epsilon) \le \log(\delta/\log_2(K)) \\
		\implies &W_\delta\log\left(\frac{1}{1 - \epsilon}\right) \ge \log(\log_2(K)/\delta)\\ 
		\implies &W_\delta \ge \frac{\log(\log_2(K)/\delta)}{\log\left(\frac{1}{1 - \epsilon}\right)}.
	\end{align*}
	We set
	\begin{equation}
		W_\delta=\frac{\log(\log_2(K)/\delta)}{\log\left(\frac{1}{1 - \epsilon}\right) }.
	\end{equation}
	Hence, the minimum rounds needed to find a threshold that is an allocation equivalent with probability of at least $1-\delta$ is $W_\delta\log_2(K)$.
\endproof

\RegretEquiST*
\proof{Proof.}
	This result is an extension of  Proposition 1 in \cite{NeurIPS19_verma2019censored} to the case where $\theta_s \le Q$ instead of $\theta_s \le 1$.
	Let $\pi^\prime$ be a policy on $P:=(\bmu,\theta_s,Q) \in \PCSB_s$. The regret of policy $\pi^\prime$ on $P$ is given by
	\[\Regret_T(\pi^\prime, P)=\sum_{t=1}^T \left(\sum_{i=1}^K \mu_i\one{a_{t,i} < \theta_s} -\sum_{i=1}^K \mu_i\one{a^\star_i < \theta_s}\right), \]
	where $\ba^\star$ is the optimal allocation for $P$. Consider $f(P)=(\bmu,m) \in \PMP$ where $\bmu$ is same as in $P$ and $m=K-M$, where $M=\min\{\lfloor Q/\theta_s\rfloor,K\}$. The regret of policy $\pi$ on $f(P)$ is given by 
	\[\Regret_T(\pi, f(P))=\sum_{t=1}^T\left(\sum_{i \in M_t} \mu_i - \sum_{i>K-M}^{K}\mu_i\right),\]
	where $M_t$ is the superarm played in round $t$. Recall the ordering $\mu_1 \ge \mu_2 \ge \ldots \ge \mu_K$. It is clear that $\sum_{i=1}^K \mu_i\one{a^\star_i< \theta_s}=\sum_{i>M}^K\mu_i$. 
	Let $C_t$ be the set of arms where no resources are allocated by policy $\pi^\prime$ in round $t$. Since, loss only incurred from arms in the set $C_t$, we have $\sum_{i=1}^K \mu_i\one{a_{t,i} < \theta_s}=\sum_{i \in C_t}\mu_i$. By definition, the policy $\pi$ selects superarm $M_t=C_t$ in round $t$, i.e., set of arms returned by policy $\pi^\prime$ for which no resourced are applied. Hence $\sum_{i=1}^K \mu_i\one{a_{t,i}< \theta_s}=\sum_{i \in M_t} \mu_i $. This establishes the regret of policy $\pi^\prime$ on $P$ is same as regret of policy $\pi$ on $f(P)$ and hence, we get $\Regret(\PMP)\leq \Regret (\PCSB_s)$.
	Similarly, we can also establish the other direction of the proposition and get $\Regret(\PCSB_s)\leq \Regret (\PMP)$. Thus we conclude that $\Regret(\PCSB_s)= \Regret (\PMP)$.
\endproof

\noindent
We need the following results to prove the \cref{thm:regretSameThreshold}.
\begin{theorem}
	\label{thm:MPRegret}
	Let $\hat\theta_s$ be allocation equivalent of $\theta_s$ for instance $(\bmu,\theta_s,Q)$. Then, the expected  regret of \ref{alg:CSB-ST} for $T$ rounds after knowing the allocation equivalent is upper bounded as     
	\begin{align}
	\EE{\Regret_T} \le O\left((\log T)^{{2}/{3}}\right) + \sum_{i \in [M]} \frac{(\mu_i-\mu_{M+1} )\log {T}}{d( \mu_{M+1},\mu_i)}.
	\end{align}
\end{theorem}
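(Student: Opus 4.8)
\textbf{Proof plan for Theorem~\ref{thm:MPRegret}.}
The plan is to exploit the structural observation already established in the text: once \ref{alg:CSB-ST} has found an allocation equivalent $\hat\theta_s$ of $\theta_s$, it allocates $\hat\theta_s$ resources to the \emph{top-}$(Q/\hat\theta_s)$ arms ranked by their Thompson samples $\hat\mu_{t,i}\sim\beta(S_i(t),F_i(t))$, and it observes Bernoulli losses from the remaining $K-M$ arms, where $M=\min\{\floor{Q/\hat\theta_s},K\}=\floor{Q/\theta_s}$ (using Lemma~\ref{lem:thetaSet}). These are exactly the updates of MP-TS applied to an MP-MAB instance with $K$ arms and superarm size $K-M$, with the \emph{bottom-}$(K-M)$ arms forming the optimal superarm. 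By Proposition~\ref{prop:RegretEquiST} the CSB problem with known threshold is regret equivalent to this MP-MAB instance, so the per-round regret incurred by \ref{alg:CSB-ST} after the threshold phase coincides with the per-round regret of MP-TS on $f(P)$.

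First I would invoke the known regret guarantee of MP-TS for Bernoulli arms \citep{ICML15_komiyama2015optimal}. Their analysis gives, for the problem of selecting a superarm of fixed size, a regret bound whose leading term matches the Lai--Robbins-type lower bound \eqref{eqn:LowerBound} and whose lower-order term is of order $(\log T)^{2/3}$ (coming from the control of the number of pulls of the optimal arms and the Beta-Binomial concentration arguments). Translating notation: in our loss setting the optimal superarm consists of arms $M+1,\dots,K$ (the $K-M$ arms with the \emph{smallest} means), the ``worst'' optimal-superarm arm has mean $\mu_{M+1}$, and each sub-optimal arm $i\in[M]$ with mean $\mu_i>\mu_{M+1}$ contributes a term $\tfrac{(\mu_i-\mu_{M+1})\log T}{d(\mu_{M+1},\mu_i)}$. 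Summing over $i\in[M]$ yields the stated bound. The assumption $\mu_M>\mu_{M+1}$ guarantees $d(\mu_{M+1},\mu_i)>0$ for all $i\in[M]$ so every denominator is well defined, and it makes the top-$M$ set unique, which is what the MP-TS analysis requires.

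Concretely the steps are: (i) fix the round $\tau=T_{\theta_s^k}$ after which the threshold estimate is an allocation equivalent (this event has high probability by Lemma~\ref{lem:sameThresholdEstRounds}; in the present theorem we simply condition on being past the threshold phase, so $\tau$ contributes nothing); (ii) observe that for $t>\tau$ the state variables $(S_i,F_i)$ and the selection rule reduce exactly to MP-TS on the MP-MAB instance $f(P)$, because the losses fed to the algorithm are fresh Bernoulli$(\mu_i)$ samples precisely from the $K-M$ unallocated arms and no losses are observed (equivalently, zeros consistent with the feedback model but discarded via the $Z_i$ bookkeeping) from the allocated arms; (iii) apply the MP-TS regret theorem to bound $\sum_{t>\tau}^{T}\EE{\nabla_{\ba_t}}$, using that the instantaneous regret of any played allocation equals the MP-MAB superarm regret by Proposition~\ref{prop:RegretEquiST}; (iv) collect the leading logarithmic term and the $O((\log T)^{2/3})$ remainder to obtain the claimed inequality.

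The main obstacle I anticipate is a careful check of step (ii): one must verify that \ref{alg:CSB-ST}'s bookkeeping (the variables $Z_i$, the updates $S_i=S_i+X_{t,i}$, $F_i=F_i+1-X_{t,i}$ applied only to the $K-M$ unallocated arms, and the fact that the \emph{top-}$(Q/\hat\theta_s)$ set is determined by the Beta samples) produces the \emph{same} posterior evolution as MP-TS, with no spurious pseudo-counts leaking in from the over-allocated arms. Once $\hat\theta_s$ is a genuine allocation equivalent, no loss is ever observed on the allocated arms, so their $S_i,F_i$ are frozen during this phase and the $Z_i$ resets are never triggered; hence the reduction is exact and the MP-TS bound transfers verbatim. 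The rest is a direct citation and notational translation, so no additional concentration arguments are needed beyond those internal to the MP-TS analysis.
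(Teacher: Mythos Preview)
Your proposal is correct and follows essentially the same route as the paper: invoke the allocation equivalence from Lemma~\ref{lem:thetaSet}, use Proposition~\ref{prop:RegretEquiST} to identify the post-threshold phase of \ref{alg:CSB-ST} with MP-TS on the MP-MAB instance $f(P)$ with superarm size $K-M$, and then import Theorem~1 of \cite{ICML15_komiyama2015optimal} in the loss setting (so the sub-optimal arms are $i\in[M]$). The paper's own proof is in fact briefer than yours---it states the reduction and cites Komiyama et al.\ directly---whereas you additionally spell out the bookkeeping check in step~(ii), which is a welcome sanity check but not a different argument.
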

\proof{Proof.}
	As $\hat\theta_s$ is the allocation equivalent of $\theta_s$, the instances $(\bmu,\theta_s,Q)$ and $(\bmu,\hat{\theta}_s,Q)$ have the same minimum loss. After knowing the allocation equivalent, the CSB problem with the same threshold is equivalent to solving an MP-MAB instance (\cref{prop:RegretEquiST}). Hence, we can directly apply Theorem 1 of \cite{ICML15_komiyama2015optimal} to obtain the regret bounds by setting $k=K-M$ and noting that we are in the loss setting and incur regret only when an arm $i \in [M]$ is in selected superarm.
\endproof

\begin{theorem}
	\label{thm:regretHighConf}
	With probability at least $1-\delta$, the expected cumulative regret of \ref{alg:CSB-ST} is upper bounded as
	\begin{equation*}
		\EE{\Regret_T} \le W_\delta\log_2{(K)} \nabla_{\max}  + O\left((\log T)^{{2}/{3}}\right) + \sum_{i \in [M]} \frac{(\mu_i-\mu_{M+1} )\log {T}}{d( \mu_{M+1},\mu_i)}.
	\end{equation*}
\end{theorem}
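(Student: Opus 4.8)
\textbf{Proof proposal for Theorem~\ref{thm:regretHighConf}.}
The plan is to split the time horizon into two disjoint phases: the threshold estimation phase (the rounds in which the binary search over $\Theta$ has not yet terminated, i.e.\ before the correct allocation equivalent $\hat\theta_s$ is fixed) and the exploitation phase (all subsequent rounds, in which \ref{alg:CSB-ST} behaves exactly like MP-TS on the induced MP-MAB instance). Write $\EE{\Regret_T} = \EE{\Regret_T^{(1)}} + \EE{\Regret_T^{(2)}}$ accordingly. I would condition on the high-probability event $\cE$ (with $\Prob{\cE}\ge 1-\delta$) from \cref{lem:sameThresholdEstRounds} that the binary search never terminates prematurely due to $W_\delta$ consecutive zero-observations at an underestimate of $\theta_s$; on $\cE$ the search correctly converges to an allocation equivalent of $\theta_s$ within $T_{\theta_s^k}\le W_\delta\log_2(K)$ rounds.

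First I would bound $\EE{\Regret_T^{(1)}}$. On $\cE$, the estimation phase lasts at most $T_{\theta_s^k}\le W_\delta\log_2(K)$ rounds, and in each such round the instantaneous regret is at most $\nabla_{\max}$ by definition of $\nabla_{\max} = \max_{\ba\in\A}\nabla_{\ba}$. Hence $\EE{\Regret_T^{(1)}\mid \cE}\le W_\delta\log_2(K)\,\nabla_{\max}$, which is exactly the first term in the claimed bound. Second I would bound $\EE{\Regret_T^{(2)}}$. Once $\hat\theta_s$ is fixed and is an allocation equivalent of $\theta_s$, \cref{lem:thetaSet} and \cref{prop:RegretEquiST} tell us that the residual problem is regret-equivalent to an MP-MAB instance $(\bmu,m)$ with $m=K-M$, and moreover \ref{alg:CSB-ST} reduces to MP-TS on that instance (the $\hat\mu_{t,i}$ are drawn from the associated Beta posteriors and the top-$M$ estimated arms are allocated $\hat\theta_s$, so losses are observed exactly for the $K-M$ arms of smallest posterior mean). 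Applying \cref{thm:MPRegret} (the adaptation of the MP-TS regret bound of \cite{ICML15_komiyama2015optimal} to our loss setting) gives
\begin{equation*}
	\EE{\Regret_T^{(2)}} \le O\!\left((\log T)^{2/3}\right) + \sum_{i\in[M]}\frac{(\mu_i-\mu_{M+1})\log T}{d(\mu_{M+1},\mu_i)},
\end{equation*}
which accounts for the remaining two terms. Adding the two phase bounds and noting that they hold on the event $\cE$ of probability at least $1-\delta$ completes the argument.

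The main technical obstacle is making the phase decomposition rigorous, since the two phases are not deterministically separated in time: the binary search can be in different stages in different runs, and during estimation the algorithm still updates the loss/no-loss counts $S_i,F_i$ (taking care, via the $Z_i$ bookkeeping, that over-allocated zeros are not folded into $F_i$ until the search moves on). I would handle this by arguing that, conditioned on $\cE$, the counts used by the exploitation-phase MP-TS subroutine are stochastically no worse than those of a fresh MP-TS instance — because the $Z_i$ resetting mechanism ensures that only genuine Bernoulli loss samples (from rounds in which arm $i$ received resource below its threshold) contribute to the posterior, so the posteriors at the start of the exploitation phase first-order stochastically dominate (in the relevant direction) the MP-TS priors, and the regret bound of \cref{thm:MPRegret} still applies. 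A minor additional point is to confirm that the $O((\log T)^{2/3})$ term absorbs any lower-order contributions from the estimation phase's bookkeeping, which is immediate since $W_\delta\log_2(K)$ is itself only logarithmic in $T$ for the choice $\delta = T^{-(\log T)^{-\alpha}}$ used in \cref{thm:regretSameThreshold}.
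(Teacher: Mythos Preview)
Your proposal is correct and follows essentially the same two-phase decomposition as the paper: bound the estimation phase by $W_\delta\log_2(K)\,\nabla_{\max}$ via \cref{lem:sameThresholdEstRounds}, then invoke \cref{thm:MPRegret} for the MP-TS phase, conditioning on the probability-$(1-\delta)$ event. The paper's proof is actually briefer than yours and does not explicitly address the $Z_i$-bookkeeping subtlety you raise about whether the posteriors at the start of exploitation are valid for the MP-TS bound; your discussion of that point is a welcome addition rather than a deviation.
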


\proof{Proof.}
	The regret of \ref{alg:CSB-ST} can be divided into two parts: regret before knowing allocation equivalent and after knowing it. The threshold estimation completes in at most $W_\delta\log_2{(K)}$ rounds and returns a threshold which is an allocation equivalent with the probability of at least $1-\delta$. The maximum regret incurred for estimating allocation equivalent is $W_\delta\log_2{(K)} \nabla_{\max}$. The regret incurred after knowing allocation equivalent is given by \cref{thm:MPRegret}. Thus the expected regret of \ref{alg:CSB-ST} is the sum of regret incurred in the two parts and holds with the probability of at least $1-\delta$.  
\endproof

\noindent
We are now ready to give the proof of \cref{thm:regretSameThreshold}.
\regretSameThreshold*
\proof{Proof.}
	The regret bound follows from Theorem \ref{thm:regretHighConf} by setting $\delta=T^{-(\log T)^{-\alpha}}$ and unconditioning the expected regret incurred after knowing the allocation equivalent in \ref{alg:CSB-ST} .
\endproof

\diffThetaOptiSoln*
\proof{Proof.}
	Assigning $\theta_i$ fraction of resources to an arm $i$ reduces the total mean loss by amount $\mu_i$. Our goal is to allocate resources such that total mean loss is minimized, i.e., $\min\limits_{\ba \in \A}\sum_{i\in[K]}\mu_i\one{a_i < \theta_i}$. observe that the maximization version of same optimization problem is $\max\limits_{\ba \in \A}\sum_{i\in[K]}\mu_i\one{a_i \ge \theta_i}$ which is exactly same as solving a 0-1 knapsack with capacity $Q$ where item $i$ has value $\mu_i$ and weight $\theta_i$. 
\endproof

\diffTheteEst*
\proof{Proof.}
	Let $L^\star = \left\{i: a_i^\star < \theta_i\right\}$ and $r = Q - \sum_{i: a_i^\star \ge \theta_i}\theta_i$. If resource $r$ is allocated to any arm $i \in L^\star$, minimum value of mean loss will not change as $r < \min_{i \in L^\star} \theta_i$. If we can allocate $\gamma=r/K$ fraction of $r$ to each arm $i \in K$, the minimum mean loss still remains the same. If estimated threshold of every arm $i \in K$ lies in $[\theta_i, \ceil{\theta_i/\gamma}\gamma]$ then using Theorem 3.2 of \cite{DO13_hifi2013sensitivity}, $KP(\bmu,\btheta, Q)$ and $KP(\bmu, \hat\btheta, Q)$ has the same optimal solution because of having the same mean loss for both the problem instances.
\endproof

\MultiTheta*
\proof{Proof.}
	For any arm $i \in [K]$, we want $\hat{\theta}_i \in [\theta_i, \ceil{\theta_i/\gamma}\gamma]$. As $\theta_i \in (0,Q]$, we can divide interval $[0,Q]$ into a discrete set $\Theta \doteq \left\{0, \gamma, 2\gamma, \ldots, Q\right\}$ and note that $|\Theta| = \ceil{1+ {Q}/{\gamma}}$. As search space is reduced by half in each change of $\hat\theta_i$, the maximum change in $\hat\theta_i$ is upper bounded by $\log_2|\Theta|$ to make sure that $\hat{\theta}_i \in [\theta_i, \ceil{\theta_i/\gamma}\gamma]$. When $\hat{\theta}_i$ is underestimated and no loss is observed for consecutive $W_\delta$ rounds, a mistake happens by assuming that current allocation is an overestimate. We set $W_\delta$ such that the probability of estimating wrong $\hat{\theta}_i$ is small and bounded as follows:
	\begin{align*}
		&\Prob{\text{No loss is observed for consecutive $W_\delta$ rounds when $\hat{\theta}_i$ is underestimated}} \\
		&\qquad = (1 - \mu_i)^{W_\delta} \hspace{5mm}\text{\big(as $(1 - \mu_i)$ is the probability of not observing loss at arm $i$\big)}\\
		&\qquad \le (1 - \epsilon)^{W_\delta}. \hspace{6.8mm} \text{\big(since $\forall i \in [K]: \mu_i > \epsilon$\big)}
	\end{align*}
	Since we are doing binary search, the algorithm goes through at most $\log_2(|\Theta|)$ underestimates of $\theta_i$. Let $I$ denote the set of indices of these underestimates in $\Theta$
	\begin{align*}
		&\Prob{\text{No loss is observed for consecutive $W_\delta$ rounds when  $\hat{\theta}_i$ is underestimated}} \\
		&\qquad \le \sum_{i\in I}\Prob{\text{No loss is observed for consecutive $W_\delta$ rounds when  $\hat{\theta}_i$ is underestimated}} \\
		&\qquad \le (1 - \epsilon)^{W_\delta}\log_2(|\Theta|).
	\end{align*}
	Next, we will bound the probability of making mistake for any of the arm. That is given by
	\begin{align*}
	&\Prob{\exists i  \in [K], \hat\theta_i \in \Theta: \text{No loss is observed for consecutive $W_\delta$ rounds when  $\hat{\theta}_i$ is underestimated}} \\
	&~~ \le \sum_{i=1}^{K}\Prob{\exists \hat\theta_i \in \Theta: \text{No loss is observed for consecutive $W_\delta$ rounds when  $\hat{\theta}_i$ is underestimated}}  \\
	&~~ \le K(1 - \epsilon)^{W_\delta}\log_2(|\Theta|).
	\end{align*}
	
	As we are interested in bounding the above probability of making a mistake by $\delta$ for all arms, we have the following expression,
	\begin{align*}
		&K (1 - \epsilon)^{W_\delta}\log_2(|\Theta|) \le \delta\\
	 	\implies &(1 - \epsilon)^{W_\delta} \le \delta/K \log_2(|\Theta|).
	\end{align*}
	Taking log both side, we get
	\begin{align*}
		&W_\delta\log(1 - \epsilon) \le \log(\delta/K \log_2(|\Theta|))\\
	 	\implies & W_\delta\log\left({1}/{(1 - \epsilon)}\right) \ge \log(K \log_2(|\Theta|)/\delta)\\
		\implies &W_\delta \ge \frac{\log(K \log_2(|\Theta|)/\delta)}{\log\left({1}/{(1 - \epsilon)}\right) }.
	\end{align*}
	As $|\Theta| = \ceil{1+ Q/\gamma}$, we set 
	\begin{equation}
		W_\delta = \frac{\log(K \log_2 \ceil{1+ Q/\gamma}/\delta)}{\log\left({1}/{(1 - \epsilon)}\right) }.
	\end{equation}
	
	Therefore, the minimum number of rounds needed to find a threshold $\hat{\theta}_i$ for an arm $i$, which is an element of allocation equivalent vector with the probability of at least $1-\delta/K$ is upper bounded by $W_\delta\log_2 \ceil{1+ Q/\gamma}$. 
	
	Since $n$ is the number of different thresholds, there are $n$ different groups of arms where group $G_i$ consists of arms having the same estimated threshold $\hat{\theta}_i$ in the estimated allocation equivalent vector. We divided the number of rounds to know allocation equivalent into two parts. The first deals with the maximum number of expected rounds needed to find $n$ thresholds. In comparison, the second part deals with finding the good thresholds for remaining arms using known thresholds.
	
	Let consider the worst case where only one threshold is estimated at a time. Then the maximum expected rounds needed to estimate threshold associated with $G_i$ is $W_\delta \log_2\ceil{1 + Q/\gamma}$. Using this fact with definition of $A_{\theta_n}$, the maximum expected rounds needed to estimate $n$ thresholds is $W_\delta\sum_{i \in A_{\theta_n}} \log_2\ceil{1 + Q/\gamma}$. Once all $n$ thresholds are known then the threshold for any arm $k$ in remaining arms with non-zero mean loss need to search over $n$ possible values of thresholds and hence the expected number of rounds needed to its estimate is $W_\delta \log_2 (n+1)$. Therefore, the maximum number of rounds needed to estimate threshold for all arms in $A_{\theta_n}^c$ is $W_\delta\sum_{k \in A_{\theta_n}^c: \mu_k \ne 0} \log_2 (n+1)$ which further upper bounded by $W_\delta K\log_2 (n+1)$. With this argument, the proof is complete.
\endproof

\subsection*{Equivalence of CSB with different thresholds and Combinatorial Semi-Bandit }
In stochastic Combinatorial Semi-Bandits (CoSB), a learner can play a subset of $K$ arms in each round, also known as superarm, and observes the loss from each arm played \citep{ICML13_chen2013combinatorial,NIPS16_chen2016combinatorial,ICML18_wang2018thompson}. The size of a superarm can vary, and the mean loss of a superarm only depends on the mean of its constituent arms. The goal is to select a superarm that has the smallest loss. A policy in CoSB selects a superarm in each round based on past information. The performance of a policy is measured in terms of regret, defined as the difference between cumulative loss incurred by the policy and that incurred by playing an optimal superarm in each round. Let $(\bmu, \mathcal{I}) \in [0,1]^K \times 2^{[K]}$ denote an instance of CoSB, where $\bmu$ denote the mean loss vector and $\mathcal{I}$ denotes the set of superarms. Let $\PCSB_d \subset \PCSB$ denote the set of CSB instances with different thresholds. For any  $(\bmu,\btheta,Q) \in \PCSB_d$ with $K$ arms and known threshold $\btheta$, let $(\bmu, \mathcal{I})$ be an instance of CoSB with $K$ arms and each arm has the same Bernoulli distribution as the corresponding arm in the CSB instance. Let $\PCoSB$ denote set of resulting CoSB problems and $g: \PCSB_d \rightarrow \PCoSB$ denote the above transformation.

Let $\pi$ be a policy on $\PCoSB$. The policy $\pi$ can also be adapted for any $(\bmu,\btheta,Q) \in \PCSB_d$ with known $\btheta$ to decide which set of arms are allocated resource as follows: In round $t$, let information $(C_1, Y_1, C_2,Y_2, \ldots, C_{t-1}, Y_{t-1})$ collected from a CSB instance, where $C_s$ is the set of arms where no resource is applied and $Y_s$ is the samples observed from these arms, is given to $\pi$ which returns a set $C_t$. Then all arms other than arms in $C_t$ are given resource equal to their  estimated good threshold. Let this policy on $(\bmu,\btheta,Q) \in \PCSB_d$ is denoted as $\pi^\prime$. Similarly, a policy $\beta^\prime$ on $\PCSB_d$ can be adopted to yield a policy for $\PCoSB$ as follows: In round $t$, the information $(M_1, Y_1, M_2,Y_2, \ldots, M_{t-1}, M_{t-1})$, where $M_s$ is the superarm played in round $s$ and $Y_s$ is the associated loss observed from each arms in $M_s$, collected on an CoSB instance is given to the policy $\beta^\prime$. Then the policy $\beta^\prime$ returns a set $M_t$ where no resources has allocated. The superarm corresponding to $M_t$ is then played. Let this policy on $\PCoSB$ be denoted by $\beta$. Note that when $\btheta$ is known, the mapping is invertible. 
Our next result gives regret equivalence between the CoSB problem and the CSB problem with the known thresholds.

\MultiThetaEquivalence*
\vspace{-2.5mm}
\proof{Proof.}
	Let $\pi^\prime$ be a policy on $P:=(\bmu,\btheta,Q) \in \PCSB_d$. The regret of policy $\pi^\prime$ on $P$ is given by
	\[\Regret_T(\pi^\prime,P)=\sum_{t=1}^T \left(\sum_{i=1}^K \mu_i\one{a_{t,i}< \theta_i} -\sum_{i=1}^K \mu_i\one{a^\star_i < \theta_i}\right), \]
	where $\ba^\star$ is the optimal allocation for $P$. Consider $g(P)=(\bmu,\mathcal{I}) \in \PCoSB$ where $g: \PCSB_d \rightarrow \PCoSB$ and $\bmu$ is the same as in $P$ and $\mathcal{I}$ contains all superarms (set of arms) for which resource allocation is feasible. The regret of policy $\pi$ on $g(P)$ is given by 
	\vspace{-1.25mm}
	\[\Regret_T(\pi,g(P))=\sum_{t=1}^T\big(l(M_t,\bmu) - l(M^\star,\bmu)\big),\]
	where $M_t$ is the superarm played in round $t$,  $M^\star$ is optimal superarm, and $l$ returns mean loss for given superarm.
	The outcome of $l(M,\bmu)$ only depends on mean loss of constituents arms of the superarm $M$. In our setting, $l(M,\bmu) = \sum_{i \in M}\mu_i$ where $M= \left\{i: a_i <\theta_i \right\}$ for allocation $\ba \in \A$. It is clear that $\sum_{i=1}^K \mu_i\one{a^\star_i < \theta_i}=l(M^\star, \bmu)$. Let $C_t$ be the set of arms where no resource is allocated by $\pi^\prime$ in round $t$. Since, loss is only incurred for arms in the set $C_t$, we have $\sum_{i=1}^K \mu_i\one{a_{t,i}< \theta_s}=\sum_{i \in C_t}\mu_i$. By definition the policy $\pi$ selects superarm $M_t=C_t$ in round $t$, i.e., set of arms returned by $\pi^\prime$ for which no resourced are applied. Hence $\sum_{i=1}^K \mu_i\one{a_{t,i}< \theta_s}=\sum_{i \in M_t} \mu_i $. This establishes the regret of $\pi^\prime$ on $P$ is same as regret of $\pi$ on $g(P)$ and hence, $\Regret(\PCoSB)\leq \Regret (\PCSB_d)$.
	Similarly, we can establish the other direction of the proposition and get $\Regret(\PCSB_d)\leq \Regret (\PCoSB)$. Thus we conclude $\Regret(\PCSB_d)= \Regret (\PCoSB)$.
\endproof

Let $\nabla_{\max}$, $\nabla_{\min}$, $\nabla_{i,\min}$, and $K^\prime$ be the same as in Section \ref{sssec:differentThetaRegretBounds}. Let $k_\star$ be the minimum number of arms in the optimal superarm and $\cM$ be the set of all feasible superarms. As it is not possible to sample $\hat\mu_i(t)$ to be precisely the true value $\mu_i$ using Beta distribution, we need to consider the $\eta$-neighborhood of $\mu_i$, and such $\eta$ term is common in the analysis of most Thompson Sampling algorithms (see \cite{NeurIPS20_perrault2020statistical,ICML18_wang2018thompson} for more details). We need the following results to prove \cref{thm:regretDiffThreshold}. 
\begin{theorem}
	\label{thm:CTSRegret}
	Let $\hat\btheta$ be allocation equivalent of $\btheta$ for instance $(\bmu,\btheta,Q)$. Then, the expected regret of \ref{alg:CSB-MT} in $T$ rounds after knowing the allocation equivalent is upper bounded by
	$$16\log_2^2(16K^\prime)\sum_{i\in [K]} \frac{ \log \left(2^{K^\prime} |\cM|T\right)}{\nabla_{i,\min}} + \nabla_{\max}(K+1) + \frac{4K (K^\prime)^2\nabla_{\max}}{\left(\nabla_{\min}-2(k_\star^2+1)\eta\right)^2}  + \nabla_{\max} \frac{C}{\eta^2}\left(\frac{C^\prime}{\eta^4}\right)^{k_\star},$$
	 where $C$, $C^\prime$ are two universal constants, and $\eta \in (0, 1)$ is such that $\nabla_{\min}-2(k_\star^2+1)\eta > 0$. Further, the expected regret of \ref{alg:CSB-MT} in $T$ rounds  is also upper bounded by $O\left(\sum\limits_{i \in [K]}\frac{\log^2(K^\prime)\log T}{\nabla_{i,\min}} \right)$.
\end{theorem}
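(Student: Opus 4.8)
The plan is to reduce the post–threshold-estimation behaviour of \ref{alg:CSB-MT} to a standard combinatorial semi-bandit analysis, and then quote the CTS-BETA regret guarantee. First I would condition on the (high-probability) event, quantified in \cref{lem:MultiTheta}, that the threshold-estimation phase of \ref{alg:CSB-MT} terminates and outputs a vector $\hat{\btheta}$ that is an allocation equivalent of $\btheta$; by \cref{lem:diffTheteEst} and \cref{prop:diffThetaOptiSoln} the optimal allocations of $(\bmu,\btheta,Q)$ and $(\bmu,\hat{\btheta},Q)$ then coincide, so the regret accumulated from this round onward is measured against the correct optimum $KP(\bmu,\hat{\btheta},Q)$.

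Next I would invoke \cref{prop:MultiThetaEquivalence}: with $\hat{\btheta}$ fixed and known, the CSB instance is regret equivalent to a combinatorial semi-bandit instance $(\bmu,\cM)$, where a superarm is a set of arms receiving \emph{no} resource, $\cM$ is the collection of complements of feasible allocations, the mean loss of a superarm is the sum of the means of its constituent arms, and the oracle picks the optimal superarm through the knapsack solve. The key observation is that, once the threshold phase has ended, each round of \ref{alg:CSB-MT} — draw $\hat{\mu}_{t,i}\sim\mathrm{Beta}(S_i,F_i)$ independently, call the knapsack oracle on $KP(\hat{\bmu}_t,\hat{\btheta},Q)$ to pick $A_t$, observe $X_{t,i}$ for $i\in[K]\setminus A_t$, and update $S_i,F_i$ with these Bernoulli observations — is exactly one round of CTS-BETA \citep{NeurIPS20_perrault2020statistical} on $(\bmu,\cM)$ with a Beta prior. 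Hence the sequence of per-round regrets produced by \ref{alg:CSB-MT} after knowing $\hat{\btheta}$ is distributed identically to that of CTS-BETA on this combinatorial semi-bandit instance.

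The first displayed bound then follows by quoting verbatim the problem-dependent regret guarantee of CTS-BETA (the main theorem of \citet{NeurIPS20_perrault2020statistical}), with $K'$ the maximal superarm size, $k_\star$ the minimal size of the optimal superarm, $|\cM|$ the number of feasible superarms, universal constants $C,C'$, and $\eta\in(0,1)$ the usual neighbourhood parameter constrained by $\nabla_{\min}-2(k_\star^2+1)\eta>0$; the gaps $\nabla_{i,\min}$ and $\nabla_{\min},\nabla_{\max}$ transfer directly since the loss of a superarm depends only on the means of its arms. For the second, $O(\cdot)$ bound I would simplify: fix $\eta$ to a constant admissible value, use $|\cM|\le 2^{K}$ and $K'\le K$ so that $\log(2^{K'}|\cM|T)=O(\log T)$ when $K,K'$ are held fixed, and note that all the remaining terms ($\nabla_{\max}(K+1)$, the $\eta$-dependent terms, and the trailing constant) are $O(1)$ in $T$; collecting the surviving leading term gives $O\!\big(\sum_{i\in[K]}\log^2(K')\log T/\nabla_{i,\min}\big)$.

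The main obstacle is making the identification in the second paragraph airtight: one must verify that the map between feasible allocations and combinatorial superarms is a bijection respecting both the reward structure (sum of arm means of the complement of $A_t$) and the semi-bandit feedback (losses observed precisely on $[K]\setminus A_t$), and — crucially — that once the estimation phase ends and the spurious-zero counters $Z_i$ and the $\theta_{g,i}$ flags are quiescent, the posterior counts $S_i,F_i$ carried into the exploitation phase are exactly the CTS-BETA counts, so no stale observations from threshold estimation bias the Beta posteriors. Once that bookkeeping is checked, the regret transfer is immediate.
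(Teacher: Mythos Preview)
Your proposal is correct and follows essentially the same route as the paper: reduce, via \cref{prop:MultiThetaEquivalence}, the post-threshold-estimation behaviour of \ref{alg:CSB-MT} to a combinatorial semi-bandit instance on which the algorithm coincides with CTS-BETA, and then quote Theorem~1 of \citet{NeurIPS20_perrault2020statistical}. The one step the paper makes explicit that you leave implicit is the verification of the assumptions of that theorem, in particular the $1$-Lipschitz condition $l(M,\bmu)-l(M,\bmu')\le \|\bmu-\bmu'\|_1$ on the superarm loss; this is immediate from $l(M,\bmu)=\sum_{i\in M}\mu_i$, but should be stated.
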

\proof{Proof.}
	Once the allocation equivalent of $\btheta$ is known, the CSB problem is equivalent to a Combinatorial Semi-Bandit problem (from \cref{prop:MultiThetaEquivalence}). Now the proof of \cref{thm:CTSRegret} follows by verifying Assumptions $1-3$ in \cite{NeurIPS20_perrault2020statistical} for the Combinatorial Semi-Bandit problem and applying their regret bound. Assumption $1$ states that the agent has access to an oracle that can compute the optimal superarm. Whereas, Assumption $3$ states that the losses of arms are bounded and mutually independent. It is clear that both of these assumptions hold for our setting. We next proceed to verify Assumption $2$. For fix allocation $\ba\in \A$, the mean loss incurred from loss vector $\bmu$ is given by $l(M,\bmu)=\sum_{i \in M}\bmu_i$ where $M=\left\{i:a_i < \hat\theta_i\right\}$. For any two loss vectors $\bmu$ and $\bmu^\prime$, we have
	\begin{align*}
	l(M, \bmu)-l(M, \bmu^\prime)&=\sum_{i \in M}(\mu_i - \mu_i^\prime) \\
	&= \sum_{i=1}^K  \one{a_i< \hat\theta_i}\left (\mu_i -\mu_i^\prime \right) \hspace{5mm} \text{$\Bigg($as $\sum_{i \in M}\mu_i= \sum_{i=1}^K \mu_i \one{a_i< \hat\theta_i}\Bigg)$}\\
	&\leq    \sum_{i=1}^K  \left (\mu_i -\mu_i^\prime \right)\\
	&\leq    \sum_{i=1}^K   |\mu_i -\mu_i^\prime | \\
	&= B\parallel \bmu- \bmu^\prime \parallel_1
	\end{align*}
	where $B=1$. After knowing the allocation equivalent, the allocation to each arm remains the same in each round ($\hat{\theta}_i$ is given to each arm $i \in [K]\setminus A_t$). Thus we are solving a Combinatorial Semi-Bandit with parameter $B=1$. By using Theorem $1$ in \cite{NeurIPS20_perrault2020statistical}, we get the desired bounds.
\endproof

\begin{theorem}
	\label{thm:regretDiffThresholdHighConf}
	With probability at least $1-\delta$, the expected cumulative regret of \ref{alg:CSB-MT} is upper bounded as 
	\begin{align*}
		\EE{\Regret_T} \le \frac{\log( K \log_2\left(\ceil{1 +\frac{Q}{\gamma}}\right)/\delta)} {\log(1/(1-\epsilon))} &\left[\sum_{i \in A_{\Theta_n}} {\log_2 \left(\ceil{1 + \frac{Q}{\gamma}}\right)} + K{\log_2 (n +  1)}\right] \nabla_{\max} +\\
		&\qquad O\left(\sum\limits_{i \in [K]}\frac{\log^2(K^\prime)\log T}{\nabla_{i,\min}} \right).
	\end{align*}
\end{theorem}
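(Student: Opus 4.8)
\textbf{Proof plan for Theorem~\ref{thm:regretDiffThresholdHighConf}.} The plan is to decompose the horizon into the \emph{threshold-estimation phase} and the \emph{exploitation phase}, bound each separately, and combine them conditioned on the high-probability event that the estimated threshold vector $\hat\btheta$ is indeed an allocation equivalent of $\btheta$. First I would invoke \cref{lem:MultiTheta}: with probability at least $1-\delta$, the number of rounds $T_{\theta_n}$ spent in the threshold-estimation phase is at most
\begin{equation*}
	\frac{\log( K \log_2(\ceil{1 +{Q}/{\gamma}})/\delta)}{\log(1/(1-\epsilon))}\left[\sum_{i \in A_{\Theta_n}} {\log_2 (\ceil{1 +  {Q}/{\gamma}})} + K{\log_2 (n +  1)}\right],
\end{equation*}
and on the same event $\hat\btheta$ is an allocation equivalent of $\btheta$ (this is exactly the content of \cref{lem:diffTheteEst} together with the $W_\delta$ choice the lemma forces). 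Since any single round contributes at most $\nabla_{\max}$ to the regret by definition of $\nabla_{\max}=\max_{\ba\in\A}\nabla_{\ba}$, the total regret incurred during the estimation phase is bounded by $T_{\theta_n}\nabla_{\max}$, which gives the first term of the stated bound verbatim.

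Second, I would bound the regret incurred after the estimation phase. Conditioned on the good event, once $\hat\btheta$ is an allocation equivalent, \cref{prop:MultiThetaEquivalence} tells us the CSB problem is regret equivalent to a Combinatorial Semi-Bandits instance in which the Oracle solves $KP(\bmu,\hat\btheta,Q)$, and \ref{alg:CSB-MT} runs CTS-BETA on that instance. I would then apply \cref{thm:CTSRegret} (which in turn rests on verifying Assumptions~1--3 of \cite{NeurIPS20_perrault2020statistical}): the regret of the exploitation phase is $O\!\left(\sum_{i\in[K]} \log^2(K^\prime)\log T / \nabla_{i,\min}\right)$, matching the second term. Here one must be a little careful that the ``minimum loss'' of the equivalent Combinatorial Semi-Bandits instance is the same as the true optimal loss of $(\bmu,\btheta,Q)$ — this is precisely why allocation equivalence (rather than exact threshold recovery) suffices, and it is worth spelling out when quoting \cref{thm:CTSRegret}.

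Third, I would assemble the two pieces. Let $\mathcal{E}$ denote the event from \cref{lem:MultiTheta} (probability $\ge 1-\delta$). On $\mathcal{E}$, the regret is at most $T_{\theta_n}\nabla_{\max}$ plus the CTS-BETA regret from \cref{thm:CTSRegret}; summing gives the displayed inequality. The statement is a high-probability bound (``with probability at least $1-\delta$''), so no unconditioning over $\mathcal{E}^c$ is needed for this particular theorem — that step is what turns it into the in-expectation \cref{thm:regretDiffThreshold} by the choice $\delta=T^{-(\log T)^{-\alpha}}$, but here we simply report the conditional bound. One subtlety: the exploitation regret in \cref{thm:CTSRegret} is stated for ``$T$ rounds after knowing the allocation equivalent,'' so replacing that horizon by the full $T$ only loosens the bound, which is fine for an upper bound.

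\textbf{Main obstacle.} The routine part is the round-counting and the invocation of the external combinatorial-bandit regret bound; the genuinely delicate step is justifying that \cref{thm:CTSRegret} applies with the \emph{estimated} threshold vector $\hat\btheta$ in place of the true $\btheta$ — i.e., that the gaps $\nabla_{i,\min}$, the feasible-superarm structure $\cM$, and the optimal superarm are unchanged (up to allocation equivalence) when we swap $\btheta$ for $\hat\btheta$. This requires leaning on the sensitivity result behind \cref{lem:diffTheteEst} (Theorem~3.2 of \cite{DO13_hifi2013sensitivity}) to argue that $KP(\bmu,\hat\btheta,Q)$ and $KP(\bmu,\btheta,Q)$ have the same optimal value and compatible optimal solution sets, so that the combinatorial instance fed to CTS-BETA has exactly the gap structure whose regret \cref{thm:CTSRegret} controls. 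Getting this correspondence airtight — in particular that the per-arm minimum gaps $\nabla_{i,\min}$ in the $\hat\btheta$-instance are no smaller than (a constant times) those in the true instance — is where the care is needed.
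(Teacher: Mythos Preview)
Your proposal is correct and follows essentially the same two-part decomposition as the paper: bound the estimation phase by $T_{\theta_n}\nabla_{\max}$ via \cref{lem:MultiTheta}, then bound the exploitation phase by \cref{thm:CTSRegret} via the equivalence in \cref{prop:MultiThetaEquivalence}. The ``main obstacle'' you flag (that the gap structure under $\hat\btheta$ matches that under $\btheta$) is in fact already absorbed into the hypothesis of \cref{thm:CTSRegret}, which is stated directly for an allocation-equivalent $\hat\btheta$; the paper therefore does not re-justify it at this step, and you need not either.
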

\proof{Proof.}
	The first term of expected regret is due to the estimation of allocation equivalent. It takes $T_{\theta_n}$ rounds to complete, and $\nabla_{\max}$ is the maximum regret that can be incurred in any round. Then the maximum regret due to threshold estimation is bounded by $T_{\theta_n}\nabla_{\max}$ (replace $T_{\theta_n}$ by its value). The remaining term in regret corresponds to the expected regret incurred after knowing the allocation equivalent, that is upper bounded by \cref{thm:CTSRegret}.  
\endproof

\noindent
Let $W_\delta$  be the same as in Section \ref{sssec:differentThetaRegretBounds}. We are now ready to give the proof of \cref{thm:regretDiffThreshold}.
\regretDiffThreshold*
\proof{Proof.}
	The regret bound follows from Theorem \ref{thm:regretDiffThresholdHighConf} by setting $\delta=T^{-(\log T)^{-\alpha}}T$ and unconditioning the expected regret incurred after knowing the allocation equivalent in \ref{alg:CSB-MT}.
\endproof

\subsection*{Missing proofs from  \cref{sec:csb_unknown} }

\sameThetaEstRounds*
\proof{Proof.}
    Let $X_1, X_2, \ldots, X_P$ be the independent Bernoulli random variables where $X_i$ has mean $\mu_i$.  The samples from all random variables are observed at the same time. Let $R_W$ is a random variable that counts the number of rounds needed to observe a sample of `$1$' for any of $\{X_i\}_{i \in [P]}$. First, we compute $\Prob{R_W = w}$, i.e.,
    \begin{equation*}
        \Prob{R_W = w} = \Pi_{i \in [P]}(1-\mu_i)^{w-1} \left(1-\Pi_{i \in [P]}(1-\mu_i)\right).
    \end{equation*}
    The previous results follows from the fact that there a sample of `$1$' is not observed for any of $\{X_i\}_{i \in [P]}$ in the first $w-1$ rounds and a sample of `$1$' is observed for at least one of the random variable in the $w^{th}$ round. The expectation of $R_W$ is given as follows:
    \begin{align*}
        \EE{R_W} &= \sum_{w=1}^{\infty} w \Prob{R_W = w}  \\
        &= \sum_{w=1}^{\infty} w \Pi_{i \in [P]}(1-\mu_i)^{w-1} \left(1-\Pi_{i \in [P]}(1-\mu_i)\right)\\
        &= \left(1-\Pi_{i \in [P]}(1-\mu_i)\right)\sum_{w=1}^{\infty} w \Pi_{i \in [P]}(1-\mu_i)^{w-1}. 
        \intertext{Let $\bar{p} = 1-\Pi_{i \in [P]}(1-\mu_i)$, we have}
        \implies \EE{R_W}  &= \bar{p} \sum_{w=1}^{\infty} w (1 - \bar{p})^{w-1} \\
        &= \bar{p}\left[ \frac{d}{d\bar{p}}\sum_{w=1}^{\infty} -(1-\bar{p})^{w}\right] \\ 
        &= \bar{p}\left[ \frac{d}{d\bar{p}}\left(\frac{-1}{\bar{p}} \right) \right] \\
        &= \bar{p}\left(\frac{1}{\bar{p}^2} \right) = \frac{1}{\bar{p}} \\
        &= \frac{1}{1- \Pi_{i \in [P]}(1-\mu_i)}.
    \end{align*}
    
    \ref{alg:CSB-JS} starts equal resources to all $L=K$ arm. When a loss is observed for any of the arms, it implies that current resource allocation is a underestimate of threshold and then resources are equally allocated among $L=K-1$ arms. Let $T_\theta(L)$ denote the number of the rounds needed to observe a loss when $L$ arms are allocated resources. By taking top $L$ arms in each round, the upper bound on expected value of $T_\theta(L)$ is given as:
    \begin{equation*}
        \EE{T_\theta(L)} \leq  \frac{1}{1 - \Pi_{i \in [K]/[K-L]}(1-\mu_i)}.
    \end{equation*}
    
    Note that $M$ is the number of arms in the optimal allocation. Consider all wrong values of $L \in \{M+1, M+2, \ldots, K-1, K \}$, the upper bound on expected number of rounds needed to reach to correct allocation, i.e., $(Q/M)$ is given as follows:
    \begin{align*}
        \EE{T_{\theta_s}} &= \sum_{L=M+1}^{K} \EE{T_\theta(L)} \\
        \implies \EE{T_{\theta_s}} &\le \sum_{L=M+1}^{K} \frac{1}{1 - \Pi_{i \in [K]/[K-L]}(1-\mu_i)}. 
    \end{align*}
\endproof

\noindent
Now, we need the following results to prove the regret bound of \ref{alg:CSB-JS}.
\regretJointSameThreshold*
\proof{Proof.}
    The regret of \ref{alg:CSB-JS} can be divided into two parts: regret before knowing allocation equivalent and after knowing it. The first part of regret bound is the expected regret incurred while estimating allocation equivalent, which is $\EE{T_{\theta_s}}\nabla_{\max}$. The second part of regret is due to the MP-MAB algorithm (MP-TS) and is given by \cref{thm:MPRegret}.
\endproof

\diffThetaEstRounds*
\proof{Proof.}
    The expected number of rounds needed to observe a loss from an under-allocated arm with non-zero mean loss are $1/ \mu_i$ (by Geometric distribution). When a loss is observed for an arm, \ref{alg:CSB-JD} increments resources by $\gamma$ amount for that arm. In worse case, $\theta_i$ or more resources are allocated only after $\floor{\theta_i/\gamma}$ number of increments in resource allocation for the arm $i$. Therefore, the expected number of rounds needed to estimate $\hat{\theta}_i \in [\theta_i, \ceil{\theta_i/\gamma}\gamma]$ are $\floor{\theta_i/\gamma}(1/\mu_i)$.
    
    Let consider the worst case where only one threshold is estimated at a time. Then the maximum expected rounds needed to estimate all thresholds are $\sum_{i \in [K]:\mu_i \ne 0} \floor{{\theta_i}/{\gamma}} \left( {1}/{\mu_i} \right)$. With this argument, the proof is complete.
\endproof

\paragraph{Remark:} \ref{alg:CSB-JD} can estimate thresholds of multiple arms by starting with the same allocation of resources to all arms. Hence the number of rounds needed for finding allocation equivalent might be very small in practice than given in \cref{lem:diffThetaEstRounds} where the worst case is considered.

\regretJointDiffThreshold*
\proof{Proof.}
    Similar to \ref{alg:CSB-JS}, the regret of \ref{alg:CSB-JD} can also be divided into two parts: regret before knowing allocation equivalent and after knowing it. We get the first part of expected regret by using the upper bound on the expected number of rounds needed to find allocation equivalent from \cref{lem:diffThetaEstRounds} and the fact that $\Delta_{\max}$ is the maximum regret that can be incurred in any round. Once an allocation equivalent threshold is found, by exploiting equivalence with combinatorial semi-bandit, the second part of the expected regret is due to using a combinatorial semi-bandit algorithm (CTS-BETA) and is given by \cref{thm:CTSRegret}.
\endproof

	\chapter{Distributed Learning in Multi-Player Multi-Armed Bandits}
	\label{cha:mp-mab}

We consider a communication network consisting of multiple users and multiple channels where each user wishes to use one of the available channels. We propose distributed strategies that minimize the sample complexity and regret of acquiring the best subset of channels. Agents cannot directly communicate with each other, and no central coordination is possible. Each agent can transmit on one channel at a time. If multiple agents transmit on the same channel simultaneously, a collision occurs. When a collision occurs, no agent gets any information about the channel gain or how many other agents are transmitting on that channel.  If no collision occurs, the agent observes a reward (or gain) sample drawn from an underlying distribution associated with the channel.  We modeled this problem as a Multi-Player Multi-Armed bandits problem. One important property of our algorithms that distinguishes it from the prior work is that it requires no information about the difference of the means of the channel gains. Our approach results in fewer collisions with improved regret performance compared to the state-of-the-art algorithms.  We validate our theoretical guarantees with experiments.

\section{ Multi-Player Multi-armed Bandits}
\label{sec:mpb_introduction}

Consider a communication network consisting of $N$ users and $K$ channels where each user wishes to use one of the $K$ channels. A user can transmit on any of the channels. To keep the model more general, users cannot communicate directly between them and any information about the other users can be obtained only by collisions. In particular, if user $n$ transmits on channel $k$, it gets a sample of the gain of channel $k$ if no other user transmis on channel $k$, otherwise it  only gets to know that at least one other user transmitted on channel $k$ at the same time. We assume that the expected gain of each of the $K$ channels is distinct. Even though users cannot communicate and have to make decisions in a distributed fashion, we consider a non-strategic setting where users have a common goal to maximize the total gain in the network, i.e., to achieve an optimal allocation, which is realized when all the users occupy non-overlapping channels in the top $N$ channels. Here the top $N$ channels refer to the set of $N$ channels with highest expected gain. As the goal is to achieve optimal network allocation,  we assume that each user is satisfied if she gets one of the top $N$ channels among the $K$, when no other better channel is free. Our aim in this work is to find a  strategy that minimizes the time by which the users reach an optimal allocation while keeping the number of collisions low.

This problem is motivated by ad hoc cognitive radio networks (CRN), where multiple users try to access the same set of channels \citep{ad_hoc_1,ad_hoc_2}, without any direct communication between them. In this setting, there is no central controller or a common control channel that can be used to resolve contentions and all channel selection decisions have to be done in a decentralized fashion \citep{ad_hoc_1,ad_hoc_2}. Moreover, this problem is well suited for upcoming $5G$ standards like device-to-device communication mode in cellular networks where multiple mobile nodes try to form groups to reduce signaling load on the central base station. Such models are also being envisioned for futuristic ultra-dense networks deployed to offer high peak rates \citep{NR_1}.

In ad hoc CRNs, users may not know the quality of channels, and the number of other users present in the network. The goal is still to maximize the number of successful transmissions (or sum rate/ throughput) in the network. To achieve this, the users not only have to learn the channel qualities but also have to learn to co-ordinate by selecting non-overlapping channels. This distributed learning problem is a multi-player version of the multi-armed bandit problem with $K$ arms, with a total of $N$ players. Individual players do not know the number of other players and cannot communicate with others about their arm selection strategy. A player can select at most one arm at a time. If multiple players select a common arm, a collision occurs and none of them receive any reward.

The performance of the arm selection policy is measured as regret, i.e., the difference of the total expected reward obtained by the policy and the total expected reward gained by playing an optimal strategy in each round by all the players. The total number of collisions is the sum of collisions experienced by all the players. From the applications point of view, e.g., in a CRN, where users are mostly battery operated, a higher number of collisions will result in reduced operational life.  Hence, in addition to minimizing the regret, the algorithms should work with as fewer collisions as possible.

The problem of multi-player multi-armed bandits with the unknown number of users is studied in \cite{ICML16_MultiplayerBandits_RosenkiShamir} where Musical chair (MC) is developed to minimize regret and its performance is shown to be superior compared to other algorithms (for the unknown number of players) \citep{JSAC11_DistributedLearning_Anadakumar, KDD14_ConcurrentBandits_AvnerMannor}. In the MC algorithm,  each player selects the arms uniformly at random for a fixed number of rounds and then estimate the top $N$ arms and the number of users based on the reward samples and collisions observed. The number of rounds required to get good estimates is set based on the separation between the mean reward of the arms and is assumed to be lower bounded by a known positive number. However, this assumption is restrictive as it is not easy to identify the lower bound in real-life applications. In this work, we propose a learning algorithm which achieves the optimal allocation quickly with fewer collision without requiring the knowledge of the gap between the mean reward of the arms. Our contributions of this chapter can be summarized as follows:
\begin{itemize}
	\item We design a communication protocol for exchanging information among players through collisions that do not need any direct communication medium between players.
	\item We develop an algorithm named Distributed Learning and Coordination (DLC) that achieves optimal allocation in the constant number of the rounds with high probability. This bound also translates to a constant regret with high probability. The DLC algorithm does not need to know the minimum gap between the arms. 
	\item We give a variant of DLC that improves utilization of channels through efficient signaling, resulting in performance improvement in terms of regret. 
	\item We validate our claims by numerical experiments and demonstrate performance gains over the state-of-the-art.
\end{itemize}
	
The DLC algorithm combines the ideas of pure exploration methods with efficient signaling schemes to achieve optimal allocation with high probability. In DLC, each player first learns the number of players in the network by colliding with others in a specific pattern. One of the players then identifies the top $N$ arms via pure explorations and signals all the players to occupy one of top $N$ arms without any overlap.

\subsection{Related Work: Multi-player Multi-armed Bandits}
Most work on stochastic bandits with multiple-players require some negotiation or pre-agreement phase to avoid collisions between the players. The $\mbox{dUCB}_4$ algorithm in  \cite{TIT14_DecentralizedLearning_KalthilNayyarJain} achieves this using Bertsekas' auction mechanism for players to negotiate unique arm. The time divisions fair sharing (TDFS) algorithm in \cite{TSP10_DistributedLearning_LiuZhao} requires players to agree on a time division of slots before the game. Such negotiations are hard to realize in a completely distributed (ad hoc) setup \citep{ad_hoc_1,ad_hoc_2}. The $\rho^{\mbox{{\tiny RAND}}}$ algorithm in \cite{JSAC11_DistributedLearning_Anadakumar} is communication free and completely decentralized but requires knowledge of the number of users. The modified $\rho^{\mbox{{\tiny EST}}}$ algorithm overcomes this issue, but its guarantees are asymptotic and do not hold for the finite time like ours.  Performance improvements of  $\rho^{\mbox{{\tiny RAND}}}$ are studied in \cite{ALT2018_MultiplayerBandits_BessonKauf}. Other variants of stochastic multi-player bandits also consider fixed number of players and address the issue of fairness \citep{TSP14_DistributedStochastic_GaiKrishnamachari} and asymmetric arm characterizations across the players \citep{INFOCOM16_MultiUserLax_AvnerMannor}. Another set of works in \cite{TWC16_DistributedStochastic_ZandiDongGrami}, \cite{TMC11_CongnitiveMediumAccess_LaiGamalJiangPoor} considers selfish behavior of players and analyze their equilibrium behavior.

The works most similar to ours are \cite{KDD14_ConcurrentBandits_AvnerMannor} and \cite{ICML16_MultiplayerBandits_RosenkiShamir} which consider a communication-free setting with the unknown number of players that can vary during the game. The MEGA algorithm in \cite{KDD14_ConcurrentBandits_AvnerMannor} uses the classical $\epsilon$- greedy MAB algorithm and ALOHA based collision avoidance mechanism. Though collision frequency reduces in MEGA as the game proceeds it may not go to zero as shown in \cite{ICML16_MultiplayerBandits_RosenkiShamir}. To overcome this,  \cite{ICML16_MultiplayerBandits_RosenkiShamir} proposed a musical chairs (MC) algorithm that incurs collisions only in the initial phase and guarantees collision free sampling subsequently. Though MC performs better than MEGA, its performance in the initial rounds is poor -- MC uses collision information to estimate the number of players and forces a large number of collisions to get a good estimate. As discussed earlier, this may not be suitable for applications like CRN. 

SIC-MMAB and ESER algorithm in \cite{NeurIPS19_boursier2019sic} and \cite{Arxiv2019_HeterogeneousNetworks}, respectively, propose signaling mechanisms that use a suitable pattern of collisions among the players for exchanging information. Although some of the ideas of the DLC and SIC-MMAB are similar, DLC uses different learning and signaling scheme.  Specifically, the exploration phase in SIC-MMAB and ESER is based on sequential hopping whereas in DLC it is based on pure-exploration.

\section{ Problem Setup}
\label{sec:mpb_problem_setup}

The standard stochastic $K$-armed bandit problem consists of a single player with $K>1$ arms. The multi-player $K$-armed bandit is similar but consists of multiple players. Let $N \leq K$ denote the total number of players. Playing arm $k \in [K]$ where $[K]:=\{1,2,\ldots, K\}$, gives reward drawn independently from a distribution with support $[0, \; 1]$.  The reward distributions are stationary, homogeneous, and independent across the players, and $\mu_k$ denotes the mean of arm $k$. The players are not aware of how many other players are present, and there exist no control channels over which they can communicate with each other. When a player samples an arm, the reward is obtained if she alone happens to play that arm. Otherwise, all the players choosing that arm will get zero rewards. We refer to the latter case as `collision.' For any distributed policy, let $I_{n,t}$ and $\eta_{n,t}$ indicate the arm played by player $n \in [N]$ and her collision indicator in the round $t$, respectively. $\eta_{n,t} =1$ if together with player $n$ at least one other player plays the same arm as $n$ at time $t$ and is set equal to zero otherwise.

To achieve an optimal allocation in a distributed setting, each player needs to learn the means of the arms to arbitrary precision which require them to play a large number of times making the goal infeasible for all practical purposes.  Thus, we focus on achieving an approximate optimal allocation of arms with high probability that is defined as below. Let $\pi:= {\pi_t: t\geq 1}$ denote a policy, where $\pi_t: [N]\rightarrow [K]$ denotes the allocation at time $t$, i.e., $I_{n,t}=\pi_t(n)$. 
\begin{defi}[Def. 1 of \cite{Arxiv2019_HeterogeneousNetworks}]
	For a given tolerance $\epsilon \ge 0$ and confidence $\delta \in (0,1/2]$, an allocation by a policy $\pi$ is said to be $(\epsilon,\delta)-$optimal if there exists $T:=T(\pi)<\infty$ such that        
	\begin{equation}
		\Pr\left\{\sum_{n\in [N^\star]}\mu_n - \sum_{n\in[N]}\mu_{\pi_t(n)} \le \epsilon\right\}\ge 1-\delta. \quad \forall \;t \geq T
	\end{equation} 
	where $N^\star$ denotes the best subset of  $N$ arms with highest mean rewards and $T(\pi)$ denotes the sample complexity of the policy $\pi$.. 
\end{defi}
\noindent
This definition can be viewed as a generalization of  \underline{p}robably-\underline{a}pproximately-\underline{c}orrect (PAC) performance guarantee in the pure exploration multi-armed bandits problems with single player \citep{ICML12_PAC_Subset_Selection, COLT13_Bandit_Subset_Selection} to the multi-player case. 
We define the regret of policy $\pi$ over period $T$ as 
\begin{equation}
	\Regret_T(\pi)=\sum_{t=1}^T \sum_{n \in [N^*]} \mu_{n} -\sum_{t=1}^T\sum_{n \in [N]}\mu_{\pi_t(n)}(1-\eta_{n,t}).
\end{equation}  
The total number of collisions incurred by a policy $\pi$ over period $T$ is defined as
\begin{equation}
C(\pi)=\sum_{t=1}^T \sum_{n \in [N]}  \eta_{n,t}.
\end{equation}
Note that a collision on the arm is counted multiple times because all the players involved in a collision have to re-sample in another slot incurring extra transmission cost. 

Our goal is to develop distributed algorithms that give $(\epsilon,\delta)-$optimal allocation quickly while keeping the regret $\Regret_T$ and the total number of collisions $\C$ as low as possible. Specifically, we develop algorithms whose regret is constant with high probability, i.e., the algorithm incurs regret only for finitely many rounds. 

Similar to the prior work \citep{ICML16_MultiplayerBandits_RosenkiShamir, ALT2018_MultiplayerBandits_BessonKauf,Arxiv18_hanawal2018multi, NeurIPS19_boursier2019sic,Arxiv2019_HeterogeneousNetworks}, we assume players are synchronized and know arms' indices before entering into the network. Further, all the players are assumed to see the same gains on all the channels, which is often the case in dense networks because of the close proximity of users. 
\nocite{WiOPT19_verma2019distributed}

\section{Algorithm with No Information Exchange}
\label{sec:mpb_dlc}

In this section, we propose an algorithm named, Distributed Learning and Coordination (DLC), in which all the players settle on top $N$ arms quickly. The players cannot communicate explicitly but can exchange information with each other by colliding in a particular fashion. We refer to such deliberate collisions in the network for information exchange as {\em signals}. All signals are counted as collisions and add to the regret. 

\subsection{{\normalfont{DLC}}  Algorithm}
\ref{DLC} algorithm consists of mainly $4$ phases, namely 1) Orthogonalization 2) Player Indexing 3) Adaptive Learning and 4) Communication. These phases run sequentially one after another. 
\begin{algorithm}[!ht]
	\renewcommand{\thealgorithm}{DLC}
	\floatname{algorithm}{}
	\caption{\textbf{ D}istributed \textbf{L}earning and \textbf{C}oordination} 
	\label{DLC}
	\begin{algorithmic}[1]
		\State	Input: {$K,\; \epsilon\geq 0,\; \delta \in (0,1/2]$} 

		\State Select the arms uniformly at random for $T_{RP}$ rounds and find reserved arm $i$
		\State Play arm $i$ for $2i$ rounds. After that sequentially hop for $K - i$ rounds and then play arm $i$ for  $K - i - 1$ rounds
		\State Find number of players ($N$) and their reserved arm. Designate as Leader if has smallest reserved arm's index
		\If {Player is Leader}
		\State Find top $N$ arms using {\em AdaptiveExplore} sub-routine
		\State Assign top $N$ arms among players without overlap
		\Else
		\State Transmit on arm $i$ periodically. When a collision is observed on arm $i$, play it for next $\ceil{\log_2(K) }$ rounds and then play received arm in the subsequent rounds. 
		\EndIf
	\end{algorithmic}
\end{algorithm}

\subsubsection{Orthogonalization}
The first phase of \ref{DLC} finds orthogonal arm allocations through random hopping in which each player selects an arm uniformly at random in each round until she observes a collision-free transmission on the selected arm. Once it happens, she continues to play that arm till the end of the phase. The length of {\em Orthogonalization} phase $(T_{RP})$ is set such that all the players are on different arms by the end of the phase with probability at least $1-\delta$. We refer to the last arm played by a player in this phase as her {\em reserved arm}.

\subsubsection{Player Indexing}
This phase is similar to the initialization phase in SIC-MMAB algorithm \citep{NeurIPS19_boursier2019sic}. In this phase, each player estimates how many other players are there in the network and finds their reserved arms using a specific pattern of collisions. A player with reserved arm $i$ plays arm $i$ for $2i$ rounds. After this, she starts playing arm with higher index in each round (sequential hopping)  for next $K - i$ rounds and then plays her reserved arm for next $K - i - 1$ rounds. This process makes sure that players begin sequential hopping in a delayed fashion and the player with reserved arm $i$ will collide with the player with the reserved arm $j (>i)$ at time $T_{RP} + i + j$. At the end of round $T_{RP} + 2K - 1$, each player knows the number of players $(N)$ in the network and their respective reserved arms. 

The player with smallest reserved arm's index will be designated as {\em Leader}. The Leader knows the reserved arm of other players and she assigns a ranking to the players based on the index of their reserved arms as follows: The Leader takes herself rank $0$. The player with the second smallest index of the reserved arm is assigned rank $1$. The third smallest index of the reserved arm has rank $2$ and so on, i.e.,  the player on $j^{th}$ smallest index of the reserved arm is given rank $j-1$.

\subsubsection{Adaptive Learning}
In the next phase, the Leader uses the {\em AdaptiveExplore} sub-routine to find the top $N$ arms while the non-Leaders check for a signal from the Leader by transmitting on their reserved arms periodically. Periodic transmissions of all players in the non-Leaders set are designed such that only one player among the non-Leaders transmits at a time. It helps the Leader to inform the non-Leaders which arms they should occupy when she has completed the task of identifying the top $N$ arms. We first describe a method for the Leader to learn the top $N$ arms and then describe a method how she can inform the Non-Leaders which arms to occupy. The Leader can use any pure exploration algorithm \citep{COLT13_Bandit_Subset_Selection,ICML12_PAC_Subset_Selection, COLT2013_LiLUCB_JamiesonMalloyNowakBubeck} to find the top $N$-arms.  We will adapt the KL-LUCB algorithm given in \cite{COLT13_Bandit_Subset_Selection} to our scenario as it has the best-known performance guarantees. 

\begin{algorithm}[!ht]
	\caption*{\textbf{Sub-routine:} {\em AdaptiveExplore}}
	\label{alg:kl-lucb}
	\begin{algorithmic}[1]
		\State Input : $K,\; N \le K,\; \epsilon\geq 0,\; \delta \in (0,1/2], t=1$
		\State Initialize: $\alpha = 1.1,\; k_1 = 505.5,\; B(1)=\infty$ 
		\State Play each arm once. Compute $U_a(1), L_a(1) \; \forall  \; a\in[K]$
		\While {$B(t)>\epsilon/N$}
		\State Compute $u_t$ and $l_t$ using \eqref{equ:arm_selection} and play arm $u_t$ and $l_t$ in round-robin fashion. If no non-Leader is going to play selected arm then play it else play other arm
		\State Update $J(t)$, $U_a(t)$ and $L_a(t)\; \forall a \in [K]$ using \eqref{equ:KLBounds}
		\State  $B(t) \leftarrow U_{u_t}-L_{l_t}, t \leftarrow t+2$
		\EndWhile
		\State $T_{AE} \leftarrow  t$
		\State Return $J(T_{AE}),\; T_{AE}$ 
	\end{algorithmic}
\end{algorithm}

AdaptiveExplore takes $(K,N,\epsilon, \delta)$ as input and maintains a set $J(t)$ of the $N$ arms with highest empirical mean rewards in each round $t$. $U_a(t)$ and $L_a(t)$ represent the upper and lower confidence bounds on a mean reward $\mu_a$ of arm $a$. These bound are computed using empirical mean reward $\hat \mu_a(t)$ of arm $a$ at time $t$ as follows:
\begin{align}
	U_a(t) :=& \max\{q\in[\hat\mu_a(t),1]: N_a(t)d(\hat\mu_a(t), q) \le \beta(t,\delta)\} \nonumber\\
	L_a(t) :=& \max\{q\in[0,\hat\mu_a(t)]: N_a(t)d(\hat\mu_a(t), q) \le \beta(t,\delta)\} \nonumber\\
	\quad \mbox{where } &\beta(t,\delta) = \log\left(\frac{k_1Ks^\alpha}{\delta}\right) + \log \log\left(\frac{k_1Ks^\alpha}{\delta}\right), \nonumber\\
	\alpha > 1,\; &k_1 > 2e + 1+ \frac{e}{\alpha-1} +  \frac{e+1}{(\alpha-1)^2}, s=\ceil{t/2} ,\label{equ:KLBounds}
\end{align}
and $d(p,q)$ is the Kullback-Leibler divergence (differential entropy) between two Bernoulli distributions with parameter $p$ and $q$, and $\delta$ is the fixed confidence.

\noindent
In round $t$, AdaptiveExplore selects two arms $u_t$ and $l_t$ such that 
\begin{align}
	\label{equ:arm_selection}
	u_t = \arg\max_{j\notin J(t)} U_j(t) \mbox{ and } l_t = \arg\min_{j\in J(t)} L_j(t).
\end{align}
AdaptiveExplore finds $(\epsilon,\delta)$-optimal allocation if it terminates after: 
\begin{align}
	\label{equ:stoppingCriteria}
	\setlength{\fboxsep}{5pt}
	\fbox{$U_{u_t}(t) - L_{l_t}(t) < \epsilon/N$}
\end{align}
where $\epsilon \ge 0$ is fixed tolerance. Its proof follows from Theorem 1 of  \cite{ICML12_PAC_Subset_Selection} by setting tolerance equal to $\epsilon/N$. 

AdaptiveExplore selects two arms in each alternative rounds, and they are played over two slots in a round-robin fashion until stopping criteria in \cref{equ:stoppingCriteria} is not met. In each round, before playing an arm, the Leader checks that none of the non-Leaders will play that arm in that round. If any non-Leader is going to play the selected arm (due to periodic signals), Leader plays the other arm to avoid any collision. The periodic signaling by the non-leaders is designed such that they play only one arm in any round.

When AdaptiveExplore terminates, it returns the top $N$ arms and number of rounds $(T_{AE})$ it takes to terminate.  Next, Leader does a one-to-one mapping of top $N$ arms to the $N$ players. This mapping can be done randomly or by using an injective function.  In our implementation, we assign the top arm to the player with the best rank and the next best arm to the player with next best rank and so on. The Leader takes the $N^{th}$ best arm. After mapping arms to players, Leader informs other players which arm they should occupy using collisions based communication protocol. 

Note that mapping of top $N$ arms to players can be easily adapted to be `fair' allocation amongst players so that all the players have same average reward asymptotically.  For example, after the Leader communicates the allocations, they can sequentially hop on the $N$ arms, which ensures no collision and optimal allocation in every round.

\subsubsection{Communication}
While the Leader explores the arms to find the top $N$ arms, others play their reserved arms periodically. Each non-Leader plays her reserved arm based on her rank. A player with better rank plays earlier than the higher ranked players. Specifically, a non-Leader ranked $r$ plays her reserved arm for the $m^{th}$ time in the round $P(r, m)$ given by 
\begin{align*}
	&P(r, m) = mT_P+ (r - 1) B_K\\
	&\quad\mbox{where } B_K = (\ceil{\log_2(K)} + 1) \mbox{  and  }T_P \ge (N-1)B_K.
\end{align*}
Once AdaptiveExplore sub-routine is initiated by the Leader, the player with rank 1 plays her reserved arm once after $T_P$ rounds, and then the player with next rank (i.e., 2) plays her reserved arm. The value of $T_P$ is chosen such a way that all players can play their reserved arm and receive arm information from Leader within it, therefore, $T_P\ge (N-1)(\ceil{\log_2K} + 1)$.  This signaling strategy enables the Leader to transmit arm information to a player immediately after she finishes it for the previous player and continues until every player gets their mapped arm from the Leader. 

After termination of AdaptiveExplore sub-routine, i.e., at time $T_{AE}$, Leader computes the next time when a player with rank $r$ will play her reserved arm, denoted $P_L(r)$, as $P_L(r)=P(r, \tau)$ if $P(r, \tau) > T_{AE}$, otherwise $P_L(r)=P(r, \tau + 1)$ where $\tau = \floor{T_{AE}/T_P}$. When the Leader wants to send arm information to the player with rank $r$, she will play the reserved arm of that player in round $P_L(r)$. The occurrence of the first collision notifies the player that she is going to get information of the arm to occupy from the Leader and starts playing her reserved arm for $\ceil{\log_2K}$ rounds.  As the index of each arm is uniquely represented by the binary number of length $\ceil{\log_2K}$,  these many rounds are sufficient to convey the arm index. The Leader informs a non-Leader  the index of the arm to occupy by colliding with her according to the binary sequence of the arm's index over $\ceil{\log_2K}$ rounds. Leader conveys bit `$1$' by causing a collision on her reserved arm, while a bit `$0$' is conveyed if no collision occurs. With such a collision pattern, all players get to know which arms they should occupy in the subsequent rounds.

\subsection{Analysis of \normalfont{DLC}}
We prove sample complexity, regret and collision bounds of \ref{DLC} algorithm using the following lemmas which give expected number of rounds for players to 1) orthogonalize 2) learn the number of players in the system with their ranks 3) find top $N$ arms and 4) arm assignment through signaling.

\begin{thm}
	\label{thm:dlc_results} 
	Let arms be ordered in descending order of their mean rewards and set $c = (\mu_{N} + \mu_{N+1})/2$ where $c \notin \{0,1\}$. For any $\epsilon \ge 0$ and $\delta \in (0,1/2]$ the sample complexity of {\normalfont{\ref{DLC}}} is bounded with probability at least $1-2\delta$ as
	\begin{align*}
	T(DLC) \le &   \ceil{\frac{\log(\delta/K)}{\log\left(1-\frac{1}{4K}\right)}} + 2K-1 + C_0(\alpha)H_{\epsilon,c} \\
	& \log\left(\frac{k_1K(H_{\epsilon,c})^\alpha}{\delta}\right) + N(\ceil{log_2(K) + 1}
	\end{align*}
	where 	$H_{\epsilon,c} := \sum_{a\in [K]} \frac{1}{\max\{d(\mu_a,c), \epsilon^2/2\}}$, $k_1 > 2e + 1+ \frac{e}{\alpha-1} +  \frac{e+1}{(\alpha-1)^2}$ with $\alpha>1$ and $C_0(\alpha)$ is a problem independent constant.
	Further, the total collisions in {\normalfont{\ref{DLC}}} is bounded as	
	\begin{align*}
	C(DLC) \hspace{-0.5mm}\le\hspace{-0.5mm} N\hspace{-0.65mm} \ceil{\hspace{-0.75mm} \frac{\log(\delta/K)}{\log\left(1-\frac{1}{4K}\right)}\hspace{-0.75mm}} \hspace{-0.5mm}+\hspace{-0.5mm} (\hspace{-0.35mm} N \hspace{-0.5mm} -\hspace{-0.5mm} 1\hspace{-0.35mm} )\hspace{-0.75mm} \left(\hspace{-0.65mm} \frac{N}{2} \hspace{-0.5mm}+\hspace{-0.5mm} \ceil{\log_2K} \hspace{-0.5mm}+\hspace{-0.65mm} 1\hspace{-0.75mm}\right)\hspace{-0.75mm}.
	\end{align*}
\end{thm}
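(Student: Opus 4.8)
\textbf{Proof proposal for Theorem~\ref{thm:dlc_results}.}

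The plan is to bound the four phases of \ref{DLC} separately and then sum them, tracking both the number of rounds (for the sample complexity) and the collisions (for the collision bound), each on the appropriate high-probability event. First I would analyze the \emph{Orthogonalization} phase: each player hops uniformly at random until a collision-free round on her arm. The standard argument (as in the musical-chairs analysis) shows that in any round a given player fixes on a free arm with probability at least $1/(4K)$ regardless of the others' configuration, so after $T_{RP}$ rounds the probability that some player has not orthogonalized is at most $K(1-1/(4K))^{T_{RP}}$; setting this $\le \delta$ gives $T_{RP}=\lceil \log(\delta/K)/\log(1-1/(4K))\rceil$, which is the first term. On this event (probability $\ge 1-\delta$) each player has a distinct reserved arm, and each player incurs at most one collision per round during these $T_{RP}$ rounds, contributing at most $N T_{RP}$ to $C(DLC)$ — the first term of the collision bound.

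Next I would handle \emph{Player Indexing}: this phase is deterministic and lasts exactly $T_{RP}+2K-1$ rounds by construction (a player with reserved arm $i$ plays $i$ for $2i$ rounds, hops for $K-i$ rounds, then plays $i$ for $K-i-1$ rounds, totalling $2i+(K-i)+(K-i-1)=2K-1$). The delayed-hopping design guarantees player $i$ collides with player $j>i$ exactly at round $T_{RP}+i+j$, so every player recovers $N$ and all reserved arms with certainty; hence $2K-1$ rounds and the deterministic collision count $\sum_{i<j\le N}1=\binom{N}{2}=N(N-1)/2$ collisions — this is the $(N-1)N/2$ piece of the collision bound. Then for \emph{Adaptive Learning} I invoke the KL-LUCB guarantee: applying Theorem~1 / the sample-complexity bound of \citet{COLT13_Bandit_Subset_Selection} with confidence $\delta$ and tolerance $\epsilon/N$, and with $c=(\mu_N+\mu_{N+1})/2$ being the separating threshold, the \emph{AdaptiveExplore} sub-routine terminates in at most $C_0(\alpha)H_{\epsilon,c}\log\!\big(k_1K(H_{\epsilon,c})^\alpha/\delta\big)$ rounds with probability $\ge 1-\delta$, where $H_{\epsilon,c}=\sum_{a}1/\max\{d(\mu_a,c),\epsilon^2/2\}$; on this event it returns the correct top-$N$ set, giving an $(\epsilon,\delta)$-optimal allocation. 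Crucially, in this phase the Leader defers whenever a non-Leader is about to signal, so the Leader causes no collisions here; the only collisions are the deliberate signals, which I account for in the last phase. Finally the \emph{Communication} phase: the Leader conveys each of the $N-1$ non-Leaders' target arms using a binary pattern of length $\lceil\log_2 K\rceil$ preceded by one collision to initiate, and the periodic schedule $P(r,m)$ with $T_P\ge(N-1)B_K$ ensures each non-Leader is served within one period $T_P$ of the previous one; this adds $N(\lceil\log_2 K\rceil+1)$ rounds (in fact at most $(N-1)(\lceil\log_2K\rceil+1)$ plus alignment slack, absorbed into the stated bound) and at most $(N-1)(\lceil\log_2K\rceil+1)$ collisions. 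Summing the four round-counts and the four collision-counts, and applying a union bound over the two failure events (orthogonalization and KL-LUCB), gives the claimed bounds with probability at least $1-2\delta$.

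The main obstacle I expect is the \emph{bookkeeping of collisions in the Adaptive Learning phase and the alignment of the periodic signaling windows}. One must argue carefully that the Leader's collision-avoidance rule (play the other of $\{u_t,l_t\}$ whenever a non-Leader is scheduled) never forces a collision, which requires that at most one non-Leader transmits per round — this follows from $T_P\ge(N-1)B_K$ and the staggered offsets $(r-1)B_K$, but needs to be stated precisely. There is also a subtlety that swapping $u_t$ for $l_t$ (or vice versa) in a given round must not break the KL-LUCB sampling schedule; since the algorithm plays both arms over two slots in round-robin, a one-slot reordering only shifts the schedule by a bounded amount and does not affect the termination guarantee, but this "only a constant shift" claim should be justified. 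A secondary technical point is verifying $c\notin\{0,1\}$ is needed only so that $d(\mu_a,c)$ is finite and the KL-LUCB confidence bounds are well defined; since $\mu_N>\mu_{N+1}$ (arms have distinct means) we have $c\in(0,1)$ as long as not all arm means are $0$ or $1$, which is the stated hypothesis. Once these points are nailed down, the rest is the straightforward addition of the four terms.
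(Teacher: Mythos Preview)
Your proposal is correct and follows essentially the same approach as the paper: decompose $T(DLC)$ and $C(DLC)$ additively over the four phases, bound each phase separately (your bounds match the paper's Lemmas~\ref{lma:OrthoArms}--\ref{lma:signaling} exactly), and take a union bound over the two $\delta$-failure events. The subtleties you flag about the Leader's collision-avoidance rule and the round-robin reordering of $u_t,l_t$ are indeed glossed over in the paper as well, so your treatment is if anything more careful.
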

The proof follows by bounding the number of rounds required to complete each phase and the number of collisions incurred in these phases. These bounds are given by the following lemmas whose proofs are deferred to \cref{sec:mpb_appendix}.
\begin{lem}
	\label{lma:OrthoArms}
	Let $\delta$ be same as in Theorem \ref{thm:dlc_results}. All players will orthogonalize with probability at least $1-\delta$ in Orthogonalization phase within $T_{RP}$ rounds where
	\begin{align*}
	T_{RP}:= \ceil{\frac{\log(\delta/K)}{\log\left(1-\frac{1}{4K}\right)}}.
	\end{align*}
	The number of collisions in the Orthogonalization phase is at most $C_{RP} \le NT_{RP}$.
\end{lem}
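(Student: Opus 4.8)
The plan is to analyze the random-hopping process from the perspective of a single fixed player and bound the probability that she has not yet secured a collision-free transmission by round $T_{RP}$. First I would observe that in any given round, conditioned on the past, the player who is not yet "settled" picks one of the $K$ arms uniformly at random. A clean way to proceed is to lower-bound, uniformly over the configuration of the other players, the probability that a given unsettled player transmits successfully (i.e., alone) on her chosen arm in that round. Since there are at most $N\le K$ players, at most $N-1\le K-1$ arms are occupied by settled players, and among the unsettled players each picks uniformly; a standard worst-case counting argument shows the success probability in a round is at least some constant of order $1/K$. The excerpt's bound $\log(1-1/(4K))$ in the denominator signals that the intended constant is $1/(4K)$, so I would aim to show $\Pr\{\text{success in a round}\}\ge \tfrac{1}{4K}$ for every unsettled player regardless of what the others do — e.g. by noting the chosen arm is free with probability at least $(K-N+1)/K\ge 1/K$ (crudely, using $N\le K$ one still needs a more careful bound like $1/2$ here when $N$ is close to $K$, but combined with the probability no other unsettled player collides on it, a factor $1/4K$ is safely attainable; the precise constant-chasing is routine).

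Next I would make the per-round success probabilities into a bound over $T_{RP}$ rounds. Since the events "player $n$ fails to settle in round $t$" each have conditional probability at most $1-\tfrac{1}{4K}$ given the history, the probability that player $n$ is still unsettled after $T_{RP}$ rounds is at most $\left(1-\tfrac{1}{4K}\right)^{T_{RP}}$. Applying a union bound over the $N\le K$ players, the probability that \emph{some} player is unsettled after $T_{RP}$ rounds is at most $K\left(1-\tfrac{1}{4K}\right)^{T_{RP}}$. Setting this $\le\delta$ and solving for $T_{RP}$ gives $T_{RP}\ge \dfrac{\log(\delta/K)}{\log(1-1/(4K))}$, which matches the stated value after taking the ceiling (note both numerator and denominator are negative, so the quotient is positive). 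Hence with probability at least $1-\delta$ all players are on distinct arms by the end of the phase, and the last arm each plays is her reserved arm.

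For the collision count, I would simply note that the Orthogonalization phase lasts $T_{RP}$ rounds, in each round a player contributes at most one collision (her collision indicator $\eta_{n,t}\in\{0,1\}$), and there are $N$ players, so $C_{RP}=\sum_{t=1}^{T_{RP}}\sum_{n\in[N]}\eta_{n,t}\le N T_{RP}$.

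The main obstacle is the first step: establishing the uniform-in-adversary lower bound of order $1/K$ on a single unsettled player's per-round success probability, and in particular pinning down that the constant $1/4$ (rather than something that degrades when $N\to K$) is correct. One must handle the worst case where nearly all arms are occupied by settled players and several unsettled players are competing for the few remaining arms; the bound needs the free arm to be chosen \emph{and} no other unsettled player to pick it, and showing the product stays $\ge 1/(4K)$ requires the careful (but elementary) counting argument that the number of unsettled players competing for a free arm is itself small relative to the number of free arms. Everything after that is a union bound and arithmetic.
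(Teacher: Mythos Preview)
Your proposal is correct and follows essentially the same route as the paper: lower-bound a single unsettled player's per-round success probability by $1/(4K)$, use the resulting geometric tail, and union-bound over the (at most $K$) players; the collision bound is the same trivial counting argument. For the step you flag as the obstacle, the paper's computation is simply that a fixed unsettled player picks some arm free of settled players with probability at least $1/K$, and the remaining $N_r-1\le N-1$ unsettled players each avoid it independently with probability $1-1/K$, giving $1-p_c \ge (1/K)(1-1/K)^{N-1}\ge (1/K)(1-1/K)^{K-1}\ge 1/(4K)$ for all $K>1$; this is exactly the ``routine constant-chasing'' you anticipated.
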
  

\begin{lem}
	\label{lma:IndexPlayer}
	In the Player's Indexing phase, each player learns about number of players in the network, their reserved arm, and rank  in $T_{IP}$ rounds where  $T_{IP} := 2K-1.$ During this phase, the number of collisions is $C_{IP} := N(N-1)/2.$
\end{lem}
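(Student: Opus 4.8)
\textbf{Proof plan for Lemma \ref{lma:IndexPlayer}.}

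The plan is to analyze the \emph{Player Indexing} phase directly by tracking the deterministic collision pattern it induces. Recall from the description of the phase that a player with reserved arm $i$ first plays arm $i$ for $2i$ rounds (starting from the beginning of this phase, i.e.\ from round $T_{RP}+1$), then sequentially hops for $K-i$ rounds, and then plays arm $i$ again for $K-i-1$ rounds. First I would set up a clean time index for the phase: let round $1$ of the phase correspond to global round $T_{RP}+1$, and describe, for each player with reserved arm $i$, which arm it occupies in each local round $s$. The key structural fact is that during the ``delayed start'' the player occupies arm $i$ for local rounds $1,\dots,2i$, then at local round $2i+s'$ (for $s'=1,\dots,K-i$) it occupies arm $i+s'$, and then it returns to arm $i$.

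The central step is to verify the claim made in the text: the player with reserved arm $i$ collides with the player with reserved arm $j$, where $i<j$, exactly once, and this collision occurs at local time $i+j$. I would check this by computing, at local time $s$, the arm occupied by player $i$ and by player $j$. Player $j$ is still in its ``hold'' segment while $s\le 2j$, i.e.\ it sits on arm $j$; player $i$ has entered its hopping segment once $s>2i$, and at local time $s$ with $2i<s\le 2i+(K-i)=K+i$ it is on arm $i+(s-2i)=s-i$. Setting $s-i=j$ gives $s=i+j$; since $i<j$ we have $2i<i+j\le 2j$ (the right inequality because $i+j\le 2j$ and $i+j\le K+i$ since $j\le K$), so both players are indeed on arm $j$ at exactly that local round, and one checks there is no other local round at which their occupied arms coincide (player $i$ never returns to a high-index arm, and before local round $2i$ player $i$ is on arm $i<j$ while player $j$ is on arm $j$). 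This gives each ordered pair $\{i,j\}$ with $i<j$ exactly one collision, so the total number of collisions in the phase is $\binom{N}{2}=N(N-1)/2$, proving the stated $C_{IP}$.

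Next I would argue the information-recovery claim: from the set of local rounds in which it experienced a collision, together with whether the collision happened during its own ``hold'' segment or its own ``hop'' segment, each player can reconstruct the full set of reserved arms in use (hence $N$) and therefore also its own rank. Concretely, a player with reserved arm $i$ sees a collision at local round $i+j$ for every \emph{other} reserved arm $j$; the rounds $i+j$ with $j>i$ occur while it is hopping, and the rounds $i+j$ with $j<i$ occur while it is holding on arm $i$ (here I'd double-check the symmetric case $j<i$: player $j$ is then hopping and reaches arm $i$ at local round $i+j$, while player $i$ is still holding since $i+j\le 2i$). From each collision round it subtracts $i$ to recover the other player's reserved arm index; the total count of distinct such indices plus one yields $N$; and ranking all reserved indices in increasing order yields the player's rank, with the smallest index designated Leader. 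Finally, the phase length $T_{IP}$ is simply the maximum over players of ($2i + (K-i) + (K-i-1)$) $=2K-1$, which is attained and is the same for every $i$, so $T_{IP}=2K-1$ rounds suffice and are used; the $K-i-1$ trailing ``hold'' rounds pad all schedules to equal length so the phase ends synchronously.

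The main obstacle I anticipate is the bookkeeping around boundary cases of the collision pattern — in particular making fully rigorous that the collision between players $i$ and $j$ happens \emph{only} at local round $i+j$ and not, say, during the trailing hold segment or at the start, and that no two hopping players ever collide with each other prematurely. This requires carefully enumerating the three segments (lead hold, hop, trailing hold) for both players and checking all $3\times3$ segment combinations for overlap; it is routine but must be done with care, and it is where an off-by-one error could creep in. Everything else (the counting to get $N(N-1)/2$ and $2K-1$) then follows immediately.
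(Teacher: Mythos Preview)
Your proposal is correct and follows essentially the same approach as the paper: both analyze the deterministic collision pattern induced by the staggered-hop schedule and count one collision per unordered pair of players to obtain $C_{IP}=\binom{N}{2}=N(N-1)/2$, and both read off $T_{IP}=2K-1$ from the schedule length. Your version is more careful---you explicitly verify that the unique collision between players with reserved arms $i<j$ occurs at local round $i+j$ and sketch the segment-by-segment uniqueness check and the information-recovery argument---whereas the paper's proof simply asserts that the rank-$j$ player sees $N-j$ collisions during its hop and sums $\sum_{j=1}^N(N-j)$, without spelling out the timing or uniqueness.
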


\begin{lem}
	\label{lma:samples_AE}
	Assume conditions  in Theorem \ref{thm:dlc_results}  hold. Then with probability at least $1-\delta$,  AdaptiveExplore terminates with $(\epsilon,\delta)$-optimal allocation after $T_{AE}$ rounds where
	\begin{align*}
	T_{AE} &\le C_0(\alpha)H_{\epsilon,c}\log\left(\frac{k_1K(H_{\epsilon,c})^\alpha}{\delta}\right)
	\end{align*}
\end{lem}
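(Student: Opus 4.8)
\textbf{Proof proposal for Lemma \ref{lma:samples_AE}.}

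The plan is to reduce the problem to the known PAC guarantee for the KL-LUCB-type algorithm (specifically Theorem~1 of \citet{ICML12_PAC_Subset_Selection} for the correctness and the sample-complexity analysis of \citet{COLT13_Bandit_Subset_Selection} for the round count), since \emph{AdaptiveExplore} is precisely that algorithm run by the Leader with $N$ substituted for the ``top-$m$'' parameter and tolerance $\epsilon/N$ in place of the usual $\epsilon$. The key observation that makes this reduction legitimate is that, although the Leader occasionally defers to the other players (she plays the ``other'' of the two candidate arms $u_t,l_t$ in a round where a non-Leader is scheduled to signal), both candidate arms are still sampled within the pair of slots allotted to that iteration; the round-robin-over-two-slots structure guarantees each of $u_t$ and $l_t$ gets exactly one collision-free pull per iteration. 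Hence the counters $N_a(t)$, the empirical means $\hat\mu_a(t)$, and therefore the confidence bounds $U_a(t),L_a(t)$ evolve exactly as in the single-player KL-LUCB run; only the wall-clock index $t$ is inflated by a factor $2$, which is already absorbed into $s=\lceil t/2\rceil$ in \eqref{equ:KLBounds} and into the ``$t\leftarrow t+2$'' update.

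First I would record correctness: by the stopping rule \eqref{equ:stoppingCriteria}, when \emph{AdaptiveExplore} halts we have $U_{u_t}(t)-L_{l_t}(t)<\epsilon/N$, and Theorem~1 of \citet{ICML12_PAC_Subset_Selection} (applied with subset size $N$ and tolerance $\epsilon/N$) yields that $J(T_{AE})$ is an $\epsilon/N$-correct top-$N$ set with probability at least $1-\delta$; summing the per-arm slack over the $N$ returned arms gives $\sum_{n\in[N^\star]}\mu_n-\sum_{a\in J(T_{AE})}\mu_a\le N\cdot(\epsilon/N)=\epsilon$, i.e.\ an $(\epsilon,\delta)$-optimal allocation. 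Second, I would invoke the sample-complexity bound of \citet{COLT13_Bandit_Subset_Selection}: on the same high-probability event the number of iterations is at most $C_0(\alpha)\,H_{\epsilon/N,c}'\log\!\big(k_1 K (H'_{\epsilon/N,c})^\alpha/\delta\big)$, where the complexity term is governed by the gaps of each arm to the separating threshold $c=(\mu_N+\mu_{N+1})/2$. The precise packaging into $H_{\epsilon,c}=\sum_{a\in[K]}1/\max\{d(\mu_a,c),\epsilon^2/2\}$ comes from Pinsker's inequality $d(\mu_a,c)\ge 2(\mu_a-c)^2$ together with the fact that the algorithm never needs to resolve an arm to accuracy finer than the tolerance; I would cite this as the statement of the Lemma rather than re-derive the constant $C_0(\alpha)$, which is problem-independent and inherited verbatim.

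The main obstacle — and the only place the analysis genuinely departs from the cited single-player results — is justifying that the Leader's occasional ``play the other arm'' step does not stall or bias the exploration. The cleanest way is to argue that in every iteration of the \texttt{while} loop, because the periodic signalling schedule of the non-Leaders is designed so that at most one non-Leader transmits in any given round (this is exactly the role of $T_P\ge(N-1)B_K$ in the Communication phase), at most one of the two slots allotted to $(u_t,l_t)$ is ``blocked''; so the Leader simply pulls the blocked arm in the other slot, and both arms receive one clean sample. Thus the iteration count is unchanged and the wall-clock time is at most twice the iteration count plus a negligible additive term. Once this invariant is stated, the bound on $T_{AE}$ follows immediately from the cited theorems, and the failure probability $\delta$ is the single union bound already present in the KL-LUCB analysis.
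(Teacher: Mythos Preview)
Your proposal is correct and takes essentially the same approach as the paper: reduce \emph{AdaptiveExplore} to KL-LUCB and invoke the sample-complexity bound of \citet{COLT13_Bandit_Subset_Selection}, absorbing the one-arm-per-round slowdown via $t=2s$. The paper's own proof is a single sentence doing exactly this; your additional discussion of correctness (via Theorem~1 of \citet{ICML12_PAC_Subset_Selection}) and of why the non-Leaders' periodic signalling does not disturb the Leader's sampling is more careful than the paper, which silently assumes both points, but the core argument is identical.
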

where $\alpha>1$$,  C_0(\alpha)$ is a problem independent constant satisfying $C_0(\alpha) \ge \alpha\log(C_0(\alpha)) + 1 + \alpha/e,k_1 > 2e + 1+ \frac{e}{\alpha-1} +  \frac{e+1}{(\alpha-1)^2}$, and $H_{\epsilon,c} := \sum_{a\in [K]} \frac{1}{\max\{d(\mu_a,c), \epsilon^2/2\}}$.
\begin{proof}
	KL-LUCB algorithm samples two arms in each round whereas AdaptiveExplore sub-routine samples one arm. Using $t = 2s$ in the Theorem 3 of \cite{COLT13_Bandit_Subset_Selection}, we achieve stated bound.
\end{proof}


\begin{lem}
	\label{lma:signaling}
	Let $T_P = (N-1)(\ceil{\log_2K} + 1)$. Then the number of rounds to terminate the Communication phase ($T_{CP}$) and the number of collisions incurred ($C_{CP}$) are bounded as
	\begin{align*}
	T_{CP} &\le N(\ceil{log_2(K)} + 1),\\
	C_{CP} &\le (N - 1)(\ceil{\log_2K} + 1). 
	\end{align*} 
\end{lem}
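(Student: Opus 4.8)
\textbf{Proof plan for Lemma~\ref{lma:signaling}.}
The plan is to trace through the \emph{Communication} phase step by step and extract, separately, the worst-case number of rounds $T_{CP}$ and the worst-case number of collisions $C_{CP}$. Recall the setup from the \emph{Adaptive Learning} and \emph{Communication} descriptions: each non-Leader has a rank $r \in \{1,\dots,N-1\}$, plays its reserved arm for the $m^{th}$ time at round $P(r,m)=mT_P+(r-1)B_K$ with $B_K=\ceil{\log_2 K}+1$, and the Leader, once \emph{AdaptiveExplore} has terminated at round $T_{AE}$, conveys the index of the assigned arm to the rank-$r$ player by colliding with its reserved arm at round $P_L(r)$ and then causing a collision pattern of length $\ceil{\log_2 K}$ encoding the arm's binary index. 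The key structural fact is that $T_P\ge (N-1)B_K$ is chosen precisely so that the $N-1$ communication windows of length $B_K$ (one collision to flag, then $\ceil{\log_2 K}$ rounds to encode) fit disjointly inside one period of length $T_P$; so the whole communication is completed within the next period after $T_{AE}$.

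First I would bound $T_{CP}$. After \emph{AdaptiveExplore} stops, the Leader waits until the next scheduled transmission of the rank-$1$ player, at which point the flagging/encoding for rank $1$ begins; by construction this and the windows for ranks $2,\dots,N-1$ are back-to-back within a single period, and the Leader itself occupies a window of length $B_K$ to finish assigning (or equivalently, the accounting of $N$ players each needing a window of at most $B_K=\ceil{\log_2 K}+1$ rounds). Summing the $N$ windows gives $T_{CP}\le N(\ceil{\log_2 K}+1)$, which is exactly the claimed bound; the point to verify carefully is that the choice $T_P=(N-1)B_K$ (and the definition of $P_L(r)$ as the first scheduled slot strictly after $T_{AE}$) guarantees no window spills over into an extra period, so the factor is $N$ and not $2N$.

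Next I would bound $C_{CP}$. Here only the non-Leaders' windows generate collisions: for each of the $N-1$ non-Leaders, the Leader causes exactly one collision to flag the start, and then at most $\ceil{\log_2 K}$ collisions during the binary encoding (a collision for each bit equal to $1$, at most $\ceil{\log_2 K}$ of them, with $0$ encoded by the absence of a collision). Hence each non-Leader contributes at most $\ceil{\log_2 K}+1$ collisions, and the Leader's own window contributes none, giving $C_{CP}\le (N-1)(\ceil{\log_2 K}+1)$. (If one counts a collision once per participating player, the flag/bit collisions each involve exactly the Leader and one non-Leader, so the "multiple counting" inflates nothing beyond the stated constant — I would state this explicitly to match the collision-counting convention $C(\pi)=\sum_t\sum_n\eta_{n,t}$.)

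The main obstacle — really the only subtlety — is making the disjointness-and-timing argument airtight: one must check that regardless of the exact value of $T_{AE}$, the phase offsets $(r-1)B_K$ together with the definition $P_L(r)=P(r,\tau)$ or $P(r,\tau+1)$ with $\tau=\floor{T_{AE}/T_P}$ place all $N-1$ windows, plus the Leader's assignment step, inside a single block of length at most $N B_K$ with no overlaps among the non-Leaders and no collision between a non-Leader signal and the Leader's ongoing exploration (this last point is handled by the rule in \emph{AdaptiveExplore} that the Leader plays the alternate of $\{u_t,l_t\}$ whenever a non-Leader is scheduled). Once that bookkeeping is pinned down, both inequalities follow by simply adding up window lengths, so I do not expect to need any probabilistic argument in this lemma.
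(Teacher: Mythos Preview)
Your proposal is essentially correct and follows the same bookkeeping approach as the paper's proof: count $B_K=\ceil{\log_2 K}+1$ rounds per communication window and sum, with one flag collision plus at most $\ceil{\log_2 K}$ encoding collisions per non-Leader. One small point to sharpen on $T_{CP}$: the extra $B_K$ that takes you from $(N-1)B_K$ to $N\,B_K$ is not a ``Leader window'' (the Leader needs no signaling for herself) but the worst-case \emph{waiting time}---if \emph{AdaptiveExplore} terminates just after some non-Leader's scheduled slot, the Leader must wait a full period $T_P=(N-1)B_K$ (during which she handles the other $N-2$ non-Leaders) and then spend one final $B_K$ on that last player; this is exactly the paper's argument, and your ``waits until the next scheduled transmission'' remark already points at it.
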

We now return to the proof of Theorem \ref{thm:dlc_results}. $T(DLC)$ is the total rounds from the beginning of Orthogonalization phase to end of Communication phase, i.e.,
\begin{align}
\label{equ:sample_comp}
T(DLC) = T_{RP} + T_{IP} + T_{AE} + T_{CP}.
\end{align}	
Substituting the bounds on each of these terms  from Lemmas \ref{lma:OrthoArms}, \ref{lma:IndexPlayer}, \ref{lma:samples_AE} and \ref{lma:signaling} in \eqref{equ:sample_comp} we get the bound.

Similarly, the total number of collisions is the sum of that occurred during Orthogonalization phase, Player Indexing phase, and Communication phase. Note that there is no collision in the Adaptive Learning phase. Therefore, the total collisions of \ref{DLC} are given by
\begin{align}
\label{equ:totalCollisions}
C(DLC) = C_{RP} + C_{PI} + C_{CP}.
\end{align}
Substitute the number of collisions in different phases from Lemmas \ref{lma:OrthoArms}, \ref{lma:IndexPlayer}, and \ref{lma:signaling} in \eqref{equ:totalCollisions} we get the stated bound. We next bound the regret of DLC.
\begin{thm}
	\label{cor:regretDLC}
	Let the conditions in Theorem \ref{thm:dlc_results} hold. Then with probability at least $1-2\delta$, the regret of {\normalfont{\ref{DLC}}} is bounded by
	\begin{align*}
	\Regret (DLC) \le &NT(DLC).
	\end{align*}
\end{thm}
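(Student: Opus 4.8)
The plan is to bound the regret of \ref{DLC} by the trivial observation that regret accrues only during the phases preceding the settled state, and that once all players have received their arm assignments, the allocation is $(\epsilon,\delta)$-optimal, so (conditioned on the good events) no further regret is incurred. Concretely, I would argue that in any round $t$ before time $T(DLC)$, each of the $N$ players contributes at most $1$ to the instantaneous regret, since mean rewards lie in $[0,1]$ and the gap $\sum_{n\in[N^\star]}\mu_n - \sum_{n\in[N]}\mu_{\pi_t(n)}(1-\eta_{n,t})$ is at most $\sum_{n\in[N^\star]}\mu_n \le N$. Summing over the at most $T(DLC)$ rounds gives $\Regret(DLC)\le N\,T(DLC)$.

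The key steps, in order: first, invoke Lemma \ref{lma:OrthoArms} and Lemma \ref{lma:samples_AE} to note that the ``good events'' — all players orthogonalize in the Orthogonalization phase, and AdaptiveExplore returns an $(\epsilon,\delta)$-optimal set of top-$N$ arms — hold simultaneously with probability at least $1-2\delta$ by a union bound (this is exactly the probability statement already carried through Theorem \ref{thm:dlc_results}). Second, on this event, after the Communication phase completes at time $T(DLC)$, every player occupies a distinct arm from the top-$N$ set returned by AdaptiveExplore, so $\sum_{n\in[N^\star]}\mu_n - \sum_{n\in[N]}\mu_{\pi_t(n)}\le \epsilon$ and, since there are no collisions in the settled configuration, $\eta_{n,t}=0$; hence the instantaneous regret for all $t>T(DLC)$ is at most $\epsilon$, which contributes a bounded (indeed, for $\epsilon=0$, zero) total — and in the intended reading of ``constant regret with high probability'' this tail term is absorbed or taken to be zero. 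Third, bound the per-round regret in the transient phases by $N$ as above, and multiply by $T(DLC)$.

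The main obstacle is a bookkeeping subtlety rather than a deep difficulty: one must be careful that the bound $N\,T(DLC)$ is meant to capture only the transient regret, and that the statement implicitly treats the post-settlement per-round regret as negligible (it is exactly $0$ when $\epsilon=0$, and at most $\epsilon$ otherwise, so over a horizon $T$ it adds at most $\epsilon(T-T(DLC))$, which is the reason the theorem is phrased as a high-probability \emph{constant} regret in the $\epsilon=0$ regime). I would state this carefully: condition on the intersection of the good events from Lemmas \ref{lma:OrthoArms} and \ref{lma:samples_AE}, note the allocation is exactly optimal and collision-free thereafter when $\epsilon=0$, and conclude $\Regret(DLC)=\sum_{t=1}^{T(DLC)}(\text{per-round regret})\le N\,T(DLC)$ with probability at least $1-2\delta$, substituting the explicit bound on $T(DLC)$ from Theorem \ref{thm:dlc_results} if a fully explicit constant is desired. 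The only genuinely quantitative input needed is the per-round upper bound $\sum_{n\in[N^\star]}\mu_n\le N$, which is immediate from $\mu_k\le 1$.
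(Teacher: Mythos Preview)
Your proposal is correct and follows essentially the same approach as the paper: bound the per-round regret by $N$ using $\mu_k\in[0,1]$ and multiply by $T(DLC)$. The paper's proof is in fact briefer than yours---it simply states the per-round bound and multiplies, without your additional (but correct) discussion of the post-settlement regime and the union bound over the good events.
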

\begin{proof}
	As $\mu_i \in [0, 1]$ for arm $ i \in [K]$, the maximum regret each player can have for any round is 1. The maximum regret for any round is bounded by $N$. Hence the regret of {\normalfont{\ref{DLC}}} is upper bounded by $NT(DLC)$.
\end{proof}
\begin{cor}
	Let the conditions in Theorem \ref{thm:dlc_results} hold. The regret of $DLC$ is bound with probability $1-2\delta$ is bounded as
	\begin{equation*}
	\	Regret (DLC) = \mathcal{O}(NK \log (K/\delta)/\Delta^2 )
	\end{equation*}
	where $\Delta=\mu_N - \mu_{N+1}$ is the smallest sub-optimality gap.
\end{cor}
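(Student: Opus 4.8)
The plan is to combine the high-probability sample-complexity bound of \cref{thm:dlc_results} with the trivial per-round regret bound already exploited in \cref{cor:regretDLC}, and then simplify the sample complexity so that the $\Delta^2$ dependence becomes explicit. Since \cref{cor:regretDLC} gives $\Regret(DLC) \le N\,T(DLC)$ on the event (of probability at least $1-2\delta$) that all four phases complete as analyzed, it suffices to show $T(DLC) = \mathcal{O}\!\left(K\log(K/\delta)/\Delta^2\right)$ on that same event.

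First I would specialize \cref{thm:dlc_results} to the PAC tolerance $\epsilon = 0$, so the problem-dependent quantity reduces to $H_{0,c} = \sum_{a\in[K]} 1/d(\mu_a,c)$ with $c = (\mu_N+\mu_{N+1})/2$. The key structural observation is that $c$ sits exactly at the midpoint of $\mu_N$ and $\mu_{N+1}$: for any optimal arm $a \le N$ we have $\mu_a - c \ge \mu_N - c = \Delta/2$, and for any sub-optimal arm $a > N$ we have $c - \mu_a \ge c - \mu_{N+1} = \Delta/2$. Applying Pinsker's inequality $d(\mu_a,c) \ge 2(\mu_a-c)^2$ (\cref{fact:pinsker}) then yields $d(\mu_a,c) \ge \Delta^2/2$ for every $a\in[K]$, hence $H_{0,c} \le 2K/\Delta^2$.

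Next I would substitute this estimate into the four terms in \cref{thm:dlc_results}. The dominant term $C_0(\alpha) H_{0,c}\log\!\bigl(k_1 K H_{0,c}^\alpha/\delta\bigr)$ becomes $\mathcal{O}\!\bigl(\tfrac{K}{\Delta^2}(\log\tfrac{K}{\delta} + \log\tfrac{K}{\Delta^2})\bigr) = \mathcal{O}\!\bigl(\tfrac{K\log(K/\delta)}{\Delta^2}\bigr)$ after absorbing the lower-order $\log(1/\Delta)$ factor. The orthogonalization length $T_{RP} = \lceil \log(\delta/K)/\log(1-1/(4K))\rceil = \mathcal{O}(K\log(K/\delta))$, together with $T_{IP} = 2K-1 = \mathcal{O}(K)$ and $T_{CP} \le N(\lceil\log_2 K\rceil + 1) = \mathcal{O}(N\log K)$, are all $\mathcal{O}(K\log(K/\delta))$ and thus dominated by the KL-LUCB term, using $N\le K$ and $\Delta\le 1$ (so $1/\Delta^2\ge 1$). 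Summing gives $T(DLC) = \mathcal{O}(K\log(K/\delta)/\Delta^2)$, and multiplying by $N$ via \cref{cor:regretDLC} yields $\Regret(DLC) = \mathcal{O}(NK\log(K/\delta)/\Delta^2)$ with probability at least $1-2\delta$.

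The only real obstacle is the bookkeeping of which of the four phase-lengths dominates and being precise about the $\log(1/\Delta)$ term hidden inside the KL-LUCB complexity; the nontrivial structural input — that $H_{0,c}$ collapses to a single global gap $\Delta$ — is an immediate consequence of the midpoint choice of $c$ combined with Pinsker, and everything else follows directly from results already established in the excerpt.
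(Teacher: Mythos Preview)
Your proposal is correct and is exactly the natural derivation the paper expects: the corollary is stated in the paper without proof, immediately after \cref{cor:regretDLC}, and your argument (multiply $T(DLC)$ by $N$, then control $H_{\epsilon,c}$ via Pinsker at the midpoint $c$ so that every term $d(\mu_a,c)\ge \Delta^2/2$) is the intended route. Your own caveat about the residual $\log(1/\Delta)$ factor inside the KL-LUCB complexity is the only slack, and the paper's big-$\mathcal{O}$ statement absorbs it in the same way.
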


Note that the minimizing of the sample complexity leads to lower regret for \ref{DLC}. Further, the regret of \ref{DLC} is linearly depended on the number of players and number of arms.
%

\subsection{Regret of \normalfont{\ref{DLC}} vs MC}
In the \ref{DLC}, the number of rounds to complete all phases except Adaptive Learning phase are independent of the gap between the expected reward of the $N^{th}$ best arm and the $(N+1)$ best arm. AdaptiveExplore sub-routine of \ref{DLC} is adapted from KL-LUCB algorithm that has the best-known performance guarantees for identifying the best $N$ arms.

The regret of state-of-the-art MC algorithm \citep[Theorem 1]{ICML16_MultiplayerBandits_RosenkiShamir} is bounded by $O\left({N^2K\log(K^2/2\delta)}/{\epsilon^2_n}\right)$, with probability at least $1-2\delta$ where $\epsilon_n$ is the lower bound on the gap between the mean reward of the $N^{th}$ best arm and the $N + 1$ best arm, i.e.,  $\epsilon_n < \Delta$. The value of $\epsilon_n$ is assumed to be known. The regret bound of \ref{DLC} improves the bound of MC by a factor $N$ and it is agnostic to problem-specific information like lower bound $(\epsilon_n)$. Its regret bound \ref{DLC} depends on the mean reward of the arms of the given problem instance. Thus performance gain of the DLC is compared to MC is significant.  As we will see later in the experimental section, the performance of DLC is an order of magnitude better than the MC algorithm. This improved performance gain is mainly due to the long random hopping phase of MC where users are made to collide for the large number of rounds which is used to estimate the number of users in the network.

\section{Algorithm with Limited Information Exchange}
\label{sec:mpb_dlc_lix}

In \ref{DLC} algorithm, when the Leader explores the arms, all other players transmit only periodically and hence do not get any reward for other rounds until they are assigned an arm by the Leader. However, it was necessary for players other than the Leader to be idle so that the Leader can explore the arms freely in the Adaptive Learning phase of \ref{DLC}, leading to under-utilization of arms and poor performance in term of regret. We next discuss a modification of \ref{DLC} where the non-Leaders do not need to be idle while the Leader explores top $N$ arms, which allows efficient utilization of arms resulting in the improved regret performance.

\subsection{{\normalfont{DLC}} with Limited Information Exchange {\normalfont{(DLC-LIX)}}}
In this section, we give a variant of \ref{DLC} called  DLC with Limited Information Exchange (\ref{DLC-LIX}) which allows the non-Leader to play their reserved arms while the Leader explores the unreserved arms and share with her the information they have gathered about their reserved arms, thus the Leader may not be needed to explore much the reserved arms. The \ref{DLC-LIX} consists of $3$ phases: 1) Orthogonalization 2) Player Indexing and 3) Adaptive Learning with Communication (ALC).

The first two phases of \ref{DLC-LIX} are the same as in \ref{DLC}.  Once Leader is selected, other players continue to play their reserved arms until the Leader tells her to change the arm by sending signals. The Leader partitions the arms into two sets before entering to Adaptive Learning with Communication phase, namely {\em Reserved} and {\em Unreserved} arms.  The set of the reserved arm of all the players except the Leader forms the Reserved set and the other set of unoccupied (empty) arms together with the reserved arm of the Leader forms the Unreserved set. 

Now the Leader explores only the Unreserved set to find the best arm using the AdaptiveExplore sub-routine with $K-N+1$ arms. Once AdaptiveExplore sub-routine terminates, Leader has the best arm among Unreserved set of arms. The Leader then collides on the arm that is the reserved arm for the player with rank $r$.  When the player observes a collision, it acts as a signal to transfer collected reward information of her reserved arm to Leader. The rewards are real-valued in $[0,1]$ and communicating them using with arbitrary precision requires binary codes of large lengths, we thus allow the players to exchange average reward within a known fixed error. Specifically, we set the error to be  $\epsilon_p$, i.e., $(R_{T_{c,r}} - \hat{R}_{T_{c,r}})/T_{c,r} \le \epsilon_p$
where $T_{c,r}$ is the time taken by a player with rank $r$ to observe a collision after the end of the Player Indexing phase, $R_{c}$ is the total reward collected by the player before observing collision and $\hat{R}_{c}$ is the total reward received by the Leader.  The received reward $\hat{R}_{c}$ is always less than (or equal to) the collected reward $R_{c}$ as some fractional part may get truncated due to the finite fixed precision. The number of rounds needed to transfer average reward information to Leader within $\epsilon_p$ error is $ \ceil{\log_2(1/\epsilon_p)}$\footnote{Note that for binary rewards $\{0,1\}$ case, any player can transfer exact collected reward information to the Leader in the $\ceil{\log_2T_{c,r}}$ rounds where the value of $\epsilon_p$ for player with rank $r$ is $\epsilon_p(r) := 1/T_{c,r}$.}. In our experiments, we have set $\epsilon_p = \epsilon/2$. 

\begin{algorithm}[!ht]
	\renewcommand{\thealgorithm}{DLC-LIX}
	\floatname{algorithm}{}    
	\caption{\textbf{ DLC} with \textbf{L}imited \textbf{I}nformation E\textbf{x}change}
	\label{DLC-LIX}
	\begin{algorithmic}[1]
		\State	Input: {$K,\; \epsilon\geq 0,\; \delta \in (0,1/2]$} 
		\State Select the arms uniformly at random for $T_{RP}$ rounds and settle on arm $i$
		\State Play arm $i$ for $2i$ rounds. After that sequentially hop for $K - i$ rounds and then play arm $i$ for  $K - i - 1$ rounds
		\State Find number of players ($N$) and their reserved arms. Designate as Leader if has smallest reserved arm's index
		\If {Player is Leader}
		\For{$r = 1,2, \ldots, N-1$}
		\State Find the best arm using {\em AdaptiveExplore} sub-routine in the Unreserved set of arms
		\State Signal player with rank $r$. Collect rewards information of her reserved arm and  assign the current best arm to her
		\State Updates the Reserved and Unreserved set of arms
		\EndFor
		\State Find the best arm using {\em AdaptiveExplore} sub-routine in the Unreserved set of arms 
		\State Play the best arm in subsequent rounds
		\Else
		\State Play arm $i$. When a collision is observed, send collected rewards information to Leader and after that received a arm to play in next $\ceil{\log_2(K) }$ rounds
		\State Play the received arm in subsequent rounds
		\EndIf
	\end{algorithmic}
\end{algorithm}

After receiving the mean reward from the player with rank $r$, Leader recomputed the best arm. If the best arm is same as the reserved arm of player, then Leader will not play it for next $\ceil{\log_2K}$ rounds. Otherwise, she collides with the player for next $\ceil{\log_2K}$ rounds as per the binary coding of arm index the player should take. In the former case, the player will continue to play her reserved arm in subsequent rounds. In the latter case, the player moves to the new arm assigned to her. The Leader then removes the just assigned arm from the Unreserved set and moves it to the Reserved set. The arm that is left unoccupied by this shifting is moved to the Unreserved set. 
The Leader then recomputes the best arm using AdaptiveExplore sub-routine and repeats the process with the next ranked player. This process stops when Leader has allocated arms to all non-Leaders from top $N$ arms. Finally, the Leader finds the best arm among Unreserved set for herself by using AdaptiveExplore sub-routine. At the end of this process, each player has been assigned an arm from top $N$ arms and remaining $K - N$ arms are left in the Unreserved set.

%

\subsection{Analysis of \normalfont{DLC-LIX}}
We need a variant of Lemma \ref{lma:samples_AE} to prove sample complexity, regret and collision bounds of the \ref{DLC-LIX}. 

\begin{lem}
	\label{lem:dlcIXLearning}
	Assume technical conditions stated in Theorem \ref{thm:dlc_results} hold. Let $\epsilon_p = \epsilon/2$, $c_a:=(\mu_{a} + \mu_{a+1})/2$, and $m = \arg\min\limits_{a\in [K-1]}\big[\min\{d(\mu_a,c_a), $ $d(\mu_{a+1},c_a)\}\big]$.
	For any $\epsilon \ge 0$ and $\delta \in (0,1/2]$, the ALC phase of \ref{DLC-LIX} terminates with $(\epsilon,\delta)$-optimal allocation after $T_{ALC}$ rounds with probability at least $1-\delta$ where 
	\begin{align*}
	&T_{ALC} \le C_0(\alpha)H_{\epsilon,c_m}\log\left(\frac{k_1K(H_{\epsilon,c_m})^\alpha}{\delta}\right)\\
	&\quad +  (N-1)\left(\ceil{\log_2\left(1/\epsilon\right)} +\ceil{\log_2(K)}\right)
	\end{align*}
	where $\alpha>1$$,  C_0(\alpha)$ is a problem independent constant satisfying $C_0(\alpha) \ge \alpha\log(C_0(\alpha)) + 1 + \alpha/e,k_1 > 2e + 1+ \frac{e}{\alpha-1} +  \frac{e+1}{(\alpha-1)^2}$, and
	$H_{\epsilon,c_m} := \frac{K}{\max\{d(\mu_m,c_m), \epsilon^2/2\}}$. 
\end{lem}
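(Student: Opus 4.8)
\textbf{Proof proposal for Lemma~\ref{lem:dlcIXLearning}.}

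The plan is to decompose $T_{ALC}$ into the pure-exploration cost incurred by the Leader's repeated calls to \emph{AdaptiveExplore} and the signaling overhead, and then to bound each piece uniformly over all $N$ rounds of the ALC loop. First I would observe that the ALC phase consists of $N-1$ iterations (one per non-Leader rank $r$), each of which runs \emph{AdaptiveExplore} on the current Unreserved set followed by a signaling exchange, plus a final call for the Leader herself. In iteration $r$ the Leader must distinguish the best Unreserved arm from the rest; but because the non-Leaders are simultaneously transmitting on their reserved arms and feeding back their mean-reward estimates (within $\epsilon_p = \epsilon/2$), the effective separation that governs the stopping rule \eqref{equ:stoppingCriteria} is no longer $c=(\mu_N+\mu_{N+1})/2$ but the \emph{worst-case} adjacent gap across the whole ordering, encoded by $c_m$ with $m = \arg\min_{a}[\min\{d(\mu_a,c_a),d(\mu_{a+1},c_a)\}]$. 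This is why the hardness constant $H_{\epsilon,c_m} = K/\max\{d(\mu_m,c_m),\epsilon^2/2\}$ replaces $H_{\epsilon,c}$ from Lemma~\ref{lma:samples_AE}: over the sequence of subproblems the Leader may at some point have to resolve precisely the pair realizing $m$.

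Next I would invoke Lemma~\ref{lma:samples_AE} (i.e.\ Theorem~3 of \cite{COLT13_Bandit_Subset_Selection} with the factor-of-two sampling correction) as a black box for a single \emph{AdaptiveExplore} run, giving a bound of the form $C_0(\alpha)H\log(k_1 K H^\alpha/\delta)$ with the relevant $H$. The key structural claim to establish is that the \emph{total} sampling cost of all the \emph{AdaptiveExplore} invocations in ALC is dominated by a single worst-case run with $H = H_{\epsilon,c_m}$, rather than by the sum of the $N$ individual runs. The argument here: once a non-Leader's reserved arm has been confirmed (via its feedback) to lie above the running best Unreserved candidate, it is never re-explored, so across the whole phase each arm is sampled essentially as in one monolithic best-arm-identification instance over $K$ arms; the confidence radii $\beta(t,\delta)$ are monotone in $t$, so restarting the counter at each iteration only helps. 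Folding this together with a union bound over the $N$ confidence events (absorbed into the $\delta$ by the usual reparametrization, as in the proof of Theorem~\ref{thm:dlc_results}) yields the first term $C_0(\alpha)H_{\epsilon,c_m}\log(k_1 K (H_{\epsilon,c_m})^\alpha/\delta)$.

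For the signaling overhead I would count, per non-Leader of rank $r$: $\ceil{\log_2(1/\epsilon_p)} = \ceil{\log_2(2/\epsilon)}$ rounds to receive that player's mean-reward estimate within $\epsilon/2$, and $\ceil{\log_2(K)}$ rounds to transmit back the index of the arm she should occupy (binary code of length $\ceil{\log_2 K}$, conveyed by the collision pattern exactly as in the Communication phase of \ref{DLC}). Summing over the $N-1$ non-Leaders gives $(N-1)(\ceil{\log_2(1/\epsilon)}+\ceil{\log_2(K)})$, using $\epsilon_p=\epsilon/2$ and absorbing the constant shift; adding this to the exploration term gives the stated $T_{ALC}$. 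Finally I would note that the $(\epsilon,\delta)$-optimality of the resulting allocation follows because each \emph{AdaptiveExplore} call returns its target arm with per-call tolerance $\epsilon/N$ and failure probability controlled by $\delta$, the $\epsilon_p=\epsilon/2$ quantization error on the reported means shifts each comparison by at most $\epsilon/2$, and summing the $N$ per-arm errors keeps the total suboptimality below $\epsilon$ with probability at least $1-\delta$.

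\textbf{Main obstacle.} The delicate step is the structural claim that the sequence of shrinking best-arm-identification subproblems has \emph{aggregate} sample complexity governed by the single constant $H_{\epsilon,c_m}$ rather than by $\sum_r H$ over the $N-1$ iterations — in particular, arguing carefully that re-initializing \emph{AdaptiveExplore}'s round counter at each iteration does not re-incur $\Theta(H)$ samples each time, and that the quantized feedback from the non-Leaders (not exact means) still suffices to permanently retire their reserved arms from exploration. Handling the interaction between the $\epsilon_p$-error in the feedback and the $\epsilon/N$ stopping tolerance, so that the errors do not compound across iterations, is where the real work lies.
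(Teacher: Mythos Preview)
Your proposal follows essentially the same route as the paper: decompose $T_{ALC}$ into an exploration term controlled by the worst-case hardness constant $H_{\epsilon,c_m}$ plus the per-rank signaling overhead, then invoke the KL-LUCB bound (Lemma~\ref{lma:samples_AE}) for the former and count bits for the latter. The paper's own proof is considerably terser than yours: it first bounds $H_{\epsilon,c}\le H_{\epsilon,c_m}$ via $d(\mu_m,c_m)\le d(\mu_a,c)$ for all $a$, then simply \emph{asserts} that, thanks to the reward-information exchange between Leader and non-Leaders, the sequence of $N$ best-arm-identification calls ``becomes equivalent to solve top $N$ arms identification problem'' with the worst-case $H_{\epsilon,c_m}$, and adds $(N-1)\bigl(\lceil\log_2(2/\epsilon)\rceil+\lceil\log_2 K\rceil\bigr)$ for signaling.

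In particular, what you flag as the \textbf{main obstacle}---that the aggregate sampling cost over the $N$ restarted subproblems is governed by a single $H_{\epsilon,c_m}$ rather than a sum, and that the $\epsilon_p$-quantized feedback does not spoil this---is precisely the step the paper dispatches in one sentence by appealing to the information-exchange equivalence. So your plan is not wrong, and your instinct about where the difficulty sits is accurate; you are simply proposing to do more bookkeeping than the paper actually carries out. If you want to match the paper's level of rigor, the one-line equivalence claim plus the $H_{\epsilon,c}\le H_{\epsilon,c_m}$ inequality is all that is used.
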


\begin{proof}
	In case of identifying top $N$ arms by AdaptiveExplore sub-routine, the problem dependent variable $H_{\epsilon,c}$ is defined as
	\begin{align}
	H_{\epsilon,c} &= \sum\limits_{a\in [K]} \frac{1}{\max\{d(\mu_a,c), \epsilon^2/2\}}. \nonumber
	\intertext{Since $\forall a \in [K]$, $d(\mu_{m},c_m) \le d(\mu_a,c)$,} 
	H_{\epsilon,c}&\le \frac{K}{\max\{d(\mu_m,c_m), \epsilon^2/2\}} = H_{\epsilon,c_m}. \label{equ:UBsamples}
	\end{align}
	Adaptive Learning with Communication phase of {\normalfont{\ref{DLC-LIX}}} identifies top $N$ arms one after other from a subset of arms (Unreserved set). It is similar to solving best arm identification problem by $N$ number of times. 	
	By allowing information exchange between Leader and non-Leaders, this problem becomes equivalent to solve top $N$ arms identification problem except the problem dependent variable $H_{\epsilon,c}$ can change for the given subset of arms. Therefore, we consider worst problem dependent variable $H_{\epsilon,c_m}$ that depends on the smallest gap between the mean reward of any two arms instead of the $N^{th}$ best arm and $N+1$ best arm.
	
	The maximum rounds needed for transferring collected mean reward of arms by non-Leaders to Leader are $\sum_{r=1}^{N-1}\ceil{\log_2\left({2}/{\epsilon}\right)}$.
	Further, Leader also needs $\ceil{\log_2(K)}$ rounds to assign a arm to a non-Leader. Combine these facts with  \eqref{equ:UBsamples}, we get above stated bound.
\end{proof}

\begin{thm}
	\label{thm:dlcIX_results} 
	Assume technical conditions stated in Lemma \ref{lem:dlcIXLearning} hold. For any $\epsilon \ge 0$ and $\delta \in (0,1/2]$, the sample complexity of {\normalfont{\ref{DLC-LIX}}}  with probability at least $1-2\delta$ is bounded as
	\begin{align*}
	&T(\mbox{DLC-LIX}) \le    \ceil{\frac{\log(\delta/K)}{\log\left(1-\frac{1}{4K}\right)}} + 2K-1  \\ 
	&\qquad + C_0(\alpha)H_{\epsilon,c_m}\log\left(\frac{k_1K(H_{\epsilon,c_m})^\alpha}{\delta}\right) \\
	&\qquad +  (N-1)\left(\ceil{\log_2\left(2/\epsilon\right)} +\ceil{\log_2(K)}\right).
	\end{align*}
	Further with probability at least $1-2\delta$, the regret of {\normalfont{\ref{DLC-LIX}}} is bounded as 
	\begin{align*}
	\Regret(\mbox{DLC-LIX}) \le &NT(\mbox{DLC-LIX}).
	\end{align*}
\end{thm}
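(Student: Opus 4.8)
The plan is to bound the sample complexity of \ref{DLC-LIX} by decomposing it phase-by-phase exactly as was done for \ref{DLC} in Theorem \ref{thm:dlc_results}, and then to convert this sample complexity bound into a regret bound by the same trivial argument used in Theorem \ref{cor:regretDLC}. Concretely, \ref{DLC-LIX} has only three phases: Orthogonalization, Player Indexing, and Adaptive Learning with Communication (ALC). I would write
\begin{equation*}
T(\mbox{DLC-LIX}) = T_{RP} + T_{IP} + T_{ALC},
\end{equation*}
and then substitute the known bounds. The first two phases are literally identical to the corresponding phases of \ref{DLC}, so Lemma \ref{lma:OrthoArms} gives $T_{RP} \le \ceil{\log(\delta/K)/\log(1-1/(4K))}$ (holding with probability at least $1-\delta$), and Lemma \ref{lma:IndexPlayer} gives $T_{IP} = 2K-1$ deterministically. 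For the third phase I would invoke Lemma \ref{lem:dlcIXLearning}, which (with probability at least $1-\delta$) bounds $T_{ALC}$ by $C_0(\alpha)H_{\epsilon,c_m}\log(k_1 K (H_{\epsilon,c_m})^\alpha/\delta) + (N-1)(\ceil{\log_2(2/\epsilon)} + \ceil{\log_2(K)})$. Note the minor bookkeeping point that the statement in the lemma is written with $\log_2(1/\epsilon)$ whereas the theorem states $\log_2(2/\epsilon)$; I would keep the $2/\epsilon$ form consistent with $\epsilon_p = \epsilon/2$, since each player communicates its reward within error $\epsilon_p$ requiring $\ceil{\log_2(1/\epsilon_p)} = \ceil{\log_2(2/\epsilon)}$ rounds. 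Summing the three phase bounds yields the stated sample-complexity expression.

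The only subtlety in the sample-complexity part is the probability accounting: the Orthogonalization guarantee fails with probability at most $\delta$, and the AdaptiveExplore correctness/termination guarantee inside ALC fails with probability at most $\delta$ (aggregated over the $N$ successive calls by choosing the confidence parameter appropriately, which is already folded into Lemma \ref{lem:dlcIXLearning}). A union bound over these two bad events gives that the claimed $T(\mbox{DLC-LIX})$ bound — and in particular that the algorithm terminates with an $(\epsilon,\delta)$-optimal allocation — holds with probability at least $1-2\delta$. I would state this union bound explicitly.

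For the regret bound, the argument mirrors Theorem \ref{cor:regretDLC} verbatim: on the good event (probability at least $1-2\delta$), after round $T(\mbox{DLC-LIX})$ every player occupies a distinct arm from the top $N$ and never collides, so no further regret is incurred; during the first $T(\mbox{DLC-LIX})$ rounds the per-round regret is at most $\sum_{n\in[N^\star]}\mu_n \le N$ since each $\mu_i \in [0,1]$. Hence $\Regret(\mbox{DLC-LIX}) \le N\, T(\mbox{DLC-LIX})$ with probability at least $1-2\delta$. One should also observe that the "settling" in line 15 of the algorithm — where a non-Leader whose recomputed best arm equals its own reserved arm simply keeps playing it — does not create any hidden collisions, so the no-regret-after-termination claim is sound; I would remark on this briefly rather than prove it in detail.

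The main obstacle, if any, is not in the present theorem but in correctly justifying the ingredients it leans on — chiefly Lemma \ref{lem:dlcIXLearning}, where the worst-case problem-dependent constant $H_{\epsilon,c_m}$ replaces $H_{\epsilon,c}$ because ALC repeatedly solves a best-arm-identification subproblem on a shrinking Unreserved set whose relevant gap can be as small as the smallest gap $d(\mu_m,c_m)$ between any two consecutive arms. Given that lemma (and Lemmas \ref{lma:OrthoArms}, \ref{lma:IndexPlayer}, which are inherited unchanged from \ref{DLC}), the proof of this theorem is essentially a two-line summation plus a union bound plus the $\mu_i\le 1$ regret conversion, so I expect it to be routine.
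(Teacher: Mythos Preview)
Your proposal is correct and matches the paper's proof essentially line for line: the paper writes $T(\mbox{DLC-LIX}) = T_{RP} + T_{IP} + T_{ALC}$, substitutes the bounds from Lemmas \ref{lma:OrthoArms}, \ref{lma:IndexPlayer}, and \ref{lem:dlcIXLearning}, and then notes that the per-round regret is at most $N$. Your additional remarks on the union bound for the $1-2\delta$ probability and the $\log_2(2/\epsilon)$ bookkeeping are helpful clarifications that the paper leaves implicit.
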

\begin{proof}
	The total number of rounds Algorithm {\normalfont{\ref{DLC-LIX}}} takes for assigning $(\epsilon,\delta)$-optimal allocation to the $N$ players is 
	\begin{align}
	\label{equ:sampleCompIX}
	T(\mbox{\it DLC-LIX}) = T_{RP} + T_{IP} + T_{ALC}.
	\end{align}
	Substitute the number of rounds needed for different phases from Lemmas \ref{lma:OrthoArms}, \ref{lma:IndexPlayer} and \ref{lem:dlcIXLearning} in \eqref{equ:sampleCompIX} to get above stated bound of $T(\mbox{\it DLC-LIX})$. As the maximum regret for any round is $N$, the regret of the {\normalfont{\ref{DLC-LIX}}} is upper bounded by $NT(\mbox{\it DLC-LIX})$.
\end{proof}

\begin{thm}
	Let conditions of Lemma \ref{lem:dlcIXLearning} hold.
	Then with probability at least $1-2\delta$, the maximum number of collisions that can occur in {\normalfont{\ref{DLC-LIX}}} is $C(\mbox{DLC-LIX})$ where
	\begin{align*}
	& C(\mbox{DLC-LIX}) \le N\ceil{\frac{\log(\delta/K)}{\log\left(1-\frac{1}{4K}\right)}} + \frac{N(N-1)}{2} \\
	&\quad +  (N-1)\left( \ceil{\log_2\left(2/\epsilon\right)} +\ceil{\log_2(K)}\right).
	\end{align*}
\end{thm}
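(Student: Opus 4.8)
The statement to be proved is a high-probability bound on the total number of collisions incurred by \ref{DLC-LIX}. The plan is to decompose the run of the algorithm into its three phases---Orthogonalization, Player Indexing, and Adaptive Learning with Communication (ALC)---and to bound the number of collisions in each phase separately, then sum the three contributions. The event under which the bound holds with probability at least $1-2\delta$ is the intersection of two events: (i) all players orthogonalize within $T_{RP}$ rounds (probability at least $1-\delta$ by \cref{lma:OrthoArms}), and (ii) every invocation of the \emph{AdaptiveExplore} sub-routine in the ALC phase returns an $(\epsilon,\delta)$-correct answer (probability at least $1-\delta$, using the argument in \cref{lem:dlcIXLearning}); a union bound then gives $1-2\delta$. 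Since the collision count does not depend on \emph{which} arms AdaptiveExplore ultimately identifies but only on the signaling structure, the main role of event (ii) is to guarantee that the ALC phase terminates in finitely many rounds so that the collision count is well-defined and finite.

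\textbf{Phase-by-phase bounds.} First I would invoke \cref{lma:OrthoArms} to bound the Orthogonalization-phase collisions by $C_{RP} \le N T_{RP} = N\ceil{\log(\delta/K)/\log(1-1/(4K))}$; this is the first term. Next, the Player Indexing phase of \ref{DLC-LIX} is identical to that of \ref{DLC}, so \cref{lma:IndexPlayer} applies verbatim and gives $C_{IP} = N(N-1)/2$, the second term. The third term comes from the ALC phase. Here the key observation is that, unlike in \ref{DLC}, collisions in the ALC phase arise \emph{only} from deliberate signaling: the Leader explores the Unreserved set collision-free (no non-Leader plays those arms), and each of the $N-1$ non-Leaders is signalled exactly once. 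Each such signalling interaction with a rank-$r$ player consists of (a) one collision to initiate the exchange, followed by (b) at most $\ceil{\log_2(1/\epsilon_p)} = \ceil{\log_2(2/\epsilon)}$ rounds in which the non-Leader transmits its truncated average reward via a collision pattern, followed by (c) at most $\ceil{\log_2(K)}$ rounds in which the Leader conveys the arm index back via a collision pattern. Charging each of these rounds at most one collision (two players involved, but counted as in the collision definition $C(\pi)=\sum_t\sum_n \eta_{n,t}$---I would double-check whether the intended accounting charges $1$ or $2$ per signalling round, since $\eta$ is an indicator per player and two players collide), summing over the $N-1$ non-Leaders yields $C_{ALC} \le (N-1)\big(\ceil{\log_2(2/\epsilon)} + \ceil{\log_2(K)}\big)$ after absorbing the single initiation collision into the logarithmic terms (or, more carefully, $(N-1)(1 + \ceil{\log_2(2/\epsilon)} + \ceil{\log_2(K)})$, which is then rounded up into the stated form). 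Adding the three contributions gives the claimed bound.

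\textbf{Main obstacle.} The routine part is the arithmetic of summing the three phase bounds; the delicate part is making the ALC-phase argument airtight. In particular one must verify that \emph{no unintended collisions} occur during the ALC phase: while the Leader runs AdaptiveExplore on the Unreserved set, each non-Leader sits on its reserved arm, which by construction is disjoint from the Unreserved set, so no collision occurs there; and the Leader, before playing any arm in a given round, checks that no non-Leader is scheduled to play it---but in \ref{DLC-LIX} the non-Leaders are \emph{not} transmitting periodically (they hold their reserved arms continuously), so this check is trivially satisfied. The only subtlety is the handoff moment: when the Leader decides to signal rank-$r$ player, it collides on that player's reserved arm, and one must confirm this does not spuriously disrupt the Leader's own exploration schedule or another non-Leader---it does not, since the reserved arms are distinct and the Leader pauses exploration during the $O(\log(1/\epsilon)+\log K)$ signalling window. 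Once these structural facts are nailed down, the bound follows immediately; I expect essentially all the work to be in writing out this structural verification cleanly rather than in any estimate. The proof would close by combining $C(\mbox{DLC-LIX}) = C_{RP} + C_{IP} + C_{ALC}$ on the good event and applying the union bound for the $1-2\delta$ guarantee.
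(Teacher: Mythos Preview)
Your proposal is correct and follows essentially the same approach as the paper: decompose $C(\mbox{DLC-LIX}) = C_{RP} + C_{PI} + C_{ALC}$, invoke \cref{lma:OrthoArms} and \cref{lma:IndexPlayer} for the first two terms, bound $C_{ALC} \le (N-1)\big(\ceil{\log_2(2/\epsilon)} + \ceil{\log_2(K)}\big)$ directly from the signalling structure, and sum. Your write-up is in fact more detailed than the paper's proof, which simply asserts the $C_{ALC}$ bound without spelling out the ``no unintended collisions'' verification you outline.
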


\begin{proof}
	The total collisions of \ref{DLC-LIX} is 
	is given by
	\begin{align}
	\label{equ:totalCollisionsIx}
	C(\mbox{\it DLC-LIX}) = C_{RP} + C_{PI} + C_{ALC} 
	\end{align}
	The maximum number of collisions observed in Adaptive Learning with Communication phase of \ref{DLC-LIX} is upper bounded by $C_{ALC}$ where
	\begin{align*}
	C_{ALC} \le (N-1)\left(\ceil{\log_2\left(2/\epsilon\right)} +\ceil{\log_2(K)}\right)
	\end{align*}
	Add the number of collisions occurs in other phases from Lemmas \ref{lma:OrthoArms} and \ref{lma:IndexPlayer} with $C_{ALC}$ in \eqref{equ:totalCollisionsIx} to get stated bound.
\end{proof}

\begin{thm}
	Let $T_P$ be the rounds for which non-Leader waits to play her reserved arm in Adaptive Learning phase of \ref{DLC} and $R_L$ is the set of reserved arms of non-Leaders after the end of the Player Indexing phase. If $T_{L}$ is the rounds spent for exploration of top $N$ arms by both \ref{DLC} and \ref{DLC-LIX} then
	\begin{align*}
	\Regret(DLC)_{T_L} - \Regret(\mbox{DLC-LIX})_{T_L} \ge \left(T_{L} - \ceil{\frac{T_{L}}{T_P}}\right)\sum_{n \in R_L} \mu_n. 
	\end{align*}
\end{thm}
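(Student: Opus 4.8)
The plan is to compare the two algorithms round-by-round during the exploration window of length $T_L$, which is the common number of rounds both algorithms spend running \emph{AdaptiveExplore} to identify the top $N$ arms. The key observation is that in \ref{DLC}, during the Adaptive Learning phase, every non-Leader is \emph{idle} except for the periodic signaling rounds $P(r,m)$; it therefore collects zero reward in all other rounds and contributes $\mu_n$ to the instantaneous regret in each such round. In \ref{DLC-LIX}, by contrast, during the Adaptive Learning with Communication phase every non-Leader \emph{keeps playing its reserved arm} $n \in R_L$, earning $\mu_n$ per round (except in the few communication rounds where it is signaled). So the two algorithms see the same behavior from the Leader and from arm assignments, but differ precisely in what the non-Leaders do over these $T_L$ rounds.

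First I would fix attention on a single non-Leader with reserved arm $n \in R_L$. Over the $T_L$ exploration rounds, this player is forced to be silent in \ref{DLC} on all but at most $\lceil T_L/T_P\rceil$ rounds — that bound follows because the player plays its reserved arm once every $T_P$ rounds in the periodic signaling schedule $P(r,m) = mT_P + (r-1)B_K$, so within any window of $T_L$ rounds it plays at most $\lceil T_L/T_P \rceil$ times. Hence in \ref{DLC} it accrues at least $\left(T_L - \lceil T_L/T_P\rceil\right)\mu_n$ in lost reward (i.e. regret) attributable to non-transmission over this window. In \ref{DLC-LIX}, the same player is transmitting on its reserved arm in \emph{every} one of those rounds (we may conservatively ignore, i.e. not subtract, the handful of communication rounds, since we only need a lower bound on the regret \emph{difference}, and dropping favorable rounds from \ref{DLC-LIX} only weakens the bound in the safe direction — actually care is needed here; see below). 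Thus the per-player regret saving of \ref{DLC-LIX} over \ref{DLC} on this window is at least $\left(T_L - \lceil T_L/T_P\rceil\right)\mu_n$.

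Next I would sum over all non-Leaders, i.e. over all $n \in R_L$. Since the Leader's exploration trajectory is identical in both algorithms over these $T_L$ rounds (both run \emph{AdaptiveExplore}, and the claim is conditioned on both spending exactly $T_L$ rounds exploring), the Leader's contribution to the regret cancels in the difference, as do the contributions from any already-assigned players. Adding up the per-player lower bounds yields
\[
\Regret(DLC)_{T_L} - \Regret(\mbox{DLC-LIX})_{T_L} \ge \left(T_{L} - \ceil{\frac{T_{L}}{T_P}}\right)\sum_{n \in R_L} \mu_n,
\]
as claimed.

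The main obstacle — and the place where the argument needs to be written carefully rather than waved through — is bookkeeping the \emph{communication} rounds. In \ref{DLC-LIX} a non-Leader is interrupted when the Leader signals it (one collision round, then $\lceil \log_2(1/\epsilon_p)\rceil$ rounds to transfer its reward estimate, then $\lceil\log_2 K\rceil$ rounds to receive its new arm index); in \ref{DLC} the periodic signaling rounds are likewise non-productive. To get a clean inequality I would argue that (a) the productive rounds for non-Leaders in \ref{DLC-LIX} are a \emph{superset} of those in \ref{DLC} up to lower-order terms, or more simply (b) observe that the right-hand side only counts the ``silent'' rounds common to the comparison and that \ref{DLC-LIX}'s communication overhead is already absent from the RHS, so no double-counting occurs — the RHS is deliberately a lower bound that charges \ref{DLC} for its idle rounds while not crediting \ref{DLC-LIX} for its own (fewer) overhead rounds. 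One must also confirm the claim's hypothesis is self-consistent: ``$T_L$ is the rounds spent for exploration of top $N$ arms by both'' should be read as: restrict both regret expressions to the first $T_L$ rounds \emph{following the Player Indexing phase}, during which \ref{DLC}'s Leader is in Adaptive Learning and \ref{DLC-LIX}'s Leader is in Adaptive Learning with Communication. With that reading the cancellation of the Leader's and assigned players' terms is immediate and only the non-Leader idle-versus-active accounting remains, which is the computation sketched above.
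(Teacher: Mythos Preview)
Your proposal is correct and follows exactly the same approach as the paper: the key observation is that in \ref{DLC} the non-Leaders sit idle during the Adaptive Learning phase except for their periodic signaling plays, whereas in \ref{DLC-LIX} they continue to play their reserved arms throughout, and the regret gap comes entirely from this difference in non-Leader activity. The paper's own proof is a single sentence stating precisely this fact; your write-up supplies the explicit per-player counting (at most $\lceil T_L/T_P\rceil$ active rounds in \ref{DLC} versus all $T_L$ rounds in \ref{DLC-LIX}) and flags the bookkeeping subtleties the paper leaves implicit.
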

\begin{proof}
	It follows from the fact that \ref{DLC} needs non-Leaders play periodically in the learning phase so that the Leader can explore the arms freely whereas \ref{DLC-LIX} allows non-Leaders continue to play their reserved arms in the learning phase.
\end{proof}


\section{Experiment}
\label{sec:mpb_experiment}

We implement \ref{DLC} and \ref{DLC-LIX} and compare their empirical performances with state-of-the-art MC algorithm \citep{ICML16_MultiplayerBandits_RosenkiShamir} and SIC-MMAB algorithm \citep{NeurIPS19_boursier2019sic}. MC algorithm runs for a fixed number of rounds$(T_0)$ and uses collisions information to find top $N$ arms whereas SIC-MMAB has problem dependent termination rule and allows communication among players as in \ref{DLC} and \ref{DLC-LIX}.  We repeated the experiment $20$ times and plotted regret with $95\%$ confidence interval (the vertical line on each curve shows the confidence interval).

We compare regret for MC, \ref{DLC} and \ref{DLC-LIX} using the same set of parameters used to evaluate performance of MC in \cite{ICML16_MultiplayerBandits_RosenkiShamir} -- $K = 10, N = 6, T = 10000,$ with mean rewards of the arms varying between $0.95$ and $0.5$ with separation of $0.05$ between two consecutive arms. For this problem instance, MC uses $T_0=3000$ rounds for the learning phase, which translates the lower bound on the gap between the means as $\epsilon^\prime = \sqrt{{16K\ln\left(4K^2/\delta\right)}/{3000}}$. We set $\epsilon (\approx \epsilon^\prime/2) = 0.3$ and $\delta=0.05$ in \ref{DLC} and \ref{DLC-LIX} for fair comparison. As expected, our algorithms have lower regret than MC, as shown In \cref{fig:mc}.

We compares regret of SIC-MMAB with \ref{DLC} and \ref{DLC-LIX} using the parameters -- $K = 9, N = 6, T = 300000, \delta=0.05$ as in \cite{NeurIPS19_boursier2019sic} but we used a larger gap between the mean reward of arms for faster convergence. The arms' mean reward varies between $0.95$ and $0.55$, with the same gap of $0.05$ between two consecutive arms.  We set $\epsilon = 0.05$ for \ref{DLC} and \ref{DLC-LIX}. After termination, the regret of \ref{DLC} and \ref{DLC-LIX} perform significantly better than SIC-MMAB, as shown in \cref{fig:sic-mmab}. \ref{DLC} does not allow non-Leader to play arm in every round during the learning phase, leading to more regret in initial rounds. In contrast, \ref{DLC-LIX} allows non-Leader to play an arm during the learning phase. This approach leads to lower regret but more sample needed for learning, as shown in \cref{fig:colSam}.

\begin{figure}[!ht]
	\centering
	\begin{minipage}[b]{00.4\textwidth}
		\captionsetup{justification=centering}
		\includegraphics[width=\linewidth]{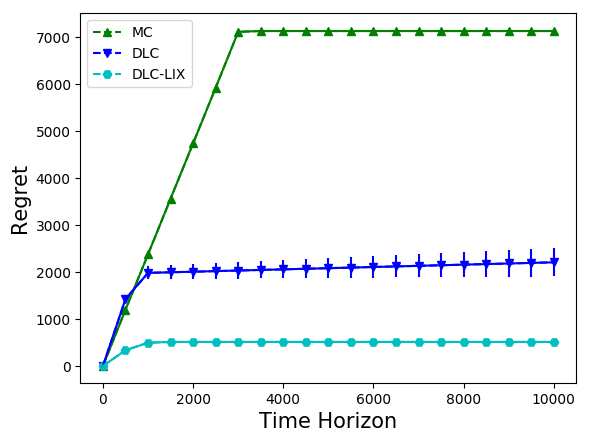}
		\caption{Regret comparison with MC Algorithm}
		\label{fig:mc}
	\end{minipage}\qquad
	\begin{minipage}[b]{00.4\textwidth}
		\captionsetup{justification=centering}
		\includegraphics[width=\linewidth]{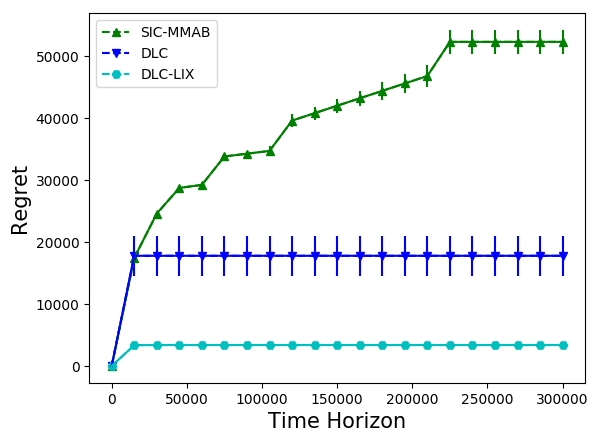}
		\caption{Regret comparison with SIC-MMAB}
		\label{fig:sic-mmab}
	\end{minipage}
\end{figure}

The \cref{fig:arms} and \cref{fig:players} depicts comparison of regret for different number of players and number of arms for a problem instance where $K = 10$ (when varying number of players), $N = 5$ (when varying number of arms), $T = 300000, \delta=0.05, \epsilon=0.05$ and with highest mean reward set to $0.95$ with others decreasing uniformly with gap of $0.05$.
\begin{figure}[!ht]
	\centering
	\begin{minipage}[b]{00.4\textwidth}
		\captionsetup{justification=centering}
		\includegraphics[width=\linewidth]{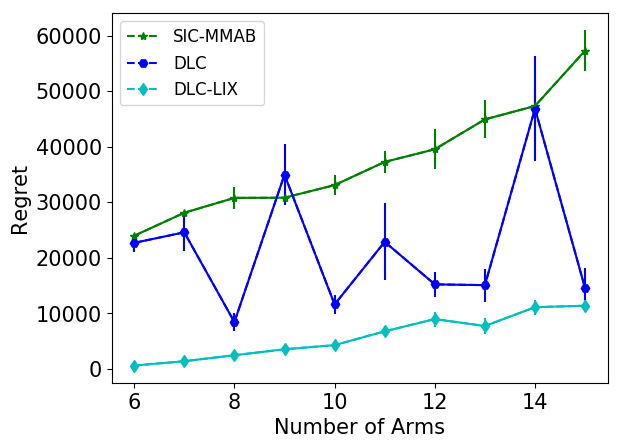}
		\caption{Regret versus different number of arms}
		\label{fig:arms}
	\end{minipage}\qquad
	\begin{minipage}[b]{00.4\textwidth}
		\captionsetup{justification=centering}
		\includegraphics[width=\linewidth]{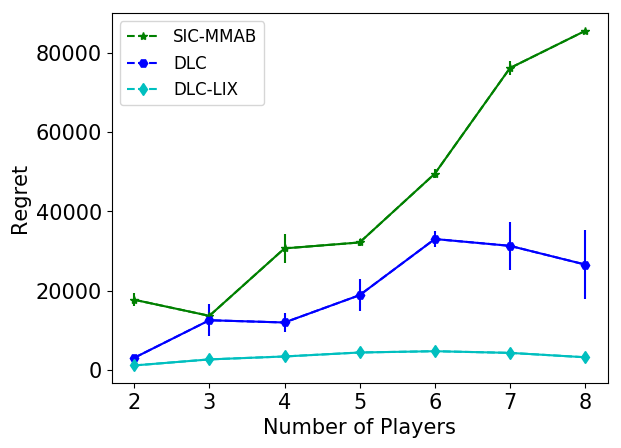}
		\caption{Regret versus different number of players}
		\label{fig:players}
	\end{minipage}
\end{figure}

The regret increases with an increase in the number of arms since each arm has to be sufficiently sampled to get better estimates of mean reward. As allocations are guaranteed to be only $(\epsilon,\delta)-$optimal, sometimes \ref{DLC} does not terminate with top $N$ arms due to the bad estimate of the mean reward in the initial rounds. However, it does not happen in \ref{DLC-LIX} as non-Leaders keep playing their reserved arms during the exploration phase and share reward information with the Leader, which helps the Leader to have good estimates for the arms she did not explore. When \ref{DLC} does not terminate with top $N$ arms, it incurs constant regret for subsequent rounds that is the reason for random behavior of  \ref{DLC}. \cref{fig:arms} shows that with increasing $K$, the improvement in the performance of \ref{DLC-LIX} over SIC-MMAB is significant.

As the number of players increases, our algorithms perform better than SIC-MMAB as information exchange between the Leader and non-Leader happens once. In contrast, it happens multiple times in SIC-MMAB. SIC-MMAB runs in multiple phases, where exploration and exploitation occur in each phase. At the end of exploration in each phase, players communicate the reward information of the sampled arms with other players adding a significant amount of communication overhead. 

This overhead increases with an increase in the number of players and arms, resulting in more regret, more rounds needed for termination, and more collisions in SIC-MMAB, as shown in \cref{fig:players} and \cref{fig:colSam}. The number of rounds needed for the termination of \ref{DLC} and \ref{DLC-LIX} depends upon the gap between the mean reward of the arm and the average of the mean reward of $N^{th}$ best arm and $N+1$ best arm (as shown in \cref{lma:samples_AE}). Therefore, initially, regret increases with an increase in the number of players, but decreases after the number of players become more than $K/2$, as shown in \cref{fig:players}. 

\begin{figure}[!ht]
	\centering
	\begin{minipage}[b]{0.4\textwidth}
		\captionsetup{justification=centering}
		\includegraphics[width=\linewidth]{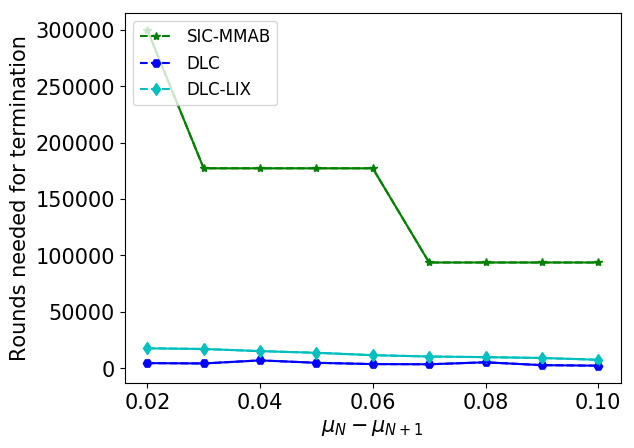}
	\end{minipage}\qquad
	\begin{minipage}[b]{0.4\textwidth}
		\captionsetup{justification=centering}
		\includegraphics[width=\linewidth]{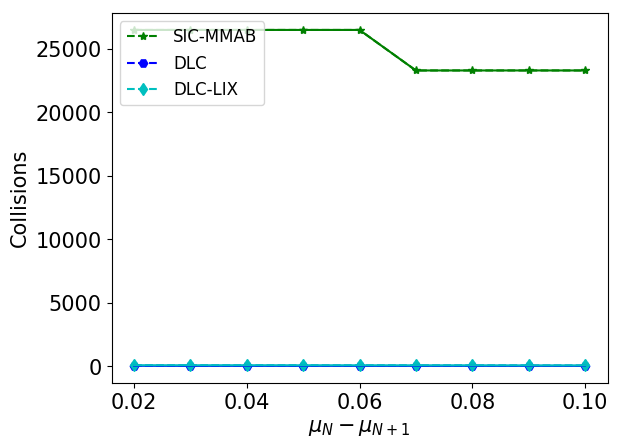}
	\end{minipage}
	\caption{Collisions and number of rounds needed for termination versus gap between consecutive arms.}
	\label{fig:colSam}
\end{figure}

\ref{DLC} and \ref{DLC-LIX} also have fewer collisions and need a lesser number of rounds for finding and allocating the top $N$ arms as compare to SIC-MMAB, which is shown in \cref{fig:colSam}. The collisions in MC are much higher than \ref{DLC} and \ref{DLC-LIX}.  Hence, the comparison of collisions in MC with \ref{DLC} and \ref{DLC-LIX} is not presented.

\section{Appendix}
\label{sec:mpb_appendix}

\noindent
{\bf Proof of Lemma \ref{lma:OrthoArms}:} The proof of the first part of this lemma is adapted from \cite[Lemma 1]{WIOPT2013_TrekkingBased_KumarYadavDarak}. Let $p_c$ denote the collision probability of a players when all the players are randomly selecting an arm to play from $[K]$ in each round $t$. Probability that each a player  will observe a collision-free play on an arm after  $T_{01}$ rounds is given by:

\[\sum_{t=1}^{T_{01}}p_c^{t-1}(1-p_c).\]
Setting this value to be at least larger than $1- \frac{\delta}{N}$ for each player we get
\begin{align}
	\sum_{t=1}^{T_{01}}& p_c^{t-1}(1-p_c) \geq 1- \frac{\delta}{N} \nonumber\\
	&\iff 1-{p_c}^{T_{01}} \geq 1- \frac{\delta}{N} \nonumber \\
	&\iff T_{01} \log{p_c} \leq \log\bigg({\frac{\delta}{N}}\bigg) \nonumber \\
	&\iff T_{01} \geq \frac{\log\big({\frac{\delta}{N}}\big)}{\log{p_c}} \label{eqn:TRHBound}.
\end{align}
We next give a uniform upper bound on $p_c$. Note that in any round some players may be selecting arms sequentially (call them SH players) while others uniformly at random (call them RH players). Fix a round $t$ and let $N_r \geq 1$ denote the number of players selecting arms uniformly at random. Let $p_{cr}$ denote the probability of observing collision by a RH player. Then
\begin{eqnarray*}
	\lefteqn{1-p_c= \Pr\{\mbox{no collision from RH players}\}} \\
	&+& \Pr\{\mbox{no collision from SH players}\} \\
	&\geq & \sum_{j=1}^{N_r}\frac{(1-p_{cr})}{K}\geq \frac{(1-p_{cr})}{K} \\
	&=& \frac{(1-1/K)^{N_r-1}}{K}\\
	&\geq& \frac{(1-1/K)^{N-1}}{K}\geq 1/4K (\mbox{ for all } K>1).
\end{eqnarray*}  

Substituting the bound on $p_c$ in (\ref{eqn:TRHBound}) and using union bound we see that within $T_{rh}(\delta)$ rounds all the players will orthogonalize with probability at least $1-\delta$. 

As at least two players need to play the same arm for any collision to happen, the maximum number of collisions are bounded by $N$ for any round. Therefore, total collisions in {\em Orthogonalization} phase are trivially bounded by $NT_{RH}$.

~\\
{\bf Proof of Lemma \ref{lma:IndexPlayer}:} As the player with channel index one starts indexing other players, it needs to check the $K-1$ channel for other players and $N-1$ collisions happen during this process.  The player with arm's index $j$ starts sequential hopping after $2j$ rounds and needs to check only $K-j$ arms. She observes $N-j$ collisions in the process. The player with arm index $K-1$ starts sequential hopping after $2(K-1)$ round and need to check only the arm with index $K$. Therefore, the total number of rounds $(T_{IP})$ required to complete the {\em Player Indexing} phase is $T_{IP} = 2K - 1$. The total number of collisions $(C_{IP})$ in the {\em Player Indexing} phase is bounded by $C_{IP} = \sum_{j=1}^{N} (N-j) = N(N-1)/2.$

~\\
{\bf Proof of Lemma \ref{lma:signaling}:}
	Let $T_P = (N-1)(\ceil{\log_2K} + 1)$ be the fixed number of rounds after which each player will play his reserved arm. In the worst case, if AdaptiveExplore sub-routine terminates just after any player played his reserved arm, then Leader has to play $T_P$ more rounds in which he transfer arm information to other players and then takes $\ceil{\log_2K} + 1$ rounds to transfer arm information to that player. Therefore, $T_{CP} \le N(\ceil{\log_2K} + 1)$.
	
	Since Leader has to play reserved arm in the specific round before transferring the arm information to any player, this will lead to $N-1$ collisions. The arm information is transferred in its binary form which requires only $\ceil{\log_2K}$ rounds. Therefore, the maximum number of collisions needed  for transferring the arm information to other players is $(N-1)\ceil{\log_2K}$. Therefore, the total number of collisions ($C_{CP}$) can occur in the {\em Communication} phase is bounded by $C_{CP} \le (N-1)(\ceil{\log_2K}+1).$

	\chapter{Summary and Future Directions}
	\label{cha:conclusion}

In this chapter, we summarize the various contributions of this thesis, followed by some interesting directions for future research.

\section{Summary}
In this thesis, we consider sequential decision problems with weak feedback, where the loss incurred by selecting an action may not be inferred from observed feedback. A major part of the thesis (\cref{cha:uss} and \cref{cha:contextual_uss}) focuses on the unsupervised sequential selection problem, where the loss incurred for selecting an action can not be inferred from observed feedback. In \cref{cha:csb}, we introduced the Censored Semi-Bandits problem, where the loss incurred for selecting an action can be observed under certain conditions. Finally, we study the channel selection in the communication networks via distributed learning in Multi-Player Multi-Armed Bandits in \cref{cha:mp-mab}, where the reward for an action is only observed in the round when no other player selects that action to play in that round. 
The major contributions of this thesis are summarized as follows:

\begin{enumerate}
	\item {\em Unsupervised Sequential Selection.~~} We study the unsupervised sequential selection (USS) problem in \cref{cha:uss}, where both mean loss and cost of using arms are important. It is a variant of the stochastic partial monitoring problem, where the losses are not observed. However, one can still compare the feedback of two arms to see if they agree or disagree. We estimate the disagreement probability between each pair of the arms and develop the upper confidence bound and Thompson Sampling based algorithms (\ref{alg:USS_WD} and \ref{alg:TS_USS}) that give sub-linear regret under the `Weak Dominance $(\WD)$ property.' We show that both algorithms enjoy regret of order $\tilde{O}(T^{2/3})$ (hiding logarithmic terms) for the USS problem that satisfies $(\WD)$ property.  We also show that when the problem instance satisfies the more stringent Strong Dominance property, the regret bound improves to $\tilde{O}(T^{1/2})$. We demonstrate our algorithms' performance on the problem instances derived from synthetic and two real datasets and empirically show that any USS problem instance satisfying $\WD$ property has sub-linear regret.

	\item {\em Contextual Unsupervised Sequential Selection.~~} We extend the study of the unsupervised sequential selection setup to the case in \cref{cha:contextual_uss}, where the contextual information about the task is available. The problem is a partial monitoring stochastic contextual bandit, where the loss of an arm can not be inferred from the observed feedback. Since the disagreement between arms can be observed, we model the disagreement probability between each pair of the arms as linearly parameterized and develop an algorithm named \ref{alg:CUSS_WD} that achieves $O(\log T)$ regret with high probability. The analysis of \ref{alg:CUSS_WD} is based on recent results on the sample normality confidence bound for the maximum likelihood estimator of generalized linear models.

	\item {\em Censored Semi-Bandits.~~} We introduce a novel framework for resource allocation problems using a variant of semi-bandits and name it censored semi-bandits (CSB) in \cref{cha:csb}. In the CSB setup, the loss observed from an arm depends on the amount of resource allocated, and hence, it can be censored. We propose a threshold-based model where loss from an arm is observed when the allocated resource is below a threshold. The goal is to assign a given resource to arms such that total expected loss is minimized. We consider two variants of the problem, depending on whether or not the thresholds are the same across the arms. For the variant where thresholds are the same across the arms, we establish that it is equivalent to the Multiple-Play Multi-Armed Bandit problem. For the second variant where the threshold can depend on the arm, we establish that it is equivalent to a more general Combinatorial Semi-Bandit problem. Exploiting these equivalences, we develop algorithms that enjoy optimal performance guarantees.  We also showcase the application of CSB setup to stochastic network utilization maximization by extending the CSB setup to reward maximization setting.

	\item {\em Distributed Learning in Multi-Player Bandits.~~} Motivated by channel selection problem in communication networks, we consider a multi-player bandits problem in \cref{cha:mp-mab}, where multiple players access the same set of arms in the absence of any central coordination. The players aim to maximize total reward in a distributed fashion. The mean rewards of arms and the number of players present in the network are unknown. We set up the problem as a stochastic multi-player multi-armed bandits and develop completely distributed algorithms that achieve constant regret with a high probability. We also empirically show that our algorithms have constant regret.
\end{enumerate}

\section{Future Research Directions}
There are various open questions arising out of this thesis. A few of them are as follows:
\begin{enumerate}
	\item {\em Unsupervised Sequential Selection.~~} We ignore the inherent side observations due to the arms' cascade structure. By using these side observations, we can have better regret bounds. Another interesting future direction is to develop algorithms that relax the cascade structure assumption and selects the best subset of arms. The problem-dependent lower bound of regret for the USS problem is still an open question.

	\item {\em Contextual Unsupervised Sequential Selection.~~} We exploit the contextual information for a better selection of arm but ignore the inherent side observations due to the arms' cascade structure. By using these side observations, we can tighten the regret bounds.  In the contextual USS setup, the optimal action for contexts that do not satisfy $\WD$ cannot be learned. However, these contexts are still useful to improve estimates of the parameters and it will be interesting to characterize how much they will affect the regret. It will also be interesting to develop algorithms that decide whether it needs to go further down in the cascade when more information about context is revealed along the cascade. Many open questions like regret lower bounds for the contextual USS problem, contexts with private information, and the problems with real-valued feedback can be explored in the future.

	\item {\em Censored Semi-Bandits.~~} The CSB setting considered so far does not use any similarity metric between the arms. For example, in the case of the police patrol allocation, the nearby nodes may have similar parameters, and we may be able to make use of the spatial coherence. It will be interesting to integrate existing work on contextual bandits with the graph structure and the CSB setup. As we only consider one type of resource, it will be interesting to consider multi-type resources into the CSB setup. Another extension of the CSB setup is to relax the assumption that the lower bound on leftover resources after the optimal allocation is known. One can also extend CSB setup from threshold type loss to continuous type loss functions and from the stochastic environment to the adversarial environment. Settings such as these are left for future work.

	\item {\em Distributed Learning in Multi-Player Bandits.~~} We consider that the quality of the arms is the same across all the players (symmetric), and the number of users was fixed throughout (static). It will be interesting to study the setting, where the quality of arms could potentially differ across players (asymmetric) and allow the number of users to vary with time (dynamic networks). There are many interesting extensions of the proposed setup, like anytime algorithm for regret minimization, problems without collision sensing, and learning with asynchronous players that can be explored in the future.
	
\end{enumerate}

	\bibliographystyle{plainnat} 			
	\bibliography{ref}

\end{document}